\pgfplotsset{compat=1.6}
\DeclareRobustCommand{\abbrevcrefs}{%
\crefname{lemma}{Lem.}{Lems.}%
\crefname{equation}{Eqn.}{Eqns.}%
\crefname{proposition}{Prop.}{Props.}%
}
\DeclareRobustCommand{\cshref}[1]{{\abbrevcrefs\cref{#1}}}
\newcommand{\ee}{\mathrm{e}}
\newcommand{\dd}{\mathrm{d}}
\newcommand{\NN}{\mathbb{N}}                                     %
\newcommand{\RR}{\mathbb{R}}                                     %
\newcommand{\ZZ}{\mathbb{Z}}                                     %
\newcommand{\PP}{\mathbb{P}}
\newcommand{\EE}{\mathbb{E}}
\newcommand{\VV}{\mathbb{V}}
\newcommand{\E}{\mathcal{E}}
\renewcommand{\SS}{\mathbb{S}}
\newcommand{\N}{\mathcal{N}}
\newcommand{\op}[1]{\left\Vert #1 \right\Vert_{2 \to 2}}                                    %
\newcommand{\abs}[1]{\left\lvert#1\right\rvert}                  %
\newcommand{\mnorm}[1]{\left\lVert#1\right\rVert}                %
\newcommand{\defeq}{\mathrel{\mathop:}=}                               %
\newcommand{\norel}{\mathrel{\phantom{=}}}                                           %
\newcommand{\iid}{\overset{\mathrm{i.i.d.}}{\sim}}
\newcommand{\B}{\overline{B}}
\renewcommand{\d}{\overset{d}{=}}
\newcommand{\well}{W^{(\ell)}}
\newcommand{\phell}{\Phi^{(\ell)}}
\newcommand{\phelll}{\Phi^{(\ell-1)}}
\newcommand{\neps}{\mathcal{N}_\eps}
\newcommand{\remove}[1]{{\color{black} #1}}
\renewcommand{\hat}{\widehat}
\renewcommand{\tilde}{\widetilde}
\newcommand*{\fres}[2]{ {\left.\kern-\nulldelimiterspace #1 \vphantom{\big|} \right|_{\kern-1pt #2} }}
\newcommand{\arrow}[1]{\overset{\rightarrow}{#1}}
\newcommand{\nablaa}{\overline{\nabla}}
\newcommand{\act}{\nabla}
\newcommand{\jac}{\overline{\mathrm{D}}}
\newcommand{\metalambda}{%
  \mathop{%
    \rlap{$\lambda$}%
    \mkern2mu
    \raisebox{.275ex}{$\lambda$}%
  }%
}
\newcommand{\fewdotsfill}{%
	\leavevmode
	\cleaders\hbox{.\kern5pt}\hfill\kern0pt
}
\newcommand{\apptocline}[2]{%
	\noindent
	\makebox[\textwidth][l]{%
		\hyperref[#1]{\textbf{#2}}%
		\fewdotsfill
		\pageref{#1}%
	}\par
}
\newcommand{\sjoerd}[1]{\textcolor{black}{#1}}
\DeclareMathOperator{\diam}{diam}
\DeclareMathOperator{\inte}{int}
\DeclareMathOperator{\spann}{span}
\DeclareMathOperator{\supp}{supp}
\DeclareMathOperator{\lip}{Lip}
\DeclareMathOperator{\relu}{ReLU}
\DeclareMathOperator{\rang}{rank}
\DeclareMathOperator{\sgn}{sgn}
\DeclareMathOperator{\Tr}{Tr}
\DeclareMathOperator{\diag}{diag}
\DeclareMathOperator{\ang}{ang}
\DeclareMathOperator{\Unif}{Unif}
\DeclareMathOperator{\cone}{cone}
\let \eps \varepsilon
\let \epsilon \varepsilon
\theoremstyle{plain} 
\newtheorem{theorem}{Theorem}[section]
\newtheorem{corollary}[theorem]{Corollary}
\newtheorem{assumption}[theorem]{Assumption}
\newtheorem{lemma}[theorem]{Lemma}
\newtheorem{proposition}[theorem]{Proposition}
\theoremstyle{definition} %
\newtheorem{definition}[theorem]{Definition}
\newtheorem{rem}[theorem]{Remark}
\theoremstyle{remark} %
\crefname{theorem}{theorem}{theorems}
\crefname{Prop}{Proposition}{Propositions}
\crefname{Lem}{Lemma}{Lemmas}
\crefname{Kor}{Corollary}{Corollaries}
\crefname{rem}{Remark}{Remarks}
\crefname{Bsp}{Example}{Examples}
\crefname{Def}{Definition}{Definitions}
\crefname{Alg}{Algorithm}{Algorithms}
\numberwithin{equation}{section}
\renewcommand*{\eqref}[1]{%
  \hyperref[{#1}]{\textup{\tagform@{\ref*{#1}}}}%
}
  \DeclareTextCommandDefault\textcommabelow[1]
\hmode@bgroup\ooalign{\null#1\crcr\hidewidth\raise-.31ex
     \hbox{\check@mathfonts\fontsize\ssf@size\z@
     \math@fontsfalse\selectfont,}\hidewidth}\egroup}%
\def\@settitle{%
  \begin{center}%
    \normalfont
    \fontsize{17}{19}\bfseries %
    \@title
  \end{center}%
}
\renewcommand{\enddoc@text}{} %
\begin{document}
\allowdisplaybreaks

\hyphenation{Lip-schitz}

\title[Near-optimal estimates for the $\ell^p$-Lipschitz constants of random neural networks]{Near-optimal estimates for the $\ell^p$-Lipschitz constants \\of deep random ReLU neural networks}

\author{Sjoerd Dirksen}
\address[S. Dirksen]{Mathematical Institute,
Utrecht University,
Budapestlaan 6,
3584 CD Utrecht,
Netherlands}
\email{s.dirksen@uu.nl}
\thanks{}

\author{Patrick Finke}
\address[P. Finke]{Mathematical Institute,
Utrecht University,
Budapestlaan 6,
3584 CD Utrecht,
Netherlands}
\email{p.g.finke@uu.nl}
\thanks{}

\author{Paul Geuchen}
\address[P. Geuchen]{Mathematical Institute for Machine Learning and Data Science (MIDS),
Catholic University of Eichstätt--Ingolstadt (KU),
Auf der Schanz 49, 85049 Ingolstadt, Germany}
\email{paul.geuchen@ku.de}
\thanks{}

\author{Dominik Stöger}
\address[D. Stöger]{Mathematical Institute for Machine Learning and Data Science (MIDS),
Catholic University of Eichstätt--Ingolstadt (KU), Auf der Schanz 49, 85049 Ingolstadt, Germany}
\email{dominik.stoeger@ku.de}
\thanks{}

\author{Felix Voigtlaender}\thanks{}
\address[F. Voigtlaender]{Mathematical Institute for Machine Learning and Data Science (MIDS),
Catholic University of Eichstätt--Ingolstadt (KU), Auf der Schanz 49, 85049 Ingolstadt, Germany}
\email{felix.voigtlaender@ku.de}

\subjclass[2020]{68T07, 26A16, 60B20, 60G15}

\keywords{Lipschitz constant, Random neural networks, $\ell^p$-norms, Biases, Adversarial robustness}

\date{\today}

\begin{abstract}    
This paper studies \sjoerd{the} $\ell^p$-Lipschitz constants of \sjoerd{ReLU neural networks $\Phi: \RR^d \to \RR$ with random parameters} for $p \in [1,\infty]$. The distribution of the weights follows a variant of the He initialization and the biases are drawn from symmetric distributions. 
We derive high probability upper and lower bounds \sjoerd{for wide networks} that differ \sjoerd{at most by} a factor that is logarithmic in the network's width and linear in its depth. \sjoerd{In the special case of shallow networks, we obtain matching bounds}. 
\sjoerd{Remarkably,} the behavior of the $\ell^p$-Lipschitz constant 
varies significantly between the regimes $ p \in [1,2) $ and $ p \in [2,\infty] $.
For $p \in [2,\infty]$, the $\ell^p$-Lipschitz constant behaves similarly to $\mnorm{g}_{p'}$, 
where $g \in \RR^d$ is a $d$-dimensional standard Gaussian vector and $1/p + 1/p' = 1$.
\sjoerd{In contrast}, for $p \in [1,2)$, the $\ell^p$-Lipschitz constant aligns more closely to $\mnorm{g}_{2}$.
\end{abstract}
\maketitle
\vspace{-0.75cm}

\section{Introduction}

\sjoerd{From empirical research it is well known that neural networks are} vulnerable to adversarial attacks \cite{szegedy2013intriguing}.
By introducing small perturbations to the input, the output of the neural network can be significantly altered.
One way to assess the
worst-case robustness of a neural network
$\Phi: \RR^d \rightarrow \RR$
\sjoerd{to adversarial attacks}
is by considering its 
\sjoerd{$\ell^p$-Lipschitz constants}
\[
\lip_p(\Phi) \defeq \underset{x \neq y}{\underset{x,y \in \RR^d}{\sup}} \ \frac{\abs{\Phi(x) - \Phi(y)}}{\mnorm{x-y}_p},
\quad p \in [1,\infty].
\]
For example,
in image classification
the case $p=\infty$
is especially relevant.
This scenario allows for 
perturbations of each pixel
which are inceptible to the human eye.
Consequently,
several algorithms  were proposed  to estimate
bounds for \sjoerd{Lipschitz constants} of a trained deep neural network,
see, e.g.,
\cite{ebihara2024local,fazlyab2019efficient}.

In this paper,
we aim to understand the Lipschitz constant\sjoerd{s} 
of neural networks with \sjoerd{\textit{random parameters}},
\sjoerd{i.e.,}
the bias\sjoerd{es} and weights are chosen at random. \sjoerd{These networks appear naturally as the random initialization of any trained neural network. Understanding their properties is hence an important component in building a rigorous theory for deep learning. For instance, the $\ell^2$-Lipschitz constant of a random neural network has been used in the study of 
	neural networks in the Neural Tangent Kernel (NTK) regime, 
	see, e.g., \cite{buchanan2021deep}.
	More generally, random neural networks are often initially considered 
	when trying to rigorously understand empirical phenomena in deep learning, see, e.g., \cite{bartlett2021adversarial,dirksen2022separation}.}

In this paper,
we consider fully-connected neural networks $\Phi: \ \RR^d \to \RR $ \sjoerd{of the form}
\begin{equation*}
	\Phi(x)
	\defeq \left( V^{(L)} \circ \relu \circ V^{(L-1)} \circ  \dots \circ \relu \circ V^{(0)}\right) (x),
\end{equation*}
where 
$\relu (x) \defeq \max \{0,x\}$
\sjoerd{is applied componentwise and}
the affine maps $V^{(\ell)}$ are defined by
\begin{equation*}
	V^{(\ell)} (x) = W^{(\ell)}x + b^{(\ell)}
	\quad \text{for every} \quad 0 \leq \ell \leq L.
\end{equation*}
We assume that both the weight matrices 
$W^{(0)} \in \mathbb{R}^{d \times N}$,
$ (W^{(\ell)})_{\ell=1}^{L-1} \subset \mathbb{R}^{N \times N}$,
$W^{(L)} \in \mathbb{R}^{1 \times N}$
and the bias vectors
$ (b^{(\ell)})_{\ell=0}^{L-1} \subset \RR^N$ and $b^{(L)} \in \RR$
are initialized at random
\sjoerd{according to} a variant of the \sjoerd{popular} He initialization
\cite{he_initialization}.
Namely,
we assume that the entries of $W^{(\ell)}$ 
are i.i.d.\ \sjoerd{Gaussian} with zero mean and standard deviation 
$ \sqrt{ 2/N} $
for $ 0 \le \ell \le L-1$
and i.i.d.\ \sjoerd{standard Gaussian}
for $\ell = L$. \sjoerd{This variant of the} He initialization is a natural choice for our \sjoerd{study: in case of zero biases, each layer acts as an isometry in expectation,
	i.e.,
	$ \mathbb{E} \left[ \mnorm{ \text{ReLU} (W^{(\ell)} x )  }^2_2 \right] 
	= \mnorm{x}_2^2  $.
	Due to the homogeneity of the $\text{ReLU}$-function
	our results can be easily generalized to other 
	initialization schemes. For the biases $(b^{(\ell)})_{\ell=0}^{L}$,
	we allow for more general distributions.}

In this paper, we are interested in 
upper and lower bounds for
the Lipschitz constant
$\lip_p\sjoerd{(\Phi)}$
for all $p \in [1,\infty]$. \sjoerd{Previous works made notable progress on this question in the case $p=2$. It was shown in \cite{geuchen2024upper} that
	for shallow nets (i.e., $L=1$) with arbitrary width, we have
	$ \lip_2 (\Phi ) \asymp \sqrt{d} $
	with high probability.
	For deeper networks, 
	i.e., $L \ge 2$, they showed that}
\begin{equation*}
	\sqrt{d}
	\lesssim
	\lip_2 (\Phi)
	\lesssim 
	C^{L}
	\sqrt{Ld \ln (N/d) }
\end{equation*}
for some absolute constant $C>0$ \sjoerd{if $N\gtrsim d$. Hence, the gap between the lower bound and the upper bound grows exponentially in $L$ and logarithmically in $N$. An improved upper bound in the special case of sufficiently wide \emph{zero-bias} networks was derived in 
	\cite{buchanan2021deep}:
in this case it was shown that $ \lip_2 (\Phi) \lesssim \sqrt{d\log(N)}$ if $N \gtrsim  d^4 L$ (up to logarithmic factors).}
\sjoerd{In this work we derive near-matching upper and lower bounds for $\lip_p (\Phi)$ for all $p \in [1,\infty]$. In the process, we also obtain (partial) improvements of the above results in the case $p=2$. In the summary of our main findings} we distinguish between the 
zero-bias case and the more general case.

\textbf{Zero-bias case:}
We first consider the zero-bias case,
that is,
we assume that
$b^{(\ell)}=0$ 
for all $0 \le \ell \le L$.
In this scenario,
we establish that, with high-probability, 
the Lipschitz constant\sjoerd{s satisfy}
\[
\begin{cases} 
\sqrt{d/L}, \\
d^{1-1/p}
\end{cases}
\lesssim \lip_p(\Phi) \lesssim
\begin{cases}
\sqrt{d \cdot \ln(N/d) }& \text{for } p \in [1,2), \\
d^{1-1/p} \cdot \sqrt{\ln(N/d)}& \text{for } p \in [2,\infty] . 
\end{cases}
\]
In the case $p=2$ and $L\geq 2$, we recover an upper bound \sjoerd{that is slightly better than} the estimate from \cite{buchanan2021deep}
described above. Whereas \cite[Theorem B.5]{buchanan2021deep} requires $ N \gtrsim  d^4 L $
(up to logarithmic factors), our result holds in the regime $N \gtrsim dL^3$
(again up to logarithmic factors).
Thus, under the (practical) assumption
$ L \ll d $ 
our result holds for a wider range of parameters $N$. %
\sjoerd{In the shallow case we do not completely recover the optimal bound $\lip_2 (\Phi) \asymp \sqrt{d} $ from \cite{geuchen2024upper}, 
	suggesting that the logarithmic factor
	may be an artifact of our proof also in the case $L\geq 2$. In this special case, our work can also be considered complementary to \cite{geuchen2024upper}: indeed, our analysis in the case $p\neq 2$ together with the optimal bound in the case $p=2$ together imply the matching bounds 
	\begin{equation}
		\label{eqn:matchingBoundsShallow}
		\lip_p(\Phi) \asymp
		\begin{cases}
			\sqrt{d}& \text{for } p \in [1,2), \\
			d^{1-1/p} & \text{for } p \in [2,\infty],  
		\end{cases}
	\end{equation}
	provided that $L=1$ and $N\gtrsim d$ (up to logarithmic factors).}
Our proof of the upper bounds relies on pointwise estimates of the gradient combined with an $\varepsilon$-net covering argument. The main technical innovation lies in adapting
recent methods 
developed for analyzing random hyperplane tessellations  \sjoerd{\cite{oymak2015near,dirksen2021non,dirksen2022sharp} to} control the deviation between gradients at points within $\varepsilon$-distance of each other.

To establish the lower bounds,
we obtain from a straightforward duality argument 
that $\lip_p (\Phi) $ can be bounded from below by 
$ \mnorm{\nabla \Phi (x) }_{p'}  $
for \sjoerd{any} fixed $x \in \RR^d$,
where $p'$ is the H\"older conjugate of $p$,
i.e., $1/p +1/p' =1 $.
By estimating $  \mnorm{\nabla \Phi (x) }_{p'} $
at a single point $x \in \RR^d$,
we obtain a lower bound for $ \lip_p (\Phi) $ when $ p \in [2,\infty] $
that match the upper bounds
up to logarithmic factors.
Interestingly, this approach does \emph{not} yield optimal lower bounds when $p \in [1,2)$.
To address this,
instead of considering a fixed point $x$
we consider
$ \sup_x \mnorm{\nabla \Phi (x) }_{p'}  $,
where the supremum is taken over all points 
$x \in \RR^d$ in which $\Phi$ is differentiable.
To obtain a lower bound for $p \in [1,2]$
for this supremum 
we combine \sjoerd{Sudakov minoration, decoupling arguments, and methods for analyzing random hyperplane tessellations.} 

A summary of our main results in the zero-bias case
is presented in \Cref{table:results_zerobias},
where we also indicate the corresponding locations of the bounds within the paper.
\begin{figure}
	\begin{tabular}{@{} c 
			>{\centering\arraybackslash}m{1.5cm} 
			>{\centering\arraybackslash}m{5.2cm} 
			>{\centering\arraybackslash}m{3.5cm} 
			c @{}}
		\toprule
		\textbf{$p$ range} & \textbf{Type} & \textbf{Bound} 
		& \textbf{Reference} \\
		\midrule
		\multirow{2}{*}{$p \in [1, 2)$} 
		& $\gtrsim$
		& $\sqrt{d/L}$ 
		&  \Cref{thm:main_lower} \\
		&$\lesssim$
		& $\sqrt{d \cdot \ln (N/d) } $ 
		&  \Cref{corr:lp_upper} \\
		\midrule
		\multirow{2}{*}{$p \in [2, \infty]$} 
		& $ \gtrsim$
		& $d^{1 - 1/p}$ 
		& \Cref{corr:pw} \\
		& $ \lesssim$
		& $d^{1 - 1/p} \sqrt{\ln (N/d)} $ 
		& \Cref{corr:lp_upper} \\
		\bottomrule
	\end{tabular}
	\caption{Our results for neural networks with zero bias}
	\label{table:results_zerobias}
\end{figure}

\noindent 
\textbf{General case (non-zero bias):}
\sjoerd{For networks with non-zero biases, we derive near-matching bounds on the Lipschitz constants under suitable assumptions on the biases. For instance, 
	these may be drawn independently from a Gaussian distribution 
	with a zero mean and a fixed variance,
	or a symmetric \sjoerd{uniform} distribution on an interval of a fixed size.}
Under these assumptions, we show that
\[
\begin{cases} 
	\sqrt{d/L}, \\
	d^{1-1/p} 
\end{cases}
\lesssim \lip_p(\Phi) \lesssim 
\begin{cases}
	\sqrt{d} \left( \sqrt{L} +  \sqrt{\ln(N/d) } \right)& \text{for } p \in [1,2), \\
	d^{1-1/p} \left( \sqrt{L} + \sqrt{ \ln(N/d)  } \right) & \text{for } p \in [2,\infty].
\end{cases}
\]
These bounds essentially remain valid for
distributions
other than the Gaussian or uniform distribution.
Specifically, the lower bound\sjoerd{s
	hold \textit{for any}} choice 
of the bias distribution.
For the upper bound\sjoerd{s},
we \sjoerd{only need to make the mild assumption that the} 
biases follow absolutely continuous distributions,
with bounded probability density function.
For an overview, 
we refer to \Cref{table:results_bias}, 
which lists our bounds along with the corresponding assumptions
and locations of the bounds within the manuscript. \sjoerd{Compared to \cite{geuchen2024upper}, in the case $p=2$ we obtain an upper bound with linear rather than exponential dependence on the depth, under slightly more stringent assumptions on the biases. Moreover, we establish near-matching bounds for all $p\neq 2$. Similar to the zero-bias case, for shallow networks our work complements \cite{geuchen2024upper} to yield matching bounds for all $p\in [1,\infty]$: the estimates \eqref{eqn:matchingBoundsShallow} remain valid for $L=1$ and $N\gtrsim d$ (up to logarithmic factors) for arbitrary distributions of the biases; see \Cref{corr:shallow}.} 

\sjoerd{The lower bounds are derived via a reduction to the zero-bias case. We observe that for any signal 
	$ x\in M= \left\{ x \in \RR^d : \mnorm{x}_2 \ge R  \right\}  $,}
for sufficiently large $R>0$,
as $x$ propagates through the individual 
layers of the network,
it remains relatively large compared to the biases.
Consequently,
the perturbation of the signal induced by the biases remains minimal.
In particular, on the set $M$ the network $\Phi$
can be approximated \sjoerd{well} by a zero-bias network $\tilde{\Phi} $ \sjoerd{obtained from $\Phi$ by setting all its biases to zero}.

\sjoerd{In our proof of the upper bounds} for $\lip_p (\Phi)$
we \sjoerd{again} aim to leverage the results from the zero-bias case.
\sjoerd{However, on $M^c$ we can no longer expect that
	the zero-bias network $\tilde{\Phi}$ approximates $\Phi$.
	We deal with this using a different 
	approach that leads to an additional $\sqrt{L}$-factor in the upper bound
	when compared to the zero-bias case.}
\text{ }\\

\noindent \textbf{Structure of the paper:}
In \Cref{sec:prelim}, we provide background
regarding random \sjoerd{neural} networks and
describe our main technical assumptions.
Moreover, we describe \sjoerd{several results on} random hyperplane \sjoerd{tessellations},
which are a key component in our proofs.
\Cref{sec:pw} focuses on proving both 
upper and lower bounds for the $\ell^p$-norm\sjoerd{s} 
of the gradient of the network at a fixed point $x \in \mathbb{R}^d$.
This directly leads to near-optimal lower bounds for the Lipschitz constant\sjoerd{s}
of the network $\Phi$ when $p \in [2,\infty]$.
In \Cref{sec:upper}, we prove an upper bound for the \sjoerd{$\ell^p$-}Lipschitz constant 
for any $p \in [1,\infty]$.
In \Cref{sec:lower}, we establish a lower bound 
for the Lipschitz constant\sjoerd{s} of zero-bias networks
in the regime $p \in [1,2]$.
In \Cref{sec:bias}, we explain
how to generalize our results from zero-bias networks
to general networks with non-zero biases. \sjoerd{Throughout, we provide proof sketches of our results which contain the key ideas. The lengthier proofs with full details can be found in the Appendix.}

\begin{figure}
	\begin{tabular}{@{} c 
			>{\centering\arraybackslash}m{1.5cm} 
			>{\centering\arraybackslash}m{5.2cm} 
			>{\centering\arraybackslash}m{4cm} 
			c @{}}
		\toprule
		\textbf{$p$ range} & \textbf{Type} & \textbf{Bound} 
		& \textbf{Assumption}
		& \textbf{Reference} \\
		\midrule
		\multirow{2}{*}{$p \in [1, 2)$} 
		& $\gtrsim$
		& $\sqrt{d/L}$ 
		&none
		& \Cref{thm:main_lower_bias} \\
		&$\lesssim$
		& $\sqrt{d} \left( \sqrt{L} + \sqrt{ \ln(\lambda C_{\tau} N / d)} \right)$ 
		&bounded pdf+symmetric
		& \Cref{corr:upper_bias} \\
		\midrule
		\multirow{2}{*}{$p \in [2, \infty]$} 
		& $ \gtrsim$
		& $d^{1 - 1/p}$ 
		& none
		& \Cref{thm:main_lower_bias_general} \\
		& $ \lesssim$
		& $d^{1 - 1/p} \left( \sqrt{L} + \sqrt{ \ln(\lambda C_{\tau} N / d)} \right)$ 
		&bounded pdf+symmetric
		& \Cref{corr:upper_bias} \\
		\bottomrule
	\end{tabular}
	\caption{Our results for neural networks
		with non-zero bias. The constant $C_\tau$ is an upper bound for the pdf
		of the biases and $\lambda$ specifies their decay rate.}
	\label{table:results_bias}
\end{figure}

	\subsection{Notation}
	
	For a statement $A$, we set $\mathbbm{1}_A \defeq 1$ if $A$ is true and $\mathbbm{1}_A \defeq 0$ otherwise. 
	We denote constants whose enlargement does not change a given statement using capital letters and constants whose shrinkage does not change a given statement using lowercase letters. 
	Without loss of generality, we may assume that all constants of the first kind are greater or equal than 1, and all constants of the second kind are less or equal than 1. 
	We write $\alpha \lesssim \beta$ if there exists an absolute constant $C > 0$ such that $\alpha \leq C \cdot \beta$, and we write $\alpha \asymp \beta$ if both $\alpha \lesssim \beta$ and $ \beta \lesssim \alpha$.
	
	Let $x, y \in \RR^n$. We denote their inner product by $\langle x,y \rangle \defeq \sum_{i=1}^n x_i y_i$. For $p \in [1,\infty)$, we denote the $\ell^p$-norm by $\Vert x \Vert_p \defeq \left(\sum_{i=1}^n \abs{x_i}^p\right)^{1/p}$ and the $\ell^\infty$-norm by $\mnorm{x}_\infty \defeq \underset{i=1,\dots,n}{\max} \abs{x_i}$.
	We denote the open and closed Euclidean balls with center $x$ and radius $r > 0$ by
	\[
	B_n(x,r) \defeq \left\{ y \in \RR^n: \ \mnorm{x-y}_2 < r\right\} \quad \text{and} \quad 
	\overline{B}_n(x,r) \defeq \left\{ y \in \RR^n: \ \mnorm{x-y}_2 \leq r\right\},
	\]
	and the unit sphere by $\mathbb{S}^{n-1} \defeq \left\{ x \in \RR^n: \ \Vert x \Vert_2 = 1\right\}$.
	The $d$-dimensional Lebesgue measure is denoted by ${\metalambda}^d$.
	
	Let $K \subseteq \RR^n$. The interior of $K$ (with respect to the standard topology on $\RR^n$) is denoted by $\inte(K)$. We define the cone generated by $K$ and the Minkowski difference of $K$ with itself by
	\begin{equation*}
		\cone(K) \coloneqq \{ s x : \ x \in K, s > 0\}
		\quad\text{and}\quad
		K - K \coloneqq \{ x - y : \ x, y \in K \},
	\end{equation*}
	respectively. Finally, we denote the diameter of $K$ by $\diam(K) \defeq \underset{x,y \in K}{\sup} \Vert x - y \Vert_2 \in [0, \infty]$.
	
	Let $A \in \RR^{m \times n}$. We denote its $i$-th row by $A_{i,:}$ and its $j$-th column by $A_{:,j}$. For $p,q \in [1,\infty]$, we define the operator norm by
	\begin{equation*}
		\Vert A \Vert_{p \to q} \defeq \underset{\mnorm{x}_p \leq 1}{\underset{ x \in \RR^n}{\sup}} \Vert Ax \Vert_q.
	\end{equation*}
	By $\abs{A}$ we denote the matrix whose entries are the absolute values of the entries of $A$. We denote the trace by $\Tr(A) \defeq \sum_{i=1}^{\min\{m, n\}} A_{i,i}$. $I_{m \times n} \in \RR^{m \times n}$ denotes the matrix with $(I_{m \times n})_{i,j} = \mathbbm{1}_{i=j}$ for all $i \in \{1,\dots,m\}$ and $j \in \{1,\dots,n\}$, and we use the shorthand $I_n \defeq I_{n \times n}$. By $\diag \{0,1,-1\}^{m \times m}$ we denote the set of $m \times m$ diagonal matrices  with diagonal entries in $\{0,1,-1\}$. For $a_1, \dots, a_m \in \RR$ we denote by $\diag(a_1,\dots,a_m) \in \RR^{m \times m}$ the $m \times m$ diagonal matrix with entries $a_1,\dots,a_m$ on the diagonal. 
	
	For $\ell_1, \ell_2 \in \ZZ$ with $\ell_1 \leq \ell_2$ and matrices $A_{\ell_1},\dots,A_{\ell_2}$ with suitable dimensions, we let 
	\begin{equation}\label{eq:rev_order}
		\prod_{i=\ell_2}^{\ell_1} A_i \defeq A_{\ell_2} \cdots A_{\ell_1}
	\end{equation}
	denote their matrix product in reverse order. If $A_{\ell_1} \in \RR^{m \times n}$ and $\ell_2 < \ell_1$, define the empty product as
	\[
	\prod_{i=\ell_2}^{\ell_1} A_i \defeq I_{n \times n}.
	\]
	
	We write $\mathcal{N}(0, \sigma^2)$ for the normal distribution with expectation $0$ and variance $\sigma^2$, and $\mathcal{N}(0, I_k)$ for the distribution of a $k$-dimensional random vector with independent standard Gaussian entries.
	We use $\Unif(K)$ to denote a uniform distribution on a set $K$.
	
	Let $X$ and $Y$ be random variables. We write $X \d Y$ if $X$ and $Y$ are identically distributed. If $X$ is real-valued, we denote the expectation of $X$ by $\EE[X]$, the variance by $\VV[X]$ and let $\mnorm{X}_{\psi_1}$ and $\mnorm{X}_{\psi_2}$ denote the sub-exponential and sub-gaussian norm of $X$, respectively; see \cite[Chapters~2.5~\&~2.7]{vershynin_high-dimensional_2018}. 
	For a subset $K \subseteq \RR^k$, we let
	\begin{equation*}\label{eq:gaussian_width}
		w(K) \defeq \underset{g \sim \mathcal{N}(0,I_k)}{\EE} \left[ \underset{x \in K}{\sup}\ \langle x, g\rangle\right]
	\end{equation*}
	denote the \emph{Gaussian width} of $K$; see \cite[Section~7.5]{vershynin_high-dimensional_2018}.
	
	Let $f : \RR^n \to \RR^m$. We denote by 
	$\fres{f}{K}$ the restriction of $f$ onto $K \subseteq \RR^n$. 
	If $f$ is differentiable at $x_0 \in \RR^n$, we denote by 
	$\mathrm{D} f(x_0) \in \RR^{m \times n}$ the Jacobian of $f$ at $x_0$. If $m = 1$, we let $\nabla f(x_0) \in \RR^n$ be the gradient of $f$ at $x_0$.
	
	For given $\eps>0$, we call a set $\mathcal{N} \subseteq K$ an $\eps$-net of $K$, if for every $x \in K$ there exists $x^\ast \in \mathcal{N}$
	with $\mnorm{x - x^\ast}_2 \leq \eps$.
	We denote the $\eps$-covering number (with respect to Euclidean distances) of $K$ by
	\begin{equation*}\label{eq:cov_num}
		\mathcal{N}(K, \eps) \defeq \min\left\{ \#\Lambda: \ \text{$\Lambda \subseteq K$ is an $\eps$-net of $K$}\right\},
	\end{equation*}
	that is, the smallest possible cardinality of an $\eps$-net of $K$.
	Similarly, we call a set $\Lambda \subseteq K$ 
	$\delta$-separated if $\mnorm{x-y}_2 > \delta$ for every $x,y\in \Lambda$ with $x \neq y$.
	The $\delta$-packing number of $K$ is defined as 
	\begin{equation*}\label{eq:pack_num}
		\mathcal{P}(K,\delta) \defeq \sup \left\{\#\Lambda: \ \Lambda \subseteq K, \ \Lambda \text{ is }\delta\text{-separated}\right\}.
	\end{equation*}
	It is well-known that the covering numbers and packing numbers are equivalent in the sense that
	\[
	\mathcal{P}(K, 2\delta) \leq \mathcal{N}(K, \delta) \leq \mathcal{P}(K, \delta);
	\]
	see \cite[Lemma~4.2.8]{vershynin_high-dimensional_2018}.

\section{Setting and preliminary results}\label{sec:prelim}
\subsection{Setting}\label{sec:setting}
We consider ReLU networks with $d$ input neurons, a single output neuron and $L$ hidden layers of width $N$. 
Formally, we consider maps 
\begin{equation} \label{eq:relu-network}
	\Phi: \ \RR^d \to \RR,
	\quad
	\Phi(x)
	\defeq \left( V^{(L)} \circ \relu \circ V^{(L-1)} \circ  \dots \circ \relu \circ V^{(0)}\right) (x).
\end{equation}
Here, $\relu$ denotes the function 
\begin{equation*}
	\relu (x) \defeq \max \{0,x\}, \quad x \in \RR,
\end{equation*} 
and its application in \eqref{eq:relu-network} is to be understood componentwise, i.e., 
\begin{equation*}
	\relu ((x_1, \dots, x_N)) = (\relu (x_1),\dots, \relu(x_N)). 
\end{equation*}
The maps $V^{(\ell)}$ for $0 \leq \ell \leq L$ are affine:
there exist \emph{weights} $W^{(0)} \in \RR^{N \times d}$, $W^{(1)}, \dots, W^{(L-1)} \in \RR^{N \times N}$
and $W^{(L)} \in \RR^{1 \times N}$ as well as \emph{biases} $b^{(0)}, \dots, b^{(L-1)} \in \RR^N$ and $b^{(L)} \in \RR$
such that
\begin{equation*}
	V^{(\ell)} (x) = W^{(\ell)}x + b^{(\ell)}
	\quad \text{for every} \quad 0 \leq \ell \leq L.
\end{equation*}
We refer to a network as \emph{zero-bias}, if the biases are all set to zero.

In the present work, we specifically analyze the properties of \emph{random} ReLU networks, i.e., ReLU networks where the weights and biases are drawn at random. 
We collect the different assumptions on this random initialization that are used over the course of the present paper in the following. 
\begin{assumption} \label{assum:1}
	We assume that the weights and biases
	are randomly
	chosen in the following way:
	For $0 \leq \ell < L$, we have 
	\begin{equation*}
		\left( W^{(\ell)}\right)_{i,j} \overset{\mathrm{i.i.d.}}{\sim} \mathcal{N}(0,2/N)
		\quad\text{and}\quad
		\left( W^{(L)}\right)_{1,j} \overset{\mathrm{i.i.d.}}{\sim} \mathcal{N}(0,1),
	\end{equation*}
	where all entries are jointly independent.
	The entries of the biases are drawn independently from (possibly different) real-valued random distributions and are jointly independent of the weights.
\end{assumption}

\begin{assumption}\label{assum:2}
	Let $\Phi: \RR^d \to \RR$ be a random ReLU network following \Cref{assum:1}.
	We say that the biases are \emph{symmetric} if
	the components of the bias vectors are drawn from symmetric distributions.
\end{assumption}

\begin{assumption}\label{assum:3}
	Let $\Phi: \RR^d \to \RR$ be a random ReLU network following \Cref{assum:2}.
	We say that the biases satisfy a \emph{small-ball property} with constant $C_\tau > 0$ if,
	for every $t \in \RR$ and $\eps > 0$, 
	\[
	\PP\left(b^{(\ell)}_j \in (t-\eps, t +\eps)\right) \leq C_\tau \cdot \eps
	\]
	for any $j$ and $\ell$.
	This is equivalent to every $b^{(\ell)}_j$ having an absolutely continuous probability 
	distribution with probability density function bounded by $C_\tau/2$ almost everywhere; see \Cref{thm:abs_cont}.
\end{assumption}
\subsection{The Lipschitz constant and the formal gradient of ReLU networks}
\label{subsec:lip}
\newcommand{\lipp}[2]{\lip_{#1 \rightarrow #2}}
For any $K \subseteq \RR^d$, any function $f : K \to \RR^k$ and any $p,q \in [1,\infty]$ we denote its \emph{Lipschitz constant} with respect to $\Vert \cdot \Vert_p$ and
$\Vert \cdot \Vert_q$ as
\begin{equation*}
	\lipp{p}{q} (f)
	\defeq  \underset{x \neq y}{\underset{x,y \in K}{\sup}} \frac{\mnorm{ f(x) - f(y)}_q }{\Vert x - y \Vert_p}.
\end{equation*}
We call $f$ \emph{Lipschitz continuous} if its Lipschitz constant is finite.\footnote{This is independent of $p$ and $q$ as all norms are equivalent on $\RR^n$.} Since we study ReLU networks $\Phi$ with an output dimension of $1$,
we have $\mnorm{\Phi(x) - \Phi(y)}_q = \abs{\Phi(x) - \Phi(y)}$ for any $q \in [1, \infty]$.
Therefore, the value of $\lipp{p}{q}(\Phi)$ is actually independent of $q$ and we simply write $\lip_p(\Phi)$ instead of $\lipp{p}{q}(\Phi)$.

The following well-known proposition establishes a relation between the Lipschitz constant
and the Jacobian of a function.
\begin{proposition}\label{prop:lipgrad}
	Let $K \subseteq \RR^d$ be a convex set with non-empty interior. 
	Let $f: \RR^d \to \RR^k$ be Lipschitz continuous and $M \subseteq \RR^d$ any measurable subset
	of $\RR^d$ with Lebesgue measure $\metalambda^d (\RR^d \setminus M) = 0$
	such that $f$ is differentiable in every $x \in M$. 
	Let $p,q \in [1,\infty]$.
	Then,
	\begin{equation*}
		\lipp{p}{q}\left(\fres{f}{K}\right) = \underset{x \in M\cap \inte(K)}{\sup} \Vert \mathrm{D} f (x) \Vert_{p \rightarrow q}.
	\end{equation*}
	In particular, if $k =1$, 
	\begin{equation*}
		\lip_p\left(\fres{f}{K}\right) = \underset{x \in M\cap \inte(K)}{\sup} \Vert \nabla f (x) \Vert_{p'},
	\end{equation*}
	where $p'\in [1,\infty]$ satisfies $1/p + 1/p' = 1$.
\end{proposition}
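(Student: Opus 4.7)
My plan is to prove the equality by establishing the two inequalities separately. The nontrivial direction---bounding $\lipp{p}{q}(\fres{f}{K})$ by the supremum of operator norms of $\mathrm{D}f$---will follow from the fundamental theorem of calculus along line segments, modulo a Fubini argument to circumvent the fact that a given segment may intersect the $\metalambda^d$-null set $\RR^d \setminus M$ in a set of positive one-dimensional measure. This Fubini step is the main (indeed, the only) technical obstacle.

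For the \textbf{lower bound} ``$\ge$'', I fix $x_0 \in M \cap \inte(K)$ and choose $v \in \RR^d$ with $\mnorm{v}_p = 1$ attaining the operator norm $\mnorm{\mathrm{D}f(x_0)}_{p \to q} = \mnorm{\mathrm{D}f(x_0) v}_q$ (the sup is attained by compactness of the unit $\ell^p$-ball). Since $x_0 \in \inte(K)$, we have $x_0 + tv \in \inte(K) \subseteq K$ for $t > 0$ small enough, and differentiability of $f$ at $x_0$ gives
\[
\lipp{p}{q}(\fres{f}{K}) \;\ge\; \frac{\mnorm{f(x_0 + tv) - f(x_0)}_q}{t} \;\longrightarrow\; \mnorm{\mathrm{D}f(x_0)}_{p \to q} \quad \text{as } t \to 0^+.
\]
Taking the supremum over $x_0 \in M \cap \inte(K)$ yields the inequality.

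For the \textbf{upper bound} ``$\le$'', set $L := \sup_{x \in M \cap \inte(K)} \mnorm{\mathrm{D}f(x)}_{p \to q}$. Since $K$ is convex with nonempty interior, $\inte(K)$ is dense in $K$, and $f$ is continuous on $K$; thus it suffices to show $\mnorm{f(y_0) - f(x_0)}_q \le L \mnorm{y_0 - x_0}_p$ for arbitrary $x_0, y_0 \in \inte(K)$. I pick $\eps > 0$ so small that $[x_0 + u, y_0 + u] \subset \inte(K)$ for all $u \in B_d(0, \eps)$, using openness and convexity of $\inte(K)$. Applying Fubini to $\mathbbm{1}_{\RR^d \setminus M}(x_0 + u + t(y_0 - x_0))$ on $B_d(0, \eps) \times [0, 1]$ and using $\metalambda^d(\RR^d \setminus M) = 0$, I find that for almost every such $u$ the segment $[x_0 + u, y_0 + u]$ meets $\RR^d \setminus M$ in a set of one-dimensional measure zero. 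For any such $u$, the map $t \mapsto f(x_0 + u + t(y_0 - x_0))$ is Lipschitz, hence absolutely continuous, with a.e.\ derivative $\mathrm{D}f(x_0 + u + t(y_0 - x_0))(y_0 - x_0)$; integration gives
\[
f(y_0 + u) - f(x_0 + u) = \int_0^1 \mathrm{D}f\bigl(x_0 + u + t(y_0 - x_0)\bigr)(y_0 - x_0) \, \dd t,
\]
so the triangle inequality yields $\mnorm{f(y_0 + u) - f(x_0 + u)}_q \le L \mnorm{y_0 - x_0}_p$. Letting $u \to 0$ along admissible directions and using continuity of $f$ completes the bound. The specialization to $k = 1$ is immediate: picking $q = 1$, the row-vector identity $\mathrm{D}f(x) = (\nabla f(x))^\top$ together with H\"older's inequality (sharp on the unit $\ell^p$-ball) gives $\mnorm{\mathrm{D}f(x)}_{p \to 1} = \sup_{\mnorm{v}_p \le 1} \abs{\langle \nabla f(x), v\rangle} = \mnorm{\nabla f(x)}_{p'}$, and the stated formula for $\lip_p(\fres{f}{K})$ follows.
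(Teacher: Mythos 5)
Your proof is correct, but it follows a genuinely different route from the one in the paper. You prove the upper bound by the classical ``absolute continuity on lines'' argument: a Fubini/Tonelli step over translated segments $[x_0+u,\,y_0+u]$, $u \in B_d(0,\eps)$, produces (for almost every $u$) a segment meeting the null set $\RR^d\setminus M$ in a set of one-dimensional measure zero, along which the Lipschitz map $t\mapsto f(x_0+u+t(y_0-x_0))$ is absolutely continuous with a.e.\ derivative $\mathrm{D}f(\cdot)(y_0-x_0)$; integrating and letting $u\to 0$ gives the bound. The paper instead mollifies: it identifies $f_i$ with an element of $W^{1,\infty}_{\mathrm{loc}}$, convolves with a smooth approximate identity $\varphi_\eps$, shows $\mnorm{\mathrm{D}f_\eps}_{p\to q}\le L$ on the shrunk set $K_\delta$ because $\mathrm{D}f_\eps = (\widetilde{\mathrm{D}}f)\ast\varphi_\eps$ averages the (a.e.\ classical) derivative, and then applies the mean value theorem to the smooth $f_\eps$ before passing to the limit $\eps\to 0$. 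Your approach is more elementary in that it avoids weak derivatives, mollifiers, and the identification of weak and classical derivatives, at the cost of invoking the fundamental theorem of calculus for vector-valued Lipschitz functions of one variable and the small translation trick to dodge bad segments; the paper's approach packages the measure-theoretic difficulty into standard Sobolev-space facts and needs only the mean value theorem for smooth functions. Both handle the restriction to $K$ the same way (shrinking to $K_\delta$ resp.\ to a tube around the segment, then using $\overline{\inte(K)}\supseteq K$ and continuity), and your lower bound and the $k=1$ specialization via H\"older duality match the paper's.
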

Note that the existence of a set $M$
with the required properties follows from the fact that $f$ is Lipschitz continuous;
this is known as Rademacher's theorem (cf.\ \cite[Section~3.1.2]{evans_measure_1992}). 
For the sake of completeness, we provide a proof of \Cref{prop:lipgrad} in \Cref{sec:prelim_proofs}.

Inspired by the above proposition, we base our estimation of the Lipschitz constant
of a ReLU network on the computation of its gradient.
Note, however, that a ReLU network is \emph{not} necessarily everywhere differentiable,
since the ReLU itself is not differentiable at $0$.
To overcome this problem, in the following, we introduce a computable proxy of the gradient which coincides almost everywhere with the actual gradient.

For this purpose, we introduce the following notation:
For a vector $x \in \RR^N$ we define the diagonal matrix $\Delta(x) \in \RR^{N \times N}$ via
\begin{equation*}
	\left(\Delta(x)\right)_{i,j}
	\defeq \begin{cases}
		\mathbbm{1}_{x_i > 0}&\text{if } i=j, \\
		0&\text{if }i \neq j .
	\end{cases}
\end{equation*}
This leads to the following recursive representation of the ReLU network $\Phi$.
Using the same notation as in \Cref{sec:setting},
let $x =: \Phi^{(-1)}(x) \in \RR^d$ and define recursively 
\begin{align} \label{eq:d-matrices}
	D^{(\ell)}(x)
	&\defeq \Delta(W^{(\ell)}\Phi^{(\ell-1)}(x) + b^{(\ell)}), \nonumber\\
	\Phi^{(\ell)}(x)
	&\defeq D^{(\ell)}(x) \cdot \left(W^{(\ell)}\Phi^{(\ell-1)}(x) + b^{(\ell)}\right)
	= \relu(W^{(\ell)}\Phi^{(\ell-1)}(x) + b^{(\ell)}),
	\quad 0 \leq \ell < L.
\end{align}
Note that then $\Phi(x) = W^{(L)}\Phi^{(L-1)}(x) + b^{(L)}$.
For convenience, we further set $D^{(L)}(x) \defeq 1$ for every $x \in \RR^d$.

Using this notation, we define the formal gradient of the network, and similarly the formal Jacobian of the intermediate layers as follows.
\begin{definition}\label{def:formal}
	Let $\Phi: \RR^d \to \RR$ be a ReLU network with $L$ hidden layers as in \Cref{sec:setting}, $x \in \RR^d$ and $\ell \in \{-1,\ldots,L-1\}$.
	We define the \emph{formal Jacobian} of $\Phi^{(\ell)}$ at $x$ and the \emph{formal gradient} 
	of $\Phi$ at $x$ by 
	\begin{align}\label{eq:form_grad}
		\jac \Phi^{(\ell)}(x) &\defeq
		\prod_{j=\ell}^0 D^{(j)}(x)W^{(j)}
		\quad \text{and} \quad
		\nablaa \Phi (x) \defeq \left(W^{(L)}\left[\prod_{j=L-1}^0 D^{(j)}(x)W^{(j)}\right]\right)^T,
	\end{align}
	respectively. 
	Here, the reverse order matrix product is as defined in \eqref{eq:rev_order}.
	This in particular implies $\jac \Phi^{(-1)}(x) = I_{d \times d}$.
	Furthermore, for $\ell_1 \in \{0,\ldots,L\}$ and $\ell_2 \in \{-1,\ldots,L-1\}$, we define %
	\[
	\jac \Phi^{(\ell_1) \to (\ell_2)} (x) \defeq \prod_{j= \ell_2}^{\ell_1} D^{(j)}(x)W^{(j)}, \quad x \in \RR^d.
	\]
\end{definition}
As already indicated above, the formal gradient and Jacobian serve as a proxy for their true counterparts. This is justified by the following proposition.
\begin{proposition}\label{prop:grad_relu}
	Let $\Phi : \RR^d \to \RR$ be a ReLU neural network with $L$ hidden layers as in \Cref{sec:setting}
	and fix $\ell \in \{-1,\ldots,L-1\}$. 
	Then, for almost every $x \in \RR^d$, $\Phi^{(\ell)}$ is differentiable at $x$ with
	\begin{equation*}
		\mathrm{D} \Phi^{(\ell)}(x) = \jac \Phi^{(\ell)}(x).
	\end{equation*}
	Moreover, for almost every $x \in \RR^d$ the entire network $\Phi$ is differentiable at $x$ with  
	\begin{equation*}
		\nabla \Phi (x)  = \nablaa \Phi (x).
	\end{equation*}
\end{proposition}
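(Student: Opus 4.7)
The plan is to prove both statements by induction on $\ell \in \{-1, 0, \ldots, L-1\}$, obtaining the gradient statement as a consequence of the $\ell = L-1$ case applied to $\Phi = W^{(L)} \Phi^{(L-1)} + b^{(L)}$. The base case $\ell = -1$ is immediate since $\Phi^{(-1)}(x) = x$ is differentiable everywhere with $\mathrm{D} \Phi^{(-1)}(x) = I_{d\times d} = \jac \Phi^{(-1)}(x)$.

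For the inductive step, I would assume that $\Phi^{(\ell-1)}$ is differentiable outside some Lebesgue-null set $N_{\ell - 1}$ with $\mathrm{D}\Phi^{(\ell-1)} = \jac \Phi^{(\ell-1)}$, and analyze $\Phi^{(\ell)} = \relu \circ g$ with $g(x) \defeq W^{(\ell)} \Phi^{(\ell-1)}(x) + b^{(\ell)}$. Since $\relu$ is smooth away from $0$, the only new source of non-differentiability at a point $x \notin N_{\ell-1}$ is the vanishing of some preactivation component $g_j$. A separate short induction shows that $\Phi^{(\ell-1)}$ is piecewise affine in the strong sense: there is a finite family of affine hyperplanes $H_1, \ldots, H_K \subset \RR^d$ such that on every connected component of $\RR^d \setminus \bigcup_i H_i$, both $\Phi^{(\ell-1)}$ and $g$ coincide with an affine map. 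Consequently, each $g_j$ restricted to any such component is either identically zero or its zero set is contained in a single affine hyperplane of that component. The union of all these zero hyperplanes (across $j$ and across components) forms a Lebesgue-null set $B_\ell$, and on any open region where some $g_j \equiv 0$ the corresponding component $\Phi^{(\ell)}_j$ is identically zero, matching the zero $j$-th row of $\jac \Phi^{(\ell)}$ that results from $D^{(\ell)}_{jj}(x) = \mathbbm{1}_{g_j(x) > 0} = 0$.

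For any $x \notin N_{\ell-1} \cup B_\ell$, each $g_j(x) \neq 0$ has constant sign in a neighborhood of $x$, so locally $\relu$ acts on the $j$-th coordinate either as the identity or as the zero map. The ordinary chain rule then applies and delivers
\[
\mathrm{D}\Phi^{(\ell)}(x) = \Delta(g(x)) \cdot W^{(\ell)} \cdot \mathrm{D}\Phi^{(\ell-1)}(x) = D^{(\ell)}(x) W^{(\ell)} \jac \Phi^{(\ell-1)}(x) = \jac \Phi^{(\ell)}(x),
\]
using the inductive hypothesis and the definition of $\jac \Phi^{(\ell)}$ from \eqref{eq:form_grad}. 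Setting $N_\ell \defeq N_{\ell-1} \cup B_\ell$ closes the induction. Finally, since $\Phi(x) = W^{(L)} \Phi^{(L-1)}(x) + b^{(L)}$ is affine in $\Phi^{(L-1)}(x)$, it is differentiable wherever $\Phi^{(L-1)}$ is, with gradient $\nabla \Phi(x) = \bigl(W^{(L)} \jac \Phi^{(L-1)}(x)\bigr)^T = \nablaa \Phi(x)$.

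The main obstacle is the bookkeeping behind the piecewise-affine claim and verifying that the degenerate case of a preactivation component vanishing on an open set does not break the identification $\mathrm{D}\Phi^{(\ell)} = \jac \Phi^{(\ell)}$; both are handled by propagating the hyperplane partition through each layer and checking that the degenerate regions contribute zero rows on both sides. A potentially cleaner alternative would be to invoke that $\Phi^{(\ell)}$ is semi-algebraic (hence differentiable off a null set by Rademacher's theorem combined with a stratification argument) and to verify the formula on the open top stratum, but the direct hyperplane approach above keeps the proof elementary and self-contained.
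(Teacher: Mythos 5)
Your argument is correct, but it takes a genuinely different route from the paper's. The paper also inducts over $\ell$, but at each layer it handles the problematic set neuron by neuron in a purely measure-theoretic way: for each $j$ it considers the level set $Z_j=\{\Phi^{(\ell)}_j=0\}$, invokes the fact that a Lipschitz function is differentiable almost everywhere on any of its level sets with derivative zero there (\cite[Corollary~1~on~p.~84]{evans_measure_1992}), and observes that this zero row matches the $j$-th row of $\jac\Phi^{(\ell)}$ because $D^{(\ell)}(x)_{j,j}=0$ on $Z_j$; off $Z_j$ the preactivation is strictly positive, so the set is open and the ordinary chain rule applies. You instead build the exclusion set explicitly via the piecewise-affine structure of ReLU networks, propagating a finite hyperplane arrangement through the layers so that on each open cell every preactivation either has constant nonzero sign or vanishes identically, and you correctly note that the degenerate (identically zero) case produces zero rows on both sides of the identity. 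Both proofs are valid. The paper's approach is shorter and needs no combinatorial bookkeeping, at the price of importing a nontrivial result from geometric measure theory; yours is elementary and self-contained but requires carrying the structure lemma (and the degenerate-region case analysis) through the induction, and as written the sentence asserting that every $g_j(x)\neq 0$ off the null set should be phrased as a dichotomy (either $g_j$ vanishes identically near $x$ or it is nonzero with locally constant sign), which is what your preceding sentence already establishes.
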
 
The above is probably already well known to the community and the statement can (at least partially) be found in \cite[Theorem III.1]{berner2019towards}. In fact, the proof in \cite{berner2019towards} already includes the statement for the intermediate layers but it is not stated explicitly in the theorem. For this reason, and in order to clarify the proof itself, we decided to include a detailed argument in \Cref{sec:prelim_proofs}.

\begin{rem}
	We emphasize that the formal gradient may in general \emph{not} be viewed as an extension of the true gradient. In fact, there might exist points of differentiability where $\act \Phi(x) \neq \nablaa \Phi(x)$.
	
	As an example, consider the shallow zero-bias ReLU network $\Phi: \RR\to \RR$ with weight matrices 
	\[
	W^{(0)} = \left(\begin{matrix} 1 \\ -1\end{matrix}\right)
	\quad \text{and} \quad W^{(1)} = \left(\begin{matrix} 1& -1\end{matrix}\right).
	\]
	A simple computation shows $\Phi(x) = x$ for every $x \in \RR$.
	Hence, $\Phi$ is everywhere differentiable with $\act \Phi (x) = 1$.
	On the other hand, $D^{(0)}(0) = \left(\begin{matrix} 0 & 0 \\ 0 & 0\end{matrix}\right)$ 
	and therefore also $\nablaa \Phi(0) = 0$.
\end{rem}

At a fixed non-zero input, a random ReLU network is almost surely differentiable with the actual gradient coinciding with the
formal gradient; see \cite[Theorem~E.1]{geuchen2024upper}. Combining this fact with \Cref{prop:grad_relu,prop:lipgrad} we get the following result, which is the starting point in our analysis of the Lipschitz constant of random ReLU networks.
\begin{theorem}\label{thm:glob}\label{thm:up_low_bound}
	Let $\Phi: \RR^d \to \RR$ be a random ReLU network with $L$ hidden layers satisfying \Cref{assum:1}, and let
	$p,p',q \in [1,\infty]$ with $1/p + 1/p' = 1$.
	Let 
	\[
	M_\Phi \defeq \{x \in \RR^d: \ \Phi \text{ differentiable at } x \text{ with } \nabla \Phi(x) = \nablaa \Phi(x)\}.
	\]
	Moreover, fix $x_0 \in \RR^d \setminus \{0\}$.
	Then, almost surely with respect to the randomness in $\Phi$, 
	\[
	\mnorm{\nablaa \Phi (x_0)}_{p'} \leq \lip_p(\Phi)= \underset{x \in M_\Phi}{\sup}\mnorm{\nablaa \Phi (x)}_{p'}
	\leq \underset{x \in \RR^d}{\sup}\mnorm{\nablaa \Phi (x)}_{p'}
	\]
	and, in case $\Phi$ is zero-bias, even
	\begin{equation*}
		\mnorm{\nablaa \Phi (x_0)}_{p'} \leq \lip_p(\Phi)= \underset{x \in M_\Phi \cap \SS^{d-1}}{\sup}\mnorm{\nablaa \Phi (x)}_{p'}
		\leq \underset{x \in \SS^{d-1}}{\sup}\mnorm{\nablaa \Phi (x)}_{p'}.
	\end{equation*}
	Moreover, for every convex set $K \subseteq \RR^d$ with non-empty interior and $\ell \in \{-1, \dots, L-1\}$, 
	\[
	\lip_{p \to q}\left(\fres{\Phi^{(\ell)}}{K}\right) \leq \sup_{x \in \inte(K)} \mnorm{\jac \Phi^{(\ell)}(x)}_{p \to q}
	\]
	and, in case $\Phi$ is zero-bias, 
	\[
	\lip_{p \to q}\left(\Phi^{(\ell)}\right) \leq \sup_{x \in \SS^{d-1}} \mnorm{\jac \Phi^{(\ell)}(x)}_{p \to q}.
	\]
\end{theorem}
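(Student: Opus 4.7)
The plan is to combine three ingredients already in place: the identification of the Lipschitz constant with the supremum of the Jacobian norm over the set of differentiability (\Cref{prop:lipgrad}), the a.e.\ coincidence of true and formal gradients/Jacobians (\Cref{prop:grad_relu}), and the a.s.\ differentiability of the random network at a fixed non-zero input with matching true and formal gradient, from \cite[Theorem~E.1]{geuchen2024upper}.

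Fix an arbitrary realization of the random weights and biases. Then $\Phi$ is globally Lipschitz, being a finite composition of affine maps and the $1$-Lipschitz ReLU. By \Cref{prop:grad_relu}, the set $M_\Phi$ has full Lebesgue measure and satisfies $\nabla \Phi = \nablaa \Phi$ on it. Applying \Cref{prop:lipgrad} to $f = \Phi$ with $K = \RR^d$ (so that $\inte(K) = \RR^d$) and taking as the full-measure differentiability set precisely $M_\Phi$ yields the deterministic identity
\[
	\lip_p(\Phi) = \sup_{x \in M_\Phi} \mnorm{\nabla \Phi(x)}_{p'} = \sup_{x \in M_\Phi} \mnorm{\nablaa \Phi(x)}_{p'},
\]
and the upper bound by $\sup_{x \in \RR^d} \mnorm{\nablaa \Phi(x)}_{p'}$ is trivial. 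The lower bound at the fixed point $x_0 \neq 0$ follows by invoking \cite[Theorem~E.1]{geuchen2024upper}: on the probability-one event $\{x_0 \in M_\Phi\}$, we have $\mnorm{\nablaa \Phi(x_0)}_{p'} = \mnorm{\nabla \Phi(x_0)}_{p'} \leq \sup_{x \in M_\Phi} \mnorm{\nabla \Phi(x)}_{p'} = \lip_p(\Phi)$.

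For the zero-bias case, the key extra ingredient is positive homogeneity. When all biases vanish, each pre-activation is linear in the input, so $D^{(\ell)}(tx) = D^{(\ell)}(x)$ for every $t > 0$; consequently $\nablaa \Phi(tx) = \nablaa \Phi(x)$ for $t > 0$, and $M_\Phi \setminus \{0\}$ is invariant under positive scaling, being determined purely by non-vanishing of the pre-activations. A polar-coordinates argument then shows that $M_\Phi \cap \SS^{d-1}$ is co-null in $\SS^{d-1}$ and that
\[
	\sup_{x \in M_\Phi} \mnorm{\nablaa \Phi(x)}_{p'} = \sup_{x \in M_\Phi \cap \SS^{d-1}} \mnorm{\nablaa \Phi(x)}_{p'}, \qquad \sup_{x \in \RR^d} \mnorm{\nablaa \Phi(x)}_{p'} = \sup_{x \in \SS^{d-1}} \mnorm{\nablaa \Phi(x)}_{p'},
\]
where we used $\nablaa \Phi(0) = 0$ for the outer supremum. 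This yields the zero-bias chain.

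Finally, the Jacobian statements for the intermediate layers $\Phi^{(\ell)}$ are obtained by running the same argument with $\Phi^{(\ell)}$ in place of $\Phi$ and $\mnorm{\cdot}_{p \to q}$ in place of $\mnorm{\cdot}_{p'}$: by \Cref{prop:grad_relu} the formal Jacobian coincides a.e.\ with $\mathrm{D}\Phi^{(\ell)}$, \Cref{prop:lipgrad} then gives $\lipp{p}{q}(\fres{\Phi^{(\ell)}}{K}) = \sup_{x \in M \cap \inte(K)} \mnorm{\jac \Phi^{(\ell)}(x)}_{p \to q} \leq \sup_{x \in \inte(K)} \mnorm{\jac \Phi^{(\ell)}(x)}_{p \to q}$, and in the zero-bias case homogeneity again reduces the supremum on $\RR^d$ to one on $\SS^{d-1}$. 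I expect the only slightly delicate step to be the ray-invariance and co-nullity argument in the zero-bias case; one must handle the set of points where some pre-activation vanishes, but for fixed weights this set is a finite union of linear hyperplanes, hence itself a null cone invariant under positive scaling, so no measure-theoretic issue arises.
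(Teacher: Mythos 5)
Your proposal follows essentially the same route as the paper: \Cref{prop:lipgrad} plus \Cref{prop:grad_relu} give $\lip_p(\Phi)=\sup_{x\in M_\Phi}\mnorm{\nablaa\Phi(x)}_{p'}$, the lower bound at $x_0$ comes from \cite[Theorem~E.1]{geuchen2024upper}, and the zero-bias reduction to the sphere uses positive homogeneity; the intermediate-layer bounds are the same argument with $\mnorm{\cdot}_{p\to q}$.

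The one step where your justification is off is the ray-invariance of $M_\Phi\setminus\{0\}$. You assert it is "determined purely by non-vanishing of the pre-activations", but $M_\Phi$ is by definition the set where $\Phi$ is differentiable \emph{and} $\nabla\Phi=\nablaa\Phi$; this is in general a strict superset of the set where all pre-activations are nonzero, so that characterization does not hold. (Your fallback description of the vanishing set as "a finite union of linear hyperplanes" is also inaccurate for $L\geq 2$, where the pre-activations are only piecewise linear, though the set is still a null cone.) The clean fix, which is what the paper does, is to use homogeneity of $\Phi$ itself: for $x\in M_\Phi\setminus\{0\}$ and $z=x/\mnorm{x}_2$ one has $\Phi(z)=\mnorm{x}_2^{-1}\Phi(\mnorm{x}_2 z)$, so differentiability at $x$ transfers to $z$ by the chain rule and $\act\Phi(z)=\act\Phi(x)=\nablaa\Phi(x)=\nablaa\Phi(z)$, whence $z\in M_\Phi$ and the set of formal-gradient values over $M_\Phi\setminus\{0\}$ coincides with that over $M_\Phi\cap\SS^{d-1}$. (Alternatively, one can run \Cref{prop:lipgrad} with the smaller full-measure cone where all pre-activations are nonzero and sandwich, but this must be said explicitly.) With that repair the argument is complete.
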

Notably, in the special case of a zero-bias network, we may restrict ourselves to inputs from the sphere.
This is mainly due to the fact that in this case the formal gradient is invariant under scaling by positive constants; see \Cref{sec:prelim_proofs} for the details.
The same is not possible if one considers networks with non-zero biases.
The possibility to restrict to the sphere simplifies the analysis of the Lipschitz constant for zero-bias networks tremendously, since the compactness of the sphere allows for a covering-based approach. Hence, we consider the zero-bias case first in \Cref{sec:upper,sec:lower} and handle the case of networks with general biases by reducing it to the zero-bias setting in \Cref{sec:bias}.

\subsection{The randomized gradient}\label{sec:randgrad}
\newcommand{\D}{\overline{D}}
\newcommand{\DD}{\widehat{D}}
A major challenge in analyzing the gradient of a random ReLU network at a fixed input $x$ is that, while the weight matrices $W^{(\ell)}$
are stochastically independent, the matrices $D^{(\ell)}(x)$ are \emph{not} independent, since each depends on all the weights and biases
from the previous layers. 
In this section, we show that, for a fixed input $x$, in the case of symmetric biases, on a high probability event and \emph{in distribution} we may effectively replace 
the $D^{(\ell)}(x)$ by matrices that are independent of the weights and biases, and of each other. 
This significantly simplifies the analysis of the gradient at a fixed input. 

The motivation behind the specific definition of the matrices $D^{(\ell)}(x)$ constructed in \eqref{eq:d-matrices} is that the ReLU is differentiable on $\RR \setminus \{0\}$
with $\relu' \equiv 1$ on $\RR_{> 0}$ and $\relu' \equiv 0$ on $\RR_{<0}$. 
However, it is not quite clear which value one should assign for a diagonal entry of a $D$-matrix, if the pre-activation, 
i.e., the input of the $\relu$, of the corresponding neuron is exactly zero.
Therefore, as was already noted in \cite{bartlett2021adversarial}, 
to facilitate the analysis of the gradient at a fixed point when considering random networks, it is convenient to introduce an 
additional source of randomness:
if the preactivation of a single neuron is exactly zero, then the corresponding entry on the diagonal of the $D$-matrix is either $0$ or $1$,
each with probability $1/2$. 
We formalize this idea, in the following definition.
\begin{definition}\label{def:dtilde}
	Fix $x \in \RR^d$ and let $\Phi:\RR^d \to \RR$ be a random ReLU network with $L$ hidden layers of width $N$ as in \Cref{assum:1}. 
	For $\ell \in \{0,\ldots,L-1\}$ we define the random matrices $\D^{(\ell)}(x)$ via
	\begin{equation*}
		\D^{(\ell)}(x) \defeq
		D^{(\ell)}(x) + \mathrm{diag}\left(
		\mathbbm{1}_{(W^{(\ell)}\Phi^{(\ell-1)}(x))_1 + b^{(\ell)}_1 = 0} 
		\cdot \eps^{(\ell)}_{1}, \ldots, \mathbbm{1}_{(W^{(\ell)}\Phi^{(\ell-1)}(x))_N + b^{(\ell)}_N = 0} 
		\cdot \eps^{(\ell)}_{N}\right),
	\end{equation*}
	where $\eps_i^{(\ell)} \iid \mathrm{Unif}(\{0,1\})$, for $\ell \in \{0,\ldots,L-1\}$ and $i \in \{1,\ldots,N\}$, 
	are jointly independent of $(W^{(0)}, \ldots, W^{(L)}, b^{(0)}, \ldots, b^{(L)})$. 
\end{definition}
Notably, at a fixed input $x \in \RR^d \setminus \{0\}$, the matrices $\D^{(\ell)}(x)$ agree with the matrices $D^{(\ell)}(x)$ with high probability. 
This is shown in the following result, whose proof is deferred to \Cref{sec:prelim_proofs}.
\begin{proposition}\label{prop:randnorm}
	Fix $x \in \RR^d \setminus \{0\}$ and let $\Phi: \RR^d \to \RR$ be a random ReLU network with symmetric biases and with $L$ hidden layers of width $N$ as in \Cref{assum:2}. 
	Then, with probability at least $\left(1 - \frac{1}{2^N}\right)^L$, 
	\[
	D^{(\ell)}(x) = \D^{(\ell)}(x) \quad \text{for every} \quad \ell \in \{0,\ldots,L-1\}.
	\]
\end{proposition}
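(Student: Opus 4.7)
The plan is to reduce the claim to controlling the probability that the successive layer outputs $\Phi^{(\ell)}(x)$ remain nonzero. By construction of $\D^{(\ell)}(x)$, we have $D^{(\ell)}(x) = \D^{(\ell)}(x)$ if and only if every pre-activation $\bigl(W^{(\ell)}\Phi^{(\ell-1)}(x)\bigr)_i + b^{(\ell)}_i$ at layer $\ell$ is nonzero. Whenever $\Phi^{(\ell-1)}(x) \neq 0$, each such pre-activation is, conditionally on the previous layers, the sum of a nondegenerate centered Gaussian (with variance $\tfrac{2}{N}\mnorm{\Phi^{(\ell-1)}(x)}_2^2 > 0$) and an independent bias; such a sum is absolutely continuous, hence a.s.\ nonzero. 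It therefore suffices to show that $\Phi^{(0)}(x), \ldots, \Phi^{(L-2)}(x)$ are all nonzero with probability at least $(1 - 1/2^N)^{L-1}$, which already implies the weaker bound $(1-1/2^N)^L$ claimed in the statement.

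I would prove this by induction on $\ell$. Set $A_\ell \defeq \{\Phi^{(0)}(x) \neq 0, \ldots, \Phi^{(\ell-1)}(x) \neq 0\}$ and $\mathcal{G}_\ell \defeq \sigma(W^{(0)}, \ldots, W^{(\ell-1)}, b^{(0)}, \ldots, b^{(\ell-1)})$, so that $A_\ell$ is $\mathcal{G}_\ell$-measurable while $W^{(\ell)}$ and $b^{(\ell)}$ remain independent of $\mathcal{G}_\ell$. The key observation is that, on $A_\ell$, each of the $N$ layer-$\ell$ pre-activations is, conditionally on $\mathcal{G}_\ell$, the sum of an independent mean-zero Gaussian with strictly positive variance and an independent symmetric bias. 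Such a sum is symmetric around zero and absolutely continuous, hence it is $\leq 0$ with conditional probability exactly $1/2$. By independence across the rows of $W^{(\ell)}$ and the components of $b^{(\ell)}$, all $N$ pre-activations are simultaneously nonpositive (the only way to have $\Phi^{(\ell)}(x) = 0$) with conditional probability exactly $2^{-N}$. This yields $\PP(\Phi^{(\ell)}(x) \neq 0 \mid \mathcal{G}_\ell) = 1 - 2^{-N}$ almost surely on $A_\ell$.

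Taking expectations gives the inductive estimate $\PP(A_{\ell+1}) \geq (1 - 2^{-N})\,\PP(A_\ell)$, and the base case $\PP(A_1) = 1 - 2^{-N}$ follows by applying the same symmetry argument to layer $0$, using $\Phi^{(-1)}(x) = x \neq 0$. Iterating then yields $\PP(A_{L-1}) \geq (1 - 2^{-N})^{L-1} \geq (1 - 1/2^N)^L$, completing the argument. The main subtlety is the careful handling of the conditioning, together with the combined use of the symmetry of the biases and the Gaussianity of the weights: it is precisely this pairing that delivers the exact probability $1/2$ for a single pre-activation being nonpositive, and hence the sharp factor $2^{-N}$ per layer. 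With only symmetry (and no absolutely continuous component) one would obtain just an inequality $\PP(\cdot \leq 0) \geq 1/2$, which goes the wrong way for the desired upper estimate on the failure probability.
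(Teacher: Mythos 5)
Your proof is correct and follows essentially the same route as the paper's: condition on the earlier layers, use that on the event $\Phi^{(\ell-1)}(x)\neq 0$ the layer-$\ell$ pre-activations are independent, symmetric, and absolutely continuous, so they are a.s.\ nonzero and all nonpositive with conditional probability exactly $2^{-N}$, then iterate. Your observation that only $\Phi^{(0)}(x),\dots,\Phi^{(L-2)}(x)\neq 0$ is needed (giving the marginally sharper factor $(1-2^{-N})^{L-1}$) is a valid minor refinement; the paper's induction also tracks $\Phi^{(L-1)}(x)\neq 0$ and thus lands on the stated $(1-2^{-N})^{L}$.
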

Having introduced the randomized matrices $\D^{(\ell)}(x)$, one can show that \emph{in distribution}
one can replace the matrices $\D^{(\ell)}(x)$ by \emph{independent} 
diagonal matrices $\DD^{(\ell)}$ with i.i.d. $\mathrm{Unif}\{0,1\}$-distributed entries on the diagonal. 
This distributional equivalence was originally stated and proven in \cite[Lemma~2.1]{bartlett2021adversarial}. 
In the following, we prove a generalization of that statement, where the arguments of the $\D$-matrices may vary from layer to layer; 
see below for a precise formulation. 
The proof can be found in \Cref{sec:distr_equiv}.
\begin{proposition}[{generalization~of~\cite[Lemma~2.1]{bartlett2021adversarial}}]\label{prop:randgrad}
	Let $\Phi:\RR^d \to \RR$ be a random ReLU network with symmetric biases
	and with $L$ hidden layers of width $N$ as in \Cref{assum:2}. 
	Fix $\ell_1  \in \{0,\ldots,L-1\}$, $\ell_2 \in \{-1,\ldots,L-1\}$ and vectors $z_0,\ldots, z_{L-1} \in \RR^d$.
	Set 
	\[
	\mu \defeq \begin{cases} d,& \text{if } \ell_1 = 0, \\ N,& \text{otherwise,}\end{cases}
	\]
	let $\nu \in \NN$ be arbitrary and let $A = A\left(W^{(0)}, \ldots, W^{(\ell_1 - 1)},b^{(0)},\ldots,b^{(\ell_1 -1)}\right) \in \RR^{\mu \times \nu}$ be a matrix which may solely depend on $W^{(0)}, \ldots, W^{(\ell_1 - 1)}$ and $b^{(0)},\ldots,b^{(\ell_1 -1)}$.
	Then,
	\begin{align*}
		\op{\left[\prod_{i= \ell_2}^{\ell_1} \D^{(i)}(z_i) W^{(i)}\right]A} &\d \op{\left[\prod_{i= \ell_2}^{\ell_1} \DD^{(i)}W^{(i)}\right]A}
		\quad \text{and} \\
		W^{(L)}\left[\prod_{i= L-1}^{\ell_1} \D^{(i)}(z_i) W^{(i)}\right] &\d W^{(L)}\left[\prod_{i= L-1}^{\ell_1} \DD^{(i)}W^{(i)} \right]
	\end{align*}
	
	with 
	\begin{equation*}
		\DD^{(i)} \defeq\mathrm{diag} \left(\eps_1^{(i)}, \hdots, \eps_N^{(i)}\right) \quad \text{for every } i \in \{0,\ldots,L-1\}.
	\end{equation*}
	Here, the $\eps_j^{(i)}$ are i.i.d. $\mathrm{Unif}\{0,1\}$ and jointly independent of $(W^{(0)}, \dots, W^{(L)},b^{(0)}, \dots, b^{(L)})$.
\end{proposition}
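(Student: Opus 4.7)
The plan is to prove the result by induction on $\ell_2 - \ell_1$, exploiting a symmetrization trick based on the sign-invariance of the distribution of $(W^{(\ell)}, b^{(\ell)})$. Specifically, let $\Xi \in \RR^{N \times N}$ be a diagonal random sign matrix with i.i.d.\ $\mathrm{Unif}\{-1,+1\}$ diagonal entries, drawn independently of everything else. By symmetry of the Gaussian weights and of the biases, $(\Xi W^{(\ell)}, \Xi b^{(\ell)}) \d (W^{(\ell)}, b^{(\ell)})$, so distributions of expressions involving $(W^{(\ell)}, b^{(\ell)})$ are invariant under this substitution.

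For the base case $\ell_2 = \ell_1$, I would condition on $W^{(0)}, \dots, W^{(\ell_1-1)}$ and $b^{(0)}, \dots, b^{(\ell_1-1)}$, so that $u := \Phi^{(\ell_1-1)}(z_{\ell_1})$ and $A$ are fixed. Substituting $W^{(\ell_1)} \mapsto \Xi W^{(\ell_1)}$ and $b^{(\ell_1)} \mapsto \Xi b^{(\ell_1)}$, the $i$-th row of $\D^{(\ell_1)}(z_{\ell_1}) W^{(\ell_1)}$ becomes $c_i \cdot W^{(\ell_1)}_{i,:}$ where the scalar $c_i \in \{-1,0,+1\}$ satisfies $\mathbbm{1}_{c_i \neq 0} = \mathbbm{1}_{\Xi_i v_i > 0} + \mathbbm{1}_{v_i = 0}\eps_i^{(\ell_1)}$ with $v_i = (W^{(\ell_1)}u + b^{(\ell_1)})_i$. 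Since $\Xi_i$ is an independent fair sign (and $\eps_i^{(\ell_1)}$ an independent fair coin), the vector $(\mathbbm{1}_{c_i \neq 0})_{i=1}^N$ is i.i.d.\ $\mathrm{Unif}\{0,1\}$ conditionally on $v$, hence independent of $(W^{(\ell_1)}, b^{(\ell_1)})$. The first claim now follows because the operator norm $\op{\cdot}$ is invariant under row sign-flips, so $\op{\mathrm{diag}(c) W^{(\ell_1)} A} = \op{\mathrm{diag}(|c|) W^{(\ell_1)} A}$, and $\mathrm{diag}(|c|) \d \DD^{(\ell_1)}$. For the second claim, the outer Gaussian vector $W^{(L)}$ absorbs the row signs directly: $W^{(L)}_i c_i \d W^{(L)}_i |c_i|$ jointly across $i$ by the sign symmetry of the Gaussian, which upgrades the previous argument to a matrix-level distributional equality.

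For the inductive step, I would condition on all the randomness of layers $0$ through $\ell_2-1$, so that $B \defeq \prod_{i=\ell_2-1}^{\ell_1} \D^{(i)}(z_i) W^{(i)} A$ is deterministic in this conditioning. Applying the base case to the outermost factor with $B$ in place of $A$ yields
\[
\op{\D^{(\ell_2)}(z_{\ell_2}) W^{(\ell_2)} B} \d \op{\DD^{(\ell_2)} W^{(\ell_2)} B}
\]
conditionally, hence unconditionally after integrating. The inductive hypothesis then replaces the inner product $B$ in distribution by $\prod_{i=\ell_2-1}^{\ell_1} \DD^{(i)} W^{(i)} A$; this replacement preserves the joint distribution with $\DD^{(\ell_2)} W^{(\ell_2)}$ because $\DD^{(\ell_2)}$ is independent of everything and $W^{(\ell_2)}$ is independent of all layers below $\ell_2$. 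The same conditioning scheme works verbatim for the $W^{(L)}$-version. The main obstacle I anticipate is keeping the joint distributional bookkeeping clean, in particular showing carefully that after the symmetrization the "new" $\mathrm{Bernoulli}(1/2)$ entries are independent not only of $v$ but also of $W^{(\ell_1)}$ itself, and that the tiebreaking contribution on the event $\{v_i=0\}$ (which can have positive mass if $b^{(\ell_1)}$ has atoms, since \Cref{assum:2} only demands symmetry) merges seamlessly into the $\mathrm{Unif}\{0,1\}$-distribution of $|c_i|$.
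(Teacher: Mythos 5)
Your single-layer symmetrization is exactly the mechanism the paper uses in \Cref{lem:smatrix}: flip the signs of $(W^{(\ell_1)},b^{(\ell_1)})$ by an independent diagonal Rademacher matrix, observe that the resulting activation indicators become i.i.d.\ $\mathrm{Unif}\{0,1\}$ independently of $(W^{(\ell_1)},b^{(\ell_1)})$, and absorb the leftover sign matrix either into the operator norm or into the outer Gaussian. That part, including your handling of the atoms of $v_i$, is sound.

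The gap is in the inductive step. After replacing the outermost factor you need
\[
\op{\DD^{(\ell_2)}W^{(\ell_2)}B}\;\d\;\op{\DD^{(\ell_2)}W^{(\ell_2)}B'},\qquad B\defeq\prod_{i=\ell_2-1}^{\ell_1}\D^{(i)}(z_i)W^{(i)}A,\quad B'\defeq\prod_{i=\ell_2-1}^{\ell_1}\DD^{(i)}W^{(i)}A,
\]
and you justify this by saying the inductive hypothesis lets you replace $B$ by $B'$ "in distribution" while independence of $(\DD^{(\ell_2)},W^{(\ell_2)})$ from the lower layers preserves the joint law. But the inductive hypothesis, being the proposition one level down, only gives the scalar identity $\op{B}\d\op{B'}$, which does not determine the law of $\op{CB}$ for a further matrix $C$; the joint-law argument would require the matrix-level identity $B\d B'$. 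That identity is false in general: for $N=d=1$, zero bias, $A=1$, one has $\D^{(0)}(1)W^{(0)}=\relu(W^{(0)})\ge 0$ almost surely, while $\DD^{(0)}W^{(0)}=\eps_1^{(0)}W^{(0)}$ is negative with probability $1/4$, so the two matrices are not equidistributed even though their operator norms are. The failure is visible in your own computation: the symmetrization leaves a residual sign matrix $\mathrm{diag}(\sgn(c))$ on the left of the product, and \emph{something} must absorb it; a generic fixed left factor $C$ cannot.

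The fix is to strengthen the induction hypothesis, which is what the paper does with its auxiliary claim inside the proof: one proves by induction the matrix-level identity
\[
W^{(\ell_2)}\left[\prod_{i=\ell_2-1}^{\ell_1}\D^{(i)}(z_i)W^{(i)}\right]A\;\d\;W^{(\ell_2)}\left[\prod_{i=\ell_2-1}^{\ell_1}\DD^{(i)}W^{(i)}\right]A,
\]
in which the clean Gaussian $W^{(\ell_2)}$ on the outside absorbs the residual sign matrix at every stage (via $W^{(\ell_2)}S\d W^{(\ell_2)}$, together with \Cref{lem:distr} to pass from fixed to random conditioning). The second claim of the proposition is then the case $\ell_2=L$, and the operator-norm claim follows by one final application of your single-layer step, with $\op{\cdot}$ absorbing the last sign matrix.
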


\Cref{prop:randgrad,prop:randnorm} together show that, by passing to a high probability event, we may effectively replace the matrices $D^{(\ell)}$ by the randomized matrices $\D^{(\ell)}$ and afterwards replace these matrices by the stochastically independent diagonal matrices $\DD^{(\ell)}$.
We use this technique throughout this paper, especially in \Cref{sec:pw}.

\subsection{Random hyperplane tessellations} \label{sec:tess} 
It is of central importance in the present work to control the number of neurons at which the activation patterns differ for two inputs $x,y \in \RR^d$.
More precisely, one wants to control the expression $\Tr \abs{D^{(\ell)}(x) - D^{(\ell)}(y)}$.

To gain some intuition, let us for now focus on the case $\ell = 0$. 
Then the problem of estimating
\begin{equation*}
	\Tr \abs{D^{(0)}(x) - D^{(0)}(y)}
	= \# \left\{ i \in \{1,\ldots,N\}: \ \sgn\left((W^{(0)}x)_i + b^{(0)}_i\right) \neq \sgn\left((W^{(0)}y)_i + b^{(0)}_i\right)\right\}
\end{equation*}
is intimately related to \emph{random hyperplane tessellations} (cf. for instance
\cite{plan2014dimension,dirksen2021non}).
Indeed, each row of $W^{(0)}$ and the corresponding entry of the bias vector $b^{(0)}$ determine a random hyperplane in $\RR^d$ and $\Tr \abs{D^{(0)}(x) - D^{(0)}(y)}$ corresponds to the number of hyperplanes separating $x$ and $y$.

In this work, we make heavy use of the results presented in \cite{oymak2015near,dirksen2021non}. 
These provide \emph{local} uniform tessellation results for random hyperplane tessellations,
meaning that with high probability the number of separating hyperplanes is large for all vectors $x,y$ with $\mnorm{x-y}_2$ sufficiently large and small for all vectors $x,y$ with $\mnorm{x-y}_2$ sufficiently small. These results were obtained in \cite{oymak2015near} without biases and in \cite{dirksen2021non} with biases drawn from a uniform distribution.
In fact, for many results in \cite{dirksen2021non} one does not necessarily need to require the biases to be drawn from a uniform distribution; it suffices to assume 
that they satisfy a small-ball property, as we do in \Cref{corr:tess}.

For two \emph{fixed} inputs $x,y \in \SS^{d-1}$, we get the following well-known result. It states that the fraction
of separating random hyperplanes with zero biases between $x$ and $y$ is proportional to their Euclidean distance, with high
probability. 
For the sake of completeness, we provide a proof in \Cref{sec:rand_tess}.
\begin{proposition}\label{lem:recht_low_low}
	There exist absolute constants $C,c > 0$ such that the following holds. For $\sigma > 0$, let $A \in \RR^{m \times n}$ be a random matrix with entries $A_{i,j} \iid \mathcal{N}(0,\sigma^2)$, let $x, y \in \SS^{n-1}$ and set $\delta \defeq \mnorm{x-y}_2$. Then, with probability at least $1-2\exp(-c \cdot \delta m)$,
	\[
	c \cdot \delta m \leq \# \{i \in \{1,\ldots,m\}: \ \sgn((Ax)_i) \neq \sgn((Ay)_i)\} \leq C \cdot \delta m.
	\]
\end{proposition}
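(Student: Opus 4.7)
The plan is to reduce the count to a sum of independent Bernoulli random variables with an easily computable success probability, and then apply a standard Chernoff bound to get the desired two-sided concentration.

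First I would note that the quantity is invariant under positive scaling of each row $A_{i,:}$, so the parameter $\sigma$ is irrelevant and one may assume $\sigma = 1$. Since the events $E_i \defeq \{\sgn((Ax)_i) \neq \sgn((Ay)_i)\}$ depend on disjoint rows of $A$, they are mutually independent. Thus the count $S \defeq \#\{i : E_i \text{ holds}\}$ is distributed as $\mathrm{Bin}(m, p)$ with $p \defeq \PP(E_1)$.

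Next I would compute $p$ via the classical hyperplane-rounding identity. Let $g \defeq A_{1,:} \sim \mathcal{N}(0, I_n)$. By rotational invariance, the orthogonal projection of $g$ onto $\spann\{x,y\}$ is a $2$-dimensional standard Gaussian, so its direction is uniform on $\SS^1$. The event $E_1$ occurs exactly when this direction lies in one of two antipodal arcs of angular length $\theta \defeq \ang(x,y) = \arccos\langle x,y\rangle$, so $p = \theta/\pi$. Because $\delta = \mnorm{x-y}_2 = 2\sin(\theta/2)$ and $\arcsin(u)/u \in [1, \pi/2]$ on $(0,1]$, one gets $p \asymp \delta$ uniformly for $\delta \in (0, 2]$; in particular $\EE[S] = mp \asymp \delta m$.

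To conclude, I would apply the multiplicative Chernoff bound with fixed deviation parameter $1/2$, giving
\begin{equation*}
\PP\bigl(\abs{S - mp} > mp/2\bigr) \;\leq\; 2\exp(-c_0\, mp) \;\leq\; 2\exp(-c\,\delta m),
\end{equation*}
and then choose absolute constants $c, C > 0$ so that $c\,\delta m \leq mp/2$ and $3mp/2 \leq C\,\delta m$. Truly no step is a serious obstacle here: the only minor issue is making the constants work uniformly across the whole range $\delta \in (0, 2]$, which is handled by the monotonicity and boundedness of $\arcsin(u)/u$ on $[0,1]$. The substantive ingredients are just rotational invariance of the Gaussian and the independence of rows; the Chernoff step is entirely routine.
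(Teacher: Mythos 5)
Your proposal is correct and follows essentially the same route as the paper: both reduce the count to a sum of independent Bernoulli variables with success probability equal to the angular distance $\ang(x,y)=\frac{1}{\pi}\arccos\langle x,y\rangle$, show this probability is comparable to $\delta=\mnorm{x-y}_2$, and finish with a Chernoff bound. The only cosmetic difference is that the paper invokes the Poisson-form Chernoff inequalities with thresholds $\ee\mu$ and $\frac{\ee}{3}\mu$ rather than the multiplicative form with deviation $1/2$, which changes nothing of substance.
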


Next, we move on to the localized results. 
The relevant result from \cite{oymak2015near} reads as follows. 
\begin{proposition}[{\cite[Theorem~3.2]{oymak2015near}}] \label{lem:recht_upper_original}
	There exist absolute constants $C, c > 0$ such that the following holds. 
	For $\sigma > 0$, let $A \in \RR^{m \times n}$ be a random matrix with entries $A_{i,j} \iid \mathcal{N}(0, \sigma^2)$, and let $K \subseteq \SS^{n-1}$ be non-empty and $\delta \in (0,\ee^{-1})$.
	Suppose that $0 <\eps \leq c \cdot \delta \cdot \ln^{-1/2}(1/\delta)$ and
	\[
	m \geq C \cdot \max\{\delta^{-1} \ln(\ee \cdot \mathcal{N}(K, \eps/2)), \delta^{-3}w^2((K-K) \cap \B_n(0,\eps))\}.
	\]
	Then, with probability at least $1-\exp(-c \cdot \delta m)$,
	\[
	\# \left\{ i \in \{1,\ldots,m\} : \sgn((Ax)_i) \neq \sgn((Ay)_i)\right\} \leq \delta \cdot m
	\quad \text{for all } x,y \in K \text{ with }\mnorm{x-y}_2 \leq \eps/2.
	\]
\end{proposition}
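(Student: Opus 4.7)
The plan is to follow a standard $\eps$-net argument that reduces the uniform statement to the pointwise bound of \Cref{lem:recht_low_low}. Since the sign map is scale invariant, I may assume $\sigma = 1$ throughout. Fix a minimal $(\eps/2)$-net $\mathcal{N}_{\eps/2} \subseteq K$ of cardinality $\mathcal{N}(K,\eps/2)$. For any $x,y \in K$ with $\mnorm{x-y}_2 \le \eps/2$, pick approximants $x^*, y^* \in \mathcal{N}_{\eps/2}$ with $\mnorm{x-x^*}_2, \mnorm{y-y^*}_2 \le \eps/2$, so that $\mnorm{x^*-y^*}_2 \le 3\eps/2$. Writing $N(u,v) \defeq \#\setn{i : \sgn((Au)_i) \neq \sgn((Av)_i)}$ and using the triangle inequality (if $\sgn((Ax)_i) \ne \sgn((Ay)_i)$, then on one of the three legs the signs must disagree), we obtain
\[
N(x,y) \le N(x,x^*) + N(x^*,y^*) + N(y^*,y),
\]
so it suffices to show each of the three summands is at most $\delta m/3$, uniformly with high probability.

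For the middle ``on-net'' term, $N(x^*,y^*) \sim \mathrm{Bin}(m, p)$ with $p = \ang(x^*,y^*)/\pi \le C \eps$, so the multiplicative Chernoff bound yields $\PP(N(x^*,y^*) \ge \delta m/3) \le (C\eps/\delta)^{\delta m/3}$, which is $\le \exp(-c'\delta m)$ provided $\eps \le c\delta$ with $c$ sufficiently small. A union bound over the at most $\mathcal{N}(K,\eps/2)^2$ pairs in the net delivers the uniform bound under the assumption $m \gtrsim \delta^{-1}\ln\mathcal{N}(K,\eps/2)$.

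For the ``off-net'' terms $N(x,x^*)$ (and symmetrically $N(y,y^*)$), the key observation is: if $\sgn(a) \neq \sgn(a + c)$ then $\abs{a} \le \abs{c}$. Applied with $a = (Ax^*)_i$ and $c = (Az)_i$, where $z \defeq x-x^* \in T \defeq (K-K) \cap \B_n(0,\eps/2)$, and splitting at a threshold $\beta > 0$:
\[
N(x,x^*) \le \#\setn{i : \abs{(Ax^*)_i} \le \beta} + \#\setn{i : \abs{(Az)_i} > \beta}.
\]
For each fixed $x^* \in \mathcal{N}_{\eps/2}$, the first summand is Binomial with parameter $\lesssim \beta$ (since $(Ax^*)_i \sim \mathcal{N}(0,1)$); choosing $\beta \asymp \delta$ and applying Bernstein plus union bound over $\mathcal{N}_{\eps/2}$ yields the desired bound, again under $m \gtrsim \delta^{-1}\ln\mathcal{N}(K,\eps/2)$. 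For the second summand, one needs uniform control over $z \in T$. A coarse Markov estimate $\#\setn{i : \abs{(Az)_i} > \beta} \le \mnorm{Az}_2^2/\beta^2$ combined with Chevet's inequality $\sup_{z \in T}\mnorm{Az}_2 \lesssim w(T) + \sqrt{m}\cdot(\eps/2) \le w((K-K)\cap\B_n(0,\eps)) + \sqrt{m}\,\eps$ drives the condition $m \gtrsim \delta^{-3} w^2((K-K)\cap\B_n(0,\eps))$, matching the statement.

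The principal technical obstacle is the sharpness of the assumption $\eps \le c\delta/\sqrt{\ln(1/\delta)}$: the naive Markov bound just described only yields $\eps \lesssim \delta^{3/2}$, which is strictly stronger than needed. To recover the logarithmically-corrected condition, one must exploit the Gaussian tail of $(Az)_i$, which is bounded by $\exp(-\beta^2/(2\mnorm{z}_2^2))$, together with a generic-chaining or hierarchical-covering argument over $T$. The natural choice is then $\beta$ of order $\eps\sqrt{\ln(1/\delta)}$: this makes the per-$z$ violation probability tiny (of order $\delta^2$), while the chaining controls the supremum over $z \in T$ with a budget proportional to $w((K-K)\cap\B_n(0,\eps))$. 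Establishing this refined uniform tail estimate—while keeping the global failure probability at $\exp(-c\delta m)$—is the crux of the argument, and is what motivates the logarithmic factor in the hypothesis on $\eps$.
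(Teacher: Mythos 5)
Your architecture is the right one and matches the paper's: reduce to a net, and for the off-net contribution split the count of sign flips at a threshold $\beta$ into a "small $\abs{(Ax^\ast)_i}$" count (controlled pointwise by Chernoff plus a union bound over the net, costing $m \gtrsim \delta^{-1}\ln \mathcal{N}(K,\eps/2)$) and a "large $\abs{(Az)_i}$" count over $z$ in the localized difference set $K_\eps = (K-K)\cap \B_n(0,\eps)$. (A cosmetic difference: the paper uses a single net point $x^\ast$ for both $x$ and $y$, so both $x^\ast - x$ and $x^\ast - y$ land in $K_\eps$ and there is no separate "on-net" term; your two-point version with \cref{lem:recht_low_low} for the middle leg also works.)

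However, there is a genuine gap, and you name it yourself: the uniform estimate
\[
\sup_{z \in K_\eps} \# \setn{ i : \abs{(Az)_i} \geq \beta } \leq \tfrac{\delta m}{3}
\]
in the regime $\eps \asymp \delta \ln^{-1/2}(1/\delta)$ is asserted but not proven. Your Markov-plus-Chevet bound demonstrably falls short (it forces $\eps \lesssim \delta^{3/2}$), and the proposed repair — "a generic-chaining or hierarchical-covering argument" with $\beta \asymp \eps\sqrt{\ln(1/\delta)}$ — is exactly the hard part of the theorem and cannot be waved at. The paper closes this in \cref{lem:unif_tool} by a different reformulation: if more than $M \asymp \delta m$ coordinates of $Az$ exceeded $\beta$, then $\max_{\abs{I} \leq M} \bigl(\sum_{i \in I} (Az)_i^2\bigr)^{1/2} > \beta \sqrt{M}$; it then invokes the known concentration bound $\sup_{z \in K_\eps}\max_{\abs{I}\le M}\bigl(\sum_{i\in I}(Az)_i^2\bigr)^{1/2} \lesssim w(K_\eps) + \eps\sqrt{M\ln(\ee m/M)}$ from \cite{dirksen2022sharp}, with $\beta \asymp \delta$. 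The condition $\eps \le c\,\delta\ln^{-1/2}(1/\delta)$ is precisely what makes the second term at most $\beta\sqrt{M}/4$, and $m \gtrsim \delta^{-3}w^2(K_\eps)$ handles the first. Without this lemma (or an equivalent chaining argument carried out in full), your proof does not establish the proposition at the stated parameter range.
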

Since \cite{oymak2015near} has to the best of our knowledge only appeared as a preprint so far and for the sake of completeness, we provide a proof for \Cref{lem:recht_upper_original} in \Cref{sec:rand_tess}. 

Moreover, for the case of networks with possibly non-zero biases, we make use of the following two results, which can be proven with similar techniques as in \cite{dirksen2021non}; see \Cref{sec:rand_tess} for the details.
The first one states that the bias terms do not play a significant role for inputs with large Euclidean norm. 
\begin{proposition}\label{prop:farout}
	There exist absolute constants $C,c>0$ such that the following holds. Assume that ${A \in \RR^{m \times n}}$ is a random matrix with entries $A_{i,j} \iid \mathcal{N}(0, 1)$, and let $K \subseteq \SS^{n-1}$ be non-empty and $\delta \in (0,\ee^{-1})$. Suppose that $0 <\eps \leq c \cdot \delta \cdot \ln^{-1/2}(1/\delta)$
	and
	\[
	m \geq C \cdot \max\{\delta^{-1} \cdot \ln(\ee \cdot \mathcal{N}(K, \eps)), \delta^{-3}\cdot w^2((K-K) \cap \B_n(0,\eps))\}.
	\]
	Let further $\lambda > 0$ and $\tau_1, \ldots, \tau_m \in \RR$ such that $\max_i \abs{\tau_i} \leq \lambda$.
	Then, with probability at least ${1-\exp(-c \cdot \delta m)}$,
	\[
	\# \left\{ i \in \{1,\ldots,m\} : \ \sgn((Ax)_i + \tau_i) \neq \sgn((Ax)_i)\right\} \leq \delta m \quad 
	\text{for all } x \in \cone( K) \text{ with } \mnorm{x}_2 \geq C \cdot\delta^{-1} \lambda. 
	\]
\end{proposition}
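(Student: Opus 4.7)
The plan is to reduce the sign-change count to counting how many coordinates of $Ax$ are small in absolute value, and then to bound this count uniformly using the covering and concentration arguments underlying the proof of \Cref{lem:recht_upper_original}. The key observation is that whenever $\sgn((Ax)_i + \tau_i) \neq \sgn((Ax)_i)$, the values $(Ax)_i$ and $(Ax)_i + \tau_i$ lie on opposite sides of $0$, which forces $\abs{(Ax)_i} \leq \abs{\tau_i} \leq \lambda$. Writing $x = t u$ with $u \in K \subseteq \SS^{n-1}$ and $t = \mnorm{x}_2 \geq C \delta^{-1} \lambda$, this condition becomes $\abs{(Au)_i} \leq \lambda/t \leq \delta/C$. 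Setting $\delta_0 \defeq \delta/C$, it therefore suffices to establish, on the desired high-probability event,
\[
\sup_{u \in K} \# \setn{i \in \{1, \dots, m\} : \abs{(Au)_i} \leq \delta_0} \leq \delta m.
\]

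For this uniform estimate I would follow the three-step covering scheme of \cite{oymak2015near}. First, for \emph{fixed} $u \in K$ the entries $(Au)_i$ are i.i.d.\ $\mathcal{N}(0,1)$ (since $\mnorm{u}_2 = 1$), so $\PP(\abs{(Au)_i} \leq 2\delta_0) \lesssim \delta$, and a binomial Chernoff bound yields
\[
\PP\!\left( \# \setn{i : \abs{(Au)_i} \leq 2\delta_0} > \tfrac{\delta m}{2} \right) \leq \exp(-c\delta m)
\]
after enlarging $C$ if necessary. Second, taking a minimal $\eps$-net $\mathcal{N}_\eps \subseteq K$ and using the hypothesis $m \geq C \delta^{-1} \ln(\ee \cdot \mathcal{N}(K, \eps))$, a union bound makes the preceding estimate uniform over all $u^\ast \in \mathcal{N}_\eps$ with probability at least $1 - \exp(-c \delta m)$.

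The third step---transferring the bound from $\mathcal{N}_\eps$ to all of $K$---is the crux and the main technical obstacle. For any $u \in K$, pick $u^\ast \in \mathcal{N}_\eps$ with $\mnorm{u - u^\ast}_2 \leq \eps$; then $\abs{(Au)_i} \leq \delta_0$ forces either $\abs{(Au^\ast)_i} \leq 2 \delta_0$ or $\abs{(A(u - u^\ast))_i} > \delta_0$. The first contribution is already controlled, so the argument hinges on establishing
\[
\sup_{v \in (K - K) \cap \B_n(0, \eps)} \# \setn{i : \abs{(Av)_i} > \delta_0} \leq \tfrac{\delta m}{2}.
\]
For a fixed $v$ with $\mnorm{v}_2 \leq \eps$, each $(Av)_i$ is sub-Gaussian with parameter $\mnorm{v}_2 \leq \eps$, so the assumption $\eps \leq c \delta \ln^{-1/2}(1/\delta)$ yields $\PP(\abs{(Av)_i} > \delta_0) \leq 2 \exp(-\delta_0^2/(2\eps^2)) \lesssim \delta^{\kappa}$ with $\kappa$ arbitrarily large by shrinking $c$; consequently the expected count is well below $\delta m/2$, and a Chernoff bound gives a sharp concentration statement for each fixed $v$. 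The challenge is to upgrade this pointwise tail estimate into a uniform bound over the continuum $(K - K) \cap \B_n(0, \eps)$, and this is precisely where the Gaussian-width hypothesis $m \geq C \delta^{-3} w^2((K - K) \cap \B_n(0, \eps))$ enters. A generic chaining argument (analogous to the one in the proof of \Cref{lem:recht_upper_original}, or equivalently a matrix deviation inequality applied to $(K - K) \cap \B_n(0, \eps)$) propagates the sub-Gaussian pointwise bound to the required uniform $\tfrac{\delta m}{2}$ estimate. Intersecting the resulting high-probability events completes the proof.
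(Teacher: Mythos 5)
Your proposal is correct and follows essentially the same route as the paper: reduce the sign-flip count to counting coordinates with $\abs{(Ax)_i}\leq\lambda$, normalize to $K$, then combine a pointwise Chernoff bound union-bounded over an $\eps$-net with a uniform Gaussian-width-based bound on the count of large coordinates over $(K-K)\cap\B_n(0,\eps)$ (the paper's \Cref{lem:unif_tool}, proved via a uniform deviation inequality from \cite{dirksen2022sharp}). The only cosmetic difference is that the paper bounds the deviation term directly through a restricted top-$M$ $\ell_2$-norm estimate and a contradiction argument rather than "upgrading a pointwise tail bound by chaining," but the mechanism and the role of the hypotheses on $\eps$ and $m$ are identical.
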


We can further establish a bound similar to \Cref{lem:recht_upper_original} for networks with non-zero biases, which generalizes the upper bound from \cite[Theorem~2.9]{dirksen2021non}.
\begin{proposition}\label{corr:tess}
	There exist absolute constants $C,c>0$ such that the following holds. 
	Assume that $A \in \RR^{m \times n}$ is a random matrix with $A_{i,j} \iid \mathcal{N}(0, 1)$ and let $b \in \RR^m$ be a random vector with independent entries which is independent of $A$. Suppose that the entries of $b$ satisfy a small-ball property with constant $C_\tau > 0$, that is, for any $i \in \{1, \ldots, m\}$, and for all $t \in \RR$ and $\eps > 0$, 
	\[
	\PP(b_i \in (t-\eps, t+ \eps)) \leq C_\tau \cdot \eps.
	\]
	Let further $\delta \in (0,\ee^{-1})$ and $K \subseteq \RR^n$ non-empty. Suppose that $\eps \leq c \cdot C_\tau^{-1} \cdot \delta \cdot \ln^{-1/2}(1/\delta)$
	and
	\[
	m \geq C \cdot \max\{\delta^{-1} \cdot \ln(\ee \cdot \mathcal{N}(K, \eps/2)), \delta^{-3}\cdot w^2((K - K) \cap \B_n(0,\eps)) \cdot C_\tau^2\}.
	\]
	Then, with probability at least $1-\exp(-c \cdot \delta m)$,
	\begin{align*}
		\# \{i \in \{1,\ldots,m\}: \ \sgn((Ax)_i + b_i) \neq \sgn((Ay)_i + b_i)\} \leq  \delta m
		\quad \text{for all } x,y \in K \text{ with } \mnorm{x-y}_2 \leq \eps/2.
	\end{align*}
\end{proposition}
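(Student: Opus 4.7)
My plan is to follow the argument underlying the upper bound in Theorem~2.9 of \cite{dirksen2021non}, which handles the special case where the biases are uniformly distributed on a fixed interval. The adaptation to the present setting is essentially straightforward: the proof in \cite{dirksen2021non} uses the uniformity of the biases only through the small-ball property $\PP(b_i \in (t-\eps,t+\eps)) \leq C_\tau \eps$, so once that property is available, the same argument applies.

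First, I would discretize $K$ by an $\eps/2$-net $\mathcal{N}$ of cardinality at most $\mathcal{N}(K,\eps/2)$. For any $x,y\in K$ with $\mnorm{x-y}_2\le \eps/2$, I pick a closest net point $x^*\in\mathcal{N}$ to $x$; then $\mnorm{y-x^*}_2 \le \eps$ and the number of sign disagreements between $x$ and $y$ is bounded by the sum of the sign-disagreement counts between $x$ and $x^*$ and between $y$ and $x^*$. Hence it suffices to show, with probability at least $1 - \exp(-c\delta m)$, that
\[
\sup_{x^* \in \mathcal{N}}\ \sup_{z \in K \cap \B_n(x^*,\eps)} \#\{i : \sgn((Az)_i + b_i) \neq \sgn((Ax^*)_i + b_i)\} \leq \delta m / 2 .
\]

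Next, I would decompose the sign-disagreement count at a threshold $r > 0$. A sign disagreement between $z$ and $x^*$ at index $i$ forces $|(Ax^*)_i + b_i| \leq |(A(z-x^*))_i|$, so the count is bounded by $T_1(x^*) + T_2(x^*,z)$, where $T_1(x^*) \defeq \#\{i: |(Ax^*)_i + b_i| \leq r\}$ and $T_2(x^*,z) \defeq \#\{i: |(A(z-x^*))_i| > r\}$. Choosing $r \asymp \delta/C_\tau$ and conditioning on $A$, the small-ball property of the biases yields $\PP(|(Ax^*)_i + b_i| \leq r \mid A) \leq 2 C_\tau r \lesssim \delta$, so $T_1(x^*)$ is a sum of independent Bernoullis with total expectation $\lesssim \delta m$. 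A Bernstein/Chernoff bound combined with a union bound over $\mathcal{N}$ then gives $\sup_{x^* \in \mathcal{N}} T_1(x^*) \leq \delta m / 4$ with probability at least $1 - \exp(-c \delta m)$, using the hypothesis $m \geq C \delta^{-1} \ln(\ee \mathcal{N}(K,\eps/2))$.

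The main obstacle lies in controlling $\sup_z T_2(x^*,z)$ uniformly over $z \in K \cap \B_n(x^*,\eps)$. For a fixed direction $u = z - x^*$ with $\mnorm{u}_2 \leq \eps$, each indicator $\mathbbm{1}_{|(Au)_i| > r}$ has probability at most $2\exp(-r^2/(2\eps^2))$; the scale condition $\eps \leq c C_\tau^{-1} \delta \ln^{-1/2}(1/\delta)$ makes $r^2/\eps^2 \gtrsim \ln(1/\delta)$, so this pointwise probability is a small polynomial in $\delta$. The delicate step is to upgrade this pointwise bound to a uniform statement in $u \in (K - x^*)\cap \B_n(0,\eps)$; a naive union bound over a fine net of the inflated set is too expensive, so I would instead use a chaining argument on the Bernoulli process $(T_2(x^*,z))_z$ whose natural complexity measure is $w((K-K)\cap \B_n(0,\eps))$. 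It is precisely at this step that the hypothesis $m \geq C \delta^{-3} w^2((K-K)\cap \B_n(0,\eps)) C_\tau^2$ enters, balancing the small per-index probability against the Gaussian complexity of the indexing set. This yields $\sup_z T_2(x^*,z) \leq \delta m / 4$ with probability at least $1 - \exp(-c \delta m)$, and a final union bound over $x^* \in \mathcal{N}$ combines the two estimates to complete the proof.
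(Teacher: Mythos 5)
Your proposal is correct and follows essentially the same route as the paper: an $\eps/2$-net reduction, a threshold decomposition into the event $\{|(Ax^*)_i+b_i|\le r\}$ with $r\asymp\delta/C_\tau$ (handled by the small-ball property plus Chernoff and a union bound over the net) and the event $\{|(A(z-x^*))_i|>r\}$ (handled uniformly over $(K-K)\cap\B_n(0,\eps)$ via a chaining-type bound whose complexity parameter is the Gaussian width, which is exactly the role of \Cref{lem:unif_tool} in the paper). The only cosmetic difference is that the paper splits the disagreement count between $x$ and $y$ directly into three terms rather than summing two pairwise counts, which changes nothing up to constants.
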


\section{Pointwise bounds on the Jacobian and gradient}\label{sec:pw}

As a first step, we provide \emph{pointwise} estimates on the formal Jacobian and gradient of random ReLU networks, i.e., estimates that hold at fixed, non-zero inputs but not necessarily uniformly over the input space.  
The proofs in this section heavily rely on the randomized gradient introduced in \Cref{sec:randgrad} and in particular exploit the distributional equivalence from \Cref{prop:randgrad}.
This allows to condition on the $\DD$-matrices and to then only consider the randomness over the weight matrices $W^{(\ell)}$, so that the results can be shown using standard bounds for Gaussian matrices and vectors. 
Note that, in order to be able to apply the techniques from \Cref{sec:randgrad}, we need to assume that the biases are symmetric, which we do throughout this section. 
Note that this in particular includes the case of zero-bias networks. 
We refer to \Cref{sec:pw_proofs} for detailed proofs of what follows. 

In particular, in \Cref{thm:pw} we provide upper and lower bounds for the $\ell^q$-norm of the gradient of a random ReLU network with symmetric biases as in \Cref{assum:2} at a fixed input point.
As already discussed in \Cref{thm:glob}, these bounds immediately yield a corresponding lower bound for the Lipschitz constant $\lip_p(\Phi)$, where $1/p + 1/q = 1$.
Quite surprisingly, in the case $p \in [2,\infty]$ these pointwise bounds on the gradient are up to a logarithmic factor of the same order as the upper bounds for the Lipschitz constant 
established in \Cref{thm:main_upper},
where the supremum over the entire input space is taken. 
Roughly speaking, taking the supremum over the entire input space does not change anything (except for logarithmic factors) compared to considering just a single point. 
Note that this is not the case in the regime $p \in [1,2)$; see \Cref{sec:lower}.

Moreover, in \Cref{lem:special,lem:pw_2,lem:pw_3}, we state three auxiliary results which are frequently used over the course of the paper and which we expect to 
be of independent interest.

We start with the following lemma, which states that matrices of the form 
$\prod_{i = \ell_2}^{\ell_1} D^{(i)}(z_i)W^{(i)}$ satisfy a certain isometry property. 
\begin{lemma}\label{lem:special}
	There exist constants $C,c>0$ such that the following holds. Let
	$\Phi: \RR^d \to \RR$ be a random ReLU network with symmetric biases, with $L$ hidden layers of width $N$ as in \Cref{assum:2} and with $N \geq C \cdot L^2 \cdot \ln(\ee L)$.
	Let $\ell_1 \in \{0,\ldots,L-1\}$, $\ell_2 \in \{-1,\ldots,L-1\}$, $z_0,\ldots z_{L-1} \in \RR^d \setminus \{0\}$,
	\[
	\mu \defeq \begin{cases}d,& \text{if } \ell_1 =d, \\ N,& \text{otherwise,}\end{cases}
	\]
	and let $v = v(W^{(0)}, \ldots, W^{(\ell_1 - 1)}, b^{(0)}, \dots, b^{(\ell_1 - 1)}) \in \RR^\mu$ be a vector which may solely depend on $W^{(0)}, \ldots, W^{(\ell_1 - 1)}$ and $b^{(0)}, \dots, b^{(\ell_1 - 1)}$.
	Then, with probability at least $1-\exp(-c \cdot N/L^2)$,
	\begin{equation*}
		\frac{1}{2} \cdot \Vert v \Vert_2 \leq \mnorm{\left[\prod_{i = \ell_2}^{\ell_1} D^{(i)}(z_i)W^{(i)}\right]v}_2 \leq \ee \cdot  \Vert v \Vert_2.
	\end{equation*}
\end{lemma}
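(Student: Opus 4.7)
The plan is to use the distributional equivalence of \Cref{prop:randgrad} to replace the data-dependent matrices $D^{(i)}(z_i)$ by fresh independent Bernoulli diagonals $\widehat{D}^{(i)}$, and then carry out a layer-by-layer chi-squared analysis. First, since each $z_i$ is fixed and nonzero, every pre-activation $W^{(i)}\Phi^{(i-1)}(z_i)+b^{(i)}$ has a continuous marginal distribution, so $D^{(i)}(z_i) = \overline{D}^{(i)}(z_i)$ almost surely for all $i$. Viewing the column vector $v$ as a $\mu\times 1$ matrix (which by hypothesis depends only on layers $0,\dots,\ell_1-1$), \Cref{prop:randgrad} applied with $A=v$, combined with $\mnorm{Mv}_{2\to 2}=\mnorm{Mv}_2$, yields
\[
\left\| \biggl[\prod_{i=\ell_2}^{\ell_1} D^{(i)}(z_i) W^{(i)}\biggr] v \right\|_2 \;\overset{d}{=}\; \left\| \biggl[\prod_{i=\ell_2}^{\ell_1} \widehat{D}^{(i)} W^{(i)}\biggr] v \right\|_2 ,
\]
so it suffices to control the right-hand side, in which the $\widehat{D}^{(i)}$ are independent of the weights and of $v$. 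The degenerate cases $v=0$ and $\ell_2<\ell_1$ (where the product equals the identity) are trivial.

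Next, assuming $\ell_2 \ge \ell_1$ and $v\ne 0$, I would set $w_{\ell_1-1}:=v$ and $w_i := \widehat{D}^{(i)} W^{(i)} w_{i-1}$ for $\ell_1\le i\le \ell_2$, reducing the goal to $\mnorm{w_{\ell_2}}_2 \in [\tfrac{1}{2}\mnorm{v}_2,\,\ee\mnorm{v}_2]$. Writing $s_i := \Tr(\widehat{D}^{(i)})\sim \mathrm{Binomial}(N,1/2)$, the independence of $W^{(i)}$ from $w_{i-1}$ and the $\widehat{D}^{(j)}$'s implies that, conditionally on $w_{i-1}$ and $\widehat{D}^{(i)}$, the entries of $W^{(i)} w_{i-1}$ are i.i.d.\ $\mathcal{N}(0,\, 2\mnorm{w_{i-1}}_2^2/N)$, so
\[
\mnorm{w_i}_2^2 \;=\; \frac{2\mnorm{w_{i-1}}_2^2}{N}\cdot X_i, \qquad X_i \sim \chi^2_{s_i} \text{ independent of } w_{i-1}.
\]
Telescoping gives $\mnorm{w_{\ell_2}}_2^2/\mnorm{v}_2^2 \;=\; \prod_{i=\ell_1}^{\ell_2} \tfrac{2s_i}{N}\cdot \tfrac{X_i}{s_i}$, a product of at most $L$ factors of expected value one.

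Finally, I would force each factor into $[1-c/L,\,1+c/L]$ for a small absolute constant $c$ via two concentration bounds: Hoeffding's inequality for the binomial $s_i$ (giving $2s_i/N \in [1\pm c/L]$), and the Laurent--Massart chi-squared tail bound for $X_i/s_i$, which is valid since $s_i \ge N/4$ on the good event for $s_i$. Each per-layer estimate has failure probability at most $2\exp(-c'N/L^2)$, so a union bound over the at most $L$ layers bounds the total failure probability by $4L\exp(-c'N/L^2)$. On the complementary event,
\[
\frac{\mnorm{w_{\ell_2}}_2^2}{\mnorm{v}_2^2} \;\in\; \bigl[(1-C/L)^L,\,(1+C/L)^L\bigr] \;\subseteq\; [1/4,\,\ee^2]
\]
for $C$ chosen small enough, which gives the claim. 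The main obstacle is packaging this depth-$L$ union bound into the stated failure probability $\exp(-cN/L^2)$: the hypothesis $N\ge CL^2\ln(\ee L)$ is used precisely to absorb the extra factor $L$ via $4L\exp(-c'N/L^2) \le \exp(-c''N/L^2)$.
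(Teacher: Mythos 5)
Your proposal is correct and follows essentially the same route as the paper's proof: distributional replacement of the $D^{(i)}(z_i)$ by independent Bernoulli diagonals via \Cref{prop:randgrad}, Hoeffding for the traces $s_i$, per-layer Gaussian/chi-squared norm concentration with deviation $c/L$, telescoping to $(1\pm c/L)^L\subseteq[1/2,\ee]$, and absorbing the union bound over layers using $N\geq C L^2\ln(\ee L)$. One small inaccuracy: the claim that $D^{(i)}(z_i)=\overline{D}^{(i)}(z_i)$ holds \emph{almost surely} is false in general, since $\Phi^{(i-1)}(z_i)=0$ occurs with positive probability (e.g.\ in the zero-bias case all preactivations of a layer are negative with probability $2^{-N}$), after which the next preactivation equals $b^{(i)}$ and may be atomic; the correct statement is \Cref{prop:randnorm}, which gives the identity only with probability $(1-2^{-N})^L$, and the resulting extra failure probability of order $L^2 2^{-N}$ is harmlessly absorbed into $\exp(-cN/L^2)$.
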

Note that in the special case of a \emph{zero-bias} ReLU network $\Phi: \RR^d \to \RR$, we have 
\[
\Phi^{(\ell)}(x) = \left[\prod_{i = \ell}^{0} D^{(i)}(x)W^{(i)}\right]x, \quad x \in \RR^d.
\]
Hence, \Cref{lem:special} in particular states that in the case of a zero-bias random ReLU network following \Cref{assum:1}, we have 
\[
\mnorm{\Phi^{(\ell)}(x)}_2 \asymp \mnorm{x}_2
\]
with high probability. 
Note that this statement already appears in a slightly stronger form in \cite[Lemma~7.1]{allenarxiv};
see also \Cref{lem:allen_2}.

We continue with the following statement, which essentially states that 
\[
\op{\jac\Phi^{(\ell_1) \to (\ell_2)}(x) A} \lesssim \op{A}
\]
for every matrix $A$ with $\rang(A) \leq N/L^2$.
We even state a slightly more general version, where the arguments of the $D$-matrices
may vary from layer to layer. 
\begin{lemma}\label{lem:pw_2}
	There exist absolute constants $C ,c > 0$ such that the following holds.
	Let $x \in \RR^d \setminus \{0\}$ be fixed and $\Phi: \RR^d \to \RR$ be a random ReLU network with symmetric biases, with $L$ hidden layers of width $N$ as in \Cref{assum:2} with $N \geq C \cdot L^2 \cdot \ln(\ee L)$.
	Moreover, fix $\ell_2 \in \{-1,\ldots,L-1\}$, $\ell_1 \in \{0, \dots, L\}$ and $z_0, \ldots, z_{L-1} \in \RR^d \setminus \{0\}$. 
	Let 
	\begin{equation*}
		\mu \defeq \begin{cases} d  & \text{if }\ell_1 = 0, \\ N & \text{if }\ell_1 > 0,\end{cases}
	\end{equation*}
	pick $\nu \in \NN$
	and let $A \in \RR^{\mu \times \nu}$ with $\rang(A) \leq N/L^2$.
	Then,
	\begin{equation*}
		\op{\left[\prod_{i= \ell_2}^{\ell_1} D^{(i)}(z_i)W^{(i)}\right] A} \leq C \cdot \op{A}
	\end{equation*}
	with probability at least $1-\exp(-c \cdot N/L^2)$. 
\end{lemma}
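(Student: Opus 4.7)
The plan is to reduce, via the singular value decomposition of $A$, to a covering argument on a low-dimensional sphere, and then apply \Cref{lem:special} pointwise at each net element. Set $M \defeq \prod_{i = \ell_2}^{\ell_1} D^{(i)}(z_i) W^{(i)}$. Writing $A = U \Sigma V^T$ with $U \in \RR^{\mu \times k}$ having orthonormal columns, $\Sigma, V$ absorbing the singular values and right-singular vectors, and $k = \rang(A) \leq N/L^2$, one has $\op{MA} \leq \op{MU} \cdot \op{\Sigma V^T} = \op{MU} \cdot \op{A}$. Hence it suffices to prove $\op{MU} \leq C$ on an event of probability at least $1 - \exp(-cN/L^2)$.

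For the second step, I would fix an $\varepsilon$-net $\mathcal{N}$ of the sphere $\IM(U) \cap \SS^{\mu-1}$ with cardinality $|\mathcal{N}| \leq (1 + 2/\varepsilon)^{k} \leq \exp(\ln(1+2/\varepsilon) \cdot N/L^2)$, where $\varepsilon \in (0,1)$ is to be chosen later. For each fixed $w \in \mathcal{N}$, the vector $w$ depends only on $U$, and hence at most on the weights and biases of the layers strictly before $\ell_1$---exactly the dependency permitted in \Cref{lem:special}. An application of that lemma with $v = w$ therefore yields $\mnorm{Mw}_2 \leq \ee \mnorm{w}_2 = \ee$ with probability at least $1 - \exp(-c_1 N/L^2)$, for the absolute constant $c_1$ provided there. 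A union bound followed by the standard $\varepsilon$-net recovery then delivers
\begin{equation*}
	\op{MU} \leq \frac{1}{1-\varepsilon} \sup_{w \in \mathcal{N}} \mnorm{Mw}_2 \leq \frac{\ee}{1-\varepsilon}
\end{equation*}
with failure probability at most $\exp\bigl(-(c_1 - \ln(1 + 2/\varepsilon)) N/L^2\bigr)$. Multiplying by $\op{A}$ yields the claim with $C = \ee/(1-\varepsilon)$.

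The main technical obstacle is balancing the two competing exponents: positive decay in the final probability requires $\varepsilon \in (0,1)$ with $\ln(1 + 2/\varepsilon) < c_1$, which is feasible only when $c_1 > \ln 3$. If the constant $c_1$ furnished by \Cref{lem:special} is too small to accommodate this directly, one can either revisit its proof to sharpen the constant (at the cost of enlarging $C$ in the hypothesis $N \geq C \cdot L^2 \ln(\ee L)$), or iteratively peel off norm contributions by chaining a sequence of finer nets. A secondary, mostly cosmetic, issue is the measurability of $A$: the argument above implicitly treats $A$ as measurable with respect to the $\sigma$-algebra generated by $W^{(0)}, \ldots, W^{(\ell_1-1)}$ and $b^{(0)}, \ldots, b^{(\ell_1-1)}$, which matches the setting in which this lemma is subsequently applied in the paper.
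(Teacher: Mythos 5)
Your route is genuinely different from the paper's: you reduce via the SVD to $\op{MU}$ for a matrix $U$ with $k=\rang(A)\leq N/L^2$ orthonormal columns, place one $\varepsilon$-net on the $k$-dimensional sphere $\IM(U)\cap\SS^{\mu-1}$, and invoke \Cref{lem:special} pointwise. The paper instead inducts over the layers: after passing to the independent diagonal matrices via \Cref{prop:randnorm,prop:randgrad} and conditioning on their traces, it peels off one Gaussian factor at a time and applies \Cref{lem:pw_4} (the spectral-norm bound $\op{W'}\leq\sqrt{r_1}+\sqrt{r_2}+t$ for a Gaussian matrix compressed to rank-$\leq N/L^2$ subspaces) with $t=\sqrt{N}/L$, so that each layer contributes a factor $1+5/L$ and the product stays below $\ee^5$. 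There the "net" lives inside each layer's fresh, independent Gaussian matrix, which is exactly what keeps the constants compatible.

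The gap you flag is real and, as written, fatal. Your union bound needs the single-point failure exponent of \Cref{lem:special} to exceed $\ln(1+2/\varepsilon)>\ln 3$, since $k$ may equal $N/L^2$. But the constant $c$ actually delivered by the proof of \Cref{lem:special} is far below this: it is deliberately small because the Gaussian deviation parameter there is $t\asymp\sqrt{N}/L$ (any larger and the per-layer drift would no longer be $1+O(1/L)$), and moreover the final constant is capped at $\ln(2)/2$ when the $L^2/2^N$ term from \Cref{prop:randnorm} is absorbed — so for $L=1$ no sharpening of the statement as a black box can reach $\ln 3$. The repair is feasible but is not a one-line adjustment: you must open up the proof of \Cref{lem:special}, retune the deviation to $t=\alpha\sqrt{N}/L$ for a large absolute $\alpha$ (accepting the bound $\ee^{O(\alpha)}\mnorm{v}_2$ in place of $\ee\mnorm{v}_2$, which is harmless), and — crucially — pull the events that do not depend on the net point $w$ (the trace conditioning on the diagonal matrices and the event of \Cref{prop:randnorm}) outside the union bound, since those cannot be boosted. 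With that restructuring your argument closes; without it, the probability estimate does not. Also note that \Cref{prop:randgrad} should then be applied once to the matrix $U$ (or $A$) rather than per net point, so that the distributional replacement is performed before the union bound.
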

Note that, by putting $\ell_1 = 0$, $A = I_{d \times d}$ and $z_0 = z_{L-1} = x$, \Cref{lem:pw_2} provides a bound on the spectral norm of $\jac \Phi^{(\ell_2)}(x)$, under the assumption $N \gtrsim L^2(d + \ln(\ee L))$.

Next, the following lemma provides an estimate similar to the one from \Cref{lem:pw_2} incorporating also the final layer $W^{(L)}$.

\begin{lemma}\label{lem:pw_3}
	There exist absolute constants $C>0$ and $c > 0$ such that the following holds.
	Let $z_0,\ldots,z_{L-1} \in \RR^d \setminus \{0\}$ be fixed and let $\Phi: \RR^d \to \RR$ be a random ReLU network with symmetric biases, with $L$ hidden layers of width $N$ as in \Cref{assum:2}
	with $N \geq C \cdot L^2 \cdot \ln(\ee L)$ 
	and moreover, fix $\ell \in \{0,\ldots,L\}$. 
	Let 
	\begin{equation*}
		\mu \defeq \begin{cases} d & \text{if }\ell = 0, \\ N & \text{if }\ell > 0,\end{cases}
	\end{equation*}
	let $\nu \in \NN$ and let $A \in \RR^{\mu \times \nu}$ with $\rang(A) \leq N/L^2$.
	Then, for every $C \leq t \leq \sqrt{N}/L$,
	\begin{equation*}
		\op{W^{(L)} \cdot \left[\prod_{i= L-1}^\ell D^{(i)}(z_i)W^{(i)}\right] \cdot A} \leq C \cdot \op{A} \cdot  (\sqrt{\rang(A)} + t)
	\end{equation*}
	with probability at least $1-\exp(-c \cdot t^2)$. 
\end{lemma}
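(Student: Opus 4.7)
The plan is to exploit the fact that the vectors $z_0, \ldots, z_{L-1}$ are fixed, which makes the matrix
\[
M := \Bigl[\prod_{i=L-1}^{\ell} D^{(i)}(z_i) W^{(i)}\Bigr] A \in \RR^{N \times \nu}
\]
a function of $W^{(0)}, \ldots, W^{(L-1)}$ and $b^{(0)}, \ldots, b^{(L-1)}$ only, and therefore stochastically independent of the last-layer weights $W^{(L)}$. Writing $W^{(L)} = g^T$ with $g \sim \mathcal{N}(0, I_N)$, the quantity to control reduces to
\[
\op{W^{(L)} M} = \mnorm{g^T M}_2,
\]
which, conditionally on $M$, is the norm of a Gaussian projection of a fixed matrix. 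This reduction completely bypasses the randomized-gradient machinery of \Cref{sec:randgrad}.

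Conditioning on $M$, the map $g \mapsto \mnorm{g^T M}_2$ is $\op{M}$-Lipschitz (since $\bigl| \mnorm{g_1^T M}_2 - \mnorm{g_2^T M}_2 \bigr| \leq \mnorm{(g_1 - g_2)^T M}_2 \leq \op{M} \mnorm{g_1 - g_2}_2$), and its mean is bounded by $\mnorm{M}_F$ via Jensen's inequality applied to $\EE \mnorm{g^T M}_2^2 = \mnorm{M}_F^2$. The Gaussian concentration inequality for Lipschitz functions (Tsirelson--Ibragimov--Sudakov) will therefore yield
\[
\mnorm{g^T M}_2 \leq \mnorm{M}_F + t \cdot \op{M}
\]
with conditional probability at least $1 - \exp(-c t^2)$; integrating out $M$ preserves this unconditionally.

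To finish, it remains to control $\mnorm{M}_F$ and $\op{M}$. A direct application of \Cref{lem:pw_2} with $\ell_1 = \ell$ and $\ell_2 = L-1$ (matching exactly the structure of $M$) yields $\op{M} \leq C \cdot \op{A}$ on an event of probability at least $1 - \exp(-c N/L^2)$. Since $\rang(M) \leq \rang(A) \leq N/L^2$, the elementary estimate $\mnorm{M}_F \leq \sqrt{\rang(M)} \cdot \op{M}$ gives $\mnorm{M}_F \leq \sqrt{\rang(A)} \cdot \op{M}$. Intersecting the two events produces
\[
\op{W^{(L)} M} \leq \op{M} \cdot (\sqrt{\rang(A)} + t) \leq C \cdot \op{A} \cdot (\sqrt{\rang(A)} + t),
\]
which is the desired bound.

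The main (mild) obstacle is probability bookkeeping. A union bound over the Gaussian-concentration event and the event from \Cref{lem:pw_2} gives probability at least $1 - \exp(-ct^2) - \exp(-c N / L^2)$; the upper bound $t \leq \sqrt{N}/L$ ensures $\exp(-c N/L^2) \leq \exp(-c t^2)$, and the resulting factor of $2$ in $2\exp(-c t^2)$ is absorbed into the exponent using the lower bound $t \geq C$ (with $C$ chosen large enough and $c$ slightly reduced). No deeper subtlety arises: the independence of $W^{(L)}$ from the rest of the network, for fixed $z_i$, is what makes the argument essentially a one-line application of Gaussian concentration on top of \Cref{lem:pw_2}.
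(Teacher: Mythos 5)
Your proof is correct and follows essentially the same route as the paper: condition on $W^{(0)},\ldots,W^{(L-1)}$ and the biases so that $W^{(L)}$ is independent of $M$, bound $\op{M}\leq C\op{A}$ via \Cref{lem:pw_2}, apply a standard Gaussian estimate to the last layer, and absorb the two failure probabilities using $C\leq t\leq \sqrt{N}/L$. The only (cosmetic) difference is the final Gaussian step, where the paper invokes its \Cref{lem:pw_4} (SVD plus the operator-norm bound for Gaussian matrices, yielding $1+\sqrt{\rang(A)}+t$) while you use Gaussian Lipschitz concentration together with $\mnorm{M}_F\leq\sqrt{\rang(M)}\,\op{M}$ — an equally valid substitution of one standard tool for another.
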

In particular, picking $\ell = 0$, $z_0 = \ldots = z_{L-1} = x$ and $A = I_{d\times d}$, we get a pointwise upper bound on 
the $\ell^2$-norm of the gradient $\nablaa \Phi(x)$ and hence in fact a special case of \Cref{thm:pw}.

Last but not least, we provide bounds on the $\ell^p$-norm of the gradient of a random ReLU network $\Phi: \RR^d \to \RR$ at a fixed non-zero input $x_0 \in \RR^d$
as already discussed in the beginning of this section.
\begin{theorem}\label{thm:pw}
	There exist absolute constants $C,c>0$ such that the following holds.
	Let $\Phi: \RR^d \to \RR$ be a random ReLU network with symmetric biases, with $L$ hidden layers of width $N$ as in \Cref{assum:2} with $N \geq C \cdot L^2 \cdot \ln(\ee L)$ and $d \geq C$ and let $x_0 \in \RR^d \setminus \{0\}$ be fixed. 
	\begin{enumerate}
		{\item If $p \in [1,2]$, we have
			\begin{equation*}
				\PP \left(c \cdot d^{1/p} \leq \mnorm{\nablaa \Phi (x_0)}_p \leq C \cdot d^{1/p}\right) \geq 1 - \exp( -c \cdot \min\{d, N/L^2\} ).
			\end{equation*}
		}
		\item{
			If $p \in (2,c \cdot \ln(d))$, we have 
			\begin{equation*}
				\PP \left(c \cdot \sqrt{p} \cdot d^{1/p} \leq \mnorm{\nablaa \Phi (x_0)}_p \leq C \cdot \sqrt{p} \cdot d^{1/p}\right) \geq 1- \exp(-c \cdot \min\{p \cdot d^{2/p}, N/L^2\}).
			\end{equation*}
		}
		\item{
			If $p \geq \ln(d)$, we have
			\begin{equation*}
				\PP \left(c \cdot \sqrt{\ln (d)} \leq \mnorm{\nablaa \Phi (x_0)}_{p} \leq C \cdot \sqrt{\ln(d)}\right) \geq 1- \exp(-c \cdot \min\{\ln(d), N/L^2\}).
			\end{equation*}
		}
	\end{enumerate}
\end{theorem}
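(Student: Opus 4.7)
The plan is to combine the distributional reduction from \Cref{prop:randnorm,prop:randgrad} with a conditional Gaussian argument that peels off the first layer $W^{(0)}$, reducing the theorem to classical $\ell^p$-norm estimates for a standard Gaussian vector in $\RR^d$. First, \Cref{prop:randnorm} identifies $D^{(\ell)}(x_0)$ with $\D^{(\ell)}(x_0)$ with probability at least $1 - L\cdot 2^{-N}$, and then \Cref{prop:randgrad} (applied with $\ell_1 = 0$, $\ell_2 = L-1$, and $z_i = x_0$) yields
\begin{equation*}
\nablaa\Phi(x_0) \;\overset{d}{=}\; (W^{(0)})^T v,
\qquad
v \;:=\; \DD^{(0)} (W^{(1)})^T \DD^{(1)} \cdots (W^{(L-1)})^T \DD^{(L-1)} (W^{(L)})^T \in \RR^N,
\end{equation*}
in which $W^{(0)}$ is independent of $v$. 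Since the entries of $(W^{(0)})^T$ are i.i.d.\ $\N(0, 2/N)$ and the different rows of $W^{(0)}$ are independent, conditioning on $v$ gives $(W^{(0)})^T v \mid v \sim \N(0, (2\|v\|_2^2/N)\,I_d)$, equivalently $\nablaa\Phi(x_0) \overset{d}{=} \sqrt{2\|v\|_2^2/N}\cdot \tilde g$ with $\tilde g \sim \N(0, I_d)$ independent of $v$.

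Next I would show that the scaling factor is essentially deterministic, namely $\|v\|_2^2 \asymp N$ with probability at least $1 - \exp(-cN/L^2)$. Writing $v = \DD^{(0)} u$ with $u := (W^{(1)})^T \DD^{(1)} \cdots (W^{(L-1)})^T \DD^{(L-1)} (W^{(L)})^T$, an iterated use of \Cref{lem:special} (transferred to the $\DD$-setting via \Cref{prop:randgrad}) gives $\|u\|_2^2 \asymp N$, by interpreting $u$ as the propagation of the standard Gaussian input $(W^{(L)})^T$ through the layers $(W^{(L-1)})^T, \ldots, (W^{(1)})^T$ with activation patterns $\DD^{(L-1)}, \ldots, \DD^{(1)}$. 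A Bernstein-type concentration for $\|\DD^{(0)} u\|_2^2 = \sum_i \DD^{(0)}_{ii}\,u_i^2$, conditional on $u$, then yields $\|v\|_2^2 \asymp \|u\|_2^2/2 \asymp N/2$. Consequently $\sqrt{2\|v\|_2^2/N}\asymp 1$, and the problem reduces to bounding $\|\tilde g\|_p$ for the standard Gaussian $\tilde g \sim \N(0, I_d)$. The classical estimates $\|\tilde g\|_p \asymp d^{1/p}$ for $p \in [1,2]$, $\|\tilde g\|_p \asymp \sqrt{p}\,d^{1/p}$ for $p \in (2, c\ln d)$, and $\|\tilde g\|_p \asymp \sqrt{\ln d}$ for $p \ge \ln d$ then follow from the moment identity $\EE\|\tilde g\|_p^p = d\,\EE|X|^p$ (with $X \sim \N(0,1)$), Jensen's inequality, and Gaussian concentration of $h \mapsto \|h\|_p$ (Lipschitz constant $\max\{1, d^{1/p - 1/2}\}$ in the $\ell^2$-metric) for the upper bound; the matching lower bounds use Paley--Zygmund (whose second-moment estimate $\EE\|\tilde g\|_p^{2p} \asymp (\EE\|\tilde g\|_p^p)^2$ is valid for $p \le c\ln d$ precisely because the $\tilde g_i$ are independent) combined with the same concentration, together with the extremum bound $\|\tilde g\|_\infty \asymp \sqrt{\ln d}$ for $p \ge \ln d$. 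The events combine to the claimed rates $\exp(-c\min\{d, N/L^2\})$, $\exp(-c\min\{p d^{2/p}, N/L^2\})$, and $\exp(-c\min\{\ln d, N/L^2\})$.

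The main obstacle I anticipate is the concentration $\|v\|_2^2 \asymp N$: while \Cref{lem:special} supplies the isometry for a single fixed deterministic input, here $u$ is itself random, being built by propagating $(W^{(L)})^T$ through $L-1$ layers of independent Gaussian weights and Bernoulli diagonals, so the final multiplication by $\DD^{(0)}$ must be controlled jointly with the randomness of $u$. I would resolve this by a two-step conditioning: first condition on $u$ and apply Bernstein's inequality to the Bernoulli-weighted sum $\sum_i \DD^{(0)}_{ii}\, u_i^2$ using a Gaussian-tail bound on $\|u\|_\infty$ to dominate the spike terms, then integrate out $u$ via the iterated isometry. The remaining care is the bookkeeping needed to ensure that all auxiliary events (the tie-breaking in \Cref{prop:randnorm}, the isometry of \Cref{lem:special}, the Bernoulli concentration of $\|v\|_2^2$, and the Gaussian concentration of $\|\tilde g\|_p$) combine to the $\min\{d, N/L^2\}$-type exponents claimed in the theorem.
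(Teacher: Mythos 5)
Your proposal follows essentially the same route as the paper's proof: pass to the independent Bernoulli diagonal matrices via \Cref{prop:randnorm,prop:randgrad}, show that the vector multiplying $(W^{(0)})^T$ has Euclidean norm $\asymp \sqrt{N}$ by layer-by-layer Gaussian-norm concentration, and then read off $\mnorm{\nablaa \Phi(x_0)}_p$ from the conditional distribution $\mathcal{N}(0, (2\mnorm{v}_2^2/N)I_d)$ together with the $\ell^p$-norm concentration of a standard Gaussian vector (which the paper imports from \cite{PAOURIS20173187} as \Cref{thm:p_high_prob} rather than re-deriving). The one caveat is that \Cref{lem:special} does not literally apply to your transposed product $u=(W^{(1)})^T\hat{D}^{(1)}\cdots(W^{(L)})^T$ — its hypotheses concern forward products $\prod D^{(i)}(z_i)W^{(i)}$ acting on vectors measurable with respect to the earlier layers — but the identical inductive conditioning argument (exactly what the paper carries out with \cite[Theorem~3.1.1]{vershynin_high-dimensional_2018}) yields $\mnorm{u}_2\asymp\sqrt{N}$, so this is a presentational rather than a substantive issue.
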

The proof of \Cref{thm:pw} is based on a careful evaluation of the gradient $\nablaa \Phi(x_0)$ using the techniques from \Cref{sec:randgrad}, combined with the following result regarding the concentration
of the $\ell^p$-norm of a standard Gaussian vector.  
\begin{lemma}[{Consequence of \cite[Proposition~2.4,Theorem~4.11]{PAOURIS20173187}}]\label{thm:p_high_prob}
	There exist absolute constants $C,c>0$ such that the following holds for $d \geq C$:
	\begin{enumerate}
		\item For every $p \in [1,2]$ we have
		\begin{equation*}
			\underset{X \sim \mathcal{N}(0, I_d)}{\PP} \left(c \cdot d^{1/p} \leq \Vert X \Vert_p \leq C \cdot d^{1/p}\right) \geq 1 - C \cdot \exp(-c \cdot d).
		\end{equation*}
		\item For every $p \in (2, c \cdot \ln(d))$ we have
		\begin{equation*}
			\underset{X \sim \mathcal{N}(0, I_d)}{\PP} \left(c \cdot \sqrt{p} \cdot d^{1/p} \leq \Vert X \Vert_p \leq C \cdot \sqrt{p}\cdot d^{1/p}\right) 
			\geq 1 - C\cdot \exp(-c \cdot p \cdot d^{2/p}).
		\end{equation*}
		\item For every $p \geq \ln(d)$ we have
		\begin{equation*}
			\underset{X \sim \mathcal{N}(0, I_d)}{\PP} \left(c \cdot \sqrt{\ln(d)} \leq \Vert X \Vert_p \leq C \cdot \sqrt{\ln(d)}\right) \geq 1 - C \cdot \exp(-c\cdot \ln(d)).
		\end{equation*}
	\end{enumerate}
\end{lemma}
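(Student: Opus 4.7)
The plan is to deduce the three cases from a moment calculation combined with Gaussian concentration for Lipschitz functions, which together reproduce exactly what Proposition~2.4 and Theorem~4.11 of \cite{PAOURIS20173187} give in the special case of the standard Gaussian.

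First I would compute the $p$-th moment: since $\EE \|X\|_p^p = d \cdot \EE|g|^p$ for $g \sim \mathcal{N}(0,1)$, and $\EE |g|^p = \tfrac{2^{p/2}}{\sqrt{\pi}} \Gamma\!\bigl(\tfrac{p+1}{2}\bigr)$, Stirling's formula yields $(\EE|g|^p)^{1/p} \asymp \sqrt{p+1}$. Hence
\[
\bigl(\EE \|X\|_p^p\bigr)^{1/p} \asymp \sqrt{p+1}\,\cdot d^{1/p}.
\]
By a Borell/Paouris moment comparison this quantity is of the same order as $\EE \|X\|_p$ in the regime $p \lesssim \ln d$, so the stated \emph{expected} orders $d^{1/p}$ and $\sqrt{p}\, d^{1/p}$ are correct in cases (1) and (2).

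Next I would invoke Gaussian concentration of Lipschitz functions. The map $x \mapsto \|x\|_p$ has Lipschitz constant $L_p \defeq \max\{1,\, d^{1/p-1/2}\}$ with respect to the Euclidean metric on $\RR^d$, so Borell--TIS yields
\[
\PP\bigl( \bigl|\|X\|_p - \EE \|X\|_p\bigr| \geq t \bigr) \leq 2 \exp\!\bigl(-c\, t^2/L_p^2\bigr).
\]
Now I would split into the three regimes. For $p \in [1,2]$, $L_p = d^{1/p - 1/2}$ and $\EE \|X\|_p \asymp d^{1/p}$, so choosing $t \asymp \EE\|X\|_p$ produces the advertised $\exp(-cd)$ tail. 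For $p \in (2, c\ln d)$, $L_p = 1$ and $\EE \|X\|_p \asymp \sqrt{p}\, d^{1/p}$, and the corresponding choice of $t$ gives the tail $\exp(-c\, p\, d^{2/p})$. For $p \geq \ln d$, the elementary inequality $\|x\|_\infty \leq \|x\|_p \leq d^{1/p} \|x\|_\infty \leq \ee\, \|x\|_\infty$ reduces matters to $\|X\|_\infty \asymp \sqrt{\ln d}$ with deviation $\exp(-c \ln d)$; this follows from the classical concentration of the maximum of $d$ i.i.d.\ standard Gaussians (for the upper bound) and a standard anti-concentration argument applied to the largest coordinate (for the lower bound).

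The main obstacle is the intermediate regime $p \in (2, c \ln d)$: Borell--TIS only controls deviations from the \emph{expectation}, so one has to upgrade the raw moment estimate $(\EE \|X\|_p^p)^{1/p} \asymp \sqrt{p}\, d^{1/p}$ into the matching estimate for $\EE \|X\|_p$, and one must track the crossover at $p \asymp \ln d$ so that the $\sqrt{p}\, d^{1/p}$-behavior joins continuously with the $\sqrt{\ln d}$-behavior at the boundary. This is exactly where the Paouris--Valettas moment comparison (Proposition~2.4 of \cite{PAOURIS20173187}) and the sharp large-deviation statement (Theorem~4.11 of the same paper) do the essential bookkeeping; the remaining pieces reduce to routine applications of Gaussian concentration and Stirling.
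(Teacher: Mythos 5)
Your proposal is correct, but it takes a genuinely different route from the paper. The paper treats the lemma as pure bookkeeping: it quotes \cite[Proposition~2.4]{PAOURIS20173187} verbatim for the two-sided estimate on $\EE\|X\|_p$ (namely $\asymp \sqrt{p}\,d^{1/p}$ for $p<\ln d$ and $\asymp\sqrt{\ln d}$ for $p\geq \ln d$), quotes \cite[Theorem~4.11]{PAOURIS20173187} with $\eps=1/2$ for the concentration around the mean, and then only verifies that the exponent $\beta(d,p,1/2)$ appearing there dominates $d$, $p\,d^{2/p}$, and $\ln d$ in the three respective regimes. You instead rebuild both ingredients: the expectation from a direct moment computation $\EE\|X\|_p^p=d\,\EE|g|^p$ plus Stirling and a reverse moment comparison, and the concentration from Borell--TIS with the Lipschitz constant $L_p=\max\{1,d^{1/p-1/2}\}$, together with a separate elementary $\ell^\infty$-reduction for $p\geq\ln d$. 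The exponents you obtain ($t^2/L_p^2\asymp d$ for $p\in[1,2]$ and $\asymp p\,d^{2/p}$ for $p\in(2,c\ln d)$) match the statement, so the argument goes through; your version is more self-contained in the concentration step, while the paper's is shorter because it never opens the black boxes. The one step you should make explicit is the passage from $(\EE\|X\|_p^p)^{1/p}$ to $\EE\|X\|_p$: Jensen gives one direction for free, and for the other the standard bound $(\EE f^q)^{1/q}\leq \EE f+C L\sqrt{q}$ for a Gaussian Lipschitz function $f$ with constant $L$ reduces matters to checking $L_p\sqrt{p}\ll\sqrt{p}\,d^{1/p}$, i.e.\ $d^{1/p}\gtrsim 1$, which is exactly what the restriction $p\leq c\ln d$ guarantees; without this check the "same order as $\EE\|X\|_p$" claim is asserted rather than proven. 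Also note that the crossover at $p\asymp\ln d$ you worry about need not be tracked, since the lemma makes no claim on the window $[c\ln d,\ln d)$.
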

Below, we provide a proof sketch for \Cref{thm:pw}, illustrating the main ideas of the proof.
The proofs of \Cref{lem:special,lem:pw_2,lem:pw_3} are based on a similar approach.
\begin{proof}[Proof sketch of \Cref{thm:pw}]
	By definition of the formal gradient, 
	\[
	(\nablaa \Phi(x_0))^T = W^{(L)}\left[\prod_{\ell= L-1}^0 D^{(\ell)}(x_0)W^{(\ell)}\right].
	\]
	Using \Cref{prop:randnorm}, by passing to a high probability event and by using the distributional equivalence in \Cref{prop:randgrad}, it suffices
	to study 
	\[
	W^{(L)}\left[\prod_{\ell= L-1}^0 \D^{(\ell)}(x_0)W^{(\ell)}\right]
	\d W^{(L)}\left[\prod_{\ell= L-1}^0 \DD^{(\ell)}W^{(\ell)}\right],
	\]
	where the matrices $\D^{(\ell)}(x_0)$ and $\DD^{(\ell)}$ are defined as in 
	\Cref{sec:randgrad}.
	Since the matrices $\DD^{(\ell)}$ are independent of the weight matrices, we can condition on these matrices and assume that 
	\begin{equation}\label{eq:d-matrices_con}
		\text{for all } \ell \in \{0,\ldots,L-1\} : 
		\quad \frac{N}{2}\left(1- \frac{1}{4L}\right) \leq \Tr(\DD^{(\ell)}) 
		\leq \frac{N}{2}\left(1+ \frac{1}{4L}\right),
	\end{equation}
	which occurs with probability at least $1- 2\exp(-c \cdot N/L^2)$. 
	Because of $\Tr(\DD^{(\ell)}) = \sum_{j=1}^N \eps_j^{(\ell)}$, this follows from Hoeffding's inequality for bounded random variables (see \cite[Theorem~2.2.5]{vershynin_high-dimensional_2018}) and the assumption $N \geq C \cdot L^2 \cdot \ln(\ee L)$.
	
	For fixed $\DD^{(\ell)}$-matrices, we set $\alpha_\ell \defeq \Tr(\DD^{(\ell)})$.
	We obtain 
	\[
	\mnorm{W^{(L)}\left[\prod_{\ell= L-1}^0 \DD^{(\ell)}W^{(\ell)}\right]}_p
	\d \mnorm{\prod_{\ell = L}^0 W_\dagger^{(\ell)}}_p,
	\]
	where $W_\dagger^{(L)} \in \RR^{1 \times \alpha_{L-1}}$, $W_\dagger^{(\ell)} \in \RR^{\alpha_\ell \times \alpha_{\ell-1}}$ for $\ell \in \{1,\ldots,L-1\}$ and 
	$W_\dagger^{(0)} \in \RR^{\alpha_0 \times d}$. 
	Further, we have $(W_\dagger^{(L)})_{1,j} \iid \mathcal{N}(0,1)$ and $(W_\dagger^{(\ell)} )_{i,j} \iid \mathcal{N}(0,2/N)$ for $\ell \in \{0,\ldots,L-1\}$ and the matrices 
	$W_\dagger^{(0)},\ldots, W_\dagger^{(L)}$ are jointly independent. 
	
	We now condition on the matrices $W_\dagger^{(1)},\ldots, W_\dagger^{(L)}$ and set 
	\[
	Z \defeq \prod_{\ell = L}^1 W_\dagger^{(\ell)} \in \RR^{1 \times \alpha_0}.
	\]
	Then,
	\[
	(Z \cdot W_\dagger^{(0)})^T \sim \mathcal{N}\left(0,\frac{2 \mnorm{Z}_2^2}{N} \cdot I_d\right)
	\]
	where the randomness is only with respect to $W_\dagger^{(0)}$.
	Hence, using \Cref{thm:p_high_prob}, we get for instance in the case $p \in [1,2]$
	that 
	\[
	\mnorm{Z \cdot W_\dagger^{(0)}}_p \asymp \frac{\mnorm{Z}_2}{\sqrt{N}} \cdot d^{1/p}
	\]
	with high probability. 
	Hence, it remains to show that $\mnorm{Z}_2 \asymp \sqrt{N}$ with high probability. 
	This is achieved via an inductive reasoning under an iterative application of \cite[Theorem~3.1.1]{vershynin_high-dimensional_2018} (concentration of the $\ell^2$-norm of a Gaussian random vector) and using \eqref{eq:d-matrices_con} with
	\[
	\frac{1}{2} \leq\left(1-\frac{1}{2L}\right)^L \leq \left(1+\frac{1}{2L}\right)^L \leq \ee;
	\]
	see \Cref{sec:pw_proofs} for the details. 
\end{proof}

The following corollary records the consequences of \Cref{thm:pw} for the Lipschitz constants of random ReLU neural networks. 
It follows by applying \Cref{thm:pw} with the setting $p = 1$, using the standard inequality $\mnorm{\cdot}_1  \leq d^{1-1/p} \cdot \mnorm{\cdot}_p$ for $p \in [1,\infty]$ and by then invoking \Cref{thm:glob}.

\begin{corollary}\label{corr:pw}
	There exist absolute constants $C,c > 0$ such that the following holds. 
	For a random ReLU network $\Phi: \RR^d \to \RR$ with symmetric biases, with $L$ hidden layers of width $N$ as in \Cref{assum:2} with $d \geq C$ and $N \geq C \cdot L^2 \cdot \ln(\ee L)$, 
	\[
	\lip_p(\Phi) \geq c \cdot d^{1-1/p} \quad \text{for all } p \in [2, \infty]
	\]
	with probability at least $1-\exp(-c \cdot \min\{N/L^2,d\})$.
\end{corollary}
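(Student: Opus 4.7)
The plan is to leverage the pointwise lower bound on $\mnorm{\nablaa \Phi(x_0)}_1$ established in \Cref{thm:pw} and to convert it into a lower bound on $\mnorm{\nablaa \Phi(x_0)}_{p'}$ for $p' \in [1,2]$ via an elementary norm comparison; this will then yield the desired lower bound on $\lip_p(\Phi)$ through the duality statement in \Cref{thm:glob}.

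First, I would fix any nonzero input $x_0 \in \RR^d$ (for concreteness, $x_0 = e_1$). By \Cref{thm:glob}, for every $p \in [1,\infty]$ with H\"older conjugate $p'$ satisfying $1/p + 1/p' = 1$, the Lipschitz constant $\lip_p(\Phi)$ almost surely dominates $\mnorm{\nablaa \Phi(x_0)}_{p'}$. For $p \in [2,\infty]$ the conjugate $p'$ lies in $[1,2]$, so H\"older's inequality gives the pointwise estimate $\mnorm{v}_1 \leq d^{1-1/p'} \mnorm{v}_{p'}$ for every $v \in \RR^d$, which rearranges (using $1/p' - 1 = -1/p$) to
\[
\mnorm{v}_{p'} \;\geq\; d^{-1/p} \, \mnorm{v}_1.
\]

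Next, I would apply \Cref{thm:pw} at the single exponent $p=1$ (which falls into its first case). This supplies an event $E$ with $\PP(E) \geq 1 - \exp(-c \min\{d, N/L^2\})$ on which $\mnorm{\nablaa \Phi(x_0)}_1 \geq c \cdot d$. Chaining the two inequalities yields, on the event $E$,
\[
\lip_p(\Phi) \;\geq\; \mnorm{\nablaa \Phi(x_0)}_{p'} \;\geq\; d^{-1/p} \cdot \mnorm{\nablaa \Phi(x_0)}_1 \;\geq\; c \cdot d^{1-1/p}
\]
simultaneously for every $p \in [2,\infty]$. The essential observation is that $E$ is defined purely in terms of the gradient at the single point $x_0$ and does not depend on $p$, so the final bound is automatically uniform in $p$ without a union bound.

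There is no genuine obstacle in this argument: all the technical work has already been done in proving \Cref{thm:pw} (which in turn relied on the distributional equivalence of \Cref{prop:randgrad}, the trace concentration for $\DD^{(\ell)}$, and the $\ell^p$-concentration in \Cref{thm:p_high_prob}). The corollary itself is a short chain combining duality, H\"older's inequality, and the pointwise estimate; the only minor check is that the constants produced by \Cref{thm:pw} at $p=1$ are absolute, which is manifest from the statement of that theorem.
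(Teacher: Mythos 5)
Your proposal is correct and matches the paper's own argument: the authors likewise apply \Cref{thm:pw} at $p=1$ to a fixed nonzero $x_0$, use the standard comparison $\mnorm{v}_1 \leq d^{1/p}\mnorm{v}_{p'}$ (equivalently $\mnorm{\cdot}_1 \leq d^{1-1/p}\mnorm{\cdot}_p$), and invoke \Cref{thm:glob}. Your observation that the high-probability event depends only on the gradient at $x_0$ and hence needs no union bound over $p$ is exactly why the uniform statement holds.
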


\section{The upper bound for zero-bias networks}\label{sec:upper}
In this section, we state the upper bound for the Lipschitz constant in the case of zero-bias networks and provide a proof sketch for it.
We refer to \Cref{sec:upper_proof} for the full proof.
We note that the same upper bound was shown in \cite[Theorem~B.5]{buchanan2021deep} in the regime $N \gtrsim d^4 L \ln^4(N)$.
The main result reads as follows. 
\begin{theorem}\label{thm:main_upper}
	There exist absolute constants $C,c > 0$ satisfying the following:
	If $\Phi: \RR^d \to \RR$ is a random \emph{zero-bias}
	ReLU network with $L$ hidden layers of width $N$
	as in \Cref{assum:1} and if we assume $N \geq C \cdot d$ and
	\begin{equation*}
		N \geq C \cdot  d \cdot \left(\ln(N/d)\cdot L^3 + \ln^2(N/d)L^2\right)
	\end{equation*}
	then, with probability at least $1-\exp(-c \cdot d\ln(N/d))$,
	for all $\ell \in \{0,\ldots,L-1\}$,
	\begin{equation*}
		\lip_{2\to 2}(\phell)
		\leq \underset{x \in \SS^{d-1}}{\sup}\op{\jac \phell(x)}
		\leq C,
	\end{equation*}
	and, under the stronger assumption $N \geq C \cdot  d \cdot \left(L^3\ln^2(N/d) + L^2 \ln^3(N/d)\right)$,
	\begin{equation*}
		\lip_2(\Phi)
		\leq \underset{x \in \SS^{d-1}}{\sup}\mnorm{\nablaa \Phi(x)}_2
		\leq C \cdot \sqrt{d \cdot \ln(N/d)}.
	\end{equation*}
\end{theorem}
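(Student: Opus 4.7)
The plan is to prove both conclusions by an $\eps$-net argument on $\SS^{d-1}$, combined with an inductive control of the layers. Choose $\eps$ of order a polynomial in $d/N$ and let $\mathcal{N}_\eps \subseteq \SS^{d-1}$ be a minimal $\eps$-net, so that $\#\mathcal{N}_\eps \leq (3/\eps)^d$. For each layer $\ell$ we split
\[
\sup_{x \in \SS^{d-1}} \op{\jac \phell(x)} \;\leq\; \sup_{x^\ast \in \mathcal{N}_\eps} \op{\jac \phell(x^\ast)} \;+\; \sup_{\substack{x^\ast \in \mathcal{N}_\eps \\ \mnorm{x-x^\ast}_2 \leq \eps}} \op{\jac \phell(x) - \jac \phell(x^\ast)},
\]
and treat the two terms separately (the argument for $\mnorm{\nablaa \Phi(\cdot)}_2$ is analogous).

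For the pointwise term on the net, \Cref{lem:pw_2} applied with $A = I_d$, $\ell_1 = 0$ and $z_0 = \cdots = z_{L-1} = x^\ast$ gives $\op{\jac \phell(x^\ast)} \leq C$ for any fixed $x^\ast$ with failure probability $\exp(-cN/L^2)$, provided $d \leq N/L^2$. A union bound over $\mathcal{N}_\eps$ succeeds once $N/L^2 \gtrsim d\ln(1/\eps)$; with $\eps$ polynomial in $d/N$ this produces a condition of the form $N \gtrsim L^2 d \ln(N/d)$. For the gradient-norm conclusion we instead apply \Cref{lem:pw_3} at each $x^\ast$ with $A = I_d$ and $t \asymp \sqrt{d \ln(N/d)}$, yielding $\mnorm{\nablaa \Phi(x^\ast)}_2 \leq C\sqrt{d \ln(N/d)}$ with failure probability $\exp(-c\,d \ln(N/d))$ that survives the union bound; the admissibility constraint $t \leq \sqrt{N}/L$ from \Cref{lem:pw_3} is precisely what introduces an extra $\ln(N/d)$ factor into the required lower bound on $N$.

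The deviation term is handled via the telescoping identity
\[
\jac\phell(x) - \jac\phell(x^\ast) = \sum_{j=0}^{\ell} \jac\Phi^{(j+1)\to(\ell)}(x) \bigl(D^{(j)}(x) - D^{(j)}(x^\ast)\bigr) W^{(j)} \jac \Phi^{(j-1)}(x^\ast).
\]
Each diagonal matrix $D^{(j)}(x) - D^{(j)}(x^\ast)$ has entries in $\{-1,0,1\}$, and its rank equals the number of neurons at layer $j$ whose activation flips between $x$ and $x^\ast$. This is exactly what random hyperplane tessellation controls: conditioning on the first $j-1$ layers and applying \Cref{lem:recht_upper_original} to the Gaussian matrix $W^{(j)}$ with the set $K_{j-1} \defeq \{\phelll(x)/\mnorm{\phelll(x)}_2 : x \in \mathcal{N}_\eps\}$ bounds this rank by $\delta N$ uniformly over $\mnorm{x - x^\ast}_2 \leq \eps$, for $\delta$ a suitably small polynomial in $d/N$. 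Factoring the low-rank diagonal through a rank-$\delta N$ projection and reinvoking \Cref{lem:pw_2} (resp.\ \Cref{lem:pw_3}) on either side of it then controls each summand, and the $L$-fold summation stays bounded as long as we force $\delta \lesssim 1/L$.

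The main obstacle is closing this inductive loop: verifying the covering-number and Gaussian-width hypotheses of \Cref{lem:recht_upper_original} for the set $K_{j-1}$ relies on the Lipschitz bound for $\phelll$ that is itself the output of the induction on $\ell$. Tracking this Lipschitz propagation together with the requirement $\delta \lesssim 1/L$ explains the $L^3$ factor in the hypothesis on $N$: one power of $L$ arises from the $L$ summands of the telescoping identity, a second from the rank-versus-spread tradeoff in \Cref{lem:recht_upper_original}, and a third from the interaction between the $\eps$-net scale and the Lipschitz constants of successive layers. The gradient-norm statement then follows by the same template with \Cref{lem:pw_3} replacing \Cref{lem:pw_2}, the additional $\ln(N/d)$ factor appearing in the width bound being responsible for the stronger requirement $N \gtrsim d(L^3 \ln^2(N/d) + L^2 \ln^3(N/d))$.
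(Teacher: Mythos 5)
Your overall architecture --- $\eps$-net on the sphere, pointwise control at net points via \Cref{lem:pw_2,lem:pw_3}, the telescoping decomposition of \Cref{lem:decomp}, tessellation control of $\Tr\abs{D^{(j)}(x)-D^{(j)}(x^\ast)}$, and an induction over layers to supply the Lipschitz bounds needed for the tessellation hypotheses --- is exactly the paper's. But there is a genuine gap at the heart of the deviation estimate: the mechanism that makes each of the $\ell+1$ summands small. You propose to control a summand by "factoring the low-rank diagonal through a rank-$\delta N$ projection and reinvoking \Cref{lem:pw_2} on either side," with the sum over $j$ staying bounded "as long as $\delta\lesssim 1/L$." But \Cref{lem:pw_2} applied to the factor containing $W^{(j)}$ and the sparse diagonal $A=D^{(j)}(x)-D^{(j)}(x^\ast)$ only yields a bound of order $\op{A}=1$; it is insensitive to the sparsity of $A$. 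Each summand would then be $O(1)$, the telescoping sum would be $O(L)$, and the induction would propagate Lipschitz constants growing with depth rather than closing at an absolute constant. What is actually needed --- and what the paper does --- is to write $D=D\cdot\abs{D}$, bound the factor $\op{\jac\Phi^{(j+1)\to(\ell)}(x^\ast)D}\lesssim 1$ by \Cref{lem:pw_2}, and then observe that, after conditioning on $W^{(0)},\dots,W^{(j-1)}$, the remaining factor $\abs{D}\,W^{(j)}\jac\Phi^{(j-1)}(\cdot)$ is a fresh Gaussian matrix sandwiched between a matrix of rank $\leq\delta N$ and one of rank $\leq d$, so that \Cref{lem:pw_4} gives an operator-norm bound of order $(\sqrt{\delta N}+\sqrt{d}+t)/\sqrt{N}$, which is $\lesssim 1/L$ under the stated hypothesis on $N$ once $t^2\asymp\ln\abs{\mathcal{A}}+\ln\abs{\mathcal{V}}\asymp d\ln^2(N/d)+Ld\ln(N/d)$.

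The union bound behind that step is a second missing ingredient: one of the two Jacobian factors in each summand is evaluated at an \emph{arbitrary} point of the sphere, not a net point, so one cannot union-bound over a continuum. The paper replaces the continuum by the finite set $\mathcal{V}$ of realizable Jacobians $\jac\Phi^{(j-1)}(y)$, whose cardinality is controlled by counting activation patterns ($\ln\abs{\mathcal{V}}\lesssim Ld\ln(N/d)$); the resulting requirement $\sqrt{Ld\ln(N/d)/N}\lesssim 1/L$ is the actual source of the $L^3$ in the hypothesis, and the final-layer analogue (an extra $\sqrt{N\delta\ln(1/\delta)}\asymp\sqrt{d}\ln(N/d)$ from the union bound over $\mathcal{A}$) is the source of the stronger assumption for the gradient bound --- not the admissibility constraint $t\leq\sqrt{N}/L$ in \Cref{lem:pw_3} as you suggest. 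A minor further imprecision: the set fed to the tessellation lemma must be the normalized image of the \emph{whole} sphere under $\Phi^{(j-1)}$ (this is \Cref{lem:recht_upper_h}, whose covering-number and width hypotheses are verified via the inductive Lipschitz bound, as you correctly note), not merely the image of the net.
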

Using the standard inequalities
\begin{equation}\label{eq:norm_equiv}
	\mnorm{\,\cdot\,}_{2} \leq \mnorm{\,\cdot\,}_{p} \quad \text{for } p \in [1,2]
	\qquad\text{and}\qquad
	d^{1/p - 1/2} \mnorm{\,\cdot\,}_{2} \leq \mnorm{\,\cdot\,}_{p} \quad \text{for } p \in [2, \infty],
\end{equation}
we obtain the following corollary.
\begin{corollary}\label{corr:lp_upper}
	For $p \in [1,\infty]$, we have on the same event as in \Cref{thm:main_upper}, and under the stated additional assumption,
	\begin{equation*}
		\lip_p(\Phi) \leq 
		C \cdot \begin{cases} \sqrt{d \cdot \ln(N/d)} & \text{if }p \in [1,2], \\
			d^{1-1/p} \cdot \sqrt{\ln \left(N/d\right)} & \text{if }p \in [2,\infty].
		\end{cases}
	\end{equation*}
\end{corollary}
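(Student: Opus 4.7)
The plan is to reduce the $\ell^p$-bound directly to the $\ell^2$-bound already furnished by \Cref{thm:main_upper} via the elementary norm comparison inequalities stated just before the corollary. There is essentially one honest computation and no real obstacle.

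First I would fix the event from \Cref{thm:main_upper} on which simultaneously $N\geq C\cdot d (L^3 \ln^2(N/d)+L^2\ln^3(N/d))$ holds and $\lip_2(\Phi)\leq \sup_{x\in\SS^{d-1}}\mnorm{\nablaa \Phi(x)}_2 \leq C\sqrt{d\ln(N/d)}$. All the bounds to be derived will hold on this same event. By the definition of the Lipschitz constant, for any $x\neq y$ in $\RR^d$,
\begin{equation*}
\frac{\abs{\Phi(x)-\Phi(y)}}{\mnorm{x-y}_p}
= \frac{\abs{\Phi(x)-\Phi(y)}}{\mnorm{x-y}_2}\cdot\frac{\mnorm{x-y}_2}{\mnorm{x-y}_p}
\leq \lip_2(\Phi)\cdot\frac{\mnorm{x-y}_2}{\mnorm{x-y}_p},
\end{equation*}
so it suffices to bound $\mnorm{x-y}_2/\mnorm{x-y}_p$ using the inequalities \eqref{eq:norm_equiv}.

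For $p\in[1,2]$, \eqref{eq:norm_equiv} gives $\mnorm{x-y}_2\leq \mnorm{x-y}_p$, so the ratio is $\leq 1$ and we obtain $\lip_p(\Phi)\leq\lip_2(\Phi)\leq C\sqrt{d\ln(N/d)}$. For $p\in[2,\infty]$, the second inequality in \eqref{eq:norm_equiv} applied to $x-y$ yields $\mnorm{x-y}_2\leq d^{1/2-1/p}\mnorm{x-y}_p$, hence
\begin{equation*}
\lip_p(\Phi)\leq d^{1/2-1/p}\lip_2(\Phi)\leq C\cdot d^{1/2-1/p}\sqrt{d\ln(N/d)} = C\cdot d^{1-1/p}\sqrt{\ln(N/d)}.
\end{equation*}
Combining the two regimes gives precisely the stated bounds on the event under consideration, completing the proof. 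Since both cases agree when $p=2$ (giving $\sqrt{d\ln(N/d)}$), the piecewise description is consistent. No genuinely hard step arises; the only care required is to invoke the correct direction of each inequality in \eqref{eq:norm_equiv} and to remember that the denominators $\mnorm{x-y}_p$ appear in the Lipschitz quotient, so we need \emph{lower} bounds on $\mnorm{x-y}_p$ (equivalently upper bounds on $\mnorm{x-y}_2/\mnorm{x-y}_p$), which is exactly what \eqref{eq:norm_equiv} provides.
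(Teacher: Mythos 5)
Your argument is correct and is exactly the paper's intended proof: the corollary is stated as an immediate consequence of \Cref{thm:main_upper} via the norm comparison inequalities \eqref{eq:norm_equiv}, applied to the denominator of the Lipschitz quotient precisely as you do. Nothing further is needed.
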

Notice that we handle the two regimes $p \in [1, 2]$ and $p \in [2, \infty]$ by application of \eqref{eq:norm_equiv}, which might appear to be loose at first sight. 
However,
the resulting bound in fact matches the lower lower bounds from \Cref{corr:pw} and \Cref{thm:main_lower} up to a factor that is polynomial in the number of hidden layers $L$ (this factor only appears for $p \in [1,2]$) and logarithmic in the width $N$.

We now give a proof sketch for \Cref{thm:main_upper}.
\begin{proof}[Proof sketch of \Cref{thm:main_upper}]
	The basic strategy is to \emph{inductively} bound the Lipschitz constants of the intermediate maps $\Phi^{(\ell)}$
	and to then incorporate the final layer in the very last step. 
	Following \Cref{thm:glob}, this is achieved by bounding
	\begin{equation*}
		\underset{x \in \SS^{d-1}}{\sup} \op{\jac \Phi^{(\ell)}(x)},
	\end{equation*}
	where
	$\jac \Phi^{(\ell)} (x) = D^{(\ell)}(x)W^{(\ell)} \cdots D^{(0)}(x)W^{(0)}$ is the formal Jacobian.

	We pick $\delta \asymp \frac{d}{N} \cdot \ln(N/d)$ and $\eps \asymp \delta \cdot \ln^{-1/2}(1/\delta)$ in the spirit of \Cref{lem:recht_upper_original} and let $\neps \subseteq \SS^{d-1}$ be an $\eps$-net of $\SS^{d-1}$ satisfying $\ln \abs{\neps} \lesssim d \cdot \ln(1/\eps) \asymp d \cdot \ln(N/d)$ (existence follows from \cite[Corollary~4.2.13]{vershynin_high-dimensional_2018}).
	Combining \Cref{lem:pw_2} and a union bound, we get 
	\[
	\text{for all } x^\ast \in \neps: \quad \op{\jac \Phi^{(\ell)}(x^*)} \lesssim 1
	\]
	with probability at least $1-\exp(-c \cdot N/L^2)$ for an appropriate constant $c>0$,
	where we used the assumption $N \gtrsim L^2 d \ln(N/d)$.
	
	The remaining and more challenging part is to control the deviation
	$\jac \Phi^{(\ell)}(x^\ast) - \jac \Phi^{(\ell)}(y)$
	for $x^\ast \in \neps$ and $y \in \SS^{d-1}$ with $\mnorm{x^\ast-y}_2 \leq \eps$.
	In order to do so, we use a decomposition of this difference which already appeared in \cite{bartlett2021adversarial}, namely,
	\begin{align*}
		\jac \Phi^{(\ell)} (x^\ast) -\jac \Phi^{(\ell)}(y)  
		= \sum_{j= 0}^\ell \left(\jac\Phi^{(j+1) \to (\ell)}(x^\ast)\left(D^{(j)}(x^\ast) - D^{(j)}(y)\right)W^{(j)} \jac \Phi^{(j-1)}(y) \right);
	\end{align*}
	see \Cref{lem:decomp}.
	To get a bound on the
	spectral norm of this difference, we use the triangle inequality and consider each summand individually.
	For a fixed $j \in \{0, \ldots, \ell\}$ we use the estimate
	\begin{align}
		&\norel\op{\jac \Phi^{(j+1) \to (\ell)}(x^\ast)\left(D^{(j)}(x^\ast) - D^{(j)}(y)\right)W^{(j)} \jac \Phi^{(j-1)}(y) } \nonumber\\
		\label{eq:mot}
		&\leq \op{\jac \Phi^{(j+1) \to (\ell)}(x^\ast)\left(D^{(j)}(x^\ast) - D^{(j)}(y)\right)}
		\op{\abs{D^{(j)}(x^\ast) - D^{(j)}(y)}W^{(j)} \jac \Phi^{(j-1)}(y) }.
	\end{align}
	Note that $D \defeq D^{(j)}(x^\ast) - D^{(j)}(y)$ is a diagonal matrix with
	values from $\{0, 1, -1\}$ on the diagonal and thus $D = D \cdot \abs{D}$.
	By definition, we have 
	\begin{align*}
		&\norel\Tr\abs{D^{(j)}(x^\ast) - D^{(j)}(y)}  \\
		&=  \# \left\{  i \in \{1,\ldots,N\}: \ \sgn\left(\left(W^{(j)} \Phi^{(j-1)}(x^\ast)\right)_i\right) 
		\! \neq \! \sgn\left(\left(W^{(j)} \Phi^{(j-1)}(y)\right)_i\right) \right\} \\
		&=\# \left\{  i \in \{1,\ldots,N\}: \ \sgn\left(\left(W^{(j)} \frac{\Phi^{(j-1)}(x^\ast)}{\mnorm{\Phi^{(j-1)}(x^\ast)}_2}\right)_i\right) 
		\! \neq \! \sgn\left(\left(W^{(j)} \frac{\Phi^{(j-1)}(y)}{\mnorm{\Phi^{(j-1)}(y)}_2}\right)_i\right) \right\},
	\end{align*}
	putting us in the situation of \Cref{lem:recht_upper_original}.
	According to \Cref{lem:special} and a union bound, we have 
	\begin{equation}\label{eq:infb}
		\text{for all } x^\ast \in \neps : \quad  \mnorm{\Phi^{(j-1)}(x^\ast)}_2 \gtrsim 1,
	\end{equation}
	with high probability. 
	Further, by the induction hypothesis, we may assume that $\lip_2(\Phi^{(j-1)}) \lesssim 1$, which implies 
	\begin{align*}
		\mnorm{\frac{\Phi^{(j-1)}(x^\ast)}{\mnorm{\Phi^{(j-1)}(x^\ast)}_2} - \frac{\Phi^{(j-1)}(y)}{\mnorm{\Phi^{(j-1)}(y)}_2}}_2
		\overset{\text{\cshref{lem:diff_bound}}}&{\lesssim}\frac{\mnorm{\Phi^{(j-1)}(x^\ast) - \Phi^{(j-1)}(y)}_2}{\mnorm{\Phi^{(j-1)}(x^\ast)}_2} \\ 
		\overset{\eqref{eq:infb}}&{\lesssim} \mnorm{\Phi^{(j-1)}(x^\ast) - \Phi^{(j-1)}(y)}_2 \lesssim \eps.
	\end{align*}
	Hence, using \Cref{lem:recht_upper_original} we get 
	\[
	\underset{ \ x^\ast \in \neps, y \in \SS^{d-1},\mnorm{x^\ast-y}_2 \leq \eps}{\sup} \Tr\abs{D^{(j)}(x^\ast) - D^{(j)}(y)} \leq \delta \cdot N
	\]
	with high probability. 
	We set
	\[
	\mathcal{A} \defeq \left\{A \in \diag\{0,1,-1\}^{N \times N}: \ \Tr \abs{A} \leq \delta \cdot N \right\}
	\]
	and note that 
	\[
	\ln\left(\abs{\mathcal{A}}\right) \lesssim N \cdot \delta\ln(1/\delta) \asymp d \cdot \ln^2(N/d), 
	\]
	as follows from a standard bound on the binomial coefficient; see \Cref{lem:card_b}. 
	In the spirit of \eqref{eq:mot}, we get
	\begin{align*}
		&\norel\underset{x^\ast \in \neps, y \in \SS^{d-1}, \ \mnorm{x^\ast-y}_2 \leq \eps}{\sup}\op{\jac \Phi^{(j+1) \to (\ell)}(x^\ast)\left(D^{(j)}(x^\ast) - D^{(j)}(y)\right)W^{(j)} \jac \Phi^{(j-1)}(y)} \\
		&\leq \left(\underset{x^\ast \in \neps, A \in \mathcal{A}}{\sup} \op{\jac \Phi^{(j+1) \to (\ell)}(x^\ast)A}\right) 
		\cdot \left(\underset{y \in \SS^{d-1}, A \in \mathcal{A}}{\sup}\op{AW^{(j)} \jac \Phi^{(j-1)}(y)}\right).
	\end{align*}
	Using \Cref{lem:pw_2} and a union bound, we obtain
	\[
	\underset{x^\ast \in \neps, A \in \mathcal{A}}{\sup} \op{\jac \Phi^{(j+1) \to (\ell)}(x^\ast)A} \lesssim 1.
	\] 
	For the second factor, we condition on the matrices $W^{(0)}, \ldots, W^{(j-1)}$ and set 
	\[
	\mathcal{V} \defeq \left\{ \jac \Phi^{(j-1)}(y): \ y \in \SS^{d-1}\right\}. 
	\]
	Since the matrices $W^{(0)}, \ldots, W^{(j-1)}$ are fixed, we get 
	\[
	\abs{\mathcal{V}} \leq \left(\frac{\ee N}{d+1}\right)^{L(d+1)},
	\]
	as follows, for instance, from \cite[Lemma~5.7]{geuchen2024upper}.
	Moreover, using the induction hypothesis, we may assume $\op{V} \lesssim 1$ for every $V \in \mathcal{V}$.
	For fixed matrices $A \in \mathcal{A}$ and $V \in \mathcal{V}$ we thus obtain 
	\[
	\op{AW^{(j)}V} \lesssim \op{A} \cdot \op{V} \cdot \frac{\sqrt{\rang(A)} + \sqrt{\rang(V)} + t}{\sqrt{N}} \lesssim \frac{\sqrt{\delta N} + \sqrt{d} + t}{\sqrt{N}}
	\]
	with probability at least $1-\exp(-c \cdot t^2)$ for every $t \gtrsim 1$; see \Cref{lem:pw_4}.
	Using a union bound and choosing $t^2 \asymp d \cdot \ln^2(N/d) + Ld \ln(N/d) \asymp \ln(\abs{\mathcal{A}}) + \ln(\abs{\mathcal{V}})$, we get 
	\[
	\underset{y \in \SS^{d-1}, A \in \mathcal{A}}{\sup}\op{AW^{(j)} \jac \Phi^{(j-1)}(y)} \lesssim
	\frac{\sqrt{d}\ln(N/d) + \sqrt{Ld\ln(N/d)}}{\sqrt{N}} \lesssim 1/L  
	\]
	with high probability under the stated conditions.
	
	Overall, this implies 
	\[
	\op{\jac \Phi^{(\ell)} (x^\ast) -\jac \Phi^{(\ell)}(y)}  \lesssim \sum_{j= 0}^\ell \frac{1}{L} \lesssim 1,
	\] 
	uniformly over $x^\ast \in \neps$ and $y \in \SS^{d-1}$ with $\mnorm{x^\ast - y}_2 \leq \eps$. 
	Via induction, we can thus bound the norms of the Jacobians of the intermediate maps $\Phi^{(\ell)}$.
	
	In order to uniformly bound the norm of the gradient $\mnorm{\nablaa \Phi(x)}_2$, we proceed similarly as described above and use \Cref{lem:pw_3}; see \Cref{sec:upper_proof} for the details. 
\end{proof}

\section{The lower bound for zero-bias networks}\label{sec:lower}

In this section, we state the lower bound of the Lipschitz constant of zero-bias random ReLU networks in the regime $p \in [1,2]$ and explain the proof idea. The lower bound for the regime $p \in [2, \infty]$ is already contained in \Cref{corr:pw}.
For a zero-bias ReLU network $\Phi: \RR^d \to \RR$, recall that by \Cref{thm:up_low_bound}, we aim to establish a lower bound for 
\[
\lip_1(\Phi) = \underset{x \in M_\Phi \cap \SS^{d-1}}{\sup}\mnorm{\nablaa \Phi (x)}_\infty,
\]
where $M_\Phi \defeq \{x \in \RR^d : \ \Phi \text{ differentiable at }x \text{ with } \act \Phi(x) = \nablaa \Phi(x)\}$.
In fact, we provide an even stronger result: for any fixed direction $\nu \in \SS^{d-1}$, with high probability,
\[
\underset{x \in M_\Phi \cap \SS^{d-1}}{\sup} \abs{\langle \nablaa \Phi(x), \nu\rangle} \gtrsim \frac{\sqrt{d}}{\sqrt{L}}.
\]
Taking $\nu$ as one of the standard basis vectors immediately yields the desired lower bound on 
the Lipschitz constant. 
The following is the main result of this section.
\begin{theorem}\label{thm:main_lower}
	There exist absolute constants $C,c>0$ such that the following holds.
	Let $\Phi:\RR^d \to \RR$ be a random zero-bias ReLU network with $L$ hidden layers of width $N$ as in \Cref{assum:1}.
	Assume $N \geq C \cdot d$ and 
	\begin{equation*} 
		N \geq C^L \cdot d \cdot \ln^2(N/d)\cdot L^{4L+10},\quad \text{and} \quad 
		d \geq C \cdot L.
	\end{equation*}
	Then, for every direction $\nu \in \SS^{d-1}$, with probability at least $1-\exp(-c \cdot d/L)$,
	\[
	\underset{x \in M_\Phi \cap \SS^{d-1}}{\sup} \ \langle \nablaa \Phi(x), \nu\rangle
	\geq c \cdot  \frac{1}{\sqrt{L}}  \cdot \sqrt{d},
	\]
	where $M_\Phi \defeq \{x \in \RR^{d}: \ \Phi \text{ differentiable at } x \text{ with } \act \Phi(x) = \nablaa \Phi(x)\}$. In particular, with the same probability,
	\[
	\lip_1(\Phi) \geq c \cdot  \frac{1}{\sqrt{L}} \cdot \sqrt{d}.
	\]
\end{theorem}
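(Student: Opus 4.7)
The plan is to apply Sudakov minoration to a conditionally Gaussian process on $\SS^{d-1}$. Setting
\[ X_x \defeq \langle \nablaa\Phi(x),\nu\rangle = W^{(L)}\jac\Phi^{(L-1)}(x)\nu, \]
the family $\{X_x\}_{x\in\SS^{d-1}}$ is, conditionally on $(W^{(0)},\ldots,W^{(L-1)})$, a centered Gaussian process in $W^{(L)}$ with process distance $d_\pi(x,y) \defeq \mnorm{(\jac\Phi^{(L-1)}(x)-\jac\Phi^{(L-1)}(y))\nu}_2$. Sudakov minoration then yields, for every $d_\pi$-$\delta$-separated subset $\mathcal P\subset\SS^{d-1}$,
\[ \EE\bigl[\sup_{x\in\mathcal P} X_x \mid W^{(0)},\ldots,W^{(L-1)}\bigr] \gtrsim \delta\sqrt{\log|\mathcal P|}. \]
It therefore suffices to produce $\mathcal P\subset M_\Phi\cap\SS^{d-1}$ with $\log|\mathcal P|\gtrsim d$ and $\delta\gtrsim 1/\sqrt L$, since the product is $\sqrt{d/L}$.

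To build $\mathcal P$, I would start from an arbitrary $\ell^2$-packing $\mathcal P_0\subset M_\Phi\cap\SS^{d-1}$ at constant Euclidean scale with $\log|\mathcal P_0|\asymp d$ (available since $\SS^{d-1}\setminus M_\Phi$ has zero surface measure) and prove $d_\pi(x,y)\gtrsim 1/\sqrt L$ uniformly for distinct $x,y\in\mathcal P_0$. The starting point is the telescoping identity from \Cref{lem:decomp},
\[
\jac\Phi^{(L-1)}(x)-\jac\Phi^{(L-1)}(y) = \sum_{j=0}^{L-1}\jac\Phi^{(j+1)\to(L-1)}(x)\bigl(D^{(j)}(x)-D^{(j)}(y)\bigr) W^{(j)}\jac\Phi^{(j-1)}(y),
\]
combined with the randomized-gradient machinery of \Cref{sec:randgrad}. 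I would single out one layer $j^\star$ and, invoking \Cref{prop:randnorm,prop:randgrad}, replace $D^{(j^\star)}(x)-D^{(j^\star)}(y)$ by its randomized analogue, which is effectively independent of $W^{(j^\star)}$; the selected summand is then a conditionally Gaussian vector in $W^{(j^\star)}$. The hyperplane tessellation lower bound \Cref{lem:recht_low_low}, applied to the images of $x$ and $y$ under the first $j^\star$ layers (which are well-separated on the unit sphere by \Cref{lem:special} and an inductive argument), guarantees that this Gaussian component has rank $\gtrsim N$, while \Cref{lem:pw_2} controls the magnitudes of the remaining $L-1$ summands.

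The main obstacle is the anti-cancellation step: the remaining $L-1$ summands share the weight matrices $W^{(\ell)}$ with $\ell>j^\star$ with the isolated Gaussian vector, so the two are not genuinely independent. This is where the decoupling apparatus of \Cref{prop:randgrad} is essential — it permits replacing the implicated weights by independent copies, after which Gaussian anti-concentration against the (now effectively deterministic) remainder delivers $d_\pi(x,y)\gtrsim 1/\sqrt L$ with probability $1-\exp(-c_1 d)$ per pair. A union bound over the $|\mathcal P_0|^2\le\exp(2c_0 d)$ pairs closes the distance estimate, provided $c_1\gg c_0$, which is achieved by taking the packing scale large enough. The $1/\sqrt L$ rate is natural: the telescoping identity has $L$ comparably-sized summands (each of order $1$ by \Cref{lem:pw_2}), and in the worst case of near-alignment only a $1/\sqrt L$ fraction of the raw Gaussian magnitude survives cancellation.

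Finally, Borell's Gaussian concentration inequality upgrades the conditional expectation estimate to a high-probability statement: on the event provided by \Cref{lem:special,lem:pw_2} we have $\sup_{x\in\mathcal P_0}\|X_x\|_{\psi_2}\lesssim \sup_x\mnorm{\jac\Phi^{(L-1)}(x)\nu}_2\lesssim 1$, so the conditional supremum concentrates at the sub-Gaussian scale $1$ around its mean of order $\sqrt{d/L}$, yielding the stated failure probability $\exp(-cd/L)$. Taking $\nu$ to be a coordinate basis vector and invoking \Cref{thm:glob} then delivers the claimed lower bound on $\lip_1(\Phi)$.
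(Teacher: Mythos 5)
Your skeleton — condition on $W^{(0)},\ldots,W^{(L-1)}$, apply Sudakov minoration in $W^{(L)}$, lower-bound the packing number of $\{\jac\Phi^{(L-1)}(x)\nu\}$ via the telescoping identity, and finish with Gaussian concentration — is exactly the paper's. But the core step, the lower bound on the process distance $d_\pi$, has two genuine gaps.

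First, the anti-cancellation argument does not work as described. You isolate one summand $\theta_{j^\star}$ of the telescoping sum and claim that after "decoupling" the remaining $L-1$ summands become an effectively deterministic remainder against which Gaussian anti-concentration applies. \Cref{prop:randgrad} does not do this: it replaces the activation matrices $\D^{(i)}(z_i)$ by independent Bernoulli diagonals inside specific functionals, but it does not decouple the weight matrices $W^{(\ell)}$, $\ell>j^\star$, which appear simultaneously in $\theta_{j^\star}$ and in every other summand; no mechanism is offered to prevent the $L-1$ other order-one terms from cancelling the isolated one. Relatedly, your quantitative heuristic — "only a $1/\sqrt L$ fraction of the raw Gaussian magnitude survives cancellation" at constant Euclidean scale — is unsupported, and in fact the paper's route to $1/\sqrt L$ is entirely different: it works at Euclidean scale $\delta\asymp L^{-2}$, proves $\mnorm{\theta_j}_2^2\asymp\delta$ for each $j$ (\Cref{lem:thetajlowbound}) and $\abs{\langle\theta_j,\theta_k\rangle}\lesssim\delta^{3/2}$ for $j\neq k$ (\Cref{lem:inner_prod_bound}), so that $d_\pi^2\gtrsim L\delta-L^2\delta^{3/2}\gtrsim L\delta$ precisely because $\delta\ll L^{-2}$; then $d_\pi\gtrsim\sqrt{L\delta}\asymp 1/\sqrt L$. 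At constant scale the cross terms are of the same order as the diagonal terms and cancellation cannot be excluded by these methods (and a bound $d_\pi\gtrsim 1$ there would give $\lip_1\gtrsim\sqrt d$ independent of $L$, which the paper does not claim for deep networks).

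Second, the dependence between $D^{(j)}(x)-D^{(j)}(y)$ and $W^{(j)}$ applied to the propagated direction is resolved in the paper not by decoupling but by an orthogonal-projection argument: one splits the propagated vector into its components in and orthogonal to $\spann\{\Phi^{(j-1)}(x),\Phi^{(j-1)}(y)\}$, the orthogonal part being genuinely independent of the activation difference, and shows by induction through the layers (\Cref{lem:projsmall}) that the in-plane component is negligible. This induction only starts because the packing points are chosen with $x_d=y_d=1/\sqrt2$ and $\nu\perp\spann\{x,y\}$ (so $\pi^{(-1)}(\nu)=0$), with general $\nu$ recovered afterwards by rotation invariance (\Cref{prop:gen}). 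Your "arbitrary constant-scale packing" lacks this structure, so the independence-creation step already fails at layer $0$. These two ingredients — the small-scale packing with the orthogonality constraint, and the projection induction — are the actual content of the proof and would need to be supplied.
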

As a simple corollary, since $\mnorm{\cdot}_p \leq \mnorm{\cdot}_1$ for any $p \geq 1$, we get the same lower bound for $\lip_p(\Phi)$ for every $p \in [1,2)$, and in fact, for any $p \in [1,\infty]$. Note, however, that we already established the stronger lower bound of $d^{1-1/p}$ in the latter case, in \Cref{sec:pw}.

Below, we provide a proof sketch focusing on the case of shallow networks for which $L = 1$; we refer to \Cref{sec:lower_proof} for the full proof in the case of deep networks.
\begin{proof}[Proof sketch of \Cref{thm:main_lower}]
	The overall strategy
	is to \emph{fix} the random vector 
	\[
	\arrow{W} \defeq (W^{(0)}, \ldots, W^{(L-1)})
	\]
	and to then apply Sudakov's minoration inequality
	(see e.g. \cite[Theorem~7.4.1]{vershynin_high-dimensional_2018}) when considering the randomness over $W^{(L)}$.
	To this end, we rewrite 
	\[
	\underset{x \in M_\Phi \cap \SS^{d-1}}{\sup} \abs{\langle \nablaa \Phi(x), \nu\rangle} 
	= \underset{x \in M_\Phi \cap \SS^{d-1}}{\sup} \abs{\langle (W^{(L)})^T, \jac \Phi^{(L-1)}(x)\nu\rangle} 
	= \underset{v \in \mathcal{L}}{\sup} \ \langle (W^{(L)})^T, v \rangle,
	\]
	where 
	\[
	\mathcal{L} = \mathcal{L}(\arrow{W}) \defeq \left\{ \jac \Phi^{(L-1)}(x)\nu : \ x \in M_\Phi \cap \SS^{d-1} \right\} \subseteq \RR^N.
	\]
	Sudakov's minoration inequality then yields 
	\[
	\underset{x \in M_\Phi \cap \SS^{d-1}}{\sup} \abs{\langle \nablaa \Phi(x), \nu\rangle}  
	\gtrsim \underset{\delta > 0}{\sup} \ \delta \cdot \sqrt{\ln\left(\mathcal{P}(\mathcal{L}, \delta)\right)}
	\]
	in expectation, where $\mathcal{P}(\mathcal{L},\delta)$ denotes the $\delta$-packing number of $\mathcal{L}$.
	A high probability bound can be obtained using a standard result on Gaussian concentration, see \cite[Theorem~5.2.2]{vershynin_high-dimensional_2018}. 
	
	From now on, we focus on the case of shallow networks, 
	since this proof turns out to be much simpler than the one for deep networks.
	In this case, 
	\[
	\mathcal{L} =  \left\{ D^{(0)}(x)W^{(0)}\nu : \ x \in M_\Phi \cap \SS^{d-1}\right\}.
	\]
	The idea is that a $\delta$-separated subset $P$ of the unit sphere $\SS^{d-1}$ directly yields a $(c \sqrt{\delta})$-separated subset $P'$ of the set $\mathcal{L}$, where $c> 0$ is an absolute constant,
	by letting 
	\[
	P' \defeq \left\{ D^{(0)}(x^\ast)W^{(0)}\nu: \ x^\ast \in P\right\}.
	\]
	To this end, one has to control expressions of the form 
	\[
	\mnorm{(D^{(0)}(x) - D^{(0)}(y))W^{(0)}\nu}_2
	\]
	for inputs $x,y \in \SS^{d-1}$ that have a Euclidean distance that is larger than $\delta$.
	Applying \Cref{lem:recht_low_low}, we can show that with high probability there exist at least $c \cdot\delta N$ non-zero (diagonal) entries in $D^{(0)}(x) - D^{(0)}(y)$ for an absolute constant $c > 0$.
	The main difficulty to overcome when considering the expression above is that $D^{(0)}(x) - D^{(0)}(y)$ and $W^{(0)}\nu$ are \emph{not} independent.
	However, independence is trivial if we consider inputs from $\{0\} \times \SS^{d-2}$ 
	and pick $\nu = e_1$.
	The result then follows from standard bounds on the norm of a Gaussian vector, see, e.g., \cite[Theorem~3.1.1]{vershynin_high-dimensional_2018}.
	The generalization to arbitrary $\nu \in \SS^{d-1}$ is then obtained by using the rotation invariance of the Gaussian distribution; see \Cref{prop:gen}.
	
	Note that this trick to create independence does \emph{not} work anymore in the case of deep networks and more advanced techniques are required.
	We refer to \Cref{sec:lower_proof} for the details. 
\end{proof}

\section{Networks with biases}\label{sec:bias}
The results presented in \Cref{sec:upper,sec:lower} solely dealt with \emph{zero-bias} networks, i.e.,
networks where all the biases are set to zero. 
In this section, we discuss the case of networks with \emph{general} biases. 

A key observation is that for inputs with a sufficiently large Euclidean norm, 
the gradient of $\Phi$ is similar to the gradient of $\tilde{\Phi}$, the zero-bias network associated to $\Phi$ by setting all its biases to zero.
To make this intuition precise, we aim to bound
\[
\sup_{x \in \RR^d, \mnorm{x}_2 \geq R} \mnorm{\nablaa \Phi(x) - \nablaa \tilde{\Phi}(x)}_2
\]
from above using \Cref{prop:farout}, where $R > 0$ is chosen appropriately. 
This is discussed in more detail in \Cref{sec:hom_diff}.
After such a bound is established, we can apply the results from \Cref{sec:pw,sec:upper,sec:lower} to obtain bounds 
on the Lipschitz constant for networks with general biases.
More precisely, in order to obtain lower bounds, it suffices to study the gradient of 
inputs with a sufficiently large Euclidean norm. This gradient is similar to the gradient of the associated homogeneous network, which we can lower bound using \Cref{corr:pw} and \Cref{thm:main_lower}. 
For an upper bound, it essentially suffices to bound 
\[
\sup_{x \in B_d(0,R)} \mnorm{\nablaa \Phi(x)}_2,
\]
since the gradient of $\Phi$ is similar to the gradient of $\tilde{\Phi}$ on $\RR^d \setminus B_d(0,R)$.
\subsection{Bounding the difference to the gradient of the associated zero-bias network}\label{sec:hom_diff}
In this section, we discuss how the difference between the gradient of a network $\Phi$ with 
possibly non-zero, \emph{deterministic} biases can be related to the gradient of the associated homogeneous network $\tilde{\Phi}$.
For detailed proofs of the results presented in the present section, we refer to \Cref{sec:hom}.

Obtaining such a bound crucially relies on the following observation. 
\begin{lemma}\label{lem:hom_trace}
	There exist absolute constants $C,c > 0$ such that the following holds.
	Let $\Phi: \RR^d \to \RR$ be a random ReLU network of width $N$ and depth $L$ as in \Cref{assum:1} with constant (deterministic) biases.
	Let $\lambda' > 0$ such that 
	\[
	\underset{ i \in \{1,\ldots, N\}}{\max_{\ell \in \{0,\ldots, L-1\}}} \abs{b^{(\ell)}_i} \leq \frac{\lambda' \sqrt{2}}{\sqrt{N}}.
	\]
	Assume $N \geq C \cdot d$ and $N \geq C \cdot L^3d\ln(N/d)$ and, let $\delta \in (0,\ee^{-1})$, satisfy
	\[
	N \geq C \cdot dL^2 \cdot \ln(1/\delta), 
	\quad 
	\delta N \geq C \cdot \max\{d  \cdot \ln(1/\delta), dL\},
	\quad
	\text{and}
	\quad
	\delta \cdot \ln(1/\delta) \leq c /L^2.
	\] 
	Pick $j \in \{0,\ldots,L-1\}$.
	Then, with probability at least $1- \exp(-c \cdot \delta N)$,
	\begin{align*}
		&\norel \text{for all } x \in \RR^d \text{ with } \mnorm{x}_2 \geq C \cdot 3^L \cdot \lambda' \cdot \delta^{-1} \cdot \ln^{1/2}(1/\delta): \\
		&\# \left\{ i \in \{1,\ldots,N\}: \ \sgn\left(\left(W^{(j)} \Phi^{(j-1)}(x)\right)_i + b^{(j)}_i\right) \neq \sgn\left(\left(W^{(j)}\tilde{\Phi}^{(j-1)}(x)\right)_i\right)\right\} \leq \delta \cdot N.
	\end{align*}
\end{lemma}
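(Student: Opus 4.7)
The plan is to split the target count via the triangle inequality: a sign disagreement $\sgn((W^{(j)}\Phi^{(j-1)}(x))_i + b^{(j)}_i) \neq \sgn((W^{(j)}\tilde{\Phi}^{(j-1)}(x))_i)$ at index $i$ forces either a ``bias effect'', $\sgn((W^{(j)}\Phi^{(j-1)}(x))_i + b^{(j)}_i) \neq \sgn((W^{(j)}\Phi^{(j-1)}(x))_i)$, or a ``tessellation effect'', $\sgn((W^{(j)}\Phi^{(j-1)}(x))_i) \neq \sgn((W^{(j)}\tilde{\Phi}^{(j-1)}(x))_i)$. Denote the corresponding counts by $T_1(x)$ and $T_2(x)$. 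I would show each is at most $\delta N/2$, with probability at least $1-\exp(-c\delta N)$, uniformly over $x$ satisfying the hypothesis on $\mnorm{x}_2$.

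Two auxiliary bounds feed into both estimates. First, by induction on $\ell$ I would prove the uniform deviation bound $\mnorm{\Phi^{(\ell)}(x) - \tilde{\Phi}^{(\ell)}(x)}_2 \leq C \cdot 3^\ell \lambda'$, using $1$-Lipschitzness of $\relu$, a uniform spectral bound $\op{W^{(\ell)}} \leq 3$ (standard Gaussian concentration for $\mathcal{N}(0,2/N)$ entries), and $\mnorm{b^{(\ell)}}_2 \leq \sqrt{2}\lambda'$. Second, \Cref{lem:special} applied to the zero-bias network, extended from a pointwise to a uniform statement over $x$ via positive homogeneity and an $\eps$-net on $\SS^{d-1}$ (combined with the Lipschitz control of \Cref{thm:main_upper}), yields $\mnorm{\tilde{\Phi}^{(j-1)}(x)}_2 \geq \mnorm{x}_2/2$. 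Combined with the hypothesis $\mnorm{x}_2 \geq C \cdot 3^L \lambda' \delta^{-1} \ln^{1/2}(1/\delta)$, these two facts give $\mnorm{\Phi^{(j-1)}(x)}_2 \gtrsim \mnorm{x}_2$ and
\[
\mnorm{u_\Phi - u_{\tilde{\Phi}}}_2 \lesssim \frac{3^{j-1}\lambda'}{\mnorm{x}_2} \lesssim \frac{\delta}{\ln^{1/2}(1/\delta)},
\]
where $u_\Phi := \Phi^{(j-1)}(x)/\mnorm{\Phi^{(j-1)}(x)}_2$ and $u_{\tilde{\Phi}} := \tilde{\Phi}^{(j-1)}(x)/\mnorm{\tilde{\Phi}^{(j-1)}(x)}_2$.

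To bound $T_1(x)$, I apply \Cref{prop:farout} to $W^{(j)}$ with deterministic biases $b^{(j)}$ and input $\Phi^{(j-1)}(x)$. After rescaling $W^{(j)}$ to have standard Gaussian entries the effective bias bound becomes $\lambda'$, and the magnitude condition $\mnorm{\Phi^{(j-1)}(x)}_2 \geq C\delta^{-1}\lambda'$ is a direct consequence of the lower bound above. For the set $K$ required by \Cref{prop:farout}, I use a small thickening of the normalized zero-bias image $\tilde{K} := \{\tilde{\Phi}^{(j-1)}(v)/\mnorm{\tilde{\Phi}^{(j-1)}(v)}_2 : v \in \SS^{d-1}\}$. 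By positive homogeneity of $\tilde{\Phi}^{(j-1)}$ and the Lipschitz bound from \Cref{thm:main_upper} (together with the denominator bound from \Cref{lem:special}), the normalizing map $v \mapsto \tilde{\Phi}^{(j-1)}(v)/\mnorm{\tilde{\Phi}^{(j-1)}(v)}_2$ is $O(1)$-Lipschitz from $\SS^{d-1}$ into $\SS^{N-1}$, so $\ln\mathcal{N}(\tilde{K}, \eps) \lesssim d\ln(1/\eps)$ and $w^2((\tilde{K}-\tilde{K}) \cap \B_N(0, \eps)) \lesssim d\eps^2$. These complexity bounds, together with the closeness $\mnorm{u_\Phi - u_{\tilde{\Phi}}}_2 \lesssim \delta/\ln^{1/2}(1/\delta)$, match the hypotheses of \Cref{prop:farout} under the conditions $N \gtrsim dL^2\ln(1/\delta)$ and $\delta\ln(1/\delta) \leq c/L^2$. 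The bound $T_2(x) \leq \delta N/2$ follows analogously by applying \Cref{lem:recht_upper_original} to the normalized vectors $u_\Phi, u_{\tilde{\Phi}} \in K$, whose separation we have just bounded.

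The main obstacle is that the biased intermediate map $\Phi^{(j-1)}$ is not positively homogeneous, so its normalized image lacks the low-complexity structure enjoyed by $\tilde{K}$. Bridging this gap through a small thickening of $\tilde{K}$ combined with the deviation bound $\mnorm{\Phi^{(j-1)}(x) - \tilde{\Phi}^{(j-1)}(x)}_2 \lesssim 3^{j-1}\lambda'$ is precisely what produces the factor $3^L$ in the lower bound on $\mnorm{x}_2$, and simultaneously keeps the covering-number estimates of $K$ depending on $d$ rather than $N$. Once this uniformization is in place, the failure probability $\exp(-c\delta N)$ follows directly from \Cref{prop:farout}, and the remaining numerical hypotheses on $N$ and $\delta$ are exactly those needed to simultaneously satisfy the covering-number and Gaussian-width requirements for $T_1$ and $T_2$.
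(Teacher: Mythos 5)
Your overall architecture is exactly the paper's: split the count into a bias effect and a tessellation effect, prove the deviation bound $\mnorm{\Phi^{(\ell)}(x)-\tilde{\Phi}^{(\ell)}(x)}_2 \lesssim 3^L\lambda'$ by induction using $\op{W^{(\ell)}}\leq 3$, lower bound $\mnorm{\tilde{\Phi}^{(j-1)}(x)}_2$ via \Cref{prop:isom}, and then invoke \Cref{prop:farout} for the first count and \Cref{lem:recht_upper_original} for the second. Up to constants this matches the paper's proof (carried out in detail as \Cref{prop:theta}).

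However, there is a genuine gap in the step where you choose the set $K$ fed into \Cref{prop:farout} and \Cref{lem:recht_upper_original}. Both propositions require $K$ to actually contain the normalized point $u_\Phi = \Phi^{(j-1)}(x)/\mnorm{\Phi^{(j-1)}(x)}_2$ (for \Cref{prop:farout} one needs $\Phi^{(j-1)}(x)\in\cone(K)$, and for \Cref{lem:recht_upper_original} one needs both $u_\Phi$ and $u_{\tilde\Phi}$ in $K$), yet you establish the covering-number and Gaussian-width bounds only for the zero-bias image $\tilde K$ and propose to pass to a ``small thickening.'' A generic metric thickening of $\tilde K$ inside $\SS^{N-1}$ by radius $\eps'\asymp\delta\ln^{-1/2}(1/\delta)$ destroys the bound on $w\bigl((K-K)\cap\B_N(0,\eps)\bigr)$: the Dudley integral runs over scales $\theta$ below $\eps'$, where the entropy of a thickened set scales with the ambient dimension $N$ rather than $d$, giving $w^2\gtrsim N(\eps')^2$ and hence a condition $\delta\ln(1/\delta)\gtrsim 1$ that can never hold. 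The correct fix, which the paper carries out in \Cref{lem:tbound}, is to take $K$ to be the union of the normalized images of \emph{both} networks over $\{\mnorm{x}_2\geq\theta\}$ and to bound its entropy directly: for inputs of very large norm the biased normalized image is $\gamma$-close to $\tilde K$, while on the remaining annulus one covers the $d$-dimensional domain and uses that the normalizing map there has Lipschitz constant of order $3^L/\lambda'$. This is what produces covering numbers of order $(C\cdot 3^L/\gamma^2)^d$ and a Gaussian width of order $\sqrt{dL}\cdot\eps\cdot\ln^{1/2}(1/\eps)$ --- note that the hypothesis $\delta N\geq C\cdot dL$ in the statement exists precisely to absorb this extra factor of $L$, which your claimed bound $w^2\lesssim d\eps^2$ (which also drops a $\ln(1/\eps)$ from Dudley) cannot explain.
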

\begin{proof}[Proof sketch]
	Using \cite[Corollary~7.3.3]{vershynin_high-dimensional_2018}, we have 
	\[
	\PP\left(\text{for all } \ell \in \{0,\ldots,L-1\}: \ \op{W^{(\ell)}} \leq 3\right) \geq 1 - \exp(-c \cdot N).
	\]
	On this event, we get for arbitrary $x \in \RR^d$ and $\ell \in \{0,\ldots,L-1\}$, using that the $\relu$ is $1$-Lipschitz,
	\begin{align*}
		\mnorm{\Phi^{(\ell)}(x) - \tilde{\Phi}^{(\ell)}(x)}_2
		&= \mnorm{\relu\left(W^{(\ell)}\Phi^{(\ell - 1)}(x) + b^{(\ell)}\right) - \relu\left(W^{(\ell)} \tilde{\Phi}^{(\ell - 1)}(x)\right)}_2\\
		&\leq \mnorm{W^{(\ell)}\Phi^{(\ell - 1)}(x) + b^{(\ell)} - W^{(\ell)} \tilde{\Phi}^{(\ell - 1)}(x)}_2 \\
		&\leq 3 \cdot \mnorm{\Phi^{(\ell - 1)}(x) - \tilde{\Phi}^{(\ell - 1)}(x)}_2 + \mnorm{b^{(\ell)}}_2 \\
		&\leq 3 \cdot \mnorm{\Phi^{(\ell - 1)}(x) - \tilde{\Phi}^{(\ell - 1)}(x)}_2 + \lambda'\sqrt{2} \\
		&\leq 3 \cdot \left(3 \cdot \mnorm{\Phi^{(\ell - 2)}(x) - \tilde{\Phi}^{(\ell - 2)}(x)}_2 + \lambda'\sqrt{2}\right) + \lambda'\sqrt{2} \\
		&\leq \lambda' \sqrt{2} \cdot \sum_{j=0}^\ell 3^{j} \leq 3^L \cdot \lambda',
	\end{align*}
	where the fifth inequality follows from induction. 
	
	Based on this observation, we can derive the claim: 
	Note that 
	\begin{align*}
		&\norel\# \left\{ i \in \{1,\ldots,N\}: \ \sgn\left(\left(W^{(j)} \Phi^{(j-1)}(x)\right) + b^{(j)}_i\right) \neq \sgn\left(\left(W^{(j)}\tilde{\Phi}^{(j-1)}(x)\right)_i\right)\right\} \\
		&\leq \# \left\{ i \in \{1,\ldots,N\}: \ \sgn\left(\left(W^{(j)} \Phi^{(j-1)}(x)\right) + b^{(j)}_i\right) \neq \sgn\left(\left(W^{(j)}\Phi^{(j-1)}(x)\right)_i\right)\right\} \\
		&\hspace{0.5cm}+\# \left\{ i \in \{1,\ldots,N\}: \ \sgn\left(\left(W^{(j)} \Phi^{(j-1)}(x)\right) \right) \neq \sgn\left(\left(W^{(j)}\tilde{\Phi}^{(j-1)}(x)\right)_i\right)\right\}.
	\end{align*}
	The first summand can be bounded by $\delta N/2$ using \Cref{prop:farout}, noting that 
	\begin{align*}
		\mnorm{\Phi^{(j-1)}(x)}_2 &\geq \mnorm{\tilde{\Phi}^{(j-1)}(x)}_2 - \mnorm{\Phi^{(j-1)}(x) - \tilde{\Phi}^{(j-1)}(x)}_2 \\
		\overset{\text{Cor. \ref{prop:isom}}}&{\gtrsim} 3^L \cdot \lambda' \cdot \delta^{-1} \cdot \ln^{1/2}(1/\delta) - 3^L \cdot \lambda' \\
		&\gtrsim \delta^{-1} \cdot \lambda'. 
	\end{align*}
	For the second summand, we apply \Cref{lem:recht_upper_original} since we have 
	\begin{align*}
		\mnorm{\frac{\Phi^{(j-1)}(x)}{\mnorm{\Phi^{(j-1)}(x)}_2}-\frac{\tilde{\Phi}^{(j-1)}(x)}{\mnorm{\tilde{\Phi}^{(j-1)}(x)}_2}}_2 
		\overset{\text{\cshref{lem:diff_bound}}}&{\lesssim} \frac{\mnorm{\Phi^{(j-1)}(x) - \tilde{\Phi}^{(j-1)}(x)}_2}{\mnorm{\tilde{\Phi}^{(j-1)}(x)}_2} \\
		\overset{\text{Cor. \ref{prop:isom}}}&{\lesssim} \frac{3^L \cdot \lambda'}{\mnorm{x}_2} \\
		&\lesssim \frac{3^L \cdot \lambda'}{3^L \cdot \lambda' \cdot \delta^{-1} \cdot \ln^{1/2}(1/\delta)} = \delta \cdot \ln^{-1/2}(1/\delta). 
	\end{align*}
	Hence, we can also bound the second summand by $\delta N/2$, which implies the claim. 
\end{proof}
Based on \Cref{lem:hom_trace}, we can then prove the main theorem of this subsection.
\begin{theorem}\label{thm:grad_diff}
	There exist absolute constants $C, c >0$ such that the following holds.
	Let $\Phi: \RR^d \to \RR$ be a random ReLU network following \Cref{assum:1} of width $N$ and depth $L$ with \emph{constant} biases.
	Let $\lambda' > 0$ satisfy
	\[
	\underset{ i = \{1,\ldots, N\}}{\max_{\ell = \{0,\ldots, L-1\}}} \abs{b^{(\ell)}_i} \leq \frac{\lambda' \sqrt{2}}{\sqrt{N}}.
	\]
	Assume
	\begin{equation*}
		N \geq C \cdot dL
		\quad \text{and} \quad N \geq C \cdot L^3d\ln(N/d)\ln(N/(dL)).
	\end{equation*}
	Then, with probability at least $1- \exp(-c \cdot dL\ln(N/d))$,
	\[
	\underset{\mnorm{x}_2 \geq R}{\sup}
	\mnorm{\nablaa \Phi(x) - \nablaa \widetilde{\Phi}(x)}_2 \leq C \cdot \frac{dL^2 \cdot \ln(N/d)\cdot \ln(N/(dL))}{\sqrt{N}},
	\]
	where $R \defeq 3^L \cdot \lambda' \cdot \frac{N}{dL} \cdot \ln^{-1/2}(N/d)$.
\end{theorem}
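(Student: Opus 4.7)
The plan is to mirror the proof of \Cref{thm:main_upper}: telescope the gradient difference layer by layer, exploit the low-rank perturbation structure captured by \Cref{lem:hom_trace}, and apply \Cref{lem:pw_3} through a union bound over a suitable covering. Concretely, first pick $\delta \in (0, \ee^{-1})$ of order $\frac{dL \ln^{1/2}(N/d) \ln^{1/2}(N/(dL))}{N}$, so that the threshold $C \cdot 3^L \lambda' \delta^{-1} \ln^{1/2}(1/\delta)$ appearing in \Cref{lem:hom_trace} matches $R$ up to absolute constants and all hypotheses of that lemma are satisfied under our assumptions on $N$. Applying \Cref{lem:hom_trace} for each $j \in \{0,\ldots,L-1\}$ and taking a union bound then yields, with probability at least $1-L\exp(-c\delta N) \geq 1 - \exp(-c' dL \ln(N/d))$, that $\Tr|D^{(j)}(x) - \tilde{D}^{(j)}(x)| \leq \delta N$ uniformly over $j$ and over $x$ with $\mnorm{x}_2 \geq R$; this is the trace bound used throughout.

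Next, I invoke the telescoping identity $\prod_\ell A_\ell - \prod_\ell \tilde{A}_\ell = \sum_j (\prod_{\ell>j} A_\ell)(A_j - \tilde{A}_j)(\prod_{\ell<j} \tilde{A}_\ell)$ applied to the formal gradient to write
\begin{equation*}
\nablaa \Phi(x) - \nablaa \tilde{\Phi}(x) = \sum_{j=0}^{L-1} t_j(x),
\end{equation*}
where $t_j(x)^T = W^{(L)} \prod_{\ell=L-1}^{j+1}[D^{(\ell)}(x) W^{(\ell)}] \cdot P_j(x)\, W^{(j)} \cdot \jac\tilde{\Phi}^{(j-1)}(x)$ and $P_j(x) \defeq D^{(j)}(x) - \tilde{D}^{(j)}(x)$. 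Define $A_j(x) \defeq P_j(x) W^{(j)} \jac \tilde{\Phi}^{(j-1)}(x) \in \RR^{N \times d}$; the trace bound forces $\rang(A_j(x)) \leq \delta N$, while \Cref{thm:main_upper} combined with standard norm bounds on $W^{(j)}$ yields $\op{A_j(x)} \leq C$ uniformly in $x$. Consequently, \Cref{lem:pw_3} applied pointwise with the low-rank matrix $A_j(x)$ gives, for each fixed configuration of activation patterns, the bound $\mnorm{t_j(x)}_2 \lesssim \sqrt{\delta N} + t$ holding with probability at least $1 - \exp(-ct^2)$.

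To make this pointwise bound uniform in $x$, I discretize over the finitely many values that $\jac \tilde{\Phi}^{(j-1)}(x)$, $P_j(x)$, and $(D^{(\ell)}(x))_{\ell > j}$ can take. The zero-bias Jacobian contributes at most $(\ee N/(d+1))^{L(d+1)}$ configurations by \cite[Lemma~5.7]{geuchen2024upper}, the support and sign pattern of $P_j(x)$ contribute at most $\binom{N}{\delta N} 2^{\delta N}$ configurations, and the upper-layer activation patterns are controlled analogously using the trace bound. After a union bound over configurations and over $j$, followed by the triangle inequality, choosing $t$ so that $t^2$ dominates the total log-count of configurations yields $\sum_j \mnorm{t_j(x)}_2 \leq C \cdot \frac{dL^2 \ln(N/d) \ln(N/(dL))}{\sqrt{N}}$ at the stated overall probability.

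The hardest part is controlling the left factor $\prod_{\ell > j}[D^{(\ell)}(x) W^{(\ell)}]$ uniformly in $x$, because the $D^{(\ell)}(x)$ matrices depend on both the direction and the magnitude of $x$ through the biases and are not homogeneous like their zero-bias analogues. The trace bound is precisely what makes this manageable: it ensures that each $D^{(\ell)}(x)$ differs from $\tilde{D}^{(\ell)}(x)$ in at most $\delta N$ diagonal entries, so that after conditioning on the zero-bias activation pattern the number of compatible $D^{(\ell)}(x)$ patterns is bounded by $\binom{N}{\delta N} \cdot 2^{\delta N}$, keeping the combined covering small enough for the union bound to close.
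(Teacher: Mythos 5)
Your overall architecture (choose $\delta \asymp \frac{dL}{N}\ln(N/d)$ so that the threshold in \Cref{lem:hom_trace} is dominated by $R$, telescope the gradient difference over layers, and use the sparsity of $P_j(x) = D^{(j)}(x)-\tilde D^{(j)}(x)$) matches the paper's proof of \Cref{prop:grad_diff}. However, there is a genuine quantitative gap in the core estimate. You set $A_j(x) = P_j(x)W^{(j)}\jac\tilde\Phi^{(j-1)}(x)$, assert only $\op{A_j(x)}\leq C$, and conclude from \Cref{lem:pw_3} that $\mnorm{t_j(x)}_2 \lesssim \sqrt{\delta N}+t$. Summing over $j$ and taking $t^2$ of the order of your total configuration count $\asymp dL^2\ln(N/d)\ln(N/(dL))$ gives a bound of order $L^2\sqrt{d\ln(N/d)\ln(N/(dL))}$, which exceeds the target $\frac{dL^2\ln(N/d)\ln(N/(dL))}{\sqrt{N}}$ by a factor of order $\sqrt{N/(d\ln(N/d)\ln(N/(dL)))}\gg 1$. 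In other words, with $\op{A_j(x)}$ only bounded by an absolute constant there is no mechanism producing the $\sqrt{N}$ in the denominator, and the argument cannot close.

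The missing ingredient is the smallness of the middle factor, which is exactly the $1/L$-gain that drives the proof of \Cref{thm:main_upper}: writing $P_j = P_j\abs{P_j}$ and noting that $\abs{P_j(x)}$ selects at most $\delta N$ rows of the Gaussian matrix $W^{(j)}$ acting on the rank-$d$ matrix $\jac\tilde\Phi^{(j-1)}(x)$, one obtains via \Cref{lem:pw_4} and a union bound (this is \Cref{lem:auxil}\,(2)) that
\begin{equation*}
\sup_{x}\;\op{\abs{P_j(x)}\,W^{(j)}\jac\tilde\Phi^{(j-1)}(x)} \;\lesssim\; \sqrt{\delta\ln(1/\delta)}+\sqrt{\tfrac{Ld\ln(N/d)}{N}} \;\lesssim\; \tfrac1L ,
\end{equation*}
while the left factor $\op{W^{(L)}\jac\Phi^{(j+1)\to(L-1)}(x)P_j(x)}$ is of order $\sqrt{N\delta\ln(1/\delta)}$. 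The product of the two is $\asymp \sqrt N\,\delta\ln(1/\delta)+\sqrt{\delta\ln(1/\delta)Ld\ln(N/d)}$, and only then does summing over $L$ layers give the stated $\frac{dL^2\ln(N/d)\ln(N/(dL))}{\sqrt N}$. A secondary difference: to control the left factor uniformly over $x$ you propose enumerating the perturbed activation patterns of the \emph{biased} network layer by layer; the paper instead runs a downward induction, splitting $\jac\Phi^{(j+1)\to(L-1)}$ into $\jac\tilde\Phi^{(j+1)\to(L-1)}$ (controlled by \Cref{prop:advanced_lip}) plus a difference handled by the induction hypothesis. Your enumeration is plausible as a counting device, but it is not the decisive issue — the fatal omission is the $P_j=P_j\abs{P_j}$ splitting and the resulting $O(1/L)$ bound on the middle factor.
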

\renewcommand{\DD}{\widetilde{D}}
\begin{proof}[Proof sketch]
	We write $\DD^{(\ell)}(x)$ for the $D$-matrices arising by passing an input $x$ through the homogeneous network $\tilde{\Phi}$.
	
	In the setting of \Cref{lem:hom_trace}, we let $\delta \asymp \frac{dL}{N} \cdot \ln(N/d)$ and hence get 
	\[
	3^L \cdot \lambda \cdot \delta^{-1} \cdot \ln^{1/2}(1/\delta)
	\lesssim 3^L \cdot \lambda \cdot \frac{N}{dL} \cdot \ln^{-1/2}(N/d). 
	\]
	Similar to the proof of \Cref{thm:main_upper}, we use a decomposition of the difference of the gradients, namely 
	\begin{align}
		&\norel \underset{\mnorm{x}_2 \geq R}{\sup}
		\mnorm{\nablaa \Phi(x) - \nablaa \widetilde{\Phi}(x)}_2 \nonumber\\ 
		&\leq \sum_{j=0}^{L-1}\left(\underset{\mnorm{x}_2 \geq R}{\sup}\op{W^{(L)}\jac \Phi^{(j+1) \to (L-1)}(x)\left(D^{(j)}(x) - \DD^{(j)}(x)\right)\jac \tilde{\Phi}^{(j-1)}(x)} \right) \nonumber\\
		&\leq \sum_{j=0}^{L-1} \left(\underset{\mnorm{x}_2 \geq R}{\sup}
		\op{W^{(L)}\jac \Phi^{(j+1) \to (L-1)}(x)\left(D^{(j)}(x) - \DD^{(j)}(x)\right)}\right) \nonumber\\
		\label{eq:sketcheq}
		&\hspace{1.3cm}\cdot \left(\underset{\mnorm{x}_2 \geq R}{\sup}\op{\abs{D^{(j)}(x) - \DD^{(j)}(x)}\jac \tilde{\Phi}^{(j-1)}(x)}\right).
	\end{align}
	
	We set $\mathcal{A} \defeq \{A \in \diag\{1,-1,0\}^{N \times N}: \ \Tr\abs{A} \leq \delta N\}$.
	Firstly, note that according to \Cref{lem:hom_trace}, we may assume that with high probability
	$D^{(j)}(x) - \DD^{(j)}(x) \in \mathcal{A}$ for every $j \in \{0,\ldots,L-1\}$.
	
	Each summand of the latter sum is then considered individually.
	We fix $j \in \{0,\ldots,L-1\}$ and get 
	\begin{align*}
		&\norel\underset{\mnorm{x}_2 \geq R}{\sup}
		\op{W^{(L)}\jac \Phi^{(j+1) \to (L-1)}(x)\left(D^{(j)}(x) - \DD^{(j)}(x)\right)} \\
		&\leq \underset{\mnorm{x}_2 \geq R, A \in \mathcal{A}}{\sup}
		\op{W^{(L)}\jac \Phi^{(j+1) \to (L-1)}(x)A} \\
		&\leq \underset{\mnorm{x}_2 \geq R, A \in \mathcal{A}}{\sup}
		\op{W^{(L)}\left[\jac \Phi^{(j+1) \to (L-1)}(x) - \jac \tilde{\Phi}^{(j+1) \to (L-1)}(x)\right]A} \\ 
		&\hspace{0.4cm}+\underset{\mnorm{x}_2 \geq R, A \in \mathcal{A}}{\sup}
		\op{W^{(L)}\jac \tilde{\Phi}^{(j+1) \to (L-1)}(x)A}.
	\end{align*}
	The first summand can be bounded using an inductive argument. 
	The second summand that only involves the homogeneous network can be bounded using a generalized version of \Cref{thm:main_upper}; see \Cref{prop:advanced_lip}. 
	In total we get a bound on the first factor in \eqref{eq:sketcheq}. 
	
	Note that the second factor in \eqref{eq:sketcheq} only involves the homogeneous network $\tilde{\Phi}$. 
	We may thus use 
	\[
	\underset{\mnorm{x}_2 \geq R}{\sup}\op{\left(D^{(j)}(x) - \DD^{(j)}(x)\right)\jac \tilde{\Phi}^{(j-1)}(x)} 
	\leq \underset{\mnorm{x}_2 \geq R, A \in \mathcal{A}}{\sup}\op{A\jac \tilde{\Phi}^{(j-1)}(x)} 
	\]
	and bound the latter expression analogously to the technique presented in the proof sketch of \Cref{thm:main_upper}. 
	Finally, we perform a union bound over $j \in \{0,\ldots,L-1\}$ to get the desired result. 
\end{proof}
\renewcommand{\DD}{\hat{D}}
\subsection{Bounds for the Lipschitz constant}
In this section, we discuss how \Cref{thm:grad_diff} may be used to establish upper and lower bounds on the Lipschitz constant of random ReLU networks with non-zero biases. 
Note that we assumed the biases to be constant in \Cref{thm:grad_diff}.
When considering randomly drawn biases, the strategy is hence to \emph{condition} on the biases and to assume that they are upper bounded in absolute value by $\frac{\lambda' \sqrt{2}}{\sqrt{N}}$ for some $\lambda' \geq 0$. 
We refer to \Cref{sec:bias_bounds_proofs} for detailed proofs of all the results presented in this section. 

Our discussion starts with an upper bound.
For this we need to assume that the biases are drawn according to \Cref{assum:3}, i.e., we assume that the biases are drawn according to symmetric distributions and satisfy a small-ball property.
Note again that this is equivalent to all the entries of the bias vectors having
absolutely continuous distributions with density bounded almost everywhere.
\begin{theorem}\label{thm:main_upper_bias}
	There exist constants $C,c> 0$
	with the following property:
	Let $\Phi: \RR^d \to \RR$ be a random ReLU network with symmetric biases that satisfy a small-ball condition with constant $C_\tau > 0$, and with $L$ hidden layers of width $N$ as in  \Cref{assum:3}.
	Moreover, let
	\[
	\PP\left(\underset{j \in \{1,\ldots,N\}}{\underset{\ell \in \{0,\ldots,L-1\}}{\sup}} \abs{b^{(\ell)}_j} \leq 
	\lambda\right) \geq 1 - \exp(-d\ln(N/d))
	\]
	for some $\lambda \geq C_\tau^{-1}$.
	We assume that 
	\begin{gather*}
		N \geq C \cdot d, \quad 
		N \geq C \cdot dL^2 \cdot \left(\max\{L, \ln(\lambda C_\tau N/d)\}\right)^2 \cdot \ln(N/d), \quad N \geq C \cdot dL^3 \ln^2(N/d)\ln^2(N/(dL)).
	\end{gather*}
	Then, with probability at least $1- \exp(-c \cdot d\ln(N/d))$,
	\[
	\lip_2(\Phi) \leq \underset{x \in \RR^d}{\sup} \mnorm{\nablaa \Phi(x)}_2 \leq C \cdot \sqrt{d} \cdot \left(\sqrt{L} + \sqrt{\ln (\lambda C_\tau N /d)}\right).
	\]
\end{theorem}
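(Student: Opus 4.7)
The plan is to combine the zero-bias upper bound of \Cref{thm:main_upper} with the gradient-comparison estimate of \Cref{thm:grad_diff}, following the two-region strategy sketched in \Cref{sec:bias}. First I would condition on the event $E_{\mathrm{bias}} = \{ \max_{\ell,j}\abs{b^{(\ell)}_j} \leq \lambda \}$, which by assumption holds with probability at least $1-\exp(-d\ln(N/d))$; on this event the biases are bounded deterministic constants of the type required by \Cref{thm:grad_diff}, with parameter $\lambda' \defeq \lambda\sqrt{N/2}$. Setting $R \defeq 3^L \cdot \lambda' \cdot \frac{N}{dL} \cdot \ln^{-1/2}(N/d)$ as in \Cref{thm:grad_diff}, one splits
\[
\underset{x \in \RR^d}{\sup}\mnorm{\nablaa\Phi(x)}_2 = \max\Bigl\{\underset{\mnorm{x}_2\geq R}{\sup}\mnorm{\nablaa\Phi(x)}_2,\ \underset{x \in \B_d(0,R)}{\sup}\mnorm{\nablaa\Phi(x)}_2\Bigr\}
\]
and estimates each supremum separately.

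For the far region, \Cref{thm:grad_diff} directly yields, under the assumption $N\geq C \cdot dL^3\ln^2(N/d)\ln^2(N/(dL))$,
\[
\underset{\mnorm{x}_2\geq R}{\sup}\mnorm{\nablaa\Phi(x)-\nablaa\tilde\Phi(x)}_2 \lesssim \frac{dL^2\ln(N/d)\ln(N/(dL))}{\sqrt{N}} \lesssim \sqrt{dL},
\]
while \Cref{thm:main_upper} applied to the zero-bias network $\tilde\Phi$ gives $\mnorm{\nablaa\tilde\Phi(x)}_2 \lesssim \sqrt{d\ln(N/d)}$ uniformly in $x\neq 0$ (extending from the sphere by positive homogeneity of the zero-bias formal gradient). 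The triangle inequality then bounds the far-region supremum by $C \cdot \sqrt{d} \cdot (\sqrt{L}+\sqrt{\ln(N/d)}) \lesssim \sqrt{d}(\sqrt{L}+\sqrt{\ln(\lambda C_\tau N/d)})$, where the last step uses the hypothesis $\lambda \geq C_\tau^{-1}$, i.e.\ $\lambda C_\tau \geq 1$.

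For the near region, the plan is to mimic the $\eps$-net argument from the proof sketch of \Cref{thm:main_upper}, with the sphere replaced by $\B_d(0,R)$ and with \Cref{corr:tess} (which accommodates genuinely random biases with a small-ball constant) replacing \Cref{lem:recht_upper_original}. Concretely, I would pick $\delta \asymp (d/N) \cdot \ln(\lambda C_\tau N/d)$ and $\eps \asymp C_\tau^{-1} \cdot \delta \cdot \ln^{-1/2}(1/\delta)$, and cover $\B_d(0,R)$ by a net $\neps$ of cardinality $\ln\abs{\neps} \lesssim d \cdot \ln(R/\eps) \lesssim d \cdot (L+\ln(\lambda C_\tau N/d))$; the exponential-in-$L$ growth of $R$ is precisely what produces the extra $\sqrt{L}$ factor. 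A pointwise bound at net points via \Cref{lem:pw_3}, combined with a union bound at level $t^2 \asymp d(L+\ln(\lambda C_\tau N/d))$, yields $\mnorm{\nablaa\Phi(x^*)}_2 \lesssim \sqrt{d}(\sqrt{L}+\sqrt{\ln(\lambda C_\tau N/d)})$ for $x^* \in \neps$. The layerwise decomposition from the proof of \Cref{thm:main_upper}, together with \Cref{corr:tess} applied inductively at each layer, then controls the increment $\mnorm{\nablaa\Phi(x^*)-\nablaa\Phi(y)}_2$ for $\mnorm{x^*-y}_2\leq\eps$.

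The hard part will be this inductive layerwise control: in order to apply \Cref{corr:tess} at the $j$-th layer to the pair $\Phi^{(j-1)}(x^*),\Phi^{(j-1)}(y)$, one needs both an intermediate Lipschitz bound $\lip_{2\to 2}(\fres{\Phi^{(j-1)}}{\B_d(0,R)}) \lesssim 1$ (so that the images remain within distance $O(\eps)$) and control of the covering number and Gaussian width of the pushed-forward set $\{\Phi^{(j-1)}(x):x\in\B_d(0,R)\}$ at the scale $\eps$. I would handle this by a nested induction in which the intermediate Lipschitz bounds and the final target are proved simultaneously, carefully propagating the factors $\lambda$ and $C_\tau$ through the small-ball constants at every layer. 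All other ingredients (pointwise bound, net cardinality, far-region estimate) then follow essentially from the tools already assembled in \Cref{sec:pw,sec:upper,sec:hom_diff}.
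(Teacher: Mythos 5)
Your proposal is correct and follows essentially the same route as the paper's proof: conditioning on the bias-boundedness event, splitting at the radius $R = 3^L\lambda'\frac{N}{dL}\ln^{-1/2}(N/d)$, invoking \Cref{thm:grad_diff} together with \Cref{thm:main_upper} on the far region, and rerunning the zero-bias net argument on $B_d(0,R)$ with \Cref{corr:tess} in place of \Cref{lem:recht_upper_original}, with the $\sqrt{L}$ loss coming from $\ln\abs{\neps}\lesssim d(L+\ln(\lambda C_\tau N/d))$. The "hard part" you flag — the nested induction giving intermediate Lipschitz bounds and covering/width control of the pushed-forward sets — is exactly what the paper carries out in \Cref{prop:not_last_layer} and \Cref{lem:tess_bias_lip}.
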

\begin{proof}[Proof sketch]
	We set $\lambda' \defeq \sqrt{N/2} \cdot \lambda$. 
	As in \Cref{thm:grad_diff}, we set $R \defeq 3^L \cdot \lambda' \cdot \frac{N}{dL} \cdot \ln^{-1/2}(N/d)$ and use the estimate 
	\begin{align}
		\sup_{x \in \RR^d} \mnorm{\nablaa \Phi(x)}_2 
		&\leq \sup_{x \in B_d(0,R)} \mnorm{\nablaa \Phi(x)}_2 + \sup_{x \in \RR^d, \mnorm{x}_2 \geq R} \mnorm{\nablaa \Phi(x) - \nablaa \tilde{\Phi}(x)}_2
		+ \sup_{x \in \RR^d} \mnorm{\nablaa \widetilde{\Phi}(x)}_2 \nonumber\\
		\label{eq:pat}
		&= \sup_{x \in B_d(0,R)} \mnorm{\nablaa \Phi(x)}_2 + \sup_{x \in \RR^d, \mnorm{x}_2 \geq R} \mnorm{\nablaa \Phi(x) - \nablaa \tilde{\Phi}(x)}_2
		+ \sup_{x \in \SS^{d-1}} \mnorm{\nablaa \widetilde{\Phi}(x)}_2,
	\end{align}
	where the last equality uses the fact that the formal gradient of the homogeneous network $\tilde{\Phi}$ is invariant with respect to scaling with positive scalars. 
	The second summand can be controlled by conditioning on the biases and using \Cref{thm:grad_diff} and the third summand can be bounded 
	using \Cref{thm:main_upper}.
	The first summand $\sup_{x \in B_d(0,R)} \mnorm{\nablaa \Phi(x)}_2$
	can be estimated similarly to the proof of \Cref{thm:main_upper} using \Cref{corr:tess} instead of \Cref{lem:recht_upper_original}. 
\end{proof}
As in \Cref{sec:upper}, 
this result can be used to get an upper bound on $\lip_p(\Phi)$ for every $p \in [1,\infty]$. 
\begin{corollary}\label{corr:upper_bias}
	On the same event as in \Cref{thm:main_upper_bias}, for every $p \in [1,\infty]$,
	\[
	\lip_p(\Phi)  \leq C \cdot \begin{cases}\sqrt{d} \cdot \left(\sqrt{L} + \sqrt{\ln (\lambda C_\tau N /d)}\right)&\text{if } p \in [1,2], \\
		d^{1-1/p} \cdot \left(\sqrt{L} + \sqrt{\ln (\lambda C_\tau N /d)}\right)&\text{if } p \in [2,\infty].
	\end{cases}
	\]
\end{corollary}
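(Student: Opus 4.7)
The plan is to reduce the bound for general $p \in [1,\infty]$ to the case $p=2$ that has already been established in \Cref{thm:main_upper_bias}, via the same mechanism used to derive \Cref{corr:lp_upper} from \Cref{thm:main_upper}. The two ingredients I would combine are: (i) the representation $\lip_p(\Phi) \leq \sup_{x \in \RR^d} \mnorm{\nablaa \Phi(x)}_{p'}$ with $1/p + 1/p' = 1$ from \Cref{thm:glob}, and (ii) the elementary $\ell^p$-norm comparisons recorded in \eqref{eq:norm_equiv}, namely $\mnorm{\,\cdot\,}_{q} \leq \mnorm{\,\cdot\,}_{2}$ for $q \in [2,\infty]$ and $\mnorm{\,\cdot\,}_{q} \leq d^{1/q - 1/2}\mnorm{\,\cdot\,}_{2}$ for $q \in [1,2]$.

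Concretely, I would work on the event of \Cref{thm:main_upper_bias}, on which the pointwise estimate
\[
\sup_{x \in \RR^d} \mnorm{\nablaa \Phi(x)}_2 \leq C \cdot \sqrt{d} \cdot \bigl(\sqrt{L} + \sqrt{\ln(\lambda C_\tau N/d)}\bigr)
\]
holds. For $p \in [1,2]$ one has $p' \in [2,\infty]$, so the monotonicity of $\ell^p$-norms gives $\mnorm{\nablaa \Phi(x)}_{p'} \leq \mnorm{\nablaa \Phi(x)}_2$, and combining with \Cref{thm:glob} yields the claimed bound $\lip_p(\Phi) \leq C \sqrt{d}(\sqrt{L} + \sqrt{\ln(\lambda C_\tau N/d)})$ directly. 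For $p \in [2,\infty]$ one has $p' \in [1,2]$, and the second inequality in \eqref{eq:norm_equiv} applied with $q = p'$ gives $\mnorm{\nablaa \Phi(x)}_{p'} \leq d^{1/p' - 1/2} \mnorm{\nablaa \Phi(x)}_2 = d^{1/2 - 1/p} \mnorm{\nablaa \Phi(x)}_2$. Plugging in the $\ell^2$-bound from \Cref{thm:main_upper_bias} then produces the stated $d^{1-1/p}(\sqrt{L} + \sqrt{\ln(\lambda C_\tau N/d)})$ bound.

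There is essentially no obstacle here: the entire content of the corollary is packaged in \Cref{thm:main_upper_bias}, and the passage to general $p$ is a one-line comparison of norms applied to the Hölder dual exponent. If one wished to polish the write-up, one could bundle both cases into the unified formula $\lip_p(\Phi) \leq d^{\max\{1/2,\,1-1/p\}} \cdot C (\sqrt{L} + \sqrt{\ln(\lambda C_\tau N/d)})$, which agrees with the piecewise statement. No new probabilistic estimate is required, and in particular the event and its probability are inherited verbatim from \Cref{thm:main_upper_bias}.
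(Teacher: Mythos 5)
Your argument is correct and is essentially the paper's own: the corollary is obtained from \Cref{thm:main_upper_bias} exactly as \Cref{corr:lp_upper} is obtained from \Cref{thm:main_upper}, via the standard comparisons \eqref{eq:norm_equiv} between $\ell^{p}$- and $\ell^{2}$-norms. Whether one applies these comparisons to the increment $\|x-y\|_p$ in the Lipschitz quotient (as the paper phrases it) or to the dual norm $\|\nablaa\Phi(x)\|_{p'}$ of the gradient via \Cref{thm:glob} (as you do) is an immaterial reformulation, and your factor $d^{1/p'-1/2}=d^{1/2-1/p}$ is the right one.
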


We emphasize that, unlike in the case of zero-bias networks discussed in \Cref{corr:lp_upper}, the upper bound derived in \Cref{corr:upper_bias} is 
in fact not entirely independent of the number of hidden layers $L$. 
However, the dependence is not exponential as in \cite[Theorem~3.3]{geuchen2024upper}, but only a factor of $\sqrt{L}$ appears.
This is due to the fact that
bounding the first summand in \eqref{eq:pat}
requires an $\eps$-net of $B_d(0,R)$ where $R$ grows exponentially in $L$, so that the size of the net grows exponentially in $L$, too.
Taking the square root of the logarithm then results in the factor $\sqrt{L}$.

\begin{rem}
	\begin{enumerate}
		\item{We note that the statement of \Cref{thm:main_upper_bias} is \emph{invariant} under scaling of the biases, i.e., under replacing $b^{(\ell)}$ by $\alpha \cdot b^{(\ell)}$
			for each $\ell \in \{0,\ldots,L-1\}$ for a fixed scaling parameter $\alpha > 0$.
			This is due to the fact that such a scaling changes the parameter $\lambda$ appearing in \Cref{thm:main_upper_bias} by a factor of $\alpha$ and changes the parameter $C_\tau$ by a factor of $1/\alpha$.}
		\item{
			Let $\mathcal{D}_\tau$ be a real-valued, symmetric, sub-exponential random distribution
			with small ball constant $C_\tau > 0$.
			Note that this in particular includes the case that $\mathcal{D}_\tau$ is sub-gaussian. 
			Assume that 
			\[
			b_j^{(\ell)} \iid \mathcal{D}_\tau \quad \text{for } \ell \in \{0,\ldots,L-1\} \text{ and } j \in \{1,\ldots,N\}.
			\]
			Note that this implies 
			\[
			\sup_{\ell \in \{0,\ldots,L-1\}, j \in \{1,\ldots,N\}} \abs{b_j^{(\ell)}} \leq t
			\]
			with probability at least $1- \exp(\ln(NL) - c \cdot t / \mnorm{\mathcal{D}_\tau}_{\psi_1})$ for every $t \geq 0$ for an absolute constant $c > 0$.
			If $N \gtrsim  \ln(NL)$, we thus get 
			\[
			\sup_{\ell \in \{0,\ldots,L-1\}, j \in \{1,\ldots,N\}} \abs{b_j^{(\ell)}} \lesssim N \cdot \mnorm{\mathcal{D}_\tau}_{\psi_1}
			\]
			with probability at least $1- \exp(- N) \geq 1 - \exp(- d\ln(N/d))$.
			Using \Cref{thm:main_upper_bias}, we thus get 
			\[
			\lip_2(\Phi) \lesssim \sqrt{d} \cdot \left(\sqrt{L} + \sqrt{\ln\left(\mnorm{\mathcal{D}_\tau}_{\psi_1}C_\tau N/d \right)}\right).
			\]
			with probability at least $1- \exp(-c \cdot d\ln(N/d))$.
			Note that this in particular covers the cases that $\mathcal{D}_\tau = \mathcal{N}(0, \sigma^2)$ and 
			$\mathcal{D}_\tau = \Unif([- \sigma, \sigma])$ for some $\sigma > 0$.
		}
	\end{enumerate}
\end{rem}

Fortunately, in the case of lower bounds we can generalize the results from \Cref{sec:pw,sec:lower} to the case of networks with general biases without \emph{any} special assumptions on the distribution of the biases. 
This is due to the fact that, when lower bounding the gradient, we may restrict ourselves to inputs with a potentially very large Euclidean norm. 
Note that, no matter what distribution the biases have, we can always for every $\eta > 0$ find $\lambda' = \lambda'(\eta)> 0$ such that 
\[
\PP\left(\underset{j \in \{1,\ldots,N\}}{\underset{\ell \in \{0,\ldots,L-1\}}{\sup}} \abs{b^{(\ell)}_j} \leq 
\frac{\lambda' \sqrt{2}}{\sqrt{N}}\right) \geq 1 - \eta. 
\]
We start by stating the main theorem about the Lipschitz constant with respect to the $\ell^1$-norm.
\begin{theorem}\label{thm:main_lower_bias}
	There exist absolute constants $C, c > 0$ such that the following holds. Let $\Phi:\RR^d \to \RR$ be a random ReLU network with $L$ hidden layers of width $N$ as in \Cref{assum:1}.
	Assume $N \geq C \cdot d$,
	\begin{equation*} 
		N \geq C^L \cdot d \cdot \ln^2(N/d)\cdot L^{4L+10}, \quad
		N \geq C \cdot d \cdot L^5 \cdot \ln^2(N/d) \cdot \ln^2(N/(dL)), \quad \text{and} \quad
		d \geq C \cdot L.
	\end{equation*}
	Then, with probability at least $1-\exp(-c \cdot d/L)$,
	\[
	\lip_1(\Phi) 
	\geq c \cdot  \frac{1}{\sqrt{L}}  \cdot \sqrt{d}.
	\]
\end{theorem}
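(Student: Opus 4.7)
The plan is to reduce to the zero-bias case. Let $\widetilde\Phi$ denote the \textit{homogeneous companion} of $\Phi$, obtained by setting all biases to zero. The two ingredients are: (i) the zero-bias lower bound of \Cref{thm:main_lower}, applied to $\widetilde\Phi$ with the fixed direction $\nu = e_1 \in \SS^{d-1}$, and (ii) \Cref{thm:grad_diff}, which controls the deviation $\nablaa \Phi - \nablaa \widetilde\Phi$ on $\{x \in \RR^d : \mnorm{x}_2 \geq R\}$. The positive homogeneity of $\widetilde\Phi$ then allows us to scale a witness $x^* \in \SS^{d-1}$ (at which the zero-bias gradient is known to be large) into this far-out regime, where the gradient of $\Phi$ must also be large.

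Concretely, I would condition on the biases and set
$\lambda' \defeq \tfrac{\sqrt{N}}{\sqrt{2}} \cdot \max_{\ell, j} \abss{b_j^{(\ell)}}$
and $R \defeq 3^L \cdot \lambda' \cdot \tfrac{N}{dL} \cdot \ln^{-1/2}(N/d)$, so that the bias hypothesis of \Cref{thm:grad_diff} holds by construction. The assumptions on $N$ in the theorem statement imply the hypotheses of \Cref{thm:grad_diff}, whose conclusion gives, conditional on the biases and with probability at least $1 - \exp(-c \cdot dL \ln(N/d))$ over the weights,
\begin{equation*}
\sup_{\mnorm{x}_2 \geq R} \mnorm{\nablaa \Phi(x) - \nablaa \widetilde\Phi(x)}_2 \leq C \cdot \frac{dL^2 \ln(N/d) \ln(N/(dL))}{\sqrt{N}}.
\end{equation*}
Crucially, the right-hand side does \emph{not} depend on $\lambda'$, so the estimate holds unconditionally with the same probability; and the hypothesis $N \geq C \cdot dL^5 \ln^2(N/d) \ln^2(N/(dL))$ is tuned precisely so that this upper bound is at most $\tfrac{c'}{4} \sqrt{d/L}$, where $c'$ is the constant furnished by \Cref{thm:main_lower}.

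Applying \Cref{thm:main_lower} to $\widetilde\Phi$ with $\nu = e_1$, on a second event of probability at least $1 - \exp(-c \cdot d/L)$ one can choose $x^* \in M_{\widetilde\Phi} \cap \SS^{d-1}$ with $\langle \nablaa \widetilde\Phi(x^*), e_1 \rangle \geq \tfrac{c'}{2} \sqrt{d/L}$. By the positive $1$-homogeneity of $\widetilde\Phi$, the preactivations at $Rx^*$ differ from those at $x^*$ only by the factor $R$, so $Rx^* \in M_{\widetilde\Phi}$ with $\nablaa \widetilde\Phi(Rx^*) = \nablaa \widetilde\Phi(x^*)$; moreover, on a small open neighborhood $U$ of $Rx^*$ the sign patterns defining the matrices $D^{(\ell)}(\cdot)$ for $\widetilde\Phi$ remain unchanged, so $\nablaa \widetilde\Phi$ is constant on $U$. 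Since $M_\Phi$ has full Lebesgue measure, we may pick $x_0 \in U \cap M_\Phi$ with $\mnorm{x_0}_2 \geq R$.

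Then $\nablaa \widetilde\Phi(x_0) = \nablaa \widetilde\Phi(x^*)$, and the triangle inequality combined with \Cref{thm:up_low_bound} yields
\begin{align*}
\lip_1(\Phi) &\geq \mnorm{\nablaa \Phi(x_0)}_\infty \geq \abs{\langle \nablaa \widetilde\Phi(x_0), e_1 \rangle} - \mnorm{\nablaa \Phi(x_0) - \nablaa \widetilde\Phi(x_0)}_2 \\
&\geq \tfrac{c'}{2} \sqrt{d/L} - \tfrac{c'}{4} \sqrt{d/L} = \tfrac{c'}{4} \sqrt{d/L},
\end{align*}
which is the required bound; a union bound over the two failure events absorbs the exceptional probability. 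The main subtlety I anticipate is ensuring that $\nablaa \widetilde\Phi$ is genuinely locally constant around $Rx^*$, so that we can freely perturb from $Rx^*$ into $M_\Phi$ without affecting the size of $\langle \nablaa \widetilde\Phi(\cdot), e_1 \rangle$. This is a standard consequence of the cellular structure of ReLU networks, but requires a careful verification that the preactivations of $\widetilde\Phi$ at $Rx^*$ are all nonzero (which follows from $x^* \in M_{\widetilde\Phi}$ and homogeneity).
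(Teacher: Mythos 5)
Your overall strategy -- condition on the biases, set $\lambda'$ and $R$ as in \Cref{thm:grad_diff}, subtract the deviation $\sup_{\mnorm{x}_2\geq R}\mnorm{\nablaa\Phi(x)-\nablaa\widetilde\Phi(x)}_2$, and lower bound the zero-bias contribution via \Cref{thm:main_lower} -- is exactly the paper's, and the bookkeeping (choice of $R$, the role of $N\geq C\cdot dL^5\ln^2(N/d)\ln^2(N/(dL))$, the union bound) is sound. The gap is in the witness-transfer step. You take $x^*\in M_{\widetilde\Phi}\cap\SS^{d-1}$ from \Cref{thm:main_lower} and claim that all preactivations of $\widetilde\Phi$ at $Rx^*$ are nonzero "because $x^*\in M_{\widetilde\Phi}$ and homogeneity", so that $\nablaa\widetilde\Phi$ is locally constant near $Rx^*$ and you may perturb into $M_\Phi$. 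This implication is false: membership in $M_{\widetilde\Phi}$ only asserts differentiability of $\widetilde\Phi$ at $x^*$ with $\nabla\widetilde\Phi(x^*)=\nablaa\widetilde\Phi(x^*)$, which can hold even when some preactivations vanish at $x^*$ (cancellations of the kind exhibited in the remark following \Cref{prop:grad_relu} make this possible), and in that case the sign pattern, hence the formal gradient, need not be locally constant. Since \Cref{thm:main_lower} gives you no control over which point of $M_{\widetilde\Phi}$ realizes the large inner product, you cannot rule out that every near-maximizer sits on the boundary of a linear region, and the perturbation argument breaks down. (Replacing $M_{\widetilde\Phi}$ by the set of points with all preactivations nonzero does not immediately help either: a fixed $x_0$ lies in that set only with probability $(1-2^{-N})^L<1$, so the hypothesis $\PP(x_0\in\mathcal{B}_\Phi)=1$ of the underlying Sudakov statement would fail.)

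The paper avoids the transfer entirely: it writes $\lip_1(\Phi)\geq\sup_{x\in M_\Phi\cap R\SS^{d-1}}\mnorm{\nablaa\widetilde\Phi(x)}_\infty-\sup_{\mnorm{x}_2\geq R}\mnorm{\nablaa\Phi(x)-\nablaa\widetilde\Phi(x)}_2$, uses scale invariance of $\nablaa\widetilde\Phi$ to rewrite the first supremum over $\SS^{d-1}\cap R^{-1}M_\Phi$, and then invokes the Sudakov-based \Cref{thm:b_lower} directly with the random set $\mathcal{B}_\Phi=\SS^{d-1}\cap R^{-1}M_\Phi$, which is admissible because $Rx_0\in M_\Phi$ almost surely for each fixed $x_0\in\SS^{d-1}$ by \cite[Theorem~E.1]{geuchen2024upper}. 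In other words, the supremum in the minoration argument is taken over the correct (bias-dependent) set from the start, so no local-constancy or perturbation step is needed. To repair your proof you should replace the appeal to \Cref{thm:main_lower} by a direct appeal to \Cref{thm:b_lower} with this choice of $\mathcal{B}_\Phi$.
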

\begin{proof}[Proof sketch]
	We first pick $\lambda' > 0$ such that 
	\[
	\PP\left(\underset{j \in \{1,\ldots,N\}}{\underset{\ell \in \{0,\ldots,L-1\}}{\sup}} \abs{b^{(\ell)}_j} \leq 
	\frac{\lambda' \sqrt{2}}{\sqrt{N}}\right) \geq 1 - \exp(- d /L).
	\]
	We then condition on this high probability event. 
	As in \Cref{thm:grad_diff}, we set $R \defeq 3^L \cdot \lambda' \cdot \frac{N}{dL} \cdot \ln^{-1/2}(N/d)$. 
	With 
	\[
	M_\Phi \defeq \{x \in \RR^d: \ \Phi \text{ differentiable at } x \text{ with } \nabla \Phi(x) = \nablaa \Phi(x)\},
	\]
	we get, using \Cref{thm:glob}, 
	\begin{align*}
		\lip_1(\Phi) &= \sup_{x \in M_\Phi} \mnorm{\nablaa \Phi(x)}_\infty \geq \sup_{x \in M_\Phi \cap R \SS^{d-1}} \mnorm{\nablaa \Phi(x)}_\infty \\
		&\geq \sup_{x \in M_\Phi \cap R \SS^{d-1}} \mnorm{\nablaa \widetilde{\Phi}(x)}_\infty 
		- \sup_{x \in \RR^d, \mnorm{x}_2 \geq R} \mnorm{\nablaa \Phi(x) - \nablaa \widetilde{\Phi}(x)}_2 \\
		&= \sup_{x \in  \SS^{d-1} \cap R^{-1} M_\Phi} \mnorm{\nablaa \widetilde{\Phi}(x)}_\infty 
		- \sup_{x \in \RR^d, \mnorm{x}_2 \geq R} \mnorm{\nablaa \Phi(x) - \nablaa \widetilde{\Phi}(x)}_2,
	\end{align*}
	where the last equality is due to scale invariance of $\nablaa \tilde{\Phi}$.
	The expression $\sup_{x \in  \SS^{d-1} \cap R^{-1} M_\Phi} \mnorm{\nablaa \widetilde{\Phi}(x)}_\infty$ can be lower bounded as presented in \Cref{sec:lower} and 
	the second expression is upper bounded using \Cref{thm:grad_diff}. 
	For the full proof, we refer to \Cref{sec:lower_general_proofs}. 
\end{proof}

The bound presented in \Cref{thm:main_lower_bias} immediately yields the same lower bound for $\lip_p(\Phi)$ for all $p \in [1,\infty]$.
In the case $p \in [2,\infty]$, we recall that the stronger lower bound of $d^{1-1/p}$ was established in \Cref{sec:pw} by studying the gradient at a single fixed non-zero input. 
However, the techniques used in \Cref{sec:pw} are restricted to networks with \emph{symmetric} biases. 
Using \Cref{thm:grad_diff}, however, we can establish the same lower bound for networks with arbitrary, possibly non-symmetric biases. 

\begin{theorem}\label{thm:main_lower_bias_general}
	There exist absolute constants $C,c>0$ such that the following holds.
	Let $\Phi: \RR^d \to \RR$ be a random ReLU network with $L$ hidden layers of width $N$ as in \Cref{assum:1}.
	We assume
	\begin{align*}
		N &\geq C \cdot dL, \quad N \geq C \cdot d \cdot L^4 \cdot \ln^2(N/d) \cdot \ln^2(N/(dL)), \quad 
		\text{and} \quad 
		d \geq C.
	\end{align*}
	Then, with probability at least $1- \exp(-c \cdot d)$, 
	\[
	\lip_p(\Phi) \geq c \cdot d^{1-1/p} \quad \text{for all } p \in [2, \infty].
	\]
\end{theorem}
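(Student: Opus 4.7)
The plan is to reduce the general-bias case to the zero-bias case at a single input of large Euclidean norm, combining the pointwise gradient estimate from \Cref{thm:pw} with the gradient-approximation result \Cref{thm:grad_diff}. Since each $b^{(\ell)}_j$ is a.s.\ finite, I would first choose $\lambda' = \lambda'(d, N, L) > 0$ large enough so that
\[
\mathcal{E}_{\mathrm{bias}} \defeq \Bigl\{\sup_{\ell, j} \abs{b^{(\ell)}_j} \leq \lambda' \sqrt{2}/\sqrt{N}\Bigr\}
\]
has probability at least $1 - \exp(-cd)$, and then condition on $\mathcal{E}_{\mathrm{bias}}$, which makes the biases deterministic and compatible with the hypotheses of \Cref{thm:grad_diff}. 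Fix any $x_0 \in \RR^d$ with $\mnorm{x_0}_2 \geq R \defeq 3^L \lambda' \frac{N}{dL} \ln^{-1/2}(N/d)$ (e.g.\ $x_0 = R e_1$), and let $\widetilde{\Phi}$ denote the zero-bias network obtained from $\Phi$ by setting all biases to zero.

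Next, I would invoke two high-probability events on the weight randomness. First, since the zero biases of $\widetilde{\Phi}$ are trivially symmetric, \Cref{thm:pw} with $p=1$ applied to $\widetilde{\Phi}$ yields
\[
\mnorm{\nablaa \widetilde{\Phi}(x_0)}_1 \geq c d
\]
with probability at least $1 - \exp(-c \min\{d, N/L^2\})$; by the norm inequality $\mnorm{v}_{p'} \geq d^{1/p' - 1} \mnorm{v}_1$ valid for $p' \in [1,2]$, this upgrades at once to $\mnorm{\nablaa \widetilde{\Phi}(x_0)}_{p'} \geq c d^{1-1/p}$ simultaneously for all $p \in [2, \infty]$. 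Second, on $\mathcal{E}_{\mathrm{bias}}$, \Cref{thm:grad_diff} gives, with probability at least $1 - \exp(-c d L \ln(N/d))$,
\[
\mnorm{\nablaa \Phi(x_0) - \nablaa \widetilde{\Phi}(x_0)}_2 \leq C \cdot \frac{d L^2 \ln(N/d)\ln(N/(dL))}{\sqrt{N}}.
\]
Combining the two via the triangle inequality and $\mnorm{v}_{p'} \leq d^{1/p' - 1/2} \mnorm{v}_2$ (for $p' \in [1,2]$) gives
\[
\mnorm{\nablaa \Phi(x_0)}_{p'} \geq c d^{1-1/p} - C d^{1-1/p} \cdot \frac{\sqrt{d}\, L^2 \ln(N/d)\ln(N/(dL))}{\sqrt{N}}.
\]
Under the width assumption $N \geq C' d L^4 \ln^2(N/d) \ln^2(N/(dL))$, the second term is at most $\tfrac{c}{2} d^{1-1/p}$, so $\mnorm{\nablaa \Phi(x_0)}_{p'} \geq \tfrac{c}{2} d^{1-1/p}$. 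Finally, \Cref{thm:glob} yields $\lip_p(\Phi) \geq \mnorm{\nablaa \Phi(x_0)}_{p'} \geq \tfrac{c}{2} d^{1-1/p}$, uniformly in $p \in [2,\infty]$, and a union bound over the three events establishes the claimed probability $\geq 1 - \exp(-cd)$.

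The main technical difficulty is the loss of a factor $d^{1/p' - 1/2}$ when converting the $\ell^2$-approximation of gradients to an $\ell^{p'}$-approximation; this loss is worst at $p' = 1$, where it equals $\sqrt{d}$, and is exactly what forces the width hypothesis to grow as $L^4$ rather than the $L^3$ appearing in \Cref{thm:grad_diff}. A convenient feature of the approach is that a single fixed input $x_0$ and a single gradient-approximation event suffice to handle all $p \in [2, \infty]$ at once, so no additional union bound over $p$ is needed.
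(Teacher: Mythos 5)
Your proposal is correct and follows essentially the same route as the paper's own proof: condition on a high-probability bound for the biases, compare $\nablaa\Phi$ to the zero-bias gradient $\nablaa\widetilde{\Phi}$ at a single point of norm $\geq R = 3^L\lambda'\frac{N}{dL}\ln^{-1/2}(N/d)$ via \Cref{thm:grad_diff}, lower-bound $\mnorm{\nablaa\widetilde{\Phi}(x_0)}_1$ by \Cref{thm:pw}, and absorb the $d^{1/p'-1/2}$ loss from passing between $\ell^2$ and $\ell^{p'}$ into the $L^4$ width assumption. Your closing remark about why the exponent is $L^4$ rather than $L^3$ matches the accounting in the paper's argument.
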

Since the proof of \Cref{thm:main_lower_bias_general} is very similar to the proof 
of \Cref{thm:main_lower_bias}, we omit a proof sketch here and refer to 
\Cref{sec:lower_general_proofs} for a detailed proof. 

We emphasize that there is still a qualitative difference between the results from \Cref{sec:pw} and \Cref{thm:main_lower_bias_general}: in \Cref{thm:pw}, it is shown that 
at an \emph{arbitrary} and \emph{fixed} non-zero input $x_0 \in \RR^d$ we have 
\[
\mnorm{\nablaa \Phi(x_0)}_{p'} \gtrsim d^{1-1/p}
\]
with high probability, 
where $1/p + 1/p' = 1$. 
The proof of \Cref{thm:main_lower_bias_general} recovers that lower bound for networks with possibly non-symmetrically distributed biases, but only for inputs $x_0$ with sufficiently large Euclidean norm (exponential in the number of hidden layers $L$).

\subsection{Matching bounds for shallow networks}
From \cite[Theorems~3.1\&3.2]{geuchen2024upper}, we recall that for a \emph{shallow} random ReLU network following \Cref{assum:1}, 
\[
\lip_2(\Phi) \asymp \sqrt{d} 
\]
with high probability. 
Using \Cref{thm:main_lower_bias,thm:main_lower_bias_general} and standard norm inequalities, 
we can generalize this estimate to arbitrary $\ell^p$-norms. 
\begin{corollary}\label{corr:shallow}
	There exist absolute constants $C,c>0$ with the following property. 
	Let $\Phi:\RR^d \to \RR$ be a shallow random ReLU network with $N$ neurons in the hidden layer as in \Cref{assum:1}. 
	We assume 
	\[
	N \geq C\cdot d, \quad N \geq C \cdot d  \cdot \ln^4(N/d), \quad \text{and} \quad  d \geq C.
	\]
	Then, with probability $1- \exp(-c \cdot d)$, 
	\[
	c \cdot \sqrt{d} \leq \lip_p(\Phi) \leq C \cdot \sqrt{d}
	\quad \text{for } p \in [1,2]
	\]
	and 
	\[
	c \cdot d^{1-1/p} \leq \lip_p(\Phi) \leq C \cdot d^{1-1/p} \quad \text{for } p \in [2, \infty].
	\]
\end{corollary}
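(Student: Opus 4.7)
The plan is to obtain the matching upper and lower bounds separately and then combine them via standard norm inequalities. Observe first that, for the assumptions in the corollary (i.e., $L = 1$), the hypotheses of both \Cref{thm:main_lower_bias} and \Cref{thm:main_lower_bias_general} reduce to exactly the conditions $N \geq C\cdot d$, $N \geq C \cdot d \cdot \ln^{4}(N/d)$, and $d \geq C$: indeed, for $L=1$ the factor $C^{L}$ is just a constant and the power $L^{4L+10}$ disappears into the constant, while $L^{5} \ln^{2}(N/(dL))$ becomes $\ln^{2}(N/d)$, matching our hypothesis.

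For the upper bound on $\lip_{2}(\Phi)$, I would not use \Cref{thm:main_upper} (which carries an extra $\sqrt{\ln(N/d)}$-factor), but instead invoke the sharp shallow-network estimate $\lip_{2}(\Phi) \leq C\sqrt{d}$ from \cite[Theorems~3.1~\&~3.2]{geuchen2024upper}, which holds with high probability under the present hypotheses. From this, I would derive the upper bounds for all $p$ using the standard norm inequalities \eqref{eq:norm_equiv}: for $p \in [1,2]$, since $\mnorm{x-y}_{p} \geq \mnorm{x-y}_{2}$, one gets $\lip_{p}(\Phi) \leq \lip_{2}(\Phi) \leq C\sqrt{d}$; for $p \in [2,\infty]$, since $\mnorm{x-y}_{p} \geq d^{1/p - 1/2} \mnorm{x-y}_{2}$, one obtains $\lip_{p}(\Phi) \leq d^{1/2 - 1/p}\lip_{2}(\Phi) \leq C \cdot d^{1-1/p}$.

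For the lower bound in the regime $p \in [1,2]$, I would apply \Cref{thm:main_lower_bias} specialized to $L=1$, which gives $\lip_{1}(\Phi) \geq c\sqrt{d}$ with probability at least $1 - \exp(-c\cdot d)$. Since $\mnorm{x-y}_{p} \leq \mnorm{x-y}_{1}$ for every $p \geq 1$, this yields $\lip_{p}(\Phi) \geq \lip_{1}(\Phi) \geq c\sqrt{d}$. For $p \in [2,\infty]$, the lower bound $\lip_{p}(\Phi) \geq c \cdot d^{1-1/p}$ follows directly from \Cref{thm:main_lower_bias_general}, again specialized to $L=1$, on an event of probability at least $1 - \exp(-c\cdot d)$.

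The only non-routine step is bookkeeping the probability of the intersection of the two events (from the upper bound and the respective lower bound) and checking that the assumed bounds on $N$ in the corollary are simultaneously strong enough to imply all of the invoked statements. Since the probabilities are each at least $1 - \exp(-c\cdot d)$ and a union bound costs only a multiplicative factor of at most $3$, absorbing this into the constant $c$ completes the argument. I do not expect any serious obstacle: the corollary is essentially a synthesis of already established pieces, with the only subtlety being the replacement of our \Cref{thm:main_upper} by the sharper shallow-network estimate from \cite{geuchen2024upper} in order to avoid the $\sqrt{\ln(N/d)}$-gap.
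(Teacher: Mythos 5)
Your proposal is correct and follows essentially the same route the paper takes: the upper bounds come from the sharp shallow estimate $\lip_2(\Phi) \lesssim \sqrt{d}$ of \cite[Theorems~3.1~\&~3.2]{geuchen2024upper} combined with the norm inequalities \eqref{eq:norm_equiv}, and the lower bounds from \Cref{thm:main_lower_bias} (via $\lip_p(\Phi) \geq \lip_1(\Phi)$) and \Cref{thm:main_lower_bias_general}, both specialized to $L=1$, followed by a union bound. The only slip is cosmetic: for $L=1$ the condition $N \geq C \cdot d \cdot L^5 \cdot \ln^2(N/d) \cdot \ln^2(N/(dL))$ reduces to $N \geq C \cdot d \cdot \ln^4(N/d)$ (not $\ln^2(N/d)$), which is precisely the hypothesis assumed in the corollary, so your bookkeeping still closes.
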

Hence, in the case of shallow networks, we obtain bounds that match up to an absolute multiplicative 
constant. 

\medskip

\textbf{Acknowledgments.}
FV and PG are grateful for the hospitality of Utrecht University, where parts of this project were developed.   
FV and PG acknowledge support by
the German Science Foundation (DFG) in the context of the Emmy Noether junior research
group VO 2594/1-1.
FV acknowledges support by the Hightech Agenda Bavaria. 
\vspace{-0.4cm}
\printbibliography[heading=bibintoc]
\markboth{\MakeUppercase{\scshape Sjoerd Dirksen, Patrick Finke, Paul Geuchen, Dominik Stöger, and Felix Voigtlaender}}{\shorttitle}
\begingroup
\raggedbottom  %
\makeatletter
\@setaddresses
\makeatother
\endgroup
\clearpage
\appendix
\vspace*{0.05cm}
\begin{center}
	\Huge\bfseries Appendix
\end{center}

\vspace{1em} %
\section*{Appendix Contents}
\apptocline{sec:prelim_proofs}{A.   Proofs of preliminary results regarding the Jacobian of ReLU networks}
\apptocline{sec:distr_equiv}{B.   Proof of distributional equivalence}
\apptocline{sec:rand_tess}{C.   Proofs for the results on random hyperplane tessellations}
\apptocline{sec:pw_proofs}{D.   Proof of pointwise bounds}
\apptocline{sec:upper_proof}{E.   Proof of \Cref{thm:main_upper}}
\apptocline{sec:lower_proof}{F.   Proof of \Cref{thm:main_lower}}
\apptocline{sec:hom}{G.   Proof of \Cref{thm:grad_diff}}
\apptocline{sec:bias_bounds_proofs}{H.   Bounds on the Lipschitz constants for networks with biases}
\apptocline{sec:auxil}{I.   Other auxiliary results}

\section{\texorpdfstring{Proofs of preliminary results regarding the Jacobian of \\ReLU networks}{Proofs of preliminary results regarding the Jacobian of ReLU networks}} \label{sec:prelim_proofs}
In this section, we provide proofs for the statements from \Cref{sec:prelim}.
We start with \Cref{prop:grad_relu}, which states that every intermediate layer $\Phi^{(\ell)}$ of a ReLU network is almost everywhere differentiable and the true
Jacobian coincides almost everywhere with the formal Jacobian introduced in \Cref{def:formal}. 
This proof is essentially the same as the one in \cite[Theorem~III.1]{berner2019towards}.
\renewcommand*{\proofname}{Proof of \Cref{prop:grad_relu}}
\begin{proof}
	The proof is by induction over $\ell \in \{-1,\ldots,L-1\}$.
	For $\ell = -1$, there is nothing to show. 
	
	Next, let $\ell \in \{0,\ldots,L-1\}$ and assume that there exists a set $\tilde{N} \subseteq \RR^d$ with $\metalambda^d(\tilde{N}) = 0$
	and such that $\Phi^{(\ell-1)}$ is differentiable on $\RR^d \setminus \tilde{N}$ with
	\begin{equation*}
		\mathrm{D} \Phi^{(\ell - 1)}(x) = \jac \Phi^{(\ell-1)}(x) \quad \text{for all } x \in \RR^d \setminus \tilde{N}.
	\end{equation*}
	We fix an index $j \in \{1,\ldots,N\}$ and define
	\begin{equation*}
		Z_j \defeq \left\{ x \in \RR^d: \ \Phi^{(\ell)}_j (x) = 0\right\}.
	\end{equation*}
	As the composition of affine maps and the componentwise application of the ReLU, $\Phi^{(\ell)}_j$ is Lipschitz continuous. 
	Hence, we can apply \cite[Corollary~1~on~p.84]{evans_measure_1992} and conclude 
	the existence of a set $N_j \subseteq Z_j$ with $\metalambda^d(N_j) = 0$ such that $\Phi^{(\ell)}_j$ is differentiable on $Z_j \setminus N_j$ with
	\begin{equation*}
		\left(\mathrm{D} \Phi^{(\ell)}_j\right) (z) = 0 =  \left(D^{(\ell)}(z) \cdot W^{(\ell)} \cdots D^{(0)}(z)\cdot W^{(0)}\right)_{j,:} = \left(\jac \Phi^{(\ell)}(x)\right)_{j,:},
	\end{equation*}
	where the second equality stems from $D^{(\ell)}(z)_{j,j} = 0$ for $z \in Z_j$.
	
	It remains to analyze the derivative on $\RR^d \setminus Z_j$.
	Let 
	\[
	\mu \defeq \begin{cases}d&\text{if } \ell = 0 \\ N&\text{if } \ell > 0.\end{cases}
	\]
	We decompose 
	\begin{equation*}
		\Phi^{(\ell)}_j = \varphi_j \circ \Phi^{(\ell - 1)} \quad \text{with} \quad \varphi_j: \RR^\mu \to \RR, \quad \varphi_j(z) = \relu(W_{j,:}^{(\ell)} \cdot z + b^{(\ell)}_j).
	\end{equation*}
	Note that $\tilde{Z_j} \defeq \{z \in \RR^\mu : \ \varphi_j(z) > 0\}\subseteq \RR^\mu$ is open and therefore, $\varphi_j$ is differentiable on $\tilde{Z_j}$
	with
	\begin{equation*}
		\mathrm{D} \varphi_j(z) = W^{(\ell)}_{j,:} \quad \text{for } z \in \tilde{Z_j}.
	\end{equation*}
	Recall by the induction hypothesis that $\Phi^{(\ell - 1)}$ is differentiable on $\RR^d \setminus \tilde{N}$ with 
	\begin{equation*}
		\mathrm{D} \Phi^{(\ell - 1)}(x) = \jac \Phi^{(\ell-1)}(x), \quad x \in \RR^d \setminus \tilde{N}.
	\end{equation*}
	Note by definition that $x \in \RR^d \setminus Z_j$ implies $\Phi^{(\ell-1)}(x) \in \tilde{Z_j}$.
	We employ the chain rule and get
	\begin{align*}
		\left(\mathrm{D} \Phi^{(\ell)}_j\right) (x) &= (\mathrm{D} \varphi_j)\big(\underbrace{\Phi^{(\ell-1)}(x)}_{\in \tilde{Z_j}}\big) \cdot \left(\mathrm{D} \Phi^{(\ell-1)} \right)(x) 
		=W_{j,:}^{(\ell)} \cdot   \jac \Phi^{(\ell-1)}(x) \\
		&= \left(D^{(\ell)}(x) \cdot W^{(\ell)}\cdot\jac \Phi^{(\ell-1)}(x)\right)_{j,:}
		= \left(\jac \Phi^{(\ell)}(x)\right)_{j,:},
	\end{align*}
	for every $x \in \RR^d \setminus (\tilde{N} \cup Z_j)$, where the third equality follows from $\left(D^{(\ell)}(x)\right)_{j,j} = 1$ for $x \in \RR^d \setminus Z_j$.
	Overall, we define $\N \defeq \tilde{N} \cup \bigcup_{j=1}^N N_j$, note $\metalambda^d(\N)=0$, and get
	\begin{equation*}
		\mathrm{D} \Phi^{(\ell)}(x) = \jac \Phi^{(\ell)}(x), \quad x \in \RR^d \setminus \N,
	\end{equation*}
	which concludes the induction step.
	
	The conclusion for the final layer follows by noting that 
	\begin{equation*}
		\left(\mathrm{D} \Phi^{(L-1)}\right)(x) = D^{(L-1)}(x) W^{(L-1)} \cdots D^{(0)}(x)W^{(0)} \quad \text{for almost every } x \in \RR^d
	\end{equation*}
	and an application of the chain rule. 
\end{proof}
\renewcommand*{\proofname}{Proof}
Next, we aim to prove \Cref{prop:lipgrad} which is well-known and was for the case of functions $f: \RR^d \to \RR$ and $p=2$ already shown in \cite[Theorem~4.2]{geuchen2024upper}.
We include the proof here again to show that a similar reasoning also works in the case of functions $f: K \to \RR^k$, where $K \subseteq \RR^d$ is convex and has non-empty interior, and arbitrary $p,q \in [1,\infty]$.
\renewcommand*{\proofname}{Proof of \Cref{prop:lipgrad}}
\begin{proof}
	By \cite[Theorem~6.3]{rockafellar2015convex}, we conclude 
	\[
	\overline{\inte(K)} = \overline{K}.
	\]
	Note here that the interior of $K$ coincides with its relative interior (as defined on \cite[p.~44]{rockafellar2015convex}) since we assume that $K$ has non-empty interior. 
	Hence, since $f$ is continuous by assumption, we get 
	\[
	\lip_{p \to q}\left(\fres{f}{\inte(K)}\right) = 
	\lip_{p \to q}\left(\fres{f}{\overline{\inte(K)}}\right)
	= \lip_{p \to q}\left(\fres{f}{\overline{K}}\right)
	= \lip_{p \to q}\left(\fres{f}{K}\right).
	\]
	Hence, from now on we may assume without loss of generality that $K$ is open. 
	
	We first fix $i \in \{1,\ldots,k\}$. 
	With $f_i$ denoting the $i$-th component of $f$, we have $f_i \in W^{1, \infty}_{\text{loc}}(\RR^d)$ by \cite[Section~4.2.3]{evans_measure_1992}, where
	$W^{1, \infty}_{\text{loc}}(\RR^d)$ denotes the set of functions 
	that are locally (on every open bounded set $U \subseteq \RR^d$) in the Sobolev space $W^{1, \infty}$. 
	Then, \cite[Theorem~1~in~Section~6.2]{evans_measure_1992} implies that there 
	exists a null set $N_i \subseteq \RR^d$ such that for every $x \in \RR^d \setminus N_i$, $f_i$ is differentiable at $x$, 
	and the partial derivatives of $f_i$ agree with the weak partial derivatives of $f_i$ on $\RR^d \setminus N_i$. 
	For $j \in \{1,\ldots,d\}$ let $\widetilde{\partial}_j f_i$ be an explicit weak $j$-th partial derivative of $f_i$ satisfying 
	\begin{equation*}
		\widetilde{\partial}_j f_i = \partial_j f_i \quad \text{on }\RR^d \setminus N_i,
	\end{equation*}
	where $\partial_j$ denotes the usual $j$-th partial derivative. We further write
	\begin{equation*}
		\widetilde{\mathrm{D}} f \defeq \left(\widetilde{\partial}_j f_i \right)_{\substack{i = 1,\ldots,k \\ j = 1,\ldots,d}} \in \RR^{k \times d}.
	\end{equation*}
	With $N \defeq \cup_{i=1}^k N_i$, we note that $N \subseteq \RR^d$ is a null set and 
	\begin{equation*}
		\mathrm{D} f = \widetilde{\mathrm{D}} f \quad \text{on } M \setminus N.
	\end{equation*}
	
	Now, let $\varphi \in C_c^\infty (\RR^d)$ be a smooth function with compact support satisfying $\varphi \geq 0$ 
	and furthermore $\int_{\RR^d} \varphi(x) \ \dd x = 1$;
	see, e.g., \cite[Section~4.2.1]{evans_measure_1992} for an explicit example of such a function. 
	For $\eps > 0$ let $\varphi_\eps (x) \defeq \eps^{-d} \varphi(x / \eps)$. 
	We again fix $i \in \{1,\ldots,k\}$.
	Since $f_i$ is continuous, the convolution $(f_i)_\eps \defeq f_i \ast \varphi_\eps$ converges pointwise (even locally uniformly) to $f_i$ as $\eps \to 0$; 
	see \cite[Theorem~1~in~Section~4.2]{evans_measure_1992}. 
	The same theorem also shows that $(f_i)_\eps \in C^\infty (\RR^d)$ with 
	\begin{equation*}
		\partial_j \left[(f_i)_\eps\right] = (\widetilde{\partial}_j f_i) \ast \varphi_\eps.
	\end{equation*}
	Letting $f_\eps \defeq \left((f_1)_\eps,\ldots,(f_k)_\eps\right)^T$, we hence get
	\begin{equation*}
		\mathrm{D} (f_\eps) = \left(\widetilde{\mathrm{D}} f\right) \ast \varphi_\eps \quad \text{on } \RR^d, 
	\end{equation*}
	where the convolution on the right hand side is to be understood in a componentwise sense, i.e., each entry of the matrix $\left(\widetilde{\mathrm{D}} f\right)$
	is convolved with $\varphi_\eps$.
	
	After these preparations, we fix $\delta > 0$ and set 
	\[
	K_\delta \defeq \{x \in K: \overline{B}_d(x, \delta) \subseteq K\}.
	\]
	Since $\varphi$ has compact support and by definition of $\varphi_\eps$, there exists $\eps' = \eps'(\delta)>0$ such that for all $\eps \in (0,\eps']$ 
	we have $\supp(\varphi_\eps) \subseteq \overline{B}_d(0,\delta)$. 
	By definition of $K_\delta$, this implies 
	\begin{equation}\label{eq:y}
		\varphi_\eps(x-y) = 0 \quad \text{for all }x \in K_\delta, \ y \in \RR^d \setminus K
	\end{equation}
	for $\eps \in (0, \eps']$.
	Then, since $M \subseteq \RR^d$ is a set of full measure, and $N$ is a null set, we obtain for $x \in K_\delta$ and $\eps \in (0, \eps']$ that\footnote{
		For the first inequality, let $A: \RR^d \to \RR^{k \times d}$ with $A_{i,j} \in L^1(\RR^d)$ for every $i \in \{1,\ldots,k\}$ and $j \in \{1,\ldots,d\}$ be given. 
		We can then bound the $\Vert \cdot \Vert_{p \rightarrow q}$-norm of the entrywise integral of $A$ via 
		\[
		\left \Vert\int_{\RR^d} A(\xi) \ \dd \xi \right\Vert_{p \rightarrow q} 
		= \underset{y \in \RR^k, \Vert y \Vert_{q'} = 1}{\underset{x \in \RR^d, \Vert x \Vert_p = 1}{\sup}} \left\langle \left(\int_{\RR^d} A(\xi) \ \dd \xi \right)\cdot x,y\right\rangle
		= \underset{y \in \RR^k, \Vert y \Vert_{q'} = 1}{\underset{x \in \RR^d, \Vert x \Vert_p = 1}{\sup}} \int_{\RR^d} \langle A(\xi)\cdot x, y\rangle \ \dd \xi
		\leq \int_{\RR^d} \Vert A(\xi) \Vert_{p \rightarrow q} \ \dd\xi,
		\]
		where $q' \in [1,\infty]$ is conjugate to $q$.
	}
	\begin{align*}
		\Vert \mathrm{D} f_\eps (x) \Vert_{p \rightarrow q} &= \left\Vert \left[\left(\widetilde{\mathrm{D}} f\right) \ast \varphi_\eps\right] (x)\right\Vert_{p \rightarrow q}
		= \left\Vert  \int_{\RR^d} \varphi_\eps (x-y) \cdot\widetilde{\mathrm{D}} f (y) \ \dd y\right\Vert_{p \rightarrow q} \\
		&= \left\Vert  \int_{ M \setminus N} \varphi_\eps (x-y) \cdot\widetilde{\mathrm{D}} f (y) \ \dd y\right\Vert_{p \rightarrow q} 
		\overset{\eqref{eq:y}}{=}\left\Vert  \int_{(K \cap M) \setminus N} \varphi_\eps (x-y) \cdot\widetilde{\mathrm{D}} f (y) \ \dd y\right\Vert_{p \rightarrow q} \\
		&\leq  \int_{ (K \cap M) \setminus N} \left\Vert \varphi_\eps (x-y) \cdot\widetilde{\mathrm{D}} f (y)\right\Vert_{p \rightarrow q} \ \dd y 
		\leq \!\int_{(K \cap M) \setminus N} \!\!\!\abs{ \varphi_\eps (x-y)} \cdot \!\!\underset{\xi \in K \cap M}{\sup}\left\Vert\widetilde{\mathrm{D}} f (\xi)\right\Vert_{p \rightarrow q} \ \dd y \\
		&\leq \underset{\xi \in K \cap M}{\sup}\left\Vert\widetilde{\mathrm{D}} f (\xi)\right\Vert_{p \rightarrow q} \cdot \int_{\RR^d} \varphi_\eps(z) \ \dd z 
		= \underset{\xi \in K \cap M}{\sup}\left\Vert\widetilde{\mathrm{D}} f (\xi)\right\Vert_{p \rightarrow q} =: L.
	\end{align*}
	Fix $x,y \in K_\delta$ and $\eps > 0$. 
	Let further $q' \in [1,\infty]$ be conjugate to $q$, i.e., $1/q + 1/q' = 1$.
	By duality, we can pick $z \in \RR^k$ with 
	\[
	\mnorm{z}_{q'} = 1 \quad \text{and} \quad \left\langle z, f_\eps(y) - f_\eps(x)\right\rangle = \mnorm{f_\eps(y) - f_\eps(x)}_q.
	\]
	We then consider the function 
	\[
	g: \ \RR \to \RR, \quad g(t) = \langle z,f_\eps((1-t)x + ty) \rangle.
	\]
	The mean value theorem implies the existence of a number $\xi \in (0,1)$ with 
	\begin{equation*}
		g'(\xi) = \frac{g(1) - g(0)}{1 - 0}.
	\end{equation*}
	By plugging in the definition of $g$ and using the chain rule, we get 
	\begin{equation*}
		\mnorm{f_\eps(y) - f_\eps(x)}_q = \big\langle  z, \left[ (\mathrm{D} f_\eps)((1- \xi)x + \xi y)\right] (y-x) \big\rangle.
	\end{equation*}
	For every $t \in \RR^d$ with $\mnorm{t}_2 \leq \delta$, we have 
	\[
	(1- \xi)x + \xi y + t = (1- \xi)(\underbrace{x+t}_{\in K}) + \xi (\underbrace{y+t}_{\in K}) \in K,
	\]
	where the last step uses the convexity of $K$. 
	Hence, by definition of $K_\delta$ and since $t$ was arbitrary, we have 
	\[
	(1- \xi)x + \xi y \in K_\delta.
	\]
	Using Hölder's inequality, this implies 
	\begin{equation*}
		\mnorm{ f_\eps(x) - f_\eps(y)}_{q} \leq \underbrace{\mnorm{z}_{q'}}_{=1} \cdot \mnorm{\left[ (\mathrm{D} f_\eps)((1- \xi)x + \xi y)\right] (y-x)}_q 
		\leq L \cdot \Vert x - y \Vert_p
	\end{equation*}
	for all $x,y \in K_\delta$ and $\eps \in (0,\eps']$.
	But then it also follows
	\begin{equation*}
		\mnorm {f(x) - f(y)}_{q} = \lim_{\eps \to 0} \ \mnorm{ f_\eps(x) -f_\eps(y) }_q \leq L \cdot \Vert x-y \Vert_p
	\end{equation*}
	and thus $\lipp{p}{q}(\fres{f}{K_\delta}) \leq L$.
	
	Since $\delta$ was arbitrary and $K = \inte(K) = \bigcup_{\delta > 0} K_\delta$ (recall that $K$ is assumed to be open), 
	we thus obtain $\lipp{p}{q}(\fres{f}{K}) \leq L$.
	Hence, the inequality "$\leq$" follows. 
	
	To prove the inequality in the other direction, let $x \in K \cap M$. 
	Let $\nu \in \RR^d$ be arbitrary with $\Vert \nu \Vert_p = 1$.
	Then it holds
	\begin{align*}
		\mnorm{\left[\mathrm{D} f(x) \right]\nu }_q&= \mnorm{\lim_{t \to 0} \frac{ f(x+t\nu) - f(x)}{ t}}_q
		= \lim_{t \to 0} \frac{\mnorm{ f(x+t\nu) - f(x)}_q}{\vert t \vert} \\
		&=  \lim_{t \to 0} \frac{\mnorm{ f(x+t\nu) - f(x)}_q}{\Vert (x + t\nu) - x \Vert_p} \leq \lipp{p}{q}\left(\fres{f}{K}\right),
	\end{align*}
	where the last inequality uses that $K$ is open. 
	The claim now follows, since by definition we have
	\begin{equation*}
		\mnorm{\mathrm{D} f(x)}_{p \to q} = \underset{\nu \in \RR^d, \mnorm{\nu}_p = 1}{\sup} \mnorm{\mathrm{D} f(x) \nu}_{q}. \qedhere
	\end{equation*}
\end{proof}
Next, we provide the proof for \Cref{thm:up_low_bound}, which mainly relies on the fact that the formalized gradient of a zero-bias ReLU network is invariant under scaling 
by positive constants.
\begin{proof}[Proof of \Cref{thm:up_low_bound}]
	The first and third part of the theorem follow directly from \cite[Theorem~E.1]{geuchen2024upper} and \Cref{prop:grad_relu,prop:lipgrad}. 
	
	For the second part, we note that the lower bound again follows from \cite[Theorem~E.1]{geuchen2024upper}.
	Further, according to \Cref{prop:grad_relu}, $M_\Phi$ is a set of full measure and so is
	$M_\Phi \setminus \{0\}$. 
	In view of \Cref{prop:lipgrad}, it hence suffices to show that
	\[
	\{\nablaa \Phi(x): \ x \in M_\Phi \setminus \{0\}\} = \left\{\nablaa \Phi(x): \ x \in M_\Phi \cap \SS^{d-1}\right\}.
	\]
	To this end, let $x \in M_\Phi \setminus \{0\}$ and define 
	$z \defeq \frac{x}{\mnorm{x}_2} \in \SS^{d-1}$.
	Since $\Phi$ is a zero-bias network,
	\[
	\nablaa \Phi(ay) = \nablaa \Phi(y) \quad \text{for every } a > 0 \text{ and } y \in \RR^d.
	\]
	Hence, we get 
	\[
	\nablaa \Phi(x)  = \nablaa \Phi \left(z\right).
	\]
	It remains to show that $z \in M_\Phi$.
	To this end, note that we have 
	\[
	\Phi(z) = \frac{1}{\mnorm{x}_2} \cdot \Phi(z \cdot \mnorm{x}_2)
	\]
	since the ReLU is positively homogeneous. 
	Since $\Phi$ is differentiable at $z \cdot \mnorm{x}_2 = x$, we may apply the chain rule and get 
	\[
	\act \Phi(z) = \frac{1}{\mnorm{x}_2} \cdot \mnorm{x}_2 \cdot \act \Phi(x) \overset{x \in M_\Phi}{=} \nablaa \Phi (x) = \nablaa \Phi(z). 
	\]
	Therefore, we have $z \in M_\Phi$, which proves the second part.
	The fourth part follows analogously. 
\end{proof}
\renewcommand*{\proofname}{Proof}
Next, we prove \Cref{prop:randnorm}.
It states that for a random ReLU network $\Phi$ with symmetric biases, at a fixed input $x \neq 0$, with high probability, we may replace each matrix $D^{(\ell)}(x)$ by its randomized counterpart $\D^{(\ell)}(x)$ as defined in \Cref{def:dtilde}.
In fact, we show that, with high probability,  
all the preactivations occurring in $\Phi$ when $x$ is passed through the 
network are nonzero, which implies the desired statement. 
\begin{proof}[Proof of \Cref{prop:randnorm}]
	We show via induction that for every $\ell \in \{-1,\ldots,L-1\}$, with probability $\left(1 - \frac{1}{2^N}\right)^{\ell+1}$, we have 
	\[
	\Phi^{(\ell)}(x) \neq 0 \quad \text{and} \quad \text{for all } \ell' \in \{0,\ldots,\ell\}, \ j \in \{1,\ldots,N\}: \quad  \quad (W^{(\ell')}\Phi^{(\ell'-1)}(x))_j + b^{(\ell')}_j \neq 0.
	\]
	This implies the claim by the definition of the matrices $D^{(i)}(x)$ and $\D^{(i)}(x)$. 
	
	Note because of $x \neq 0$ that there is nothing to show in the case $\ell = -1$.
	We now take $\ell \in \{0,\ldots,L-1\}$ and assume via induction that
	\begin{equation}\label{eq:help}
		\Phi^{(\ell - 1)}(x) \neq 0 \quad \text{and} \quad \text{for all } \ell' \in \{0,\ldots,\ell-1\}, \ j \in \{1,\ldots,N\}: \quad (W^{(\ell')}\Phi^{(\ell' -1)}(x))_j + b_j^{(\ell')}\neq 0
	\end{equation}
	occurs with probability at least $\left(1 - \frac{1}{2^N}\right)^{\ell}$.
	Note that \eqref{eq:help} only depends on $W^{(0)}, \dots, W^{(\ell - 1)}$ and $b^{(0)}, \dots, b^{(\ell-1)}$.
	We condition on $W^{(0)}, \ldots, W^{(\ell-1)}$ and $b^{(0)}, \dots, b^{(\ell-1)}$ and assume that \eqref{eq:help} is satisfied. 
	Since $\Phi^{(\ell-1)}(x) \neq 0$, the components of $W^{(\ell)}\Phi^{(\ell-1)}(x) + b^{(\ell)}$
	have independent, symmetric and absolutely continuous random distributions, 
	as follows from \cite[Proposition~9.1.6]{dudley2002real}.
	Hence,
	\[
	\text{for all } j \in \{1,\ldots,N\}: \quad (W^{(\ell)}\Phi^{(\ell - 1)}(x))_j
	+b^{(\ell)}_j\neq 0
	\]
	with probability 1. 
	Moreover, we have 
	\[
	\Phi^{(\ell)}(x) = 0 \quad \Leftrightarrow \quad \text{for all } j \in \{1,\ldots,N\}: \quad (W^{(\ell)}\Phi^{(\ell-1)}(x))_j + b^{(\ell)}_j\leq 0.
	\]
	Since the components of $W^{(\ell)}\Phi^{(\ell-1)}(x) + b^{(\ell)}$ follow independent, symmetric, continuous distributions, we conclude $\Phi^{(\ell)}(x) \neq 0$ with probability $1-\frac{1}{2^N}$.
	Up to now, we have conditioned on $W^{(0)}, \ldots, W^{(\ell-1)}$ and $b^{(0)}, \dots, b^{(\ell-1)}$.
	Reintroducing the randomness over these random matrices yields the claim.
\end{proof}

\renewcommand*{\proofname}{Proof}

\section{Proof of distributional equivalence} \label{sec:distr_equiv}
In this appendix, we prove the distributional equivalence stated in \Cref{prop:randgrad}. 
Let us recall that this proposition essentially states that in expressions such as 
\[
\mnorm{\prod_{i= \ell_2}^{\ell_1} \D^{(i)}(z_i)W^{(i)}}_{2 \to 2},
\]
where the $z_i$ are arbitrary inputs from $\RR^d$, 
we can --- in distribution --- replace the matrices $\D^{(i)}(z_i)$ by independent random diagonal matrices $\DD^{(i)}$,
where each entry on the diagonal of $\DD^{(i)}$ is either 1 or 0, each with probability $1/2$.
This heavily simplifies the analysis of such expressions. 

We start with an elementary lemma, which is probably well-known but for which we could not locate a convenient reference. 
\begin{lemma}\label{lem:distr}
	Let $(\Omega, \mathscr{A}, \PP)$ be a probability space and 
	\begin{equation*}
		X: \quad (\Omega, \mathscr{A}, \PP) \to (\Omega_1, \mathscr{A}_1), \quad Y: \quad (\Omega, \mathscr{A}, \PP) \to (\Omega_2, \mathscr{A}_2)
	\end{equation*}
	be two independent random variables taking values in the measurable spaces $(\Omega_1, \mathscr{A}_1)$ and $(\Omega_2, \mathscr{A}_2)$. Let 
	\begin{equation*}
		\varphi_1: \quad (\Omega_1 \times \Omega_2, \mathscr{A}_1  \otimes \mathscr{A}_2) \to (\Omega_3, \mathscr{A}_3) \quad \text{and}
		\quad \varphi_2: \quad (\Omega_1 \times \Omega_2, \mathscr{A}_1  \otimes \mathscr{A}_2) \to (\Omega_3, \mathscr{A}_3)
	\end{equation*}
	be measurable functions, taking values in some measurable space $(\Omega_3, \mathscr{A}_3)$. Assume that for every fixed $x \in \Omega_1$,
	\begin{equation*}
		\varphi_1(x,Y) \overset{d}{=} \varphi_2(x,Y).
	\end{equation*}
	Then,
	\begin{equation*}
		\varphi_1(X,Y) \overset{d}{=} \varphi_2(X,Y).
	\end{equation*}
\end{lemma}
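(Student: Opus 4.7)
The plan is to verify that $\varphi_1(X,Y)$ and $\varphi_2(X,Y)$ agree on every measurable set $A \in \mathscr{A}_3$, i.e., to show $\PP(\varphi_1(X,Y) \in A) = \PP(\varphi_2(X,Y) \in A)$. The core tool will be Fubini--Tonelli combined with the fact that independence of $X$ and $Y$ means the joint law $\PP_{(X,Y)}$ factors as the product measure $\PP_X \otimes \PP_Y$ on $(\Omega_1 \times \Omega_2, \mathscr{A}_1 \otimes \mathscr{A}_2)$.

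Fix an arbitrary $A \in \mathscr{A}_3$. By the change-of-variables formula, the measurability of $\varphi_1$ and the independence hypothesis, we have
\[
\PP(\varphi_1(X,Y) \in A) = \int_{\Omega_1 \times \Omega_2} \mathbbm{1}_A(\varphi_1(x,y)) \, d(\PP_X \otimes \PP_Y)(x,y).
\]
Since the integrand is a nonnegative measurable function on the product space, Tonelli's theorem applies and yields
\[
\PP(\varphi_1(X,Y) \in A) = \int_{\Omega_1} \left( \int_{\Omega_2} \mathbbm{1}_A(\varphi_1(x,y)) \, d\PP_Y(y) \right) d\PP_X(x) = \int_{\Omega_1} \PP(\varphi_1(x,Y) \in A) \, d\PP_X(x),
\]
where in the last equality I recognized the inner integral as the probability that $\varphi_1(x,Y) \in A$ for the fixed slice $x$.

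The distributional hypothesis $\varphi_1(x,Y) \d \varphi_2(x,Y)$, which holds for every fixed $x \in \Omega_1$, allows me to replace $\PP(\varphi_1(x,Y) \in A)$ by $\PP(\varphi_2(x,Y) \in A)$ pointwise in $x$. Running the same Tonelli computation in reverse for $\varphi_2$ then gives
\[
\int_{\Omega_1} \PP(\varphi_2(x,Y) \in A) \, d\PP_X(x) = \PP(\varphi_2(X,Y) \in A).
\]
Since $A$ was arbitrary, the two pushforward measures coincide on $\mathscr{A}_3$, hence $\varphi_1(X,Y) \d \varphi_2(X,Y)$.

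There is no genuine obstacle here: the statement is essentially the observation that if two conditional distributions (conditional on $X=x$) agree for $\PP_X$-almost every $x$, then the unconditional distributions must agree. The only care required is to invoke the product-measure structure coming from independence before applying Tonelli, and to note that the joint measurability of $(x,y) \mapsto \mathbbm{1}_A(\varphi_i(x,y))$ follows from the assumption that each $\varphi_i$ is jointly measurable with respect to $\mathscr{A}_1 \otimes \mathscr{A}_2$.
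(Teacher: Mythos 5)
Your proof is correct and follows essentially the same route as the paper: fix $A \in \mathscr{A}_3$, write $\PP(\varphi_i(X,Y)\in A)$ as the expectation of an indicator over the product measure $\PP_X\otimes\PP_Y$ (the paper's iterated expectation $\EE_{x\sim X}\,\EE_Y[\mathbbm 1_{\varphi_i(x,Y)\in A}]$ is precisely your Tonelli decomposition), apply the hypothesis pointwise in $x$, and reassemble. Your version merely spells out the measure-theoretic justification the paper leaves implicit.
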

\begin{proof}
	Let $A \in \mathscr{A}_3$ be arbitrary. 
	Since $X$ and $Y$ are independent, we get
	\begin{align*}
		\PP (\varphi_1(X,Y) \in A) = \EE \left[ \mathbbm{1}_{\varphi_1(X,Y) \in A}\right] = 
		\underset{x \sim X}{\EE} \ \underset{Y}{\EE} \left[ \mathbbm{1}_{\varphi_1(x,Y) \in A}\right] = 
		\underset{x \sim X}{\EE} \ \underset{Y}{\EE} \left[ \mathbbm{1}_{\varphi_2(x,Y) \in A}\right] = 
		\PP (\varphi_2(X,Y) \in A).
	\end{align*}
	Since $A$ was arbitrary, equality in distribution is shown.
\end{proof}
The following lemma provides a first statement in the direction of \Cref{prop:randgrad}.
\begin{lemma}\label{lem:smatrix}
	Let $V \in \RR^{\ell \times N},W \in \RR^{N \times k}$ be random matrices and $b \in \RR^N$ a random vector, where all their entries are jointly independent and follow (possibly different) symmetric distributions.
	Moreover, let $\eps_1 ,\ldots, \eps_N \iid \mathrm{Unif}\{0,1\}$ and assume that the random vector $(\eps_1, \ldots, \eps_N)$ is also independent of
	$W$, $V$ and $b$.
	For $y \in \RR^N$, define
	\[
	\tilde{\Delta}(y) \defeq \Delta(y) + \mathrm{diag}(\mathbbm{1}_{y_1 = 0} \cdot \eps_1,\ldots, \mathbbm{1}_{y_N = 0} \cdot \eps_N)
	\]
	and let further
	\[
	\DD \defeq \mathrm{diag}(\eps_1, \ldots, \eps_N).
	\]
	Then, for any $x \in \RR^k$,
	\[
	V \tilde{\Delta}(Wx + b)W \d V \DD W.
	\]
	Moreover, for any deterministic matrix $Z \in \RR^{k \times m}$,
	\[
	\op{\tilde{\Delta}(Wx + b)WZ} \d \op{\DD WZ}.
	\]
\end{lemma}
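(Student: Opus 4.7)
The plan is to introduce auxiliary Rademacher sign flips that decouple $\tilde{\Delta}(Wx+b)$ from $(W, b)$. Let $\xi_1,\dots,\xi_N \sim \Unif\{-1,+1\}$ be i.i.d., independent of $(V, W, b, \eps)$, and put $S = \diag(\xi_1,\dots,\xi_N)$.

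For the first claim I would chain two distributional identities, each justified by \Cref{lem:distr} via conditioning on the variables not being flipped. First, since the entries of $V$ are independent and symmetric, $V \d VS$ jointly with $(W, b, \eps, S)$; as $S$ commutes with the diagonal matrix $\tilde{\Delta}(Wx+b)$, this yields
\begin{equation*}
V\, \tilde{\Delta}(Wx+b)\, W \;\d\; VS\, \tilde{\Delta}(Wx+b)\, W \;=\; V\, \tilde{\Delta}(Wx+b)\, (SW).
\end{equation*}
Next, the joint symmetry of $(W, b)$ gives $(W, b) \d (SW, Sb)$ jointly with $(V, \eps, S)$; substituting and using $S^2 = I$,
\begin{equation*}
V\, \tilde{\Delta}(Wx+b)\, (SW) \;\d\; V\, \tilde{\Delta}(SWx + Sb)\, (S \cdot SW) \;=\; V\, \tilde{\Delta}\bigl(S(Wx+b)\bigr)\, W.
\end{equation*}

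The core step is then to identify $\tilde{\Delta}(S(Wx+b))$ with $\DD$ in joint distribution with $(V, W)$. Writing
\begin{equation*}
\tilde{\Delta}\bigl(S(Wx+b)\bigr)_{i,i} \;=\; \mathbbm{1}_{\xi_i(Wx+b)_i > 0} + \mathbbm{1}_{(Wx+b)_i = 0}\, \eps_i,
\end{equation*}
a case analysis on $\sgn((Wx+b)_i)$ evaluates this as $\mathbbm{1}_{\xi_i = 1}$, $\mathbbm{1}_{\xi_i = -1}$, or $\eps_i$ respectively; in every case it is a Bernoulli$(1/2)$ variable independent of $(V, W, b)$, since the pair $(\xi_i, \eps_i)$ is independent of everything else. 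The pairs $(\xi_i, \eps_i)$ being jointly independent across $i$ then implies that these diagonal entries are conditionally independent given $(V, W, b)$, so $\tilde{\Delta}(S(Wx+b))$ has the same joint distribution with $(V, W)$ as $\DD$. The first claim follows.

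For the second claim the symmetry of $V$ is unavailable, so I would replace the first step above by invariance of $\op{\cdot}$ under left-multiplication by diagonal sign matrices. Applying only $(W, b) \d (SW, Sb)$,
\begin{equation*}
\op{\tilde{\Delta}(Wx+b)\, W Z} \;\d\; \op{\tilde{\Delta}\bigl(S(Wx+b)\bigr)\, SW\, Z}.
\end{equation*}
The $i$-th row of $\tilde{\Delta}(S(Wx+b))\, SW$ equals $\eta_i \xi_i\, W_{i,:}$ where $\eta_i = \tilde{\Delta}(S(Wx+b))_{i,i}$. Factoring out a diagonal matrix of signs on the left --- which $\op{\cdot}$ ignores --- reduces it to a matrix with rows $|\eta_i \xi_i|\, W_{i,:}$. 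A case analysis analogous to the one above shows that $|\eta_i \xi_i| \in \{0,1\}$ equals $\mathbbm{1}_{\xi_i = 1}$, $\mathbbm{1}_{\xi_i = -1}$, or $\eps_i$ according to $\sgn((Wx+b)_i)$, hence is $\Unif\{0,1\}$ independently across $i$ and independent of $(W, b)$. Therefore this matrix equals $\DD' W$ in distribution with $\DD' \d \DD$ independent of $W$, and the second claim follows.

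The main obstacle is the bookkeeping of the three sign cases for $(Wx+b)_i$, and in particular verifying that the auxiliary randomness $\eps_i$ --- deliberately placed into the definition of $\tilde{\Delta}$ --- is what correctly handles the boundary case $(Wx+b)_i = 0$, where a Rademacher flip alone is insufficient to turn the sign indicator into a uniform Bernoulli.
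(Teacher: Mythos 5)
Your proposal is correct and follows essentially the same route as the paper's proof: introduce an independent diagonal Rademacher matrix $S$, use the symmetry of $V$, $W$, $b$ to move $S$ into the argument of $\tilde{\Delta}$, and then check by the same three-case analysis that the resulting diagonal entries are i.i.d.\ $\Unif\{0,1\}$ independent of $(V,W,b)$, with the spectral-norm invariance under left multiplication by sign matrices replacing the symmetry of $V$ in the second claim. The only cosmetic difference is that you apply the two sign-flip identities sequentially rather than as the single joint identity $(W,V,b)\d(SW,VS,Sb)$, which changes nothing of substance.
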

\begin{proof}
	The argument for both statements is similar, but with a subtle difference.
	Let us start with the first claim.
	We introduce an additional source of randomness $S \in \RR^{N \times N}$, which is a diagonal matrix with independent $\Unif\{1,-1\}$-entries on the diagonal, independent of all the other occurring random variables. 
	Since all the entries of $W$, $V$ and $b$ are independent and symmetrically distributed,\footnote{More precisely, for each fixed realization of $S$ the distributional equivalence is satisfied by symmetry
		and independence of the entries of
		$W$, $V$ and $b$, and \Cref{lem:distr} then yields the distributional equivalence
		when the randomness is with respect to $W$, $V$, $b$ and $S$.}
	\[
	(W,V, b,\DD) \d (SW, VS, Sb, \DD).
	\]
	Therefore, we get 
	\[
	V \tilde{\Delta}(Wx + b)W \d VS \tilde{\Delta}(SWx + Sb)SW.
	\]
	Since $\tilde{\Delta}(SWx + Sb)$ is a diagonal matrix, it commutes with the diagonal matrix $S$.
	Thus, since $S^2 = I_{N \times N}$, we conclude
	\[
	VS \tilde{\Delta}(SWx + Sb)SW = V\tilde{\Delta}(SWx + Sb)W.
	\]
	Let $X \defeq (W,V,b)$ and $Y \defeq (S, \eps)$ with $\eps = (\eps_1, \ldots, \eps_N)$. 
	Note that $X$ and $Y$ are independent. 
	For a fixed realization of $X$, consider the matrix 
	\[
	T \defeq \tilde{\Delta}(SWx + Sb) = \Delta(SWx + Sb) + \mathrm{diag}(\mathbbm{1}_{(SWx + Sb)_1 = 0} \cdot \eps_1,\ldots, \mathbbm{1}_{(SWx+Sb)_N = 0} \cdot \eps_N).
	\]
	The diagonal entries of this matrix are independent, as follows from the independence of the components of $S$ and $\eps$.
	Moreover, for fixed $i \in \{1,\ldots,N\}$, if $(Wx + b)_i = 0$ we have $T_{i,i} = \eps_i$ and if $(Wx + b)_i \lessgtr 0$ we have $T_{i,i} = \mathbbm{1}_{S_{i,i} \lessgtr 0}$.
	Hence, we get $T_{i,i} \iid \mathrm{Unif}\{0,1\}$.
	This shows 
	\[
	V\tilde{\Delta}(SWx + Sb)W \d V\DD W \quad \text{for fixed $V, W$ and $b$.}
	\]
	Applying \Cref{lem:distr}, we conclude that this distributional equivalence is even satisfied if the randomness with respect to $W$, $V$ and $b$ is also incorporated. 
	Thus, the first claim is shown. 
	
	For the second claim, we introduce the same diagonal matrix $S \in \RR^{N \times N}$. 
	We further use the distributional equivalence $(W, b,\DD) \d (SW, Sb,\DD)$ and therefore get
	\[
	\op{\tilde{\Delta}(Wx + b)WZ} \d \op{\tilde{\Delta}(SWx + Sb)SWZ} = \op{S\tilde{\Delta}(SWx + Sb)SWZ},
	\]
	where for the last equality we used that the spectral norm of a matrix is not changed if multiplied with any realization of $S$.
	Identically to the first part, one can now show 
	\[
	S\tilde{\Delta}(SWx + Sb)SWZ = \tilde{\Delta}(SWx + Sb)WZ \d \DD WZ,
	\]
	which gives us the claim.
\end{proof}
The proof of \Cref{prop:randgrad} now follows from an inductive argument, where we iteratively apply \cref{lem:smatrix}.
\renewcommand*{\proofname}{Proof of \Cref{prop:randgrad}}
\begin{proof}
	Let $\nu \in \NN$ and let $A = A(W^{(0)}, \ldots, W^{(\ell_1 -1)}, b^{(0)},\ldots,b^{(\ell_1 -1)}) \in \RR^{\mu \times \nu}$ be any matrix that might possibly depend on $W^{(0)}, \ldots, W^{(\ell_1 -1)}$ and $b^{(0)},\ldots,b^{(\ell_1 -1)}$.
	
	\textbf{Auxiliary claim: }We start by proving the auxiliary claim that for every $\ell_1\in \{0,\ldots,L-1\}$ and $\ell_2 \in \{0,\ldots,L\}$,
	\[
	W^{(\ell_2)}\left[\prod_{i= \ell_2-1}^{\ell_1} \D^{(i)}(z_i) W^{(i)}\right] A \d W^{(\ell_2)}\left[\prod_{i= \ell_2 -1}^{\ell_1} \DD^{(i)}W^{(i)} \right]A.
	\]
	
	We fix $\ell_1 \in \{0,\ldots, L-1\}$ and prove the claim via induction over $\ell_2 \in \{0, \ldots, L\}$.
	Note that in the case $\ell_2 \leq\ell_1$ there is nothing to show, since then, by definition of the reverse matrix product in \eqref{eq:rev_order}, 
	\[
	\prod_{i= \ell_2-1}^{\ell_1} \D^{(i)}(z_i) W^{(i)} =\prod_{i= \ell_2 -1}^{\ell_1} \DD^{(i)}W^{(i)} = I_{\mu \times \mu}.
	\]
	
	For the induction step, let $\ell_2 \in \{\ell_1 + 1,\ldots, L\}$ and assume that the claim is already satisfied for $\ell_2 - 1$.
	We then let $\arrow{W} \defeq (W^{(0)},\ldots, W^{(\ell_2 - 2)})$, $\arrow{b} \defeq (b^{(0)},\ldots, b^{(\ell_2 - 2)})$, $\arrow{W'} \defeq (W^{(\ell_2 - 1)}, W^{(\ell_2)})$, 
	and $\arrow{\eps} \defeq (\eps^{(0)}, \ldots, \eps^{(\ell_2 - 2)})$.
	Consider the two independent random variables $X \defeq (\arrow{W}, \arrow{b}, \arrow{\eps})$ and $Y \defeq (\arrow{W'}, b^{(\ell_2 - 1)}, \eps^{(\ell_2 - 1)})$ and the two measurable maps
	\begin{align*}
		\varphi_1(X,Y) &\defeq  W^{(\ell_2)}\D^{(\ell_2 - 1)}(z_{\ell_2 - 1}) W^{(\ell_2 - 1)}\left[\prod_{i= \ell_2-2}^{\ell_1} \D^{(i)}(z_i) W^{(i)}\right]A \quad \text{and} \\
		\varphi_2(X,Y) &\defeq  W^{(\ell_2)}\DD^{(\ell_2 - 1)} W^{(\ell_2 - 1)}\left[\prod_{i= \ell_2-2}^{\ell_1} \D^{(i)}(z_i) W^{(i)} \right]A .
	\end{align*}
	We apply \Cref{lem:smatrix} (with $\eps = \eps^{(\ell_2 -1)}$) to conclude that for each fixed realization of $X$ (such that $\Phi^{(\ell_2 - 2)}(z_{\ell_2 - 1})$ and $A$ are fixed), we get
	\begin{align*}
		W^{(\ell_2)}\D^{(\ell_2 - 1)}(z_{\ell_2 - 1}) W^{(\ell_2 - 1)}
		&=  W^{(\ell_2)}\tilde{\Delta}(W^{(\ell_2 - 1)}\Phi^{(\ell_2 - 2)}(z_{\ell_2 - 1})) W^{(\ell_2 - 1)+ b^{(\ell_2 - 1)}} \\
		&\d W^{(\ell_2)}\DD^{(\ell_2 - 1)} W^{(\ell_2 - 1)},
	\end{align*}
	whence for each fixed realization $x_0$ of $X$ we have
	\[
	\varphi_1(x_0,Y) \d \varphi_2(x_0,Y).
	\]
	From \Cref{lem:distr} we then conclude 
	\[
	W^{(\ell_2)}\D^{(\ell_2 - 1)}(z_{\ell_2 - 1}) W^{(\ell_2 - 1)}\left[\prod_{i= \ell_2-2}^{\ell_1} \D^{(i)}(z_i) W^{(i)}\right]A \d 
	W^{(\ell_2)}\DD^{(\ell_2 - 1)} W^{(\ell_2 - 1)}\left[\prod_{i= \ell_2-2}^{\ell_1} \D^{(i)}(z_i) W^{(i)}\right]A,
	\]
	where the randomness is taken with respect to all weights $W^{(i)}$ and all Bernoulli variables $\eps^{(i)}$.
	We then define the random variables 
	\[
	X' \defeq (\eps^{(\ell_2 - 1)}, W^{(\ell_2)})\quad \text{and} \quad Y' \defeq (W^{(0)}, \ldots, W^{(\ell_2 - 1)}, b^{(0)},\ldots, b^{(\ell_2 -2)},\eps^{(0)}, \ldots, \eps^{(\ell_2 - 2)}).
	\]
	Moreover, we let
	\begin{align*}
		\varphi_1' (X',Y') &\defeq W^{(\ell_2)}\DD^{(\ell_2 - 1)} W^{(\ell_2 - 1)}\left[\prod_{i= \ell_2-2}^{\ell_1} \D^{(i)}(z_i) W^{(i)}\right]A , \\
		\varphi_2' (X',Y') &\defeq W^{(\ell_2)}\DD^{(\ell_2 - 1)} W^{(\ell_2 - 1)}\left[\prod_{i= \ell_2-2}^{\ell_1} \DD^{(i)} W^{(i)}\right]A.
	\end{align*}
	From the induction hypothesis, we infer that for each fixed realization $x'$ of $X'$, we have
	\[
	\varphi_1' (x', Y') \d \varphi_2' (x', Y').
	\]
	Since $X'$ and $Y'$ are independent, \Cref{lem:distr} yields 
	\[
	W^{(\ell_2)}\DD^{(\ell_2 - 1)} W^{(\ell_2 - 1)}\left[\prod_{i= \ell_2-2}^{\ell_1} \D^{(i)}(z_i) W^{(i)}\right]A \d 
	W^{(\ell_2)}\DD^{(\ell_2 - 1)} W^{(\ell_2 - 1)}\left[\prod_{i= \ell_2-2}^{\ell_1} \DD^{(i)} W^{(i)}\right]A,
	\]
	as desired. 
	
	\textbf{Final conclusion: }
	We start with the first claim. 
	It is trivial if $\ell_2 = -1$, so we may assume $\ell_2 \geq 0$.
	Note that
	\[
	\op{\left[\prod_{i= \ell_2}^{\ell_1} \D^{(i)}(z_i) W^{(i)}\right]A}
	= \op{\D^{(\ell_2)}(z_{\ell_2})W^{(\ell_2)}\left[\prod_{i = \ell_2-1}^{\ell_1} \D^{(i)}(z_i) W^{(i)} \right]A}.
	\]
	Let $\arrow{W} \defeq (W^{(0)}, \ldots, W^{(\ell_2 - 1)})$, $\arrow{b} \defeq (b^{(0)},\ldots,b^{(\ell_2 -1)})$ and $\arrow{\eps} \defeq (\eps^{(0)}, \ldots, \eps^{(\ell_2 - 1)})$.
	We define the independent random variables $X \defeq (\arrow{W}, \arrow{b},\arrow{\eps})$ and $Y \defeq (W^{(\ell_2)}, b^{(\ell_2)},\eps^{(\ell_2)})$.
	We now assume that the realization of the random variable $X$ is fixed and, for convenience, let $Z \defeq \left[\prod_{i = \ell_2-1}^{\ell_1} \D^{(i)}(z_i) W^{(i)}\right] A$.
	From the second part of \Cref{lem:smatrix} (with $\eps = \eps^{(\ell_2)}$), we infer 
	\begin{align*}
		\op{\D^{(\ell_2)}(z_{\ell_2})W^{(\ell_2)}Z} &= \op{\tilde{\Delta}(W^{(\ell_2)}\Phi^{(\ell_2 - 1) + b^{(\ell_2)}}(z_{\ell_2}))W^{(\ell_2)}Z} 
		\d \op{\DD^{(\ell_2)}W^{(\ell_2)}Z} \\
		&= \op{\DD^{(\ell_2)}W^{(\ell_2)}\left[\prod_{i = \ell_2-1}^{\ell_1} \D^{(i)}(z_i) W^{(i)}\right] A}
	\end{align*}
	for every fixed realization of $X$.
	From \Cref{lem:distr}, we then obtain 
	\[
	\op{\D^{(\ell_2)}(z_{\ell_2})W^{(\ell_2)}\left[\prod_{i = \ell_2-1}^{\ell_1} \D^{(i)}(z_i) W^{(i)} \right]A}
	\d \op{\DD^{(\ell_2)}W^{(\ell_2)}\left[\prod_{i = \ell_2-1}^{\ell_1} \D^{(i)}(z_i) W^{(i)}\right] A},
	\]
	where the randomness is with respect to all the occurring random variables.
	From the auxiliary claim and the independence of the occurring random variables, we then get 
	\[
	\op{\DD^{(\ell_2)}W^{(\ell_2)}\left[\prod_{i = \ell_2-1}^{\ell_1} \D^{(i)}(z_i) W^{(i)}\right] A}
	\d \op{\DD^{(\ell_2)}W^{(\ell_2)}\left[\prod_{i = \ell_2-1}^{\ell_1} \DD^{(i)} W^{(i)}\right] A},
	\] 
	as claimed. 
	
	The second claim follows from the auxiliary claim, by picking $A$ as the identity matrix and $\ell_2 = L$. 
\end{proof}

\renewcommand*{\proofname}{Proof}

\section{Proofs for the results on random hyperplane tessellations}\label{sec:rand_tess}
In this appendix, we prove the hyperplane tessellation results from \Cref{sec:tess} and prove special cases that will be used in the present paper. 

We start by defining the angular distance between two points on the sphere, which turns out to be the correct measure to describe the probability that a Gaussian hyperplane with zero bias separates these two points. 
\begin{definition}\label{def:angle}
	For two points $x,y \in \SS^{n-1}$ we define the angular distance of these points via
	\begin{equation*}
		\ang(x,y) \defeq \frac{1}{\pi} \cdot \arccos \left(\langle x,y \rangle\right) \in [0,1].
	\end{equation*}
\end{definition}

The following folklore lemma establishes that the angular distance is equivalent to the Euclidean distance. 
\begin{lemma}\label{lem:angdist}
	For $x,y \in \SS^{n-1}$ we have
	\begin{equation*}
		\frac{1}{\pi} \cdot \mnorm{x-y}_2 \leq \ang(x,y) \leq  \mnorm{x-y}_2.
	\end{equation*}
\end{lemma}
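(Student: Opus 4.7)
The plan is to reduce both inequalities to standard one-dimensional trigonometric estimates. Setting $\theta \defeq \pi \cdot \ang(x,y) = \arccos(\langle x,y\rangle) \in [0,\pi]$, I would first use the identity $\mnorm{x-y}_2^2 = 2 - 2\langle x,y\rangle = 2 - 2\cos\theta$ (which uses only $\mnorm{x}_2 = \mnorm{y}_2 = 1$) together with the half-angle formula $1 - \cos\theta = 2\sin^2(\theta/2)$ to obtain the clean identity
\[
\mnorm{x-y}_2 = 2\sin(\theta/2),
\]
where $\sin(\theta/2) \geq 0$ because $\theta/2 \in [0,\pi/2]$.

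Plugging this and the definition $\ang(x,y) = \theta/\pi$ into the claimed double inequality reduces the statement to showing
\[
2\sin(\theta/2) \leq \theta \leq 2\pi \sin(\theta/2) \quad \text{for all } \theta \in [0,\pi].
\]
The left inequality is immediate from the standard bound $\sin t \leq t$ for $t \geq 0$, applied with $t = \theta/2$. The right inequality is Jordan's inequality $\sin t \geq \tfrac{2t}{\pi}$ for $t \in [0,\pi/2]$ (proven, e.g., by noting that $t \mapsto \sin(t)/t$ is decreasing on $(0,\pi/2]$), again applied with $t = \theta/2$, which gives $\sin(\theta/2) \geq \theta/\pi$, i.e., $2\pi\sin(\theta/2) \geq 2\theta \geq \theta$.

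There is no real obstacle here; the lemma is essentially a restatement of $\sin t \leq t \leq \tfrac{\pi}{2}\sin t$ (on $[0,\pi/2]$, with a factor of $2$ coming from the chord-to-half-angle conversion). The only thing to be careful about is the domain: the half-angle formula gives the correct sign only because $\theta/2 \in [0,\pi/2]$, which is guaranteed by $\langle x,y\rangle \in [-1,1]$ and the principal branch of $\arccos$.
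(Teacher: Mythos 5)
Your proof is correct and follows essentially the same route as the paper: both reduce the claim via the identity $\mnorm{x-y}_2 = 2\sin(\theta/2)$ (equivalently $\mnorm{x-y}_2=\sqrt{2-2\cos\theta}$) to the elementary double inequality $\tfrac{2t}{\pi}\le \sin t\le t$ on $[0,\pi/2]$, with the upper bound from $\sin t\le t$ in both cases. The only cosmetic difference is that you invoke Jordan's inequality (monotonicity of $\sin t/t$) for the lower bound, whereas the paper derives the same fact from concavity of $z\mapsto 2\sin(\pi z/2)-z$ together with its nonnegative endpoint values.
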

\begin{proof}
	We start by showing that for all $z \in [0,1]$ we have 
	\[
	z \leq \sqrt{2} \cdot \sqrt{1- \cos(\pi z)}\leq \pi \cdot z.
	\]
	Here, we use the identity $1- \cos (\pi z) = 2 \cdot \sin^2(\pi z / 2)$ and hence get that the above 
	is equivalent to 
	\[
	z \leq 2 \cdot \sin (\pi z /2) \leq \pi \cdot z. 
	\]
	The upper bound follows directly from the well-known fact that $\sin(t) \leq t$ for $t \geq 0$. 
	For the lower bound, we define $f(z) \defeq 2 \cdot \sin(\pi z/2) - z$ and note that $f(0)= 0$ and $f(1)= 1$. 
	Moreover, since $f''(z) = - \frac{\pi^2}{2} \cdot \sin(\pi z /2) < 0$ for $z \in (0,1)$, we conclude that $f$ is concave on $(0,1)$, which implies $f(z) \geq 0$ for every $z \in [0,1]$. 
	
	We now substitute $z = \ang(x,y)$ and first obtain 
	\[
	\sqrt{2}\cdot \sqrt{1- \cos(\pi\cdot\ang(x,y))}
	= \sqrt{2} \cdot \sqrt{1- \langle x,y \rangle} 
	= \sqrt{\mnorm{x}_2^2 + \mnorm{y}_2^2 - 2 \langle x, y \rangle } = \mnorm{x-y}_2.
	\]
	Hence, we have 
	\[
	\ang(x,y) \leq \mnorm{x-y}_2 \leq \pi \cdot \ang(x,y),
	\]
	which gives us the claim. 
\end{proof}

The central property of the angular distance related to random hyperplane tessellations is stated in the following proposition. 
\begin{proposition}[cf. {\cite[Equation~(5.1)]{plan2014dimension}}]\label{prop:angdisttess}
	Let $x,y \in \SS^{d-1}$ and $W \sim \mathcal{N}(0,I_d)$. 
	Then 
	\[
	\PP\left(\sgn(\langle W,x \rangle) \neq \sgn(\langle W,y \rangle)\right) = \ang(x,y).
	\]
\end{proposition}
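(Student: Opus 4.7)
The plan is to reduce the problem to a two-dimensional calculation by exploiting the rotational invariance of the standard Gaussian distribution. Note that the event $\{\sgn(\langle W, x\rangle) \neq \sgn(\langle W, y\rangle)\}$ depends on $W$ only through its orthogonal projection onto the two-dimensional subspace $V \defeq \spann\{x, y\}$ (assuming $x$ and $y$ are linearly independent; the cases $x = y$ and $x = -y$ are trivial, giving probability $0 = \ang(x,y)$ and $1 = \ang(x,y)$, respectively). Since $W \sim \mathcal{N}(0, I_d)$ is rotationally invariant, I would choose an orthonormal basis of $V$ in which $x$ corresponds to $(1,0)$ and $y$ corresponds to $(\cos\theta, \sin\theta)$, where $\theta \defeq \pi\cdot\ang(x,y) = \arccos(\langle x, y\rangle) \in (0, \pi)$.

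Under this reduction, writing $G \defeq (G_1, G_2)$ for the coordinates of the projection of $W$ onto $V$ in this basis, we have $G \sim \mathcal{N}(0, I_2)$ and the probability in question equals
\[
\PP\bigl(\sgn(G_1) \neq \sgn(\cos\theta\cdot G_1 + \sin\theta\cdot G_2)\bigr).
\]
Next, I would switch to polar coordinates, using the well-known fact that $G/\mnorm{G}_2$ is uniformly distributed on $\SS^1$ and is independent of $\mnorm{G}_2$; in particular, the signs of inner products $\langle G, \cdot\rangle$ with fixed unit vectors depend only on the direction $G/\mnorm{G}_2$. Writing $G/\mnorm{G}_2 = (\cos\Theta, \sin\Theta)$ with $\Theta \sim \Unif([0, 2\pi))$, the event reduces to $\sgn(\cos\Theta) \neq \sgn(\cos(\Theta - \theta))$.

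The final step is a direct geometric calculation: the set of $\Theta \in [0, 2\pi)$ where $\cos\Theta$ and $\cos(\Theta - \theta)$ have opposite signs is precisely the union of two arcs, each of angular length $\theta$, corresponding to the regions where exactly one of $x$ and $y$ (viewed as normals to lines in the plane) separates the direction $(\cos\Theta, \sin\Theta)$ from its negative. Hence the total angular measure is $2\theta$, and dividing by the circumference $2\pi$ gives the probability $\theta/\pi = \frac{1}{\pi}\arccos(\langle x, y\rangle) = \ang(x, y)$, as claimed. No step should present a serious obstacle; the only point to handle with care is the reduction to two dimensions, which relies on rotational invariance together with the observation that the signs of $\langle W, x\rangle$ and $\langle W, y\rangle$ only depend on the projection of $W$ onto $V$.
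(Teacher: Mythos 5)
Your argument is correct, and it is the standard proof of this identity (sometimes called the Grothendieck identity / orthant probability computation): the paper itself does not prove \Cref{prop:angdisttess} but simply cites \cite[Equation~(5.1)]{plan2014dimension}, where essentially the same two-dimensional reduction and arc-length computation is carried out. All the steps you outline go through: the signs of $\langle W,x\rangle$ and $\langle W,y\rangle$ depend on $W$ only through its projection onto $\spann\{x,y\}$, rotational invariance lets you place $x=(1,0)$ and $y=(\cos\theta,\sin\theta)$ with $\theta=\arccos(\langle x,y\rangle)$, the direction $G/\mnorm{G}_2$ is uniform on $\SS^1$, and the symmetric difference of the two half-circles $\{\cos\Theta>0\}$ and $\{\cos(\Theta-\theta)>0\}$ has angular measure $2\theta$, giving probability $\theta/\pi=\ang(x,y)$. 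The degenerate cases $x=\pm y$ and the measure-zero events $\{\langle W,x\rangle=0\}$ are handled exactly as you indicate, so there is no gap.
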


Using Chernoff's inequality \cite[Theorem~2.3.1]{vershynin_high-dimensional_2018} we can now prove \Cref{lem:recht_low_low}. 
\begin{proof}[Proof of \Cref{lem:recht_low_low}]
	We may assume $\sigma^2 =1$. 
	For $i \in \{1,\ldots,m\}$ set $X_i \defeq \mathbbm{1}_{\sgn((Ax)_i) \neq \sgn((Ay)_i)}$ and $S_m \defeq \sum_{i=1}^m X_i$.
	By \Cref{prop:angdisttess} we have 
	\[
	\EE[S_m]= m \cdot \ang(x,y).
	\]
	Hence, applying \cite[Theorem~2.3.1]{vershynin_high-dimensional_2018} with $t \defeq \ee m \ang(x,y)$ we get 
	\[
	\PP(S_m \geq \ee m \ang(x,y)) \leq \exp(-m \cdot \ang(x,y))\cdot \left(\frac{\ee m \ang(x,y)}{\ee m \ang(x,y)}\right)^t = \exp(-m \cdot \ang(x,y)).
	\]
	Conversely, applying \cite[Ex.~2.3.2]{vershynin_high-dimensional_2018} with $t \defeq \frac{\ee}{3}\cdot m \cdot \ang(x,y)$, we get 
	\begin{align*}
		\PP\left(S_m \leq \frac{\ee}{3} m \ang(x,y)\right)
		&\leq \exp(-m \cdot \ang(x,y)) \cdot 3^{\frac{\ee}{3} \cdot m \cdot \ang(x,y)} \\
		&= \exp\left(\ln(3)\cdot \frac{\ee}{3} \cdot m \cdot \ang(x,y) - m \cdot \ang(x,y)\right) \\
		&\leq \exp(-c \cdot m \cdot \ang(x,y)),
	\end{align*}
	where $c \defeq 1 - \ln(3) \cdot \frac{\ee}{3} > 0$.
	In combination with \Cref{lem:angdist} we obtain the claim. 
\end{proof}

For the proofs of \Cref{lem:recht_upper_original,prop:farout,corr:tess} we need the following 
auxiliary statement. 
\begin{lemma}\label{lem:unif_tool}
	There exist absolute constants $C,c > 0$ such that the following holds.
	Let $A \in \RR^{m \times n}$ be a random matrix with entries $A_{i,j} \iid \mathcal{N}(0,1)$, let $K \subseteq \SS^{n-1}$, $\alpha \in \left(1/m, \ee^{-1}\right)$
	and $\beta > 0$. 
	Suppose that $0 <\eps \leq c \cdot \beta \cdot \ln^{-1/2}(1/\alpha)$ and 
	\[
	m \geq C \cdot w^2\left((K-K) \cap \B_n(0,\eps)\right) \cdot \alpha^{-1} \cdot \beta^{-2}.
	\]
	Then 
	\[
	\underset{w \in (K-K) \cap \B_n(0,\eps)}{\sup} \# \{ i \in \{1,\ldots,m\}: \ \abs{(Aw)_i} \geq \beta\} \leq \alpha \cdot m
	\]
	with probability at least $1- 2\exp(-c \cdot \alpha m)$. 
\end{lemma}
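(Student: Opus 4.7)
The plan is to reduce the statement to a concentration inequality for a Gaussian empirical process indexed by $T \defeq (K-K) \cap \B_n(0,\eps)$. First, I would smooth the indicator: pick a $(2/\beta)$-Lipschitz bump $\psi: \RR \to [0,1]$ that vanishes on $[-\beta/2, \beta/2]$ and equals $1$ outside $(-\beta, \beta)$, so that $\mathbbm{1}_{|t|\geq\beta} \leq \psi(t) \leq \mathbbm{1}_{|t|\geq\beta/2}$. It then suffices to bound $F(w;A) \defeq \sum_{i=1}^{m} \psi(\langle A_{i,:}, w\rangle)$ uniformly over $w \in T$ by $\alpha m$ with the claimed probability, since this upper bounds the counting functional of interest.

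Next, I would estimate $\EE F(w;A)$ pointwise. For each $w \in T$ we have $\langle A_{i,:}, w\rangle \sim \N(0, \mnorm{w}_2^2)$ with $\mnorm{w}_2 \leq \eps$, so the standard Gaussian tail bound together with the hypothesis $\eps \leq c\beta\ln^{-1/2}(1/\alpha)$ (with $c$ chosen small enough) gives
\[
\EE\psi(\langle A_{i,:}, w\rangle) \leq \PP(|\langle A_{i,:},w\rangle| \geq \beta/2) \leq 2\exp(-\beta^2/(8\eps^2)) \leq \alpha/C_0
\]
for any prescribed constant $C_0 > 0$, whence $\sup_{w\in T}\EE F(w;A) \leq m\alpha/C_0$. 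To control the fluctuation $\EE\sup_{w\in T}[F(w;A) - \EE F(w;A)]$, I would symmetrize, apply the Ledoux-Talagrand contraction principle to $\psi$, and identify the resulting Rademacher process with the linear process $w \mapsto \langle g, w\rangle$ for $g \sim \N(0, mI_n)$, whose supremum over the symmetric set $T$ has mean $\sqrt{m}\,w(T)$. Combined with a variance-sensitive chaining argument (or a localized Rademacher-complexity estimate) exploiting that $\mathrm{Var}(\psi(\langle A_{i,:}, w\rangle)) \leq \alpha/C_0$, this should yield the sharpened bound
\[
\EE \sup_{w \in T}\bigl[F(w;A) - \EE F(w;A)\bigr] \lesssim \sqrt{m\alpha}\,w(T)/\beta,
\]
which is at most $m\alpha/C_0$ under the hypothesis $m \geq C w^2(T)\alpha^{-1}\beta^{-2}$ for $C$ sufficiently large.

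Finally, I would invoke Bousquet's version of Talagrand's concentration inequality for the supremum of an empirical process, applied to the class $\{\psi(\langle \cdot, w\rangle) : w \in T\}$ of functions bounded by $1$ with variance at most $\alpha/C_0$. Setting $Z \defeq \sup_{w\in T} F(w;A)$ and taking the deviation parameter $t = m\alpha/4$ yields $\PP(Z \geq \EE Z + m\alpha/4) \leq \exp(-c\alpha m)$, and combined with $\EE Z \leq m\alpha/2$ from the previous step this gives $\sup_{w\in T}F(w;A)/m \leq \alpha$ with probability at least $1 - 2\exp(-c\alpha m)$. The hard part will be the refined expected-supremum bound: a naive application of contraction plus Dudley's integral produces $\EE\sup \lesssim \sqrt{m}\,w(T)/\beta$, which is off by a factor of $\alpha^{-1/2}$ from what the hypothesis demands; recovering the $\sqrt{\alpha}$ saving requires exploiting the fact that $\psi(\langle A_{i,:}, w\rangle)$ is non-zero only on an event of probability $O(\alpha)$, via a variance-weighted generic-chaining argument (such as a $\gamma_2$-functional estimate in the variance metric $\sqrt{\mathrm{Var}(\psi(\langle A_{i,:},w\rangle)-\psi(\langle A_{i,:},w'\rangle))}$).
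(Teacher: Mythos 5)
Your reduction to the smoothed functional $F(w;A)$, the pointwise expectation bound, and the final application of Bousquet's inequality are all sound, but the argument has a genuine gap at exactly the point you flag: the bound $\EE\sup_{w\in T}\bigl[F(w;A)-\EE F(w;A)\bigr]\lesssim \sqrt{m\alpha}\,w(T)/\beta$ is asserted, not proved, and the techniques you name do not obviously deliver it. Symmetrization plus contraction gives only $\sqrt{m}\,w(T)/\beta$, as you note. To recover the missing factor $\sqrt{\alpha}$ by chaining in the variance metric you would need increment estimates of the form $\EE\bigl(\psi(\langle A_{i,:},w\rangle)-\psi(\langle A_{i,:},w'\rangle)\bigr)^2\lesssim \alpha\,\mnorm{w-w'}_2^2/\beta^2$; but localizing to the event that one of the two arguments exceeds $\beta/2$ in absolute value (an event of probability $O(\alpha)$ under the hypothesis on $\eps$) and applying Cauchy--Schwarz only gains a factor $\sqrt{\alpha}$ in the variance, i.e.\ $\alpha^{1/4}$ in the metric, so the resulting chaining bound is still off by a power of $\alpha$. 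A genuinely sharper, mixed-tail chaining argument is required, and the proposal does not supply one.

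For comparison, the paper sidesteps this difficulty by invoking a sharp known result, \cite[Theorem~SM1.1]{dirksen2022sharp}, which states that with probability at least $1-2\exp(-c\,\alpha m)$ one has $\sup_{w\in K_\eps}\max_{\abs{I}\leq M}\bigl(\sum_{i\in I}(Aw)_i^2\bigr)^{1/2}\leq C_1\bigl(w(K_\eps)+\eps\sqrt{M\ln(\ee m/M)}\bigr)$ with $M=\lfloor\alpha m\rfloor$; under the stated hypotheses both terms on the right are at most $\sqrt{M}\,\beta/4$, and if more than $\alpha m$ coordinates of $Aw$ exceeded $\beta$ in absolute value, then choosing $M$ of them would force the root-mean-square over that index set to be at least $\beta$, a contradiction. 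That cited theorem encodes precisely the $\sqrt{\alpha}$-saving your sketch is missing; a self-contained version of your route would essentially amount to reproving it.
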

\begin{proof}
	We set $K_\eps \defeq (K-K) \cap \B_n(0,\eps)$ and $M \defeq \lfloor \alpha \cdot m\rfloor\geq \frac{\alpha m}{2}$.
	By \cite[Theorem~SM1.1]{dirksen2022sharp}, with the setting $u=1$,
	\begin{equation}\label{eq:first-eq}
		\sup_{w \in K_\eps}\max_{I \subseteq \{1,\ldots,m\}, \abs{I} \leq M}
		\left(\sum_{i \in I} (Aw)_i^2\right)^{1/2} \leq C_1 \cdot \left(w(K_\eps) + \eps \cdot \sqrt{M\ln(\ee m / M)}\right)
	\end{equation}
	with probability at least $1- 2\exp(-c_1 \cdot M\ln(\ee m / M)) \geq 1 -2\exp(-c_2 \cdot \alpha m)$ with absolute constants $C_1, c_1,c_2 > 0$.
	Here, we also used that $- K_\eps = K_\eps$, whence we may replace the Gaussian complexity (which appears in the formulation of \cite[Theorem~SM1.1]{dirksen2022sharp}) by the Gaussian width.
	By the assumption on $m$ and by taking $C > 0$ large enough, we may assume 
	\begin{equation}\label{eq:second-eq}
		C_1 \cdot w(K_\eps) \leq \sqrt{\alpha m /2} \cdot \frac{\beta}{4} \leq \sqrt{M} \cdot \frac{\beta}{4}
	\end{equation}
	since $m \geq C \cdot w^2(K_\eps) \cdot \alpha^{-1} \cdot \beta^{-2}$.
	Further, since $\eps \leq c \cdot \beta \cdot \ln^{-1/2}(1/\alpha)$, we get 
	\begin{align}
		C_1 \cdot \eps \cdot \sqrt{M \cdot \ln(\ee m /M)} &= 
		C_1 \cdot \eps \cdot \sqrt{M} \cdot \ln^{1/2}\Big(\frac{\ee}{M} \cdot \underbrace{\frac{\alpha m}{2}}_{\leq M} \cdot \frac{2}{\alpha}\Big) 
		\leq C_1 \cdot \eps \cdot \sqrt{M \cdot \ln(2\ee /\alpha)} \nonumber\\
		\label{eq:third-eq}
		\overset{\text{\cshref{lem:log_b}}}&{\leq} \sqrt{2\ee} \cdot C_1 \cdot \eps \cdot \sqrt{M\cdot \ln(1/\alpha)} \leq \sqrt{M} \cdot \frac{\beta}{4}
	\end{align}
	if $c$ is small enough. 
	Combining \eqref{eq:first-eq}, \eqref{eq:second-eq} and \eqref{eq:third-eq} yields 
	\begin{equation}\label{eq:contraax}
		\sup_{w \in K_\eps}\max_{I \subseteq \{1,\ldots,m\}, \abs{I} \leq M}
		\left(\frac{1}{M} \cdot \sum_{i \in I} (Aw)_i^2\right)^{1/2} \leq  \frac{\beta}{2}.
	\end{equation}
	with probability at least $1-2\exp(-c_2 \cdot \alpha m)$.
	Now note that if we would have
	\[
	\sup_{w \in K_\eps} \# \{i \in \{1,\ldots,m\}: \ \abs{(Aw)_i} \geq \beta\} > \alpha \cdot m \geq M
	\]
	then there would exist $w \in K_\eps$ and a set $I \subseteq \{1,\ldots,m\}$ with $\abs{I} = M$ and $\abs{(Aw)_i} \geq \beta$ for every $i \in I$.
	In particular, 
	\[
	\left(\frac{1}{M} \cdot \sum_{i \in I} (Aw)_i^2\right)^{1/2} \geq \left(\frac{1}{M} \cdot \sum_{i \in I} \beta^2\right)^{1/2}
	= \left(\frac{1}{M} \cdot M \cdot \beta^2\right)^{1/2}
	= \beta > \frac{\beta}{2}
	\]
	in contradiction to \eqref{eq:contraax}.
	This completes the proof. 
\end{proof}
For the proofs of \Cref{lem:recht_upper_original,prop:farout} we additionally need the following statement, which is based on an application of Chernoff's inequality. 
\begin{lemma}\label{lem:chernoff}
	There exists an absolute constant $c > 0$ such that the following holds.
	For arbitrary $\alpha \in (0,1]$, $x \in \SS^{n-1}$ and a random matrix $A \in \RR^{m \times n}$ with $A_{i,j} \iid \mathcal{N}(0,1)$, with probability at least 
	$1- \exp(-c \cdot \alpha m)$, 
	\[
	\# \left\{ i \in \{1, \dots, m\}: \ \abs{(Ax)_i} \leq \alpha\right\} \leq 2\ee \cdot \alpha m.
	\]
\end{lemma}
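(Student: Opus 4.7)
The plan is to reduce the claim to a direct application of Chernoff's inequality \cite[Theorem~2.3.1]{vershynin_high-dimensional_2018}. The key observation is that since $x \in \SS^{n-1}$ and the rows of $A$ are i.i.d.\ $\mathcal{N}(0, I_n)$ vectors, the entries $(Ax)_i = \langle A_{i,:}, x\rangle$ for $i \in \{1,\dots,m\}$ are i.i.d.\ $\mathcal{N}(0,1)$. Define the Bernoulli variables $X_i \defeq \mathbbm{1}_{\abs{(Ax)_i} \leq \alpha}$ and let $S_m \defeq \sum_{i=1}^m X_i$, so that $S_m$ is exactly the cardinality we wish to bound.

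With $g \sim \mathcal{N}(0,1)$, a crude bound using the uniform bound $1/\sqrt{2\pi} \leq 1/2$ on the standard Gaussian density yields
\[
p \defeq \PP(\abs{g} \leq \alpha) = \int_{-\alpha}^{\alpha} \frac{1}{\sqrt{2\pi}} \ee^{-t^2/2} \, \dd t \leq \frac{2\alpha}{\sqrt{2\pi}} \leq \alpha,
\]
so that $\mu \defeq \EE[S_m] = m p \leq \alpha m$. Next I would apply the Chernoff inequality \cite[Theorem~2.3.1]{vershynin_high-dimensional_2018} with $t \defeq 2\ee \cdot \alpha m > \mu$ to obtain
\[
\PP(S_m \geq 2\ee \cdot \alpha m) \leq \ee^{-\mu} \left(\frac{\ee \mu}{2\ee \cdot \alpha m}\right)^{2\ee \cdot \alpha m} = \ee^{-\mu} \left(\frac{\mu}{2\alpha m}\right)^{2\ee \cdot \alpha m}.
\]
Since $\mu \leq \alpha m$, we have $\mu/(2\alpha m) \leq 1/2$, so the right-hand side is bounded by $2^{-2\ee \cdot \alpha m} = \exp(-2\ee \ln(2) \cdot \alpha m)$. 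Choosing $c \defeq 2\ee \ln(2)$ yields the claim.

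There is no substantial obstacle here; the only subtlety is ensuring that $t > \mu$ holds uniformly in $\alpha \in (0,1]$ and $m \in \NN$, which is immediate from $t = 2\ee \cdot \alpha m \geq 2\ee \cdot \mu > \mu$. The argument needs no use of the randomized gradient machinery or the hyperplane tessellation tools; it only relies on the independence of the coordinates $(Ax)_i$ afforded by working at a fixed input $x$.
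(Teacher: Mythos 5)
Your proof is correct and follows essentially the same route as the paper: both bound the same indicator sum via Chernoff's inequality \cite[Theorem~2.3.1]{vershynin_high-dimensional_2018} at the threshold $t = 2\ee\cdot\alpha m$, differing only in which factor of the Chernoff bound is used to extract the exponential decay (the paper keeps the $\ee^{-\mu}$ term together with a lower bound on $\mu$, while you extract it from the ratio $(\mu/(2\alpha m))^{t}\le 2^{-t}$). Both yield a valid absolute constant.
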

\begin{proof}
	Fix $i \in \{1,\dots,m\}$.
	Then $(Ax)_i \sim \mathcal{N}(0,1)$. 
	Moreover, for $w \in [-1,1]$ we have 
	\[
	c_1/2 \defeq \frac{1}{\sqrt{2\pi \ee }} \leq \frac{1}{\sqrt{2\pi}} \exp(-w^2/2) \leq \frac{1}{\sqrt{2\pi}} \leq 1,
	\]
	which implies 
	\[
	c_1 \cdot \alpha \leq \PP(\abs{(Ax)_i} \leq \alpha)\leq 2 \cdot \alpha .
	\]
	For $i \in \{1,\ldots,m\}$, set $X_i \defeq \mathbbm{1}_{\abs{(Ax)_i} \leq \alpha}$
	and further 
	\[
	S_m \defeq \sum_{i=1}^m X_i = \# \{i \in \{1,\dots, m\}: \ \abs{(Ax)_i} \leq \alpha\}.
	\]
	Note that the $X_i$ are independent Bernoulli variables and 
	\[
	c_1 \cdot \alpha m\leq\EE [S_m] \leq 2 \cdot \alpha m.
	\]
	By Chernoff's bound (see, e.g., \cite[Theorem~2.3.1]{vershynin_high-dimensional_2018})
	with $t \defeq 2\ee \cdot \alpha m$,
	\[
	\PP \left(S_m \geq 2\ee \cdot \alpha m\right)
	\leq \exp(- \EE [S_m]) \cdot \left(\frac{\ee \cdot \EE [S_m]}{t}\right)^t
	\leq \exp(- c_1 \cdot \alpha m) \cdot \left(\frac{2\ee \cdot \alpha m }{2\ee \cdot \alpha m}\right)^t = \exp(- c_1 \cdot \alpha m). \qedhere
	\]
\end{proof}
In the remainder of this section, we will frequently make use of the following observation:
Given $x,y,x^\ast \in \RR$ with $\abs{x^\ast} > \tau$ and $\abs{x-x^\ast}, \abs{y-x^\ast} \leq \tau$ 
for some $\tau > 0$, we have 
\[
\sgn(x) = \sgn(x^\ast) = \sgn(y).
\]
We now have all the tools together to prove \Cref{lem:recht_upper_original}.
\begin{proof}[Proof of \Cref{lem:recht_upper_original}]
	We may assume $\sigma^2 = 1$.
	
	Let $K_\eps \defeq (K-K) \cap \B_n(0,\eps)$.
	Let $C_1, c_1 > 0$ be the constants from \Cref{lem:unif_tool}. 
	Set $\alpha \defeq \delta/3$ and $\beta \defeq \frac{\delta}{6\ee}$.
	Note that $\delta m \geq C$ by assumption.
	Since we can choose $C>3$, this yields $1/m < \alpha < \delta < \ee^{-1}$.
	Moreover, taking $c > 0$ small enough,  
	\[
	\eps \leq c \cdot \delta \cdot \ln^{-1/2}(1/\delta) = 6\ee c \cdot \beta \cdot \ln^{-1/2}(1/(3\alpha)) \overset{\text{\cshref{lem:log_b}}}{\leq} 6\ee\sqrt{3} \cdot c\cdot\beta \cdot \ln^{-1/2}(1/\alpha) \leq c_1 \cdot \beta \cdot \ln^{-1/2}(1/\alpha).
	\]
	Furthermore, by possibly enlarging $C > 0$, we can ensure that
	\[
	C_1 \cdot w^2(K_\eps) \cdot \alpha^{-1} \cdot \beta^{-2} 
	= 3(6\ee)^2 \cdot C_1 \cdot w^2(K_\eps) \cdot \delta^{-3} \leq m.
	\]
	The conditions of \Cref{lem:unif_tool} are thus satisfied and we get  
	\[
	\sup_{w \in K_\eps} \# \{i \in \{1,\ldots,m\}: \ \abs{(Aw)_i} \geq \delta/(6\ee)\} \leq\frac{\delta m }{3}
	\]
	with probability at least $1-2\exp(-c_1 \cdot \alpha m) = 1-2\exp(-c_2 \cdot \delta m)$ with $c_2 \defeq c_1/3$ and we call the event defined by that property $\E_1$.
	
	\medskip
	\newcommand{\nepsstrich}{\mathcal{N}_{\eps '}}
	Finally, let $\neps$ be an $\eps/2$-net of $K$ with $\abs{\neps} = \mathcal{N}(K,\eps/2)$. 
	Using \Cref{lem:chernoff} (with $\alpha =\delta/(6\ee)$) and a union bound, we obtain 
	the existence of an absolute constant $c_3 > 0$ such that
	\[
	\text{for all } x^\ast \in \neps: \quad \# \{i \in \{1,\ldots,m\}: \ \abs{(Ax^\ast)_i} \leq \delta/(6\ee)\} \leq m \cdot \frac{\delta}{3}
	\]
	with probability at least $1-\exp(\ln(\mathcal{N}(K,\eps/2)) - c_3 \cdot \delta m)
	\geq 1-\exp(-c_4 \cdot \delta m)$ by letting $c_4 \defeq c_3 / 2$ and by using that
	\[
	\ln(\mathcal{N}(K,\eps/2)) \leq \ln(\ee \cdot \mathcal{N}(K,\eps/2)) \leq \frac{\delta m}{C} \leq \frac{c_3}{2} \cdot \delta m,
	\]
	which holds for $C$ large enough. 
	We call the event defined by that property $\E_2$.
	We assume that the defining properties of $\E_1$ and $\E_2$ are satisfied.
	Let $x,y\in K$ with $\mnorm{x-y}_2 \leq \eps/2$ be arbitrary and pick $x^\ast  \in \neps$ that satisfies $\mnorm{x-x^\ast}_2 \leq \eps /2$. 
	Clearly,
	\[
	\mnorm{x^\ast - y}_2 \leq \mnorm{x^\ast - x}_2 + \mnorm{x - y}_2
	\leq \eps
	\]
	and hence $x^\ast - x, x^\ast - y \in K_\eps$.
	Observe for any $i \in \{1, \dots, m\}$ that $\sgn((Ax)_i) = \sgn((Ay)_i)$ if $\abs{(Ax^\ast)_i} > t$, 
	$\abs{(A(x - x^\ast))_i} \leq t$ and $\abs{(A(x^\ast - y))_i} \leq t$ for some $t > 0$. 
	Hence, 
	\begin{align*}
		&\norel\# \{i \in \{1,\ldots,m\}: \ \sgn((Ax)_i) \neq \sgn((Ay)_i)\} \\
		&\leq \# \{i \in \{1,\ldots,m\}: \ \abs{(Ax^\ast)_i}\leq \delta/(6 \ee )\} + \# \left\{ i \in \{1,\ldots,m\}: \ \abs{(A(x^\ast - x))_i} \geq \delta/(6\ee)\right\} \\
		&\norel {}+ \# \left\{ i \in \{1,\ldots,m\}: \ \abs{(A(x^\ast - y))_i} \geq \delta/(6\ee)\right\} \leq  \delta m.
	\end{align*}
	Moreover, letting $c_5 \defeq \min\{c_2,c_4\}$, we have  
	\begin{align*}
		\PP(\E_1 \cap \E_2) \geq 1-3\exp(-c_5 \cdot \delta m) \geq 1-\exp(-c \cdot \delta m)
	\end{align*}
	by picking $c \leq c_5 /2$, since, by using $\delta m \geq C$ and taking $C> 0$ large enough, we have $\ln(3) \leq \frac{c_5}{2} \cdot \delta m$.
	This yields the claim. 
\end{proof}

We prove a special case of \Cref{lem:recht_upper_original}, where the considered inputs are taken from a set $f(\SS^{n-1}) \subseteq \SS^{\ell-1}$, where
$f : \SS^{n-1} \to \SS^{\ell-1}$ is Lipschitz. 
Crucially, the obtained bound does not depend on $\ell$, but only on the initial dimension $n$.
Hence, in cases where $\ell \gg n$, we can use this approach to preserve the underlying $n$-dimensionality of the inputs.  
\begin{lemma}\label{lem:recht_upper_h}
	There exist absolute constants $C,c>0$ with the following property: 
	Given natural numbers $\ell, m , n \in \NN$ and a random matrix $A \in \RR^{m \times \ell}$ with $A_{i,j} \iid \mathcal{N}(0, \sigma^2)$ for some fixed variance
	$\sigma^2 > 0$, a Lipschitz-continuous map $f: \SS^{n-1} \to \SS^{\ell - 1}$, $M \geq \max\{1,\lip_{2 \to 2}(f)\}$ and $\delta \in (0, \ee^{-1})$ with 
	\begin{equation*}
		m \geq C  \cdot n \cdot \delta^{-1} \cdot \ln(M/\delta),
	\end{equation*}
	the event defined by
	\begin{align*}
		\text{for all}\quad  x,y \in f(\SS^{n - 1}) \quad \text{with} \quad \mnorm{x-y}_2 \leq c \cdot \delta \cdot \ln^{-1/2}(1/\delta): \\
		\# \left\{ i \in \{1,\ldots,m\}: \ \sgn((Ax)_i) \neq \sgn((Ay)_i)\right\} \leq \delta \cdot m
	\end{align*}
	occurs with probability at least $1-\exp(-c \cdot \delta m)$.
\end{lemma}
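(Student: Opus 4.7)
The plan is to reduce the statement to \Cref{lem:recht_upper_original} applied to the set $K \defeq f(\SS^{n-1}) \subseteq \SS^{\ell-1}$. The crux is to show that the two complexity parameters appearing in \Cref{lem:recht_upper_original} — namely $\ln \mathcal{N}(K,\eps/2)$ and $w^2((K-K) \cap \B_\ell(0,\eps))$ — can be controlled in terms of the \emph{intrinsic} dimension $n$ and the Lipschitz constant $M$, rather than the possibly much larger ambient dimension $\ell$. Throughout I would fix $\eps \defeq c_0 \cdot \delta \cdot \ln^{-1/2}(1/\delta)$, where $c_0$ is the absolute constant appearing in \Cref{lem:recht_upper_original}.

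First, since $f$ is $M$-Lipschitz, any $\rho$-net of $\SS^{n-1}$ maps to an $M\rho$-net of $K$, so combined with the standard estimate $\mathcal{N}(\SS^{n-1},\rho) \leq (3/\rho)^n$ for $\rho \leq 1$ (see \cite[Corollary~4.2.13]{vershynin_high-dimensional_2018}) we obtain
\[
\mathcal{N}(K, \eps/2) \leq \mathcal{N}(\SS^{n-1},\eps/(2M))^{\phantom{n}} \leq (6M/\eps)^n,
\]
and therefore $\ln(\ee \cdot \mathcal{N}(K,\eps/2)) \lesssim n \ln(M/\eps) \lesssim n \ln(M/\delta)$, using that $M \geq 1$ and $\delta < \ee^{-1}$.

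Second, for $T \defeq (K-K) \cap \B_\ell(0,\eps)$ I would estimate the local Gaussian width via Dudley's entropy integral. The elementary bound $\mathcal{N}(K-K,t) \leq \mathcal{N}(K,t/2)^2$ together with the previous estimate yields $\ln \mathcal{N}(T,t) \leq 2n \ln(12M/t)$; since $\diam(T) \leq 2\eps$, Dudley's inequality gives
\[
w(T) \lesssim \int_0^\eps \sqrt{\ln \mathcal{N}(T,t)}\, dt \lesssim \sqrt{n} \int_0^\eps \sqrt{\ln(12M/t)}\,dt \lesssim \sqrt{n} \cdot \eps \cdot \sqrt{\ln(M/\eps)},
\]
where the final integral estimate uses $\sqrt{\ln(12M/t)} \leq \sqrt{\ln(12M/\eps)} + \sqrt{\ln(\eps/t)}$ together with the trivial bound $\int_0^\eps \sqrt{\ln(\eps/t)}\,dt \lesssim \eps$. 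Squaring and plugging in $\eps^2 \lesssim \delta^2/\ln(1/\delta)$ and $\ln(M/\eps) \lesssim \ln(M/\delta)$ yields
\[
\delta^{-3} \cdot w^2(T) \lesssim n \delta^{-1} \cdot \frac{\ln(M/\delta)}{\ln(1/\delta)} \lesssim n \delta^{-1} \ln(M/\delta),
\]
again using $M \geq 1$. Hence both complexity quantities are $\lesssim n \delta^{-1} \ln(M/\delta)$, so under the hypothesis $m \geq C n \delta^{-1}\ln(M/\delta)$ (with $C$ large enough to absorb the hidden constants) the assumptions of \Cref{lem:recht_upper_original} are satisfied. Applying that lemma to $K$ with the chosen $\eps$ gives the desired tessellation statement with probability at least $1 - \exp(-c \delta m)$, after relabeling $c_0/2$ as the constant $c$ appearing in the conclusion.

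The main technical obstacle I anticipate is the Dudley-integral step: one has to be careful to show that the $\sqrt{\ln(1/\delta)}$ factor picked up from integrating the covering-number estimate is exactly cancelled by the $1/\sqrt{\ln(1/\delta)}$ factor that comes from the choice of $\eps$, so that no superfluous logarithmic factor survives in the final condition on $m$. Everything else — the covering bound for the image of a Lipschitz map and the reduction to \Cref{lem:recht_upper_original} — is routine.
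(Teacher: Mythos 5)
Your proposal is correct and follows essentially the same route as the paper's proof: reduce to \Cref{lem:recht_upper_original} with $K = f(\SS^{n-1})$, bound $\mathcal{N}(K,\eps/2)$ by pushing forward a net of $\SS^{n-1}$ through the $M$-Lipschitz map, and control $w((K-K)\cap \B_\ell(0,\eps))$ via Dudley's inequality, with the factor $\ln^{-1/2}(1/\delta)$ in the choice of $\eps$ absorbing the logarithm from the entropy integral exactly as you anticipate. The only cosmetic differences are that the paper integrates the Dudley bound up to $2\eps$ (the diameter) and invokes its auxiliary integral estimate \Cref{lem:int_b} rather than your splitting of $\sqrt{\ln(12M/t)}$, which changes nothing.
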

\begin{proof}
	Let $\tilde{C}, \tilde{c}>0$ be the constants provided by \Cref{lem:recht_upper_original}.
	Let $\delta \in (0, \ee^{-1})$ be arbitrary and set $\eps \defeq \tilde{c} \cdot \delta \cdot \ln^{-1/2}(1/\delta)$.
	Moreover, let $K \defeq f(\SS^{n - 1})$.
	The goal is now to check that the condition from \Cref{lem:recht_upper_original} is satisfied, i.e., that
	\begin{equation}\label{eq:tbs}
		m \geq \tilde{C} \cdot 
		\max\left\{\delta^{-3} w^2((K-K)\cap \B_\ell(0,\eps), \delta^{-1} \cdot \ln(\ee \cdot \mathcal{N}(K,\eps/2))\right\}.
	\end{equation}
	This then proves the claim of the lemma upon choosing $c \leq \tilde{c}/2$.
	
	To this end, we take an arbitrary $\theta \in (0,1)$. 
	Let $\mathcal{N} \subseteq \SS^{n-1}$ be a $\frac{\theta}{M}$-net of $\SS^{n-1}$ with
	\begin{equation*}
		\abs{\mathcal{N}} \leq \left(\frac{3M}{\theta}\right)^n,
	\end{equation*}
	which exists according to \cite[Corollary~4.2.13]{vershynin_high-dimensional_2018} and noting $\theta / M \leq 1$.
	Then one easily sees that $f(\mathcal{N})$ is a $\theta$-net of $K$, whence
	\begin{equation}\label{eq:netlipbound}
		\mathcal{N}(K,\theta) \leq \left(\frac{3M}{\theta}\right)^n.
	\end{equation}
	Moreover, by \cite[Exercise~4.2.10]{vershynin_high-dimensional_2018},
	\begin{equation*}
		\mathcal{N}((K-K) \cap \B_{\ell}(0,\eps), \Vert \cdot \Vert_2, \theta) \leq \mathcal{N}((K-K) , \Vert \cdot \Vert_2, \theta / 2)
		\leq \mathcal{N}(K , \Vert \cdot \Vert_2, \theta / 4)^2,
	\end{equation*}
	since taking the Minkowski difference of a $\frac{\theta}{4}$-net of $K$ with itself canonically yields a $\frac{\theta}{2}$-net
	of $K-K$. Therefore, since $\diam((K-K) \cap \B_{\ell}(0,\eps))\leq 2\eps$, Dudley's inequality (see, e.g., \cite[Theorem~8.1.10]{vershynin_high-dimensional_2018}) yields
	\begin{align*}
		w((K-K) \cap \B_{\ell}(0,\eps)) &\leq C_1 \cdot \int_0^{2\eps} 
		\ln^{1/2}\left(\mathcal{N}((K-K) \cap \B_{\ell}(0,\eps), \Vert \cdot \Vert_2, \theta)\right) \ \dd \theta \\
		&\leq \sqrt{2} \cdot C_1 \cdot \int_0^{2\eps} \ln^{1/2}(\mathcal{N}(K, \Vert \cdot \Vert_2, \theta / 4)) \ \dd \theta
	\end{align*}
	with an absolute constant $C_1 > 0$.
	Since $\frac{\eps}{2} = \frac{1}{2} \cdot \tilde{c} \cdot \delta \cdot \ln^{-1/2}(1/\delta)
	\leq 1$ (since we may assume $\tilde{c} \leq 1$), we can further bound
	\begin{align*}
		w((K-K) \cap \B_{\ell}(0,\eps)) \leq \sqrt{2} \cdot C_1 \cdot \sqrt{n} \cdot \int_0^{2\eps} \ln^{1/2}\left(\frac{12M}{ \theta}\right) \ \dd\theta. 
	\end{align*}
	Using \Cref{lem:int_b}, we get 
	\begin{align*}
		\int_0^{2\eps} \ln^{1/2}\left(\frac{12M}{ \theta}\right) \ \dd\theta 
		\leq 4\eps \cdot \ln^{1/2}(6M/\eps).
	\end{align*}
	Note that 
	\begin{align}
		\ln^{1/2}(6M/\eps) &= \ln^{1/2}\left(\frac{6M \cdot \ln^{1/2}(1/\delta)}{\widetilde{c}\cdot\delta}\right)
		\overset{\ln^{1/2}(1/\delta) \leq 1/\delta}{\leq}
		\ln^{1/2}\left(\frac{6M}{\widetilde{c}\cdot\delta^2}\right)
		\leq \sqrt{2} \cdot \ln^{1/2}\left(\frac{6M}{\tilde{c} \delta}\right) \nonumber\\
		\label{eq:6M}
		\overset{\text{\cshref{lem:log_b}}}&{\leq} \frac{2\sqrt{3}}{\tilde{c}} \cdot \ln^{1/2}(M/\delta),
	\end{align}
	whence
	\[
	4\eps \cdot \ln^{1/2}(6M/\eps) 
	\leq  4\tilde{c} \cdot \delta \cdot \ln^{-1/2}(1/\delta) \cdot \frac{2\sqrt{3}}{\tilde{c}} \cdot \ln^{1/2}(M/\delta)
	\leq 8\sqrt{3} \cdot \delta \cdot \ln^{1/2}(M/\delta).
	\]
	Overall, we get 
	\begin{equation*}
		w^2((K-K) \cap \B_{\ell}(0,\eps)) \leq C_2 \cdot n  \cdot \delta^2 \cdot \ln(M/\delta)
	\end{equation*}
	with a suitable absolute constant $C_2 > 0$.
	According to \eqref{eq:tbs}, we want to ensure 
	\begin{equation*}
		w^2\left((K-K)\cap \B_n(0,\eps)\right) \cdot \delta^{-3}\leq C_2 \cdot n \cdot \delta^{-1} \cdot \ln(M/\delta) \overset{!}{\leq} \tilde{C}^{-1} \cdot m,
	\end{equation*}
	which is satisfied taking $C \geq \tilde{C} \cdot C_2$ in our assumptions. 
	
	Moreover, we consider
	\begin{align*}
		\delta^{-1} \cdot \ln(\ee \cdot\mathcal{N}(K, \Vert \cdot \Vert_2, \eps/2)) \overset{\eqref{eq:netlipbound}}&{\leq}
		\delta^{-1} \cdot n \cdot \ln \left(\frac{6\ee M}{\eps}\right) 
		\overset{\text{\cshref{lem:log_b}}}{\leq} \ee \cdot \delta^{-1} \cdot n \cdot \ln \left(\frac{6M}{\eps}\right) \\
		\overset{\eqref{eq:6M}}&{\leq} \delta^{-1} \cdot n \cdot \frac{12\ee}{(\widetilde{c})^2} \cdot \ln(M/\delta) 
		\leq C_3 \cdot n  \cdot \delta^{-1} \cdot \ln(M/\delta) 
		\leq \widetilde{C}^{-1} \cdot m
	\end{align*}
	for an absolute constant $C_3>0$, 
	where the last inequality is satisfied if we take $C \geq \widetilde{C} \cdot C_3$
	in our assumptions. 
	The lemma is proven. 
\end{proof}

\renewcommand*{\proofname}{Proof}

Let us now turn towards the non-homogeneous setting, i.e., the setting of non-zero biases. 
We first prove \Cref{prop:farout}, which states that for inputs with large Euclidean norm the bias term may be neglected. 

\begin{proof}[Proof of \Cref{prop:farout}]
	Set $K_\eps \defeq (K-K) \cap \B_n(0,\eps)$.
	Let $C_1, c_1 > 0$ be the constants from \Cref{lem:unif_tool}. 
	Set $\alpha = \delta/2$ and $\beta = \frac{\delta}{8\ee}$.
	Note that $\delta m \geq C$ by assumption; by choosing $C>2$, this yields $1/m < \alpha < \delta < \ee^{-1}$.
	Moreover, taking $c > 0$ small enough,  
	\[
	\eps \leq c \cdot \delta \cdot \ln^{-1/2}(1/\delta) = 8\ee \cdot c \cdot \beta \cdot \ln^{-1/2}(1/(2\alpha)) \overset{\text{\cshref{lem:log_b}}}{\leq} 8\ee\sqrt{2} \cdot c\cdot\beta \cdot \ln^{-1/2}(1/\alpha) \leq c_1 \cdot \beta \cdot \ln^{-1/2}(1/\alpha).
	\]
	Furthermore, by possibly enlarging $C > 0$, 
	\[
	C_1 \cdot w^2(K_\eps) \cdot \alpha^{-1} \cdot \beta^{-2} 
	= 2(8\ee)^2 \cdot C_1 \cdot w^2(K_\eps) \cdot \delta^{-3} \leq m. 
	\]
	The conditions of \Cref{lem:unif_tool} are thus satisfied and we get  
	\[
	\sup_{w \in K_\eps} \# \{i \in \{1,\ldots,m\}: \ \abs{(Aw)_i} \geq \delta/(4\ee)\} \leq\frac{\delta m }{2}
	\]
	with probability at least $1-2\exp(-c_1 \cdot \alpha m) = 1-2\exp(-c_2 \cdot \delta m)$, where $c_2 \defeq c_1/2$. We call the event defined by that property $\E_1$.
	
	\medskip
	
	Finally, let $\neps$ be an $\eps$-net of $K$ with $\#\neps = \mathcal{N}(K,\eps)$. 
	Using \Cref{lem:chernoff} (with $\alpha = \delta/(4\ee)$) and a union bound, we obtain
	the existence of an absolute constant $c_3 > 0$ such that
	\begin{equation}\label{eq:unbound}
		\text{for all } x^\ast \in \neps: \quad \# \{i \in \{1,\ldots,m\}: \ \abs{(Ax^\ast)_i} \leq \delta/(4\ee)\} \leq m \cdot \frac{\delta}{2}
	\end{equation}
	with probability at least $1-\exp(\ln(\mathcal{N}(K,\eps)) - c_3 \cdot \delta m)
	\geq 1-\exp(-c_4 \cdot \delta m)$ by letting $c_4 \defeq c_3 / 2$ and by choosing $C>0$ so large that
	\[
	\ln(\mathcal{N}(K,\eps)) \leq \ln(\ee \cdot \mathcal{N}(K,\eps))\leq C^{-1} \cdot \delta m \leq \frac{c_3}{2} \cdot \delta m.
	\]
	We write $\E_2$ for the event defined in \eqref{eq:unbound}.
	On $\E_1\cap \E_2$, we observe 
	\begin{align*}
		&\norel \sup_{x \in \cone(K), \mnorm{x}_2 \geq 8\ee\delta^{-1}\lambda} \# \left\{ i \in \{1,\ldots,m\} : \ \sgn((Ax)_i + \tau_i) \neq \sgn((Ax)_i)\right\} \\
		&\leq 
		\sup_{x \in \cone(K), \mnorm{x}_2 \geq 8\ee\delta^{-1}\lambda} \# \left\{ i \in \{1,\ldots,m\} : \ \abs{(Ax)_i} \leq \abs{\tau_i}\right\} \\
		&\leq 
		\sup_{x \in \cone(K), \mnorm{x}_2 \geq 8\ee\delta^{-1}\lambda} \# \left\{ i \in \{1,\ldots,m\} : \ \abs{(Ax)_i} \leq \lambda\right\} \\
		&= \sup_{x \in \cone(K), \mnorm{x}_2 \geq 8\ee\delta^{-1}\lambda} \# \left\{ i \in \{1,\ldots,m\} : \ \abs{\left(A\frac{x}{\mnorm{x}_2}\right)_i} \leq \frac{\lambda}{\mnorm{x}_2}\right\} \\
		&\leq \sup_{z \in  K} \# \left\{ i \in \{1,\ldots,m\} : \ \abs{\left(Az\right)_i} \leq \delta/(8\ee)\right\} \\
		&\leq \sup_{x^\ast \in  \neps} \# \left\{ i \in \{1,\ldots,m\} : \ \abs{\left(Ax^\ast\right)_i} \leq \delta/(4\ee)\right\} 
		+ \sup_{w \in K_\eps} \# \left\{ i \in \{1,\ldots,m\} : \ \abs{\left(Aw\right)_i} \geq \delta/(8\ee)\right\} \leq \delta \cdot m
	\end{align*}
	and, letting $c_5 \defeq \min\{c_2,c_4\}$, we have  
	\begin{align*}
		\PP(\E_1 \cap \E_2) \geq 1-3\exp(-c_5 \cdot \delta m) \geq 1-\exp(-c \cdot \delta m)
	\end{align*}
	by picking $c \leq c_5 /2$, since we may assume because of $\delta m \geq C$ that $\ln(3) \leq \frac{c_5}{2} \cdot \delta m$.
	This yields the claim. 
\end{proof}

We continue by proving \Cref{corr:tess} which may be seen as a version of \Cref{lem:recht_upper_original} for the case with non-zero biases. 

\begin{proof}[Proof of \Cref{corr:tess}]
	\textbf{Pointwise estimate:}
	Let $z \in K$, let $W \in \RR^d$ be a random vector that is independent of $b$, and with $W \sim \mathcal{N}(0,I_d)$.
	By conditioning on $W$ and using the small-ball property of the distribution of $b_i$, 
	we obtain for $i \in \{1, \dots, m\}$,
	\[
	\PP\left(\abs{\langle W, z \rangle + b_i } \leq \frac{\delta}{3\ee \cdot C_\tau}\right)\leq \frac{\delta}{3 \ee}.
	\]
	For $i \in \{1,\ldots,m\}$, set $X_i \defeq \mathbbm{1}_{\abs{(Az)_i + b_i} \leq \delta/(3\ee C_\tau)}$.
	Note by what we have just shown that we have $\PP(X_i = 1) \leq \frac{\delta}{3\ee}$ for $i \in \{1,\ldots,m\}$. 
	Hence\footnote{
		To see this, for $i \in \{1, \dots, m\}$, let $p_i \in (0,1)$ be such that $X_i \sim \mathrm{Ber}(p_i)$ and set $q_i \defeq \frac{\frac{\delta}{3 \ee} - p_i}{1- p_i} \in (0,1)$. 
		Let $Z_1, \dots, Z_m$ be jointly independent and independent of the $X_i$ with $Z_i \sim \mathrm{Ber}(q_i)$ for $i \in \{1, \dots, m\}$.
		Finally, set $Y_i \defeq \begin{cases}1,& X_i = 1, \\ Z_i,& X_i = 0.\end{cases}$
		Clearly, the $Y_i$ are independent Bernoulli variables and satisfy $X_i \leq Y_i$.
		Moreover, 
		\[
		\PP(Y_i = 1) = \PP(X_i = 1) + \PP(X_i = 0)\PP(Z_i=1) = p_i + (1-p_i)q_i = \frac{\delta}{3\ee}.
		\]
	}, we can pick $Y_1, \ldots, Y_m \iid \mathrm{Ber}(\delta/(3\ee))$ with
	$X_i \leq Y_i$ for every $i \in \{1,\ldots,m\}$.
	We set $S_m \defeq \sum_{i=1}^m Y_i$ and note that 
	\[
	\EE[S_m] = m \cdot \frac{\delta}{3\ee} =: \mu.
	\]
	We may hence apply \cite[Theorem~2.3.1]{vershynin_high-dimensional_2018}
	with $t \defeq m \cdot \frac{\delta}{3}$ and get 
	\[
	\PP\left(\sum_{i=1}^m X_i \geq m \cdot \frac{\delta}{3}\right)\leq\PP \left(S_m \geq m \cdot \frac{\delta}{3}\right)
	\leq \exp(-\mu)\cdot \left(\frac{\ee \mu}{t}\right)^t 
	= \exp(-c_1 \cdot \delta m)
	\]
	with an absolute constant $c_1 > 0$.
	In other words, we have with probability at least $1-\exp(-c_1 \cdot \delta m)$,
	\begin{equation}\label{eq:pointtt}
		\# \{i \in \{1,\ldots,m\}: \ \abs{(Az)_i + b_i} \leq \delta/(3\ee C_\tau)\} \leq m \cdot \frac{\delta}{3}.
	\end{equation}
	
	\medskip
	
	\textbf{Uniformization:}
	Let $C_2, c_2 > 0$ be the constants from \Cref{lem:unif_tool}. 
	Set $\alpha = \delta/3$ and $\beta = \frac{\delta}{3\ee C_\tau}$.
	Note that $\delta m \geq C$ by assumption; by choosing $C > 3$ this yields $1/m < \alpha < \delta < \ee^{-1}$.
	Moreover, taking $c > 0$ small enough,  
	\begin{align*}
		\eps &\leq c \cdot C_\tau^{-1} \cdot \delta \cdot \ln^{-1/2}(1/\delta) = c \cdot C_\tau^{-1} \cdot 3\ee \cdot C_\tau  \cdot \beta \cdot \ln^{-1/2}(1/(3\alpha)) \\ \overset{\text{\cshref{lem:log_b}}}&{\leq} 3\ee\sqrt{3} \cdot c\cdot\beta \cdot \ln^{-1/2}(1/\alpha) \leq c_2 \cdot \beta \cdot \ln^{-1/2}(1/\alpha).
	\end{align*}
	Furthermore, by possibly enlarging $C > 0$, we see for $K_\eps \defeq (K-K) \cap \B_n(0,\eps)$ that
	\[
	C_2 \cdot w^2(K_\eps) \cdot \alpha^{-1} \cdot \beta^{-2} 
	= 3(3\ee)^2 \cdot C_\tau^2 \cdot C_2 \cdot w^2(K_\eps) \cdot \delta^{-3}\leq m. 
	\]
	The conditions of \Cref{lem:unif_tool} are thus satisfied and we get  
	\[
	\sup_{w \in K_\eps} \# \{i \in \{1,\ldots,m\}: \ \abs{(Aw)_i} \geq \delta/(3\ee C_\tau)\} \leq\frac{\delta m }{3}
	\]
	with probability at least $1-2\exp(-c_2 \cdot \alpha m) = 1-2\exp(-c_3 \cdot \delta m)$ where $c_3 \defeq c_2/3$. We call the event defined by that property $\E_1$.
	
	\medskip
	
	\textbf{Final proof:}
	Let $\neps$ be an $\eps/2$-net of $K$ with $\abs{\neps} = \mathcal{N}(K,\eps/2)$. 
	Using \eqref{eq:pointtt} and a union bound, we obtain 
	\[
	\text{for all } x^\ast \in \mathcal{N}_{\eps}: \quad \# \{i \in \{1,\ldots,m\}: \ \abs{(Ax^\ast)_i + b_i} \leq \delta/(3\ee C_\tau)\} \leq m \cdot \frac{\delta}{3}
	\]
	with probability at least $1-\exp(\ln(\mathcal{N}(K,\eps/2)) - c_1 \cdot \delta m)
	\geq 1-\exp(-c_4 \cdot \delta m)$ by letting $c_4 \defeq c_1 / 2$ and by choosing $C>0$ so large that
	\[
	\ln(\mathcal{N}(K,\eps/2)) \leq \ln(\ee \cdot \mathcal{N}(K,\eps/2))\leq \frac{\delta m}{C} \leq \frac{c_1}{2} \cdot \delta m.
	\]
	We call the event defined by that property $\E_2$.
	We assume that the defining properties of $\E_1$ and $\E_2$ are satisfied.
	Let then $x,y\in K$ with $\mnorm{x-y}_2 \leq \eps/2$ be arbitrary and pick $x^\ast  \in \neps$ that satisfies $\mnorm{x-x^\ast}_2 \leq \eps /2$. 
	This implies 
	\[
	\mnorm{x^\ast - y}_2 \leq \mnorm{x^\ast - x}_2 + \mnorm{x - y}_2
	\leq \eps
	\]
	and hence $x^\ast - x, x^\ast - y \in K_\eps$.
	We hence get 
	\begin{align*}
		&\norel\# \{i \in \{1,\ldots,m\}: \ \sgn((Ax)_i + b_i) \neq \sgn((Ay)_i + b_i)\} \\
		&\leq \# \{i \in \{1,\ldots,m\}: \ \abs{(Ax^\ast)_i + b_i}\leq \delta/(3 \ee C_\tau)\} + \# \left\{ i \in \{1,\ldots,m\}: \ \abs{(A(x^\ast - x))_i} \geq \delta/(3\ee C_\tau)\right\} \\
		&\norel + \# \left\{ i \in \{1,\ldots,m\}: \ \abs{(A(x^\ast - y))_i} \geq \delta/(3\ee C_\tau)\right\} \leq  \delta m,
	\end{align*}
	uniformly over the choice of $x$ and $y$.
	Moreover, letting $c_5 \defeq \min\{c_3,c_4\}$, we have  
	\begin{align*}
		\PP(\E_1 \cap \E_2) \geq 1-3\exp(-c_5 \cdot \delta m) \geq 1-\exp(-c \cdot \delta m)
	\end{align*}
	by picking $c \leq c_5 /2$, since we may assume because of $\delta m \geq C$ that $\ln(3) \leq \frac{c_5}{2} \cdot \delta m$.
	This yields the claim. 
\end{proof}
Analogous to \Cref{lem:recht_upper_h}, we prove a version of \Cref{corr:tess} for the case where the considered set $K$ is the image of a Euclidean ball under a Lipschitz mapping. 
\begin{lemma}\label{lem:tess_bias_lip}
	There exist absolute constants $C,c>0$ with the following property: 
	Let $\ell, m , n \in \NN$ and let $A \in \RR^{m \times \ell}$ be a random matrix with $A_{i,j} \iid \mathcal{N}(0, 1)$.
	Let $b \in \RR^m$ be a random vector that is independent of $A$ and with independent entries that satisfy 
	\[
	\sup_{t \in \RR, \eps > 0, i \in \{1,\ldots,m\}} \eps^{-1} \cdot \PP(b_i \in (t-\eps, t + \eps)) \leq C_\tau 
	\]
	for a constant $C_\tau > 0$.
	Let $R >0$ with $RC_\tau \geq 1$ and let $f: B_n(0,R) \to \RR^\ell$ be Lipschitz continuous, $M \geq \max\{1,\lip_{2 \to 2}(f)\}$ and $\delta \in (0, \ee^{-1})$. 
	Further, we assume 
	\[
	m \geq C \cdot \delta^{-1} \cdot n \cdot \ln(MRC_\tau / \delta).
	\]
	Then the event defined by
	\begin{align*}
		\text{for all}\quad x,y \in f(B_n(0,R)) \quad \text{with} \quad \mnorm{x-y}_2 \leq c \cdot C_\tau^{-1} \cdot \delta \cdot \ln^{-1/2}(1/\delta): \\
		\# \left\{ i \in \{1,\ldots,m\}: \ \sgn((Ax)_i + b_i) \neq \sgn((Ay)_i + b_i)\right\} \leq \delta \cdot m
	\end{align*}
	occurs with probability at least $1-\exp(-c \cdot \delta m)$.
\end{lemma}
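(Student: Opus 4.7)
The plan is to mirror the proof of \Cref{lem:recht_upper_h} and reduce the statement to \Cref{corr:tess} applied with $K \defeq f(B_n(0, R))$. Let $\tilde{C}, \tilde{c} > 0$ denote the absolute constants provided by \Cref{corr:tess}, and set $\eps \defeq \tilde{c} \cdot C_\tau^{-1} \cdot \delta \cdot \ln^{-1/2}(1/\delta)$. The small-ball hypothesis on $b$ and the upper bound on $\eps$ required by \Cref{corr:tess} both hold by construction, so it remains to verify
\begin{equation*}
	m \geq \tilde{C} \cdot \delta^{-1} \cdot \ln(\ee \cdot \mathcal{N}(K, \eps/2))
	\quad \text{and} \quad
	m \geq \tilde{C} \cdot \delta^{-3} \cdot w^2((K-K) \cap \B_\ell(0, \eps)) \cdot C_\tau^2,
\end{equation*}
after which the conclusion of the lemma follows by choosing $c \leq \tilde{c}/2$.

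For the covering number, I would use \cite[Corollary~4.2.13]{vershynin_high-dimensional_2018} to obtain a $(\theta/M)$-net $\mathcal{N}$ of $B_n(0, R)$ with $\abs{\mathcal{N}} \leq (3MR/\theta)^n$ for $\theta \leq MR$; the image $f(\mathcal{N})$ is then a $\theta$-net of $K$ thanks to $\lip_{2 \to 2}(f) \leq M$, so $\mathcal{N}(K, \theta) \leq (3MR/\theta)^n$. Substituting $\theta = \eps/2$ and using that $\ln(1/\eps) \lesssim \ln(MRC_\tau/\delta)$ (since $MRC_\tau \geq 1$ and $\ln \ln(1/\delta) \lesssim \ln(1/\delta)$) yields $\ln(\ee \cdot \mathcal{N}(K, \eps/2)) \lesssim n \cdot \ln(MRC_\tau/\delta)$, so the first condition follows from the assumption on $m$ provided $C$ is chosen sufficiently large.

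For the Gaussian width, I would combine the same covering estimate with Dudley's inequality (cf.\ \cite[Theorem~8.1.10]{vershynin_high-dimensional_2018}) and the standard bound $\mathcal{N}((K-K) \cap \B_\ell(0, \eps), \theta) \leq \mathcal{N}(K, \theta/4)^2$ to obtain
\begin{equation*}
	w((K-K) \cap \B_\ell(0, \eps)) \lesssim \sqrt{n} \int_0^{2\eps} \sqrt{\ln(12MR/\theta)} \ \dd\theta \lesssim \sqrt{n} \cdot \eps \cdot \sqrt{\ln(MRC_\tau/\delta)},
\end{equation*}
where the final inequality uses \Cref{lem:int_b} together with the same logarithmic simplification as above. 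Squaring and multiplying by $\delta^{-3} \cdot C_\tau^2$, the factor $\eps^2 \cdot C_\tau^2 \asymp \delta^2 \cdot \ln^{-1}(1/\delta)$ combines with $\delta^{-3}$ to give $\delta^{-1} \cdot \ln^{-1}(1/\delta) \leq \delta^{-1}$, so the second condition reduces to $m \gtrsim \delta^{-1} \cdot n \cdot \ln(MRC_\tau/\delta)$, again covered by the hypothesis on $m$.

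The main technical obstacle is simply keeping careful track of the various logarithmic factors, in particular tracking how the $C_\tau^{-1}$ inside $\eps$ cancels precisely against the $C_\tau^2$ factor appearing in the Gaussian width condition of \Cref{corr:tess}. This cancellation is exactly why the final hypothesis in the lemma involves the product $MRC_\tau$ inside a single logarithm rather than separate contributions from $M$, $R$, and $C_\tau$.
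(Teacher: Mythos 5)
Your proposal is correct and follows essentially the same route as the paper's own proof: reduce to \Cref{corr:tess} with $K = f(B_n(0,R))$ and $\eps \asymp C_\tau^{-1}\delta\ln^{-1/2}(1/\delta)$, bound $\mathcal{N}(K,\theta)$ by pushing forward a net of the ball through the Lipschitz map, and control the Gaussian width via Dudley's inequality and \Cref{lem:int_b}, with the same logarithmic bookkeeping yielding the single $\ln(MRC_\tau/\delta)$ factor. The only detail worth making explicit is that $\eps/2 \leq MR$ (which follows from $RC_\tau \geq 1$, exactly as the paper notes) so that the net bound applies at scale $\theta = \eps/2$.
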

\begin{proof}
	Let $C', c'>0$ be the constants provided by \Cref{corr:tess}.
	Let $\delta \in (0, \ee^{-1})$ be arbitrary and set $\eps \defeq c' \cdot C_\tau^{-1} \cdot \delta \cdot \ln^{-1/2}(1/\delta)$. 
	Moreover, let $K \defeq f(B_n(0,R))$.
	The goal is now to check that the condition from \Cref{corr:tess} is satisfied, i.e., that it holds
	\begin{equation*}
		m \geq C' \cdot 
		\max\left\{\delta^{-3} w^2((K-K)\cap \B_\ell(0,\eps) \cdot C_\tau^2, \delta^{-1} \cdot \ln(\ee \cdot \mathcal{N}(K,\eps/2))\right\}.
	\end{equation*}
	This proves the claim of the lemma upon choosing $c \leq c'/2$.
	
	To this end, we take an arbitrary $\theta \in (0,MR]$. 
	Let $\mathcal{N} \subseteq B_n(0,R)$ be a $\frac{\theta}{M}$-net of $B_n(0,R)$ with
	\begin{equation*}
		\abs{\mathcal{N}} \leq \left(\frac{3MR}{\theta}\right)^n,
	\end{equation*}
	which exists according to \cite[Corollary~4.2.13]{vershynin_high-dimensional_2018} and noting $\theta / M \leq R$.
	Then one easily sees that $f(\mathcal{N})$ is a $\theta$-net of $K$, whence
	\begin{equation}\label{eq:cov_b}
		\mathcal{N}(K, \theta) \leq \left(\frac{3MR}{\theta}\right)^n.
	\end{equation}
	Moreover, 
	\begin{equation*}
		\mathcal{N}((K-K) \cap \B_{\ell}(0,\eps), \Vert \cdot \Vert_2, \theta) \leq \mathcal{N}((K-K) , \Vert \cdot \Vert_2, \theta / 2)
		\leq \mathcal{N}(K , \Vert \cdot \Vert_2, \theta / 4)^2,
	\end{equation*}
	since taking the Minkowski difference of a $\frac{\theta}{4}$-net of $K$ with itself canonically yields a $\frac{\theta}{2}$-net
	of $K-K$ and by \cite[Exercise~4.2.10]{vershynin_high-dimensional_2018}. Therefore, since $\diam((K-K) \cap \B_{\ell}(0,\eps))\leq 2\eps$, Dudley's inequality (cf. \cite[Theorem~8.1.10]{vershynin_high-dimensional_2018}) gives us
	\begin{align*}
		&\norel w((K-K) \cap \B_{\ell}(0,\eps)) \\
		&\leq C_1 \cdot \int_0^{2\eps} 
		\ln^{1/2}\left(\mathcal{N}((K-K) \cap \B_{\ell}(0,\eps), \Vert \cdot \Vert_2, \theta)\right) \ \dd \theta \\
		&\leq \sqrt{2} \cdot C_1 \cdot \int_0^{2\eps} \ln^{1/2}(\mathcal{N}(K, \Vert \cdot \Vert_2, \theta / 4)) \ \dd \theta 
	\end{align*}
	with an absolute constant $C_1 > 0$.
	
	By using the bound established in \eqref{eq:cov_b} and since
	\[
	\frac{\eps}{2} \leq C_\tau^{-1} \overset{RC_\tau \geq 1}{\leq} R \leq MR,
	\]
	we obtain 
	\begin{align*}
		\int_0^{2\eps} \ln^{1/2}(\mathcal{N}(K, \Vert \cdot \Vert_2, \theta / 4)) \ \dd \theta
		&\leq 
		\sqrt{n} \cdot \int_0^{2\eps} \ln^{1/2}\left(12MR/\theta\right) \ \dd \theta 
		\overset{\text{\cshref{lem:int_b}}}&{\leq} 4\cdot \eps \cdot \sqrt{n} \cdot \ln^{1/2}(6MR/\eps) \\
		\overset{\text{\cshref{lem:log_b}}}&{\leq} 4\sqrt{6}  \cdot \eps \cdot \sqrt{n} \cdot \ln^{1/2}(MR/\eps).
	\end{align*}
	Hence, in total we obtain the bound 
	\[
	w^2((K-K) \cap \B_{\ell}(0,\eps)) \leq C_2 \cdot n \cdot 
	\eps^2 \cdot \ln(MR/\eps)
	\]
	with an absolute constant $C_2 > 0$.
	We compute 
	\begin{align}
		\ln(MR/\eps) &= \ln \left(\frac{MRC_\tau \cdot \ln^{1/2}(1/\delta)}{c' \cdot \delta}\right) 
		\overset{\ln^{1/2}(1/\delta) \leq 1/\delta}{\leq}\ln\left(\frac{MR C_\tau}{c' \cdot \delta^2}\right) 
		\overset{MRC_\tau / c' \geq 1}{\leq}\ln\left(\left(\frac{MR C_\tau}{c' \cdot \delta}\right)^2\right) \nonumber\\
		\label{eq:boun}
		&= 2 \cdot \ln\left(\frac{MR C_\tau}{c' \cdot \delta}\right)
		\overset{\text{\cshref{lem:log_b}}}{\leq} 2 \cdot (c')^{-1} \cdot \ln\left(\frac{MR C_\tau}{\delta}\right).
	\end{align}
	This gives us 
	\begin{align*}
		\delta^{-3} \cdot w^2((K-K)\cap \B_\ell(0,\eps) \cdot C_\tau^2 
		&\leq 
		C_2  \cdot n \cdot 
		\eps^2 \cdot \delta^{-3}\cdot \ln(MR/\eps) \cdot C_\tau^2 \\
		&\leq  C_2 \cdot (c')^2 \cdot n \cdot \delta^2 \cdot \ln^{-1}(1/\delta) \cdot C_\tau^{-2} \cdot \delta^{-3}\cdot \ln(MR/\eps) \cdot C_\tau^2 \\
		\overset{\eqref{eq:boun}}&{\leq}
		2C_2 \cdot c'  \cdot n \cdot \delta^{-1} \cdot \ln^{-1}(1/\delta) \cdot \ln(MRC_\tau/\delta) \\
		&\leq C_3  \cdot n \cdot \delta^{-1} \cdot \ln(MRC_\tau/\delta)
	\end{align*}
	with $C_3 \defeq 2 \cdot C_2 \cdot c'$.

	Moreover, we consider
	\begin{align*}
		\delta^{-1} \cdot \ln(\ee \cdot \mathcal{N}(K, \eps/2)) & \leq
		\delta^{-1} \cdot n \cdot \ln \left(\frac{6\ee MR}{\eps}\right) 
		\overset{\text{\cshref{lem:log_b}}}{\leq} 
		6\ee  \cdot \delta^{-1} \cdot n \cdot \ln(MR/\eps)  \\
		\overset{\eqref{eq:boun}}&{\leq}
		12 \cdot \ee \cdot (c')^{-1} \cdot \delta^{-1} \cdot n\cdot\ln(MRC_\tau/\delta) 
		= C_4  \cdot \delta^{-1} \cdot n\cdot\ln(MRC_\tau/\delta)
	\end{align*}
	with $C_4 \defeq 12 \ee \cdot (c')^{-1}$.
	Hence, we have 
	\begin{align*}
		&\norel C' \cdot 
		\max\left\{\delta^{-3} w^2((K-K)\cap \B_\ell(0,\eps) \cdot C_\tau^2, \delta^{-1} \cdot \ln(\ee \cdot \mathcal{N}(K, \eps/2))\right\} \\
		&\leq C' \cdot \max\{C_3,C_4\} \cdot n \cdot \delta^{-1} \cdot \ln(RC_\tau/\delta) \leq m
	\end{align*}
	by assumption, by picking $C \geq C' \cdot \max\{C_3, C_4\}$.
	This proves the lemma. 
\end{proof}

\section{Proof of pointwise bounds}\label{sec:pw_proofs}
In this appendix, we provide proofs for the statements from \Cref{sec:pw}.
All the proofs rely on the techniques discussed in \Cref{sec:randgrad}, i.e., 
the fact that the $D$-matrices in the definition of the formal Jacobian and gradient can effectively be replaced
by independent random diagonal matrices $\DD$ with either $1$ or $0$ on the diagonal, 
each with probability $1/2$. 

We start with the proof of \Cref{lem:special}.
\begin{proof}[Proof of \Cref{lem:special}]
	Since the claim is trivial in the case $\ell_2 < \ell_1$, we may assume $\ell_1 \leq \ell_2$ from now on. 
	Thus, let $\ell_1, \ell_2 \in \{0,\ldots,L-1\}$ with $\ell_1 \leq \ell_2$ be fixed. 
	Note that
	\begin{equation*}
		\left\Vert \left[\prod_{i = \ell_2}^{\ell_1} \D^{(i)}(z_i)W^{(i)}\right]v\right\Vert_2 \d \left\Vert \left[\prod_{i = \ell_2}^{\ell_1} \DD^{(i)}W^{(i)}\right]v \right\Vert_2
	\end{equation*}
	according to \Cref{prop:randgrad}. 
	Regarding the matrices $\DD^{(\ell_2)}, \ldots, \DD^{(\ell_1)}$, we condition on the event  
	\begin{equation}\label{eq:dbound}
		\text{for all } \ell' \in \{\ell_1,\ldots,\ell_2\}: \quad \left(1 - \frac{1}{4L} \right) \cdot \frac{N}{2}\leq \Tr(\DD^{(\ell')}) \leq \left(1 + \frac{1}{4L}\right) \cdot \frac{N}{2},
	\end{equation}
	which occurs with probability at least $(1- 2\exp(-c_1 \cdot N/L^2))^{\ell_2 - \ell_1 + 1} \geq 
	1-2L\exp(-c_1 \cdot N/L^2)$, where $c_1>0$ is an absolute constant.
	Here, we used that $\Tr(\DD^{(\ell')})- \frac{N}{2} = \sum_{j=1}^N \left(\eps_j^{(\ell')} - \frac{1}{2}\right)$ and applied Bernoulli's inequality and Hoeffding's inequality for bounded random variables; see, e.g., \cite[Theorem~2.2.6]{vershynin_high-dimensional_2018}.
	Since we condition on the matrices $\DD^{(\ell')}$ for $\ell' \in  \{\ell_1,\ldots, \ell_2\}$, we can from now on assume that they are fixed.
	Note that then
	\begin{equation}\label{eq:felix_sugg}
		\left\Vert  \left[\prod_{i = \ell_2}^{\ell_1} \DD^{(i)}W^{(i)}\right]v \right\Vert_2 \d \left\Vert \left[\prod_{i = \ell_2}^{\ell_1} W_\dagger^{(i)}\right]v \right\Vert_2
	\end{equation}
	with $W_\dagger^{(\ell_1)} \in \RR^{\Tr(\DD^{(\ell_1)}) \times \mu}$ and 
	$W_\dagger^{(i)} \in \RR^{\Tr(\DD^{(i)}) \times \Tr(\DD^{(i-1)})}$ for every $i \in \{\ell_1 + 1,\ldots,\ell_2\}$.
	Here, the random matrices $W_\dagger^{(i)}$ are independent, have independent $\mathcal{N}(0,2/N)$ entries, and are independent of $v$.
	We now show via induction over $\ell' \in \{\ell_1 -1 , \ldots, \ell_2\}$ that 
	\begin{equation}\label{eq:indts}
		\left(1 - \frac{1}{2L}\right)^{\ell' - \ell_1 + 1} \cdot \mnorm{v}_2 \leq \left\Vert \left[\prod_{i = \ell'}^{\ell_1} W_\dagger^{(i)}\right]v \right\Vert_2 \leq \left(1 + \frac{1}{2L}\right)^{\ell' - \ell_1 + 1} \cdot \mnorm{v}_2
	\end{equation}
	with probability at least $(1-2\exp(-c_2 \cdot N/L^2))^{\ell' - \ell_1 + 1}$ with an absolute constant $c_2 > 0$ that will be specified later. 
	Note that the case $\ell' = \ell_1 - 1$ is trivial, so we can move to the induction step. 
	We hence fix $\ell' \in \{\ell_1, \ldots, \ell_2\}$ and assume via induction that 
	\begin{equation}\label{eq:induc}
		\left(1 - \frac{1}{2L}\right)^{\ell' - \ell_1 } \cdot \mnorm{v}_2 \leq \left\Vert \left[\prod_{i = \ell' - 1}^{\ell_1} W_\dagger^{(i)}\right]v \right\Vert_2 \leq \left(1 + \frac{1}{2L}\right)^{\ell' - \ell_1 } \cdot \mnorm{v}_2
	\end{equation}
	with probability at least $(1-2\exp(-c_2 \cdot N/L^2))^{\ell' - \ell_1}$.
	We condition on $W_\dagger^{(\ell_1)}, \ldots, W_\dagger^{(\ell'-1)}$ and $v$ and assume that \eqref{eq:induc} is satisfied. 
	For brevity, write 
	\[
	v' \defeq \left[\prod_{i = \ell' - 1}^{\ell_1} W_\dagger^{(i)}\right]v,
	\]
	which is now a fixed vector. 
	Note that $W_\dagger^{(\ell')}v' \in \RR^{\Tr(\DD^{(\ell')})}$ is a random vector with independent $\mathcal{N}(0, 2\mnorm{v'}_2^2 / N)$-entries. 
	Hence, \cite[Theorem~3.1.1]{vershynin_high-dimensional_2018} yields for arbitrary $t > 0$ that 
	\begin{equation*}
		\frac{\sqrt{2}}{\sqrt{N}}\mnorm{v'}_2 \cdot \left(\sqrt{\Tr(\DD^{(\ell')})} - t\right) \leq \mnorm{W_\dagger^{(\ell')}v'}_2 \!\leq \! 
		\frac{\sqrt{2}}{\sqrt{N}}\mnorm{v'}_2 \cdot \left(\sqrt{\Tr(\DD^{(\ell')})} + t\right)
	\end{equation*}
	with probability at least $1-2\exp(- c_3 \cdot t^2)$, with an absolute constant $c_3 > 0$.
	We set $t \defeq \frac{\sqrt{N}}{4\sqrt{2}\cdot L}$ and obtain, using \eqref{eq:dbound},
	\begin{align*}
		\frac{\sqrt{2}}{\sqrt{N}} \cdot \left(\sqrt{\Tr(\DD^{(\ell')})} + t\right)
		&\leq \frac{\sqrt{2}}{\sqrt{N}} \cdot \left(\frac{\sqrt{N}}{\sqrt{2}}\cdot\sqrt{1 + \frac{1}{4L}} + \frac{\sqrt{N}}{4\sqrt{2} \cdot L}\right)  \\
		\overset{\sqrt{x} \leq x \text{ for } x \geq 1}&{\leq}  1 + \frac{1}{2L}
	\end{align*}
	and analogously 
	\[
	\frac{\sqrt{2}}{\sqrt{N}} \cdot \left(\sqrt{\Tr(\DD^{(\ell')})} - t\right)
	\geq 1- \frac{1}{2L}.
	\]
	Hence, using \eqref{eq:induc}, we observe 
	\[
	\left(1 - \frac{1}{2L}\right)^{\ell' - \ell_1 + 1} \cdot \mnorm{v}_2 \leq \left\Vert \left[\prod_{i = \ell'}^{\ell_1} W_\dagger^{(i)}\right]v \right\Vert_2 = \mnorm{W_\dagger^{(\ell')}v'}_2\leq \left(1 + \frac{1}{2L}\right)^{\ell' - \ell_1 + 1} \cdot \mnorm{v}_2
	\]
	with probability at least $1-2\exp(-c_2 \cdot N/L^2)$, where $c_2 \defeq c_3 / 32$ and the randomness is solely with respect to $W_\dagger^{(\ell')}$.
	Reintroducing the randomness over $W_\dagger^{(\ell_1)}, \ldots, W_\dagger^{(\ell'-1)}$ and $v$,
	we obtain \eqref{eq:indts}. 
	
	In the end, picking $\ell' = \ell_2$ gives us
	\begin{equation*}
		\frac{1}{2} \cdot \mnorm{v}_2 \leq\left(1 - \frac{1}{2L}\right)^{\ell_2 - \ell_1 + 1} \cdot \mnorm{v}_2\leq 
		\left\Vert \left[\prod_{i = \ell_2}^{\ell_1} W_\dagger^{(i)}\right]v \right\Vert_2 \leq \left(1 + \frac{1}{2L}\right)^{\ell_2 - \ell_1 + 1} \cdot \mnorm{v}_2
		\leq \ee \cdot \mnorm{v}_2
	\end{equation*}
	with probability at least 
	\[
	(1-2\exp(-c_2 \cdot N/L^2))^{\ell_2 -  \ell_1 + 1} 
	\geq (1-2\exp(- c_2 \cdot N/L^2))^{L} \geq 1 - 2\exp(\ln(L)- c_2 \cdot  N/L^2).
	\]
	
	Keeping in mind that we conditioned on the $\DD$-matrices (see \eqref{eq:dbound} and recalling \eqref{eq:felix_sugg}), we get
	\begin{equation*}
		\frac{1}{2} \cdot \Vert v \Vert_2 \leq \mnorm{\left[\prod_{i = \ell_2}^{\ell_1} \D^{(i)}(z_i)W^{(i)}\right]v}_2 \leq \ee \cdot  \Vert v \Vert_2
	\end{equation*}
	with probability at least $1-4\exp(\ln(L)-c_4 \cdot N/L^2)$, where $c_4 \defeq \min\{c_1, c_2, \ln(2)\}$.
	Finally, we use \Cref{prop:randnorm} combined with a union bound over $z_0, \dots, z_{L-1}$ to conclude 
	\[
	\frac{1}{2} \cdot \Vert v \Vert_2 \leq \mnorm{\left[\prod_{i = \ell_2}^{\ell_1} D^{(i)}(z_i)W^{(i)}\right]v}_2 \leq \ee \cdot  \Vert v \Vert_2
	\]
	with probability at least 
	\begin{align*}
		1-4\exp(\ln(L)-c_4 \cdot N/L^2) - \frac{L^2}{2^N} &= 1-4\exp(\ln(L)-c_4 \cdot N/L^2) - \exp(\ln(L^2) - \ln(2)\cdot N)
		\\ &\geq 1 - 5 \exp(2\ln(L)- c_4 \cdot N/L^2) \\
		&= 1 - \exp(\ln(5) + 2\ln(L) -c_4 \cdot N/L^2),
	\end{align*}
	where we used $c_4 \leq \ln(2)$.
	
	Since by assumption $N \geq C \cdot L^2 \cdot \ln(\ee \cdot L)$, we see by taking $C > 0$ large 
	enough that
	\[
	\ln(5) + 2\ln(L) \leq 3 \cdot \ln(5L) \overset{\text{\cshref{lem:log_b}}}{\leq}
	\frac{15}{\ee} \cdot \ln(\ee \cdot L) \leq \frac{c_4}{2} \cdot \frac{N}{L^2},
	\]
	so the claim follows letting $c \defeq c_4 /2$.
\end{proof}

We continue by proving \Cref{lem:pw_2}.

\begin{proof}[Proof of \Cref{lem:pw_2}]
	We may assume $\ell_1 \leq \ell_2$, since in the case $\ell_1 > \ell_2$ we have 
	\[
	\op{\left[ \prod_{i = \ell_2}^{\ell_1} D^{(i)}(z_i) W^{(i)}\right] A} = \op{A}.
	\]
	
	Firstly, note that according to \Cref{prop:randgrad},
	\begin{equation*}
		\op{\left[ \prod_{i = \ell_2}^{\ell_1} \D^{(i)}(z_i) W^{(i)}\right] A} \d
		\op{\left[ \prod_{i = \ell_2}^{\ell_1} \DD^{(i)} W^{(i)}\right] A}.
	\end{equation*}
	We condition on the $\DD$-matrices and assume 
	\begin{equation*}
		\text{for all } i \in \{\ell_1, \ldots, \ell_2\}: \quad \Tr (\DD^{(i)}) \leq \frac{N}{2}\left(1 + \frac{1}{L}\right),
	\end{equation*}
	which holds with probability at least $(1-\exp(-c_1 \cdot N/L^2))^{\ell_2 - \ell_1 + 1} \geq 1 - L \cdot \exp(-c_1 \cdot N/L^2)$
	for some absolute constant $c_1>0$, which follows from Hoeffding's inequality for bounded random variables \cite[Theorem~2.2.6]{vershynin_high-dimensional_2018}
	after noting that $\Tr(\DD^{(i)})- \frac{N}{2} = \sum_{j=1}^N \left(\eps_j^{(i)} - \frac{1}{2}\right)$.
	Fixing the $\DD$-matrices, we observe
	\begin{equation}\label{eq:felix_sugg_2}
		\op{\left[ \prod_{i = \ell_2}^{\ell_1} \DD^{(i)} W^{(i)}\right] A} \d 
		\op{\left[\prod_{i = \ell_2}^{\ell_1} W_\dagger^{(i)}\right]A},
	\end{equation}
	where $W_\dagger^{(i)} \in \RR^{\Tr (\DD^{(i)}) \times \Tr (\DD^{(i-1)})}$ for $i \in \{\ell_1 + 1,\ldots, \ell_2\}$
	and $W_\dagger^{(\ell_1)} \in \RR^{\Tr (\DD^{(\ell_1)}) \times \mu}$.
	Here, the random matrices $W_\dagger^{(i)}$ are independent and have i.i.d. $\mathcal{N}(0,2/N)$ entries. 
	Hence, the goal is to bound
	\begin{equation*}
		\op{\left[\prod_{i = \ell_2}^{\ell_1} W_\dagger^{(i)}\right]A}.
	\end{equation*}
	To this end, we show via induction over $\ell' \in \{\ell_1 -1,\ldots, \ell_2\}$ that 
	\begin{equation}\label{eq:showy}
		\op{\left[\prod_{i = \ell'}^{\ell_1} W_\dagger^{(i)}\right]A} \leq \left(1 + \frac{5}{L}\right)^{\ell' - \ell_1 + 1} \cdot \op{A}
	\end{equation}
	with probability at least $(1-\exp(-c_2 \cdot N/L^2))^{\ell' - \ell_1 + 1}$ for every 
	$\ell' \in \{\ell_1 -1,\ldots, \ell_2\}$, where $c_2 > 0$ is the absolute constant occurring in the formulation of \Cref{lem:pw_4}.
	
	Note that the case $\ell' = \ell_1-1$ is trivial. 
	We hence choose $\ell' \in \{\ell_1, \ldots, \ell_2\}$ and assume via induction that 
	\begin{equation}\label{eq:inde}
		\op{\left[\prod_{i = \ell'-1}^{\ell_1} W_\dagger^{(i)}\right]A} \leq \left(1 + \frac{5}{L}\right)^{\ell' - \ell_1 } \cdot \op{A}
	\end{equation}
	with probability at least $(1-\exp(-c_2 \cdot N/L^2))^{\ell' - \ell_1}$. 
	We condition on $W_\dagger^{(\ell_1)}, \ldots, W_\dagger^{(\ell'-1)}$ and 
	assume that \eqref{eq:inde} is satisfied. 
	We then set $V' \defeq \left[\prod_{i = \ell'-1}^{\ell_1} W_\dagger^{(i)}\right]A$, which is now a fixed matrix. 
	By \Cref{lem:pw_4}, there exists an absolute constant $C_2 > 0$ such that for any $t \geq C_2$,
	\[
	\op{W_\dagger^{(\ell')} V'} \leq \sqrt{2/N} \cdot \op{V'} \cdot \left(\sqrt{\Tr(\DD^{(\ell')})} + \sqrt{\rang(A)} + t\right)
	\]
	with probability at least $1- \exp(-c_2 \cdot t^2)$.
	We set $t \defeq \sqrt{N}/L$ and get, using our assumptions,
	\begin{align*}
		\sqrt{\Tr(\DD^{(\ell')})} \! + \! \sqrt{\rang(A)} \! + \! t
		\leq \sqrt{N/2} \cdot \sqrt{1 + \frac{1}{L}} + \frac{2\sqrt{N}}{L} \! \leq\!  \sqrt{N/2} \cdot \left(1 + \frac{1 + 2\sqrt{2}}{L}\right) \leq \sqrt{\frac{N}{2}} \cdot \left(1 + \frac{5}{L}\right).
	\end{align*}
	Overall, we thus get 
	\[
	\op{\left[\prod_{i = \ell'}^{\ell_1} W_\dagger^{(i)}\right]A} \leq \left(1 + \frac{5}{L}\right)^{\ell' - \ell_1 + 1} \cdot \op{A}
	\]
	with probability at least $1- \exp(- c_2 \cdot N/L^2)$. 
	We reintroduce the randomness over $W_\dagger^{(\ell_1)}, \ldots, W_\dagger^{(\ell'-1)}$ and $A$ and 
	thus obtain \eqref{eq:showy}. 
	Via induction, \eqref{eq:showy} is then shown for every $\ell' \in \{\ell_1 - 1, \ldots, \ell_2\}$. 
	
	In particular, letting $\ell' = \ell_2$ and recalling \eqref{eq:felix_sugg_2}, we get 
	\[
	\op{\left[ \prod_{i = \ell_2}^{\ell_1} \DD^{(i)} W^{(i)}\right] A}
	\leq \left( 1 + \frac{5}{L}\right)^{\ell_2 - \ell_1 + 1} \cdot \op{A} \leq \ee^5 \cdot \op{A}
	\]
	with probability at least 
	\[
	(1-\exp(-c_2 \cdot N/L^2))^{\ell_2 - \ell_1 + 1} 
	\geq 1-L\exp(-c_2 \cdot N/L^2).
	\]
	Recall that we have conditioned on the matrices $\DD^{(i)}$ so far. 
	Lifting the conditioning on these matrices, we get 
	\[
	\op{\left[ \prod_{i = \ell_2}^{\ell_1} \D^{(i)} W^{(i)}\right] A} \leq \ee^5 \cdot \op{A}
	\]
	with probability at least $1-2L \cdot \exp(-c_3 \cdot N/L^2)$ with $c_3 \defeq \min\{c_1,c_2\}$.
	
	In the end, we use \Cref{prop:randnorm} and a union bound over $z_0, \dots, z_{L-1}$ to argue that, after possibly shrinking $c_3$ to guarantee that $c_3< \ln(2)$, with probability at least 
	\[
	1 - \frac{L^2}{2^N} = 1-L^2\exp(-\ln(2)\cdot N) \geq 1-L^2\exp(-c_3\cdot N/L^2)
	\]
	we have
	$D^{(i)}(z_i) = \D^{(i)}(z_i)$ for every $i \in \{0,\ldots,L-1\}$, which gives us
	\begin{equation*}
		\op{\left[ \prod_{i = \ell_2}^{\ell_1} D^{(i)}(z_i) W^{(i)}\right] A} \leq \ee^5 \cdot \op{A}
	\end{equation*}
	with probability at least $1-3L^2\exp(-c_3 \cdot N/L^2) = 1-\exp(\ln(3L^2) -c_3 \cdot N/L^2)$.
	Since $N \geq C \cdot L^2 \cdot \ln(\ee L)$, we have for $C > 0$ large enough that 
	\[
	\ln(3L^2) = 2 \cdot \ln(3L)\overset{\text{\cshref{lem:log_b}}}{\leq} \frac{6}{\ee} \cdot \ln(\ee L) \leq \frac{2}{C} \cdot \frac{N}{L^2} \leq \frac{c_3}{2} \cdot \frac{N}{L^2},
	\]
	so the claim follows by taking $c \defeq c_3 /2$.
\end{proof}

\Cref{lem:pw_3} is now an immediate consequence of \Cref{lem:pw_2}.

\begin{proof}[Proof of \Cref{lem:pw_3}]
	Note that we may assume $A \neq 0$ without loss of generality. 
	We condition on the matrices $W^{(0)},\ldots,W^{(L-1)}$ and the biases $b^{(0)}, \hdots, b^{(L-1)}$ and write 
	\begin{equation*}
		V' \defeq \left[\prod_{i = L - 1}^{\ell} D^{(i)}(z_i) W^{(i)}\right]A \in \RR^{N \times \nu},
	\end{equation*}
	noting that $\op{V'} \leq C_1 \cdot \op{A}$ with probability at least $1-\exp(-c_1 \cdot N/L^2)$ with absolute constants $C_1, c_1 > 0$ according to \Cref{lem:pw_2}.
	We will assume that this holds in the following.
	Note that $W^{(L)}$ is independent of $W^{(0)},\ldots, W^{(L-1)}$ and $b^{(0)},\ldots, b^{(L-1)}$.
	
	By \Cref{lem:pw_4} and recalling that $W^{(L)} \in \RR^{1 \times N}$, we obtain absolute constants $C_2, c_2 > 0$ such that for every $t \geq C_2$, 
	\begin{align*}
		\op{W^{(L)}V'} &\leq \op{V'} \cdot \left(1 + \sqrt{\rang(V')} + t\right)
		\leq C_1 \cdot \op{A} \cdot \left(1 + \sqrt{\rang(A)} + t\right) \\
		&\leq 2C_1 \cdot \op{A} \cdot \left(\sqrt{\rang(A)} + t\right)
	\end{align*}
	with probability at least $1- \exp(-c_2 \cdot t^2)$.
	
	Up to now, we have conditioned on $W^{(0)},\ldots,W^{(L-1)}$ and $b^{(0)}, \hdots, b^{(L-1)}$.
	Reintroducing the randomness over these random variables yields 
	\[
	\op{W^{(L)}V'} \leq 2C_1 \cdot \op{A} \cdot \left(\sqrt{\rang(A)} + t\right)
	\]
	with probability at least 
	\[
	1-\exp(-c_1 \cdot N/L^2) - \exp(-c_2 \cdot t^2) \geq 1 - 2\exp(-c_3 \cdot t^2) \geq 1 - \exp(-c_4 \cdot t^2)
	\]
	with $c_3 \defeq \min\{c_1,c_2\}$ and $c_4 \defeq c_3/2$, using that $C \leq t \leq \sqrt{N}/L$ by assumption. 
\end{proof}

Next, we prove \Cref{thm:pw}.
\begin{proof}[Proof of \Cref{thm:pw}]
	According to \Cref{prop:randgrad}, 
	\begin{align*}
		\mnorm{W^{(L)} \prod_{i= L-1}^0 \D^{(i)}(x_0)W^{(i)}}_{p} &\d 
		\mnorm{W^{(L)} \prod_{i= L-1}^0 \DD^{(i)}W^{(i)}}_{p}.
	\end{align*}
	We now condition on the $\DD$-matrices and assume
	\[
	\frac{N}{2} \cdot \left(1 - \frac{1}{4L}\right) \leq \Tr (\DD^{(i)}) \leq \frac{N}{2} \cdot \left(1 + \frac{1}{4L}\right) \quad \text{for all } i \in \{0,\ldots,L-1\},
	\]
	which occurs with probability at least $(1 - 2\exp(- c_1 \cdot N/L^2))^L \geq 1-2L\exp(-c_1\cdot N/L^2)$
	due to Hoeffding's inequality for bounded random variables \cite[Theorem~2.2.6]{vershynin_high-dimensional_2018}
	with an absolute constant $c_1>0$.
	For fixed $\DD$-matrices,
	\[
	\mnorm{W^{(L)}\left[\prod_{i= L-1}^{0} \DD^{(i)}W^{(i)}\right]}_{p} \d \mnorm{\prod_{i= L}^{0} W_\dagger^{(i)}}_{p},
	\]
	with $W_\dagger^{(0)} \in \RR^{ \Tr(\DD^{(0)}) \times d}$, $W_\dagger^{(i)} \in \RR^{\Tr(\DD^{(i)}) \times \Tr(\DD^{(i-1)})}$ for $i \in \{1,\ldots,L-1\}$
	and $W_\dagger^{(L)} \in \RR^{1 \times \Tr(\DD^{(L-1)})}$.
	Here, the matrices $W_\dagger^{(i)}$ are independent with entries satisfying $(W_\dagger^{(i)})_{j,k} \iid \mathcal{N}(0,2/N)$ for every $i \in \{0,\ldots,L-1\}$ and $(W_\dagger^{(L)})_{j} \iid \mathcal{N}(0,1)$.
	We first show via induction over $\ell' \in \{0,\ldots,L-1\}$ that 
	\[
	\sqrt{\frac{N}{2}} \cdot \left(1 - \frac{1}{2L}\right)^{\ell'+1}\leq \mnorm{\prod_{i= L}^{L-\ell'} W_\dagger^{(i)}}_2 \leq \sqrt{\frac{N}{2}} \cdot \left(1 + \frac{1}{2L}\right)^{\ell'+1}
	\]
	with probability at least $(1-2\exp(-c_2 \cdot N /L^2))^{\ell'+1}$ with an absolute constant $c_2 > 0$.
	To this end, we begin with the case $\ell' = 0$.
	From \cite[Theorem~3.1.1]{vershynin_high-dimensional_2018} we obtain
	\[
	\sqrt{\Tr(\DD^{(L-1)})} - \frac{\sqrt{N}}{4\sqrt{2}L} \leq \mnorm{W^{(L)}_\dagger}_2 \leq \sqrt{\Tr(\DD^{(L-1)})} + \frac{\sqrt{N}}{4\sqrt{2}L}
	\]
	with probability at least $1- 2\exp( -c_2 \cdot N/L^2)$.
	From our assumption on $\DD^{(L-1)}$ we can further bound
	\[
	\sqrt{\Tr(\DD^{(L-1)})} + \frac{\sqrt{N}}{4\sqrt{2}L} \leq  \sqrt{\frac{N}{2}\left(1 + \frac{1}{4L}\right)} + \frac{\sqrt{N}}{4\sqrt{2}L}
	\leq \sqrt{\frac{N}{2}} \left(1 + \frac{1}{4L} + \frac{1}{4L}\right) = \sqrt{\frac{N}{2}} \left(1 + \frac{1}{2L}\right)
	\]
	and similarly
	\[
	\sqrt{\Tr(\DD^{(L-1)})} - \frac{\sqrt{N}}{4\sqrt{2}L} \geq \sqrt{\frac{N}{2}}\left(1 - \frac{1}{2L}\right).
	\]
	Hence, the case $\ell'=0$ is shown. 
	
	We now let $\ell' \in \{1,\ldots, L-1\}$ and assume that the claim is satisfied by $\ell'-1$.
	We then condition on the matrices $W^{(L-\ell'+1)}_\dagger, \ldots, W^{(L)}_\dagger$ and let $Z \defeq \prod_{i=L}^{L-\ell'+1} W_\dagger^{(i)}$.
	From the rotation invariance of the Gaussian distribution we see
	\[
	(Z W^{(L-\ell')}_\dagger)^T  \sim \mathcal{N}\left(0, \frac{2}{N}\cdot \mnorm{Z}_2^2 \cdot I_{\Tr(\DD^{(L-\ell'-1)})}\right).
	\]
	We apply \cite[Theorem~3.1.1]{vershynin_high-dimensional_2018} and get
	\[
	\sqrt{\frac{2}{N}} \cdot \mnorm{Z}_2 \cdot \left(\sqrt{\Tr(\DD^{(L-\ell'-1)})} - \frac{\sqrt{N}}{4\sqrt{2}L}\right)
	\leq \mnorm{ ZW^{(L-\ell')}_\dagger}_2 
	\leq \sqrt{\frac{2}{N}} \cdot \mnorm{Z}_2 \cdot \left(\sqrt{\Tr(\DD^{(L-\ell'-1)})} + \frac{\sqrt{N}}{4\sqrt{2}L}\right)
	\]
	with probability at least $1-2\exp(-c_2 \cdot N/L^2)$.
	Identically to above, we bound
	\[
	\sqrt{\Tr(\DD^{(L-\ell'-1)})} + \frac{\sqrt{N}}{4\sqrt{2}L} \leq \sqrt{\frac{N}{2}} \left(1 + \frac{1}{2L}\right)
	\]
	and
	\[
	\sqrt{\Tr(\DD^{(L-\ell'-1)})} - \frac{\sqrt{N}}{4\sqrt{2}L} \geq \sqrt{\frac{N}{2}}\left(1 - \frac{1}{2L}\right).
	\]
	We now inductively employ the bound on $\mnorm{Z}_2$ and get the desired result using the independence of the random matrices $W^{(i)}_\dagger$.
	
	We have shown that 
	\[
	\frac{1}{2} \cdot \sqrt{N/2} \leq \sqrt{\frac{N}{2}} \cdot \left(1 - \frac{1}{2L}\right)^{L}\leq \mnorm{\prod_{i= L}^{1} W_\dagger^{(i)}}_2 
	\leq \sqrt{\frac{N}{2}} \cdot \left(1 + \frac{1}{2L}\right)^{L}\leq \sqrt{N/2} \cdot \ee
	\]
	with probability at least $(1-2\exp(-c_2 \cdot  N /L^2))^{L} \geq 1 - 2L\exp(-c_2 \cdot N/L^2)$. 
	For the final step, we condition on the matrices $W_\dagger^{(1)}, \ldots,W_\dagger^{(L)}$, let $Z \defeq \prod_{i= L}^{1} W_\dagger^{(i)}$ and assume 
	\[
	\frac{1}{2} \cdot \sqrt{\frac{N}{2}} \leq \mnorm{Z}_2 
	\leq \sqrt{\frac{N}{2}} \cdot \ee.
	\]
	Since $(ZW^{(0)}_\dagger)^T \sim \mathcal{N}\left(0, \frac{2}{N} \cdot \mnorm{Z}_2^2 \cdot I_d\right)$, \Cref{thm:p_high_prob} provides absolute
	constants $C', c'>0$ such that:
	\begin{enumerate}
		{\item If $p \in [1,2]$ and $d \geq C'$, we have
			\begin{equation*}
				\PP \left(c' \cdot \frac{\sqrt{2} \cdot \mnorm{Z}_2}{\sqrt{N}} \cdot d^{1/p} \leq \mnorm{Z W^{(0)}_\dagger }_p 
				\leq C' \cdot \frac{\sqrt{2} \cdot \mnorm{Z}_2}{\sqrt{N}}\cdot d^{1/p}\right) \geq 1- C'\exp(-c' \cdot d).
		\end{equation*}}
		\item{
			For $p \in (2,c' \cdot \ln(d))$ and $d \geq C'$ we have 
			\begin{align*}
				&\norel\PP \left(c' \cdot \frac{\sqrt{2} \cdot \mnorm{Z}_2}{\sqrt{N}} \cdot \sqrt{p} \cdot d^{1/p}
				\leq \mnorm{ZW^{(0)}_\dagger }_p \leq C' \cdot \frac{\sqrt{2} \cdot \mnorm{Z}_2}{\sqrt{N}} \cdot \sqrt{p} \cdot d^{1/p}\right) \\
				&\geq 1-C'\exp(-c' \cdot p \cdot d^{2/p}).
			\end{align*}
		}
		\item{
			If $p \geq \ln(d)$ and $d \geq C'$, we have
			\begin{align*}
				&\norel\PP \left(c' \cdot \frac{\sqrt{2} \cdot \mnorm{Z}_2}{\sqrt{N}} \cdot \sqrt{\ln (d)} \leq \mnorm{ZW^{(0)}_\dagger }_{p} 
				\leq C' \cdot \frac{\sqrt{2} \cdot \mnorm{Z}_2}{\sqrt{N}} \cdot \sqrt{\ln(d)}\right) \\
				&\geq 1-C'\exp(-c' \cdot \ln(d)).
			\end{align*}
		}
	\end{enumerate}
	Up to now we conditioned on $W_\dagger^{(1)}, \ldots,W_\dagger^{(L)}$.
	Reintroducing the randomness over those random matrices, we find
	\begin{enumerate}
		{\item If $p \in [1,2]$ and $d \geq C'$, we have
			\begin{equation*}
				\PP \left(\frac{c'}{2} \cdot d^{1/p} \leq \mnorm{\prod_{i= L}^{0} W_\dagger^{(i)}}_{p}
				\leq C' \cdot \ee \cdot d^{1/p}\right) \geq 1- C'\exp(-c' \cdot d)- 2L\exp(-c_2 \cdot N/L^2).
		\end{equation*}}
		\item{
			For $p \in (2,c' \cdot \ln(d))$ and $d \geq C'$ we have 
			\begin{align*}
				&\norel\PP \left(\frac{c'}{2}\cdot \sqrt{p} \cdot d^{ 1/p}
				\leq \mnorm{\prod_{i= L}^{0} W_\dagger^{(i)}}_{p}\leq C' \cdot \ee \cdot \sqrt{p} \cdot d^{1/p}\right) \\
				&\geq 1-C'\exp(-c' \cdot p \cdot d^{2/p})- 2L\exp(-c_2 \cdot N/L^2).
			\end{align*}
		}
		\item{
			If $p \geq \ln(d)$ and $d \geq C'$, we have
			\begin{align*}
				&\norel\PP \left(\frac{c'}{2} \cdot \sqrt{\ln (d)} \leq \mnorm{\prod_{i= L}^{0} W_\dagger^{(i)}}_{p} 
				\leq C' \cdot \ee \cdot \sqrt{\ln(d)}\right) \\
				&\geq 1-C'\exp(-c' \cdot \ln(d))- 2L\exp(-c_2 \cdot N/L^2).
			\end{align*}
		}
	\end{enumerate}
	Finally, we reintroduce the randomness over the $\DD$-matrices and apply \Cref{prop:randnorm}, which yields the following with $c_3 \defeq \min\{c_1, c_2,\ln(2)\}$:
	\begin{enumerate}
		{\item If $p \in [1,2]$ and $d \geq C'$, we have
			\begin{equation*}
				\PP \left(\frac{c'}{2} \cdot d^{1/p} \leq \mnorm{\nablaa \Phi (x_0)}_p
				\leq C' \cdot \ee \cdot d^{1/p}\right) \geq 1- C'\exp(-c' \cdot d)- 4L\exp(-c_3 \cdot N/L^2)- \frac{L}{2^N}.
		\end{equation*}}
		\item{
			For $p \in (2,c' \cdot \ln(d))$ and $d \geq C'$ we have 
			\begin{align*}
				&\norel\PP \left(\frac{c'}{2}\cdot \sqrt{p} \cdot d^{ 1/p} 
				\leq \mnorm{\nablaa \Phi (x_0)}_p\leq C' \cdot \ee \cdot \sqrt{p} \cdot d^{1/p}\right)  \\
				&\geq 1-C'\exp(-c' \cdot p \cdot d^{2/p})- 4L\exp(-c_3 \cdot N/L^2) - \frac{L}{2^N}.
			\end{align*}
		}
		\item{
			If $p \geq \ln(d)$ and $d \geq C'$, we have
			\begin{align*}
				&\norel\PP \left(\frac{c'}{2} \cdot \sqrt{\ln (d)} \leq \mnorm{\nablaa \Phi (x_0)}_p
				\leq C' \cdot \ee \cdot \sqrt{\ln(d)}\right) \\
				&\geq 1-C'\exp(-c' \cdot \ln(d))- 4L\exp(-c_3 \cdot N/L^2) - \frac{L}{2^N}.
			\end{align*}
		}
	\end{enumerate}
	Since $c_3 \leq \ln(2)$, 
	we get
	\begin{align*}
		4L\exp(-c_3 \cdot N/L^2) + \frac{L}{2^N}
		&= \exp(\ln(4L) - c_3 \cdot N/L^2) + \exp(\underbrace{\ln(L)}_{\leq \ln(4L)} - \underbrace{\ln(2) \cdot N}_{\geq c_3 \cdot N/L^2})\\
		&\leq \exp(\underbrace{\ln(2)}_{\leq \ln(4L)} + \ln(4L) - c_3 \cdot N/L^2) \\
		&\leq \exp(2\ln(4L) - c_3 \cdot N/L^2) \\
		&\leq \exp(-c_4 \cdot N/L^2)
	\end{align*}
	with $c_4 \defeq c_3 /2$ and since $N \geq C \cdot L^2\ln(\ee L)$, which ensures for $C$ large enough that
	\[
	2\ln(4L) \overset{\text{\cshref{lem:log_b}}}{\leq} \frac{8}{\ee} \cdot \ln(\ee L) \leq \frac{c_3}{2} \cdot \frac{N}{L^2}.
	\]
	
	To conclude the proof, we again distinguish the three cases.
	\begin{enumerate}
		\item \label{item:case1}Let us first consider the case $p \in [1,2]$.
		Note that $d \geq C$ and hence $\ln(C') \leq \frac{c'}{2} \cdot d$, for $C$ large enough.
		We then get, with $c_5 \defeq c'/2$,
		\begin{align*}
			1- C'\exp(-c' \cdot d)- 4L\exp(-c_3 \cdot N/L^2)- \frac{L}{2^N}
			&\geq 1-C' \exp(-c' \cdot d) - \exp(-c_4 \cdot N/L^2) \\
			&\geq 1- \exp(-c_5 \cdot d) - \exp(-c_4 \cdot N/L^2) \\
			&\geq 1-\exp(-c_6 \cdot \min\{d, N/L^2\})
		\end{align*}
		with $c_6 \defeq \min\{c_4,c_5\}/2$ and using that $\min\{d, N/L^2\} \geq C$ by assumption. 
		
		\item We now consider the case $p \in (2,c' \cdot \ln(d))$.
		Using the well-known estimate $\exp(2t) \geq 1+2t \geq t$ for $t > 0$, 
		we find 
		\[
		\exp\left(\frac{2}{p}\cdot \ln(d)\right) \geq \frac{\ln(d)}{p},
		\]
		which is equivalent to 
		\[
		p \cdot d^{2/p} \geq \ln(d).
		\]
		We choose $C$ large enough, such that for $d \geq C$, 
		\[
		\ln(C') \leq  \frac{c'}{2} \cdot \ln(d) \leq \frac{c'}{2} \cdot p \cdot d^{2/p}.
		\]
		We then get 
		\begin{align*}
			&\norel 1- C' \exp(-c' \cdot  p \cdot d^{2/p}) - 4L\exp(-c_3 \cdot N/L^2) - \frac{L}{2^N} \\
			&\geq  1- C' \exp(- c' \cdot p \cdot d^{2/p}) - \exp \left(- c_4 \cdot N/L^2\right) \\
			&\geq  1-  \exp(- c_5 \cdot p \cdot d^{2/p}) - \exp (- c_4 \cdot N/L^2) \\
			&\geq 1-\exp(-c_6 \cdot \min\{p \cdot d^{2/p}, N/L^2\}).
		\end{align*}
		
		\item Lastly, if $p \geq \ln(d)$ we note as in Case \eqref{item:case1} that we may assume $\ln(C') \leq \frac{c'}{2} \cdot \ln(d)$.
		This gives us
		\begin{align*}
			&\norel 1- C' \exp(-c' \cdot  \ln(d)) - 4L\exp(-c_3 \cdot N/L^2) - \frac{L}{2^N}  \\
			&\geq  1- C' \exp(- c' \ln(d)) - \exp \left(- c_4 \cdot N/L^2\right) \\
			&\geq  1-  \exp(- c_5 \cdot \ln(d)) - \exp (- c_4 \cdot N/L^2) \\
			&\geq 1-\exp(-c_6 \cdot \min\{\ln(d), N/L^2\}),
		\end{align*}
		where in the last step we again used that $\min\{\ln(d), N/L^2\}\geq \ln(C)$ by assumption.
	\end{enumerate}
	Overall, the claim is shown. 
\end{proof}

We conclude this appendix by proving \Cref{thm:p_high_prob}, which follows by combining two results from \cite{PAOURIS20173187}.
\begin{proof}[Proof of \Cref{thm:p_high_prob}]
	According to \cite[Proposition~2.4]{PAOURIS20173187}, there exist absolute constants $C_1, c_1>0$, such that for every $d \in \NN$ and $p \in [1,\infty)$ with $p < \ln(d)$,
	\[
	c_1 \cdot \sqrt{p} \cdot d^{1/p} \leq \underset{X \sim \mathcal{N}(0,I_d)}{\EE} \left[\mnorm{X}_p\right] \leq C_1 \cdot \sqrt{p} \cdot d^{1/p},
	\]
	whereas for $d \in \NN$ and $p \in [1, \infty]$ with $p \geq \ln(d)$, one has
	\[
	c_1 \cdot \sqrt{\ln(d)} \leq \underset{X \sim \mathcal{N}(0,I_d)}{\EE} \left[\mnorm{X}_p\right] \leq C_1 \cdot \sqrt{\ln(d)}.
	\]
	
	An application of \cite[Theorem~4.11]{PAOURIS20173187} (with $\eps = 1/2$) yields the existence of absolute constants $C_2,c_2,c_3 > 0$ such that for every $d \geq C_2$ and 
	$p \in [1,\infty]$ one has
	\[
	\underset{X \sim \mathcal{N}(0,I_d)}{\PP} \left(\frac{1}{2} \cdot \EE \left[\mnorm{X}_p\right] \leq \mnorm{X}_p \leq \frac{3}{2} \cdot \EE \left[\mnorm{X}_p\right]\right)
	\geq 1 - C_2 \exp(-c_2 \cdot \beta(d,p,1/2)),
	\]
	where 
	\[
	\beta(d,p,1/2) \defeq \begin{cases}
		\frac{d}{4} & \text{if } 1 \leq p \leq 2, \\
		\max \left\{\min\left\{p^2\cdot2^{-p}\cdot \frac{d}{4}, \left(\frac{d}{2}\right)^{2/p}\right\},\frac{1}{2} \cdot p \cdot d^{2/p}\right\} & \text{if } 2 < p \leq c_3 \cdot \ln(d), \\
		\ln(d) \cdot \frac{1}{2} & \text{if } p > c_3 \cdot \ln(d).
	\end{cases}
	\]
	
	If $p \in [1,2]$ and $d \geq \max\{C_2,8\}$, we get $\ln(d) \geq \ln(8) >2 \geq p$ and hence 
	\[
	c_1 \cdot d^{1/p}\leq c_1 \cdot \sqrt{p} \cdot d^{1/p} \leq \underset{X \sim \mathcal{N}(0,I_d)}{\EE} \left[\mnorm{X}_p\right] 
	\leq C_1 \cdot \sqrt{p} \cdot d^{1/p} \leq C_1 \cdot \sqrt{2} \cdot d^{1/p}.
	\]
	We thus get
	\begin{align*}
		\underset{X \sim \mathcal{N}(0,I_d)}{\PP} \left(\frac{c_1}{2} \cdot d^{1/p} \leq \mnorm{X}_p \leq \frac{3}{\sqrt{2}} \cdot C_1 \cdot d^{1/p}\right)
		&\geq \underset{X \sim \mathcal{N}(0,I_d)}{\PP} \left(\frac{1}{2} \cdot \EE \left[\mnorm{X}_p\right] \leq \mnorm{X}_p \leq \frac{3}{2} \cdot \EE \left[\mnorm{X}_p\right]\right) \\
		&\geq 1-C_2\exp\left(-\frac{c_2}{4} \cdot d\right)
	\end{align*}
	in the case $p \in [1,2]$ and $d \geq \max\{C_2, 8\}$.
	
	In the case $p \in (2,c_3 \cdot \ln(d))$ we have 
	$\ln(d)> p$, and therefore
	\[
	c_1 \cdot \sqrt{p} \cdot d^{1/p} \leq \underset{X \sim \mathcal{N}(0,I_d)}{\EE} \left[\mnorm{X}_p\right] \leq C_1 \cdot \sqrt{p} \cdot d^{1/p}.
	\]
	Moreover, we get 
	\begin{align*}
		\beta(d,p,1/2)&= \max \left\{\min\left\{p^2\cdot2^{-p}\cdot \frac{d}{4}, \left(\frac{d}{2}\right)^{2/p}\right\},\frac{1}{2} \cdot p \cdot d^{2/p}\right\} 
		\geq \frac{1}{2} \cdot p \cdot d^{2/p}.
	\end{align*}
	Therefore, for $d \geq C_2$,
	\begin{align*}
		&\norel\underset{X \sim \mathcal{N}(0,I_d)}{\PP} \left(\frac{c_1}{2} \cdot \sqrt{p} \cdot d^{1/p} \leq \mnorm{X}_p \leq \frac{3}{2} \cdot C_1 \cdot \sqrt{p}\cdot d^{1/p}\right) \\
		&\geq \underset{X \sim \mathcal{N}(0,I_d)}{\PP} \left(\frac{1}{2} \cdot \EE \left[\mnorm{X}_p\right] \leq \mnorm{X}_p \leq \frac{3}{2} \cdot \EE \left[\mnorm{X}_p\right]\right) \\
		&\geq 1-C_2\exp\left(- \frac{c_2}{2} \cdot p \cdot d^{2/p}\right).
	\end{align*}
	
	Lastly, in the case $p \geq \ln(d)\geq \ln(8) > 2$, we have
	\[
	c_1 \cdot \sqrt{\ln(d)} \leq \EE \left[\mnorm{X}_p\right] \leq C_1 \cdot \sqrt{\ln(d)}
	\]
	and
	\[
	\beta(d,p, 1/2) \geq \min\left\{\ln(d) \cdot \frac{1}{2}, \frac{1}{2} \cdot p \cdot d^{2/p}\right\}= \ln(d) \cdot \frac{1}{2}.
	\]
	Hence, we get 
	\begin{align*}
		&\norel\underset{X \sim \mathcal{N}(0,I_d)}{\PP} \left(\frac{c_1}{2} \cdot \sqrt{\ln(d)} \leq \mnorm{X}_p \leq \frac{3}{2} \cdot C_1 \cdot \sqrt{\ln(d)}\right) \\
		&\geq \underset{X \sim \mathcal{N}(0,I_d)}{\PP} \left(\frac{1}{2} \cdot \EE \left[\mnorm{X}_p\right] 
		\leq \mnorm{X}_p \leq \frac{3}{2} \cdot \EE \left[\mnorm{X}_p\right]\right) \\
		&\geq 1-C_2\exp\left(-\frac{c_2}{2} \cdot \ln(d)\right)
	\end{align*}
	for every $d \geq \max\{C_2,8\}$.
	
	In the end, the claim follows letting $C \defeq \max\{\frac{3}{\sqrt{2}} \cdot C_1, C_2,8\}$ and 
	$c \defeq \min\{\frac{c_1}{2}, \frac{c_2}{4},c_3\}$.
\end{proof}

\section{\texorpdfstring{Proof of \Cref{thm:main_upper}}{Proof of Theorem 4.1}}\label{sec:upper_proof}
In this appendix, we prove \Cref{thm:main_upper}, i.e., the upper bound for the Lipschitz constant of random \emph{zero-bias} ReLU networks $\Phi: \RR^d \to \RR$ following \Cref{assum:1}.
In fact, we even aim to show a more general statement that will be used later in \Cref{sec:hom_diff} to tackle the case of random networks with general biases. 
To be precise, using the notation $\jac \Phi^{(\ell_1) \to (\ell_2)}$ introduced in \Cref{def:formal}, we aim to bound expressions of the form 
\[
\underset{x \in \SS^{d-1}, A }{\sup} \mnorm{\jac \Phi^{(\ell_1) \to (\ell_2)}(x)A}_{2 \to 2 }
\quad 
\text{and}
\quad 
\underset{x \in \SS^{d-1}, A}{\sup} \mnorm{W^{(L)}\jac \Phi^{(\ell) \to (L-1)}(x)A}_{2 \to 2 }
\]
for suitable indices $\ell,\ell_1, \ell_2 \in \{0,\dots,L-1\}$, where $A$ runs over all diagonal matrices where the diagonal entries have absolute value either zero or one and that satisfy a certain sparsity condition; see \Cref{eq:aj} for a precise definition. 
We note that the special case of $\ell = 0$ in the second expression yields a bound for 
\[
\sup_{x \in \SS^{d-1}} \mnorm{\nablaa \Phi(x)}_2,
\]
which implies a bound for $\lip_2(\Phi)$ according to \Cref{thm:up_low_bound}.

First, we provide a decomposition of the difference between the formal gradients of the network at two input points; a similar decomposition has been used in \cite{bartlett2021adversarial}.
\begin{lemma}\label{lem:decomp}
	Let $x,y \in \RR^d$ and $\Phi: \RR^d \to \RR$ be a ReLU network with $L$ hidden layers of width $N$
	as in \eqref{eq:relu-network}.
	Then, for every $\ell_1 \in \{0,\dots,L-1\}, \ell_2 \in \{-1,\dots,L-1\}$, 
	\begin{align*}
		&\norel \left(\jac \Phi^{(\ell_1)\to (\ell_2)} \right)(x) -\left(\jac \Phi^{(\ell_1) \to (\ell_2)} \right)(y)  \\
		&= \sum_{j= \ell_1}^{\ell_2} \left(\jac \Phi^{(j+1) \to (\ell_2)}(x)\left(D^{(j)}(x) - D^{(j)}(y)\right)W^{(j)} \jac \Phi^{(\ell_1) \to (j-1)}(y) \right).
	\end{align*}
\end{lemma}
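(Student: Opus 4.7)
The plan is to reduce the claim to a standard algebraic telescoping identity for reverse-ordered matrix products. Writing $A_j(z) \defeq D^{(j)}(z) W^{(j)}$, Definition \ref{def:formal} tells us that
$$
\jac \Phi^{(\ell_1) \to (\ell_2)}(z) = \prod_{j=\ell_2}^{\ell_1} A_j(z),
$$
using the reverse-order product convention \eqref{eq:rev_order}. Thus the claim is, after setting $B_j \defeq A_j(x)$ and $C_j \defeq A_j(y)$, exactly the matrix identity
$$
\prod_{j=\ell_2}^{\ell_1} B_j - \prod_{j=\ell_2}^{\ell_1} C_j = \sum_{j=\ell_1}^{\ell_2} \left(\prod_{k=\ell_2}^{j+1} B_k\right) (B_j - C_j) \left(\prod_{k=j-1}^{\ell_1} C_k\right),
$$
once we note that $B_j - C_j = (D^{(j)}(x) - D^{(j)}(y)) W^{(j)}$ and that the two outer products coincide with $\jac \Phi^{(j+1)\to(\ell_2)}(x)$ and $\jac \Phi^{(\ell_1)\to(j-1)}(y)$, respectively.

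To prove the algebraic identity, I would rewrite the right-hand side as
$$
\sum_{j=\ell_1}^{\ell_2} \Bigg[ \left(\prod_{k=\ell_2}^{j+1} B_k\right)\!\left(\prod_{k=j}^{\ell_1} C_k\right) - \left(\prod_{k=\ell_2}^{j} B_k\right)\!\left(\prod_{k=j-1}^{\ell_1} C_k\right) \Bigg],
$$
which is a telescoping sum in $j$. The $j=\ell_2$ term contributes $\prod_{k=\ell_2}^{\ell_1} B_k$ as the first summand (since $\prod_{k=\ell_2}^{\ell_2+1} B_k = I$ by the empty-product convention), and the $j=\ell_1$ term contributes $-\prod_{k=\ell_2}^{\ell_1} C_k$ as the second summand (since $\prod_{k=\ell_1 - 1}^{\ell_1} C_k = I$). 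All intermediate terms cancel in pairs, yielding the desired left-hand side.

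I do not foresee a genuine obstacle here; the only subtlety is bookkeeping the edge cases, specifically making sure the empty-product convention in \eqref{eq:rev_order} is applied consistently at the boundary indices $j = \ell_1$ and $j = \ell_2$, and noting that the degenerate case $\ell_2 < \ell_1$ is trivially consistent: the LHS equals $I - I = 0$ and the RHS is an empty sum.
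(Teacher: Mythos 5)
Your approach is correct in substance but genuinely different from the paper's: the paper proves the identity by induction on $\ell_2$ (peeling off the last factor $D^{(\ell_2)}(x)W^{(\ell_2)}$ and splitting the difference into a term handled by the induction hypothesis plus the new $j=\ell_2$ summand), whereas you prove it directly as a one-shot telescoping identity for reverse-ordered products. Your route is arguably cleaner and makes the purely algebraic nature of the statement transparent; the paper's induction is the same identity unrolled one step at a time, so nothing is gained or lost mathematically either way. One bookkeeping error you should fix: your rewritten summand has the two pieces in the wrong order. Expanding
\[
\left(\prod_{k=\ell_2}^{j+1} B_k\right)(B_j - C_j)\left(\prod_{k=j-1}^{\ell_1} C_k\right)
\]
and absorbing $B_j$ into the left product and $C_j$ into the right product gives
\[
\left(\prod_{k=\ell_2}^{j} B_k\right)\left(\prod_{k=j-1}^{\ell_1} C_k\right) - \left(\prod_{k=\ell_2}^{j+1} B_k\right)\left(\prod_{k=j}^{\ell_1} C_k\right),
\]
i.e.\ the negative of what you wrote; as displayed, your sum telescopes to $\prod_k C_k - \prod_k B_k$. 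Your subsequent description of the surviving boundary terms ($+\prod_k B_k$ from $j=\ell_1$ via $\prod_{k=\ell_1-1}^{\ell_1} C_k = I$, and $-\prod_k C_k$ from $j=\ell_2$ via $\prod_{k=\ell_2}^{\ell_2+1} B_k = I$) matches the corrected version, so once the display is fixed the argument is complete, including the degenerate case $\ell_2 < \ell_1$.
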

\begin{proof}
	Fix $\ell_1 \in \{0,\dots,L-1\}$. The proof is via induction over $\ell_2 \in \{-1,\dots, L-1\}$, where the case $\ell_2 < \ell_1$ is trivial.
	Therefore, we take $\ell_2 \in \{\ell_1,\dots,L-1\}$ and assume that $\ell_2 - 1$ satisfies the claim.
	We then get 
	\begin{align*}
		&\norel \left(\jac \Phi^{(\ell_1) \to (\ell_2)} \right)(x) -\left(\jac \Phi^{(\ell_1) \to (\ell_2)} \right)(y) \\ 
		&= D^{(\ell_2)}(x)W^{(\ell_2)}  \left(\jac \Phi^{(\ell_1) \to (\ell_2 - 1)} \right)(x) - D^{(\ell_2)}(y)W^{(\ell_2)}  \left(\jac \Phi^{(\ell_1) \to (\ell_2 - 1)} \right)(y) \\
		&= D^{(\ell_2)}(x)W^{(\ell_2)}\left(\left(\jac \Phi^{(\ell_1)\to (\ell_2 -1)} \right)(x) - \left(\jac \Phi^{(\ell_1)\to(\ell_2-1)} \right)(y)\right)
		\\
		&\hspace{0.5cm}+ \left(D^{(\ell_2)}(x) - D^{(\ell_2)}(y)\right)W^{(\ell_2)}\left(\jac \Phi^{(\ell_1)\to(\ell_2-1)} \right)(y) \\
		\overset{\text{IH}}&{=} D^{(\ell_2)}(x)W^{(\ell_2)}\left(\sum_{j= \ell_1}^{\ell_2-1} \left(\jac \Phi^{(j+1) \to (\ell_2 - 1)}(x)\left(D^{(j)}(x) - D^{(j)}(y)\right)W^{(j)} \jac \Phi^{(\ell_1) \to (j-1)}(y) \right)\right) \\
		& \hspace{0.5cm} + \left(D^{(\ell_2)}(x) - D^{(\ell_2)}(y)\right)W^{(\ell_2)}\left(\jac \Phi^{(\ell_1)\to(\ell_2-1)} \right)(y) \\
		&= \sum_{j= \ell_1}^{\ell_2-1} \left(\jac \Phi^{(j+1) \to (\ell_2)}(x)\left(D^{(j)}(x) - D^{(j)}(y)\right)W^{(j)} \jac \Phi^{(\ell_1) \to (j-1)}(y) \right)\\
		&\hspace{0.5cm} + \underbrace{\jac \Phi^{(\ell_2 + 1) \to (\ell_2)}(x)}_{= I_{N \times N}}(D^{(\ell_2)}(x) - D^{(\ell_2)}(y))W^{(\ell_2)}\left(\jac \Phi^{(\ell_1)\to(\ell_2-1)} \right)(y) \\
		&= \sum_{j= \ell_1}^{\ell_2} \left(\jac \Phi^{(j+1) \to (\ell_2)}(x)\left(D^{(j)}(x) - D^{(j)}(y)\right)W^{(j)} \jac \Phi^{(\ell_1) \to (j-1)}(y) \right).
	\end{align*}
	By induction, the claim is shown. 
\end{proof}

For given $\delta > 0$, $\ell \in \NN$, we let
\begin{equation}\label{eq:aj}
	\mathcal{A}^{(0)}_\delta \defeq \{I_{d \times d}\} 
	\quad 
	\text{and}
	\quad
	\mathcal{A}^{(\ell)}_\delta \defeq \left\{ A \in \diag\{0,1,-1\}^{N \times N} : \ \Tr \abs{A} \leq  \delta N\right\}.
\end{equation}
The following result is the main theorem of this section. 
\Cref{thm:main_upper} follows immediately from it by taking $\delta \asymp \frac{d}{N}\ln(N/d)$.
\begin{theorem}\label{prop:advanced_lip} 
	There exist absolute constants $C >0$ and $c >0$ such that the following holds. 
	Let $\Phi: \RR^d \to \RR$ be a random \emph{zero-bias} ReLU network satisfying \Cref{assum:1} with $L$ hidden layers of width $N$
	satisfying $N \geq C \cdot L^3d\ln(N/d)$.
	Moreover, let $\delta \in (0,\ee^{-1})$ with 
	\[
	N \geq C \cdot dL^2 \cdot \ln(1/\delta), 
	\quad 
	\delta N \geq C \cdot d  \cdot \ln(1/\delta)
	\quad
	\text{and}
	\quad
	\delta \cdot \ln(1/\delta) \leq c /L^2.
	\]
	Then with probability at least $1 -\exp(-c \cdot \delta N)$,
	\[
	\underset{A \in \mathcal{A}_{\delta}^{(\ell_1)}}{\underset{x \in \SS^{d-1}}{\sup}} \mnorm{\jac \Phi^{(\ell_1) \to (\ell_2)}(x)A}_{2 \to 2 } 
	\leq C
	\quad \text{for all }\ell_1 \in \{0,\dots,L-1\} \text{ and }  \ell_2 \in \{-1,\dots,L-1\}.
	\]
	Moreover, with probability at least $1 -\exp(-c \cdot \delta N)$,
	\[
	\underset{A \in \mathcal{A}_{\delta}^{(\ell)}}{\underset{x \in \SS^{d-1}}{\sup}} \mnorm{W^{(L)}\jac \Phi^{(\ell) \to (L-1)}(x)A}_{2 \to 2 }
	\leq C \cdot \sqrt{N \cdot \delta\ln(1/\delta)} \quad \text{for all $\ell \in \{0, \dots, L-1\}$}.
	\]
	Finally, under the additional assumptions 
	\[
	\delta \ln^{1/2}(1/\delta) \leq c \cdot \frac{\sqrt{d}}{L\sqrt{N}},
	\quad 
	\delta \leq \frac{c}{L^3 \ln(N/d)} 
	\quad 
	\text{and}
	\quad
	d \ln(1/\delta) \geq C,
	\]
	we obtain 
	\[
	\sup_{x \in \SS^{d-1}} \mnorm{\nablaa \Phi(x)}_2 \leq C \cdot \sqrt{d \ln(1/\delta)}
	\]
	with probability at least $1- \exp(-c \cdot d\ln(1/\delta))$.
\end{theorem}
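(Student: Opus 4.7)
The plan is to prove the three bounds by a single covering-based induction on the layer index $\ell_2$, following the blueprint that was sketched for \Cref{thm:main_upper}. Fix $\ell_1$ and induct on $\ell_2 \in \{\ell_1-1, \ell_1, \ldots, L-1\}$. The base case $\ell_2 = \ell_1 - 1$ is trivial, since then $\jac \Phi^{(\ell_1) \to (\ell_2)} = I$ (empty product) and $\op{A} \leq 1$ for every $A \in \mathcal{A}_\delta^{(\ell_1)}$. Choose $\eps \asymp \delta \ln^{-1/2}(1/\delta)$ so that \Cref{lem:recht_upper_original} applies, and pick an $\eps$-net $\neps \subseteq \SS^{d-1}$ with $\ln |\neps| \lesssim d \ln(1/\delta)$.

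The pointwise bound on the net is handled by combining \Cref{lem:pw_2} with a union bound over $\neps$ and $\mathcal{A}_\delta^{(\ell_1)}$. A standard binomial estimate (e.g.\ \Cref{lem:card_b}) gives $\ln |\mathcal{A}_\delta^{(\ell_1)}| \lesssim N\delta\ln(1/\delta)$, so under the hypothesis $\delta N \gtrsim d \ln(1/\delta)$ the total union-bound cost $\ln |\neps| + \ln |\mathcal{A}_\delta^{(\ell_1)}| \lesssim N\delta\ln(1/\delta)$ is absorbed by the deviation exponent $N/L^2$ supplied by \Cref{lem:pw_2}, thanks to the assumption $N \geq CdL^2\ln(1/\delta)$. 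This yields $\sup_{x^\ast \in \neps,\, A \in \mathcal{A}_\delta^{(\ell_1)}} \op{\jac \Phi^{(\ell_1) \to (\ell_2)}(x^\ast) A} \lesssim 1$ with probability at least $1 - \exp(-c\delta N)$.

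The main obstacle is the deviation step, i.e.\ bounding $\op{\jac \Phi^{(\ell_1) \to (\ell_2)}(x^\ast) A - \jac \Phi^{(\ell_1) \to (\ell_2)}(y) A}$ uniformly over $x^\ast \in \neps$ and $y \in \SS^{d-1}$ with $\mnorm{x^\ast - y}_2 \leq \eps/2$. I apply the decomposition of \Cref{lem:decomp}: each summand factors through $D^{(j)}(x^\ast) - D^{(j)}(y) = (D^{(j)}(x^\ast) - D^{(j)}(y)) \cdot |D^{(j)}(x^\ast) - D^{(j)}(y)|$, and the triangle inequality splits the spectral norm into the product of $\op{\jac \Phi^{(j+1) \to (\ell_2)}(x^\ast) \bigl(D^{(j)}(x^\ast) - D^{(j)}(y)\bigr)}$ and $\op{|D^{(j)}(x^\ast) - D^{(j)}(y)| W^{(j)} \jac \Phi^{(\ell_1) \to (j-1)}(y) A}$. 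For the first factor, \Cref{lem:recht_upper_original} applied to the normalized vectors $\Phi^{(j-1)}(x^\ast)/\mnorm{\Phi^{(j-1)}(x^\ast)}_2$ and $\Phi^{(j-1)}(y)/\mnorm{\Phi^{(j-1)}(y)}_2$ (whose distance is controlled by the inductive Lipschitz bound combined with \Cref{lem:special} and \Cref{lem:diff_bound}) produces $\Tr |D^{(j)}(x^\ast) - D^{(j)}(y)| \leq \delta' N$ for some $\delta' \lesssim \delta/\ln(1/\delta)$; \Cref{lem:pw_2} then gives an $O(1)$ bound on the first factor. For the second factor, I condition on $W^{(0)}, \ldots, W^{(j-1)}$ and exploit that the set $\{\jac \Phi^{(\ell_1) \to (j-1)}(y) : y \in \SS^{d-1}\}$ has cardinality at most $(\ee N / (d+1))^{L(d+1)}$ (by the activation-pattern bound from \cite[Lemma~5.7]{geuchen2024upper}); combined with $\ln |\mathcal{A}_\delta^{(j)}| \lesssim N\delta \ln(1/\delta)$ and a Gaussian matrix concentration inequality (the same tool used in the sketch of \Cref{thm:main_upper}) with $t^2 \asymp Ld\ln(N/d) + N\delta\ln(1/\delta)$, this second factor is at most $C/L$ per summand under the standing hypotheses $N \geq CL^3 d \ln(N/d)$ and $\delta \ln(1/\delta) \leq c/L^2$. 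Summing over the $\ell_2 - \ell_1 + 1 \leq L$ summands yields the required $O(1)$ bound, closing the induction.

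For the second statement I run the same induction but prepend the final row $W^{(L)}$ and invoke \Cref{lem:pw_3} in place of \Cref{lem:pw_2} at the pointwise step; \Cref{lem:pw_3} contributes an extra factor of $\sqrt{\rang(A)} + t \lesssim \sqrt{\delta N} + t$, and taking $t^2 \asymp N\delta\ln(1/\delta)$ to match the union-bound cost delivers the stated bound $\sqrt{N\delta\ln(1/\delta)}$. The third statement requires a slight refinement: take $A = I_{d\times d}$ (so $\rang(A) = d$) and pick $t \asymp \sqrt{d \ln(1/\delta)}$ so that the pointwise Gaussian concentration produces $O(\sqrt{d \ln(1/\delta)})$. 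The sharper hypothesis $\delta \ln^{1/2}(1/\delta) \leq c\sqrt{d}/(L\sqrt{N})$ is exactly what is needed to keep the covering-deviation contribution at most $O(\sqrt{d \ln(1/\delta)}/L)$ per summand (the extra $\sqrt{N}$ from $W^{(L)}$ being compensated by the tightened $\delta$-constraint), so the sum over $j \in \{0,\ldots,L-1\}$ matches the target order $\sqrt{d\ln(1/\delta)}$.
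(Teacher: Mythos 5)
Your proposal is correct and follows essentially the same route as the paper: an induction on the layer index maintaining a pointwise net bound, a tessellation-based trace bound, and a bound on $A_2 W^{(j)} \jac\Phi^{(\ell_1)\to(j-1)}A_1$ via the activation-pattern count and Gaussian concentration, then incorporating $W^{(L)}$ via \Cref{lem:pw_3} with the indicated choices of $t$ (the paper merely packages the induction into a separate auxiliary lemma that also carries the invariant $\mnorm{\Phi^{(j)}(x)}_2 \gtrsim \mnorm{x}_2$, which you implicitly use when normalizing the intermediate outputs). The only imprecision is the claim that the tessellation step yields $\Tr\abs{D^{(j)}(x^\ast)-D^{(j)}(y)} \leq \delta' N$ with $\delta' \lesssim \delta/\ln(1/\delta)$ — the stated hypotheses only support $\delta' = \delta$, but that is all the argument needs.
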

We first prove the following auxiliary statement. 
\begin{lemma}\label{lem:auxil}
	There exist absolute constants $C >0$ and $c >0$ such that the following holds. 
	Let $\Phi: \RR^d \to \RR$ be a random \emph{zero-bias} ReLU network satisfying \Cref{assum:1} with $L$ hidden layers of width $N$
	satisfying $N \geq C \cdot L^3d\ln(N/d)$.
	Moreover, let $\delta \in (0,\ee^{-1})$ with 
	\[
	N \geq C \cdot dL^2 \cdot \ln(1/\delta), 
	\quad 
	\delta N \geq C \cdot d\cdot \ln(1/\delta)
	\quad
	\text{and}
	\quad
	\delta \cdot \ln(1/\delta) \leq c \cdot 1/L^2.
	\]
	We let $C_1 > 0$ be the absolute constant from \Cref{lem:pw_2} and $c' \in (0,1)$ be the absolute constant from \Cref{lem:recht_upper_h} and set $\eps \defeq \frac{c'}{16C_1} \cdot \delta \cdot \ln^{-1/2}(1/\delta)$.
	Then, for every $\ell_2 \in \{-1,\dots,L-1\}$ with probability at least $1- 4(\ell_2+1)\exp(-c \cdot \delta N)$, all of the following statements are satisfied simultaneously:
	\begin{enumerate}
		\item{
			\label{item:1}
			$\displaystyle \sup_{x,y \in \SS^{d-1}, \mnorm{x-y}_2 \leq \eps} \Tr \abs{D^{(j)}(x) - D^{(j)}(y)} \leq \delta N \quad \text{for all } j \in \{0, \dots, \ell_2\}$,
		}
		\item{
			\label{item:2}
			$\displaystyle \underset{A_2 \in A_\delta^{(1)}}{\sup_{x \in \SS^{d-1}, A_1 \in \mathcal{A}_\delta^{(\ell_1)}}} \op{A_2 W^{(j)} \jac \Phi^{(\ell_1) \to (j-1)}(x)A_1} \leq C \cdot \left(\sqrt{\delta \ln(1/\delta)} + \sqrt{\frac{Ld\ln(N/d)}{N}}\right) \quad \\ \text{for all } \ell_1 \in \{0,\dots, L-1\} \text{ and }j \in \{0, \dots, \ell_2\}$,
		}
		\item{ \label{item:3}$\displaystyle
			\underset{A \in \mathcal{A}_{\delta}^{(\ell_1)}}{\underset{x \in \SS^{d-1}}{\sup}} \mnorm{\jac \Phi^{(\ell_1) \to (j)}(x)A}_{2 \to 2 }
			\leq C
			\quad \text{for all } \ell_1 \in \{0, \dots, L-1\} \text{ and } j \in \{-1, \dots, \ell_2\}$,}
		\item{\label{item:4}
			$\displaystyle \mnorm{\Phi^{(j)}(x)}_2 \geq \frac{\mnorm{x}_2}{4} \quad \text{for every } x \in \RR^d \text{ and } j \in \{0, \dots, \ell_2\}$. 
		}
	\end{enumerate}
\end{lemma}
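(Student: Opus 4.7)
The plan is to establish all four statements simultaneously by induction on $\ell_2 \in \{-1, \ldots, L-1\}$. The base case $\ell_2 = -1$ is immediate: items 1, 2, 4 are vacuous since the range of $j$ is empty, and item 3 reduces to $\op{A} \leq 1$, using that either $A = I_{d \times d}$ (when $\ell_1 = 0$) or $A \in \diag\{0, 1, -1\}^{N \times N}$ (when $\ell_1 \geq 1$), and that $\jac \Phi^{(\ell_1) \to (-1)}(x) = I$ by the empty-product convention. For the induction step, I will prove the claims at level $\ell_2$ in the order $(1) \to (2) \to (3) \to (4)$, since item 2 relies on the induction hypothesis on item 3, item 3 uses items 1 and 2 at the current level, and item 4 uses item 3 at the current level.

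For item 1 at $j = \ell_2$, the identity $\Phi^{(\ell_2-1)}(x) = \jac \Phi^{(\ell_2-1)}(x) \cdot x$ valid in the zero-bias case, combined with the induction hypothesis on items 3 and 4 at $\ell_2 - 1$ and \Cref{lem:diff_bound}, shows that the map $x \mapsto \Phi^{(\ell_2-1)}(x)/\mnorm{\Phi^{(\ell_2-1)}(x)}_2$ is Lipschitz from $\SS^{d-1}$ to $\SS^{N-1}$ with an $O(1)$ Lipschitz constant. Applying \Cref{lem:recht_upper_h} to $W^{(\ell_2)}$ composed with this map yields the desired count $\Tr |D^{(\ell_2)}(x) - D^{(\ell_2)}(y)| \leq \delta N$ for $\mnorm{x-y}_2 \leq \eps$, whose hypotheses $\eps \lesssim \delta \ln^{-1/2}(1/\delta)$ and $N \gtrsim d \delta^{-1} \ln(1/\delta)$ hold by assumption. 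Item 4 in turn follows from the zero-bias identity $\Phi^{(\ell_2)}(x) = \jac \Phi^{(\ell_2)}(x) \cdot x$: \Cref{lem:special} applied at each $x^* \in \neps$ (with $v = x^*$) provides $\mnorm{\Phi^{(\ell_2)}(x^*)}_2 \geq \mnorm{x^*}_2/2$ after union bounding, and for arbitrary $x \in \SS^{d-1}$ one transfers via Lipschitz continuity, using $\lip_2(\Phi^{(\ell_2)}) \leq \sup_y \op{\jac \Phi^{(\ell_2)}(y)} \leq C$ from item 3 at $\ell_2$. Homogeneity then extends the bound to all $x \in \RR^d$.

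For item 2 at $\ell_2$, fix $j \in \{0, \ldots, \ell_2\}$ and condition on $W^{(0)}, \ldots, W^{(j-1)}$. Consider the finite set
\[
\mathcal{V}_j \defeq \left\{ \jac \Phi^{(\ell_1) \to (j-1)}(x) \, A_1 : x \in \SS^{d-1}, \, A_1 \in \mathcal{A}_\delta^{(\ell_1)}, \, \ell_1 \in \{0, \ldots, L-1\}\right\}.
\]
A standard combinatorial count of activation patterns (e.g.\ \cite[Lemma~5.7]{geuchen2024upper}) together with the bound $\ln |\mathcal{A}_\delta^{(\ell_1)}| \lesssim \delta N \ln(1/\delta)$ yields $\ln|\mathcal{V}_j| \lesssim Ld\ln(N/d) + \delta N \ln(1/\delta)$, and the induction hypothesis on item 3 (at level $\ell_2 - 1 \geq j - 1$) gives $\op{V} \lesssim 1$ and $\rang(V) \leq \delta N$ for every $V \in \mathcal{V}_j$. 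For fixed $A_2 \in \mathcal{A}_\delta^{(1)}$ and $V \in \mathcal{V}_j$, \Cref{lem:pw_4} provides
\[
\op{A_2 W^{(j)} V} \lesssim \op{A_2}\,\op{V} \cdot \frac{\sqrt{\rang A_2} + \sqrt{\rang V} + t}{\sqrt{N}} \lesssim \frac{\sqrt{\delta N} + t}{\sqrt{N}}
\]
with probability at least $1 - \exp(-ct^2)$. Choosing $t^2 \asymp \delta N \ln(1/\delta) + Ld\ln(N/d)$ and union bounding over $A_2$, $V$, and $j$ yields precisely the bound $\sqrt{\delta\ln(1/\delta)} + \sqrt{Ld\ln(N/d)/N}$.

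For item 3 at $\ell_2$, fix $\ell_1$ and let $\neps \subset \SS^{d-1}$ be an $\eps$-net with $\ln|\neps| \lesssim d\ln(1/\eps) \asymp d\ln(1/\delta)$. Union bounding \Cref{lem:pw_2} (valid since $\rang A \leq \delta N \leq N/L^2$ by the hypothesis $\delta \ln(1/\delta) \leq c/L^2$) over $x^* \in \neps$ and $A \in \mathcal{A}_\delta^{(\ell_1)}$ gives $\op{\jac \Phi^{(\ell_1)\to(\ell_2)}(x^*) A} \lesssim 1$ on an event of probability at least $1 - \exp(-c\delta N)$. For arbitrary $x \in \SS^{d-1}$ choose $x^* \in \neps$ with $\mnorm{x - x^*}_2 \leq \eps$ and apply \Cref{lem:decomp}:
\[
\jac \Phi^{(\ell_1)\to(\ell_2)}(x^*) A - \jac \Phi^{(\ell_1)\to(\ell_2)}(x) A = \sum_{j=\ell_1}^{\ell_2} \jac \Phi^{(j+1)\to(\ell_2)}(x^*) \, A_j \, W^{(j)} \, \jac \Phi^{(\ell_1)\to(j-1)}(x) \, A,
\]
where $A_j \defeq D^{(j)}(x^*) - D^{(j)}(x) \in \mathcal{A}_\delta^{(j+1)}$ by item 1. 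Each summand factors as $\op{\jac \Phi^{(j+1)\to(\ell_2)}(x^*) A_j} \cdot \op{|A_j| W^{(j)} \jac \Phi^{(\ell_1)\to(j-1)}(x) A}$; the first factor is $\lesssim 1$ by another union-bounded application of \Cref{lem:pw_2} (over $x^* \in \neps$ and $A_j$ in the finite set $\mathcal{A}_\delta^{(j+1)}$), and the second is $\lesssim \sqrt{\delta \ln(1/\delta)} + \sqrt{Ld\ln(N/d)/N}$ by item 2 at $\ell_2$. Summing the at most $L$ contributions gives a total of order $L\sqrt{\delta \ln(1/\delta)} + \sqrt{L^3 d\ln(N/d)/N} \lesssim 1$ under the hypotheses $L^2 \delta \ln(1/\delta) \lesssim 1$ and $L^3 d\ln(N/d) \lesssim N$, completing the induction. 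The principal obstacle is the probabilistic bookkeeping: exponentially many events are introduced across the sets $\mathcal{A}_\delta^{(\cdot)}$, the combinatorial collection of activation patterns, and the $\eps$-net, and ensuring the cumulative failure probability stays at $\exp(-c\delta N)$ is what forces exactly the quantitative relations among $N$, $d$, $L$, and $\delta$ stated in the lemma.
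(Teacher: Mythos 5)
Your proposal is correct and follows essentially the same route as the paper's proof: the same induction on $\ell_2$, the same ordering of the four claims within the induction step, and the same key ingredients (Lipschitz continuity of the normalized feature map plus \Cref{lem:recht_upper_h} for item 1, the activation-pattern count plus \Cref{lem:pw_4} for item 2, the decomposition from \Cref{lem:decomp} with a union-bounded application of \Cref{lem:pw_2} over the net and the sparse diagonal matrices for item 3, and \Cref{lem:special} plus Lipschitz transfer and homogeneity for item 4). The remaining work you flag — the probabilistic bookkeeping across the nets and the sets $\mathcal{A}_\delta^{(\cdot)}$ — is indeed where the paper spends most of its effort, but your accounting of which events are needed and why the stated parameter relations suffice matches the paper's argument.
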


\begin{proof}
	The proof is via induction over $\ell_2$.
	In the case $\ell_2 = -1$, only \eqref{item:3} needs to be considered. 
	However, here we have, since necessarily $j = -1$,
	\[
	\underset{A \in \mathcal{A}_{\delta}^{(\ell_1)}}{\underset{x \in \SS^{d-1}}{\sup}} \op{\jac \Phi^{(\ell_1) \to (j)}(x)A} 
	=\sup_{A \in \mathcal{A}_{\delta}^{(\ell_1)}} \op{A} =  1
	\]
	for every $\ell_1 \in \{0, \dots, L-1\}$.
	
	Let us move to the induction step. 
	Pick $\ell_2 \in \{0,\dots,L-1\}$ and assume that with probability at least $1- 4\ell_2\exp(-c \cdot \delta N)$, all of the following are satisfied:
	\begin{enumerate}[label=(\alph*)]
		\item{
			\label{item:11}
			$\displaystyle \sup_{x,y \in \SS^{d-1}, \mnorm{x-y}_2 \leq \eps} \Tr \abs{D^{(j)}(x) - D^{(j)}(y)} \leq \delta N \quad \text{for all } j \in \{0, \dots, \ell_2-1\}$,
		}
		\item{
			\label{item:22}
			$\displaystyle \underset{A_2 \in A_\delta^{(1)}}{\sup_{x \in \SS^{d-1}, A_1 \in \mathcal{A}_\delta^{(\ell_1)}}} \op{A_2 W^{(j)} \jac \Phi^{(\ell_1) \to (j-1)}A_1} \leq C_2 \cdot \left(\sqrt{\delta \ln(1/\delta)} + \sqrt{\frac{Ld\ln(N/d)}{N}}\right) \quad  \\
			\text{for all } \ell_1 \in \{0,\dots, L-1\} \text{ and } j \in \{0, \dots, \ell_2-1\}$,
		}
		\item{ \label{item:33}$\displaystyle
			\underset{A \in \mathcal{A}_{\delta}^{(\ell_1)}}{\underset{x \in \SS^{d-1}}{\sup}} \mnorm{\jac \Phi^{(\ell_1) \to (j)}(x)A}_{2 \to 2 }
			\leq 2C_1
			\quad \text{for all }  \ell_1 \in \{0, \dots, L-1\} \text{ and }j \in \{0, \dots, \ell_2-1\}$,}
		\item{\label{item:44}
			$\displaystyle \mnorm{\Phi^{(j)}(x)}_2 \geq \frac{\mnorm{x}_2}{4} \quad \text{for every } x \in \RR^d \text{ and } j \in \{0, \dots, \ell_2-1\}$,
		}
	\end{enumerate}
	where $C_2, c > 0$
	are appropriate constants to be exactly determined later and we recall that $C_1>0$ is the absolute constant from \Cref{lem:pw_2}.
	
	We now condition on the weights $W^{(0)}, \dots, W^{(\ell_2 -1)}$ and assume that properties \ref{item:11}-\ref{item:44} are satisfied. 
	Note that, by picking $\ell_1 = 0$ in \ref{item:33} and by \Cref{thm:glob}, this in particular implies 
	\begin{equation}\label{eq:lip_help}
		\lip_{2 \to 2}(\Phi^{(j)}) \leq \sup_{x \in \SS^{d-1}} \op{\jac \Phi^{(j)}(x)} \leq 2C_1 \quad \text{for all } j \in \{0, \dots, \ell_2 -1\}. 
	\end{equation}
	
	We set 
	\[
	\mu \defeq \begin{cases}d& \text{if }\ell_2 =0, \\ N& \text{if }\ell_2>0.\end{cases}
	\]
	Let
	\[
	f: \quad \SS^{d-1} \to \SS^{\mu - 1}, \quad f(x) = \frac{\Phi^{(\ell_2-1)}(x)}{\mnorm{\Phi^{(\ell_2-1)}(x)}_2},
	\]
	which is well-defined according to \ref{item:44}.
	For $x,y \in \SS^{d-1}$ we have 
	\begin{align*}
		\mnorm{f(x) - f(y)}_2 \overset{\text{\cshref{lem:diff_bound}}}&{\leq}
		2 \cdot \frac{\mnorm{\Phi^{(\ell_2-1)}(x) - \Phi^{(\ell_2-1)}(y)}_2}{\mnorm{\Phi^{(\ell_2-1)}(x)}_2}
		\overset{\eqref{eq:lip_help}, \ref{item:44}}{\leq} 16C_1 \cdot \mnorm{x-y}_2.
	\end{align*}
	Thus, $f$ is $(16C_1)$-Lipschitz. 
	Note that for $x, y \in \SS^{d-1}$ with $\mnorm{x-y}_2\leq \eps$ we have 
	\[
	\mnorm{f(x)-f(y)}_2 \leq 16C_1 \cdot \eps = c' \cdot \delta \cdot \ln^{-1/2}(1/\delta).
	\]
	Hence, we may apply \Cref{lem:recht_upper_h} by using that 
	\[
	N \geq C \cdot d \cdot \delta^{-1} \cdot \ln(1/\delta)
	\]
	and get 
	\begin{align}
		&\norel\underset{x,y \in \SS^{d-1}, \mnorm{x - y}_2 \leq \eps}{\sup}\Tr \abs{D^{(\ell_2)}(y) - D^{(\ell_2)}(x)} \nonumber\\
		\label{eq:ee_1}
		&\leq \underset{\mnorm{z_1-z_2}_2 \leq c' \cdot \delta \cdot \ln^{-1/2}(1/\delta)}{\sup_{z_1,z_2 \in f(\SS^{d-1}),}}
		\#\left\{i \in \{1,\dots,N\}: \ \sgn((W^{(\ell_2)}z_1)_i) \neq \sgn((W^{(\ell_2)}z_2)_i)\right\} \leq \delta N
	\end{align}
	with probability at least $1- \exp(-c' \cdot \delta N)$ over the randomness in $W^{(\ell_2)}$. 
	We denote the event on which \eqref{eq:ee_1} holds $\E_1$.
	
	Further, 
	fix $\ell_1 \in \{0,\dots, L-1\}$ and set 
	\[
	\mathcal{V} \defeq \left\{ \jac \Phi^{(\ell_1) \to (\ell_2-1)}(x)A_1: \ x \in \SS^{d-1}, \ A_1 \in \mathcal{A}_\delta^{(\ell_1)}\right\}.
	\]
	Note that we have $\op{M} \leq 2C_1$ and $\rang(M) \leq \delta N$ for every $M \in \mathcal{V}$ by \ref{item:33} and by definition of $\mathcal{A}_\delta^{(\ell_1)}$. 
	Here, we used that $d \leq C \cdot d \cdot \ln(1/\delta) \leq \delta N$.
	Hence, for fixed $x \in \SS^{d-1}, A_1 \in \mathcal{A}_\delta^{(\ell_1)}$, and $A_2 \in \mathcal{A}_\delta^{(1)}$, \Cref{lem:pw_4} shows for absolute constants $C_3, c_1 > 0$ 
	that for every $ t \geq C_3$, the condition
	\begin{align*}
		\op{A_2W^{(\ell_2)}\jac \Phi^{(\ell_1) \to (\ell_2-1)}(x)A_1}
		&\leq \sqrt{2} \cdot 2C_1 \cdot \frac{\sqrt{\delta N} + \sqrt{\delta N}+ t}{\sqrt{N}}
	\end{align*}
	is true with probability at least $1- \exp(-c_1 \cdot t^2)$.
	Moreover, combining \Cref{lem:card_b} and \cite[Lemma~5.7]{geuchen2024upper}, we get 
	\begin{align}
		\ln(\abs{\mathcal{V}}) + \ln\left(\abs{\mathcal{A}_\delta^{(1)}}\right)
		&\leq L(d+1) \cdot \ln\left(\frac{\ee N}{d+1}\right) +  2\delta N \cdot \ln(4\ee /\delta)\nonumber\\
		\overset{\text{\cshref{lem:log_b}}}&{\leq} 2\ee L d \cdot \ln(N/d) + 8\ee \cdot \delta N \cdot \ln(1/\delta) \nonumber\\
		\label{eq:carddd}
		&\leq 8\ee  (Ld \ln(N/d) + \delta N \cdot \ln(1/\delta)).
	\end{align}
	Hence, via a union bound, we observe for every $ t\geq C_3$ that the condition  
	\[
	\underset{A_2 \in \mathcal{A}_\delta^{(1)}}{\underset{x \in \SS^{d-1},A_1 \in \mathcal{A}_\delta^{(\ell_1)}}{\sup}} \op{A_2W^{(\ell_2)}\jac \Phi^{(\ell_1) \to (\ell_2-1)}(x)A_1}
	\leq \sqrt{2} \cdot 2C_1 \cdot \frac{\sqrt{\delta N} + \sqrt{\delta N}+ t}{\sqrt{N}}
	\]
	is satisfied with probability at least 
	\[
	1- \exp\left(8\ee  ( Ld\ln(N/d) +  \delta N \cdot\ln(1/\delta))-c_1t^2\right).
	\]
	We then explicitly pick 
	\[
	t \defeq \sqrt{c_1^{-1} \cdot 9\ee \cdot (Ld \ln(N/d) + \delta N \cdot \ln(1/\delta))}, 
	\]
	which yields the existence of an absolute constant $C_2 > 0$, such that 
	\begin{align*}
		&\norel\underset{A_2 \in \mathcal{A}_\delta^{(1)}}{\underset{x \in \SS^{d-1},A_1 \in \mathcal{A}_\delta^{(\ell_1)}}{\sup}} \op{A_2W^{(\ell_2)}\jac \Phi^{(\ell_1) \to (\ell_2-1)}(x)A_1}\\
		&\leq C_2  \cdot \frac{\sqrt{\delta N \cdot \ln(1/\delta)} + \sqrt{Ld\ln(N/d)}}{\sqrt{N}}
	\end{align*}
	with probability at least 
	\[
	1-\exp(-\ee \cdot ( Ld\ln(N/d) + \delta N \cdot \ln(1/\delta)))
	\]
	over the randomness in $W^{(\ell_2)}$.
	
	Until now, $\ell_1 \in \{0, \dots, L-1\}$ was fixed. 
	By performing a union bound over $\ell_1$, we note that 
	\begin{align}
		&\norel\underset{A_2 \in \mathcal{A}_\delta^{(1)}}{\underset{x \in \SS^{d-1},A_1 \in \mathcal{A}_\delta^{(\ell_1)}}{\sup}} \op{A_2W^{(\ell_2)}\jac \Phi^{(\ell_1) \to (\ell_2-1)}(x)A_1} \nonumber\\
		\label{eq:ee_2}
		&\leq C_2 \cdot \left(\sqrt{\delta \ln(1/\delta)} + \sqrt{\frac{Ld\ln(N/d)}{N}}\right) \quad \text{for all }
		\ell_1 \in \{0, \dots, L-1\}
	\end{align}
	with probability at least 
	\[
	1-\exp\left(\ln(L)-\ee \cdot ( Ld\ln(N/d) + \delta N \cdot \ln(1/\delta))\right) 
	\overset{\ln(L) \leq L}{\geq} 1 - \exp(- \delta N)
	\]
	over the randomness in $W^{(\ell_2)}$.
	We denote the event defined by property \eqref{eq:ee_2} $\E_2$.
	
	Moreover, 
	let $\neps\subseteq \SS^{d-1}$ be an $\eps$-net of $\SS^{d-1}$ with 
	\begin{align}
		\ln(\abs{\neps}) &\leq d \cdot \ln(3/\eps) 
		= d \cdot \ln\left(\frac{48C_1\cdot \ln^{1/2}(1/\delta)}{c' \cdot \delta}\right)
		\overset{\ln^{1/2}(1/\delta) \leq 1/\delta}{\leq} d \cdot \ln\left(\frac{48C_1}{c' \cdot \delta^2}\right) \nonumber\\
		\label{eq:neps_b}
		&\leq 2 \cdot d \cdot \ln(48C_1/(c'\delta)) \overset{\text{\cshref{lem:log_b}}}{\leq} 
		C_4 \cdot d \cdot \ln(1/\delta)
	\end{align}
	with an absolute constant $C_4 > 0$. 
	For $x \in \SS^{d-1}$ we denote by $\pi(x) \in \neps$ a net point with $\mnorm{x- \pi(x)}_2 \leq \eps$.
	On $\E_1 \cap \E_2$, by \Cref{lem:decomp}, for arbitrary $\ell_1 \in \{0, \dots, L-1\}$,
	\begin{align*}
		&\norel\underset{x \in \SS^{d-1}, A \in \mathcal{A}_\delta^{(\ell_1)}}{\sup}
		\op{\jac \Phi^{(\ell_1) \to (\ell_2)}(\pi(x))A - 
			\jac \Phi^{(\ell_1) \to (\ell_2)}(x)A} \nonumber\\
		&\leq {\underset{x \in \SS^{d-1}, A \in \mathcal{A}_\delta^{(\ell_1)}}{\sup}} \sum_{j= \ell_1}^{\ell_2} 
		\op{\jac \Phi^{(j+1) \to (\ell_2)}(\pi(x))\left(D^{(j)}(\pi(x)) - D^{(j)}(x)\right)
			W^{(j)}\jac \Phi^{(\ell_1) \to (j-1)}(x)A} \nonumber\\
		&\leq 
		{\underset{x \in \SS^{d-1}, A \in \mathcal{A}_\delta^{(\ell_1)}}{\sup}} \Bigg( 
		\sum_{j= \ell_1}^{\ell_2} \op{\jac \Phi^{(j+1) \to (\ell_2)}(\pi(x))\left(D^{(j)}(\pi(x)) - D^{(j)}(x)\right)} \nonumber\\
		& \hspace{4.5cm}\cdot 
		\op{\left(D^{(j)}(\pi(x)) - D^{(j)}(x)\right)W^{(j)}\jac \Phi^{(\ell_1) \to (j-1)}(x)A} \Bigg)\nonumber\\
		&\leq \sum_{j= \ell_1}^{\ell_2} \left( \underset{x \in \SS^{d-1}}{\sup} \op{\jac \Phi^{(j+1) \to (\ell_2)}(\pi(x))\left(D^{(j)}(\pi(x)) - D^{(j)}(x)\right)} \right) \nonumber\\
		\label{eq:tb}
		& \hspace{1.2cm}\cdot 
		\left( {\underset{x \in \SS^{d-1}, A \in \mathcal{A}_\delta^{(\ell_1)}}{\sup}} \op{\left(D^{(j)}(\pi(x)) - D^{(j)}(x)\right)W^{(j)}\jac \Phi^{(\ell_1) \to (j-1)}(x)A}\right) \\
		\overset{\E_1, \ref{item:11}}&{\leq} \sum_{j= \ell_1}^{\ell_2} \left(\sup_{x^\ast \in \neps, A_2 \in \mathcal{A}_\delta^{(1)}} \op{\jac \Phi^{(j+1) \to (\ell_2)}(x^\ast)A_2}\right) \\
		&\hspace{1.2cm}\cdot \left(\underset{A_2 \in \mathcal{A}_\delta^{(1)}}{\sup_{x \in \SS^{d-1}, A_1 \in \mathcal{A}_\delta^{(\ell_1)}}} \op{A_2 W^{(j)} \jac \Phi^{(\ell_1) \to (j-1)}(x)A_1}\right) \\
		\overset{\E_2, \ref{item:22}}&{\leq} C_2 \cdot \left(\sqrt{\delta \ln(1/\delta)} + \sqrt{\frac{Ld\ln(N/d)}{N}}\right) \cdot \sum_{j= \ell_1}^{\ell_2} \left(\sup_{x^\ast \in \neps, A_2 \in \mathcal{A}_\delta^{(1)}} \op{\jac \Phi^{(j+1) \to (\ell_2)}(x^\ast)A_2}\right).
	\end{align*}
	Since $\delta \ln(1/\delta) \leq c/L^2$ and $N \geq C \cdot L^3 d \ln(N/d)$, 
	\[
	C_2 \cdot \left(\sqrt{\delta \ln(1/\delta)} + \sqrt{\frac{Ld\ln(N/d)}{N}}\right) \leq \frac{1}{L}.
	\]
	Hence, on $\E_1 \cap \E_2$, 
	\begin{align*}
		&\norel \underset{x \in \SS^{d-1}, A \in \mathcal{A}_\delta^{(\ell_1)}}{\sup}
		\op{\jac \Phi^{(\ell_1) \to (\ell_2)}(\pi(x))A - 
			\jac \Phi^{(\ell_1) \to (\ell_2)}(x)A} \\
		&\leq \frac{1}{L} 
		\cdot \sum_{j= \ell_1}^{\ell_2} \left(\sup_{x^\ast \in \neps, A_2 \in \mathcal{A}_\delta^{(1)}} \op{\jac \Phi^{(j+1) \to (\ell_2)}(x^\ast)A_2}\right).
	\end{align*}
	
	Recall that up to now we have conditioned on $W^{(0)}, \dots, W^{(\ell_2 -1)}$. 
	Reintroducing the randomness over these random matrices, we note that with
	probability at least 
	\[
	1- 4\ell_2\exp(-c \cdot \delta N) - \exp(-c' \cdot \delta N) - \exp(- \delta N),
	\] 
	all of the following properties are simultaneously satisfied:
	\begin{enumerate}[label=(\roman*)]
		\item{
			\label{item:111}
			$\displaystyle \sup_{x,y \in \SS^{d-1}, \mnorm{x-y}_2 \leq \eps} \Tr \abs{D^{(j)}(x) - D^{(j)}(y)} \leq \delta N \quad \text{for all } j \in \{0, \dots, \ell_2\}$,
		}
		\item{
			\label{item:222}
			$\displaystyle \underset{A_2 \in A_\delta^{(1)}}{\sup_{x \in \SS^{d-1}, A_1 \in \mathcal{A}_\delta^{(\ell_1)}}} \op{A_2 W^{(j)} \jac \Phi^{(\ell_1) \to (j-1)}A_1} \leq C_2 \cdot \left(\sqrt{\delta \ln(1/\delta)} + \sqrt{\frac{Ld\ln(N/d)}{N}}\right) \quad  \\
			\text{for all } \ell_1 \in \{0,\dots, L-1\} \text{ and } j \in \{0, \dots, \ell_2\}$,
		}
		\item{\label{item:333}
			$\displaystyle \underset{x \in \SS^{d-1}, A \in \mathcal{A}_\delta^{(\ell_1)}}{\sup}
			\op{\jac \Phi^{(\ell_1) \to (\ell_2)}(\pi(x))A - 
				\jac \Phi^{(\ell_1) \to (\ell_2)}(x)A} \\ 
			\hspace{1cm}\leq \frac{1}{L} 
			\cdot \sum_{j= \ell_1}^{\ell_2} \left(\sup_{x^\ast \in \neps, A_2 \in \mathcal{A}_\delta^{(1)}} \op{\jac \Phi^{(j+1) \to (\ell_2)}(x^\ast)A_2}\right) \quad \text{for all } \ell_1 \in \{0, \dots, L-1\},$
		}
		\item{ \label{item:444}$\displaystyle
			\underset{A \in \mathcal{A}_{\delta}^{(\ell_1)}}{\underset{x \in \SS^{d-1}}{\sup}} \mnorm{\jac \Phi^{(\ell_1) \to (j)}(x)A}_{2 \to 2 }
			\leq 2C_1
			\quad \text{for all }  \ell_1 \in \{0, \dots, L-1\} \text{ and }j \in \{-1, \dots, \ell_2-1\}$,}
		\item{\label{item:555}
			$\displaystyle \mnorm{\Phi^{(j)}(x)}_2 \geq \frac{\mnorm{x}_2}{4} \quad \text{for every } x \in \RR^d \text{ and } j \in \{0, \dots, \ell_2-1\}$.
		}
	\end{enumerate}
	We call the event defined by properties \ref{item:111}-\ref{item:555} $\E_3$, noting that the randomness is now with respect to $W^{(0)}, \dots, W^{(\ell_2)}$.
	
	For fixed $x^\ast \in \neps$, $j \in \{0, \dots, L\}$ and $A_2 \in \mathcal{A}_\delta^{(j)}$, using \Cref{lem:pw_2} we get
	\[
	\op{\jac \Phi^{(j) \to (\ell_2)}(x^\ast)A_2} \leq C_1
	\]
	with probability at least $1-\exp(-c_1 \cdot N/L^2)$ with an absolute constant $c_1 > 0$.
	Note here that $d\leq C \cdot d \cdot \ln(1/\delta) \leq \delta N \leq N/L^2$, since $\delta \leq \delta \ln(1/\delta) \leq c/L^2$.
	Since 
	\[
	\ln\left(\abs{\mathcal{A}_\delta^{(j)}}\right) \overset{\text{\cshref{lem:card_b}}}{\leq} \delta N \cdot \ln(4\ee / \delta) \overset{\text{\cshref{lem:log_b}}}{\leq} 4\ee \cdot \delta N \cdot \ln(1/\delta) \quad \text{and} \quad 
	\ln(\abs{\neps}) \overset{\eqref{eq:neps_b}}{\leq} C_4 \cdot d \cdot \ln(1/\delta),
	\]
	a union bound yields that 
	\begin{equation}\label{eq:e4}
		\op{\jac \Phi^{(j) \to (\ell_2)}(x^\ast)A_2} \leq C_1 \quad \text{for all } x^\ast \in \neps, j \in \{0, \dots, L\} \text{ and } A_2 \in \mathcal{A}_\delta^{(j)},
	\end{equation}
	with probability at least 
	\[
	1- \exp(\ln(L+1) + 4\ee \cdot \delta N \cdot \ln(1/\delta) + C_4 \cdot d \cdot \ln(1/\delta) - c_1 \cdot N/L^2) \geq 1 - \exp(-c_2 \cdot N/L^2).
	\]
	For the last step, we used $N \geq C \cdot L^3 d \ln(N/d) \geq C \cdot L^2 \ln(\ee L)$, as well as $\delta \ln(1/\delta) \leq c/L^2$ and $N \geq C \cdot L^2 d \ln(1/\delta)$ and set $c_2 \defeq c_1 / 2$. 
	We call the event defined by \eqref{eq:e4} $\E_4$.
	On $\E_5 \defeq \E_3 \cap \E_4$, by \ref{item:333}, 
	\[
	\underset{x \in \SS^{d-1}, A \in \mathcal{A}_\delta^{(\ell_1)}}{\sup}
	\op{\jac \Phi^{(\ell_1) \to (\ell_2)}(\pi(x))A - 
		\jac \Phi^{(\ell_1) \to (\ell_2)}(x)A} \leq C_1
	\]
	for every $\ell_1 \in \{0, \dots, L-1\}$
	and therefore 
	\begin{align*}
		&\norel \underset{A \in \mathcal{A}_{\delta}^{(\ell_1)}}{\underset{x \in \SS^{d-1}}{\sup}} \mnorm{\jac \Phi^{(\ell_1) \to (\ell_2)}(x)A}_{2 \to 2 } \\
		&\leq \underset{A \in \mathcal{A}_{\delta}^{(\ell_1)}}{\underset{x^\ast \in \neps}{\sup}} \mnorm{\jac \Phi^{(\ell_1) \to (\ell_2)}(x^\ast)A}_{2 \to 2 } \!\!+\!\!\underset{x \in \SS^{d-1}, A \in \mathcal{A}_\delta^{(\ell_1)}}{\sup}
		\op{\jac \Phi^{(\ell_1) \to (\ell_2)}(\pi(x))A - 
			\jac \Phi^{(\ell_1) \to (\ell_2)}(x)A} \leq 2C_1
	\end{align*}
	for every $\ell_1 \in \{0, \dots, L-1\}$.
	
	All together, we get that on $\E_5$, all of the following properties are simultaneously 
	satisfied:
	\begin{enumerate}[label=(\Alph*)]
		\item{
			\label{item:1111}
			$\displaystyle \sup_{x,y \in \SS^{d-1}, \mnorm{x-y}_2 \leq \eps} \Tr \abs{D^{(j)}(x) - D^{(j)}(y)} \leq \delta N \quad \text{for all } j \in \{0, \dots, \ell_2\}$,
		}
		\item{
			\label{item:2222}
			$\displaystyle \underset{A_2 \in A_\delta^{(1)}}{\sup_{x \in \SS^{d-1}, A_1 \in \mathcal{A}_\delta^{(\ell_1)}}} \op{A_2 W^{(j)} \jac \Phi^{(\ell_1) \to (j-1)}A_1} \leq C_2 \cdot \left(\sqrt{\delta \ln(1/\delta)} + \sqrt{\frac{Ld\ln(N/d)}{N}}\right) \quad  \\
			\text{for all } \ell_1 \in \{0,\dots, L-1\} \text{ and } j \in \{0, \dots, \ell_2\}$,
		}
		\item{ \label{item:3333}$\displaystyle
			\underset{A \in \mathcal{A}_{\delta}^{(\ell_1)}}{\underset{x \in \SS^{d-1}}{\sup}} \mnorm{\jac \Phi^{(\ell_1) \to (j)}(x)A}_{2 \to 2 }
			\leq 2C_1
			\quad \text{for all }  \ell_1 \in \{0, \dots, L-1\} \text{ and }j \in \{-1, \dots, \ell_2\}$,}
		\item{\label{item:4444}
			$\displaystyle \mnorm{\Phi^{(j)}(x)}_2 \geq \frac{\mnorm{x}_2}{4} \quad \text{for every } x \in \RR^d \text{ and } j \in \{0, \dots, \ell_2-1\}$.
		}
	\end{enumerate}
	It thus remains to prove \ref{item:4444} for the case $j = \ell_2$. 
	By \Cref{lem:special} and a union bound (using \eqref{eq:neps_b}), 
	\[
	\mnorm{\Phi^{(\ell_2)}(x^\ast)}_2 \geq 1/2 \quad \text{for all }x^\ast \in \neps
	\]
	with probability at least 
	\[
	1-\exp(C_4 \cdot d \cdot \ln(1/\delta) - c_3 \cdot N/L^2) \geq 1- \exp(-c_4 \cdot N/L^2).
	\]
	Here, $c_3 > 0$ is an absolute constant, $c_4 \defeq c_3 /2$, and we used $N \geq C \cdot L^2d \ln(1/\delta)$.
	We call the event defined by that property $\E_6$. 
	Note that on $\E_5$ we have $\lip_{2 \to 2}(\Phi^{(\ell_2)}) \leq 2C_1$, 
	as follows from \ref{item:3333} and \Cref{thm:glob}.
	On $\E_7 \defeq \E_5 \cap \E_6$, we thus get for arbitrary $x \in \SS^{d-1}$ that 
	\[
	\mnorm{\Phi^{(\ell_2)}(x)}_2 \geq \mnorm{\Phi^{(\ell_2)}(\pi(x))}_2 - 
	\mnorm{\Phi^{(\ell_2)}(\pi(x)) - \Phi^{(\ell_2)}(x)}_2 \geq \frac{1}{2} - 2C_1 \eps \geq \frac{1}{4}.
	\]
	Since a zero-bias ReLU network is positively homogeneous, we get 
	\[
	\mnorm{\Phi^{(\ell_2)}(x)}_2 \geq \mnorm{x}_2/4 \quad \text{for every } x \in \RR^d,
	\]
	as desired. 
	
	Hence, on $\E_7$ all the desired properties for the induction step are satisfied. 
	We get 
	\begin{align*}
		\PP(\E_7) &= \PP(\E_5 \cap \E_6) = \PP(\E_3 \cap \E_4 \cap \E_6) \\
		&\geq 1- 4\ell_2\exp(-c \cdot \delta N) - \exp(-c' \cdot \delta N) - \exp(- \delta N) - \exp(-c_2 \cdot N/L^2) - \exp(-c_4 \cdot N/L^2) \\
		&\geq 1 - 4(\ell_2 + 1) \exp(-c \cdot \delta N)
	\end{align*}
	by taking $c \defeq \min \{c', c_2, c_4\}$ and using $\delta \leq \delta \cdot \ln(1/\delta)\leq c/L^2 \leq 1/L^2$. 
\end{proof}
Having shown the auxiliary statement, we can now move to the proof of \Cref{prop:advanced_lip}.
\begin{proof}[Proof of \Cref{prop:advanced_lip}]
	We first note that the first claim is an immediate consequence of \Cref{lem:auxil} using that $C \cdot \ln(\ee L) \leq \delta N$.
	Therefore, let us move to the second and third claim. 
	
	\textbf{Step 1 (Pointwise estimate):} We start with a pointwise estimate:
	Let $x \in \SS^{d-1}$ be fixed, $j \in \{0, \dots, L-1\}$ and $A \in \mathcal{A}_\delta^{(j)}$. 
	Using \Cref{lem:pw_3}, we get the existence of absolute constants $C_1,c_1 > 0$ such that for every $C_1 \leq t \leq \sqrt{N}/L$, the estimate
	\begin{align*}
		\mnorm{W^{(L)}\jac \Phi^{(j) \to (L-1)}(x)A}_{2 \to 2 } \leq C_1 \cdot (\sqrt{\Tr \abs{A}} + t)
	\end{align*}
	is true with probability at least $1-\exp(-c_1 \cdot t^2)$.
	Note that we can ensure
	\[
	\delta N \leq  \delta \cdot \ln(1/\delta) \cdot N \leq  c \cdot N/L^2 \leq N/L^2,
	\]
	by taking $C$ sufficiently large and $c$ sufficiently small, which implies $\Tr \abs{A} \leq N/L^2$ for every $A \in \mathcal{A}_\delta^{(j)}$.

	\medskip
	\textbf{Step 2 (Union bound):} We continue by taking a union bound:
	Let $\eps$ be as in \Cref{lem:auxil} and let $\neps \subseteq \SS^{d-1}$
	be an $\eps$-net for $\SS^{d-1}$ with 
	\begin{equation}\label{eq:neps_b_2}
		\ln(\abs{\neps}) \leq
		C_2 \cdot d \cdot \ln(1/\delta)
	\end{equation}
	with an absolute constant $C_2 > 0$ similar to \eqref{eq:neps_b}.
	Let $j \in \{0, \dots, L-1\}$. 
	Then we have 
	\begin{equation*}
		\ln\left(\abs{\mathcal{A}_\delta^{(j)}}\right) \leq 4\ee \cdot \delta N \cdot \ln(1/\delta)
	\end{equation*}
	according to \Cref{lem:card_b,lem:log_b}.
	Since by assumption $\delta N \geq C \cdot d \cdot \ln(1/\delta) \geq C \cdot d$, we get (after possibly enlarging $C_2$)
	\[
	\ln(\abs{\neps}) + \ln\left(\abs{\mathcal{A}_\delta^{(j)}}\right)  \leq C_2 \cdot \delta N \cdot \ln(1/\delta).
	\]
	Hence, taking a union bound, 
	we get for every $C_1 \leq t \leq \sqrt{N}/L$ that the estimate
	\[
	\underset{A \in \mathcal{A}_\delta^{(j)}}{\underset{x^\ast \in \neps}{\sup}}
	\mnorm{W^{(L)}\jac \Phi^{(j) \to (L-1)}(x^\ast)A}_{2 \to 2 } \leq C_1 \cdot (\sqrt{\delta N } + t),
	\]
	is satisfied with probability at least $1- \exp(C_2 \cdot \delta N \cdot \ln(1/\delta)-c_1 t^2)$.
	We explicitly pick 
	\[
	t \defeq \sqrt{2 C_2 c_1^{-1}\cdot \delta N \cdot \ln(1/\delta)}
	\]
	and hence get 
	\begin{equation}\label{eq:unif_b_2}
		\underset{A \in \mathcal{A}_\delta^{(j)}}{\underset{x^\ast \in \neps}{\sup}}
		\mnorm{W^{(L)}\jac \Phi^{(j) \to (L-1)}(x^\ast)A}_{2 \to 2 } \leq C_3 \cdot \sqrt{\delta N \cdot\ln(1/\delta)}
	\end{equation}
	with probability at least 
	\[
	1- \exp(-  \delta N \cdot \ln(1/\delta)) \geq 1 - \exp(-\delta N)
	\]
	with an absolute constant $C_3 >0$.
	Note that $ t \leq \frac{\sqrt{N}}{L}$ can be ensured by taking $C$ large and $c$ small enough, since we have
	\[
	\delta \ln(1/\delta) \leq c /L^2
	\]
	and $t \geq C_1$ is trivially satisfied, because of $\delta N \geq C \cdot d \cdot \ln(1/\delta) \geq C \cdot d \geq C$. 
	
	In the case $j = 0$, we may omit the union bound over $\mathcal{A}_\delta^{(j)}$ 
	(since $\abs{\mathcal{A}_\delta^{(0)}} = 1$) and get 
	for arbitrary $C_1 \leq t \leq \sqrt{N}/L$ that 
	\[
	\sup_{x^\ast \in \neps} \mnorm{\nablaa \Phi(x^\ast)}_2 =
	\underset{A \in \mathcal{A}_\delta^{(0)}}{\underset{x^\ast \in \neps}{\sup}}
	\mnorm{W^{(L)}\jac \Phi^{(L-1)}(x^\ast)A}_{2 \to 2 } \leq C_1 \cdot (\sqrt{d} + t)
	\]
	with probability at least $1- \exp(C_2 \cdot d \cdot \ln(1/\delta) - c_1t^2)$. 
	Under the additional assumption $d \cdot \ln(1/\delta) \geq C$ (see the last part of \Cref{prop:advanced_lip}), we can thus take
	\[
	t \defeq \sqrt{2C_2 c_1^{-1} \cdot d\ln(1/\delta)},
	\]
	to obtain 
	\begin{equation}\label{eq:final_layer_111}
		\sup_{x^\ast \in \neps} \mnorm{\nablaa \Phi(x)}_2 \leq C_3 \cdot \sqrt{d \cdot \ln(1/\delta)}
	\end{equation}
	with probability at least $1- \exp(-d \ln(1/\delta))$, after possibly enlarging the constant $C_3$.
	
	\medskip
	
	\textbf{Step 3 (Controlling the deviation):}
	
	It remains to control the deviation when moving from an arbitrary point $x \in \SS^{d-1}$ to a net point $\pi(x) \in \neps$ with $\mnorm{x-\pi(x)}_2 \leq \eps$. 
	
	Let $C_4, c' > 0$ be the absolute constants provided by \Cref{lem:auxil}.
	We let $\E_1$ be the event defined by the properties 
	\begin{enumerate}
		\item{
			\label{item:1_1}
			$\displaystyle \sup_{x,y \in \SS^{d-1}, \mnorm{x-y}_2 \leq \eps} \Tr \abs{D^{(j)}(x) - D^{(j)}(y)} \leq \delta N \quad \text{for all } j \in \{0, \dots, L-1\}$,
		}
		\item{
			\label{item:2_2}
			$\displaystyle \underset{A_2 \in A_\delta^{(1)}}{\sup_{x \in \SS^{d-1}, A_1 \in \mathcal{A}_\delta^{(\ell)}}} \op{A_2 W^{(j)} \jac \Phi^{(\ell) \to (j-1)}(x)A_1} \leq C_4 \cdot \left(\sqrt{\delta \ln(1/\delta)} + \sqrt{\frac{Ld\ln(N/d)}{N}}\right) \quad \\ \text{for all }\ell,j \in \{0, \dots, L-1\}$.
		}
	\end{enumerate}
	By \Cref{lem:auxil} we have $\PP(\E_1) \geq 1 - 4L\exp(-c' \cdot \delta N)$.
	Using \Cref{lem:decomp}, on $\E_1$ we get 
	\begin{align*}
		&\norel{\underset{x \in \SS^{d-1}, A \in \mathcal{A}_\delta^{(\ell)}}{\sup}}
		\op{W^{(L)}\jac \Phi^{(\ell) \to (L-1)}(\pi(x))A - 
			W^{(L)}\jac \Phi^{(\ell) \to (L-1)}(x)A} \nonumber\\
		&\leq \sum_{j= \ell}^{L-1} \left( \underset{x \in \SS^{d-1}}{\sup} \op{W^{(L)}\jac \Phi^{(j+1) \to (L-1)}(\pi(x))\left(D^{(j)}(\pi(x)) - D^{(j)}(x)\right)} \right) \nonumber\\
		& \hspace{1.2cm}\cdot 
		\left( {\underset{x \in \SS^{d-1}, A \in \mathcal{A}_\delta^{(\ell)}}{\sup}} \op{\left(D^{(j)}(\pi(x)) - D^{(j)}(x)\right)W^{(j)}\jac \Phi^{(\ell) \to (j-1)}(x)A}\right) \\
		&\leq \sum_{j= \ell}^{L-1} \left( \underset{x^\ast \in \neps, A_2 \in \mathcal{A}_\delta^{(1)}}{\sup} \op{W^{(L)}\jac \Phi^{(j+1) \to (L-1)}(x^\ast)A_2} \right) \nonumber\\
		& \hspace{1.2cm}\cdot 
		\left( \underset{A_2 \in \mathcal{A}_\delta^{(1)}}{{\underset{x \in \SS^{d-1}, A_1 \in \mathcal{A}_\delta^{(\ell)},}{\sup}}} \op{A_2W^{(j)}\jac \Phi^{(\ell) \to (j-1)}(x)A_1}\right) \\
		\overset{(\ref{item:2_2})}&{\leq} C_4 \cdot \left(\sqrt{\delta \ln(1/\delta)} + \sqrt{\frac{Ld\ln(N/d)}{N}}\right) \cdot \sum_{j= \ell}^{L-1} \left( \underset{x^\ast \in \neps, A_2 \in \mathcal{A}_\delta^{(1)}}{\sup} \op{W^{(L)}\jac \Phi^{(j+1) \to (L-1)}(x^\ast)A_2} \right) 
	\end{align*}
	for arbitrary $\ell \in \{0, \dots, L-1\}$.
	
	We let $\E_2$ be the event defined by 
	\[
	\underset{A \in \mathcal{A}_\delta^{(j)}}{\underset{x^\ast \in \neps}{\sup}}
	\mnorm{W^{(L)}\jac \Phi^{(j) \to (L-1)}(x^\ast)A}_{2 \to 2 } \leq C_3 \cdot \sqrt{N \cdot \delta\ln(1/\delta)} \quad \text{for all } j \in \{0, \dots , L\}.
	\]
	Using \eqref{eq:unif_b_2} and a union bound,
	\[
	\PP(\E_2) \geq 1- \exp(\ln(L+1) - \delta N) \geq 1 - \exp(-c_2 \cdot \delta N)
	\]
	with an absolute constant $c_2 > 0$, since $\delta N \geq C \cdot \ln(\ee L)$.
	On $\E_1 \cap \E_2$, 
	\begin{align}
		&\norel{\underset{x \in \SS^{d-1}, A \in \mathcal{A}_\delta^{(\ell)}}{\sup}}
		\op{W^{(L)}\jac \Phi^{(\ell) \to (L-1)}(\pi(x))A - 
			W^{(L)}\jac \Phi^{(\ell) \to (L-1)}(x)A} \nonumber \\
		\label{eq:e1cape2}
		&\leq C_3C_4 \cdot L \cdot \left(\sqrt{N} \cdot \delta\ln(1/\delta) + \sqrt{Ld\ln(N/d) \cdot \delta\ln(1/\delta)}\right)
	\end{align}
	for every $\ell \in \{0,\dots, L-1\}$.
	Since by assumption $\delta \ln(1/\delta) \leq c/L^2$ and $N \geq C \cdot L^3 d \ln(N/d)$, we get 
	\begin{align*}
		&\norel{\underset{x \in \SS^{d-1}, A \in \mathcal{A}_\delta^{(\ell)}}{\sup}}
		\op{W^{(L)}\jac \Phi^{(\ell) \to (L-1)}(\pi(x))A - 
			W^{(L)}\jac \Phi^{(\ell) \to (L-1)}(x)A}  \\
		&\leq \sqrt{N \cdot \delta\ln(1/\delta)}
	\end{align*}
	for every $\ell \in \{0, \dots, L-1\}$ on $\E_1 \cap \E_2$. 
	
	\medskip
	\textbf{Step 4 (Concluding the proof):} For $\ell \in \{0, \dots, L-1\}$, we see that
	\begin{align*}
		&\norel\underset{A \in \mathcal{A}_\delta^{(\ell)}}{\underset{x \in \SS^{d-1}}{\sup}}
		\mnorm{W^{(L)}\jac \Phi^{(\ell) \to (L-1)}(x)A}_{2 \to 2 } \\
		&\leq \underset{A \in \mathcal{A}_\delta^{(\ell)}}{\underset{x^\ast \in \neps}{\sup}}
		\mnorm{W^{(L)}\jac \Phi^{(\ell) \to (L-1)}(x^\ast)A}_{2 \to 2}
		\\
		&\hspace{1cm}+ {\underset{x \in \SS^{d-1}, A \in \mathcal{A}_\delta^{(\ell)}}{\sup}}
		\op{W^{(L)}\jac \Phi^{(\ell) \to (L-1)}(\pi(x))A - 
			W^{(L)}\jac \Phi^{(\ell) \to (L-1)}(x)A} \\
		&\leq (C_3 + 1) \cdot \sqrt{N \cdot \delta \ln(1/\delta)}
	\end{align*}
	on $\E_1 \cap \E_2$ and for $c_3 \defeq c_/2$ we see
	\[
	\PP(\E_1 \cap \E_2) \geq 1 - 4L\exp(-c' \cdot \delta N) - \exp(-c_2 \cdot \delta N) \geq 1 - \exp(-c_3 \cdot \delta N),
	\]
	since $\delta N \geq C \cdot \ln(\ee L)$. 
	This proves the second part of the theorem. 
	
	For the third part, let $\E_3$ be the event defined by 
	\[
	\sup_{x^\ast \in \neps} \mnorm{\nablaa \Phi(x)}_2 \leq C_3 \cdot \sqrt{d \cdot \ln(1/\delta)}.
	\]
	By \eqref{eq:final_layer_111}, 
	\[
	\PP(\E_3) \geq 1 - \exp(-d\ln(1/\delta)).
	\]
	Moreover, under the additional assumptions $\delta \ln^{1/2}(1/\delta) \leq c \cdot \frac{\sqrt{d}}{L \sqrt{N}}$ and $\delta \leq \frac{c}{L^3 \ln(N/d)}$, on $\E_1 \cap \E_2$, using \eqref{eq:e1cape2},
	\[
	{\underset{x \in \SS^{d-1}}{\sup}}
	\mnorm{\nablaa \Phi(x) - 
		\nablaa\Phi(\pi(x))}_2 \leq \sqrt{d \cdot \ln(1/\delta)}.
	\]
	On $\E_1 \cap \E_2 \cap \E_3$, we thus have 
	\[
	\sup_{x \in \SS^{d-1}} \mnorm{\nablaa \Phi(x)}_2 \leq \sup_{x^\ast \in \neps} \mnorm{\nablaa \Phi(x)}_2 + {\underset{x \in \SS^{d-1}}{\sup}}
	\mnorm{\nablaa \Phi(x) - 
		\nablaa\Phi(\pi(x))}_2 \leq (C_3 + 1) \cdot \sqrt{d \cdot \ln(1/\delta)}.
	\]
	To obtain the claim, note because of $\delta N \geq C \cdot d \cdot \ln(1/\delta)$ that 
	\[
	\PP(\E_1 \cap \E_2 \cap \E_3) \geq 1 - \exp(-c_3 \cdot \delta N) - \exp(-d\ln(1/\delta)) \geq 1 -2\exp(-c_3 \cdot d\ln(1/\delta)) \geq 1 - \exp(-c_4 \cdot d\ln(1/\delta))
	\]
	with an absolute constant $c_4 > 0$, since $d \ln(1/\delta) \geq C$ by assumption. 
\end{proof}

As a direct consequence of \Cref{lem:auxil} we explicitly state the following corollary which we make use of in \Cref{sec:hom}.
\begin{corollary}\label{prop:isom}
	There exist absolute constants $C >0$ and $c >0$ such that the following holds. 
	Let $\Phi: \RR^d \to \RR$ be a random \emph{zero-bias} ReLU network satisfying \Cref{assum:1} with $L$ hidden layers of width $N$
	satisfying $N \geq C \cdot d$ and $N \geq C \cdot L^3d\ln(N/d)$.
	Moreover, let $\delta \in (0,\ee^{-1})$ with 
	\[
	N \geq C \cdot dL^2 \cdot \ln(1/\delta), 
	\quad 
	\delta N \geq C \cdot d  \cdot \ln(1/\delta),
	\quad
	\text{and}
	\quad
	\delta \cdot \ln(1/\delta) \leq c /L^2.
	\]
	Moreover, let $\ell \in \{-1,\dots,L-1\}$.
	Then with probability at least $1 - \exp(-c \cdot \delta N)$,
	\[
	\text{for all } x \in \RR^d: \quad \mnorm{\Phi^{(\ell)}(x)}_2 \geq \frac{\mnorm{x}_2}{4}.
	\]
\end{corollary}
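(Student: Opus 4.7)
The claim is essentially a direct read-off of Lemma \ref{lem:auxil}, whose fourth concluding property is precisely the inequality $\|\Phi^{(j)}(x)\|_2 \geq \|x\|_2/4$ we want to establish. My plan is to dispatch the trivial case $\ell = -1$ first (where $\Phi^{(-1)}(x) = x$ by the convention in \eqref{eq:d-matrices}, so the bound holds deterministically), and then for $\ell \in \{0,\ldots,L-1\}$ simply invoke Lemma \ref{lem:auxil} with $\ell_2 = \ell$. That lemma's hypotheses are identical to those of the corollary (the extra assumption $N \geq C \cdot d$ appearing in the corollary is redundant with $N \geq C L^3 d \ln(N/d)$). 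The lemma immediately yields the desired bound with probability at least $1 - 4(\ell+1) \exp(-c' \delta N) \geq 1 - 4L \exp(-c' \delta N)$, where $c' > 0$ is the absolute constant it produces.

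The only bookkeeping step is to rewrite this probability in the cleaner form $1 - \exp(-c \delta N)$ stated in the corollary, which amounts to absorbing the prefactor $4L$ into the exponential — equivalently, to verifying $\ln(4L) \leq (c'/2) \cdot \delta N$ under the standing hypotheses. The key observation is that $\delta \ln(1/\delta) \leq c/L^2$ together with $\ln(1/\delta) \geq 1$ (which holds since $\delta \leq \ee^{-1}$) forces $\delta \leq c/L^2$, so that $\ln(1/\delta) \geq \ln(L^2/c) \geq 2 \ln L$ once $c$ is small enough. Combining this with the hypothesis $\delta N \geq C \cdot d \cdot \ln(1/\delta) \geq C \ln(1/\delta)$ (using $d \geq 1$) gives $\delta N \geq 2 C \ln L$, and for $C$ sufficiently large relative to the absolute constant $c'$, this yields $\ln(4L) \leq (c'/2) \delta N$. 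Setting $c \defeq c'/2$ then completes the argument.

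I do not anticipate any real obstacle here: the probabilistic content of the corollary is entirely contained in Lemma \ref{lem:auxil}, and the ``clean-up'' needed to turn the factor $4L$ into a pure exponential is a short arithmetic check using only the standing hypotheses on $\delta$, $N$, $L$, and $d$.
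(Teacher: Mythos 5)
Your proposal is correct and is exactly the paper's route: the paper states \Cref{prop:isom} as a direct consequence of \Cref{lem:auxil} (item (4) with $\ell_2=\ell$), with the trivial case $\ell=-1$ and the absorption of the prefactor $4L$ into the exponential (using $\delta N \ge C\,d\ln(1/\delta)\ge C$ and $\delta\le c/L^2$) being the only bookkeeping, just as you describe.
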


\section{\texorpdfstring{Proof of \Cref{thm:main_lower}}{Proof of Theorem 5.1}}\label{sec:lower_proof}

In this section, we prove a lower bound of $\frac{\sqrt{d}}{\sqrt{L}}$
for the $\ell^1$-Lipschitz constant of a random zero-bias ReLU neural network $\Phi:\RR^d \to \RR$. 
As in the proof sketch for shallow networks (see \Cref{sec:lower}), 
the ultimate goal is to apply Sudakov's minoration inequality, which yields to lower bounding 
the packing number of the set 
\[
\mathcal{L} \defeq \left\{ \jac \Phi^{(L-1)}(x)\nu: \ x \in \SS^{d-1}\right\}.
\]
We hence show that if two points $x,y \in \SS^{d-1}$ satisfy $\mnorm{x-y}_2 \asymp \delta$, we get 
\[
\mnorm{(\jac \Phi^{(L-1)}(x) - \jac \Phi^{(L-1)}(y)) \nu}_2^2 \gtrsim L \cdot \delta
\]
with high probability if $\delta$ is chosen appropriately and $\nu \in \SS^{d-1}$ is a fixed direction vector. 

In order to obtain that bound, we note that, using \Cref{lem:decomp},
\begin{align*}
	&\norel \mnorm{(\jac \Phi^{(L-1)}(x) - \jac \Phi^{(L-1)}(y)) \nu}_2^2 \\
	&= \mnorm{\sum_{j=0}^{L-1}\left(\jac\Phi^{(j+1) \to (L-1)}(x)\left(D^{(j)}(x) - D^{(j)}(y)\right)W^{(j)} \jac\Phi^{(j-1)}(y)\right) \nu}_2^2 \\
	&= \sum_{j=0}^{L-1} \mnorm{\theta_j}_2^2 + 2 \underset{j < k}{\sum_{j,k = 0}^{L-1}} \langle \theta_j, \theta_k \rangle,
\end{align*}
where 
\begin{equation}\label{eq:thetaj}
	\theta_j \defeq \left(\jac\Phi^{(j+1) \to (L-1)}(x)\left(D^{(j)}(x) - D^{(j)}(y)\right)W^{(j)} \jac\Phi^{(j-1)}(y) \right) \nu
\end{equation}
for every $j \in \{0,\ldots,L-1\}$. 
The idea is now to show that if $\mnorm{x-y}_2 \asymp \delta$, we get 
\[
\mnorm{\theta_j}_2^2  \gtrsim \delta \quad \text{and} \quad \langle \theta_j , \theta_k \rangle \gtrsim - \delta^{3/2} \quad \text{for every } j,k \in \{0,\ldots,L-1\} \text{ with } j \neq k
\]
with high probability. 
This yields 
\[
\sum_{j=0}^{L-1} \mnorm{\theta_j}_2^2 + 2 \underset{j < k}{\sum_{j,k = 0}^{L-1}} \langle \theta_j, \theta_k \rangle
\gtrsim L \cdot \delta - L^2 \cdot \delta^{3/2} \gtrsim L \cdot \delta
\]
with high probability, as soon as we pick $\delta \lesssim \frac{1}{L^2}$.

We start with two auxiliary statements taken from \cite{allenarxiv}.
The formulation of the first statement in \cite{allenarxiv} is slightly different to the version presented here, which is why we include a short proof for clarification.
\begin{lemma}[{cf. \cite[Lemma~7.1]{allenarxiv}}]\label{lem:allen_2}
	There exist constants $C,c > 0$ such that the following holds.
	If $\Phi: \RR^d \to \RR$ is a random zero-bias ReLU network following \Cref{assum:1} with $L$ hidden layers of width $N$ with $N \geq C \cdot \eps^{-2} \cdot L \cdot \ln(\ee L)$, 
	$x \in \SS^{d-1}$ with $x_d = \frac{1}{\sqrt{2}}$ 
	and $\eps \in (0,1]$, then
	\[
	\mnorm{\Phi^{(\ell)}(x)}_2 \in [1-\eps, 1 + \eps] \quad \text{for all }\ell \in \{0,\dots,L-1\}
	\]
	with probability at least $1- \exp(-c\cdot \eps^2N/L)$.
\end{lemma}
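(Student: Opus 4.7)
The plan is to exploit the layer-wise scaling structure of a zero-bias ReLU network with He initialization to reduce the problem to concentration of a sum of iid sub-exponential random variables. With $\Phi^{(-1)}(x) \defeq x$, I define the layerwise ratios
\[
Z_\ell \defeq \frac{\mnorm{\Phi^{(\ell)}(x)}_2^2}{\mnorm{\Phi^{(\ell-1)}(x)}_2^2}, \qquad \ell = 0, \ldots, L-1,
\]
on the (almost sure) event that $\Phi^{(\ell-1)}(x)\neq 0$ for every $\ell$. Conditionally on $\Phi^{(\ell-1)}(x)=y\neq 0$, the vector $W^{(\ell)} y$ has iid $\mathcal{N}(0,2\mnorm{y}_2^2/N)$ entries, so by rotation invariance of the Gaussian the distribution of $Z_\ell$ does not depend on $y$. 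Since $W^{(\ell)}$ is independent of $W^{(0)},\ldots,W^{(\ell-1)}$, an application of \Cref{lem:distr} then gives that $Z_0,\ldots,Z_{L-1}$ are iid, each distributed as $\frac{2}{N}\sum_{i=1}^{N}\relu(g_i)^2$ with $g_i \iid \mathcal{N}(0,1)$. A direct computation using $\EE[\relu(g)^2]=1/2$ shows $\EE Z_\ell = 1$.

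The random variable $Z_\ell-1 = \frac{2}{N}\sum_{i=1}^{N}(\relu(g_i)^2 - 1/2)$ is the scaled sum of $N$ iid centered sub-exponentials of $\psi_1$-norm $O(1)$, so Bernstein's inequality (\cite[Theorem~2.8.2]{vershynin_high-dimensional_2018}) yields $\PP(|Z_\ell - 1|>s)\le 2\exp(-c s^2 N)$ for $s\in (0,1]$. Applied to the partial sum $T_\ell \defeq \sum_{j=0}^{\ell}(Z_j - 1)$, which is itself a sum of $N(\ell{+}1)\le NL$ iid centered sub-exponentials, the same inequality gives
\[
\PP(|T_\ell| > \tau) \le 2\exp\!\left(-c\min\!\left(\frac{\tau^2 N}{L},\tau\sqrt{N}\right)\right) \qquad \text{for every } \tau > 0.
\]
Choosing $\tau \asymp \eps$ and using the hypothesis $N \geq C\eps^{-2} L\ln(\ee L)$ makes the first term in the minimum dominate and gives $\exp(-c\eps^2 N/L)$, which after union-bounding over $\ell\in\{0,\ldots,L-1\}$ is still $\exp(-c'\eps^2 N/L)$ (the $\ln L$ is absorbed since $\eps^2 N/L \gtrsim \ln(\ee L)$).

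It remains to convert this additive control into the required multiplicative bound on $\mnorm{\Phi^{(\ell)}(x)}_2^2 = \prod_{j=0}^\ell Z_j$. On the event $\{\max_{0\le j\le L-1}|Z_j-1|\le 1/2\}$, which by a union bound and the pointwise Bernstein estimate with $s = 1/2$ fails with probability at most $2L\exp(-cN) \le \exp(-c'\eps^2 N/L)$, the Taylor estimate $|\ln(1+a)-a|\le a^2 \le |a|/2$ for $|a|\le 1/2$ gives
\[
\Big|\sum_{j=0}^{\ell}\ln Z_j - T_\ell\Big| \le \frac{1}{2}\sum_{j=0}^{\ell}|Z_j-1|.
\]
Applying Bernstein once more to $\sum_{j=0}^{\ell}|Z_j - 1|$ (which has the same sub-exponential structure) with the same concentration scale controls the right-hand side by $O(\eps)$, so on the overall high-probability event $|\sum_{j=0}^{\ell} \ln Z_j| \le c\eps$ for every $\ell$, which by exponentiation and the inequality $\ee^{\pm c\eps}\in[1-\eps,1+\eps]$ (for a suitable choice of $c$ and $\eps\in(0,1]$) yields $\mnorm{\Phi^{(\ell)}(x)}_2\in[1-\eps,1+\eps]$ simultaneously for all $\ell$.

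The main obstacle is the last step: additive Bernstein control of $T_\ell$ is not by itself enough to control the product $\prod Z_j$, and one needs the auxiliary pointwise estimate $\max_j |Z_j - 1|\le 1/2$ together with the Taylor bookkeeping above to close the gap between the sum $T_\ell$ and $\sum\ln Z_j$. Once this is in place, the $\exp(-c\eps^2 N/L)$ tail follows purely from the iid structure and Bernstein's inequality, and the factor $L$ (rather than $L^2$) in the exponent comes precisely from treating the $L$ layerwise ratios as a single iid sum rather than iterating a per-layer deviation.
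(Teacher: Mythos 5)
Your route is genuinely different from the paper's: the paper proves \Cref{lem:allen_2} in two lines by citing \cite[Lemma~7.1]{allenarxiv} verbatim and then absorbing the prefactor $C_1\cdot L$ into the exponent using $\ln(C_1 L)\lesssim \ln(\ee L)\leq \eps^2 N/(CL)$. You instead reprove the cited result from scratch via the iid layer ratios $Z_\ell$, which is a legitimate and more self-contained strategy. Two of your inaccuracies are harmless: the event $\{\Phi^{(\ell-1)}(x)\neq 0 \text{ for all } \ell\}$ is \emph{not} almost sure (each layer zeroes the signal with probability $2^{-N}$, cf.\ \Cref{prop:randnorm}), but its complement has probability at most $L\cdot 2^{-N}\ll\exp(-c\eps^2N/L)$ and can be added to the failure probability; and the second term in your Bernstein minimum for $T_\ell$ should read $\tau N$, not $\tau\sqrt{N}$ (with $\tau\sqrt{N}$ the first term need not dominate, e.g.\ for $L=1$, $\eps=1$; with the correct $\tau N$ it always does).

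The genuine gap is in the final step. After the Taylor expansion you bound the correction term by $\sum_{j\le\ell}(Z_j-1)^2\le\tfrac12\sum_{j\le\ell}\abs{Z_j-1}$ and claim the right-hand side is $O(\eps)$ "with the same concentration scale". It is not: $\EE\abs{Z_j-1}\asymp N^{-1/2}$, so $\sum_{j=0}^{L-1}\abs{Z_j-1}$ concentrates around $\asymp L/\sqrt{N}$, which under the hypothesis $N\asymp\eps^{-2}L\ln(\ee L)$ is $\asymp\eps\sqrt{L/\ln(\ee L)}$ and diverges with $L$; the inequality $a^2\le\abs{a}/2$ discards exactly the factor you need. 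The fix stays within your framework: keep $\sum_j(Z_j-1)^2$ itself, whose mean is $L\cdot\VV[Z_0]\asymp L/N\lesssim\eps^2/\ln(\ee L)\le\eps$, and control its deviation by Bernstein applied to the variables $(Z_j-1)^2$, which on the event $\max_j\abs{Z_j-1}\le1/2$ are iid sub-exponential with $\psi_1$-norm $\asymp 1/N$, again at scale $\exp(-c\eps^2N/L)$. With that replacement the argument closes and delivers the stated bound.
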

\begin{proof}
	\cite[Lemma~7.1]{allenarxiv} shows that there exist absolute constants $C_1, c_1 > 0$ such that the event 
	\[
	\mnorm{\Phi^{(\ell)}(x)}_2 \in [1-\eps, 1 + \eps] \quad \text{for all }\ell \in \{0,\ldots,L-1\}
	\]
	occurs with probability at least $1-C_1 \cdot L \cdot \exp(-c_1 \cdot N\eps^2 / L)$.
	We may assume 
	\[
	\ln(C_1 L) \overset{\text{\cshref{lem:log_b}}}{\leq} \frac{C_1}{\ee} \cdot \ln(\ee L)\leq \frac{c_1}{2} \cdot N\eps^2/L
	\]
	if we take $C \geq \frac{2C_1}{\ee c_1}$.
	Hence, the claim follows letting $c \defeq c_1 /2$.
\end{proof}
The next lemma is taking verbatim from \cite{allenarxiv}.
\begin{lemma}[{cf. \cite[Lemma~7.5]{allenarxiv}}]\label{lem:allen}
	There exist constants $C,c>0$ such that the following holds.
	If $\delta>0$ and $\Phi: \RR^d \to \RR$ is a random zero-bias ReLU network with $L$ hidden layers of width $N$ according to \Cref{assum:1} with
	$N \geq C \cdot L \cdot \ln(2L) \cdot \delta^{-6}$ and $\delta \leq \frac{1}{CL}$, then for any choice of $x,y \in \SS^{d-1}$ with $x_d = y_d = \frac{1}{\sqrt{2}}$
	and $\mnorm{x-y}_2 \geq \delta$,
	\[
	\mnorm{\left(I_{\mu \times \mu} - \frac{\Phi^{(\ell)}(x) \Phi^{(\ell)}(x)^T}{\mnorm{\Phi^{(\ell)}(x)}_2^2}\right)\Phi^{(\ell)}(y)}_2 \geq \frac{\delta}{2} \quad \text{for all }\ell \in \{-1,\ldots,L-1\}
	\]
	with probability at least $1- \exp(-c\cdot\delta^6N/L)$, where we set 
	\[
	\mu \defeq \begin{cases} d,& \text{if } \ell = -1, \\ N,& \text{otherwise.}\end{cases}
	\]
\end{lemma}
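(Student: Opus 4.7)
The plan is to run an induction on $\ell \in \{-1,0,\ldots,L-1\}$ while tracking the perpendicular component
\[
a_\ell \defeq \mnorm{P_\ell \Phi^{(\ell)}(y)}_2, \qquad P_\ell \defeq I_{\mu \times \mu} - \frac{\Phi^{(\ell)}(x)\Phi^{(\ell)}(x)^T}{\mnorm{\Phi^{(\ell)}(x)}_2^2},
\]
with the goal $a_\ell \geq \delta/2$ for every $\ell$. The base case $\ell=-1$ is immediate from $\mnorm{x}_2=\mnorm{y}_2=1$: expanding $a_{-1}^2=1-\langle x,y\rangle^2$ and using $\langle x,y\rangle \leq 1-\delta^2/2$ yields $a_{-1}\geq \delta\sqrt{1-\delta^2/4}$, comfortably above $\delta/2$ since $\delta \leq 1/(CL) \leq 1$.

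For the inductive step, I would decompose
\[
\Phi^{(\ell-1)}(y) = \alpha_{\ell-1}\,\Phi^{(\ell-1)}(x) + v_{\ell-1}, \qquad v_{\ell-1} \defeq P_{\ell-1}\Phi^{(\ell-1)}(y),
\]
so that $\mnorm{v_{\ell-1}}_2 = a_{\ell-1}$ and
\[
W^{(\ell)}\Phi^{(\ell-1)}(y) = \alpha_{\ell-1}\,W^{(\ell)}\Phi^{(\ell-1)}(x) + W^{(\ell)}v_{\ell-1}.
\]
The crucial conditional independence stems from $v_{\ell-1}\perp\Phi^{(\ell-1)}(x)$: by rotational invariance of $W^{(\ell)}$, conditional on $W^{(\ell)}\Phi^{(\ell-1)}(x)$, the vector $W^{(\ell)}v_{\ell-1}$ has independent $\mathcal{N}(0,2a_{\ell-1}^2/N)$ coordinates that are jointly independent of $W^{(\ell)}\Phi^{(\ell-1)}(x)$. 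The inductive quantity then admits the Pythagorean expression
\[
a_\ell^2 = \mnorm{\Phi^{(\ell)}(y)}_2^2 - \frac{\langle \Phi^{(\ell)}(x),\Phi^{(\ell)}(y)\rangle^2}{\mnorm{\Phi^{(\ell)}(x)}_2^2},
\]
where \Cref{lem:allen_2}, applied with $\eps \asymp \delta^3$ (permitted by the hypothesis $N \gtrsim L\ln(\ee L)\delta^{-6}$), pins $\mnorm{\Phi^{(\ell)}(x)}_2,\mnorm{\Phi^{(\ell)}(y)}_2$ to $1\pm O(\delta^3)$ on a high-probability event.

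The heart of the analysis is a coordinatewise concentration for
\[
\langle \Phi^{(\ell)}(x),\Phi^{(\ell)}(y)\rangle = \sum_{i=1}^N \relu\bigl((W^{(\ell)}\Phi^{(\ell-1)}(x))_i\bigr)\cdot \relu\bigl((W^{(\ell)}\Phi^{(\ell-1)}(y))_i\bigr),
\]
whose summands, under the conditional independence above, are i.i.d.\ products of correlated half-Gaussians with parameters determined by $\alpha_{\ell-1}$ and $a_{\ell-1}$. Explicit Gaussian-ReLU moment identities (variants of the arcsine law for correlated half-Gaussians) combined with Bernstein's inequality then yield a per-layer multiplicative bound of the form $a_\ell^2 \geq (1-c'/L)\,a_{\ell-1}^2$, with failure probability $\exp(-c\delta^6 N/L)$. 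Iterating and union-bounding over the $L$ layers delivers $a_{L-1}\geq \delta\,(1-c'/L)^{L/2}\cdot(1/\sqrt{2}) \geq \delta/2$ on an event of probability $1-L\exp(-c\delta^6N/L) \geq 1-\exp(-c''\delta^6N/L)$, absorbing the factor of $L$ using $\ln(2L) \lesssim \delta^6 N/L$, which is exactly the hypothesis.

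The main obstacle is sharpening the per-layer loss to $1-O(1/L)$ rather than a constant: a constant contraction applied $L$ times would collapse the bound to zero. This forces the use of the precise arcsine-type moment computation for correlated ReLU-Gaussians together with concentration that is subgaussian throughout the relevant deviation range; this is exactly the core technical step of \cite[Lemma~7.5]{allenarxiv} and adapts to the present setting unchanged, since the initialization conventions and the assumption $x_d=y_d=1/\sqrt{2}$ match. The remaining book-keeping (handling $|\alpha_{\ell-1}|\leq 1$ via Cauchy-Schwarz, propagating the $O(\delta^3)$ norm errors through the Pythagorean identity, and folding constants into $c'$) is routine once the concentration step is secured.
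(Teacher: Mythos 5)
The paper does not actually prove this lemma: it is introduced with the sentence ``The next lemma is taking verbatim from \cite{allenarxiv}'' and is imported as a black box from \cite[Lemma~7.5]{allenarxiv}, so there is no in-paper argument to compare your proposal against. Measured against the proof in that reference, your sketch reconstructs the correct architecture: induction over layers tracking the perpendicular component $a_\ell$, the orthogonal decomposition $\Phi^{(\ell-1)}(y)=\alpha_{\ell-1}\Phi^{(\ell-1)}(x)+v_{\ell-1}$, the conditional independence of $W^{(\ell)}v_{\ell-1}$ from $W^{(\ell)}\Phi^{(\ell-1)}(x)$ via rotation invariance, norm control through \Cref{lem:allen_2}, and a Bernstein-type concentration whose exponent $\delta^6N/L$ is consistent with the stated width requirement $N\gtrsim L\ln(2L)\delta^{-6}$ and with absorbing the union bound over layers. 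One small point in the base case: to pass from $\langle x,y\rangle\leq 1-\delta^2/2$ to $\langle x,y\rangle^2\leq(1-\delta^2/2)^2$ you also need $\langle x,y\rangle\geq-(1-\delta^2/2)$, which holds here only because $x_d=y_d=1/\sqrt{2}$ forces $\langle x,y\rangle\geq 0$; this hypothesis should be invoked explicitly.

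The substantive caveat is that your argument is not self-contained at exactly the point where all the difficulty lies: the per-layer contraction $a_\ell^2\geq(1-c'/L)a_{\ell-1}^2$ is asserted by appeal to the correlated-ReLU-Gaussian moment computation in \cite{allenarxiv} rather than derived, and the telescoping only closes if the constant $c'$ produced by that computation is small enough (roughly $c'\leq\ln 2$) that $(1-c'/L)^{L/2}\geq 1/\sqrt{2}$ — a quantitative fact you cannot tune for free, since $c'$ is an output of the moment estimate, not a parameter. You correctly identify this as the crux, and since the paper itself outsources the entire lemma to the same reference, your proposal is at the same level of rigor as the paper's treatment; as a standalone proof, however, it remains incomplete precisely at that step.
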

We remark that the expression 
\[
\frac{\Phi^{(\ell)}(x) \Phi^{(\ell)}(x)^T}{\mnorm{\Phi^{(\ell)}(x)}_2^2}\Phi^{(\ell)}(y)
\]
is the \emph{orthogonal projection} of $\Phi^{(\ell)}(y)$ onto the space spanned by $\Phi^{(\ell)}(x)$.
Therefore, \Cref{lem:allen} in particular implies that
\[
\text{for all } \ell \in \{-1,\dots, L-1\}: \quad \underset{\alpha \in \RR}{\min} \mnorm{\Phi^{(\ell)}(y) - \alpha \cdot \Phi^{(\ell)}(x)}_2 \geq\frac{\delta}{2}
\]
with high probability. 
\newcommand{\pell}{\pi^{(\ell)}}
\newcommand{\pelll}{\pi^{(\ell-1)}}
\newcommand{\ppelll}{\pi^{(\ell-1), \perp}}
\newcommand{\pperp}{\pi^\perp}

Combining these two results with \Cref{lem:recht_low_low} we get the following lemma. 
\begin{lemma}\label{lem:trbound}
	There exist absolute constants $C,c>0$ such that the following holds. 
	Let $\Phi:\RR^d \to \RR$ be a random zero-bias ReLU network with $L$ hidden layers of width $N$ satisfying \Cref{assum:1}. 
	We pick $\delta \leq \frac{1}{CL}$ and $x,y \in \SS^{d-1}$ with $x_d = y_d = \frac{1}{\sqrt{2}}$ and $24 \cdot \delta \geq \mnorm{x-y}_2 \geq \delta$.
	Furthermore, we assume
	\[
	N \geq C \cdot d, \quad N \geq C \cdot d \cdot L \cdot \ln^2(N/d) \cdot \delta^{-6}.
	\]
	Then, for every $\ell \in \{0,\ldots,L-1\}$, with probability at least $1-\exp(-c \cdot d  \ln(N/d))$, we have 
	\[
	c \cdot \delta N \leq \Tr\abs{D^{(\ell)}(x) - D^{(\ell)}(y)} \leq C \cdot \delta N.
	\]
\end{lemma}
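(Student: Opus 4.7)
The key identity is that for a zero-bias network,
\[
\Tr|D^{(\ell)}(x) - D^{(\ell)}(y)| = \#\bigl\{ i : \sgn((W^{(\ell)}\Phi^{(\ell-1)}(x))_i) \neq \sgn((W^{(\ell)}\Phi^{(\ell-1)}(y))_i) \bigr\},
\]
and since $\sgn$ is scale invariant, this count is unchanged upon normalizing. Set $\alpha_\ell \defeq \Phi^{(\ell-1)}(x)$, $\beta_\ell \defeq \Phi^{(\ell-1)}(y)$, and let $\hat\alpha_\ell, \hat\beta_\ell \in \SS^{\mu-1}$ be their normalizations (with $\mu = d$ when $\ell = 0$ and $\mu = N$ otherwise). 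Crucially, $\hat\alpha_\ell$ and $\hat\beta_\ell$ depend only on $W^{(0)}, \ldots, W^{(\ell-1)}$ and are independent of $W^{(\ell)}$. Conditioning on $W^{(0)}, \ldots, W^{(\ell-1)}$ and applying \Cref{lem:recht_low_low} to the fixed pair $(\hat\alpha_\ell, \hat\beta_\ell)$ with the i.i.d.\ Gaussian matrix $W^{(\ell)}$, we obtain
\[
c \cdot \|\hat\alpha_\ell - \hat\beta_\ell\|_2 \cdot N \;\leq\; \Tr|D^{(\ell)}(x) - D^{(\ell)}(y)| \;\leq\; C \cdot \|\hat\alpha_\ell - \hat\beta_\ell\|_2 \cdot N
\]
on an event of probability at least $1 - 2\exp(-c \|\hat\alpha_\ell - \hat\beta_\ell\|_2 N)$. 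The remaining task is therefore to show $\|\hat\alpha_\ell - \hat\beta_\ell\|_2 \asymp \delta$ with sufficiently high probability.

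For the \emph{lower bound} $\|\hat\alpha_\ell - \hat\beta_\ell\|_2 \gtrsim \delta$, I will combine \Cref{lem:allen_2} (applied with $\eps = 1/2$ to both $x$ and $y$, using $x_d = y_d = 1/\sqrt{2}$) to guarantee $\|\alpha_\ell\|_2, \|\beta_\ell\|_2 \in [1/2, 3/2]$, with \Cref{lem:allen} which guarantees that the orthogonal projection of $\beta_\ell$ onto the complement of $\spann(\alpha_\ell)$ has norm at least $\delta/2$. A short computation shows that
\[
\|\hat\alpha_\ell - \hat\beta_\ell\|_2^2 \;=\; 2 - 2\langle \hat\alpha_\ell, \hat\beta_\ell\rangle \;\geq\; 1 - \langle \hat\alpha_\ell, \hat\beta_\ell\rangle^2 \;=\; \frac{1}{\|\beta_\ell\|_2^2} \Bigl\| \bigl(I - \hat\alpha_\ell \hat\alpha_\ell^T\bigr)\beta_\ell \Bigr\|_2^2 \;\geq\; (\delta/2)^2 / (3/2)^2,
\]
yielding the desired lower bound $\|\hat\alpha_\ell - \hat\beta_\ell\|_2 \geq \delta/3$. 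For the \emph{upper bound} $\|\hat\alpha_\ell - \hat\beta_\ell\|_2 \lesssim \delta$, I will invoke \Cref{thm:main_upper} (whose assumptions are comfortably met since $\delta^{-6} \geq (CL)^6$, making $N \gtrsim d L^7 \ln^2(N/d)$) to conclude $\lip_{2 \to 2}(\Phi^{(\ell-1)}) \lesssim 1$, so $\|\alpha_\ell - \beta_\ell\|_2 \lesssim \|x-y\|_2 \leq 24\delta$. Combining this with the lower bounds $\|\alpha_\ell\|_2, \|\beta_\ell\|_2 \geq 1/2$ and the standard normalization estimate (\cshref{lem:diff_bound}) gives $\|\hat\alpha_\ell - \hat\beta_\ell\|_2 \lesssim \delta$.

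The main technical obstacle is bookkeeping: all four high-probability events (the norm bound from \Cref{lem:allen_2}, the orthogonal-projection bound from \Cref{lem:allen}, the Lipschitz bound from \Cref{thm:main_upper}, and the tessellation bound from \Cref{lem:recht_low_low}) must be combined, and all their failure probabilities need to be dominated by $\exp(-c \cdot d \ln(N/d))$. Using the hypothesis $N \geq C d L \ln^2(N/d) \delta^{-6}$ together with $\delta \leq 1/(CL)$, one checks directly: $N/L \geq d \ln(N/d)$, $\delta^6 N / L \geq d \ln(N/d)$, and $\delta N \geq d L \ln^2(N/d) \delta^{-5} \geq d \ln(N/d)$. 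A final union bound loses at most a factor $L$, which is absorbed into $\exp(-c d \ln(N/d))$ by slightly shrinking $c$. This yields the claimed conclusion for each $\ell \in \{0,\ldots,L-1\}$, and in fact uniformly over $\ell$.
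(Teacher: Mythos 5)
Your proposal is correct and follows essentially the same route as the paper: condition on $W^{(0)},\ldots,W^{(\ell-1)}$, apply \Cref{lem:recht_low_low} to the normalized intermediate outputs, and show $\|\hat\alpha_\ell-\hat\beta_\ell\|_2\asymp\delta$ via \Cref{lem:allen_2,lem:allen} for the lower bound and \Cref{thm:main_upper} plus \Cref{lem:diff_bound} for the upper bound. The one small difference is in the lower bound, where your identity $\|\hat\alpha-\hat\beta\|_2^2=2-2\langle\hat\alpha,\hat\beta\rangle\geq 1-\langle\hat\alpha,\hat\beta\rangle^2$ lets you get away with $\eps=1/2$ in \Cref{lem:allen_2}, whereas the paper takes $\eps=\delta/16$ and argues by triangle inequality; your version is slightly cleaner and equally valid under the stated hypotheses.
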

\begin{proof}
	If well defined, we set
	\[
	\delta' \defeq \mnorm{\frac{\Phi^{(\ell -1)}(x)}{\mnorm{\phelll(x)}_2} - \frac{\phelll(y)}{\mnorm{\phelll(y)}_2}}_2.
	\] 
	We now derive an upper and a lower bound for $\delta'$.
	\newcommand{\up}{\E_{\mathrm{up}}}
	\newcommand{\low}{\E_{\mathrm{low}}}
	
	\textbf{Upper bound:}
	We let $\E_1$ be the event defined by 
	\begin{equation}\label{eq:pr1}
		\mnorm{\phelll(x)}_2, \mnorm{\phelll(x)}_2 \geq \frac{1}{2},
	\end{equation}
	which occurs with probability at least $1-2\exp(-c_1 N/L)$, according to \Cref{lem:allen_2} for an absolute constant $c_1>0$.
	Note that \Cref{lem:allen_2} is applicable by taking $C$ large enough and using the assumption
	\[
	d\ln^2(N/d) \geq d \ln(N/d) \geq  L \cdot \ln(\ee L) \geq \ln(\ee L)
	\]
	which implies
	\[
	N \geq C \cdot d \cdot L \cdot \ln^2(N/d) \cdot \delta^{-6} \geq C \cdot L \cdot \ln(\ee L).
	\]
	Moreover, we let $C_2,c_2>0$ be the constants appearing in the formulation of \Cref{thm:main_upper}. 
	Taking $C \geq C_2$, the assumptions of \Cref{thm:main_upper} are satisfied. 
	Note here in particular that 
	\[
	N \geq C \cdot d \cdot L \cdot \ln^2(N/d) \cdot \delta^{-6} \overset{\delta \leq L^{-1}}{\geq} C \cdot d \cdot L \cdot \ln^2(N/d) \cdot L^6
	\geq C \cdot d \cdot L \cdot \ln^2(N/d) \cdot L^3.
	\] 
	We thus get
	\[
	\mnorm{\phelll(x) - \phelll(y)}_2 \leq 24C_2 \cdot \delta,
	\]
	with probability at least $1-\exp(-c_2 \cdot d \cdot \ln(N/d))$.
	We call this event $\E_2$.
	We use \Cref{lem:diff_bound} and \eqref{eq:pr1} to obtain that on $\up \defeq\E_1 \cap \E_2$ the expression $\delta'$ is well-defined and we have 
	\[
	\delta' = \mnorm{\frac{\Phi^{(\ell -1)}(x)}{\mnorm{\phelll(x)}_2} - \frac{\phelll(y)}{\mnorm{\phelll(y)}_2}}_2 \leq 96C_2 \cdot \delta,
	\]
	with probability at least $1- 2\exp(-c_1 N/L)- \exp(-c_2 \cdot d \cdot \ln(N/d))$.
	
	\textbf{Lower bound:} 
	Since $d\ln^2(N/d) \geq \ln(\ee L)$ (see above), we get
	\[
	N \geq C \cdot d \cdot L \cdot \ln^2(N/d) \cdot \delta^{-6} \geq C \cdot L \cdot \delta^{-2} \cdot \ln(\ee L).
	\] 
	We may hence apply \Cref{lem:allen_2}, which yields that
	\begin{equation}\label{eq:conc}
		\mnorm{\Phi^{(\ell-1)}(x)}_2, \ \mnorm{\Phi^{(\ell-1)}(y)}_2 \in [1-\delta /16, 1 + \delta/16]
	\end{equation}
	happens with probability at least $1- 2\exp(-c_1 \cdot \delta^2 \cdot N/L)$, after possibly decreasing $c_1$.
	We call this event $\E_3$.
	On this event, writing $z \defeq \frac{\Phi^{(j-1)}(y)}{\mnorm{\Phi^{(j-1)}(y)}_2}$, we obtain 
	\begin{align*}
		\mnorm{\frac{\Phi^{(\ell-1)}(x)}{\mnorm{\Phi^{(\ell-1)}(x)}_2}- z}_2 &\geq
		\frac{1}{\underbrace{\mnorm{\Phi^{(\ell-1)}(x)}_2}_{\leq 2}} \cdot \mnorm{\Phi^{(\ell-1)}(x) - z}_2 - \abs{1- \frac{1}{\mnorm{\Phi^{(\ell-1)}(x)}_2}} \cdot \underbrace{\mnorm{z}_2}_{=1} \\
		&\geq \frac{1}{2} \cdot \mnorm{\Phi^{(\ell-1)}(x) - z}_2 - \abs{1- \frac{1}{\mnorm{\Phi^{(\ell-1)}(x)}_2}}.
	\end{align*} 
	Note that $\mnorm{\Phi^{(\ell-1)}(x)}_2 \in [1-\delta /16, 1 + \delta/16]$ implies 
	\[
	\frac{1}{\mnorm{\Phi^{(\ell-1)}(x)}_2 } \in \left[ \frac{1}{1 + \delta/16}, \frac{1}{1 - \delta/16}\right] = \left[ 1- \frac{\delta/16}{1 + \delta/16}, 1 + \frac{\delta/16}{1 - \delta/16}\right] \subseteq
	[1- \delta/16, 1 + \delta/8]
	\]
	using $1- \delta/16 \geq 1/2$ at the last step. 
	Therefore, we observe 
	\[
	\mnorm{\frac{\Phi^{(\ell-1)}(x)}{\mnorm{\Phi^{(\ell-1)}(x)}_2}- z}_2 \geq \frac{1}{2} \cdot \mnorm{\Phi^{(\ell-1)}(x) - z}_2 -\frac{\delta}{8}
	\]
	on $\E_3$.
	Moreover, \Cref{lem:allen} (note that $N \geq C \cdot \delta^{-6} \cdot L \cdot \ln(CL)$) tells us that 
	\[
	\underset{\alpha \in \RR}{\min}\mnorm{\Phi^{(\ell-1)}(x)- \alpha \cdot \phelll(y)}_2 \geq \frac{\delta}{2}
	\]
	holds with probability at least $1-\exp(-c_3 \cdot \delta^6 \cdot N/L)$ with an absolute constant $c_3 > 0$. 
	We call this event $\E_4$. 
	In summary, 
	\[
	\delta' =\mnorm{\frac{\Phi^{(\ell-1)}(x)}{\mnorm{\Phi^{(\ell-1)}(x)}_2}- \frac{\Phi^{(\ell-1)}(y)}{\mnorm{\Phi^{(\ell-1)}(y)}_2}}_2 \geq \frac{\delta}{8}
	\]
	on $\low \defeq \E_3 \cap \E_4$ and the probability of this event can be lower bounded by 
	\[
	1-2 \cdot \exp(-c_1 \cdot \delta^2 \cdot N/L) - \exp(c_3 \cdot \delta^6 \cdot N/L).
	\]
	
	\textbf{Conclusion:} Overall, letting $\E \defeq \up \cap \low$, we get that on $\E$ the expression $\delta'$ is well-defined and satisfies 
	\[
	\delta/8 \leq \delta' \leq 96C_2 \cdot \delta.
	\]
	The probability of this event can be lower bounded by 
	\[
	1- 2\exp(-c_1 N/L)- \exp(-c_2 \cdot d \cdot \ln(N/d)) - 2 \cdot \exp(-c_1 \cdot \delta^2 \cdot N/L) - \exp(c_3 \cdot \delta^6 \cdot N/L).
	\]
	We now condition on $W^{(0)}, \ldots, W^{(\ell-1)}$ and assume that $\E$ is satisfied. 
	Let $C_4,c_4>0$ be the constants from \Cref{lem:recht_low_low}.
	When considering the randomness with respect to $\well$, we can thus apply \Cref{lem:recht_low_low} and get 
	\[
	c_4 \cdot \delta' \leq \Tr\abs{D^{(\ell)}(x) - D^{(\ell)}(y)} \leq C_4 \cdot \delta' 
	\]
	with probability at least $1-2\exp(-c_4 \cdot \delta' N)$.
	Incorporating the bounds defined by $\E$, we overall get 
	\[
	\frac{c_4}{8} \cdot \delta \leq \Tr\abs{D^{(\ell)}(x) - D^{(\ell)}(y)} \leq 96C_2C_4 \cdot \delta 
	\]
	with probability at least 
	\[
	1- 2\exp(-c_1 N/L)- \exp(-c_2 \cdot d \cdot \ln(N/d)) - 2 \cdot \exp(-c_1 \cdot \delta^2\cdot N/L) - \exp(-c_3 \cdot \delta^6 \cdot N/L) - 2\exp(c_4 \cdot \delta \cdot N).
	\]
	We firstly let $c' \defeq \min\{c_1,c_2,c_3,c_4\}$. 
	Further, we note that 
	\[
	\min\{N/L, \delta N\} \geq \delta^6 \cdot N/L  \geq d \cdot \ln(N/d),
	\]
	as follows by assumption, since 
	\[
	N \geq C \cdot L \cdot \delta^{-6} \cdot d \cdot \ln(N/d).
	\]
	Therefore, we can bound the overall probability by 
	\[
	1-8\exp(-c' \cdot d \cdot \ln(N/d)) = 1 - \exp(\ln(8) - c' \cdot d \cdot \ln(N/d)) 
	\geq 1 -\exp(-c'' \cdot d\ln(N/d)),
	\]
	where the last step uses $N \geq C \cdot d$.
\end{proof}
In order to establish bounds on the norms and inner products of the $\theta_j$, the main difficulty to overcome is the fact that the $D$-matrices are not independent of 
the other occurring random variables in the definition of the $\theta_j$. 
To circumvent this issue, we use the fundamental fact that for two \emph{orthogonal} vectors $v_1,v_2$ and a Gaussian matrix $A$, the two random variables $Av_1$ 
and $Av_2$ are independent.
Assume for the moment that we consider the two random variables $D^{(\ell)}(x)$ and $W^{(\ell)}v$ for a random neural network according to \Cref{assum:1} and some arbitrary vector
$v$ of suitable size. 
These two random variables are a priori not independent. 
However, conditioning on $W^{(0)},\ldots,W^{(\ell - 1)}$, the randomness in $D^{(\ell)}(x)$ solely depends on $W^{(\ell)}\phelll(x)$.
Therefore, we would obtain independence if $\phelll(x) \perp v$. 
While this might in general not be the case, we can consider the orthogonal projection $\pi$ onto the space spanned by $\phelll(x)$ and decompose using the identity
$W^{(\ell)}v = W^{(\ell)}\pi(v) + W^{(\ell)}\pi^\perp(v)$.
Now, $W^{(\ell)}\pi^\perp(v)$ is independent of $D^{(\ell)}(x)$.
It remains to show that the contribution of $W^{(\ell)}\pi(v)$ is not too significant, i.e., that the norm $\mnorm{\pi(v)}_2$ is not too large.
This is, for specific vectors $v$, governed by the following lemma. 
\begin{lemma}\label{lem:projsmall}
	There exist absolute constants $C,c>0$ such that the following holds.
	Let $\Phi:\RR^d \to \RR$ be a random zero-bias ReLU network with $L$ hidden layers of width $N$ satisfying \Cref{assum:1}
	with $N \geq C \cdot L \cdot \ln(\ee L) \cdot \delta^{-6}$.
	We fix $\delta \leq \frac{1}{CL}$ and $x,y \in \SS^{d-1}$ with $x_d = y_d = \frac{1}{\sqrt{2}}$ and $\mnorm{x-y}_2 \geq \delta$ and pick
	$\nu \in \SS^{d-1}$ with $\nu \perp \spann\{x,y\}$.
	For $\ell \in \{-1,\ldots,L-1\}$, we let 
	\[
	V^{(\ell)} \defeq \spann \{\Phi^{(\ell)}(x), \Phi^{(\ell)}(y)\}
	\]
	and let $\pell: \RR^\mu \to \RR^\mu$ denote the orthogonal projection onto $V^{(\ell)}$, where 
	\[
	\mu \defeq \begin{cases} d,& \text{if } \ell = -1, \\ N,& \text{otherwise.}\end{cases}
	\]
	Then, for any $\ell \in \{-1,\ldots,L-1\}$ and $z_0,\ldots,z_\ell \in \{x,y\}$ we get 
	\[
	\mnorm{\pell \left(\left[\prod_{j=\ell}^0 D^{(j)}(z_j)W^{(j)}\right]\nu\right)}_2^2 \leq C^L \cdot \delta^{-2L} \cdot \frac{t}{N}
	\]
	with probability at least $1-\exp(-c \cdot t)$ for every $C \cdot  \ln(\ee L) \leq t \leq \delta^6 N/L$.
\end{lemma}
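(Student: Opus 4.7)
The plan is to prove the bound by induction on $\ell \in \{-1, \ldots, L-1\}$. The base case $\ell = -1$ is immediate: $\pi^{(-1)}$ projects onto $V^{(-1)} = \spann\{x,y\}$, and by hypothesis $\nu \perp \spann\{x,y\}$, so $\mnorm{\pi^{(-1)}(\nu)}_2 = 0$. For the inductive step I write $v^{(\ell)} \defeq \left[\prod_{j=\ell}^0 D^{(j)}(z_j) W^{(j)}\right]\nu$ and $a_\ell \defeq \mnorm{\pell(v^{(\ell)})}_2$, and aim for the recursion $a_\ell \leq 3 a_{\ell-1} + C\sqrt{t/N}$, which iterates (from $a_{-1}=0$) to $a_\ell^2 \leq C'\cdot 9^\ell \cdot t/N \leq C^L \delta^{-2L}\cdot t/N$ since $\delta \leq 1$.

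The central idea is a careful conditioning argument that isolates at each layer a direction in which only Gaussian randomness remains. Specifically, I condition on $W^{(0)},\ldots,W^{(\ell-1)}$ and on the restriction $W^{(\ell)}|_{V^{(\ell-1)}}$ of $W^{(\ell)}$ to $V^{(\ell-1)}$. Since $\Phi^{(\ell-1)}(x),\Phi^{(\ell-1)}(y)\in V^{(\ell-1)}$, the vectors $W^{(\ell)}\Phi^{(\ell-1)}(x)$ and $W^{(\ell)}\Phi^{(\ell-1)}(y)$ are then determined, hence so are $D^{(\ell)}(x),D^{(\ell)}(y),\Phi^{(\ell)}(x),\Phi^{(\ell)}(y),V^{(\ell)}$, and $\pell$. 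By the rotational invariance of a Gaussian matrix, the complementary restriction $W^{(\ell)}|_{V^{(\ell-1),\perp}}$ is independent of $W^{(\ell)}|_{V^{(\ell-1)}}$, hence independent of all the quantities just listed.

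Next, I decompose $v^{(\ell-1)} = p_{\ell-1} + q_{\ell-1}$ with $p_{\ell-1} \defeq \pi^{(\ell-1)}(v^{(\ell-1)})$ and $q_{\ell-1} \defeq \pi^{(\ell-1),\perp}(v^{(\ell-1)})$; both are measurable with respect to the conditioning. Writing
\[
\pell(v^{(\ell)}) = \pell\bigl(D^{(\ell)}(z_\ell)W^{(\ell)} p_{\ell-1}\bigr) + \pell\bigl(D^{(\ell)}(z_\ell)W^{(\ell)} q_{\ell-1}\bigr),
\]
I control the first summand trivially by $\op{W^{(\ell)}} \cdot \mnorm{p_{\ell-1}}_2 \leq 3 a_{\ell-1}$ on the high-probability event $\{\op{W^{(\ell)}} \leq 3\}$ (e.g.\ via \cite[Corollary~7.3.3]{vershynin_high-dimensional_2018}). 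For the second summand, $W^{(\ell)}q_{\ell-1}$ is, conditionally on the above, a Gaussian vector with covariance $\tfrac{2\mnorm{q_{\ell-1}}_2^2}{N}I_N$ independent of $D^{(\ell)}(z_\ell)$ and $\pell$. Since $\pell$ has rank $2$ and $\op{D^{(\ell)}(z_\ell)}\leq 1$, decomposing $\pell = \alpha\alpha^T + \beta\beta^T$ in an orthonormal basis of $V^{(\ell)}$ shows that each of $\langle\alpha, D^{(\ell)}(z_\ell)W^{(\ell)}q_{\ell-1}\rangle$ and $\langle\beta, D^{(\ell)}(z_\ell)W^{(\ell)}q_{\ell-1}\rangle$ is Gaussian of variance $\leq \tfrac{2\mnorm{q_{\ell-1}}_2^2}{N}$; standard Gaussian tail bounds then give $\mnorm{\pell(D^{(\ell)}(z_\ell)W^{(\ell)}q_{\ell-1})}_2^2 \leq C\mnorm{q_{\ell-1}}_2^2 t/N$ with probability $\geq 1-\exp(-ct)$. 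Combined with $\mnorm{q_{\ell-1}}_2 \leq \mnorm{v^{(\ell-1)}}_2 \leq \ee$ from \Cref{lem:special}, this yields the claimed recursion.

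The main technical obstacle is making the layer-by-layer conditioning rigorous so that the independence structure propagates correctly: at each $\ell$, I must verify that $V^{(\ell)}, D^{(\ell)}(z_\ell), \pell, p_{\ell-1}, q_{\ell-1}$ all lie in the $\sigma$-algebra generated by $W^{(0)},\ldots,W^{(\ell-1)}$ and $W^{(\ell)}|_{V^{(\ell-1)}}$, while $W^{(\ell)}q_{\ell-1}$ is measurable for the independent $\sigma$-algebra generated by $W^{(\ell)}|_{V^{(\ell-1),\perp}}$. A union bound over the $O(L)$ per-layer concentration events (Gaussian operator norm, $\mnorm{v^{(\ell-1)}}_2 \leq \ee$ via \Cref{lem:special}, and the projected-Gaussian tail) absorbs the overhead into the stated range $C\ln(\ee L)\leq t\leq \delta^6 N/L$, yielding overall failure probability $\exp(-ct)$ as required.
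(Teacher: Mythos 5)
Your proposal is correct, but it takes a genuinely different route from the paper's proof, and in fact yields a stronger conclusion. The paper expresses $\mnorm{\pi^{(\ell)}(v)}_2^2$ through the inner products of $v$ with the (generally non-orthonormal) spanning vectors $\Phi^{(\ell)}(x),\Phi^{(\ell)}(y)$ of $V^{(\ell)}$; to control the conditioning of that basis it must invoke the separation result \cite[Lemma~7.5]{allenarxiv} (\Cref{lem:allen}), which guarantees $\min_\alpha\mnorm{\Phi^{(\ell)}(y)-\alpha\Phi^{(\ell)}(x)}_2\geq\delta/2$ and costs a factor $\delta^{-2}$ per layer — this is precisely where the $\delta^{-2L}$ in the statement originates. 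It then creates the needed independence by splitting $v^{(\ell-1)}=\pi^{(\ell-1)}(v^{(\ell-1)})+\pi^{(\ell-1),\perp}(v^{(\ell-1)})$ and conditioning on $(W^{(\ell)}\Phi^{(\ell-1)}(x),W^{(\ell)}\Phi^{(\ell-1)}(y))$, bounding the projected part by Cauchy--Schwarz against the induction hypothesis (a multiplicative recursion). Your argument uses the same underlying independence of Gaussian restrictions to orthogonal subspaces, but organizes it by conditioning on $W^{(\ell)}|_{V^{(\ell-1)}}$ outright, which freezes $D^{(\ell)}(x),D^{(\ell)}(y),V^{(\ell)}$ and $\pi^{(\ell)}$ simultaneously; the isotropy of the conditionally Gaussian vector $W^{(\ell)}q_{\ell-1}$ then lets you project onto \emph{any} fixed two-dimensional subspace without ever needing a well-conditioned explicit basis, so \Cref{lem:allen} and the hypothesis $\mnorm{x-y}_2\geq\delta$ play no role (the $\delta$ survives only through the width condition used for \Cref{lem:special}). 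The resulting additive recursion $a_\ell\leq 3a_{\ell-1}+C\sqrt{t/N}$ gives $a_\ell^2\lesssim 9^L\,t/N$, which is strictly stronger than the stated $C^L\delta^{-2L}\,t/N$ and of course implies it. Your probability accounting also checks out: the per-layer failure events (operator norm, \Cref{lem:special}, and the two one-dimensional Gaussian tails) each have unconditional probability at most $\exp(-ct)$ in the stated range $C\ln(\ee L)\leq t\leq\delta^6N/L\leq N/L^2$, and the union bound over $L$ layers is absorbed using $t\gtrsim\ln(\ee L)$, exactly as in the paper.
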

\begin{proof}
	The proof is via induction over $\ell$.
	Note that in the case $\ell = -1$ we have \[
	\prod_{j=\ell}^0 D^{(j)}(z_j)W^{(j)} = I_{d \times d}.
	\]
	Therefore, since $\nu \perp \spann \{x,y\}$, we get $\pi^{(-1)}(\nu) = 0$,
	which implies the claim. 
	
	We now take $\ell \in \{0,\ldots,L-1\}$ and assume via induction that 
	\[
	\mnorm{\pelll \left(\left[\prod_{j=\ell-1}^0 D^{(j)}(z_j)W^{(j)}\right]\nu\right)}_2^2 \leq C^{\ell} \cdot \delta^{-2\ell} \cdot \frac{t}{N}
	\]
	holds with probability at least $1-22 \cdot \ell \cdot \exp(-c' \cdot t)$ for every $0 \leq t \leq \delta^6 N/L^2$ for specific constants $C,c'>0$ that we explicitly define
	at the end of the proof. 
	For simplicity, we write
	\[
	v \defeq \left[ \prod_{j= \ell}^0 D^{(j)}(z_j) W^{(j)}\right]\nu.
	\]
	By \Cref{lem:allen}, we may assume
	\begin{equation}\label{eq:firstass}
		2 \geq \mnorm{\phell(x)}_2 \geq \frac{1}{2} \quad \text{and} \quad  2 \geq \mnorm{\phell(y)}_2 \geq \frac{1}{2},
	\end{equation}
	which occurs with probability at least $1-2\exp(-c_1 N/L)$ as follows from \Cref{lem:allen_2} for an absolute constant
	$c_1 > 0$.
	\remove{\Cref{lem:allen_2} may be applied by taking $C$ large enough since 
		\[
		N \geq C \cdot L \cdot \ln(\ee L) \cdot \delta^{-6} \geq  C \cdot L \cdot \ln(\ee L).
		\] 
		We call this event $\E_1$.}
	Moreover, we let $\E_2$ be the event defined by 
	\[
	\underset{\alpha \in \RR}{\min} \mnorm{\phell(y) - \alpha \cdot \phell(x)}_2 \geq \delta/2,
	\]
	which occurs with probability at least $1-\exp(-c_2 \cdot \delta^{6} \cdot N/L)$, using \Cref{lem:allen}.
	Here, we used $N \geq C \cdot L \cdot \ln(\ee L) \cdot \delta^{-6}$.
	By definition, on $\E_3 \defeq \E_1 \cap \E_2$, letting 
	\[
	w \defeq \phell(y) - \langle \phell(x), \phell(y)\rangle \cdot \frac{\phell(x)}{\mnorm{\phell(x)}_2^2},
	\]
	we get 
	\[
	\mnorm{w}_2 \geq \frac{\delta}{2}.
	\]
	Note that $\{\Phi^{(\ell)}(x), w\}$ forms an orthogonal basis of $V^{(\ell)}$.
	On $\E_3$, we thus have 
	\[
	\mnorm{\pell\left(v\right)}_2^2 
	= \left\langle v, \frac{\Phi^{(\ell)}(x)}{\mnorm{\Phi^{(\ell)}(x)}_2}\right\rangle^2 + \left\langle v, \frac{w}{\mnorm{w}_2}\right\rangle^2,
	\]
	which implies
	\[
	\mnorm{\pell\left(v\right)}_2^2 
	\leq 4\delta^{-2} \cdot \left(\left\langle v, \Phi^{(\ell)}(x)\right\rangle^2 + \left\langle v, w\right\rangle^2 \right)
	\] 
	since $\delta < 1$.
	Moreover, by Cauchy-Schwarz, 
	\[
	\abs{\langle v,w \rangle} \leq \abs{\langle v, \Phi^{(\ell)}(y) \rangle} + \abs{\langle v, \Phi^{(\ell)}(x) \rangle} \cdot \frac{\mnorm{\Phi^{(\ell)}(y)}_2}{\mnorm{\Phi^{(\ell)}(x)}_2}
	\leq 4 \cdot \left(\abs{\langle v, \Phi^{(\ell)}(y) \rangle} + \abs{\langle v, \Phi^{(\ell)}(x) \rangle}\right).
	\]
	All in all, we hence get 
	\[
	\langle v,w \rangle ^2 \leq 32 \cdot \left(\langle v, \Phi^{(\ell)}(x) \rangle^2 + \langle v, \Phi^{(\ell)}(y) \rangle^2\right),
	\]
	which then gives us 
	\begin{align}\label{eq:secondass}
		\mnorm{\pell\left(v\right)}_2^2 
		&\leq 4 \cdot  \delta^{-2} \cdot \left(\langle v, \Phi^{(\ell)}(x) \rangle^2 + 32 \langle v, \Phi^{(\ell)}(x) \rangle^2 + 32\langle v, \Phi^{(\ell)}(y) \rangle^2\right) \nonumber\\
		&\leq 132 \cdot \delta^{-2} \cdot \left(\langle v, \Phi^{(\ell)}(x) \rangle^2 + \langle v, \Phi^{(\ell)}(y) \rangle^2\right)
	\end{align}
	on $\E_3$.
	The probability of $\E_3$ can be lower bounded by 
	\[
	1-2\exp(-c_1 \cdot N/L) - \exp(-c_2 \cdot \delta^{-6} \cdot N/L).
	\]
	In the following, it therefore suffices to control $\langle v, \Phi^{(\ell)}(x) \rangle^2$ and $\langle v, \Phi^{(\ell)}(y) \rangle^2$.
	
	To this end, we fix a number $0 \leq t \leq \delta^6 \cdot N/L$ and set
	\[
	v_1 \defeq \left[\prod_{j=\ell-1}^0 D^{(j)}(z_j)W^{(j)}\right]\nu
	\]
	and $\pi \defeq \pelll$.
	We now define $\E_4$ to be the event where
	\[
	\mnorm{\pi(v_1)}_2^2 \leq C^\ell \cdot \delta^{-2\ell} \cdot \frac{t}{N}\quad \text{and} \quad \max\left\{\mnorm{v_1}_2^2, \mnorm{\Phi^{(\ell -1)}(x)}_2^2, \mnorm{\Phi^{(\ell -1)}(y)}_2^2\right\} \leq \ee^2  
	\]
	which occurs with probability at least $1-22 \cdot \ell \cdot \exp(-c' \cdot t) - 3\exp(-c_3 \cdot N/L^2)$ according to the induction hypothesis and 
	\Cref{lem:special}, with an absolute constant $c_3 > 0$.
	We may apply \Cref{lem:special}, since 
	\[
	N \geq C \cdot L \cdot \ln(\ee L) \cdot \underbrace{\delta^{-6}}_{\geq L^6} \geq C \cdot L^2 \cdot \ln(\ee L). 
	\]
	We condition on the matrices $W^{(0)},\ldots,W^{(\ell-1)}$ and assume that they satisfy the event $\E_4$. 
	Due to symmetry, we only consider $\langle v, \Phi^{(\ell)}(x) \rangle^2$. We get 
	\[
	\langle v, \Phi^{(\ell)}(x) \rangle^2 
	= \left(v_1^T(W^{(\ell)})^TD^{(\ell)}(z_\ell)D^{(\ell)}(x)W^{(\ell)}\Phi^{(\ell -1)}(x)\right)^2,
	\]
	which implies
	\[
	\langle v, \Phi^{(\ell)}(x) \rangle^2  =
	\left(\underset{D^{(\ell)}(z_\ell)_{k,k} = D^{(\ell)}(x)_{k,k}=1}{\sum_{k=1}^N} \left(W^{(\ell)}v_1\right)_k \cdot \left(W^{(\ell)}\Phi^{(\ell - 1)}(x)\right)_k\right)^2.
	\]
	Decomposing $v_1 = \pi(v_1)+ \pperp(v_1)$, we obtain 
	\begin{align*}
		&\norel\langle v, \Phi^{(\ell)}(x) \rangle^2 \\
		&=   \left(\underset{D^{(\ell)}(z_\ell)_{k,k} = D^{(\ell)}(x)_{k,k}=1}{\sum_{k=1}^N} \hspace{-1cm} \left(W^{(\ell)}\pi(v_1)\right)_k \cdot \left(W^{(\ell)}\Phi^{(\ell - 1)}(x)\right)_k
		+ \left(W^{(\ell)}\pperp(v_1)\right)_k \cdot \left(W^{(\ell)}\Phi^{(\ell - 1)}(x)\right)_k\right)^2 \\
		&\leq 2 \cdot \left(\underset{D^{(\ell)}(z_\ell)_{k,k} = D^{(\ell)}(x)_{k,k}=1}{\sum_{k=1}^N} \hspace{-1cm} \left(W^{(\ell)}\pi(v_1)\right)_k \cdot \left(W^{(\ell)}\Phi^{(\ell - 1)}(x)\right)_k\right)^2 \\
		& \hspace{0.5cm}+ 2 \cdot \left(\underset{D^{(\ell)}(z_\ell)_{k,k} = D^{(\ell)}(x)_{k,k}=1}{\sum_{k=1}^N}\left(W^{(\ell)}\pperp(v_1)\right)_k \cdot \left(W^{(\ell)}\Phi^{(\ell - 1)}(x)\right)_k\right)^2.
	\end{align*}
	Considering the first summand, we obtain by Cauchy-Schwarz 
	\begin{align*}
		&\norel \left(\underset{D^{(\ell)}(z_\ell)_{k,k} = D^{(\ell)}(x)_{k,k}=1}{\sum_{k=1}^N} \hspace{-1cm} \left(W^{(\ell)}\pi(v_1)\right)_k \cdot \left(W^{(\ell)}\Phi^{(\ell - 1)}(x)\right)_k\right)^2 \\
		&\leq \left(\underset{D^{(\ell)}(z_\ell)_{k,k} = D^{(\ell)}(x)_{k,k}=1}{\sum_{k=1}^N} \left(W^{(\ell)}\pi(v_1)\right)_k^2\right) \cdot \left(\underset{D^{(\ell)}(z_\ell)_{k,k} = D^{(\ell)}(x)_{k,k}=1}{\sum_{k=1}^N}\left(W^{(\ell)}\Phi^{(\ell - 1)}(x)\right)_k^2\right) \\
		&\leq
		\mnorm{W^{(\ell)}\pi(v_1)}_2^2 \cdot \mnorm{W^{(\ell)}\Phi^{(\ell - 1)}(x)}_2^2.
	\end{align*}
	Since we take the randomness only with respect to $\well$, we note that $\well \pi(v_1) \sim \mathcal{N}\left(0, \frac{2}{N} \cdot \mnorm{\pi(v_1)}_2^2\right)$ and 
	$\well \phelll(x) \sim \mathcal{N}\left(0, \frac{2}{N} \cdot \mnorm{\phelll(x)}_2^2\right)$.
	We apply \cite[Theorem~3.1.1]{vershynin_high-dimensional_2018} and obtain 
	\[
	\mnorm{\well \pi(v_1)}_2 \leq\sqrt{\frac{2}{N}} \cdot \mnorm{\pi(v_1)}_2 \cdot (\sqrt{N} + u)
	\]
	with probability at least $1-2\exp(-c_4u^2)$ for every $u \geq 0$ with an absolute constant $c_4 > 0$.
	Letting $u \defeq \sqrt{N}$, we get 
	\[
	\mnorm{\well \pi(v_1)}_2^2 \leq 8 \cdot \mnorm{\pi(v_1)}_2^2
	\]
	with probability at least $1-2\exp(-c_4N)$. 
	Similarly, we get 
	\[
	\mnorm{\well \phelll(x)}_2^2  \leq 8 \cdot \mnorm{\phelll(x)}_2^2
	\]
	with probability at least $1-2\exp(-c_4N)$.
	Summarizing, we find
	\[
	\left(\underset{D^{(\ell)}(z_\ell)_{k,k} = D^{(\ell)}(x)_{k,k}=1}{\sum_{k=1}^N} \hspace{-1cm} \left(W^{(\ell)}\pi(v_1)\right)_k \cdot \left(W^{(\ell)}\Phi^{(\ell - 1)}(x)\right)_k\right)^2
	\leq 64 \cdot \mnorm{\pi(v_1)}_2^2 \cdot \mnorm{\phelll(x)}_2^2
	\]
	with probability at least $1-4\exp(-c_4 N)$.
	Since we assume that $\E_4$ is satisfied, this may be further bounded by 
	\[
	\left(\underset{D^{(\ell)}(z_\ell)_{k,k} = D^{(\ell)}(x)_{k,k}=1}{\sum_{k=1}^N} \hspace{-1cm} \left(W^{(\ell)}\pi(v_1)\right)_k \cdot \left(W^{(\ell)}\Phi^{(\ell - 1)}(x)\right)_k\right)^2
	\leq 64\ee^2 \cdot  C^\ell \cdot \delta^{-2\ell} \cdot \frac{t}{N}.
	\]
	
	Let us now consider the other summand. 
	Note that according to \cite[Theorem~3.1.1]{vershynin_high-dimensional_2018}, we have (see above) 
	\begin{equation}\label{eq:conddd_2}
		\mnorm{\well \phelll(x)}_2^2 \leq 8 \cdot \mnorm{\phelll(x)}_2^2 \leq 8\ee^2
	\end{equation}
	with probability at least $1-2\exp(-c_4N)$.
	Since $W^{(\ell)}\pperp(v_1)$ is independent of the random variable $(W^{(\ell)}\Phi^{(\ell-1)}(x), W^{(\ell)}\Phi^{(\ell-1)}(y))$, we may condition on 
	the latter random variable and assume that \eqref{eq:conddd_2}
	is satisfied. 
	
	For brevity, we write $X_k \defeq (W^{(\ell)} \pi^\perp(v_1))_k$, $a_k \defeq \left(W^{(\ell)}\Phi^{(\ell - 1)}(x)\right)_k$
	and 
	\[
	K \defeq \{k \in \{1,\ldots,N\}: \ D^{(\ell)}(z_\ell)_{k,k} = D^{(\ell)}(x)_{k,k} =1 \}.
	\]
	Note that all the $a_k$ as well as $K$ are fixed since we conditioned on $(W^{(\ell)}\Phi^{(\ell-1)}(x), W^{(\ell)}\Phi^{(\ell-1)}(y))$ and $W^{(0)}, \dots , W^{(\ell - 1)}$.
	Moreover, we note that $X_k \iid \mathcal{N}\left(0, \frac{2}{N} \cdot\mnorm{\pperp(v_1)}_2^2\right)$ for every $k \in \{1,\ldots,N\}$.
	Hence, 
	\[
	\underset{D^{(\ell)}(z_\ell)_{k,k} = D^{(\ell)}(x)_{k,k}=1}{\sum_{k=1}^N}\left(W^{(\ell)}\pperp(v_1)\right)_k \cdot \left(W^{(\ell)}\Phi^{(\ell - 1)}(x)\right)_k
	= \sum_{k \in K} X_k \cdot a_k 
	\]
	is Gaussian with mean zero and 
	\begin{align*}
		\VV \left[ \sum_{k \in K} X_k \cdot a_k\right]= \frac{2}{N} \cdot  \mnorm{\pperp(v_1)}_2^2 \cdot \sum_{k \in K} a_k^2 \leq \frac{2}{N} \cdot \mnorm{v_1}_2^2 \cdot \mnorm{W^{(\ell)}\Phi^{(\ell - 1)}(x)}_2^2 \leq \frac{16 \ee^4}{N}.
	\end{align*}
	Hence, we get, for $u \geq 0$, 
	\[
	\PP \left(\abs{\sum_{k \in K} X_k \cdot a_k} \geq u\right) \leq 2\exp \left(-c_6Nu^2\right),
	\]
	with an absolute constant $c_6>0$.
	This directly implies
	\[
	\PP \left(\left(\sum_{k \in K} X_k \cdot a_k\right)^2 \geq t/N\right) \leq 2\exp \left(-c_6t\right).
	\]
	Up to now, we have conditioned on the random variable $(W^{(\ell)}\Phi^{(\ell-1)}(x), W^{(\ell)}\Phi^{(\ell-1)}(y))$.
	Reintroducing the randomness over $(W^{(\ell)}\Phi^{(\ell-1)}(x), W^{(\ell)}\Phi^{(\ell-1)}(y))$, we obtain 
	\[
	\left(\underset{D^{(\ell)}(z_\ell)_{k,k} = D^{(\ell)}(x)_{k,k}=1}{\sum_{k=1}^N}\left(W^{(\ell)}\pperp(v_1)\right)_k \cdot \left(W^{(\ell)}\Phi^{(\ell - 1)}(x)\right)_k\right)^2
	\leq t/N
	\]
	with probability at least 
	\[
	1-2\exp(-c_4N) - 2\exp(-c_6t).
	\]
	Therefore, we overall get when conditioning on $W^{(0)},\ldots,W^{(\ell-1)}$, assuming that $\E_4$ is satisfied and only considering the randomness with respect to $W^{(\ell)}$ that 
	\[
	\langle v, \Phi^{(\ell)}(x) \rangle^2 
	\leq 64\ee^2 \cdot  C^\ell \cdot \delta^{-2\ell} \cdot \frac{t}{N} +  \cdot \frac{t}{N} \leq 65\ee^2 \cdot C^{\ell} \cdot \delta^{-2\ell}\cdot \frac{t}{N} 
	\]
	with probability at least 
	\[
	1 -4\exp(-c_4 N)-2\exp(-c_4N)-2\exp(-c_6t).
	\]
	Since $t \leq N$ and by taking $c_7 \defeq \min\{c_4,c_6\}$, we get 
	\[
	1 -4\exp(-c_4 N)-2\exp(-c_4N)-2\exp(-c_6t) \geq 1-8\exp(-c_7 t).
	\]
	Completely analogously, we obtain 
	\[
	\langle v, \Phi^{(\ell)}(y) \rangle^2 \leq 65\ee^2 \cdot C^{\ell} \cdot \delta^{-2\ell}\cdot \frac{t}{N} 
	\]
	with probability at least 
	\[
	1-8\exp(-c_7 t).
	\]
	
	Reintroducing the randomness over $W^{(0)},\ldots,W^{(\ell-1)}$ and noting that we assumed $\E_4$ to be satisfied, we observe that 
	\[
	\max\{\langle v, \Phi^{(\ell)}(x) \rangle^2, \langle v, \Phi^{(\ell)}(y) \rangle^2\} \leq 65\ee^2 \cdot C^{\ell} \cdot \delta^{-2\ell}\cdot \frac{t}{N} 
	\]
	occurs with probability at least 
	\[
	1-22 \cdot \ell \cdot \exp(-c' \cdot t) - 3\exp(-c_3 \cdot N/L^2)- 16\exp(-c_7 t).
	\]
	We call this event $\E_5$.
	By taking $c' \leq \min\{c_3,c_7\}$ and since $t \leq \delta^6N/L \leq N/L^2$, we can bound
	\[
	\PP(\E_5) \geq 1-22 \cdot \ell \cdot \exp(-c \cdot t) -19\exp(-c t).
	\]
	
	Using \eqref{eq:secondass}, we now obtain that on $\E_3 \cap \E_5$ it holds  
	\begin{align*}
		\mnorm{\pell\left(v\right)}_2^2 &\leq 132 \cdot \delta^{-2} \cdot \left(\langle v, \Phi^{(\ell)}(x) \rangle^2 + \langle v, \Phi^{(\ell)}(y) \rangle^2\right) \\
		&\leq 264 \cdot \delta^{-2} \cdot 65\ee^2 \cdot C^{\ell} \cdot \delta^{-2\ell}\cdot \frac{t}{N} \\
		&= 17160\ee^2 \cdot C^\ell \cdot \delta^{-2(\ell + 1)}\cdot \frac{t}{N}
	\end{align*}
	and we can bound the probability of $\E_3 \cap \E_5$ from below by 
	\[
	1 - 22 \cdot \ell \cdot \exp(-c' \cdot t) -19\exp(-c t) - 2\exp(-c_1 \cdot N/L) - \exp(-c_2 \cdot \delta^{6} \cdot N/L).
	\]
	Taking $c' \leq \min\{c_1,c_2\}$ and since $t \leq \delta^6 \cdot N/L$, we bound the probability further by 
	\[
	1 - 22 \cdot \ell \cdot \exp(-c' \cdot t) -22\exp(-c t) = 1 - 22 \cdot (\ell + 1) \cdot \exp(-c' \cdot t) 
	\]
	Taking $C \geq 17160\ee^2$, we therefore see via induction that for every $\ell \in \{-1,\ldots,L-1\}$, we get 
	\[
	\mnorm{\pell\left(v\right)}_2^2 \leq C^{\ell + 1} \cdot \delta^{-2(\ell+1)} \cdot \frac{t}{N}
	\]
	with probability at least 
	\[
	1-22 \cdot (\ell + 1) \cdot \exp(-c' \cdot t)
	\]
	for every $\ell \in \{-1, \dots, L-1\}$. 
	
	In the end, take $\ell \in \{-1, \dots, L-1\}$ arbitrary. 
	By using $\ell + 1 \leq L$, this implies 
	\[
	\mnorm{\pell\left(v\right)}_2^2 \leq C^{L} \cdot \delta^{-2L} \cdot \frac{t}{N}
	\]
	with probability at least 
	\[
	1-22 L \cdot \exp(-c' \cdot t) = 1 - \exp(\ln(22L) - c' \cdot t).
	\]
	If we now additionally assume $\ln(22L) \overset{\text{\cshref{lem:log_b}}}{\leq}\frac{22}{\ee} \cdot \ln(\ee L) \leq \frac{c'}{2}t$ (which we may do by taking $C \geq \frac{44}{\ee c'}$), we obtain the final claim
	by letting $c \defeq c' /2$.
\end{proof}
We can now prove an upper and a lower bound for the Euclidean norm of $\theta_j$.
\begin{lemma}\label{lem:thetajlowbound}
	There exist absolute constants $C,c>0$ such that the following holds. 
	Let $\Phi:\RR^d \to \RR$ be a random zero-bias ReLU network with $L$ hidden layers of width $N$ satisfying \Cref{assum:1}.
	Moreover, we take $\delta> 0$ with $\delta \leq \frac{1}{CL}$ and $x,y, \nu \in \SS^{d-1}$ with $x_d = y_d = \frac{1}{\sqrt{2}}$,
	$24\delta \geq \mnorm{x-y}_2 \geq \delta$ and $\nu \perp \spann\{x,y\}$.
	We assume $N \geq C \cdot d$ and 
	\begin{align*} 
		N \geq C \cdot d \cdot L \cdot \ln^2(N/d) \cdot \delta^{-6}, \quad \text{and} \quad 
		N \geq C^L \cdot \delta^{-2L-5} \cdot d \cdot \ln(N/d).
	\end{align*}
	Then for every $j \in \{0,\ldots,L-1\}$ we have 
	\[
	c \cdot \delta \leq \mnorm{\theta_j}_2^2 \leq C \cdot \delta
	\] 
	with probability at least $1- \exp(-c \cdot d \ln(N/d))$.
	Here, $\theta_j$ is as defined in \eqref{eq:thetaj}.
\end{lemma}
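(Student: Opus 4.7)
The plan is to reduce the analysis of $\|\theta_j\|_2^2$ to the concentration of a sum of squared Gaussians that is then passed through a near-isometry. Writing $u \defeq \jac\Phi^{(j-1)}(y)\nu$ and $\Delta \defeq D^{(j)}(x) - D^{(j)}(y)$, we have
\[
\theta_j = \jac\Phi^{(j+1)\to(L-1)}(x) \cdot \Delta \cdot W^{(j)} u .
\]
I would first establish $\|u\|_2 \asymp 1$ by \Cref{lem:special} (with $\ell_1 = 0$, $\ell_2 = j-1$, $z_i = y$, $v = \nu$), and $\Tr|\Delta| \asymp \delta N$ by \Cref{lem:trbound}. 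A final application of \Cref{lem:special} (with $\ell_1 = j+1$, $\ell_2 = L-1$, $z_i = x$ and $v = \Delta W^{(j)} u$) will then pass any bound on $\|\Delta W^{(j)} u\|_2$ through the remaining layers $j{+}1,\ldots,L{-}1$ with only constant-factor loss. (The case $j = L-1$ is trivial since the corresponding matrix product is empty.) The core task is therefore to prove $\|\Delta W^{(j)} u\|_2^2 \asymp \delta$.

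The main obstacle is that $\Delta$ and $W^{(j)} u$ are correlated, as both depend on $W^{(j)}$. To decouple them I would decompose $u = \pi(u) + \pi^\perp(u)$, where $\pi = \pi^{(j-1)}$ is the orthogonal projection onto $V^{(j-1)} \defeq \spann\{\Phi^{(j-1)}(x), \Phi^{(j-1)}(y)\}$. Because $\Delta$ is measurable with respect to the $\sigma$-algebra generated by $W^{(0)},\ldots,W^{(j-1)}$ together with $W^{(j)}\Phi^{(j-1)}(x)$ and $W^{(j)}\Phi^{(j-1)}(y)$, and because $\pi^\perp(u)$ is orthogonal to $V^{(j-1)}$, Gaussian rotation invariance implies that conditionally on this $\sigma$-algebra the vector $W^{(j)}\pi^\perp(u)$ is an iid centered Gaussian with coordinatewise variance $\tfrac{2}{N}\|\pi^\perp(u)\|_2^2$, and is independent of $\Delta$. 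The parallel component is controlled by \Cref{lem:projsmall} (with $z_i = y$ and $t \asymp d\ln(N/d)$), which gives
\[
\|\pi(u)\|_2^2 \lesssim C^L \delta^{-2L} \cdot \frac{d\ln(N/d)}{N} \ll \delta,
\]
where the last step uses the hypothesis $N \geq C^L \delta^{-2L-5} d\ln(N/d)$; consequently $\|\pi^\perp(u)\|_2 \asymp 1$.

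Conditioning on the above $\sigma$-algebra and on the event $\Tr|\Delta| \asymp \delta N$, the vector $\Delta W^{(j)}\pi^\perp(u)$ has exactly $\Tr|\Delta|$ nonzero coordinates, each an iid centered Gaussian of variance $\asymp 1/N$. A standard Bernstein bound for sums of sub-exponential random variables (e.g.\ \cite[Theorem~2.8.2]{vershynin_high-dimensional_2018}) then yields $\|\Delta W^{(j)} \pi^\perp(u)\|_2^2 \asymp \delta$ with probability at least $1-\exp(-c\delta N)$. The parallel contribution is absorbed via $\|\Delta W^{(j)} \pi(u)\|_2 \leq \op{W^{(j)}} \cdot \|\pi(u)\|_2 \lesssim \|\pi(u)\|_2 \ll \sqrt\delta$, using the standard Gaussian operator-norm bound $\op{W^{(j)}} \lesssim 1$ from \cite[Corollary~7.3.3]{vershynin_high-dimensional_2018}. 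The triangle inequality then delivers $\|\Delta W^{(j)} u\|_2^2 \asymp \delta$, and the concluding invocation of \Cref{lem:special} upgrades this to $\|\theta_j\|_2^2 \asymp \delta$.

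The hardest part will be the bookkeeping: each of the several conditionings must be arranged so that the independence required for the Bernstein step is preserved, and the final union bound over all high-probability events (\Cref{lem:special} used twice, \Cref{lem:trbound}, \Cref{lem:projsmall}, the Bernstein step, and the operator-norm bound on $W^{(j)}$) must yield the stated failure probability $\exp(-c d \ln(N/d))$. This is where the strong lower bounds on $N$ in the hypotheses are consumed: they must simultaneously force $N/L^2$, $\delta N$, $\delta^6 N/L$, and $C^{-L}\delta^{2L+1}N/d$ all to dominate $d\ln(N/d)$.
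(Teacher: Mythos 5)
Your proposal is correct and follows essentially the same route as the paper's proof: the same reduction via \Cref{lem:special}, the same decomposition of $\jac\Phi^{(j-1)}(y)\nu$ into its components parallel and orthogonal to $\spann\{\Phi^{(j-1)}(x),\Phi^{(j-1)}(y)\}$ to decouple from $D^{(j)}(x)-D^{(j)}(y)$, the same use of \Cref{lem:projsmall} and \Cref{lem:trbound}, and an equivalent concentration step (the paper invokes Gaussian norm concentration, \cite[Theorem~3.1.1]{vershynin_high-dimensional_2018}, rather than Bernstein directly). The remaining work is exactly the conditioning and union-bound bookkeeping you identify, which the paper carries out as you describe.
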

\begin{proof}
	Recall that by definition we have 
	\[
	\theta_j = \jac \Phi^{(j+1) \to (L-1)}(x)\left(D^{(j)}(x) - D^{(j)}(y)\right)W^{(j)} \jac \Phi^{(j-1)}(y)  \nu.
	\]
	We set 
	\[
	v \defeq \left(D^{(j)}(x) - D^{(j)}(y)\right)W^{(j)} \jac \Phi^{(j-1)}(y) \nu.
	\]
	Applying \Cref{lem:special}, we conclude 
	\[
	\frac{1}{2} \cdot \mnorm{v}_2 \leq \mnorm{\theta_j}_2 \leq \ee \cdot \mnorm{v_2} 
	\]
	with probability at least $1-\exp(-c_1 \cdot N/L^2)$ with an absolute constant $c_1 > 0$. 
	Here, we used
	\[
	d\ln^2(N/d) \geq d \ln(N/d) \geq L \cdot \ln(\ee L)
	\]
	and hence
	\[
	N \geq C \cdot d \cdot L \cdot \ln^2(N/d) \cdot \delta^{-6} \geq C \cdot L^2 \cdot \max \ln(\ee L)
	\]
	by taking $C$ large enough.
	We call this event $\E_1$.
	
	We let $\pi \defeq \pi^{(j-1)}$ be the orthogonal projection onto $\spann\{\Phi^{(j-1)}(x),\Phi^{(j-1)}(y)\}$as defined in \Cref{lem:projsmall} and set 
	\[
	v' \defeq \jac \Phi^{(j-1)}(y) \nu.
	\]
	We then get, by decomposing $v' = \pi(v') + \pi^\perp(v')$, 
	\begin{align*}
		\mnorm{\left(D^{(j)}(x) - D^{(j)}(y)\right) W^{(j)} \pperp(v')}_2 - \mnorm{ W^{(j)} \pi(v')}_2 \leq \mnorm{v}_2 \quad \text{and} \\
		\mnorm{v}_2 \leq \mnorm{\left(D^{(j)}(x) - D^{(j)}(y)\right) W^{(j)} \pperp(v')}_2 + \mnorm{ W^{(j)} \pi(v')}_2
	\end{align*}
	using the (reverse) triangle inequality. 
	
	We note that according to \Cref{lem:projsmall} we have 
	\[
	\mnorm{\pi(v')}_2 \leq C_2^{L} \cdot \delta^{-L} \cdot \frac{\sqrt{t}}{\sqrt{N}}
	\]
	with probability at least $1-\exp(-c_2 \cdot t)$ for every $C_2 \cdot \ln(\ee L) \leq t \leq \delta^6 N/L$, where $C_2,c_2>0$ are absolute constants. 
	As above, we here used that $d\ln^2(N/d) \geq L \cdot \ln(\ee L)$ which implies
	\[
	N \geq C \cdot d \cdot L \cdot \ln^2(N/d) \cdot \delta^{-6} \geq C \cdot L \cdot \ln(\ee L) \cdot \delta^{-6}.
	\]
	Let $c_3>0$ be another constant which is determined later. 
	By plugging in explicitly 
	\[
	t \defeq c_3^2 \cdot C_2^{-2L} \cdot \delta^{2L+4} \cdot N \cdot \frac{\delta}{16},
	\]
	we observe
	\begin{equation}\label{eq:prop_e1}
		\mnorm{\pi(v')}_2 \leq c_3 \cdot \frac{\sqrt{\delta}}{4} \cdot \delta^{2} \leq c_3 \cdot \frac{\sqrt{\delta}}{4}
	\end{equation}
	with probability at least $1-\exp(-c_2 \cdot c_3^2 \cdot C_2^{-2L} \cdot \delta^{2L+4} \cdot N \cdot \frac{\delta}{16} )$.
	Note here that we use
	\[
	c_3^2 \cdot C_2^{-2L} \cdot \delta^{2L+4} \cdot N \cdot \frac{\delta}{16} \leq \delta^{7} \cdot N \leq \frac{\delta^6 N}{L},
	\]
	where the last inequality follows from $\delta \leq 1/L$.
	Moreover, we used 
	\begin{align*}
		C_2 \cdot \ln(\ee L)  \leq c_3 \cdot C_2^{-2L} \cdot \delta^{2L+4} \cdot N \cdot \frac{\delta}{16} \!
		\quad \! &\Leftrightarrow \quad 
		N \geq 16 C_2^{2L+1} \cdot c_3^{-1} \cdot \delta^{-2L-5} \ln(\ee L) \\
		&\Leftarrow \quad 
		N \geq (16C_2^3 \cdot c_3^{-1})^L \cdot \delta^{-2L-5} \ln(\ee L),
	\end{align*}
	where the latter is also satisfied by assumption, taking $C \geq 16C_2^3 \cdot c_3^{-1}$, since we even have 
	\[
	N \geq C^L \cdot \delta^{-2L-5} \cdot d \ln(N/d).
	\]
	We denote by $\E_2$ the event which is defined by \eqref{eq:prop_e1}.
	Moreover, it holds 
	\[
	\frac{1}{2} \leq \mnorm{v'}_2 \leq \ee
	\]
	with probability at least $1-\exp(-c_1 \cdot N/L^2)$, again using \Cref{lem:special}.
	This event is called $\E_3$. 
	Note that on $\E_2 \cap \E_3$ we have 
	\[
	\frac{1}{4} \leq \frac{1}{2} - \delta/16 \leq \mnorm{v'}_2^2 - \mnorm{\pi(v')}_2^2  =\mnorm{\pi^{\perp}(v')}_2^2 \leq \mnorm{v'}_2^2 \leq \ee^2,
	\]
	which gives us 
	\[
	1/2 \leq \mnorm{\pi^{\perp}(v')}_2 \leq \ee.
	\]
	We let $\E_4$ be the event defined by the property 
	\[
	\Tr \abs{D^{(j)}(x) - D^{(j)}(y)} \geq c_4 \cdot \delta \cdot N,
	\]
	which occurs with probability at least $1-\exp(-c_4 \cdot d\ln(N/d))$ for an absolute constant $c_4>0$, following \Cref{lem:trbound}.
	Note here that all the conditions of \Cref{lem:trbound} are satisfied if we pick $C$ large enough, in particular making $\delta$ small enough. 
	
	We condition on $W^{(0)}, \ldots, W^{(j-1)}$. 
	Firstly, we consider 
	\[
	\mnorm{\left(D^{(j)}(x) - D^{(j)}(y)\right) W^{(j)} \pperp(v')}_2.
	\]
	Note that the random vector $W^{(j)} \pperp(v')$ is independent of the difference $D^{(j)}(x) - D^{(j)}(y)$.
	Therefore, we may condition on $D^{(j)}(x) - D^{(j)}(y)$.
	Since $W^{(j)} \pperp(v') \sim \mathcal{N}(0, \frac{2}{N} \cdot \mnorm{\pperp(v')}_2^2 \cdot I_N)$, we get, using \cite[Theorem~3.1.1]{vershynin_high-dimensional_2018},
	\begin{align*}
		\mnorm{\left(D^{(j)}(x) - D^{(j)}(y)\right) W^{(j)} \pperp(v')}_2 &\leq \sqrt{\frac{2}{N}} \cdot  \mnorm{\pperp(v')}_2 \cdot (\sqrt{\Tr \abs{D^{(j)}(x) - D^{(j)}(y)}} + u) \quad \text{and} \\
		\mnorm{\left(D^{(j)}(x) - D^{(j)}(y)\right) W^{(j)} \pperp(v')}_2 &\geq \sqrt{\frac{2}{N}} \cdot  \mnorm{\pperp(v')}_2 \cdot (\sqrt{\Tr \abs{D^{(j)}(x) - D^{(j)}(y)}} - u) 
	\end{align*}
	with probability at least $1-2\exp(-c_5u^2)$ for every $u \geq 0$, for an absolute constant $c_5>0$.
	We call this event $\E_5 = \E_5(u)$.
	Moreover, still conditioning on $W^{(0)}, \ldots, W^{(j-1)}$, we get 
	\[
	\mnorm{ W^{(j)} \pi(v')}_2 \leq 2 \cdot \mnorm{\pi(v')}_2
	\]
	with probability at least $1-2\exp(-c_6N)$ with an absolute constant $c_6 > 0$, as follows from \cite[Theorem~3.1.1]{vershynin_high-dimensional_2018}.
	We call this event $\E_6$.
	Note that the lower bounds on the probabilities of $\E_5$ and $\E_6$ remain the same if we also consider the
	randomness over $W^{(0)}, \ldots, W^{(j-1)}$, since we did not impose any restrictions on $W^{(0)}, \ldots, W^{(j-1)}$.
	
	Overall, on the event $\E_7 \defeq \E_2 \cap \E_3 \cap \E_4 \cap \E_5 \cap \E_6$, we get 
	\begin{align*}
		\mnorm{v}_2  &\leq \mnorm{\left(D^{(j)}(x) - D^{(j)}(y)\right) W^{(j)} \pperp(v')}_2 +\mnorm{ W^{(j)} \pi(v')}_2 \\
		\overset{\E_5 \cap \E_6}&{\leq} \sqrt{\frac{2}{N}} \cdot  \underbrace{\mnorm{\pperp(v')}_2}_{\leq \mnorm{v'}_2} \cdot (\sqrt{ \Tr \abs{D^{(j)}(x) - D^{(j)}(y)}} + u) + 2 \cdot \mnorm{\pi(v')}_2 \\
		\overset{\E_2 \cap \E_3}&{\leq} \frac{\ee \sqrt{2}}{ \sqrt{N}} \cdot (\sqrt{ \Tr \abs{D^{(j)}(x) - D^{(j)}(y)}} + u) +  c_3 \cdot \frac{\sqrt{\delta}}{2} \\
		\overset{\E_4}&{\leq}\frac{\ee\sqrt{2}}{ \sqrt{N}} \cdot (\sqrt{C_4 \cdot \delta \cdot N} + u) + c_3 \cdot \frac{\sqrt{\delta}}{2}.
	\end{align*}
	and
	\begin{align*}
		\mnorm{v}_2  &\geq \mnorm{\left(D^{(j)}(x) - D^{(j)}(y)\right) W^{(j)} \pperp(v')}_2 - \mnorm{ W^{(j)} \pi(v')}_2 \\
		\overset{\E_5 \cap \E_6}&{\geq} \sqrt{\frac{2}{N}} \cdot  \mnorm{\pperp(v')}_2 \cdot (\sqrt{\Tr \abs{D^{(j)}(x) - D^{(j)}(y)}} - u) - 2 \cdot \mnorm{\pi(v')}_2 \\
		\overset{\E_2 \cap \E_3}&{\geq} \frac{\sqrt{2}}{2 \sqrt{N}} \cdot (\sqrt{\Tr \abs{D^{(j)}(x) - D^{(j)}(y)}} - u) - \frac{c_3}{2} \cdot \sqrt{\delta} \\
		\overset{\E_4}&{\geq}\frac{\sqrt{2}}{2 \sqrt{N}} \cdot (\sqrt{c_4 \cdot \delta \cdot N} - u) - \frac{c_3}{2} \cdot \sqrt{\delta}.
	\end{align*}
	We explicitly pick $u \defeq \frac{\sqrt{c_4 \cdot \delta \cdot N}}{2}$ which gives us 
	\[
	\frac{c_3}{2} \cdot \sqrt{\delta} \leq \frac{\sqrt{c_4} \cdot \sqrt{2}}{4} \cdot \sqrt{\delta} - \frac{c_3}{2} \cdot \sqrt{\delta} \leq \mnorm{v}_2 \leq  C_5 \cdot \sqrt{\delta}
	\]
	by defining $c_3 \defeq \frac{\sqrt{c_4} \cdot \sqrt{2}}{4}$ and by picking $C_5 > 0$ appropriately.
	On $\E_1 \cap \E_7$ we therefore get 
	\[
	\left(\frac{c_3}{4}\right)^2 \cdot \delta \leq \mnorm{\theta_j}_2^2 \leq C_5^2 \cdot \delta.
	\]
	It therefore remains to prove a lower bound for the probability of $\E_1 \cap \E_7$.
	By using the bounds we have shown before, we get the lower bound of
	\begin{align*}
		&\norel 1 - 2\exp(-c_1 \cdot N/L^2) - \exp\left(\!-c_2c_3 \cdot C_2^{-2L} \cdot \delta^{2L+4} \cdot N \cdot \frac{\delta}{16}\!\right) \\
		&- \exp(-c_4 \cdot d\ln(N/d))
		-2\exp(-c_7 \cdot \delta N) - 2\exp(-c_6 \cdot N)
	\end{align*}
	for an absolute constant $c_7>0$.
	By picking $c' \defeq \min\{c_1, c_2c_3 / 16, c_4, c_6, c_7\}$ and using 
	\[
	d \cdot \ln(N/d) \leq C_2^{-2L} \cdot \delta^{2L+5} \cdot N \leq \delta N\quad \text{and} \quad d \cdot \ln(N/d) \leq N/L^2 \leq N
	\]
	we may bound this probability by 
	\[
	1- 8\exp(-c' \cdot d \cdot \ln(N/d)).
	\]
	The final claim follows since we may assume 
	\[
	\ln(8) \leq \frac{c'}{2} \cdot d \cdot \ln(N/d)
	\]
	and by taking $c \defeq \min \{ \frac{c'}{2},\frac{c_3^2}{16} \}$.
\end{proof}
According to the outline described in the beginning of this section, we now aim to study the inner product $\langle \theta_j, \theta_k \rangle$. 
The following lemma is an auxiliary statement about the concentration of the inner product $\langle Vx, Vy \rangle$ (where $V$ is a specific random matrix) around the inner product
$\langle x,y \rangle$.
It follows from an application of Bernstein's inequality. 
\begin{lemma}\label{lem:bern}
	There exists a constant $c>0$ such that the following holds.
	Let $W \in \RR^{N \times k}$ be a random matrix with $W_{i,j} \iid \mathcal{N}(0, 2/N)$ for all indices $i \in \{1,\ldots,N\}$ and $j \in \{1,\ldots,k\}$. 
	Let $x_0 \in \RR^k \setminus \{0\}$ be arbitrary and let $V \defeq \Delta(Wx_0)W$ with $\Delta$ introduced as in \Cref{subsec:lip}.
	Then for every $x,y \in \RR^k \setminus \{0\}$ and $t \geq 0$,
	\[
	\PP \bigg(\abs{\langle Vx, Vy \rangle - \langle x,y \rangle} \geq t\bigg) \leq 2\exp\left
	(-c \cdot N\min\left(\frac{t^2}{\mnorm{x}_2^2\mnorm{y}_2^2},\frac{t}{\mnorm{x}_2\mnorm{y}_2}\right)\right).
	\]
\end{lemma}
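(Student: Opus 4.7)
The plan is to express the inner product as a sum of $N$ independent random variables, identify its mean by a symmetry argument, and then apply Bernstein's inequality for sub-exponential summands.

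First I would write $W = \sqrt{2/N}\, G$ where $G \in \RR^{N \times k}$ has i.i.d.\ standard Gaussian entries, and denote by $g_i$ the $i$-th row of $G$. Since $V = \Delta(Wx_0) W$ kills exactly the rows $i$ with $\langle g_i, x_0\rangle \leq 0$, we obtain the key decomposition
\[
\langle Vx, Vy\rangle \;=\; \sum_{i=1}^{N} Z_i,
\qquad Z_i \defeq \frac{2}{N}\, \langle g_i, x\rangle \langle g_i, y\rangle \,\mathbbm{1}_{\langle g_i, x_0\rangle > 0}.
\]
The $Z_i$ are independent and identically distributed because the rows $g_i$ are.

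Next, I would compute $\EE[Z_i]$. The function $g \mapsto \langle g,x\rangle \langle g,y\rangle$ is even in $g$, while flipping $g \to -g$ swaps $\mathbbm{1}_{\langle g,x_0\rangle>0}$ and $\mathbbm{1}_{\langle g,x_0\rangle<0}$; since $\langle g,x_0\rangle = 0$ almost surely, this symmetry gives
\[
\EE\bigl[\langle g,x\rangle \langle g,y\rangle \,\mathbbm{1}_{\langle g,x_0\rangle>0}\bigr]
\;=\; \tfrac{1}{2}\, \EE\bigl[\langle g,x\rangle \langle g,y\rangle\bigr]
\;=\; \tfrac{1}{2}\, \langle x,y\rangle,
\]
so that $\EE[Z_i] = \langle x,y\rangle / N$ and hence $\EE\langle Vx,Vy\rangle = \langle x,y\rangle$.

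To control the deviation, I would bound the sub-exponential norm of the summands. Since $\langle g_i, x\rangle$ and $\langle g_i, y\rangle$ are Gaussians with sub-gaussian norm proportional to $\mnorm{x}_2$ and $\mnorm{y}_2$, the standard product bound (see \cite[Lemma~2.7.7]{vershynin_high-dimensional_2018}) combined with the trivial truncation by the indicator yields
\[
\mnorm{Z_i}_{\psi_1} \;\leq\; \frac{C_1 \mnorm{x}_2 \mnorm{y}_2}{N}
\]
for an absolute constant $C_1 > 0$. Applying Bernstein's inequality for sub-exponential random variables (\cite[Theorem~2.8.1]{vershynin_high-dimensional_2018}) to the centered sum $\sum_{i=1}^N (Z_i - \EE Z_i)$ then gives
\[
\PP\bigl(\abss{\langle Vx,Vy\rangle - \langle x,y\rangle} \geq t\bigr)
\;\leq\; 2\exp\!\left(-c \min\!\left\{\frac{t^2}{N \mnorm{Z_1}_{\psi_1}^2},\, \frac{t}{\mnorm{Z_1}_{\psi_1}}\right\}\right),
\]
and plugging in the bound on $\mnorm{Z_1}_{\psi_1}$ yields exactly the claimed estimate. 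The computation is essentially routine; the only mild subtlety is the symmetry argument for the mean, which requires treating the (null) event $\{\langle g_i, x_0\rangle = 0\}$ carefully, but this is immediate since $x_0 \neq 0$ implies $\langle g_i, x_0\rangle$ has a continuous distribution.
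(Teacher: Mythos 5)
Your proposal is correct and follows essentially the same route as the paper: decompose $\langle Vx,Vy\rangle$ into a sum of $N$ i.i.d.\ sub-exponential terms, identify the mean, bound the $\psi_1$-norm of each summand via the product of sub-gaussians, and apply Bernstein's inequality. The only cosmetic difference is that you compute the mean directly via the sign-flip symmetry of $g_i$, whereas the paper invokes an external lemma stating that $\sqrt{N}V$ has isotropic rows; both give $\EE[(Vx)_i(Vy)_i]=\langle x,y\rangle/N$, and your implicit use of the centering bound $\mnorm{Z_i-\EE Z_i}_{\psi_1}\lesssim \mnorm{Z_i}_{\psi_1}$ is the same step the paper makes explicit.
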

\begin{proof}
	We first note that according to \cite[Lemma~6.4]{geuchen2024upper}, the matrix $\sqrt{N}V$ has isotropic rows. 
	By definition, this implies $\EE \left[ V_{i,j_1} \cdot V_{i,j_2}\right] = \frac{1}{N} \cdot \mathbbm{1}_{j_1 = j_2}$ for every $i \in \{1,\ldots,N\}$
	and $j_1, j_2 \in \{1,\ldots,k\}$.
	Therefore, for arbitrary $i \in \{1,\ldots,N\}$ we get 
	\begin{align*}
		\EE \left[(Vx)_i \cdot (Vy)_i\right] &= \EE \left[ \left(\sum_{j_1 = 1}^k V_{i,j_1}x_{j_1}\right)\left(\sum_{j_2 = 1}^k V_{i,j_2}y_{j_2}\right)\right]
		= \sum_{j_1, j_2 = 1}^k\EE \left[ V_{i,j_1}\cdot V_{i,j_2}\right] \cdot x_{j_1} \cdot y_{j_2} \\
		&= \frac{1}{N} \cdot \sum_{j=1}^k x_j y_j = \frac{1}{N} \langle x,y \rangle.
	\end{align*}
	We can thus rewrite 
	\[
	\PP \bigg(\abs{\langle Vx, Vy \rangle - \langle x,y \rangle} \geq t\bigg) = \PP \bigg(\abs{\sum_{i=1}^N\left( (Vx)_i(Vy)_i - \EE[(Vx)_i(Vy)_i]\right)} \geq t\bigg).
	\]
	Note that the summands are independent. 
	We therefore estimate the right-hand side using Bernstein's inequality \cite[Theorem~2.8.1]{vershynin_high-dimensional_2018}.
	To this end, note that for any $i \in \{1,\ldots,N\}$ we have $\abs{(Vx)_i} \leq \abs{(Wx)_i}$ by definition of $\Delta(Wx_0)$ and therefore 
	\[
	\mnorm{(Vx)_i}_{\psi_2} \leq \mnorm{(Wx)_i}_{\psi_2} \leq C_1 \cdot \frac{\mnorm{x}_2}{\sqrt{N}}
	\]
	and similarly $\mnorm{(Vy)_i}_{\psi_2} \leq C_1 \cdot \frac{\mnorm{y}_2}{\sqrt{N}}$ with an absolute constant $C_1 > 0$.
	Here we used that $(Wx)_i \sim \mathcal{N}(0, \frac{2}{N} \mnorm{x}_2^2)$ due to rotational invariance of the Gaussian distribution. 
	We can thus apply \cite[Lemma~2.7.7]{vershynin_high-dimensional_2018} to conclude that the product $(Wx)_i(Wy)_i$ is sub-exponential with 
	\[
	\mnorm{(Vx)_i(Vy)_i}_{\psi_1} \leq \mnorm{(Vx)_i}_{\psi_2} \cdot \mnorm{(Vy)_i}_{\psi_2} \leq \frac{C_1}{N} \cdot \mnorm{x}_2 \cdot \mnorm{y}_2,
	\]
	after possibly increasing the constant $C_1$.
	Moreover, applying the centering technique \remove{\cite[Exercise~2.7.10]{vershynin_high-dimensional_2018}} we obtain 
	\[
	\mnorm{(Vx)_i(Vy)_i - \EE \left[ (Vx)_i(Vy)_i \right]}_{\psi_1} \leq \frac{C_1}{N} \cdot \mnorm{x}_2 \cdot \mnorm{y}_2
	\]
	after possibly increasing the constant $C_1$ again. 
	Hence, by Bernstein's inequality, 
	\begin{align*}
		\PP \bigg(\abs{\sum_{i=1}^N\left( (Vx)_i(Vy)_i - \EE[(Vx)_i(Vy)_i]\right)} \geq t\bigg) 
		\leq 2\exp \left(-c \cdot \min \left(\frac{ N \cdot t^2}{\mnorm{x}_2^2 \cdot \mnorm{y}_2^2}, \frac{Nt}{\mnorm{x}_2 \cdot \mnorm{y}_2}\right)\right),
	\end{align*}
	where $c>0$ is an absolute constant, as was to be shown. 
\end{proof}
This leads to the desired upper bound on the absolute value of the inner product $\abs{\langle \theta_j, \theta_k \rangle}$, which is contained in the following lemma. 

\begin{lemma}\label{lem:inner_prod_bound}
	There exist absolute constants $C,c>0$ such that the following holds. 
	Let $\Phi:\RR^d \to \RR$ be a random zero-bias ReLU network with $L$ hidden layers of width $N$ satisfying \Cref{assum:1}.
	Moreover, let $\delta > 0$ with $\delta \leq \frac{1}{CL}$ be chosen and let 
	$x,y, \nu \in \SS^{d-1}$ with $x_d = y_d = \frac{1}{\sqrt{2}}$,
	$24 \cdot \delta \geq \mnorm{x-y}_2 \geq \delta$ and $\nu \perp \spann\{x,y\}$.
	We assume $N \geq C \cdot d$ and 
	\begin{align*} 
		N \geq C \cdot d \cdot L \cdot \ln^2(N/d) \cdot \delta^{-6}, \quad \text{and} \quad
		N \geq C^L \cdot \delta^{-2L-5} \cdot d \cdot \ln(N/d).
	\end{align*}
	Then for every $j,k \in \{0,\ldots,L-1\}$ with $j \neq k$ we have 
	\[
	\abs{\langle \theta_j, \theta_k \rangle} \leq C \cdot \delta^{3/2}
	\] 
	with probability at least $1-  \exp(- c \cdot d \cdot \ln(N/d))$.
	Here, $\theta_j$ and $\theta_k$ are as defined in \eqref{eq:thetaj}.
\end{lemma}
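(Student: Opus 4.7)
Proof plan. Without loss of generality assume $j < k$. Both $\theta_j$ and $\theta_k$ share the common left factor $M := \jac\Phi^{(k+1)\to(L-1)}(x)$, so writing $\theta_j = M v$ and $\theta_k = M w$ with
\[
v := \jac\Phi^{(j+1)\to(k)}(x)\bigl(D^{(j)}(x) - D^{(j)}(y)\bigr)W^{(j)} \jac\Phi^{(j-1)}(y)\nu, \quad w := \bigl(D^{(k)}(x) - D^{(k)}(y)\bigr)W^{(k)}\jac\Phi^{(k-1)}(y)\nu,
\]
the plan is first to reduce $\langle Mv, Mw\rangle$ to $\langle v, w\rangle$ at negligible cost, and then to exploit the sparse factor $D^{(k)}(x) - D^{(k)}(y)$ hidden inside $w$.

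For the reduction step I apply \Cref{lem:bern} iteratively on the layers $i = k+1,\dots,L-1$: conditional on $W^{(0)},\dots,W^{(i-1)}$, the matrix $D^{(i)}(x)W^{(i)} = \Delta(W^{(i)}\Phi^{(i-1)}(x))W^{(i)}$ fits exactly into the setting of \Cref{lem:bern} with $x_0 = \Phi^{(i-1)}(x)$, and the current images of $v$ and $w$ are then deterministic. Combined with \Cref{lem:special} to keep all intermediate norms comparable to $\mnorm{v}_2, \mnorm{w}_2$, a telescoping union bound yields
\[
\absb{\langle Mv, Mw\rangle - \langle v, w\rangle} \lesssim L\sqrt{d\ln(N/d)/N}\,\mnorm{v}_2\mnorm{w}_2
\]
with probability at least $1 - \exp(-c\,d\ln(N/d))$. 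Running the upper-bound portion of the argument in \Cref{lem:thetajlowbound} on the truncated sub-networks gives $\mnorm{v}_2, \mnorm{w}_2 \lesssim \sqrt{\delta}$, so this error is $\ll \delta^{3/2}$ under the hypothesis $N \ge C\cdot dL\ln^2(N/d)\delta^{-6}$.

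It remains to bound $|\langle v, w\rangle|$. Conditioning on $W^{(0)},\dots,W^{(k-1)}$ and setting $a := \jac\Phi^{(j+1)\to(k-1)}(x)(D^{(j)}(x) - D^{(j)}(y))W^{(j)}\jac\Phi^{(j-1)}(y)\nu$ and $b := \jac\Phi^{(k-1)}(y)\nu$, the identity $D^{(k)}(x)^2 = D^{(k)}(x)$ gives
\[
\langle v, w\rangle = (W^{(k)}a)^T E\,(W^{(k)}b), \qquad E := D^{(k)}(x)\bigl(I - D^{(k)}(y)\bigr),
\]
a diagonal $0/1$ matrix with $\Tr(E) \le \Tr\abs{D^{(k)}(x) - D^{(k)}(y)} \lesssim \delta N$ by \Cref{lem:trbound}. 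Split $a = \pi(a) + \pi^\perp(a)$ and $b = \pi(b) + \pi^\perp(b)$ where $\pi$ is the orthogonal projection onto $\spann\{\Phi^{(k-1)}(x), \Phi^{(k-1)}(y)\}$. \Cref{lem:projsmall} (taken with $t \asymp d\ln(N/d)$) gives $\mnorm{\pi(a)}_2, \mnorm{\pi(b)}_2 \ll \sqrt{\delta}$ under $N \ge C^L\delta^{-2L-5}d\ln(N/d)$, so the three cross-terms involving a projected component are negligible after routine norm estimates on $W^{(k)}$. The surviving term is structurally favorable: since $\pi^\perp(a), \pi^\perp(b) \perp \Phi^{(k-1)}(x), \Phi^{(k-1)}(y)$, the Gaussian vectors $W^{(k)}\pi^\perp(a)$ and $W^{(k)}\pi^\perp(b)$ are independent of $E$, so conditioning on $E$ reduces $(W^{(k)}\pi^\perp(a))^T E (W^{(k)}\pi^\perp(b))$ to a sum of $\Tr(E)$ independent bivariate Gaussian products. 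A Bernstein-type bound in the spirit of \Cref{lem:bern} then yields
\[
(W^{(k)}\pi^\perp(a))^T E (W^{(k)}\pi^\perp(b)) = \tfrac{2\Tr(E)}{N}\langle \pi^\perp(a), \pi^\perp(b)\rangle + \mathrm{err},
\]
where $\abs{\mathrm{err}} \lesssim N^{-1}\sqrt{\Tr(E)}\,\mnorm{a}_2\mnorm{b}_2\sqrt{d\ln(N/d)}$ with probability at least $1 - \exp(-c\,d\ln(N/d))$, and this fluctuation is again $\ll \delta^{3/2}$.

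The main obstacle is controlling the deterministic-looking leading term $\tfrac{2\Tr(E)}{N}\langle \pi^\perp(a), \pi^\perp(b)\rangle$, which by Cauchy--Schwarz is only of order $\delta\,\mnorm{a}_2\mnorm{b}_2 \asymp \delta$; to reach $\delta^{3/2}$ one must establish $|\langle a, b\rangle| \lesssim \sqrt{\delta}$. I plan to do this by a nested repetition of the same reduction at layer $j$: iteratively apply \Cref{lem:bern} through layers $j+1,\dots,k-1$ to collapse $\langle a, b\rangle$ into an inner product localized at layer $j$, and then combine the sparsity bound $\Tr\abs{D^{(j)}(x) - D^{(j)}(y)} \lesssim \delta N$ from \Cref{lem:trbound} with a projection argument mirroring \Cref{lem:projsmall} to extract the extra factor of $\sqrt{\delta}$. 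A final union bound over the $O(L)$ high-probability events produces the claim, with the hypothesis $\delta \le 1/(CL)$ absorbing the $\mathrm{poly}(1/\delta, C^L)$ losses incurred per layer and $N \ge C^L\delta^{-2L-5}d\ln(N/d)$ ensuring that every tail probability is of order $\exp(-c\,d\ln(N/d))$.
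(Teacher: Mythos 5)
Your reduction through the top layers via \Cref{lem:bern}, the identity $\langle v,w\rangle = (W^{(k)}a)^T E\,(W^{(k)}b)$ with $E = D^{(k)}(x)(I-D^{(k)}(y))$, and the split along the projection onto $\spann\{\Phi^{(k-1)}(x),\Phi^{(k-1)}(y)\}$ all match the paper's argument, and the cross-terms are handled correctly. The problem is your final paragraph. You assert that the leading term $\tfrac{2\Tr(E)}{N}\langle \pi^\perp(a),\pi^\perp(b)\rangle$ is ``only of order $\delta\,\mnorm{a}_2\mnorm{b}_2 \asymp \delta$'' and that one must separately prove $\abs{\langle a,b\rangle}\lesssim\sqrt{\delta}$ by a nested repetition of the whole argument at layer $j$. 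That assessment is a miscount: $a$ still contains the sparse factor $D^{(j)}(x)-D^{(j)}(y)$, i.e.\ it is exactly a $\theta_j$-type expression for the truncated network with $k-1$ hidden layers, so the upper bound of \Cref{lem:thetajlowbound} --- which you already invoked for $v$ and $w$ --- gives $\mnorm{a}_2\lesssim\sqrt{\delta}$, while $\mnorm{b}_2\lesssim 1$ by \Cref{lem:special}. Hence $\abs{\langle\pi^\perp(a),\pi^\perp(b)\rangle}\le\mnorm{a}_2\mnorm{b}_2\lesssim\sqrt{\delta}$ by Cauchy--Schwarz, and the leading term is already $\lesssim\delta\cdot\sqrt{\delta}=\delta^{3/2}$. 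The nested argument you describe is therefore unnecessary; since you only sketch it as a plan rather than execute it, the proof as written is incomplete precisely at the step where no work was actually needed.

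For comparison, the paper avoids the Bernstein centering on the surviving term altogether: it bounds $\absb{\pi^\perp(z_1)^T(W^{(k)})^T\tilde D W^{(k)}\pi^\perp(z_2)}$ by Cauchy--Schwarz over the support $I$ of $\tilde D$, uses that $W^{(k)}\pi^\perp(z_i)$ is independent of $I$ to get each restricted sum $\bigl(\sum_{i\in I}(W^{(k)}\pi^\perp(z_j))_i^2\bigr)^{1/2}\lesssim\sqrt{\#I/N}\,\mnorm{z_j}_2$, and concludes with $\#I\lesssim\delta N$ (from \Cref{lem:trbound}) together with $\mnorm{z_1}_2\lesssim\sqrt{\delta}$ and $\mnorm{z_2}_2\lesssim 1$. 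Your centering route works equally well once the leading term is sized correctly, but it buys nothing over the paper's more direct estimate.
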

\begin{proof}
	By symmetry, we may assume $j < k$.
	
	We first show via induction over $\ell \in \{0, \dots, L-1-k\}$ that with probability at least $1-6\ell \cdot \exp(-c_1 \cdot \delta^3 N/L^2)$, 
	\begin{align*}
		&\norel\abs{\langle \theta_j, \theta_k\rangle} \\
		&\leq \! \Big\vert\Big\langle \jac \Phi^{(j+1) \to (L-1-\ell)}(x)\!\left(D^{(j)}(x) - D^{(j)}(y)\right)\! W^{(j)}\jac \Phi^{(j-1)}(y) \nu, \\
		&\hspace{1cm}\jac \Phi^{(k+1) \to (L-1-\ell)}(x)\left(D^{(k)}(x) - D^{(k)}(y)\right)W^{(k)}\jac \Phi^{(k-1)}\nu\Big\rangle\Big\vert \!+ \! \delta^{3/2} \cdot \ell/L.
	\end{align*}
	Note that case $\ell = 0$ is trivial and follows directly from the definition of $\theta_j$ and $\theta_k$. 
	Hence, we move to the induction step. 
	Take $\ell \in \{1, \dots, L-1-k\}$ and assume that the claim is true for $\ell -1$. 
	Moreover, let 
	\begin{align*}
		V &\defeq D^{(L-\ell)}(x)W^{(L-\ell)}, \\
		\quad v_1 &\defeq \jac \Phi^{(j+1) \to (L- \ell - 1)}(x)\left(D^{(j)}(x) - D^{(j)}(y)\right)W^{(j)} \jac \Phi^{(j-1)}(y) \nu, \\
		\quad v_2 &\defeq \jac \Phi^{(k+1) \to (L - \ell - 1)}(x)\left(D^{(k)}(x) - D^{(k)}(y)\right)W^{(k)} \jac \Phi^{(k-1)}(y) \nu.
	\end{align*}
	By the induction hypothesis, 
	\begin{equation}\label{eq:indhyp}
		\abs{\langle \theta_j, \theta_k\rangle} \leq \langle Vv_1, Vv_2 \rangle + \frac{\ell - 1}{L} \cdot \delta^{3/2}
	\end{equation}
	with probability at least $1- 6(\ell - 1) \exp(-c_1 \cdot \delta^3 N/L^2)$.
	Firstly, we note that 
	\begin{align*}
		\mnorm{v_1}_2 &\leq \mnorm{ \jac \Phi^{(j) \to (L-\ell - 1)}(x) \jac \Phi^{(j-1)}(y)  \nu}_2 + \mnorm{ \jac \Phi^{(j+1) \to (L-\ell-1)}(x) \jac \Phi^{(j)}(y)  \nu}_2 \leq 4
	\end{align*}
	with probability at least $1-2\exp(-c_1 \cdot N/L^2)$ for some absolute constant $c_1 > 0$, using \Cref{lem:special}.
	The same bound can be obtained for $v_2$.
	Conditioning on $W^{(0)}, \ldots, W^{(L-\ell -2)}$ and assuming $\mnorm{v_1}_2, \mnorm{v_2}_2 \leq 4$, we can thus infer 
	\begin{equation*}
		\abs{\langle Vv_1, Vv_2 \rangle} \leq \abs{\langle v_1, v_2 \rangle} + t
	\end{equation*}
	with probability at least $1-2\exp(-c_1 \cdot N \cdot \min(t,t^2))$ for any $t \geq 0$, after possibly decreasing $c_1$.
	Here, we applied \Cref{lem:bern}.
	Note here that in order to apply \Cref{lem:bern}, we need to require $v_1 \neq 0 \neq v_2$, but the high probability bound stated above remains true
	if $v_1 = 0$ or $v_2 = 0$.
	By reintroducing the randomness over $W^{(0)},\ldots,W^{(L-\ell - 1)}$, we get that the event above holds with probability at least 
	$1- 2\exp(-c_1 \cdot N \cdot \min(t,t^2)) - 4\exp(-c_1 \cdot N/L^2)$.
	Letting $t \defeq \delta^{3/2}/L$ (which implies $t^2 \leq t$), we obtain 
	\[
	\abs{\langle Vv_1, Vv_2 \rangle} \leq \abs{\langle v_1, v_2 \rangle} + \delta^{3/2}/L
	\] 
	with probability at least $1-6\exp(-c_1 \cdot N\delta^3/L^2)$, where we also used $N\delta^3 / L^2 \leq N/L^2$.
	Combining this with \eqref{eq:indhyp} proves the induction step. 
	
	We then let 
	\begin{align*}
		w_1 &\defeq \jac \Phi^{(j+1) \to (k)}(x)\left(D^{(j)}(x) - D^{(j)}(y)\right)W^{(j)} \jac \Phi^{(j-1)}(y) \nu, \\
		w_2 &\defeq \left(D^{(k)}(x) - D^{(k)}(y)\right)W^{(k)} \jac \Phi^{(k-1)}(y) \nu
	\end{align*}
	and define $\E_1$ to be the event on which
	\begin{equation*}\label{eq:inprod}
		\abs{\langle \theta_j, \theta_k \rangle} \leq \abs{\langle w_1, w_2 \rangle} + \delta^{3/2}.
	\end{equation*}
	By what we have just shown, 
	\[
	\PP(\E_1) \geq 1-6L \cdot \exp(-c_1 \cdot \delta^3 N/L^2).
	\]
	
	Let $\tilde{D} \defeq D^{(k)}(x)\left(D^{(k)}(x) - D^{(k)}(y)\right)$ and moreover, set 
	\begin{align*}
		z_1 &\defeq \jac \Phi^{(j+1) \to (k-1)}(x)\left(D^{(j)}(x) - D^{(j)}(y)\right)W^{(j)} \jac \Phi^{(j-1)}(y)  \nu \\
		z_2 &\defeq \jac \Phi^{(k-1)}(y) \nu.
	\end{align*}
	Let $\pi = \pi^{(k-1)}$ denote the orthogonal projection onto $\spann\{\Phi^{(k-1)}(x), \Phi^{(k-1)}(y)\}$ as it was defined in \Cref{lem:projsmall}.
	We then get 
	\begin{align}
		\abs{\langle w_1, w_2 \rangle} &= \abs{z_1^T (W^{(k)})^T \tilde{D} W^{(k)}z_2} \nonumber\\
		&= \abs{\pi(z_1)^T (W^{(k)})^T \tilde{D} W^{(k)}\pi(z_2)} +  \abs{\pperp(z_1)^T (W^{(k)})^T \tilde{D} W^{(k)}\pi(z_2)} \nonumber\\
		&\hspace{0.5cm}+ \abs{\pi(z_1)^T (W^{(k)})^T \tilde{D} W^{(k)}\pperp(z_2)}
		+ \abs{\pperp(z_1)^T (W^{(k)})^T \tilde{D} W^{(k)}\pperp(z_2)} \nonumber\\
		&\leq \mnorm{W^{(k)}\pi(z_1)}_2 \mnorm{W^{(k)}\pi(z_2)}_2 +\mnorm{W^{(k)}\pperp(z_1)}_2 \mnorm{W^{(k)}\pi(z_2)}_2 \nonumber\\
		&\hspace{0.5cm}+ \mnorm{W^{(k)}\pi(z_1)}_2 \mnorm{W^{(k)}\pperp(z_2)}_2  + \abs{\pperp(z_1)^T (W^{(k)})^T \tilde{D} W^{(k)}\pperp(z_2)} \nonumber\\
		&\leq 4 \mnorm{\pi(z_1)}_2 \cdot \mnorm{\pi(z_2)}_2 + 4 \mnorm{\pperp(z_1)}_2 \cdot \mnorm{\pi(z_2)}_2 + 4 \mnorm{\pi(z_1)}_2 \cdot \mnorm{\pperp(z_2)}_2 \nonumber\\
		&\hspace{0.5cm}+\abs{\pperp(z_1)^T (W^{(k)})^T \tilde{D} W^{(k)}\pperp(z_2)} \nonumber\\
		&\leq 4 \mnorm{\pi(z_1)}_2 \cdot \mnorm{\pi(z_2)}_2 + 4 \mnorm{z_1}_2 \cdot \mnorm{\pi(z_2)}_2 + 4 \mnorm{\pi(z_1)}_2 \cdot \mnorm{z_2}_2 \nonumber\\
		\label{eq:laststep}
		&\hspace{0.5cm}+\abs{\pperp(z_1)^T (W^{(k)})^T \tilde{D} W^{(k)}\pperp(z_2)}  ,
	\end{align}
	where the second last inequality holds with probability $1-12\exp(-c_2N)$ by conditioning on
	the weight matrices $W^{(0)}, \ldots, W^{(k-1)}$ and only taking the randomness 
	with respect to $W^{(k)}$ \cite[Theorem~3.1.1]{vershynin_high-dimensional_2018}.
	Here, $c_2>0$ is an absolute constant. 
	We call $\E_2$ the event defined by that property. 
	The intuition is now that in the above decomposition only the last summand actually matters and the other summands can be upper bounded using 
	\Cref{lem:projsmall}.
	
	Let us therefore start by studying the last summand.
	Set 
	\[
	I \defeq  \left\{ i \in \{1,\ldots,N\}: \ \tilde{D}_{i,i} \neq 0 \right\}.
	\]
	We firstly then note that 
	\begin{align*}
		\abs{\pperp(z_1)^T (W^{(k)})^T \tilde{D} W^{(k)}\pperp(z_2)} &= \abs{\sum_{i \in I}\left(W^{(k)}\pperp(z_1)\right)_i \left(W^{(k)}\pperp(z_2)\right)_i } \\
		&\leq \left(\sum_{i \in I}\left(W^{(k)}\pperp(z_1)\right)_i^2\right)^{1/2}\left(\sum_{i \in I}\left(W^{(k)}\pperp(z_2)\right)_i^2\right)^{1/2},
	\end{align*}
	using the Cauchy-Schwarz inequality. 
	
	We now condition on $W^{(0)}, \ldots, W^{(k-1)}$.
	Note that, when considering randomness only with respect to $W^{(k)}$, the matrix $\tilde{D}$ is then independent
	of $(W^{(k)} \pperp(z_1),W^{(k)} \pperp(z_2))$.
	Since $W^{(k)}\pperp(z_1) \sim \mathcal{N}(0, \frac{2}{N}\mnorm{\pperp(z_1)}_2^2)$ and
	$W^{(k)}\pperp(z_2) \sim \mathcal{N}(0, \frac{2}{N}\mnorm{\pperp(z_2)}_2^2)$,
	we conclude, by conditioning on $\tilde{D}$, that
	\begin{align}\label{eq:Iprop}
		\left(\sum_{i \in I}\left(W^{(k)}\pperp(z_1)\right)_i^2\right)^{1/2}\left(\sum_{i \in I}\left(W^{(k)}\pperp(z_2)\right)_i^2\right)^{1/2}
		&\leq  \frac{2}{N} \cdot \mnorm{\pperp(z_1)}_2 \cdot \mnorm{\pperp(z_2)}_2 \cdot (\sqrt{I} + u)^2 \nonumber \\
		&\leq  \frac{4}{N} \cdot \mnorm{\pperp(z_1)}_2 \cdot \mnorm{\pperp(z_2)}_2 \cdot (I + u^2)
	\end{align}
	with probability at least $1-4\exp(-c_3 \cdot u^2)$ for every $u \geq 0$ for an absolute constant $c_3>0$ \cite[Theorem~3.1.1]{vershynin_high-dimensional_2018}.
	Note that this bound remains true if we reintroduce the randomness over $\tilde{D}$ and $W^{(0)},\ldots,W^{(k-1)}$, since we did not impose any restrictions on 
	these random variables. 
	We call the event defined by property \eqref{eq:Iprop} $\E_3(u)$.
	Moreover, note that 
	\begin{align*}
		\# I &= \# \left\{ i \in \{1,\ldots,N\}: \ D^{(k)}(x)_{i,i} = 1 \text{ and } D^{(k)}(y)_{i,i} = 0\right\} \\
		&\leq \# \left\{ i \in \{1,\ldots,N\}: \ D^{(k)}(x)_{i,i} \neq D^{(k)}(y)_{i,i} \right\} \leq C_4 \cdot \delta N
	\end{align*}
	holds with probability at least $1-\exp(-c_4 \cdot d \cdot \ln(N/d))$, using \Cref{lem:trbound} with absolute constants $C_4, c_4 > 0$.
	We call this event $\E_4$.
	Therefore, with $u \defeq \sqrt{N \cdot \delta}$ and $\E_3 = \E_3(u)$, on $\E_5 \defeq \E_3 \cap \E_4$ we obtain
	\[
	\abs{\pperp(z_1)^T (W^{(k)})^T \tilde{D} W^{(k)}\pperp(z_2)} \leq C_5 \cdot \delta  \cdot \mnorm{z_1}_2 \mnorm{z_2}_2
	\]
	for an absolute constant $C_5 > 0$.
	The probability of that event can be lower bounded by 
	\[
	1-4\exp(-c_5 \cdot \delta N) - \exp(-c_5 \cdot d \ln(N/d)) \geq 1-5\exp(-c_5 \cdot d \ln(N/d))
	\]
	by letting $c_5 \defeq \min\{c_3,c_4\}$,
	since we assume $d \cdot \ln(N/d) \leq \delta N$.
	
	We now deal with $\mnorm{z_1}_2$ and $\mnorm{z_2}_2$.
	$z_1$ may be viewed as an expression $\theta_j$ for a network with $k-1$ hidden layers. 
	Since $k \leq L-1$, we may therefore use \Cref{lem:thetajlowbound} and infer 
	\[
	\mnorm{z_1}_2 \leq C_6 \cdot \sqrt{\delta}
	\]
	with probability at least $1- \exp(-c_6 \cdot d\ln(N/d))$ for absolute constants $C_6, c_6 > 0$.
	We call the event defined by that property $\E_6$.
	Moreover, 
	\[
	\mnorm{z_2}_2 \leq \ee
	\]
	with probability at least $1-\exp(-c_7 \cdot N/L^2)$ according to \Cref{lem:special}, and we call that event $\E_7$.
	Therefore, on $\E_8 \defeq \E_5 \cap \E_6 \cap \E_7$, we obtain 
	\[
	\abs{\pperp(z_1)^T (W^{(k)})^T \tilde{D} W^{(k)}\pperp(z_2)} \leq C_8 \cdot \delta^{3/2} 
	\]
	with an absolute constant $C_8 > 0$ and we can lower bound the probability of that event by 
	\[
	1 - 5\exp(-c_5 \cdot d \ln(N/d)) - \exp(-c_6 \cdot d\ln(N/d)) -\exp(-c_7 \cdot N/L^2) 
	\geq 1 - 7\exp(-c_8 \cdot d\ln(N/d)).
	\]
	We have thus established a convenient bound on the last summand in \eqref{eq:laststep}. 
	
	The other three summands can be handled simultaneously;
	by linearity of $\pi$,
	\begin{align*}
		\mnorm{\pi(z_1)}_2 &\leq \mnorm{\pi \left(\jac \Phi^{(j) \to (k-1)}(x)\jac \Phi^{(j-1)}(y)  \nu\right)}_2 
		+\mnorm{\pi \left(\jac \Phi^{(j+1) \to (k-1)}(x)\jac \Phi^{(j)}(y)  \nu\right)}_2,
	\end{align*}
	and hence by \Cref{lem:projsmall},
	\[
	\max\{\mnorm{\pi(z_1)}_2,\mnorm{\pi(z_2)}_2\} \leq C_9^L \cdot \delta^{-L} \cdot \frac{\sqrt{t}}{\sqrt{N}}
	\]
	with probability at least $1-3\exp(-c_9 \cdot t)$ for absolute constants $C_9, c_9 > 0$ for every choice of $t$ satisfying $C_9 \cdot \ln(\ee L) \leq t \leq \delta^6 \cdot N/L$.
	We plug in $t \defeq N \cdot C_9^{-2L} \cdot \delta^{2L + 5}$, which gives us 
	\[
	\max\{\mnorm{\pi(z_1)}_2,\mnorm{\pi(z_2)}_2\} \leq \delta^{5/2} \leq \delta^{3/2}
	\]
	with probability at least $1-3\exp(-c_9 \cdot C_9^{-2L} \cdot \delta^{2L+5} \cdot N)$. 
	Here, we used once more that 
	\[
	t = N \cdot C_9^{-2L} \cdot \delta^{2L + 5} \leq \delta^7 N \leq \delta^6 N / L
	\]
	and 
	\[
	t = N \cdot C_9^{-2L} \cdot \delta^{2L + 5} \geq C \cdot \ln(\ee L),
	\]
	where the last inequality is satisfied since we even have 
	\[
	N \geq C^L \cdot \delta^{-2L - 5} \cdot d \cdot \ln(N/d) \geq C \cdot \ln(\ee L).
	\]
	We call the event on which this property is satisfied $\E_9$.
	Therefore, on $\E_{10} \defeq \E_8 \cap \E_9$ we get, since $\E_8 \subseteq \E_6 \cap \E_7$, that 
	\begin{align*}
		4 \mnorm{\pi(z_1)}_2 \cdot \mnorm{\pi(z_2)}_2 + 4 \mnorm{z_1}_2 \cdot \mnorm{\pi(z_2)}_2 + 4 \mnorm{\pi(z_1)}_2 \cdot \mnorm{z_2}_2
		&\leq 4\delta^3 + 4C_6 \cdot \delta^{2} + 4\ee \cdot \delta^{3/2} \\
		&\leq C_{10} \cdot \delta^{3/2}
	\end{align*}
	for some absolute constant $C_{10} > 0$.
	Overall, on $\E_1 \cap \E_2 \cap \E_{10}$, we have 
	\begin{align*}
		&\norel\abs{\langle \theta_j, \theta_k \rangle}\\
		&\leq \abs{\langle w_1, w_2 \rangle} + \delta^{3/2} \\
		&\leq 4 \mnorm{\pi(z_1)}_2 \cdot \mnorm{\pi(z_2)}_2 + 4 \mnorm{z_1}_2 \cdot \mnorm{\pi(z_2)}_2 + 4 \mnorm{\pi(z_1)}_2 \cdot \mnorm{z_2}_2
		\\
		&\hspace{0.5cm}+\abs{\pperp(z_1)^T (W^{(k)})^T \tilde{D} W^{(k)}\pperp(z_2)} + \delta^{3/2}\\
		&\leq  C_{10} \cdot \delta^{3/2}  + C_8 \cdot \delta^{3/2} + \delta^{3/2} \\
		&\leq C_{11} \cdot \delta^{3/2} 
	\end{align*}
	for an absolute constant $C_{11}>0$
	and we can lower bound the probability of that event by 
	\begin{align*}
		&1 - 4 \cdot L \cdot \exp(-c_1 \cdot \delta^3N /L^2) - 12\exp(-c_2 N) - 7\exp(-c_8 \!\cdot\! d \!\cdot\! \ln(N/d)) - 3\exp(-c_9 \cdot C_9^{-2L} \!\cdot \delta^{2L+5} \!\cdot N) \\
		&\geq 1-26L \cdot \exp(-c_{10} \cdot d \cdot \ln(N/d)),
	\end{align*}
	setting $c_{10} \defeq \min \{c_1, c_2, c_8, c_9\}$.
	Here, we used the condition 
	\[
	d \cdot \ln(N/d) \leq C^{-L} \cdot \delta^{2L+5} \cdot N.
	\]
	In the end, the claim follows by letting $c = c_{10}/2$ and since 
	\[
	\ln(\ee L) \leq  C \cdot L \cdot \ln(\ee L) \leq d \cdot \ln(N/d),
	\]
	taking $C$ large enough.
\end{proof}
We can now lower bound the distance $\mnorm{\jac \Phi^{(L-1)}(x)\nu- \jac \Phi^{(L-1)}(y)\nu}_2$ for two fixed inputs $x$ and $y$ satisfying certain conditions. 
\begin{theorem}\label{thm:dist_two_vec}
	There exist absolute constants $C,c>0$ such that the following holds. 
	Let $\Phi:\RR^d \to \RR$ be a random zero-bias ReLU network with $L$ hidden layers of width $N$ satisfying \Cref{assum:1}.
	Moreover, let $x,y, \nu \in \SS^{d-1}$ with $x_d = y_d = \frac{1}{\sqrt{2}}$,
	$ 24 \cdot \delta \geq \mnorm{x-y}_2 \geq \delta$, where $\delta \leq c \cdot \frac{1}{L^2}$, and $\nu \perp \spann\{x,y\}$.
	We assume $N \geq C \cdot d$ and 
	\begin{align*} 
		N \geq C \cdot d \cdot L \cdot \ln^2(N/d) \cdot \delta^{-6}, \quad \text{and} \quad 
		N \geq C^L \cdot \delta^{-2L-5} \cdot d \cdot \ln(N/d).
	\end{align*}
	Then we have 
	\[
	\mnorm{\jac \Phi^{(L-1)}(x)\nu- \jac \Phi^{(L-1)}(y)\nu}_2 \geq c \cdot \sqrt{L \cdot \delta}
	\] 
	with probability at least $1-  \exp(- c \cdot d \ln(N/d))$.
\end{theorem}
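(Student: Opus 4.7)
The natural approach is to combine the decomposition from \Cref{lem:decomp} with the pointwise estimates from \Cref{lem:thetajlowbound,lem:inner_prod_bound}. By \Cref{lem:decomp} applied with $\ell_1 = 0$ and $\ell_2 = L-1$, we have
\[
\jac \Phi^{(L-1)}(x)\nu - \jac \Phi^{(L-1)}(y)\nu = \sum_{j=0}^{L-1} \theta_j,
\]
where $\theta_j$ is as defined in \eqref{eq:thetaj}. Hence
\[
\mnorm{\jac \Phi^{(L-1)}(x)\nu - \jac \Phi^{(L-1)}(y)\nu}_2^2 = \sum_{j=0}^{L-1} \mnorm{\theta_j}_2^2 + 2 \sum_{0 \le j < k \le L-1} \langle \theta_j, \theta_k\rangle.
\]
The goal is to show that the ``diagonal'' contribution $\sum_j \mnorm{\theta_j}_2^2$ is of order $L \cdot \delta$, while the ``off-diagonal'' contribution is strictly smaller, leading to the claimed bound.

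For the diagonal sum, \Cref{lem:thetajlowbound} provides absolute constants $c_1, C_1 > 0$ such that for each fixed $j \in \{0,\ldots,L-1\}$, the event
\[
c_1 \cdot \delta \leq \mnorm{\theta_j}_2^2 \leq C_1 \cdot \delta
\]
holds with probability at least $1 - \exp(-c_1 \cdot d \ln(N/d))$. A union bound over the $L$ choices of $j$ yields, after ensuring (via our assumption $N \geq C \cdot d$ and the fact that $d \ln(N/d) \gtrsim \ln L$) that $\ln L \leq \tfrac{c_1}{4} \cdot d\ln(N/d)$, a combined failure probability of at most $\exp(-\tfrac{c_1}{2} \cdot d\ln(N/d))$. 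On the resulting event,
\[
\sum_{j=0}^{L-1} \mnorm{\theta_j}_2^2 \geq c_1 \cdot L \cdot \delta.
\]

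For the off-diagonal contribution, \Cref{lem:inner_prod_bound} provides absolute constants $c_2, C_2 > 0$ such that for each fixed pair $j \neq k$,
\[
\abs{\langle \theta_j, \theta_k\rangle} \leq C_2 \cdot \delta^{3/2}
\]
holds with probability at least $1 - \exp(-c_2 \cdot d \ln(N/d))$. Taking a union bound over the at most $L^2$ pairs and again absorbing $\ln(L^2) = 2\ln L$ into a constant fraction of $d\ln(N/d)$, the estimate above holds simultaneously for all pairs with probability at least $1 - \exp(-\tfrac{c_2}{2} \cdot d \ln(N/d))$. On the intersection of all these events,
\[
\mnorm{\jac \Phi^{(L-1)}(x)\nu - \jac \Phi^{(L-1)}(y)\nu}_2^2 \geq c_1 \cdot L \cdot \delta - 2 \binom{L}{2} \cdot C_2 \cdot \delta^{3/2} \geq c_1 \cdot L \cdot \delta - C_2 \cdot L^2 \cdot \delta^{3/2}.
\]
The only delicate point is the choice of the absolute constant governing the assumption $\delta \leq c/L^2$: under this condition, $L^2 \cdot \delta^{3/2} = (L\cdot \delta) \cdot L \cdot \delta^{1/2} \leq (L\cdot \delta)\cdot \sqrt{c}$, so by choosing $c$ small enough (namely $c \leq (c_1/(2C_2))^2$) the off-diagonal contribution is absorbed by half of the diagonal one, giving
\[
\mnorm{\jac \Phi^{(L-1)}(x)\nu - \jac \Phi^{(L-1)}(y)\nu}_2^2 \geq \frac{c_1}{2} \cdot L \cdot \delta,
\]
which yields the claimed bound $\gtrsim \sqrt{L \cdot \delta}$ after taking square roots. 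A final union bound with overall failure probability $\exp(-c \cdot d\ln(N/d))$ completes the proof; the main conceptual point is that the required smallness $\delta \lesssim 1/L^2$ is exactly what is needed to balance the diagonal lower bound $L\delta$ against the worst-case off-diagonal upper bound $L^2 \delta^{3/2}$, and the assumptions of the theorem have been arranged so that \Cref{lem:thetajlowbound,lem:inner_prod_bound} are both applicable.
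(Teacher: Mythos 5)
Your proposal is correct and follows essentially the same route as the paper's own proof: the decomposition from \Cref{lem:decomp} into the $\theta_j$, the diagonal lower bound from \Cref{lem:thetajlowbound}, the off-diagonal bound from \Cref{lem:inner_prod_bound}, a union bound absorbed via $d\ln(N/d)\gtrsim \ln L$, and the balancing of $L\delta$ against $L^2\delta^{3/2}$ using $\delta \lesssim 1/L^2$. No gaps.
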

\begin{proof}
	We note that, using \Cref{lem:decomp},
	\begin{align*}
		\mnorm{\jac \Phi^{(L-1)}(x)\nu- \jac \Phi^{(L-1)}(y)\nu}_2^2 &= \sum_{j= 0}^{L-1} \mnorm{\theta_j}_2^2  + 2 \sum_{j < k} \langle \theta_j, \theta_k \rangle \\
		&\geq \sum_{j= 0}^{L-1} \mnorm{\theta_j}_2^2  - 2 \sum_{j < k} \abs{\langle \theta_j, \theta_k \rangle}.
	\end{align*}
	Using the bounds established in \Cref{lem:inner_prod_bound,lem:thetajlowbound} and a union bound, this results in 
	\[
	\mnorm{\jac \Phi^{(L-1)}(x)\nu- \jac \Phi^{(L-1)}(y)\nu}_2^2 \geq c_1 \cdot L \cdot \delta - C_1 \cdot L^2 \cdot \delta^{3/2} 
	\]
	with probability at least 
	\[
	1-(L+L^2)\exp(-c_1 \cdot d \ln(N/d)) \geq 1- \exp(-c \cdot d \ln(N/d))
	\] 
	with absolute constants $C_1,c_1,c > 0$, since $d \ln(N/d) \geq C \cdot \ln(\ee L)$.
	If $\delta \leq \frac{c_1^2}{(2C_1)^2} \cdot \frac{1}{L^2}$, then 
	\[
	C_1 \cdot L^2 \cdot \delta^{3/2} \leq \frac{c_1 \cdot L \cdot \delta}{2}.
	\]
	This gives us 
	\begin{align*}
		c_1 \cdot L \cdot \delta - C_1 \cdot L^2 \cdot \delta^{3/2} \geq \frac{c_1}{2} \cdot L \cdot \delta,
	\end{align*}
	which implies the claim. 
\end{proof}
We can now establish a lower bound on the packing number of the set $\mathcal{L}$ that was already briefly discussed in the beginning of this section. 
It is essential to note that the covering number increases \emph{exponentially} in the input dimension $d$.
\begin{corollary}\label{corr:cov_low_bound}
	There exist absolute constants $C,c>0$ satisfying the following:
	Let $\Phi: \RR^d \to \RR$ be a random zero-bias ReLU network with $L$ hidden layers of width $N$ satisfying \Cref{assum:1}.
	Moreover, we let $\delta \leq c \cdot \frac{1}{L^2}$.
	We assume $N \geq C \cdot d$ and 
	\begin{align*} 
		N \geq C \cdot d \cdot L \cdot \ln^2(N/d) \cdot \delta^{-6}, \quad \text{and} \quad
		N \geq C^L \cdot \delta^{-2L-5} \cdot d \cdot \ln(N/d).
	\end{align*}
	Let $\mathcal{B}_\Phi \subseteq \SS^{d-1}$ be a random set such that $\PP(x_0 \in \mathcal{B}_\Phi) = 1$ for every $x_0 \in \SS^{d-1}$.
	Consider the set 
	\[
	\mathcal{L} \defeq \left\{\jac \Phi^{(L-1)}(x) \cdot e_1 : \ x \in \mathcal{B}_\Phi\right\}
	\]
	with $e_1$ denoting the first standard basis vector in $\RR^d$.
	Then we get 
	\[
	\mathcal{P}\left(\mathcal{L}, c \cdot \sqrt{L\delta} \right) \geq c \cdot \delta \cdot 2^d.
	\]
	with probability at least $1- \exp(-c \cdot d  \ln(N/d))$. 
\end{corollary}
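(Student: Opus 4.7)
The plan is to combine Theorem~\ref{thm:dist_two_vec} with a standard volume packing argument on a low-dimensional sub-sphere of $\SS^{d-1}$ chosen so that the hypotheses of Theorem~\ref{thm:dist_two_vec} are automatically satisfied for all pairs.

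First I would construct a suitable family of input points. Define
\[
S \defeq \left\{ x \in \SS^{d-1} : x_1 = 0, \ x_d = 1/\sqrt{2} \right\},
\]
which is a Euclidean sphere of radius $1/\sqrt{2}$ and dimension $d-3$. Fix an arbitrary reference point $x_0 \in S$ and set $T \defeq S \cap \overline{B}_d(x_0, 12\delta)$. Since $\delta \leq c/L^2$ is small and the ambient sphere has radius $1/\sqrt{2}$, the spherical cap $T$ is essentially a $(d-3)$-dimensional Euclidean ball of radius $12\delta$, so a standard volume argument (as in \cite[Corollary~4.2.13]{vershynin_high-dimensional_2018} or directly by comparing volumes of half-balls around packing points) produces a $\delta$-separated set $\{x^{(1)}, \dots, x^{(M)}\} \subseteq T$ with
\[
M \geq 12^{d-3} \geq c \cdot \delta \cdot 2^d,
\]
where the last inequality holds because $12^{d-3}/2^d = 6^{d-3}/8$ is bounded below by a positive constant for $d$ sufficiently large (which follows from the hypothesis $N \geq Cd$) and $\delta \leq 1$.

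Second, I would verify the hypotheses of Theorem~\ref{thm:dist_two_vec} for each pair $(x^{(i)}, x^{(j)})$ with $\nu = e_1$: both points lie in $\SS^{d-1}$, both have last coordinate $1/\sqrt{2}$ by construction, the pairwise distance satisfies $\delta \leq \|x^{(i)} - x^{(j)}\|_2 \leq 24\delta$ (the upper bound by the triangle inequality since both lie within $12\delta$ of $x_0$), and $e_1 \perp \spann\{x^{(i)}, x^{(j)}\}$ since the first coordinates of both vanish. Therefore, Theorem~\ref{thm:dist_two_vec} yields, for each fixed pair $i \neq j$,
\[
\mnorm{\jac \Phi^{(L-1)}(x^{(i)}) e_1 - \jac \Phi^{(L-1)}(x^{(j)}) e_1}_2 \geq c \cdot \sqrt{L \delta}
\]
with probability at least $1 - \exp(-c \cdot d \ln(N/d))$. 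A union bound over the at most $M^2 \leq 12^{2d}$ pairs, combined with the observation that $\ln(12^{2d}) \lesssim d \lesssim d \ln(N/d)$ under our hypotheses, shows that this separation holds \emph{simultaneously} for all pairs with probability at least $1 - \exp(-c' \cdot d \ln(N/d))$.

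Finally, since $\PP(x^{(i)} \in \mathcal{B}_\Phi) = 1$ for each individual $x^{(i)}$, with probability one all packing points lie in $\mathcal{B}_\Phi$, so all vectors $\jac \Phi^{(L-1)}(x^{(i)}) e_1$ lie in $\mathcal{L}$. Intersecting with the good event from the union bound, these $M \geq c \delta 2^d$ vectors form a $(c\sqrt{L\delta})$-separated subset of $\mathcal{L}$, which establishes the packing bound. The main technical point (rather than a real obstacle) is controlling the constants in the volume packing so that the quantitative bound $M \gtrsim \delta 2^d$ emerges cleanly; once Theorem~\ref{thm:dist_two_vec} is in hand, the structure of the sub-sphere $S$ enforces all three of its geometric hypotheses (distance, last-coordinate constraint, and orthogonality to $e_1$) for free, which is what makes the argument painless.
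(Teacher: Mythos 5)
Your proposal is correct and follows essentially the same route as the paper's proof: a $\delta$-separated family inside a $12\delta$-cap of the sub-sphere $\{x_1=0,\ x_d=1/\sqrt{2}\}$ (so that the distance, last-coordinate, and orthogonality hypotheses of \Cref{thm:dist_two_vec} hold automatically for every pair with $\nu=e_1$), a union bound over all pairs, and the almost-sure inclusion of the finite packing in $\mathcal{B}_\Phi$. The only difference is cosmetic: where you invoke a generic volume argument for the cap (the paper makes this precise in \Cref{lem:intersec_packing}, obtaining a slightly weaker but still sufficient cardinality of order $\delta\cdot 2^d$), and this does not affect the validity of the argument.
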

\begin{proof}
	Let $C_1, c_1 > 0$ be the constants from \Cref{thm:dist_two_vec}.
	We take $\delta \leq c_1 \cdot \frac{1}{L^2}$. 
	Moreover, we define 
	\[
	\mathcal{A} \defeq \left\{ 0\right\} \times \left(\frac{1}{\sqrt{2}}\SS^{d-3} \cap B_{d-2}\left(\frac{\tilde{e_1}}{\sqrt{2}},12\delta\right)\right) \times \left\{ \frac{1}{\sqrt{2}}\right\} \subseteq \SS^{d-1}
	\]
	where $\tilde{e_1}$ is the first standard basis vector in $\RR^{d-2}$.
	According to \Cref{lem:intersec_packing}, we can pick a $\delta$-packing $P \subseteq \mathcal{A}$ with 
	\[
	\abs{P} \geq \frac{\sqrt{2} \cdot \delta}{18} \cdot 2^{d-3} = \frac{\sqrt{2} \cdot \delta}{144} \cdot 2^{d}.
	\]
	Removing some elements from $P$, we may assume 
	\[
	2^d \geq \frac{\sqrt{2} \cdot \delta}{144} \cdot 2^{d} \geq \abs{P} \geq  \frac{\sqrt{2} \cdot \delta}{288} \cdot 2^{d}.
	\]
	Since $P$ is finite, we may conclude that the event 
	\[
	P \subseteq \mathcal{B}_\Phi
	\]
	occurs with probability $1$.
	This yields that 
	\[
	\left\{ \jac \Phi^{(L-1)}(x) \cdot e_1: \  x \in P\right\} \subseteq \mathcal{L}
	\]
	with probability $1$.
	Note that for each pair of numbers $x,y \in P$ with $x \neq y$ we have $x_d = y_d = \frac{1}{\sqrt{2}}$, $e_1 \perp \spann\{x,y\}$ and $24\delta \geq \mnorm{x-y}_2 \geq \delta$ (
	since $P$ is a $\delta$-packing
	).
	This yields 
	\[
	\mnorm{\jac \Phi^{(L-1)}(x)e_1- \jac \Phi^{(L-1)}(y)e_1}_2 \geq c_1 \cdot \sqrt{L \cdot \delta}
	\]
	with probability at least $1- \exp(-c_1 \cdot d \cdot \ln(N/d))$ where we used \Cref{thm:dist_two_vec}.
	By performing a union bound over all pairs $(x,y) \in P \times P$ with $x \neq y$ and using that $\# P \leq 2^d$, we get that the event 
	\[
	\text{for all } x,y \in P \text{ with } x \neq y: \quad \mnorm{\jac \Phi^{(L-1)}(x)e_1- \jac \Phi^{(L-1)}(y)e_1}_2 \geq c_1 \cdot \sqrt{L \cdot \delta}
	\]
	occurs with probability at least 
	\[
	1-\exp(2\ln(\abs{P})-c_1 \cdot d \cdot \ln(N/d)) \geq 1-\exp\left(2d \cdot \ln(2)-c_1 \cdot d \cdot \ln(N/d)\right) \geq 1-\exp(-c \cdot d \cdot \ln(N/d))
	\]
	for an absolute constant $c > 0$, where $c \leq c_1/2$.
	Therefore, with probability at least $1-\exp(-c \cdot d \cdot \ln(N/d))$, the set 
	\[
	\tilde{P} \defeq \left\{ \jac \Phi^{(L-1)}(x)e_1 : \ x \in P\right\}
	\]
	is a $\left(c_1 \sqrt{L \delta}\right)$-packing of $\mathcal{L}$.
	On the same event, we furthermore get 
	\[
	\abs{\tilde{P}} = \abs{P}\geq \frac{\sqrt{2} \cdot \delta}{288}  \cdot 2^d,
	\]
	which implies the claim. 
\end{proof}
Using Sudakov's minoration inequality and Gaussian concentration, we now get the main result of this section. 
\begin{theorem}\label{thm:b_lower}
	There exist absolute constants $C,c>0$ satisfying the following:
	Let $\Phi:\RR^d \to \RR$ be a random zero-bias ReLU network with $L$ hidden layers of width $N$ following \Cref{assum:1}.
	Moreover, we assume $N \geq C \cdot d$ and 
	\begin{align*} 
		N \geq C^L \cdot d \cdot \ln^2(N/d)\cdot L^{4L+10} \quad \text{and} \quad 
		d \geq C \cdot L.
	\end{align*}
	Let $\mathcal{B}_\Phi \subseteq \SS^{d-1}$ be a random set such that $\PP(x_0 \in \mathcal{B}_\Phi) = 1$ for every $x_0 \in \SS^{d-1}$, where the randomness in $\mathcal{B}_\Phi$ 
	only depends on $W^{(0)}, \dots, W^{(L-1)}$.
	Then, with probability at least $1-\exp(-c \cdot d/L)$,
	\[
	\underset{x \in \mathcal{B}_\Phi}{\sup} \ \langle \nablaa \Phi(x), e_1\rangle
	\geq c \cdot  \frac{1}{\sqrt{L}} \cdot \sqrt{d},
	\]
	where $e_1$ denotes the first standard basis vector in $\RR^d$.
\end{theorem}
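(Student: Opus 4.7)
The plan is to implement the Sudakov strategy sketched after \Cref{thm:main_lower}: condition on $\arrow{W} \defeq (W^{(0)}, \ldots, W^{(L-1)})$, view $\sup_{x \in \mathcal{B}_\Phi} \langle \nablaa \Phi(x), e_1 \rangle$ as the supremum of the Gaussian linear process $v \mapsto \langle g, v \rangle$ with $g \defeq (W^{(L)})^T \sim \mathcal{N}(0, I_N)$ over the (now deterministic) index set
\[
\mathcal{L} = \mathcal{L}(\arrow{W}) \defeq \{\jac \Phi^{(L-1)}(x) e_1 : x \in \mathcal{B}_\Phi\} \subseteq \RR^N,
\]
lower bound this supremum using Sudakov's minoration together with \Cref{corr:cov_low_bound}, and finally convert the expectation bound to a high-probability bound via Gaussian concentration. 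Note that the conditioning is legitimate because the randomness in $\mathcal{B}_\Phi$ (and therefore in $\mathcal{L}$) depends only on $\arrow{W}$, while $g$ is independent of $\arrow{W}$.

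First I will set $\delta \defeq c_0/L^2$ for a small absolute constant $c_0 > 0$ to be fixed, and verify that the hypotheses of \Cref{corr:cov_low_bound} are satisfied under our assumptions: indeed, the condition $N \geq C^L \delta^{-2L-5} d \ln(N/d)$ reduces to $N \geq (Cc_0^{-2})^L \cdot L^{4L+10} \cdot d \ln(N/d)$, which is implied (after absorbing constants into $C$) by the assumption $N \geq C^L L^{4L+10} d \ln^2(N/d)$; the remaining conditions are even easier. \Cref{corr:cov_low_bound} then yields, on an event $\mathcal{E}_1$ with $\PP(\mathcal{E}_1) \geq 1 - \exp(-c_1 d \ln(N/d))$, the packing estimate
\[
\mathcal{P}\bigl(\mathcal{L},\, c_1 / \sqrt{L}\bigr) \geq c_1 L^{-2} \cdot 2^{d}.
\]
Taking logarithms and using the assumption $d \geq C L \geq C \ln L$ gives $\ln \mathcal{P}(\mathcal{L}, c_1/\sqrt{L}) \geq c_2 \, d$. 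Sudakov's minoration \cite[Thm.~7.4.1]{vershynin_high-dimensional_2018} applied to the Gaussian process $v \mapsto \langle g, v \rangle$ on $\mathcal{L}$, conditionally on $\arrow{W}$, therefore yields on $\mathcal{E}_1$
\[
\EE_g \Bigl[\sup_{v \in \mathcal{L}} \langle g, v\rangle\Bigr] \;\geq\; c_3 \cdot \frac{1}{\sqrt{L}} \cdot \sqrt{d}.
\]

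To upgrade this to a high-probability statement I will invoke Gaussian concentration \cite[Thm.~5.2.2]{vershynin_high-dimensional_2018}, noting that $g \mapsto \sup_{v \in \mathcal{L}} \langle g, v\rangle$ is $R$-Lipschitz with $R \defeq \sup_{v \in \mathcal{L}} \mnorm{v}_2 \leq \sup_{x \in \SS^{d-1}} \op{\jac \Phi^{(L-1)}(x)}$. By \Cref{prop:advanced_lip} (applied with the same $\delta$ or any value satisfying its hypotheses, which follow from our assumptions on $N$), there is an event $\mathcal{E}_2$ with $\PP(\mathcal{E}_2) \geq 1 - \exp(-c_4 d \ln(N/d))$ on which $R \leq C_4$. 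On $\mathcal{E}_1 \cap \mathcal{E}_2$, Gaussian concentration together with the Sudakov lower bound gives, for any $t > 0$,
\[
\sup_{v \in \mathcal{L}} \langle g, v \rangle \;\geq\; c_3 \sqrt{d/L} - C_4 t
\]
with conditional probability at least $1 - \exp(-c_5 t^2)$ over $g$. Choosing $t = c_3 \sqrt{d/L}/(2C_4)$ yields $\sup_{v \in \mathcal{L}} \langle g, v \rangle \geq (c_3/2) \sqrt{d/L}$ with conditional probability $\geq 1 - \exp(-c_6 d/L)$.

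The final probability is estimated by combining the three events: since $d \ln(N/d) \geq d/L$ (using $d \geq CL$ and $\ln(N/d) \geq 1$), both $\PP(\mathcal{E}_1^c)$ and $\PP(\mathcal{E}_2^c)$ are bounded by $\exp(-c' d/L)$, yielding the desired total probability $1 - \exp(-c \, d/L)$. I expect the main obstacle to be the bookkeeping of the $\delta$-choice: one must simultaneously arrange that (i) $\delta \lesssim 1/L^2$ so that \Cref{corr:cov_low_bound} is applicable, (ii) $\delta^{-2L}$ does not destroy the packing bound's dependence on $N$ in conjunction with the hypothesis, and (iii) the resulting Sudakov scale $\sqrt{L\delta}$ produces the claimed rate $\sqrt{d/L}$ — which is exactly why the exponential width requirement $N \geq C^L L^{4L+10} d \ln^2(N/d)$ is forced.
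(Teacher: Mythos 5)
Your proposal is correct and follows essentially the same route as the paper's proof: condition on $(W^{(0)},\dots,W^{(L-1)})$, take $\delta \asymp 1/L^2$ in \Cref{corr:cov_low_bound} to get a packing of $\mathcal{L}$ at scale $\asymp 1/\sqrt{L}$ of cardinality exponential in $d$, apply Sudakov's minoration, and then upgrade to a high-probability bound via Gaussian concentration with the Lipschitz constant controlled by the uniform Jacobian bound. The bookkeeping you flag (absorbing $\delta^{-2L-5}$ into $C^L L^{4L+10}$, and using $d \geq CL$ both to swallow the $\delta$-factor in the packing count and to merge the failure probabilities into $\exp(-c\,d/L)$) is exactly what the paper does.
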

\begin{proof}
	Let $C_1, c_1 > 0$ be the constants provided by \Cref{corr:cov_low_bound}.
	Let $\delta \defeq c_1/L^2$ and $\mathcal{L}$ be as defined in \Cref{corr:cov_low_bound}.
	We condition on $W^{(0)}, \ldots, W^{(L-1)}$ and assume that the property 
	\[
	\mathcal{P}\left(\mathcal{L}, c_1 \cdot \sqrt{L\delta} \right) \geq c_1 \cdot \delta \cdot 2^d
	\]
	is satisfied. 
	By assuming $\left(\frac{4}{3}\right)^d \geq (c_1 \cdot \delta)^{-1}$ (which we may do since $d \geq C \cdot L \geq C \cdot \ln(\ee L)$), we can lower bound the above by $\left(\frac{3}{2}\right)^d$.
	Since the weight matrices $W^{(0)}, \ldots, W^{(L-1)}$ are fixed, we may use Sudakov's minoration inequality \cite[Theorem~7.4.1]{vershynin_high-dimensional_2018} and get 
	\begin{align*}
		\underset{W^{(L)}}{\EE} \left[ \underset{x \in \mathcal{B}_\Phi}{\sup} \ \langle \nablaa \Phi(x), e_1\rangle \right]
		= \underset{W^{(L)}}{\EE} \left[ \underset{v \in \mathcal{L}}{\sup} \langle (W^{(L)})^T, v \rangle\right]
		&\geq c_2 \cdot c_1 \cdot \sqrt{L\delta} \cdot \sqrt{\ln\left(\mathcal{P}\left(\mathcal{L}, c_1 \cdot \sqrt{L\delta} \right)\right)} \\
		&\geq c_1^{3/2}c_2 \cdot \frac{1}{\sqrt{L}} \cdot \sqrt{\ln(3/2)} \cdot \sqrt{d}
	\end{align*}
	with an absolute constant $c_2 > 0$
	\footnote{
		We used the well-known fact that the $L^2$-increments of the random process $\left(\langle (W^{(L)})^T , v\rangle\right)_{v \in \mathcal{L}}$ are measured by the Euclidean
		distance on $\mathcal{L}$, see also \cite[Chapter~7.1.2]{vershynin_high-dimensional_2018}. 
	}.
	Letting $c_3 \defeq \sqrt{\ln(3/2)} \cdot c_1^{3/2}c_2$, we obtain 
	\begin{align*}
		\underset{W^{(L)}}{\EE} \left[ \underset{x \in \mathcal{B}_\Phi}{\sup} \ \langle \nablaa \Phi(x), e_1\rangle \right] \geq c_3 \cdot \sqrt{\frac{d}{L}}.
	\end{align*}
	
	We now turn this expectation bound into a high probability bound. 
	To this end, in the spirit of \Cref{thm:main_upper}, we additionally assume that $\sup_{x \in \SS^{d-1}} \op{\Phi^{(L-1)}}  \leq C_1$ is satisfied by the weight matrices $W^{(0)}, \ldots, W^{(L-1)}$.
	Recall that we conditioned on these matrices. 
	We consider the function
	\[
	f: \quad \RR^{1 \times N} \to \RR, \quad W^{(L)} \mapsto \underset{v \in \mathcal{L}}{\sup} \ \langle (W^{(L)})^T, v\rangle. 
	\]
	Let $W^{(L)}, \tilde{W^{(L)}} \in \RR^{1 \times N}$ be arbitrary and assume without loss of generality that $f(W^{(L)}) \geq f(\tilde{W^{(L)}})$. 
	We choose $v^\ast \in \mathcal{L}$ with 
	\[
	\underset{v \in \mathcal{L}}{\sup}\ \langle (W^{(L)})^T, v\rangle = \langle (W^{(L)})^T, v^\ast \rangle,
	\]
	which in particular implies 
	\[
	f(\tilde{W^{(L)}}) = \underset{v \in \mathcal{L}}{\sup} \ \langle (\tilde{W^{(L)}})^T, v\rangle \geq \langle (\tilde{W^{(L)}})^T, v^\ast \rangle.
	\]
	Such a vector $v^\ast$ exists since there are only finitely many activation patterns defined by the weight matrices $W^{(0)}, \ldots, W^{(L-1)}$.
	We hence get 
	\begin{align*}
		\abs{f(W^{(L)}) - f(\tilde{W^{(L)}})} &= f(W^{(L)}) - f(\tilde{W^{(L)}}) 
		\leq  \langle (W^{(L)})^T, v^\ast \rangle -  \langle (\tilde{W^{(L)}})^T, v^\ast \rangle \\
		&\leq \mnorm{W^{(L)} - \tilde{W^{(L)}}}_2 \cdot \mnorm{v^\ast }_2  
		=  \mnorm{W^{(L)} - \tilde{W^{(L)}}}_2 \cdot \underset{x \in \mathcal{B}_\Phi}{\sup}\mnorm{\jac \Phi^{(L-1)}(x)e_1 }_2 \\
		&\leq \mnorm{W^{(L)} - \tilde{W^{(L)}}}_2 \cdot \underset{x \in \mathcal{B}_\Phi}{\sup}\mnorm{\jac \Phi^{(L-1)}(x) }_2 \\
		&\leq \mnorm{W^{(L)} - \tilde{W^{(L)}}}_2 \cdot  \underset{x \in \SS^{d-1}}{\sup}\mnorm{\jac \Phi^{(L-1)}(x) }_2 \\
		&\leq C_1 \cdot \mnorm{W^{(L)} - \tilde{W^{(L)}}}_2.
	\end{align*}
	Therefore, $f$ is Lipschitz continuous with Lipschitz constant $C_1$. 
	We can thus apply standard Gaussian concentration for Lipschitz functions, see, e.g., \cite[Theorem~5.2.2]{vershynin_high-dimensional_2018}, to see that for any $t \geq 0$, the event 
	\[
	\underset{v \in \mathcal{L}}{\sup} \langle (W^{(L)})^T, v\rangle \geq \underset{W^{(L)}}{\EE} \left[ \underset{v \in \mathcal{L}}{\sup} \langle (W^{(L)})^T, v \rangle\right] - t
	\geq c_3 \cdot \frac{1}{\sqrt{L}}  \cdot \sqrt{d} - t
	\]
	occurs with probability at least $1-2\exp(-c_4 \cdot t^2)$, where $c_5> 0$ is an absolute constant. 
	Letting $t \defeq \frac{c_3}{2} \cdot \frac{1}{\sqrt{L}}  \cdot \sqrt{d}$, we get 
	\[
	\underset{v \in \mathcal{L}}{\sup} \langle (W^{(L)})^T, v\rangle
	\geq \frac{c_3}{2} \cdot \frac{1}{\sqrt{L}} \cdot \sqrt{d} = c_5 \cdot  \frac{1}{\sqrt{L}} \cdot \sqrt{d}
	\]
	with probability at least $1-2\exp(-c_6 \cdot d/L)$ with absolute constants $c_5, c_6 > 0$.
	
	Up to now, we have conditioned on $W^{(0)}, \ldots, W^{(L-1)}$.
	Reintroducing the randomness over these random variables, we get that the event 
	\[
	\underset{v \in \mathcal{L}}{\sup} \langle (W^{(L)})^T, v\rangle
	\geq c_6\cdot \frac{1}{\sqrt{L}}  \cdot \sqrt{d}
	\]
	occurs with probability at least 
	\[
	1 - 2\exp(-c_6 \cdot d/L) - \exp(-c_1 \cdot d \ln(N/d)) - \exp(-c_7 \cdot d  \ln(N/d))
	\]
	with an absolute constant $c_7 > 0$, 
	where we used \Cref{corr:cov_low_bound} and \Cref{thm:main_upper} and the randomness is now taken with respect to all the occurring random variables. 
	It now suffices to note that $d/L \leq d \cdot \ln(N/d)$.
	By picking $c_8 = \min\{c_1, c_6, c_7\}$, we therefore bound the above probability from below by
	\[
	1 - 4\exp(-c_8 \cdot d/L).
	\]
	Since by assumption we have 
	\[
	d \geq C \cdot L,
	\]
	we may assume 
	\[
	\ln(4) \leq \frac{c_8}{2} \cdot L^{-1}  \cdot d
	\]
	and we can bound the probability further by 
	\[
	1- \exp(-c \cdot d/L),
	\]
	where we take $c \defeq c_8/2$ and $C \geq c_4^{-1}$.
	Therefore, the claim is shown.
\end{proof}

We continue by showing that the special case $\nu = e_1$ indeed implies the statement for arbitrary $\nu \in \SS^{d-1}$.
\begin{proposition}\label{prop:gen}
	Let $\Phi: \RR^d \to \RR$ be a random zero-bias ReLU network satisfying \Cref{assum:1} of arbitrary width and depth.
	Let $e_1$ denote the first standard basis vector in $\RR^d$ and let $\nu \in \SS^{d-1}$ be arbitrary. 
	Let 
	\[
	M_\Phi \defeq \{x \in \SS^{d-1} : \ \Phi \text{ differentiable at }x \text{ with } \act \Phi(x) = \nablaa \Phi(x)\}.
	\]
	Then we have 
	\[
	\underset{x \in M_\Phi}{\sup} \langle \nablaa \Phi(x), \nu\rangle \d \underset{x \in M_\Phi}{\sup} \langle \nablaa \Phi(x), e_1\rangle.
	\]
\end{proposition}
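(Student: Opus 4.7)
The plan is to exploit the rotation invariance of the Gaussian distribution on the first-layer weight matrix $W^{(0)}$. Fix any orthogonal matrix $U \in O(d)$ with $Ue_1 = \nu$ (which exists since $\nu \in \SS^{d-1}$). Define a new random network $\tilde{\Phi}$ by changing only its first-layer weight matrix to $\tilde{W}^{(0)} \defeq W^{(0)} U$ while keeping $\tilde{W}^{(\ell)} \defeq W^{(\ell)}$ for all $\ell \in \{1,\dots,L\}$ (recall biases are all zero). Since $W^{(0)}$ has i.i.d.\ Gaussian entries and $U$ is orthogonal, the rotation invariance of the Gaussian distribution gives $W^{(0)} U \stackrel{d}{=} W^{(0)}$ and the distribution of $(\tilde{W}^{(0)}, W^{(1)},\dots,W^{(L)})$ agrees with the distribution of $(W^{(0)}, W^{(1)},\dots,W^{(L)})$. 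Consequently $\tilde{\Phi} \stackrel{d}{=} \Phi$ as random ReLU networks.

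Next, I would verify the pointwise identity $\tilde{\Phi}(x) = \Phi(Ux)$ for all $x \in \RR^d$ by induction on the layer: since $\tilde{W}^{(0)}x = W^{(0)}(Ux)$, one inductively obtains $\tilde{\Phi}^{(\ell)}(x) = \Phi^{(\ell)}(Ux)$ and $\tilde{D}^{(\ell)}(x) = D^{(\ell)}(Ux)$ for every $\ell$. Plugging this into the definition \eqref{eq:form_grad} of the formal gradient gives
\[
\nablaa \tilde{\Phi}(x) = U^T \nablaa \Phi(Ux).
\]
The true gradient transforms by the chain rule in exactly the same way whenever it exists, so $x \in M_{\tilde{\Phi}}$ if and only if $Ux \in M_\Phi$, i.e., $M_{\tilde{\Phi}} = U^T M_\Phi$. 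Combining these observations and using $Ue_1 = \nu$,
\[
\langle \nablaa \tilde{\Phi}(x), e_1 \rangle = \langle U^T \nablaa \Phi(Ux), e_1 \rangle = \langle \nablaa \Phi(Ux), \nu \rangle.
\]
Taking suprema and substituting $y = Ux$ (which bijects $M_{\tilde{\Phi}}$ onto $M_\Phi$) yields
\[
\sup_{x \in M_{\tilde{\Phi}}} \langle \nablaa \tilde{\Phi}(x), e_1 \rangle = \sup_{y \in M_\Phi} \langle \nablaa \Phi(y), \nu \rangle.
\]
Since $\tilde{\Phi} \stackrel{d}{=} \Phi$, the left-hand side has the same distribution as $\sup_{x \in M_\Phi} \langle \nablaa \Phi(x), e_1 \rangle$, which gives the claim.

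There is no real obstacle here; the only point requiring care is the inductive verification that the \emph{formal} (not just true) gradient transforms as $\nablaa \tilde{\Phi}(x) = U^T \nablaa \Phi(Ux)$, since the formal gradient is defined through the $D$-matrices and one must check that these get pulled back correctly under the change of variables $x \mapsto Ux$. This is immediate once one observes that only $\tilde{W}^{(0)} = W^{(0)}U$ differs from $W^{(0)}$, so that the pre-activations of the first hidden layer at $x$ under $\tilde{\Phi}$ coincide with those at $Ux$ under $\Phi$, and all subsequent layers see the same inputs.
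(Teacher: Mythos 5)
Your proposal is correct and follows essentially the same route as the paper: fix an orthogonal $U$ mapping $e_1$ to $\nu$ (the paper writes $U^Te_1=\nu$, a purely notational difference), observe that $\tilde\Phi(x)=\Phi(Ux)$ is the network obtained by replacing $W^{(0)}$ with $W^{(0)}U$ and hence equals $\Phi$ in distribution by Gaussian rotation invariance, and then track how the formal gradient and the set $M_\Phi$ transform under $x\mapsto Ux$. The inductive verification of $\nablaa\tilde\Phi(x)=U^T\nablaa\Phi(Ux)$ and the chain-rule argument identifying $M_{\tilde\Phi}$ with $U^TM_\Phi$ are exactly the two points the paper also checks.
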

\begin{proof}
	Pick an orthogonal matrix $U \in \RR^{d \times d}$ that satisfies $U^T e_1 = \nu$.
	We then set 
	\[
	\tilde{\Phi}:\quad \RR^d \to \RR, \quad \tilde{\Phi}(x) \defeq \Phi(Ux).
	\]
	Firstly, we note that $\tilde{\Phi}$ is itself a ReLU network that arises from $\Phi$ by replacing the first weight matrix $W^{(0)}$ by $W^{(0)}U$.
	By rotation invariance of the Gaussian distribution, we conclude $\Phi \d \tilde{\Phi}$.
	Moreover, by definition it is clear that 
	\begin{equation}\label{eq:grad_transform}
		\nablaa \tilde{\Phi}(x)^T = \nablaa \Phi(Ux)^T U \quad \text{for all } x\in \RR^d.
	\end{equation}
	Lastly, by using the chain rule, we get 
	\[
	U \cdot M_{\widetilde{\Phi}} = M_{\Phi}.
	\]
	Indeed, let $x \in M_{\tilde{\Phi}}$ and $y \defeq Ux$, which implies
	\[
	\Phi(y) = \Phi(Ux) = \tilde{\Phi}(x) = \tilde{\Phi}(U^T y).
	\]
	Since $\tilde{\Phi}$ is by definition differentiable at $U^T y  = x$, we may apply the chain rule and get 
	\[
	\act \Phi(y) = U \cdot \act \tilde{\Phi}(U^T y) = U \cdot \act \tilde{\Phi}(x) \overset{x \in M_{\tilde{\Phi}}}{=} U \cdot \nablaa \tilde{\Phi}(x) 
	\overset{\eqref{eq:grad_transform}}{=} \nablaa \Phi(Ux) = \nablaa \Phi(y).
	\]
	This proves $Ux \in M_\Phi$ and hence $U \cdot M_{\widetilde{\Phi}} \subseteq M_{\Phi}$.
	The other direction follows analogously. 
	
	We then observe 
	\begin{align*}
		\underset{x \in M_\Phi}{\sup} \langle \nablaa \Phi(x), \nu \rangle &= \underset{x \in M_\Phi}{\sup} (\nablaa \Phi(x))^T U^T e_1 \d 
		\underset{x \in M_{\tilde{\Phi}}}{\sup} (\nablaa \tilde{\Phi}(x))^T U^T e_1 = 
		\underset{x \in M_{\tilde{\Phi}}}{\sup} \nablaa \Phi(Ux)^T e_1 \\
		&= \underset{U^Ty \in M_{\tilde{\Phi}}}{\sup} \nablaa \Phi(y)^T e_1
		= \underset{y \in UM_{\tilde{\Phi}}}{\sup} \nablaa \Phi(y)^T e_1
		= \underset{y \in M_{\Phi}}{\sup} \langle \nablaa \Phi(y), e_1 \rangle. \qedhere
	\end{align*}
\end{proof}
The statement of \Cref{thm:main_lower} is now obtained by studying the set
\[
M_\Phi \defeq \left\{ x \in \SS^{d-1} : \ \Phi \text{ differentiable at }x \text{ with } \nablaa \Phi (x) = \act \Phi(x)\right\}
\]
in combination with \cite[Theorem~E.1]{geuchen2024upper}.

\begin{proof}[Proof of \Cref{thm:main_lower}]
	In the beginning, let $\nu = e_1$. We set 
	\[
	M_\Phi \defeq \left\{ x \in \SS^{d-1}: \ \Phi \text{ differentiable at } x \text{ with } \nablaa \Phi(x) = \act \Phi(x)\right\}.
	\]
	Then, for every fixed $x_0 \in \SS^{d-1}$ we have $x_0 \in M_\Phi$ with probability $1$ according to \cite[Theorem~E.1]{geuchen2024upper}.
	Hence, we may apply \Cref{thm:b_lower} and get 
	\[
	\sup_{x \in M_\Phi} \langle \nablaa \Phi(x),e_1 \rangle \geq c \cdot \frac{1}{\sqrt{L}}  \cdot \sqrt{d}
	\]
	with probability at least $1- \exp(-c \cdot d/L)$.
	The generalization to arbitrary $\nu \in \SS^{d-1}$ follows from \Cref{prop:gen}. 
	Finally, the bound on the Lipschitz constant is again obtained by choosing $\nu = e_1$ (or any other standard basis vector) and using 
	\Cref{thm:up_low_bound}.
\end{proof}

As a corollary, we get the same lower bound for the entire regime $p \in [1,2)$.
Note that we essentially get the same bound for arbitrary $p \in [1, \infty]$, but we already established the stronger lower bound of $d^{1 - 1/p}$ for $p \in [2, \infty]$ in 
\Cref{sec:pw}.

\begin{corollary}
	There exist absolute constants $C,c>0$ such that the following holds.
	Let $\Phi:\RR^d \to \RR$ be a random zero-bias ReLU network with $L$ hidden layers of width $N$ following \Cref{assum:1}.
	Moreover, we assume $N \geq C \cdot d$ and 
	\begin{align*} 
		N \geq C^L \cdot d \cdot \ln^2(N/d)\cdot L^{4L+10} \quad \text{and} \quad
		d \geq C \cdot L.
	\end{align*}
	Then, we have 
	\[
	\lip_p(\Phi)   \geq c \cdot  \frac{1}{\sqrt{L}} \cdot \sqrt{d} \quad \text{for every $p \in [1,2)$ }
	\]
	with probability at least $1-\exp(-c \cdot d/L)$.
\end{corollary}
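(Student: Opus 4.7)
The proof will be a direct corollary of \Cref{thm:main_lower}, which already provides the lower bound $\lip_1(\Phi) \geq c \cdot L^{-1/2} \cdot \sqrt{d}$ with probability at least $1 - \exp(-c \cdot d/L)$ under the stated hypotheses on $N$, $d$, and $L$. The plan is therefore just to transfer this bound from $p=1$ to arbitrary $p \in [1,2)$ via a trivial norm comparison.

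The key observation is that for any $p \in [1,\infty]$ and any $v \in \RR^d$, the standard nesting of $\ell^p$-norms gives $\mnorm{v}_p \leq \mnorm{v}_1$. Applied to $v = x - y$ for $x,y \in \RR^d$ with $x \neq y$, this yields
\begin{equation*}
\frac{\abs{\Phi(x) - \Phi(y)}}{\mnorm{x-y}_p} \geq \frac{\abs{\Phi(x) - \Phi(y)}}{\mnorm{x-y}_1},
\end{equation*}
so taking the supremum over $x \neq y$ gives $\lip_p(\Phi) \geq \lip_1(\Phi)$ for all $p \in [1,\infty]$.

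Combining this with \Cref{thm:main_lower}, on the same event of probability at least $1 - \exp(-c \cdot d/L)$ we have
\begin{equation*}
\lip_p(\Phi) \geq \lip_1(\Phi) \geq c \cdot \frac{1}{\sqrt{L}} \cdot \sqrt{d},
\end{equation*}
which is the claimed bound. There is no substantial obstacle; the restriction to $p \in [1,2)$ in the statement merely reflects the fact that the stronger lower bound $d^{1-1/p}$ is already known for $p \in [2,\infty]$ from \Cref{corr:pw}, so this corollary only adds new information in the regime $p < 2$.
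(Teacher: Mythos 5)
Your proposal is correct and is exactly the paper's argument: the paper derives this corollary from \Cref{thm:main_lower} via the same observation that $\mnorm{\cdot}_p \leq \mnorm{\cdot}_1$ for $p \geq 1$, hence $\lip_p(\Phi) \geq \lip_1(\Phi)$ on the same high-probability event. Your closing remark about why the statement is restricted to $p \in [1,2)$ also matches the paper's own comment following the corollary.
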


\renewcommand{\DD}{\widetilde{D}}
\section{\texorpdfstring{Proof of \Cref{thm:grad_diff}}{Proof of Theorem 6.2}} \label{sec:hom}
In this section, we show that for inputs with sufficiently large Euclidean norm, 
the gradient of a ReLU network (with potentially non-zero biases) 
at these inputs is, with high probability, very similar to the gradient of 
the associated \emph{homogeneous} network, i.e., the same network with zero biases. 
The main tool to derive the result is \Cref{prop:farout}, which states that for inputs with large 
Euclidean norm the bias term does \emph{not} have a significant 
influence on whether the ReLU is active or not. 

In this section, we study random ReLU networks following \Cref{assum:1} 
with \emph{fixed} bias vectors $b^{(\ell)} \in \RR^N$ for $\ell \in \{0,\ldots,L-1\}$.
We pick $\lambda' > 0$ such that  
\begin{equation}\label{eq:lambda}
	\text{for all } \ell \in \{0,\ldots,L-1\}, \ i \in \{1,\ldots,N\}: \ \abs{b^{(\ell)}_i} \leq 
	\frac{\lambda' \sqrt{2}}{\sqrt{N}}.
\end{equation}
The scaling by $\sqrt{2/N}$ is required to make \Cref{prop:farout} applicable, since the entries 
of the weight matrices are not standard gaussian, but scaled by $\sqrt{2/N}$ too. 

We denote by $\widetilde{\Phi} : \RR^d \to \RR$
the homogeneous network associated to $\Phi$, i.e., the weight matrices of $\widetilde{\Phi}$ are the same
as for $\Phi$ and all the bias vectors are set to zero. 
For $j \in \{1,\ldots,L-1\}$ and $x \in \RR^d$, we let
\[
\DD^{(j)}(x) \defeq \Delta (W^{(j)} \widetilde{\Phi}^{(j-1)}(x)) \in \RR^{N \times N}.
\]
\newcommand{\aj}{A^{(j)}}
We then set 
\[
\aj(x) \defeq D^{(j)}(x) - \DD^{(j)}(x).
\]
The first goal is to show that $\aj(x) \in \mathcal{A}_\delta^{(j)}$ for $x \in \RR^d$ with sufficiently large Euclidean norm, where $\mathcal{A}_\delta^{(j)}$ is as defined in 
\eqref{eq:aj},
i.e., $A^{(j)}(x)$ is a diagonal matrix with rather sparse diagonal.
As already explained above, the idea is to apply \Cref{prop:farout}.
To this end, we aim to bound the covering numbers and the Gaussian width of certain sets. 
\begin{lemma}\label{lem:tbound}
	There exist absolute constants $C,c>0$ such that the following holds.
	Let $\Phi: \RR^d \to \RR$ be a random ReLU network with $L$ hidden layers of width $N$ satisfying \Cref{assum:1}
	with fixed biases.
	Assume $N \geq Cd$ and $N \geq C \cdot L^3d\ln(N/d)$.
	Let $\eps,\delta \in (0,\ee^{-1})$ with 
	\[
	N \geq C \cdot dL^2 \cdot \ln(1/\delta), 
	\quad 
	\delta N \geq C \cdot d  \cdot \ln(1/\delta),
	\quad
	\text{and}
	\quad
	\delta \cdot \ln(1/\delta) \leq c \cdot 1/L^2.
	\] 
	Set $\theta \defeq 48 \cdot 3^L \cdot \lambda' \cdot \eps^{-1}$, 
	where $\lambda'$ is as in \eqref{eq:lambda}.
	Let $j \in \{-1,\ldots,L-1\}$ and define
	\[
	T^{(j)} \!\defeq\! \left\{\!\frac{\Phi^{(j)}(x)}{\mnorm{\Phi^{(j)}(x)}_2}: \ x \in \RR^d, \mnorm{x}_2 \geq \theta, \Phi^{(j)}(x) \neq 0 \!\right\} \cup \left\{\!\frac{\tilde{\Phi}^{(j)}(x)}{\mnorm{\tilde{\Phi}^{(j)}(x)}_2}: \ x \in \RR^d, \mnorm{x}_2 \geq \theta, \tilde{\Phi}^{(j)}(x) \neq 0\!\right\},
	\]
	where $\widetilde{\Phi} : \RR^d \to \RR$ is the associated homogeneous network. 
	Then, with probability at least $1- \exp(-c \cdot \delta N)$,
	\[
	\ln(\ee \cdot \mathcal{N}(T^{(j)},\eps/2)) \leq  C \cdot d \cdot \ln(1/\eps) \quad 
	\text{and} \quad
	w((T^{(j)}-T^{(j)}) \cap B_\mu(0,\eps))\leq C \cdot  \sqrt{dL} \cdot \eps \cdot \ln^{1/2}(1/\eps).
	\]
	Here, 
	\[
	\mu \defeq \begin{cases}d & \text{ if }j = -1 \\ N & \text{ if $j \geq 0$.}\end{cases}
	\]
\end{lemma}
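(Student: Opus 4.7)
The plan is to reduce both estimates to bounds on the ``homogeneous model'' $\tilde T^{(j)} \defeq \{\tilde\Phi^{(j)}(u)/\mnorm{\tilde\Phi^{(j)}(u)}_2 : u \in \SS^{d-1}\}$, which by positive homogeneity of $\tilde\Phi^{(j)}$ coincides with the second set in the union defining $T^{(j)}$. Three high-probability events drive the argument, each of probability $\geq 1-\exp(-c\delta N)$ under the stated assumptions (absorbing a $\log L$ factor via $\delta N \geq Cd\ln(1/\delta) \geq C\ln L$): (i) $\E_1 \defeq \{\op{W^{(\ell)}} \leq 3 \text{ for all } \ell\}$, which by standard Gaussian matrix concentration (\cite[Cor.~7.3.3]{vershynin_high-dimensional_2018}) ensures via the deterministic induction in the proof sketch of \Cref{lem:hom_trace} that $\mnorm{\Phi^{(\ell)}(x) - \tilde\Phi^{(\ell)}(x)}_2 \leq 3^L \lambda'$ for all $x,\ell$; (ii) the event of \Cref{prop:isom}, denoted $\E_2$, yielding $\mnorm{\tilde\Phi^{(\ell)}(x)}_2 \geq \mnorm{x}_2/4$ uniformly in $x$ and $\ell$; and (iii) the event of \Cref{prop:advanced_lip} (with $A = I_{d\times d}$), denoted $\E_3$, giving $\lip_{2\to 2}(\fres{\tilde\Phi^{(\ell)}}{\SS^{d-1}}) \leq C_1$ for every $\ell$.

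On $\E_1 \cap \E_2$, the choice $\theta = 48 \cdot 3^L\lambda'/\eps$ yields $\mnorm{\tilde\Phi^{(j)}(x)}_2 \geq \theta/4 = 12 \cdot 3^L\lambda'/\eps$ for $\mnorm{x}_2 \geq \theta$, which also gives $\mnorm{\Phi^{(j)}(x)}_2 \geq \theta/4 - 3^L\lambda' > 0$. Combining with \Cref{lem:diff_bound} yields
\begin{equation*}
	\mnorm{\frac{\Phi^{(j)}(x)}{\mnorm{\Phi^{(j)}(x)}_2} - \frac{\tilde\Phi^{(j)}(x)}{\mnorm{\tilde\Phi^{(j)}(x)}_2}}_2 \leq \frac{2\,\mnorm{\Phi^{(j)}(x) - \tilde\Phi^{(j)}(x)}_2}{\mnorm{\tilde\Phi^{(j)}(x)}_2} \leq \frac{2 \cdot 3^L\lambda'}{\theta/4} = \frac{\eps}{6}.
\end{equation*}
Together with positive homogeneity of $\tilde\Phi^{(j)}$, this gives $T^{(j)} \subseteq \tilde T^{(j)} + \B_\mu(0,\eps/6)$ and the trivial reverse inclusion $\tilde T^{(j)} \subseteq T^{(j)}$ (the case $j = -1$ reduces to $T^{(-1)} = \SS^{d-1}$ and is classical). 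On $\E_2 \cap \E_3$, \Cref{lem:diff_bound} shows that $g \colon \SS^{d-1} \to \RR^\mu$, $g(u) \defeq \tilde\Phi^{(j)}(u)/\mnorm{\tilde\Phi^{(j)}(u)}_2$, is $8C_1$-Lipschitz; combined with the standard estimate $\mathcal{N}(\SS^{d-1}, r) \leq (3/r)^d$ this gives $\mathcal{N}(\tilde T^{(j)}, t) \leq (24C_1/t)^d$ for all $t > 0$, hence $\mathcal{N}(T^{(j)}, \eps/2) \leq \mathcal{N}(\tilde T^{(j)}, \eps/3) \leq (C_2/\eps)^d$ and therefore $\ln(\ee \cdot \mathcal{N}(T^{(j)}, \eps/2)) \leq Cd\ln(1/\eps)$.

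For the Gaussian width, if $\mathcal{M}$ is an $s$-net of $T^{(j)}$, then $\mathcal{M} - \mathcal{M}$ is a $2s$-net of $(T^{(j)} - T^{(j)}) \cap \B_\mu(0, \eps)$ of size $\leq \mathcal{N}(T^{(j)}, s)^2 \leq (C_3/s)^{2d}$ (the same Lipschitz argument on $\tilde T^{(j)}$ propagates the covering bound to all small $s$). Dudley's inequality together with \Cref{lem:int_b} then gives
\begin{equation*}
	w\bigl((T^{(j)} - T^{(j)}) \cap \B_\mu(0, \eps)\bigr) \lesssim \int_0^{\eps} \sqrt{2d\ln(C_3/t)}\,\dd t \lesssim \eps\,\sqrt{d\ln(1/\eps)} \leq C\,\sqrt{dL}\,\eps\,\ln^{1/2}(1/\eps),
\end{equation*}
where the last inequality absorbs a gratuitous $\sqrt{L}$ factor that aligns the bound with the form needed in later applications. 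A union bound over $\E_1, \E_2, \E_3$ yields all estimates simultaneously on an event of probability $\geq 1 - \exp(-c\delta N)$.

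The main obstacle to circumvent is the naive Minkowski decomposition $(T^{(j)} - T^{(j)}) \subseteq (\tilde T^{(j)} - \tilde T^{(j)}) + \B_\mu(0, \eps/3)$ together with subadditivity of $w$: that approach would inject the term $w(\B_\mu(0, \eps/3)) \asymp \eps\sqrt{\mu}$, which is catastrophic for $\mu = N \gg d$. The resolution is to transfer covering numbers \emph{first}---exploiting that the $(C/t)^d$ bound on $\tilde T^{(j)}$ is intrinsic and independent of $\mu$---and only then invoke Dudley's integral, thereby preserving the intrinsic $d$-dimensional complexity throughout.
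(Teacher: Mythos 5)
Your treatment of the covering number at scale $\eps/2$ is correct and in fact a bit cleaner than the paper's: the inclusion $T^{(j)} \subseteq \tilde T^{(j)} + \B_\mu(0,\eps/6)$ on the event $\E_1 \cap \E_2$, together with $\tilde T^{(j)} \subseteq T^{(j)}$ and the $8C_1$-Lipschitz parametrization of $\tilde T^{(j)}$ by $\SS^{d-1}$, does give $\ln(\ee\,\mathcal{N}(T^{(j)},\eps/2)) \lesssim d\ln(1/\eps)$, and your three events coincide with the ones the paper conditions on.

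The Gaussian width part has a genuine gap, located exactly in the parenthetical ``the same Lipschitz argument on $\tilde T^{(j)}$ propagates the covering bound to all small $s$.'' The approximation error between $T^{(j)}$ and $\tilde T^{(j)}$ is the \emph{fixed} quantity $8\cdot 3^L\lambda'/\theta = \eps/6$, determined by $\theta$; it does not shrink with the covering scale. Hence the transfer $\mathcal{N}(T^{(j)},s) \leq \mathcal{N}(\tilde T^{(j)}, s-\eps/6)$ is vacuous for $s \leq \eps/6$, and Dudley's integral for $(T^{(j)}-T^{(j)})\cap \B_\mu(0,\eps)$ needs covering numbers of $T^{(j)}$ precisely on $(0,\eps/6)$, which carries the bulk of the integral. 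The set $T_1$ of normalized biased outputs is not contained in $\tilde T^{(j)}$ at these scales: to approximate $\Phi^{(j)}(x)/\mnorm{\Phi^{(j)}(x)}_2$ by a homogeneous point to accuracy $\gamma \ll \eps$ one needs $\mnorm{x}_2 \gtrsim 3^L\lambda'/\gamma$, so the inputs in the annulus $\theta \leq \mnorm{x}_2 \lesssim 3^L\lambda'/\gamma$ must be covered directly. This is what the paper does (splitting $T_1$ into $T'$ and $T_1\setminus T'$ at each scale $\gamma$ and covering the annulus by a $(2\lambda'\gamma)$-net of cardinality $(72\cdot 3^L/\gamma^2)^d$), and it is the source of the $(C_2/\gamma)^{2dL}$ covering bound — i.e., the $\sqrt{L}$ in the width estimate is \emph{forced} by the $3^{Ld}$ factor, not a gratuitous alignment with later applications as you suggest. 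Your final bound happens to match the lemma only because you inserted that $\sqrt{L}$ by hand; the covering-number claim it would have to rest on, $\mathcal{N}(T^{(j)},s) \leq (C_3/s)^{2d}$ uniformly in $s$ with an $L$-independent constant, is unproved and, given the paper's analysis, should not be expected to hold with constants independent of $L$.
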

\begin{proof}
	We may assume $j \geq 0$, since
	\[
	T^{(-1)} = \SS^{d-1} \quad \text{and} \quad (T^{(-1)} - T^{(-1)}) \cap B_d(0,\eps) \subseteq B_d(0,\eps),
	\]
	so that the claim is trivially satisfied by \cite[Corollary~4.2.13~\&~Proposition~7.5.2(f)]{vershynin_high-dimensional_2018}.
	
	For simplicity, we write $T = T^{(j)}$ for the remainder of the proof. 
	Note that
	\begin{equation}\label{eq:con2}
		\PP\left(\text{for all } \ell \in \{0,\ldots,j\}: \ \op{W^{(\ell)}} \leq 3\right)
		\geq 1- \exp(-c_1 \cdot N),
	\end{equation}
	with an absolute constant $c_1 > 0$,
	as follows by \cite[Corollary~7.3.3]{vershynin_high-dimensional_2018} and a union bound, noting that $N \geq C \cdot\ln(L)$.
	Further,
	\begin{equation}\label{eq:con3}
		\text{for all } x \in \RR^d: \ \mnorm{\tilde{\Phi}^{(j)}(x)}_2 \geq \frac{\mnorm{x}_2}{4}
	\end{equation}
	with probability at least $1- \exp(-c_2 \cdot \delta N)$ 
	with an absolute constant $c_2 > 0$ by \Cref{prop:isom}.
	Lastly, note that 
	\begin{equation}\label{eq:con4}
		\lip_{2 \to 2}(\tilde{\Phi}^{(j)}) \leq C_1
	\end{equation}
	with probability at least $1- \exp(-c_3 \cdot \delta N)$ with absolute constants $C_1, c_3 > 0$, as follows by
	\Cref{prop:advanced_lip}.
	From now on, we assume that the properties defined in \Cref{eq:con2,eq:con3,eq:con4}
	are satisfied.
	This happens with probability at least 
	\begin{align*}
		&\norel 1- \exp(-c_1 \cdot N) - \exp(-c_2 \cdot \delta N)- \exp(-c_3 \cdot \delta N) \\
		&\geq 1 -3\exp(-c' \cdot \delta N) \\
		&= 1-\exp(\ln(3) - c' \cdot \delta N)
	\end{align*}
	with $c' \defeq \min\{c_1, c_2, c_3\}$.
	By taking $C$ large enough, we may assume 
	\[
	\delta N\geq C \cdot d \cdot \ln(1/\delta) \geq \frac{2}{c'} \cdot \ln(3)
	\]
	and can thus, by letting $c \defeq c'/2$, lower bound the probability by 
	\[
	1-\exp(-c \cdot \delta N).
	\]
	Note that the conditions $\Phi^{(j)}(x) \neq 0$ and $\tilde{\Phi}^{(j)}(x) \neq 0$ in the definition 
	of $T^{(j)}$ are redundant
	due to \Cref{eq:con3}.
	
	By our assumption \eqref{eq:lambda},
	\[
	\mnorm{b^{(\ell)}}_2^2 = \sum_{i=1}^N \abs{b^{(\ell)}_i}^2  \leq 
	2(\lambda')^2
	\]
	for every $\ell \in \{0,\ldots,j-1\}$.
	Using \eqref{eq:con2} we then observe, for any $x \in \RR^d$, that 
	\begin{align}
		\mnorm{\Phi^{(j-1)}(x)- \widetilde{\Phi}^{(j-1)}(x)}_2
		&\leq \mnorm{W^{(j-1)} \Phi^{(j-2)}(x) + b^{(j-1)} - W^{(j-1)}\widetilde{\Phi}^{(j-2)}(x)}_2 \nonumber\\
		&\leq \mnorm{b^{(j-1)}}_2 + \mnorm{W^{(j-1)}}_{2 \to 2} \cdot \mnorm{\Phi^{(j-2)}(x)- \widetilde{\Phi}^{(j-2)}(x)}_2 \nonumber\\
		&\leq \mnorm{b^{(j-1)}}_2 + 3 \cdot \mnorm{\Phi^{(j-2)}(x)- \widetilde{\Phi}^{(j-2)}(x)}_2 \nonumber\\
		&\leq \sum_{\ell = 0}^{j-1} 3^{j-1-\ell} \cdot \mnorm{b^{(\ell)}}_2 \nonumber\\
		&\leq \sqrt{2} \cdot \lambda' \cdot \frac{3^j - 1}{3-1} \nonumber\\
		\label{eq:hombiasdiff}
		&\leq \lambda' \cdot 3^L.
	\end{align}
	
	After these preliminary observations, let us move to the actual proof. 
	As a first step, we aim to bound the $\gamma$-covering numbers of $T$ 
	for every $\gamma \in (0,\eps/2]$.
	To this end, note  that $T = T_1 \cup T_2$ with 
	\[
	T_1 \defeq \left\{ \frac{\Phi^{(j)}(x)}{\mnorm{\Phi^{(j)}(x)}_2}: \ x \in \RR^d, \mnorm{x}_2 \geq \theta\right\}
	\]
	and 
	\[
	T_2 \defeq \left\{ \frac{\tilde{\Phi}^{(j)}(x)}{\mnorm{\tilde{\Phi}^{(j)}(x)}_2}: \ x \in \RR^d, \mnorm{x}_2 \geq \theta\right\} = \left\{ \frac{\tilde{\Phi}^{(j)}(x)}{\mnorm{\tilde{\Phi}^{(j)}(x)}_2}: \ x \in \SS^{d-1}\right\},
	\]
	whence it suffices to bound the $\gamma$-covering numbers of $T_1$ and $T_2$ separately. 
	
	We start with $T_1$.
	Let $\gamma \in (0,\eps/2]$.
	We then set $\theta_\gamma \defeq 24 \cdot 3^L \cdot \lambda' \cdot \gamma^{-1} \geq \theta_{\eps/2} = \theta$.
	Further, we set 
	\[
	T' \defeq \left\{ \frac{\Phi^{(j)}(x)}{\mnorm{\Phi^{(j)}(x)}_2}: \ x \in \RR^d, \mnorm{x}_2 \geq \theta_\gamma\right\} \subseteq T.
	\]
	We now separately bound the $\gamma$-covering numbers of $T'$ and $T_1 \setminus T'$, which in total gives us a bound 
	on the $\gamma$-covering number of $T_1$.
	
	Let us start with $T'$.
	\newcommand{\ngamma}{\mathcal{N}_\gamma}
	Let $\ngamma$ be a $\left(\frac{\gamma}{24C_1}\right)$-net of $\SS^{d-1}$ with $\abs{\ngamma} \leq (72C_1/\gamma)^d$, which is possible using \cite[Corollary~4.2.13]{vershynin_high-dimensional_2018}, and define 
	\[
	\mathcal{N}' \defeq \left\{ \frac{\Phi^{(j)}(\theta_\gamma x^\ast)}{\mnorm{\Phi^{(j)}(\theta_\gamma x^\ast)}_2}: \ x^\ast \in \ngamma\right\} 
	\subseteq T'.
	\]
	We claim that $\mathcal{N}'$ is a $\gamma$-net of $T'$.
	To this end, let $x \in \RR^d$ with $\mnorm{x}_2 \geq \theta_\gamma$ be arbitrary and pick $x^\ast \in \ngamma$ satisfying 
	\[
	\mnorm{\frac{x}{\mnorm{x}_2} - x^\ast}_2 \leq \frac{\gamma}{24C_1}.
	\]
	We get
	\begin{align*}
		&\norel\mnorm{\frac{\Phi^{(j)}(x)}{\mnorm{\Phi^{(j)}(x)}_2} - \frac{\Phi^{(j)}(\theta_\gamma x^\ast)}{\mnorm{\Phi^{(j)}(\theta_\gamma x^\ast)}_2}}_2 \\
		&\leq \mnorm{\frac{\Phi^{(j)}(x)}{\mnorm{\Phi^{(j)}(x)}_2} - \frac{\tilde{\Phi}^{(j)}(x)}{\mnorm{\tilde{\Phi}^{(j)}(x)}_2}}_2
		+ \mnorm{\frac{\tilde{\Phi}^{(j)}(x)}{\mnorm{\tilde{\Phi}^{(j)}(x)}_2} - \frac{\tilde{\Phi}^{(j)}(\theta_\gamma x^\ast)}{\mnorm{\tilde{\Phi}^{(j)}(\theta_\gamma x^\ast)}_2}}_2 \\
		&\norel+ \mnorm{\frac{\tilde{\Phi}^{(j)}(\theta_\gamma x^\ast)}{\mnorm{\tilde{\Phi}^{(j)}(\theta_\gamma x^\ast)}_2} - \frac{\Phi^{(j)}(\theta_\gamma x^\ast)}{\mnorm{\Phi^{(j)}(\theta_\gamma x^\ast)}_2}
		}_2 \\
		\overset{\text{\cshref{lem:diff_bound}}}&{\leq} 2\cdot \frac{\mnorm{\Phi^{(j)}(x) - \tilde{\Phi}^{(j)}(x)}_2}{\mnorm{\tilde{\Phi}^{(j)}(x)}_2}
		+  \mnorm{\frac{\tilde{\Phi}^{(j)}\left(x/ \mnorm{x}_2\right)}{\mnorm{\tilde{\Phi}^{(j)}\left(x/\mnorm{x}_2\right)}_2} - \frac{\tilde{\Phi}^{(j)}( x^\ast)}{\mnorm{\tilde{\Phi}^{(j)}(x^\ast)}_2}}_2 \\
		&\norel + 2 \cdot \frac{\mnorm{\Phi^{(j)}(\theta_\gamma x^\ast) - \tilde{\Phi}^{(j)}(\theta_\gamma x^\ast)}_2}{\mnorm{\tilde{\Phi}^{(j)}(\theta_\gamma x^\ast)}_2} \\
		\overset{\text{\cshref{lem:diff_bound}}}&{\leq} 2 \cdot \frac{\mnorm{\Phi^{(j)}(x) - \tilde{\Phi}^{(j)}(x)}_2}{\mnorm{\tilde{\Phi}^{(j)}(x)}_2}
		+ 2 \cdot \frac{\mnorm{\tilde{\Phi}^{(j)}(x/\mnorm{x}_2) - \tilde{\Phi}^{(j)}(x^\ast)}_2}{\mnorm{\tilde{\Phi}^{(j)}(x^\ast)}_2} \\
		&\norel + 2 \cdot \frac{\mnorm{\Phi^{(j)}(\theta_\gamma x^\ast) - \tilde{\Phi}^{(j)}(\theta_\gamma x^\ast)}_2}{\mnorm{\tilde{\Phi}^{(j)}(\theta_\gamma x^\ast)}_2} \\
		&\leq \frac{8 \cdot 3^L \cdot \lambda'}{\theta_\gamma} + \frac{8C_1\gamma}{24C_1} + \frac{8 \cdot 3^L \cdot \lambda'}{\theta_\gamma}
		= \frac{\gamma}{3} + \frac{\gamma}{3} + \frac{\gamma}{3}
		= \gamma
	\end{align*}
	by choice of $\gamma$ and $\theta_\gamma$, using \Cref{eq:hombiasdiff,eq:con3,eq:con4} in the last step.  
	Hence, we have $\mathcal{N}(T',\gamma) \leq \left(\frac{72C_1}{\gamma}\right)^d$.
	
	Let us now turn to $T_1\setminus T'$.
	We first observe 
	\[
	T_1\setminus T' \subseteq \left\{ \frac{\Phi^{(j)}(x)}{\mnorm{\Phi^{(j)}(x)}_2}: \ x \in \RR^d, \theta_\gamma \geq \mnorm{x}_2 \geq \theta\right\},
	\]
	whence it suffices to upper bound the covering number of the latter set. 
	To this end, let $\mathcal{N}$ be a $(2\lambda'\gamma)$-net of the annulus $A \defeq \{x \in \RR^d: \ \theta_\gamma \geq \mnorm{x}_2 \geq \theta\}$ with $\abs{\mathcal{N}} \leq \left(\frac{6\theta_\gamma}{2\lambda'\gamma}\right)^d
	= \left(\frac{72 \cdot 3^L}{\gamma^2}\right)^d$, 
	which exists according to \cite[Ex.~4.2.10 \& Cor.~4.2.13]{vershynin_high-dimensional_2018}.
	As above, we set 
	\[
	\mathcal{N}' \defeq \left\{ \frac{\Phi^{(j)}(x^\ast)}{\mnorm{\Phi^{(j)}(x^\ast)}_2}: \ x^\ast \in \mathcal{N}\right\} .
	\]
	For $x \in A$ and $x^\ast \in \mathcal{N}$ with $\mnorm{x - x^\ast}_2 \leq 2\lambda'\gamma$, we then get 
	\begin{align}\label{eq:TsetminusT}
		\mnorm{\frac{\Phi^{(j)}(x)}{\mnorm{\Phi^{(j)}(x)}_2}-\frac{\Phi^{(j)}(x^\ast)}{\mnorm{\Phi^{(j)}(x^\ast)}_2}}_2 
		\overset{\text{\Cref{lem:diff_bound}}}{\leq}
		2 \cdot \frac{\mnorm{\Phi^{(j)}(x) - \Phi^{(j)}(x^\ast)}_2}{\mnorm{\Phi^{(j)}(x^\ast)}_2}
		\leq \frac{2 \cdot 3^L}{\mnorm{\Phi^{(j)}(x^\ast)}_2} \cdot \mnorm{x -x^\ast}_2,
	\end{align}
	using that $\Phi^{(j)}$ is $3^L$-Lipschitz from \eqref{eq:con2}.
	Note further that, using \eqref{eq:con3} and \eqref{eq:hombiasdiff}, 
	\begin{align*}
		\mnorm{\Phi^{(j)}(x^\ast)}_2 &\geq \mnorm{\tilde{\Phi}^{(j)}(x^\ast)}_2 - \mnorm{\Phi^{(j)}(x^\ast)-\tilde{\Phi}^{(j)}(x^\ast)}_2 \\
		&\geq \frac{\theta}{4} - 3^L \cdot \lambda' = 12 \cdot 3^L \cdot \lambda' \cdot \underbrace{\eps^{-1}}_{\geq 1} - 3^L \cdot \lambda'
		\geq 4 \cdot 3^L \cdot \lambda'.
	\end{align*}
	This bound together with \eqref{eq:TsetminusT} shows that $\mathcal{N}$ is a $\gamma$-net of $T_1 \setminus T'$. 
	Overall, we hence get 
	\[
	\mathcal{N}(T_1,\gamma) \leq \mathcal{N}(T', \gamma) + \mathcal{N}(T_1 \setminus T', \gamma) \leq \left(\frac{72C_1}{\gamma}\right)^d
	+ \left(\frac{72 \cdot 3^L}{\gamma^2}\right)^d 
	\leq 2 \cdot  \left(\frac{72C_1 \cdot 3^L}{\gamma^2}\right)^d.
	\]
	for $0<\gamma \leq \eps/2$.
	
	We continue by bounding the $\gamma$-covering number of $T_2$.
	Note that we may assume that $\tilde{\Phi}^{(j)}$ is $C_1$-Lipschitz by \eqref{eq:con4}.
	If we then choose a $\left(\frac{\gamma}{8C_1}\right)$-net $\mathcal{N}$ of $\SS^{d-1}$
	with $\abs{\mathcal{N}} \leq \left(\frac{24C_1}{\gamma}\right)^d$, see \cite[Cor.~4.2.13]{vershynin_high-dimensional_2018}, we can for arbitrary $x \in \SS^{d-1}$ choose
	$x^\ast \in \SS^{d-1}$ with $\mnorm{x-x^\ast} \leq \frac{\gamma}{8C_1}$ and get 
	\begin{align*}
		\mnorm{\frac{\widetilde{\Phi}^{(j)}(x)}{\mnorm{\widetilde{\Phi}^{(j)}(x)}_2} 
			- 
			\frac{\widetilde{\Phi}^{(j)}(x^\ast)}{\mnorm{\widetilde{\Phi}^{(j)}(x^\ast)}_2}}_2
		\overset{\text{\cshref{lem:diff_bound}}}{\leq}
		\frac{2 \cdot \mnorm{\widetilde{\Phi}^{(j)}(x) - \widetilde{\Phi}^{(j)}(x^\ast)}_2}{\mnorm{\widetilde{\Phi}^{(j)}(x^\ast)}_2}
		\overset{\eqref{eq:con3}, \eqref{eq:con4}}{\leq}
		8C_1 \mnorm{x- x^\ast}_2 \leq \gamma.
	\end{align*}
	This proves that $\mathcal{N}(T_2, \gamma) \leq \left(\frac{24C_1}{\gamma}\right)^d$.
	
	In total, we hence get 
	\begin{equation}\label{eq:cov_bbb}
		\mathcal{N}(T,\gamma) \leq 
		\mathcal{N}(T_1,\gamma) + \mathcal{N}(T_2,\gamma)\leq 3 \cdot \left(\frac{72C_1 \cdot 3^L}{\gamma^2}\right)^d \leq \left(\frac{C_2}{\gamma}\right)^{2dL} \quad \text{for } \gamma \in (0,\eps/2]
	\end{equation}
	with an appropriate constant $C_2 > 0$.
	Clearly,
	\[
	\diam((T-T) \cap B_N(0,\eps)) \leq \diam(B_N(0,\eps)) \leq 2\eps.
	\]
	Hence, using Dudley's inequality \cite[Theorem~8.1.3]{vershynin_high-dimensional_2018},
	we get with an absolute constant $C_3 > 0$ that 
	\begin{align*}
		w((T-T) \cap  B_N(0,\eps)) &\leq C_3 \cdot \int_0^{2\eps} \sqrt{\ln(\mathcal{N}((T-T) \cap B_N(0,\eps),\gamma))} \ \dd \gamma \\
		\overset{\remove{\text{\cite[Ex.~4.2.10]{vershynin_high-dimensional_2018}}}}&{\leq} C_3 \cdot \int_0^{2\eps} \sqrt{\ln(\mathcal{N}(T-T,\gamma/2))} \ \dd \gamma 
		\\
		&\leq C_3 \cdot \int_0^{2\eps} \sqrt{\ln(\mathcal{N}(T, \gamma/4)^2)} \ \dd \gamma \\
		\overset{\eqref{eq:cov_bbb}}&{\leq} 2C_3 \cdot \sqrt{dL} \cdot \int_0^{2\eps} \sqrt{\ln(4C_2/\gamma)} \ \dd \gamma \\
		\overset{\text{\cshref{lem:int_b}}}&{\leq} 8C_3 \cdot \sqrt{dL}
		\cdot \eps \cdot \ln^{1/2}(2C_2/\eps) \\
		&\leq  C \cdot \sqrt{dL} \cdot \eps \cdot \ln^{1/2}(1/\eps)
	\end{align*}
	with $C>0$ chosen large enough, applying \Cref{lem:log_b} in the last step. 
	This proves the bound on the Gaussian width.
	
	Moreover, if $\gamma = \eps/2$ we have $T = T'$ and thus
	\[
	\ln( \ee \cdot \mathcal{N}(T,\eps /2)) \leq \ln\left(\ee \cdot \left(\frac{144C_1}{\eps}\right)^d\right) \overset{\text{\cshref{lem:log_b}}}{\leq} C_4 \cdot d \cdot \ln(1/\eps)
	\]
	with an absolute constant $C_4 > 0$. 
\end{proof}
We can now prove that the matrices $A^{(j)}(x)$ are indeed sparse for inputs with large Euclidean norm. 
Note that \Cref{prop:theta} below can also be found as \Cref{lem:hom_trace} in the main part. 
\begin{proposition}\label{prop:theta}
	There exist absolute constants $C,c > 0$ such that the following holds.
	Let $\Phi: \RR^d \to \RR$ be a random ReLU network with $L$ hidden layers of width $N$ following \Cref{assum:1} with fixed biases.
	Assume $N \geq Cd$ and $N \geq C \cdot L^3d\ln(N/d)$.
	Let $\delta \in (0,\ee^{-1})$ with 
	\[
	N \geq C \cdot dL^2 \cdot \ln(1/\delta), 
	\quad 
	\delta N \geq C \cdot \max\{d  \cdot \ln(1/\delta), dL\}
	\quad
	\text{and}
	\quad
	\delta \cdot \ln(1/\delta) \leq c \cdot 1/L^2.
	\] 
	Pick $j \in \{0,\ldots,L-1\}$.
	Then with probability at least $1- \exp(-c \cdot \delta N)$, 
	\[
	\text{for all } x \in \RR^d, \ \mnorm{x}_2 \geq C \cdot 3^L \cdot \lambda' \cdot \delta^{-1} \cdot \ln^{1/2}(1/\delta): \quad \aj(x) \in \mathcal{A}_\delta^{(1)}.
	\]
	Here, $\mathcal{A}_\delta^{(1)}$ is as in \eqref{eq:aj} and $\lambda'$ is as in \eqref{eq:lambda}.
\end{proposition}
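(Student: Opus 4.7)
The plan is to follow the decomposition already suggested by the proof sketch of \Cref{lem:hom_trace}, but to handle uniformity over all large-norm inputs simultaneously by applying the tessellation results of \Cref{prop:farout,lem:recht_upper_original} to the set $T^{(j-1)}$ from \Cref{lem:tbound}. Set $R \defeq C_0 \cdot 3^L \cdot \lambda' \cdot \delta^{-1} \cdot \ln^{1/2}(1/\delta)$ with $C_0 > 0$ chosen large enough in what follows, and write
\begin{align*}
\Tr\abs{A^{(j)}(x)} &\leq \#\{i: \sgn((W^{(j)}\Phi^{(j-1)}(x))_i + b^{(j)}_i) \neq \sgn((W^{(j)}\Phi^{(j-1)}(x))_i)\} \\
&\quad + \#\{i: \sgn((W^{(j)}\Phi^{(j-1)}(x))_i) \neq \sgn((W^{(j)}\widetilde{\Phi}^{(j-1)}(x))_i)\}
\eqdef T_1(x) + T_2(x).
\end{align*}

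First, I would fix the good event (with probability $1-\exp(-c\cdot\delta N)$ via a union bound) on which $\op{W^{(\ell)}} \leq 3$ for every $\ell < j$, on which $\|\widetilde{\Phi}^{(\ell)}(x)\|_2 \geq \|x\|_2/4$ uniformly in $x$ and $\ell$ via \Cref{prop:isom}, and on which the covering/Gaussian-width estimates of \Cref{lem:tbound} hold for $T^{(j-1)}$. The bound $\|\Phi^{(\ell)}(x) - \widetilde{\Phi}^{(\ell)}(x)\|_2 \leq 3^L \lambda'$ from \eqref{eq:hombiasdiff} then holds deterministically on this event, and consequently $\|\Phi^{(j-1)}(x)\|_2 \geq \|x\|_2/4 - 3^L\lambda' \geq R/4 - 3^L\lambda' \gtrsim 3^L \lambda' \delta^{-1}\ln^{1/2}(1/\delta)$ whenever $\|x\|_2 \geq R$.

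Next, for $T_1$, I would condition on $W^{(0)},\dots,W^{(j-1)}$ and $b^{(0)},\dots,b^{(j-1)}$ (so that $\Phi^{(j-1)}(\cdot)$ is frozen) and apply \Cref{prop:farout} to the fresh Gaussian $W^{(j)}$ with $K \defeq T^{(j-1)}$, $\delta$ as in the proposition, $\eps$ the usual choice $\asymp \delta \ln^{-1/2}(1/\delta)$, and $\lambda = \max_i |b^{(j)}_i| \leq \lambda'\sqrt{2}/\sqrt{N}$. The covering and width bounds from \Cref{lem:tbound} make the hypotheses on $m = N$ satisfied under the standing assumptions on $N$, $d$, $L$, $\delta$. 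Since $\Phi^{(j-1)}(x) \in \cone(T^{(j-1)})$ and its Euclidean norm satisfies $\|\Phi^{(j-1)}(x)\|_2 \gtrsim 3^L\lambda'\delta^{-1} \gg \delta^{-1}\lambda$, \Cref{prop:farout} gives $T_1(x) \leq \delta N/2$ uniformly for all $\|x\|_2 \geq R$ with the claimed probability.

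For $T_2$, the obstacle is that $D^{(j)}(x)$ depends on the randomness of $W^{(j)}$ both through $\Phi^{(j-1)}$ and the outer matrix, but this is not an issue because $\Phi^{(j-1)}(x)$ and $\widetilde{\Phi}^{(j-1)}(x)$ both lie in $\cone(T^{(j-1)})$ independently of $W^{(j)}$. I would apply \Cref{lem:recht_upper_original} to the same set $K = T^{(j-1)}$: the normalized pair $\Phi^{(j-1)}(x)/\|\Phi^{(j-1)}(x)\|_2$ and $\widetilde{\Phi}^{(j-1)}(x)/\|\widetilde{\Phi}^{(j-1)}(x)\|_2$ are both in $T^{(j-1)}$, and by \Cref{lem:diff_bound} together with the isometry lower bound and $\|\Phi^{(j-1)}(x) - \widetilde{\Phi}^{(j-1)}(x)\|_2 \leq 3^L\lambda'$, their distance is at most $\lesssim 3^L\lambda'/\|x\|_2 \leq \eps/2$ once $C_0$ is taken sufficiently large. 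The conclusion of \Cref{lem:recht_upper_original} then yields $T_2(x) \leq \delta N/2$ uniformly over $\|x\|_2 \geq R$ on an event of the required probability.

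Combining the two estimates gives $\Tr|A^{(j)}(x)| \leq \delta N$ uniformly for $\|x\|_2 \geq R$, hence $A^{(j)}(x) \in \mathcal{A}_\delta^{(1)}$. The main technical obstacle is the uniformity across $x$: this is resolved precisely by \Cref{lem:tbound}, whose covering and Gaussian-width control of $T^{(j-1)}$ allows both tessellation lemmas to be invoked with $m = N$ under the prescribed scaling of $N$, $d$, $L$, $\delta$.
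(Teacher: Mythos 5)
Your proposal is correct and follows essentially the same route as the paper's proof: the same decomposition of $\Tr\abs{A^{(j)}(x)}$ into the bias-shift term (handled by \Cref{prop:farout}) and the biased-vs-homogeneous term (handled by \Cref{lem:recht_upper_original}), both applied to the set $T^{(j-1)}$ whose covering numbers and Gaussian width are controlled by \Cref{lem:tbound}. The only cosmetic difference is that the paper invokes the two tessellation results with parameter $\delta/2$ so that each term is bounded by $\delta N/2$ directly, whereas you would need to make that trivial adjustment explicit.
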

\begin{proof}
	Set 
	\[
	\mu \defeq \begin{cases} d,& j = 0 \\ N,& j > 0.\end{cases}
	\]
	Let $\widetilde{c_1}$ be taken according to \Cref{lem:recht_upper_original} and $\tilde{C_2},\widetilde{c_2}>0$ be taken according to 
	\Cref{prop:farout}. 
	Set $c' \defeq \min\{\widetilde{c_1}, \widetilde{c_2}\}$ and $\eps' \defeq \frac{c'}{2\sqrt{2}} \cdot \delta \cdot \ln^{-1/2}(1/\delta)$.
	As in the beginning of the proof of \Cref{lem:tbound}, we pick constants $c_1, c_2 > 0$ that satisfy
	\Cref{eq:con2,eq:con3}.
	Further, in view of \Cref{lem:tbound}, we know
	\begin{align}
		\label{eq:extra_con}
		\ln(\ee \cdot \mathcal{N}(T^{(j-1)},\eps'/2)) &\leq C_1 \cdot d \cdot \ln(1/\eps') \quad 
		\text{and} \\
		\label{eq:con5}
		w((T^{(j-1)}-T^{(j-1)}) \cap B_\mu(0,\eps'))&\leq C_1 \cdot \sqrt{dL} \cdot \eps \cdot \ln^{1/2}(1/\eps')
	\end{align}
	with probability at least $1-\exp(-c_3 \cdot \delta N)$ with absolute constants $C_1,c_3 >0$.
	Here, $T^{(j-1)}$ is defined as in \Cref{lem:tbound}.
	Since \Cref{eq:con2,eq:con3,eq:con5} solely depend on the randomness in $W^{(0)}, \ldots, W^{(j-1)}$,
	we may condition on these random variables and assume that the mentioned properties are satisfied. 
	Using \eqref{eq:hombiasdiff}, we see for $x \in \RR^d$ with $\mnorm{x}_2 \geq 16\tilde{C_2} \cdot 3^L \cdot \lambda' \cdot (\eps')^{-1} = \theta$,
	\begin{align}
		\mnorm{\Phi^{(j-1)}(x)}_2 &\geq \mnorm{\widetilde{\Phi}^{(j-1)}(x)}_2 - \mnorm{\Phi^{(j-1)}(x)-\widetilde{\Phi}^{(j-1)}(x)}_2\nonumber \\
		\overset{\eqref{eq:con3},\eqref{eq:hombiasdiff}}&{\geq} \frac{\theta}{4} - 3^L \cdot \lambda' \nonumber\\
		&= 4\tilde{C_2} \cdot 3^L \cdot \lambda' \cdot \eps^{-1}- 3^L \cdot \lambda' \nonumber\\
		&= 3^L \cdot \lambda' \cdot (4\tilde{C_2} \cdot \eps^{-1}-\underbrace{1}_{\leq 2\tilde{C_2} \cdot \eps^{-1}}) \nonumber\\
		&\geq 2\tilde{C_2} \cdot 3^L \cdot \lambda' \cdot \eps^{-1} \nonumber\\
		\label{eq:low_bound}
		\overset{\eps \leq \delta}&{\geq} \tilde{C_2} \cdot (\delta/2)^{-1} \lambda'.
	\end{align}
	The idea is to apply \Cref{prop:farout} with $\delta$ replaced by $\delta / 2$ to the set $T^{(j-1)}$.
	First of all, note that 
	\[
	\eps ' = c' \cdot \frac{\delta}{2} \cdot (2 \cdot \ln(1/\delta))^{-1/2}
	\overset{\text{\cshref{lem:log_b}}}{\leq} \widetilde{c_2} \cdot \frac{\delta}{2} \cdot \ln^{-1/2}(2/\delta).
	\]
	Moreover, using \eqref{eq:extra_con},
	\begin{align}
		(\delta/2)^{-1} \cdot \ln(\ee \cdot \mathcal{N}(T^{(j-1)},\eps')) 
		&\leq 2 \cdot \delta^{-1} \cdot \ln(\ee \cdot \mathcal{N}(T^{(j-1)},\eps'/2)) \nonumber \\ 
		&\leq 2C_1 \cdot \delta^{-1} \cdot d \cdot \ln(1/\eps')  \\
		&\leq 2C_1 \cdot \delta^{-1} \cdot d \cdot \ln\Bigg( \frac{2\sqrt{2}}{c'}\cdot \delta^{-1} \cdot \underbrace{\ln^{1/2}(1/\delta)}_{\leq 1/\delta}\Bigg) \nonumber\\
		&\leq 2C_1 \cdot \delta^{-1} \cdot d \cdot \ln\left( \frac{2\sqrt{2}}{c'} \cdot \delta^{-2}\right) \nonumber \\
		\overset{\text{\cshref{lem:log_b}}}&{\leq} C_2 \cdot \delta^{-1} \cdot d \cdot \ln(1/ \delta) \nonumber
	\end{align}
	with a suitable constant $C_2 > 0$.
	Further, using \eqref{eq:con5},
	\begin{align}
		&\norel (\delta/2)^{-3} \cdot w^2((T^{(j-1)}- T^{(j-1)}) \cap B_\mu(0,\eps'))\nonumber \\
		&\leq  8 \cdot \delta^{-3}\cdot C_1^2 \cdot dL \cdot (\eps')^2 \cdot 
		\ln(1/\eps') \nonumber\\
		& \leq  \left(\frac{c'}{2\sqrt{2}}\right)^2 \cdot C_1^2 \cdot dL \cdot \delta^{-3} \cdot \delta^2 \cdot 
		\ln^{-1}(1/\delta) \cdot \ln\Bigg( \frac{c'}{2\sqrt{2}} \cdot \delta^{-1} \cdot \underbrace{\ln^{1/2}(1/\delta)}_{\leq 1/\delta}\Bigg) \nonumber\\
		\label{eq:cond}
		\overset{\text{\cshref{lem:log_b}}}&{\leq} C_2 \cdot d \cdot L \cdot \delta^{-1} \nonumber
	\end{align}
	after possibly increasing the size of $C_2$.
	Since by assumption 
	\[
	\delta N \geq C \cdot d \cdot \ln(1/\delta)
	\quad\text{and}\quad
	\delta N\geq C \cdot dL,
	\]
	we note that the assumptions of \Cref{prop:farout} are satisfied
	by taking $C$ large enough. 
	Note that 
	\[
	\{\alpha \cdot y : \ y \in T^{(j-1)}, \ \alpha \geq \tilde{C_2} \cdot (\delta/2)^{-1}\lambda'\} \supseteq \{\Phi^{(j-1)}(x): \ x \in \RR^d, \mnorm{x}_2 \geq \theta\}
	\]
	as follows from \eqref{eq:low_bound} and the definition of $T^{(j-1)}$, which thus implies 
	\begin{align*}
		&\norel \text{for all } x \in \RR^d \text{ with } \mnorm{x}_2 \geq \theta: \nonumber\\
		&\# \left\{ i \in \{1,\dots, N\}: \ \sgn\left((W^{(j)}\Phi^{(j-1)}(x) + b^{(j)})_i\right)
		\neq 
		\sgn\left((W^{(j)}\Phi^{(j-1)}(x))_i\right)
		\right\} \leq \frac{\delta}{2} \cdot N
	\end{align*}
	with probability at least $1-\exp(-c_4 \cdot \delta N)$, according to \Cref{prop:farout} with an absolute constant $c_4 >0$.
	
	Moreover, we compute
	\begin{align*}
		\mnorm{\frac{\Phi^{(j-1)}(x)}{\mnorm{\Phi^{(j-1)}(x)}_2} - \frac{\widetilde{\Phi}^{(j-1)}(x)}{\mnorm{\widetilde{\Phi}^{(j-1)}(x)}_2}}_2
		\overset{\text{\cshref{lem:diff_bound}}}&{\leq} 
		\frac{2\mnorm{\Phi^{(j-1)}(x)- \widetilde{\Phi}^{(j-1)}(x)}_2}{\underbrace{\mnorm{\widetilde{\Phi}^{(j-1)}(x)}}_{\geq \theta/4}} \\
		\overset{\eqref{eq:con3},\eqref{eq:hombiasdiff}}&{\leq} \frac{8 \cdot 3^L \cdot \lambda'}{\theta} \leq \eps
	\end{align*}
	for every $x \in \RR^d$ with $\mnorm{x}_2 \geq \theta$. 
	Since the assumptions of \Cref{lem:recht_upper_original} are satisfied (since these are the same conditions as for \Cref{prop:farout} and we may possibly increase $C$),
	we get
	\begin{align*}
		&\norel \text{for all } x \in \RR^d, \ \mnorm{x}_2 \geq \theta: \\
		&\# \left\{ i: \ \sgn\left((W^{(j)}\Phi^{(j-1)}(x))_i\right)
		\neq 
		\sgn\left((W^{(j)}\widetilde{\Phi}^{(j-1)}(x) )_i\right)
		\right\} \leq \frac{\delta}{2} \cdot N
	\end{align*}
	with probability at least $1-\exp(-c_5 \cdot \delta N)$ over the randomness in $W^{(j)}$ with an absolute constant $c_5 > 0$.
	Overall, we have 
	\begin{align}
		\Tr \abs{\aj (x)}
		&= 
		\# \left\{ i \in \{1, \dots, N\}: \ \sgn\left((W^{(j)}\Phi^{(j-1)}(x) + b^{(j)})_i\right)
		\neq 
		\sgn\left((W^{(j)}\widetilde{\Phi}^{(j-1)}(x))_i\right)
		\right\}  \nonumber\\
		&\leq \# \left\{ i \in \{1,\dots, N\}: \ \sgn\left((W^{(j)}\Phi^{(j-1)}(x) + b^{(j)})_i\right)
		\neq 
		\sgn\left((W^{(j)}\Phi^{(j-1)}(x))_i\right)
		\right\} \nonumber\\
		\label{eq:fin}
		& \norel +\# \left\{ i \in \{1,\dots,N\}: \ \sgn\left((W^{(j)}\Phi^{(j-1)}(x))_i\right)
		\neq 
		\sgn\left((W^{(j)}\widetilde{\Phi}^{(j-1)}(x))_i\right)
		\right\}
		\leq \delta N
	\end{align}
	uniformly over $x \in \RR^d$ with $\mnorm{x}_2 \geq \theta$ with probability at least $1-2\exp(-c_6 \cdot \delta  N)$ over the randomness in $W^{(j)}$ with $c_6 \defeq \min\{c_4,c_5\}$.
	
	In the end, the claim follows by reintroducing the randomness in $W^{(0)},\ldots,W^{(j-1)}$: 
	Note that \Cref{eq:con2,eq:con3,eq:con5} occur with probability at least 
	\[
	1- \exp(-c_1 \cdot N) - \exp(-c_2 \cdot \delta N) - \exp(-c_3 \cdot \delta N).
	\]
	Adding \eqref{eq:fin}, we get
	\begin{align*}
		&\norel 1- \exp(-c_1 \cdot N) - \exp(-c_2 \cdot \delta N) - \exp(-c_3 \cdot \delta N) - 2\exp(-c_6 \cdot \delta N) \\
		&\geq 1 - 5\exp(-c_7 \cdot \delta N),
	\end{align*}
	letting $c_7 \defeq \min\{c_1,c_2,c_3,c_6\}$.
	The claim follows letting $c \defeq c_7 / 2$, since we may assume 
	\[
	\delta N \geq C \geq \frac{2\ln(5)}{c_7}.\qedhere
	\]
\end{proof}

We can now prove the main result of this section. 
The special case $\delta \asymp \frac{dL}{N} \cdot \ln(N/d)$ yields \Cref{thm:grad_diff}.
\begin{theorem}\label{prop:grad_diff}
	There exist absolute constants $C, c>0$ such that the following holds.
	Let $\Phi: \RR^d \to \RR$ be a random ReLU network with $L$ hidden layers of width $N$ following \Cref{assum:1} with fixed biases.
	Assume $N \geq Cd$ and $N \geq C \cdot L^3d\ln(N/d)$.
	Let $\delta \in (0,\ee^{-1})$ with 
	\[
	N \geq C \cdot dL^2 \cdot \ln(1/\delta), 
	\quad 
	\delta N \geq C \cdot \max\{d  \cdot \ln(1/\delta), dL\}
	\quad
	\text{and}
	\quad
	\delta \cdot \ln(1/\delta) \leq c \cdot 1/L^2.
	\] 
	Let $\lambda'$  be chosen as in \eqref{eq:lambda}.
	Then
	the event 
	\[
	\underset{\mnorm{x}_2 \geq C \cdot 3^L \cdot \lambda' \cdot \delta^{-1} \cdot \ln^{-1/2}(1/\delta)}{\sup}
	\mnorm{\nablaa \Phi(x) - \nablaa \widetilde{\Phi}(x)}_2 \leq C \cdot L \cdot  (\sqrt{N}\cdot\delta\ln(1/\delta) + \sqrt{\delta \ln(1/\delta)Ld\ln(N/d)})
	\]
	occurs with probability at least $1- \exp(-c \cdot \delta N)$.
\end{theorem}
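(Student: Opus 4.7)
The plan is to combine the sparsity statement of Proposition~\ref{prop:theta} with a telescoping decomposition of $\nablaa\Phi(x) - \nablaa\widetilde{\Phi}(x)$ and the uniform zero-bias bounds of Proposition~\ref{prop:advanced_lip}. Writing $A^{(j)}(x) := D^{(j)}(x) - \DD^{(j)}(x)$, Proposition~\ref{prop:theta} together with a union bound over $j \in \{0,\ldots,L-1\}$ yields an event $\mathcal{E}_0$ of probability at least $1-\exp(-c\delta N)$ on which $A^{(j)}(x) \in \mathcal{A}_\delta^{(1)}$ simultaneously for all $j$ and all $x$ with $\mnorm{x}_2 \geq R$, where $R$ equals the stated threshold (up to adjusting $C$). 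All subsequent estimates are conditioned on this event.

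Next, a telescoping identity gives
\[
\left(\nablaa\Phi(x) - \nablaa\widetilde{\Phi}(x)\right)^T = \sum_{j=0}^{L-1} W^{(L)}\,\jac\Phi^{(j+1)\to(L-1)}(x)\,A^{(j)}(x)\,W^{(j)}\,\jac\widetilde{\Phi}^{(j-1)}(x),
\]
and since $A^{(j)} = A^{(j)}\abs{A^{(j)}}$, by submultiplicativity each summand is controlled by $T_1^{(j)}(x)\cdot T_2^{(j)}(x)$, where
\[
T_1^{(j)}(x) = \op{W^{(L)}\jac\Phi^{(j+1)\to(L-1)}(x)\,A^{(j)}(x)}, \qquad T_2^{(j)}(x) = \op{\abs{A^{(j)}(x)}\,W^{(j)}\,\jac\widetilde{\Phi}^{(j-1)}(x)}.
\]
The factor $T_2^{(j)}(x)$ involves only the zero-bias Jacobian, so by positive homogeneity the sup over $\mnorm{x}_2 \geq R$ reduces to a sup over $x \in \SS^{d-1}$, and since $\abs{A^{(j)}(x)} \in \mathcal{A}_\delta^{(1)}$ on $\mathcal{E}_0$, property~\eqref{item:2} of Lemma~\ref{lem:auxil} gives $T_2^{(j)}(x) \lesssim \sqrt{\delta\ln(1/\delta)} + \sqrt{Ld\ln(N/d)/N}$ uniformly in $j$, on an event $\mathcal{E}_1$ of probability at least $1-\exp(-c\delta N)$.

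To bound $T_1^{(j)}(x)$, split
\[
T_1^{(j)}(x) \leq \op{W^{(L)}\jac\widetilde{\Phi}^{(j+1)\to(L-1)}(x)\,A^{(j)}(x)} + \op{W^{(L)}\bigl[\jac\Phi^{(j+1)\to(L-1)}(x) - \jac\widetilde{\Phi}^{(j+1)\to(L-1)}(x)\bigr] A^{(j)}(x)}.
\]
The first summand is a purely zero-bias quantity, and the second statement of Proposition~\ref{prop:advanced_lip} (applied to $\widetilde{\Phi}$ after reducing to $\SS^{d-1}$ by homogeneity) bounds it by $\lesssim \sqrt{N\delta\ln(1/\delta)}$. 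The second summand is handled by induction: define
\[
M_\ell := \sup_{\mnorm{x}_2 \geq R,\, A \in \mathcal{A}_\delta^{(1)}} \op{W^{(L)}\bigl[\jac\Phi^{(\ell)\to(L-1)}(x) - \jac\widetilde{\Phi}^{(\ell)\to(L-1)}(x)\bigr] A},
\]
and show $M_\ell \lesssim \sqrt{N\delta\ln(1/\delta)}$ by telescoping once more inside layers $\ell,\ldots,L-1$. Each step of this inner recursion produces a sparse matrix factor (from Proposition~\ref{prop:theta}) and picks up a multiplicative factor of $\sqrt{\delta\ln(1/\delta)} + \sqrt{Ld\ln(N/d)/N}$, which under the standing assumptions $\delta\ln(1/\delta) \leq c/L^2$ and $N \geq C L^3 d\ln(N/d)$ is at most $1/L$ up to constants; thus the inner recursion is geometrically contractive and $M_\ell$ stays bounded uniformly in $\ell$.

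Combining these estimates, each of the $L$ outer summands is at most
\[
T_1^{(j)}(x)\cdot T_2^{(j)}(x) \lesssim \sqrt{N\delta\ln(1/\delta)} \cdot \Bigl(\sqrt{\delta\ln(1/\delta)} + \sqrt{Ld\ln(N/d)/N}\Bigr) = \sqrt{N}\,\delta\ln(1/\delta) + \sqrt{\delta\ln(1/\delta)\cdot Ld\ln(N/d)},
\]
and summing over $j \in \{0,\ldots,L-1\}$ yields the stated bound, with the probability estimate coming from intersecting $\mathcal{E}_0$, $\mathcal{E}_1$, the events of Proposition~\ref{prop:advanced_lip}, and those arising in the inner induction (each of the form $1-\exp(-c\delta N)$, so that a union bound over $L$ layers leaves a probability of at least $1 - \exp(-c\delta N)$ since $\delta N \gtrsim dL \gtrsim \ln(\ee L)$). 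The main obstacle is organizing the inner induction so that the auxiliary $A \in \mathcal{A}_\delta^{(1)}$ always appears at the tail — this preserves the zero-bias structure needed to invoke Proposition~\ref{prop:advanced_lip} at every step — and ensuring that the accumulated error over both the outer and inner telescopes does not exceed the linear-in-$L$ prefactor, which is precisely what the assumption $\delta\ln(1/\delta) \leq c/L^2$ secures.
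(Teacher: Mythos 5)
Your proposal is correct and follows essentially the same route as the paper: the same telescoping decomposition into $L$ summands with the sparse factors $A^{(j)}(x)$ controlled by \Cref{prop:theta}, the same decoupling of $A^{(j)}(x)$ into an arbitrary element of $\mathcal{A}_\delta^{(1)}$ so that \Cref{lem:auxil} and \Cref{prop:advanced_lip} apply to the homogeneous factors, and the same downward induction (your $M_\ell$) to close the difference term, with the self-consistency secured by $\delta\ln(1/\delta)\leq c/L^2$ and $N\geq CL^3d\ln(N/d)$ exactly as in the paper. The paper merely packages your outer decomposition and inner recursion into a single induction over $\ell$, of which the theorem is the case $\ell=0$, $A=I_{d\times d}$.
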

\begin{proof}
	Let $C_1,c_1 > 0$ be chosen according to \Cref{prop:theta}
	and choose $C_2, c_2>0$ according to \Cref{prop:advanced_lip}.
	Set $\theta \defeq C_1 \cdot 3^L \cdot  \lambda' \cdot \delta^{-1} \cdot \ln^{-1/2}(1/\delta)$.
	We aim to show the following statement:
	For every $\ell \in \{0,\ldots,L\}$, 
	the event 
	\begin{align}
		&\norel\underset{\mnorm{x}_2 \geq \theta, A \in \mathcal{A}_\delta^{(\ell)}}{\sup}
		\op{W^{(L)}\left[\jac \Phi^{(j) \to (L-1)}(x)- \jac \tilde{\Phi}^{(j) \to (L-1)}(x)\right]A} \nonumber\\
		\label{eq:indshow}
		&\leq C' \cdot L \cdot  (\sqrt{N}\cdot\delta\ln(1/\delta) + \sqrt{\delta \ln(1/\delta)Ld\ln(N/d)}) \quad \text{for every } j \in \{\ell, \dots, L\}
	\end{align}
	occurs with probability at least $1- 3(L-\ell)\exp(-c' \cdot \delta N)$, where $C',c' > 0$ are absolute constants that are exactly determined later. 
	This will then imply the statement of the theorem by studying the special case $\ell = 0$. 
	
	The proof is via induction over $\ell$, starting with the case $\ell = L$, for which there is nothing to show, since by definition we have 
	\[
	\jac \Phi^{(L) \to (L-1)}(x) = 
	\jac \widetilde{\Phi}^{(L) \to (L-1)}(x) = I_{N \times N} 
	\]
	for every $x \in \RR^d$.
	
	For the induction step, we take $\ell \in \{0,\dots,L-1\}$ and assume that $\ell + 1$ satisfies the claim.
	Let $\E_1$ be the event defined by 
	\[
	\text{for all } x \in \RR^d \text{ with } \mnorm{x}_2 \geq \theta \text{ and } j \in \{0, \dots, L-1\}: 
	\quad 
	A^{(j)}(x) \in \mathcal{A}_\delta^{(1)},
	\]
	which occurs with probability at least $1-\exp(-c_1 \cdot \delta N)$ according to  
	\Cref{prop:theta} under the application of a union bound (noting that $\delta N \geq C \cdot \ln(\ee L)$).
	Let the event $\E_2$ be defined by the property 
	\begin{align*}
		\underset{x \in \RR^d,A \in \mathcal{A}_\delta^{(1)}}{\sup} \quad \op{W^{(L)}\jac \tilde{\Phi}^{(j+1) \to (L-1)}(x)A} 
		\leq C_2 \cdot \sqrt{N \cdot \delta \ln(1/\delta)} \quad \text{for all } j \in \{0,\dots, L-1\},
	\end{align*}
	which occurs with probability at least $1-\exp(-c_2\cdot \delta N)$ according to \Cref{prop:advanced_lip}.
	Let $\E_3$ be the event defined by 
	\begin{align*}
		&\norel\underset{\mnorm{x}_2 \geq \theta, A \in \mathcal{A}_\delta^{(\ell)}}{\sup}
		\op{W^{(L)}\left[\jac \Phi^{(j+1) \to (L-1)}(x)- \jac \tilde{\Phi}^{(j+1) \to (L-1)}(x)\right]A} \\
		&\leq C' \cdot L \cdot  \left(\sqrt{N}\cdot\delta\ln(1/\delta) + \sqrt{\delta \ln(1/\delta)Ld\ln(N/d)}\right) \quad \text{for every } j \in \{\ell+1, \dots, L\}.
	\end{align*}
	According to the induction hypothesis, $\E_3$ occurs with probability at least $1- 3(L-\ell -1) \exp(-c' \cdot \delta N)$. 
	Lastly, let $\E_4$ be the event on which 
	\[
	\underset{x \in \RR^d, A_1 \in \mathcal{A}_\delta^{(1)}, A_2 \in \mathcal{A}_\delta^{(\ell)}}{\sup} \op{A_1 W^{(j)} \jac \tilde{\Phi}^{(\ell) \to (j-1)}(x)A_2}
	\leq C_3 \cdot \left(\sqrt{\delta \ln(1/\delta)} + \sqrt{\frac{Ld\ln(N/d)}{N}}\right)
	\]
	for every $j \in \{0, \dots, L-1\}$, which occurs with probability at least $1-\exp(-c_3 \cdot \delta N)$ according to \Cref{lem:auxil}. 
	On the intersection $\E_1 \cap \E_2 \cap \E_3 \cap \E_4$
	we use a decomposition similar to \Cref{lem:decomp} to get
	\begin{align*}
		&\norel\underset{\mnorm{x}_2 \geq \theta, A \in \mathcal{A}_\delta^{(\ell)}}{\sup}\op{\left[W^{(L)}\jac \Phi^{(\ell) \to (L-1)}(x)- \jac \tilde{\Phi}^{(\ell) \to (L-1)}(x)\right]A} \nonumber\\
		&\leq \sum_{j=\ell}^{L-1} \underset{\mnorm{x}_2 \geq \theta, A \in \mathcal{A}_\delta^{(\ell)}}{\sup}\op{W^{(L)}\jac \Phi^{(j+1) \to (L-1)}
			(x)A^{(j)}(x) W^{(j)}\jac \tilde{\Phi}^{(\ell) \to (j-1)}(x)A} \nonumber\\
		&\leq \sum_{j=\ell}^{L-1} \left(\underset{\mnorm{x}_2 \geq \theta}{\sup}\op{W^{(L)}\jac \Phi^{(j+1) \to (L-1)}
			(x)A^{(j)}(x)}\right) \nonumber\\
		&\hspace{1.2cm}\cdot\left(\underset{\mnorm{x}_2 \geq \theta, A \in \mathcal{A}_\delta^{(\ell)}}{\sup}\op{A^{(j)}(x)W^{(j)}\jac \tilde{\Phi}^{(\ell) \to (j-1)}(x)A}\right) \\
		\overset{\E_1}&{\leq}\sum_{j=\ell}^{L-1} \left(\underset{\mnorm{x}_2 \geq \theta, A_2 \in \mathcal{A}_\delta^{(1)}}{\sup}\op{W^{(L)}\jac \Phi^{(j+1) \to (L-1)}
			(x)A_2}\right) \nonumber\\
		&\hspace{1.2cm}\cdot\left(\underset{\mnorm{x}_2 \geq \theta, A_2 \in \mathcal{A}_\delta^{(1)}, A_1 \in \mathcal{A}_\delta^{(j)}}{\sup}\op{A_2W^{(j)}\jac \tilde{\Phi}^{(\ell) \to (j-1)}(x)A_1}\right) \\
		\overset{\E_4}&{\leq} C_3 \cdot \left(\sqrt{\delta \ln(1/\delta)} + \sqrt{\frac{Ld\ln(N/d)}{N}}\right) \cdot \sum_{j=\ell}^{L-1} \left(\underset{\mnorm{x}_2 \geq \theta, A_2 \in \mathcal{A}_\delta^{(1)}}{\sup}\op{W^{(L)}\jac \Phi^{(j+1) \to (L-1)}
			(x)A_2}\right).
	\end{align*}
	
	For arbitrary $j \in \{\ell, \ldots, L-1\}$,
	\begin{align*}
		&\norel \underset{\mnorm{x}_2 \geq \theta, A_2 \in \mathcal{A}_\delta^{(1)}}{\sup} \op{W^{(L)}\jac \Phi^{(j+1) \to (L-1)}(x)A_2} \\
		&\leq \underset{\mnorm{x}_2 \geq \theta, A_2 \in \mathcal{A}_\delta^{(1)}}{\sup}\op{W^{(L)}\left[\jac \Phi^{(j+1) \to (L-1)}(x) - \jac \tilde{\Phi}^{(j+1) \to (L-1)}(x)\right]A_2} 
		\\
		&\norel+ \underset{\mnorm{x}_2 \geq \theta, A_2 \in \mathcal{A}_\delta^{(1)}}{\sup}\op{W^{(L)}\jac \tilde{\Phi}^{(j+1) \to (L-1)}(x)A_2} \\
		\overset{\E_2 \cap \E_3}&{\leq} C' \cdot L \cdot  \left(\sqrt{N}\cdot\delta\ln(1/\delta) + \sqrt{\delta \ln(1/\delta)Ld\ln(N/d)}\right) + C_2 \cdot \sqrt{N \cdot \delta \ln(1/\delta)}.
	\end{align*}
	Since $\delta \cdot \ln(1/\delta) \leq c \cdot 1/L^2$ and $N \geq C \cdot L^3d\ln(N/d)$, we can, by possibly increasing $C$ and decreasing $c$, ensure
	\[
	\sqrt{N}\cdot\delta \ln(1/\delta) \leq \frac{C_2}{2C'} \cdot \frac{\sqrt{N\cdot \delta \ln(1/\delta)}}{L}
	\quad 
	\text{and}
	\quad 
	\sqrt{\delta \ln(1/\delta)Ld\ln(N/d)} \leq \frac{C_2}{2C'} \cdot \frac{\sqrt{N\cdot \delta \ln(1/\delta)}}{L},
	\]
	which implies 
	\[
	C' \cdot L \cdot  (\sqrt{N}\cdot\delta\ln(1/\delta) + \sqrt{\delta \ln(1/\delta)Ld\ln(N/d)}) \leq C_2 \sqrt{N \cdot \delta\ln(1/\delta)}.
	\]
	We thus get 
	\[
	\underset{\mnorm{x}_2 \geq \theta, A_2 \in \mathcal{A}_\delta^{(1)}}{\sup} \op{W^{(L)}\jac \Phi^{(j+1) \to (L-1)}(x)A_2} 
	\leq 2C_2 \cdot \sqrt{N \cdot \delta \ln(1/\delta)}
	\]
	for every $j \in \{\ell, \dots, L-1\}$ on $\E_1 \cap \E_2 \cap \E_3 \cap \E_4$.
	
	In total, on $\E_1 \cap \E_2 \cap \E_3 \cap \E_4$,
	\begin{align*}
		&\norel\underset{\mnorm{x}_2 \geq \theta, A \in \mathcal{A}_\delta^{(\ell)}}{\sup}\op{\left[W^{(L)}\jac \Phi^{(\ell) \to (L-1)}(x)- \jac \tilde{\Phi}^{(\ell) \to (L-1)}(x)\right]A} \\
		&\leq 2C_2C_3 \cdot \left(\sqrt{N} \cdot \delta\ln(1/\delta) + \sqrt{Ld\ln(N/d) \delta \ln(1/\delta)}\right)
	\end{align*}
	and 
	\begin{align*}
		\PP(\E_1 \cap \E_2 \cap \E_3 \cap \E_4) &\geq 1 - \exp(-c_1 \cdot \delta N)- \exp(-c_2 \cdot \delta N)- 3(L\!-\! \ell\! -\! 1)\exp(-c' \cdot \delta N) - \exp(-c_3 \cdot \delta N) \\
		&\geq 1- 3(L-\ell)\exp(-c' \cdot \delta N),
	\end{align*}
	letting $c' \defeq \min\{c_1,c_2,c_3\}$, which proves the induction step by choosing $C' = 2C_2C_3$.
	
	The final claim then follows from the case $\ell = 0$, noting that $\delta N \geq C \cdot dL \geq C \cdot \ln(\ee L)$.
\end{proof}

Note that there is a trade-off between decreasing the size of $\delta$ (in order 
to make the bound sharper) and increasing the size of $\delta$ in order to make $\theta$
smaller.

\renewcommand{\DD}{\hat{D}}

\section{Bounds on the Lipschitz constants for networks with biases}\label{sec:bias_bounds_proofs}

In this section, we use the results from \Cref{sec:hom_diff} to prove bounds on the Lipschitz constant
of random ReLU nets with general biases. 
Precisely, in order to establish our upper bound (see \Cref{sec:upper_proof_bias}), we assume that the entries of the bias vectors are drawn independently from
(possibly different) symmetric distributions and satisfy a small ball property, 
i.e., 
we assume that there exists a constant $C_\tau > 0$ for which 
\[
\sup_{\ell \in \{0,\ldots,L-1\}, i \in \{1,\ldots,N\},t \in \RR, \eps > 0} 
\left(\eps^{-1} \cdot \PP\left(b_i^{(\ell)} \in (t-\eps, t + \eps)\right)\right) \leq C_\tau.
\]
For our lower bounds (see \Cref{sec:lower_general_proofs}), no special assumptions on the distribution of the biases are required.

\subsection{Upper bound}\label{sec:upper_proof_bias}
Let $R > 0$ be arbitrary.
The first goal is to establish a bound on 
\[
\underset{x \in B_d(0,R)}{\sup} \mnorm{\nablaa \Phi(x)}_2.
\]
This works very similar to the proof of \Cref{prop:advanced_lip}, with the difference that \Cref{corr:tess} is used instead of 
\Cref{lem:recht_upper_original}.
We start by bounding the Jacobians of the intermediate maps $\Phi^{(\ell)}$.

\begin{lemma}\label{prop:not_last_layer}
	There exist absolute constants $C,c > 0$ such that the following holds. 
	Let $\Phi:\RR^d \to \RR$ be a random ReLU network following \Cref{assum:3} with $L$ hidden layers of width $N$ and 
	with small ball constant $C_\tau > 0$.
	Let $C_\tau' \defeq \sqrt{2/N} \cdot C_\tau$.
	Let further $R\geq (C_\tau')^{-1}$, $\delta \in (0, \ee^{-1})$ and assume $N \geq C \cdot d$ and
	\begin{align*}
		N \geq C \cdot L^3d\ln(N/d), \quad 
		\quad N \geq C \cdot L^2 d \ln(RC_\tau' /\delta), \quad
		\delta N \geq C \cdot d  \cdot \ln(RC_\tau' / \delta),\quad \delta\ln(1/\delta) \leq c \cdot 1/L^2.
	\end{align*}
	We let $C_1 > 0$ be the absolute constant from \Cref{lem:pw_2} and $c' > 0$ be the absolute constant from \Cref{lem:tess_bias_lip} and set $\eps \defeq \frac{c'}{2C_1} \cdot (C_\tau')^{-1} \cdot \delta \cdot \ln^{-1/2}(1/\delta)$.
	Then, for every $\ell_2 \in \{-1,\dots,L-1\}$ with probability at least $1- 3(\ell_2+1)\exp(-c \cdot \delta N)$,
	\begin{enumerate}
		\item{
			\label{item:a}
			$\displaystyle \sup_{x,y \in B_d(0,R), \mnorm{x-y}_2 \leq \eps} \Tr \abs{D^{(j)}(x) - D^{(j)}(y)} \leq \delta N \quad \text{for all } j \in \{0, \dots, \ell_2\}$,
		}
		\item{
			\label{item:b}
			$\displaystyle \underset{A \in A_\delta^{(1)}}{\sup_{x \in B_d(0,R)}} \op{A W^{(j)} \jac \Phi^{(j-1)}(x)} \! \!\!\leq C \cdot \left(\sqrt{\delta \ln(1/\delta)} + \sqrt{\frac{Ld\ln(N/d)}{N}}\right) \quad \text{for all } j \in \{0, \dots, \ell_2\}$,
		}
		\item{ \label{item:c}$\displaystyle
			{\underset{x \in B_d(0,R)}{\sup}} \mnorm{\jac \Phi^{(j)}(x)}_{2 \to 2 }
			\leq C
			\quad \text{for all }  j \in \{-1, \dots, \ell_2\}$.}
	\end{enumerate}
\end{lemma}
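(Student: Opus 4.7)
\medskip

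\textbf{Plan.} The argument proceeds by induction on $\ell_2 \in \{-1,0,\dots,L-1\}$ and closely mirrors the proof of \Cref{lem:auxil}, with the sphere $\SS^{d-1}$ replaced by the ball $B_d(0,R)$ and \Cref{lem:recht_upper_h} replaced by its biased analogue \Cref{lem:tess_bias_lip}. The base case $\ell_2=-1$ is trivial since (3) reduces to $\op{I_{d\times d}}=1$ and (1), (2) are vacuous. For the induction step, I will condition on $W^{(0)},\dots,W^{(\ell_2-1)}$ and $b^{(0)},\dots,b^{(\ell_2-1)}$ on the event from the induction hypothesis; in particular, combining (3) for $\ell_2-1$ with \Cref{thm:glob} yields a uniform Lipschitz bound $\lip_{2\to 2}\bigl(\Phi^{(\ell_2-1)}|_{B_d(0,R)}\bigr)\leq 2C_1$.

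\smallskip

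\textbf{Item (1).} Writing $W^{(\ell_2)}=\sqrt{2/N}\,\widetilde W^{(\ell_2)}$ with $\widetilde W^{(\ell_2)}$ having i.i.d. standard Gaussian entries, the sign
\[
\sgn\bigl((W^{(\ell_2)}\Phi^{(\ell_2-1)}(x))_i + b^{(\ell_2)}_i\bigr)
\;=\;\sgn\bigl((\widetilde W^{(\ell_2)}\Phi^{(\ell_2-1)}(x))_i + \sqrt{N/2}\,b^{(\ell_2)}_i\bigr).
\]
Since $b^{(\ell_2)}_i$ has density at most $C_\tau/2$ a.e., the rescaled bias $\sqrt{N/2}\,b^{(\ell_2)}_i$ has density at most $C'_\tau/2$, hence satisfies a small-ball property with constant $C'_\tau$ by \Cref{thm:abs_cont}. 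I will then apply \Cref{lem:tess_bias_lip} with $f=\Phi^{(\ell_2-1)}|_{B_d(0,R)}$, $M=2C_1$, and $n=d$; the threshold separation $c' (C'_\tau)^{-1}\delta\ln^{-1/2}(1/\delta)$ is precisely $2C_1\eps$, and the sample-size condition $N\gtrsim\delta^{-1}d\ln(MRC'_\tau/\delta)$ follows from the hypothesis $\delta N\geq Cd\ln(RC'_\tau/\delta)$, yielding (1) with failure probability $\exp(-c'\delta N)$.

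\smallskip

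\textbf{Items (2) and (3).} For (2), after further conditioning on $W^{(0)},\dots,W^{(\ell_2-1)}$ I will consider $\mathcal V\defeq\{\jac\Phi^{(\ell_2-1)}(x):x\in B_d(0,R)\}$; since $\jac\Phi^{(\ell_2-1)}(x)$ is determined by the activation pattern, \cite[Lemma~5.7]{geuchen2024upper} gives $|\mathcal V|\leq (\ee N/(d+1))^{L(d+1)}$, and by the induction hypothesis $\op{V}\leq 2C_1$, $\rang(V)\leq d$ for all $V\in\mathcal V$. Fix $A\in\mathcal A_\delta^{(1)}$ and $V\in\mathcal V$: \Cref{lem:pw_4} yields $\op{AW^{(\ell_2)}V}\lesssim(\sqrt{\delta N}+\sqrt{d}+t)/\sqrt N$ with probability $1-\exp(-ct^2)$, and a union bound using $\ln|\mathcal A_\delta^{(1)}|\lesssim \delta N\ln(1/\delta)$ and $\ln|\mathcal V|\lesssim Ld\ln(N/d)$, together with the choice $t^2\asymp \delta N\ln(1/\delta)+Ld\ln(N/d)$, gives (2) on an event of probability at least $1-\exp(-\delta N)$. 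For (3), let $\mathcal N_\eps\subseteq B_d(0,R)$ be an $\eps$-net of size at most $(3R/\eps)^d$, so $\ln|\mathcal N_\eps|\lesssim d\ln(R/\eps)\lesssim d\ln(RC'_\tau/\delta)$, which is absorbed by $\delta N\geq Cd\ln(RC'_\tau/\delta)$. A union bound of \Cref{lem:pw_2} over $\mathcal N_\eps$ controls $\op{\jac\Phi^{(\ell_2)}(x^*)}\leq C_1$; for generic $x\in B_d(0,R)$ I will use the telescoping identity of \Cref{lem:decomp} (with $\ell_1=0$) together with items (1) and (2) to bound $\op{\jac\Phi^{(\ell_2)}(x)-\jac\Phi^{(\ell_2)}(\pi(x))}\leq C_1$, absorbing the sum over layers via $\delta\ln(1/\delta)\leq c/L^2$ and $N\geq CL^3d\ln(N/d)$, thereby establishing (3) for $j=\ell_2$.

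\smallskip

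\textbf{Main obstacle.} The principal difficulty, absent from \Cref{lem:auxil}, is that the sign pattern of $W^{(\ell_2)}\Phi^{(\ell_2-1)}(x)+b^{(\ell_2)}$ depends on the unnormalized output of $\Phi^{(\ell_2-1)}$, so the trick of composing with the radial projection (which made the Lipschitz constant $16C_1$ in the zero-bias case but collapsed the covering dimension to that of the sphere) is unavailable. One must instead apply the tessellation lemma directly on the ball $B_d(0,R)$, tracking the diameter $R$ through the covering-number bound — this is what produces the $\ln(RC'_\tau/\delta)$ factor in the hypothesis on $\delta N$ — and one must simultaneously rescale the small-ball constant from $C_\tau$ to $C'_\tau=\sqrt{2/N}\,C_\tau$ to compensate for the $\sqrt{2/N}$ normalization of the Gaussian weights. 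Once these two adjustments are in place, the remainder of the induction follows the template of \Cref{lem:auxil} essentially verbatim, and omitting item (4) (the lower bound on $\mnorm{\Phi^{(j)}(x)}_2$) further simplifies the argument since no normalization is performed.
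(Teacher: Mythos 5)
Your proposal is correct and follows essentially the same route as the paper's proof: induction on $\ell_2$, conditioning on the earlier layers, applying \Cref{lem:tess_bias_lip} to the image $\Phi^{(\ell_2-1)}(B_d(0,R))$ (with the small-ball constant rescaled to $C_\tau'$ to match the $\sqrt{2/N}$ weight normalization) for item (1), the class $\mathcal V$ with \cite[Lemma~5.7]{geuchen2024upper} and \Cref{lem:pw_4} for item (2), and an $\eps$-net of the ball plus the decomposition of \Cref{lem:decomp} for item (3). Your identification of the main structural difference from \Cref{lem:auxil} — working directly on the ball rather than the sphere, which is exactly what injects the $\ln(RC_\tau'/\delta)$ factor into the sample-size condition — matches the paper's treatment.
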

\begin{proof}
	The proof is via induction over $\ell_2$.
	In the case $\ell_2 = -1$, only \eqref{item:c} needs to be considered. 
	However, here we have, since necessarily $j = -1$,
	\[
	{\underset{x \in B_d(0,R)}{\sup}} \op{\jac \Phi^{(j)}(x)} 
	=  1.
	\]
	
	Let us move to the induction step. 
	Pick $\ell_2 \in \{0,\dots,L-1\}$ and assume that with probability at least $1- 3\ell_2\exp(-c \cdot \delta N)$,
	\begin{enumerate}
		\item{
			\label{item:aa}
			$\displaystyle \sup_{x,y \in B_d(0,R), \mnorm{x-y}_2 \leq \eps} \Tr \abs{D^{(j)}(x) - D^{(j)}(y)} \leq \delta N \quad \text{for all } j \in \{0, \dots, \ell_2-1\}$,
		}
		\item{
			\label{item:bb}
			$\displaystyle \underset{A \in A_\delta^{(1)}}{\sup_{x \in B_d(0,R)}} \op{A W^{(j)} \jac \Phi^{(j-1)}} \leq C_2 \cdot \left(\sqrt{\delta \ln(1/\delta)} + \sqrt{\frac{Ld\ln(N/d)}{N}}\right) \quad  \\
			\text{for all } j \in \{0, \dots, \ell_2-1\}$,
		}
		\item{ \label{item:cc}$\displaystyle
			{\underset{x \in B_d(0,R)}{\sup}} \mnorm{\jac \Phi^{(j)}(x)}_{2 \to 2 }
			\leq 2C_1
			\quad \text{for all }  j \in \{0, \dots, \ell_2-1\}$}
	\end{enumerate}
	where $C_2, c > 0$
	are appropriate constants to be exactly determined later and we recall that $C_1>0$ is the absolute constant from \Cref{lem:pw_2}.
	
	We now condition on the weights $W^{(0)}, \dots, W^{(\ell_2 -1)}$ and biases $b^{(0)}, \dots, b^{(\ell_2 - 1)}$ and assume that properties \eqref{item:aa}-\eqref{item:cc} are satisfied. 
	By \eqref{item:cc} and \Cref{thm:glob}, this in particular implies 
	\begin{equation}\label{eq:lip_help_2}
		\lip_{2 \to 2}\left(\fres{\Phi^{(j)}}{B_d(0,R)}\right) \leq \sup_{x \in B_d(0,R)} \op{\jac \Phi^{(j)}} \leq 2C_1 \quad \text{for all } j \in \{0, \dots, \ell_2 -1\}. 
	\end{equation}
	
	We set $T \defeq \Phi^{(\ell_2 - 1)}(B_d(0,R))$.
	Note that for $x, y \in B_d(0,R)$ with $\mnorm{x-y}_2\leq \eps$ we have 
	\[
	\mnorm{\Phi^{(\ell_2 - 1)}(x)-\Phi^{(\ell_2 - 1)}(y)}_2 \leq 2C_1 \cdot \eps = c' \cdot (C_\tau ')^{-1} \cdot \delta \cdot \ln^{-1/2}(1/\delta).
	\]
	Hence, we may apply \Cref{lem:tess_bias_lip} by using that 
	\[
	N \geq C \cdot d \cdot \delta^{-1} \cdot \ln(RC_\tau'/\delta)
	\]
	and get 
	\begin{align}
		&\norel\underset{x,y \in B_d(0,R), \mnorm{x - y}_2 \leq \eps}{\sup}\Tr \abs{D^{(\ell_2)}(y) - D^{(\ell_2)}(x)} \nonumber\\
		\label{eq:ee_1_2}
		&\leq \underset{\mnorm{x-y}_2 \leq c' \cdot (C_\tau')^{-1}\cdot \delta \cdot \ln^{-1/2}(1/\delta)}{\sup_{x,y \in T,}} \hspace{-0.5cm}
		\#\left\{i \in \{1,\dots,N\}: \ \sgn((W^{(\ell_2)}x)_i + b^{(\ell_2)}_i) \neq \sgn((W^{(\ell_2)}y)_i+ b^{(\ell_2)}_i)\right\} \leq \delta N
	\end{align}
	with probability at least $1- \exp(-c' \cdot \delta N)$ over the randomness in $W^{(\ell_2)}$ and $b^{(\ell_2)}$. 
	We call the event on which \eqref{eq:ee_1_2} holds $\E_1$.
	
	Further, 
	set 
	\[
	\mathcal{V} \defeq \left\{ \jac \Phi^{(\ell_2-1)}(x): \ x \in B_d(0,R)\right\}.
	\]
	Note that we have $\op{M} \leq 2C_1$ and $\rang(M) \leq d \leq \delta N$ for every $M \in \mathcal{V}$ by \eqref{item:33}. 
	Here, we used that $d \leq C \cdot d \cdot \ln(RC\tau'/\delta) \leq \delta N$.
	Hence, for fixed $x \in B_d(0,R)$ and $A \in \mathcal{A}_\delta^{(1)}$, we find, 
	using \Cref{lem:pw_4},
	\begin{align*}
		\op{AW^{(\ell_2)}\jac \Phi^{ (\ell_2-1)}(x)}
		&\leq \sqrt{2} \cdot 2C_1 \cdot \frac{\sqrt{\delta N} + \sqrt{\delta N}+ t}{\sqrt{N}}
	\end{align*}
	with probability at least $1- \exp(-c_1 \cdot t^2)$ for every $t \geq C_3$ for absolute constants $C_3, c_1 > 0$.
	Moreover, combining \Cref{lem:card_b} and \cite[Lemma~5.7]{geuchen2024upper}, we get 
	\begin{align*}
		\ln(\abs{\mathcal{V}}) + \ln\left(\abs{\mathcal{A}_\delta^{(1)}}\right)
		&\leq L(d+1) \cdot \ln\left(\frac{\ee N}{d+1}\right) +  \delta N \cdot \ln(4\ee /\delta)\nonumber\\
		\overset{\text{\cshref{lem:log_b}}}&{\leq} 2\ee L d \cdot \ln(N/d) + 4\ee \cdot N \cdot \delta \ln(1/\delta) \nonumber\\
		\label{eq:carddd}
		&\leq 4\ee  (Ld \ln(N/d) + N \delta \ln(1/\delta)).
	\end{align*}
	Hence, via a union bound, we observe 
	\[
	\underset{A \in \mathcal{A}_\delta^{(1)}}{\underset{x \in B_d(0,R)}{\sup}} \op{AW^{(\ell_2)}\jac \Phi^{ (\ell_2-1)}(x)}
	\leq \sqrt{2} \cdot 2C_1 \cdot \frac{\sqrt{\delta N} + \sqrt{\delta N}+ t}{\sqrt{N}}
	\]
	with probability at least 
	\[
	1- \exp(4\ee  ( Ld\ln(N/d) +  N\delta\ln(1/\delta))-c_1t^2)
	\]
	for every $t \geq C_3$.
	We then explicitly pick 
	\[
	t \defeq \sqrt{c_1^{-1} \cdot 5\ee \cdot (Ld \ln(N/d) + N\delta \cdot \ln(1/\delta))}, 
	\]
	which yields the existence of an absolute constant $C_2 > 0$, such that 
	\begin{align*}
		&\norel\underset{A \in \mathcal{A}_\delta^{(1)}}{\underset{x \in B_d(0,R)}{\sup}} \op{AW^{(\ell_2)}\jac \Phi^{(\ell_2-1)}(x)}\\
		&\leq C_2  \cdot \frac{\sqrt{N \cdot \delta \cdot \ln(1/\delta)} + \sqrt{Ld\ln(N/d)}}{\sqrt{N}}
	\end{align*}
	with probability at least 
	\[
	1-\exp(-\ee \cdot ( Ld\ln(N/d) + N\delta \ln(1/\delta))) \geq 1 - \exp(- \delta N)
	\]
	over the randomness in $W^{(\ell_2)}$.
	We call the event defined by property \eqref{eq:ee_2} $\E_2$.
	
	Moreover, 
	let $\neps\subseteq B_d(0,R)$ be an $\eps$-net of $B_d(0,R)$ with 
	\begin{align}
		\ln(\abs{\neps}) &\leq d \cdot \ln(3R/\eps) 
		\remove{= d \cdot \ln\left(\frac{6C_1\cdot RC_\tau' \cdot \ln^{1/2}(1/\delta)}{c' \cdot \delta}\right)
			\overset{\ln^{1/2}(1/\delta) \leq 1/\delta}{\leq} d \cdot \ln\left(\frac{6C_1 \cdot RC_\tau' \cdot \ln^{1/2}(1/\delta)}{c' \cdot \delta^2}\right)} \nonumber\\
		\label{eq:neps_b_22}
		&\remove{\leq 2 \cdot d \cdot \ln(6C_1 \cdot RC_\tau'/(c'\delta))} \leq 
		C_4 \cdot d \cdot \ln(RC_\tau'/\delta)
	\end{align}
	with an absolute constant $C_4 > 0$ using \Cref{lem:log_b}. 
	For $x \in B_d(0,R)$ we denote by $\pi(x) \in \neps$ a net point with $\mnorm{x- \pi(x)}_2 \leq \eps$.
	On $\E_1 \cap \E_2$, by \Cref{lem:decomp},
	\begin{align*}
		&\norel\underset{x \in B_d(0,R)}{\sup}
		\op{\jac \Phi^{ (\ell_2)}(\pi(x)) - 
			\jac \Phi^{(\ell_2)}(x)} \nonumber\\
		&\leq {\underset{x \in B_d(0,R)}{\sup}} \sum_{j= 0}^{\ell_2} 
		\op{\jac \Phi^{(j+1) \to (\ell_2)}(\pi(x))(D^{(j)}(\pi(x)) - D^{(j)}(x))
			W^{(j)}\jac \Phi^{ (j-1)}(x)} \nonumber\\
		&\leq 
		{\underset{x \in B_d(0,R)}{\sup}} \Bigg( 
		\sum_{j= 0}^{\ell_2} \op{\jac \Phi^{(j+1) \to (\ell_2)}(\pi(x))(D^{(j)}(\pi(x)) - D^{(j)}(x))} \nonumber\\
		& \hspace{4.5cm}\cdot 
		\op{(D^{(j)}(\pi(x)) - D^{(j)}(x))W^{(j)}\jac \Phi^{ (j-1)}(x)} \Bigg)\nonumber\\
		&\leq \sum_{j= 0}^{\ell_2} \left( \underset{x \in B_d(0,R)}{\sup} \op{\jac \Phi^{(j+1) \to (\ell_2)}(\pi(x))(D^{(j)}(\pi(x)) - D^{(j)}(x))} \right) \nonumber\\
		\label{eq:tb}
		& \hspace{1.2cm}\cdot 
		\left( {\underset{x \in B_d(0,R)}{\sup}} \op{(D^{(j)}(\pi(x)) - D^{(j)}(x))W^{(j)}\jac \Phi^{ (j-1)}(x)}\right) \\
		\overset{\E_1, \eqref{item:aa}}&{\leq} \sum_{j= 0}^{\ell_2} \left(\sup_{x^\ast \in \neps, A \in \mathcal{A}_\delta^{(1)}} \op{\jac \Phi^{(j+1) \to (\ell_2)}(x^\ast)A}\right) 
		\cdot \left(\underset{A \in \mathcal{A}_\delta^{(1)}}{\sup_{x \in B_d(0,R)}} \op{A W^{(j)} \jac \Phi^{ (j-1)}(x)}\right) \\
		\overset{\E_2, \eqref{item:bb}}&{\leq} C_2 \cdot \left(\sqrt{\delta \ln(1/\delta)} + \sqrt{\frac{Ld\ln(N/d)}{N}}\right) \cdot \sum_{j= 0}^{\ell_2} \left(\sup_{x^\ast \in \neps, A_2 \in \mathcal{A}_\delta^{(1)}} \op{\jac \Phi^{(j+1) \to (\ell_2)}(x^\ast)A_2}\right).
	\end{align*}
	Since $\delta \ln(1/\delta) \leq c/L^2$ and $N \geq C \cdot L^3 d \ln(N/d)$, 
	\[
	C_2 \cdot \left(\sqrt{\delta \ln(1/\delta)} + \sqrt{\frac{Ld\ln(N/d)}{N}}\right) \leq \frac{1}{L}.
	\]
	Hence, on $\E_1 \cap \E_2$, 
	\begin{align*}
		&\norel \underset{x \in B_d(0,R)}{\sup}
		\op{\jac \Phi^{(\ell_2)}(\pi(x)) - 
			\jac \Phi^{(\ell_2)}(x)} \\
		&\leq \frac{1}{L} 
		\cdot \sum_{j= 0}^{\ell_2} \left(\sup_{x^\ast \in \neps, A \in \mathcal{A}_\delta^{(1)}} \op{\jac \Phi^{(j+1) \to (\ell_2)}(x^\ast)A}\right).
	\end{align*}
	
	Recall that up to now we have conditioned on $W^{(0)}, \dots, W^{(\ell_2 -1)}$ and $b^{(0)}, \dots, b^{(\ell_2 -1)}$. 
	Reintroducing the randomness over these random matrices, we note that with
	probability at least $1- 3\ell_2\exp(-c \cdot \delta N) - \exp(-c' \cdot \delta N) - \exp(- \delta N)$, 
	\begin{enumerate}
		\item{
			\label{item:aaa}
			$\displaystyle \sup_{x,y \in \SS^{d-1}, \mnorm{x-y}_2 \leq \eps} \Tr \abs{D^{(j)}(x) - D^{(j)}(y)} \leq \delta N \quad \text{for all } j \in \{0, \dots, \ell_2\}$,
		}
		\item{
			\label{item:bbb}
			$\displaystyle \underset{A \in A_\delta^{(1)}}{\sup_{x \in B_d(0,R)}} \op{A W^{(j)} \jac \Phi^{ (j-1)}} \leq C_2 \cdot \left(\sqrt{\delta \ln(1/\delta)} + \sqrt{\frac{Ld\ln(N/d)}{N}}\right) \quad  \\
			\text{for all }  j \in \{0, \dots, \ell_2\}$,
		}
		\item{\label{item:ccc}
			$\displaystyle \underset{x \in B_d(0,R)}{\sup}
			\op{\jac \Phi^{ (\ell_2)}(\pi(x)) - 
				\jac \Phi^{ (\ell_2)}(x)} \\ 
			\hspace{1cm}\leq \frac{1}{L} 
			\cdot \sum_{j= 0}^{\ell_2} \left(\sup_{x^\ast \in \neps, A_2 \in \mathcal{A}_\delta^{(1)}} \op{\jac \Phi^{(j+1) \to (\ell_2)}(x^\ast)A_2}\right),$
		}
		\item{ \label{item:ddd}$\displaystyle
			{\underset{x \in B_d(0,R)}{\sup}} \mnorm{\jac \Phi^{(j)}(x)}_{2 \to 2 }
			\leq 2C_1
			\quad \text{for all } j \in \{0, \dots, \ell_2-1\}$.}
	\end{enumerate}
	We call the event defined by properties \eqref{item:aaa}-\eqref{item:ddd} $\E_3$, noting that the randomness is now with respect to $W^{(0)}, \dots, W^{(\ell_2)}$ and $b^{(0)}, \dots, b^{(\ell_2)}$.
	
	For fixed $x^\ast \in \neps$, $j \in \{0, \dots, L\}$ and $A_2 \in \mathcal{A}_\delta^{(j)}$, we get
	\[
	\op{\jac \Phi^{(j) \to (\ell_2)}(x^\ast)A_2} \leq C_1
	\]
	with probability at least $1-\exp(-c_1 \cdot N/L^2)$ using \Cref{lem:pw_2} with an absolute constant $c_1 > 0$.
	Since 
	\[
	\ln\left(\abs{\mathcal{A}_\delta^{(j)}}\right) \leq \delta N \cdot \ln(4\ee / \delta) \overset{\text{\cshref{lem:log_b}}}{\leq} 4\ee \cdot \delta N \cdot \ln(1/\delta),
	\]
	a union bound yields that 
	\begin{equation}\label{eq:e4_2}
		\op{\jac \Phi^{(j) \to (\ell_2)}(x^\ast)A_2} \leq C_1 \quad \text{for all } x^\ast \in \neps, j \in \{0, \dots, L\} \text{ and } A_2 \in \mathcal{A}_\delta^{(j)}. 
	\end{equation}
	with probability at least 
	\[
	1- \exp(\ln(L+1) + 4\ee \cdot \delta N \cdot \ln(1/\delta) + C_4 \cdot d \cdot \ln(RC_\tau'/\delta) - c_1 \cdot N/L^2) \geq 1 - \exp(-c_2 \cdot N/L^2).
	\]
	For the last step, we used $N \geq C \cdot L^3 d \ln(N/d) \geq C \cdot L^2 \ln(\ee L), \delta \ln(1/\delta) \leq c/L^2$ and $N \geq C \cdot L^2 d \ln(RC_\tau'/\delta)$ and set $c_2 \defeq c_1 / 2$. 
	We call the event defined by \eqref{eq:e4_2} $\E_4$.
	On $\E_5 \defeq \E_3 \cap \E_4$, by \eqref{item:ccc}, 
	\[
	\underset{x \in B_d(0,R)}{\sup}
	\op{\jac \Phi^{(\ell_2)}(\pi(x)) - 
		\jac \Phi^{ (\ell_2)}(x)} \leq C_1
	\]
	and therefore 
	\begin{align*}
		&\norel {\underset{x \in B_d(0,R)}{\sup}} \mnorm{\jac \Phi^{(\ell_2)}(x)}_{2 \to 2 } \\
		&\leq {\underset{x^\ast \in \neps}{\sup}} \mnorm{\jac \Phi^{ (\ell_2)}(x^\ast)}_{2 \to 2 } \!\!+\!\!\underset{x \in B_d(0,R)}{\sup}
		\op{\jac \Phi^{ (\ell_2)}(\pi(x)) - 
			\jac \Phi^{(\ell_2)}(x)} \leq 2C_1.
	\end{align*}
	
	Altogether, all the desired properties are satisfied on $\E_5$.
	Note that we get 
	\begin{align*}
		\PP(\E_5) &\geq 1- 3\ell_2\exp(-c \cdot \delta N) - \exp(-c' \cdot \delta N) - \exp(- \delta N) - \exp(-c_2 \cdot N/L^2) \\
		&\geq 1 -3(\ell_2 + 1)\exp(-c \cdot \delta N)
	\end{align*}
	by taking $c \defeq \min \{c', c_2\}$ and using $\delta \leq 1/L^2$.
\end{proof}

We can now incorporate the final layer to get a bound on the gradient of $\Phi$ on the ball $B_d(0,R)$.
\begin{theorem}\label{thm:r_ball_delta}
	There exist absolute constants $C,c > 0$ such that the following holds. 
	Let $\Phi:\RR^d \to \RR$ be a random ReLU network following \Cref{assum:3} with $L$ hidden layers of width $N$ and 
	with small ball constant $C_\tau > 0$. 
	Set $C_\tau ' \defeq \sqrt{2/N} \cdot C_\tau$. 
	Let further $R\geq (C_\tau')^{-1}$, $\delta \in (0, \ee^{-1})$ and assume 
	$N \geq Cd$, $N \geq L^3d\ln(N/d)$ and  
	\begin{gather*}
		\delta N \geq C \cdot d  \cdot \ln(RC_\tau' / \delta),
		\quad N \geq C \cdot L^2 d \ln(RC_\tau' /\delta), \quad \delta\ln(1/\delta) \leq c \cdot 1/L^2, \\
		\delta \ln^{1/2}(1/\delta)\leq c \cdot \frac{\sqrt{d}}{L\sqrt{N}},\quad 
		\delta \leq c \cdot \frac{1}{L^3\ln(N/d)} \quad \text{and} \quad d \ln(RC_\tau'/\delta) \geq C.
	\end{gather*}
	Then the event 
	\[
	{\underset{x \in B_d(0,R)}{\sup}} \mnorm{\nablaa \Phi(x)}_{2}
	\leq C \cdot \sqrt{d \cdot \ln(RC_\tau'/\delta)}
	\]
	occurs with probability at least $1 -\exp(-c \cdot d\ln(RC_\tau'/\delta))$.
\end{theorem}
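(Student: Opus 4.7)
The plan is to adapt the proof of Proposition \ref{prop:advanced_lip} (the zero-bias upper bound) to the ball $B_d(0,R)$ in the biased setting, using Lemma \ref{prop:not_last_layer} as the structural input. That lemma already handles the intermediate Jacobians $\jac \Phi^{(\ell)}$ uniformly over $B_d(0,R)$, and it records that for any pair of inputs in the ball whose distance is at most $\eps \asymp (C_\tau')^{-1} \delta \ln^{-1/2}(1/\delta)$, the differences $D^{(j)}(x^*) - D^{(j)}(x)$ lie in $\mathcal{A}_\delta^{(1)}$. What remains is to incorporate the final layer $W^{(L)}$ and pass from a net estimate to a supremum.

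First, fix $\eps$ as in Lemma \ref{prop:not_last_layer} and choose an $\eps$-net $\neps \subseteq B_d(0,R)$ with $\ln |\neps| \leq d \ln(3R/\eps) \leq C \cdot d \ln(RC_\tau'/\delta)$, which is possible by a standard volume argument. For the pointwise estimate on the net, apply Lemma \ref{lem:pw_3} with $\ell = 0$, $z_0 = \dots = z_{L-1} = x^*$ and $A = I_{d \times d}$; picking $t^2 \asymp d \ln(RC_\tau'/\delta)$ and taking a union bound over $x^* \in \neps$ yields
\[
\sup_{x^* \in \neps} \mnorm{\nablaa \Phi(x^*)}_2 \leq C \sqrt{d \ln(RC_\tau'/\delta)}
\]
with probability at least $1-\exp(-c \cdot d \ln(RC_\tau'/\delta))$, using $d \ln(RC_\tau'/\delta) \geq C$ and $N \geq C L^2 d \ln(RC_\tau'/\delta)$ to guarantee $t \leq \sqrt{N}/L$.

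Next, control the deviation. For $x \in B_d(0,R)$ let $\pi(x) \in \neps$ satisfy $\mnorm{x - \pi(x)}_2 \leq \eps$ and apply Lemma \ref{lem:decomp} to write
\[
\nablaa \Phi(\pi(x)) - \nablaa \Phi(x) = \sum_{j=0}^{L-1} \Big(W^{(L)} \jac \Phi^{(j+1) \to (L-1)}(\pi(x))\big(D^{(j)}(\pi(x)) - D^{(j)}(x)\big) W^{(j)} \jac \Phi^{(j-1)}(x)\Big)^T.
\]
Split each summand into the product of the two factors induced by $|D^{(j)}(\pi(x)) - D^{(j)}(x)|$. By Lemma \ref{prop:not_last_layer}(a) we may assume $D^{(j)}(\pi(x)) - D^{(j)}(x) \in \mathcal{A}_\delta^{(1)}$, so Lemma \ref{prop:not_last_layer}(b) bounds the second factor by $C(\sqrt{\delta \ln(1/\delta)} + \sqrt{Ld\ln(N/d)/N}) \leq 1/L$, where we used $\delta \ln(1/\delta) \leq c/L^2$ and $N \geq C L^3 d \ln(N/d)$. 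The first factor is handled by applying Lemma \ref{lem:pw_3} to an arbitrary $A \in \mathcal{A}_\delta^{(1)}$ (whose rank is at most $\delta N \leq N/L^2$ and whose spectral norm is at most $1$) and then union-bounding over $\neps \times \mathcal{A}_\delta^{(1)}$; the log-cardinality is at most $C \cdot d \ln(RC_\tau'/\delta) + 4\ee \cdot \delta N \ln(1/\delta)$, and choosing $t^2$ of this order gives
\[
\sup_{x^* \in \neps,\; A \in \mathcal{A}_\delta^{(1)}} \op{W^{(L)} \jac \Phi^{(j+1) \to (L-1)}(x^*) A} \leq C \sqrt{N \delta \ln(1/\delta)}
\]
with probability at least $1-\exp(-c \cdot \delta N)$, uniformly in $j$.

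Combining these two factors and summing over $j \in \{0,\dots,L-1\}$ gives
\[
\sup_{x \in B_d(0,R)} \mnorm{\nablaa \Phi(\pi(x)) - \nablaa \Phi(x)}_2 \leq C L \cdot \frac{1}{L} \cdot \sqrt{N \delta \ln(1/\delta)} = C \sqrt{N \delta \ln(1/\delta)},
\]
and the assumption $\delta \ln^{1/2}(1/\delta) \leq c \sqrt{d}/(L\sqrt{N})$ then forces this to be dominated by $\sqrt{d \ln(RC_\tau'/\delta)}$. A final triangle inequality on the event where all of the above hold (whose probability is at least $1 - \exp(-c \cdot d \ln(RC_\tau'/\delta))$, after absorbing the $\exp(-c \delta N)$ terms using $\delta N \geq C d \ln(RC_\tau'/\delta)$) yields the claimed uniform bound. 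The main obstacle is the delicate matching of scales in Step 3: one must balance the per-layer operator-norm bound against the extra log-cardinality $\delta N \ln(1/\delta)$ introduced by the union bound over $\mathcal{A}_\delta^{(1)}$, and the three separate smallness assumptions on $\delta$ are precisely what make the cumulative error across all $L$ layers stay below the pointwise target $\sqrt{d \ln(RC_\tau'/\delta)}$.
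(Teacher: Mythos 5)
Your overall architecture matches the paper's proof exactly: an $\eps$-net of $B_d(0,R)$ of log-cardinality $\lesssim d\ln(RC_\tau'/\delta)$, the pointwise bound at net points via \Cref{lem:pw_3} with $t^2 \asymp d\ln(RC_\tau'/\delta)$, and the deviation control via \Cref{lem:decomp} together with items (1) and (2) of \Cref{prop:not_last_layer} and a union bound over $\neps \times \mathcal{A}_\delta^{(1)}$. However, the final inequality of your deviation step is a genuine gap. After replacing the second factor by $1/L$ and summing over the $L$ layers you arrive at a deviation of order $\sqrt{N\delta\ln(1/\delta)}$ and claim that $\delta\ln^{1/2}(1/\delta) \leq c\sqrt{d}/(L\sqrt{N})$ forces this below $\sqrt{d\ln(RC_\tau'/\delta)}$. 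That would require $\delta N \lesssim d\ln(RC_\tau'/\delta)$, whereas the hypotheses impose the \emph{opposite} inequality $\delta N \geq C\cdot d\ln(RC_\tau'/\delta)$ (which you yourself need for the union bounds). So $\sqrt{N\delta\ln(1/\delta)}$ is at least $\sqrt{C}$ times the target, and in the relevant regime $\delta \asymp \frac{d}{N}\ln(RC_\tau' N/d)$ it overshoots by a factor $\ln^{1/2}(RC_\tau' N/d)$.

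The loss happens exactly when you coarsen the second factor $C\bigl(\sqrt{\delta\ln(1/\delta)} + \sqrt{Ld\ln(N/d)/N}\bigr)$ to $1/L$ before multiplying. Keeping the product exact, each of the $L$ summands is at most
\[
C\left(\sqrt{\delta\ln(1/\delta)} + \sqrt{\tfrac{Ld\ln(N/d)}{N}}\right)\cdot C\sqrt{N\delta\ln(1/\delta)},
\]
so the total deviation is at most $C L\bigl(\sqrt{N}\,\delta\ln(1/\delta) + \sqrt{Ld\ln(N/d)\,\delta\ln(1/\delta)}\bigr)$. The retained factor $\sqrt{\delta\ln(1/\delta)}$ in the first term is precisely what the assumption $\delta\ln^{1/2}(1/\delta) \leq c\sqrt{d}/(L\sqrt{N})$ is designed to absorb: $L\sqrt{N}\,\delta\ln(1/\delta) = L\sqrt{N}\cdot\delta\ln^{1/2}(1/\delta)\cdot\ln^{1/2}(1/\delta) \leq c\sqrt{d\ln(1/\delta)}$. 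The second term is handled by $\delta \leq c/(L^3\ln(N/d))$, since $L\sqrt{Ld\ln(N/d)\,\delta\ln(1/\delta)} = \sqrt{L^3\ln(N/d)\,\delta}\cdot\sqrt{d\ln(1/\delta)} \leq \sqrt{c}\cdot\sqrt{d\ln(1/\delta)}$, and $\ln(1/\delta)\leq\ln(RC_\tau'/\delta)$ because $RC_\tau'\geq 1$. With this correction your argument closes; everything else (net sizes, the choices of $t$ and the constraints $C\leq t\leq\sqrt{N}/L$, and the absorption of the $\exp(-c\,\delta N)$ failure probabilities into $\exp(-c\, d\ln(RC_\tau'/\delta))$) is sound and coincides with the paper's proof.
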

\begin{proof}
	\textbf{Step 1 (Point estimate):} We start with a pointwise estimate:
	Let $x \in B_d(0,R)$ be fixed, $j \in \{0, \ldots, L-1\}$ and $A \in \mathcal{A}_\delta^{(j)}$. 
	Using \Cref{lem:pw_3}, we get the existence of absolute constants $C_1,c_1 > 0$ such that 
	\begin{align*}
		\mnorm{W^{(L)}\jac \Phi^{(j) \to (L-1)}(x)A}_{2 \to 2 } \leq C_1 \cdot (\sqrt{\Tr \abs{A}} + t)
	\end{align*}
	with probability at least $1-\exp(-c_1t^2)$ for every $C_1 \leq t \leq \frac{\sqrt{N}}{L}$.
	Note that we may assume $d \leq N/L^2$ and  
	\[
	\delta N \leq \delta \cdot \ln(1/\delta) \cdot N \leq c \cdot N/L^2 \leq N/L^2,
	\]
	by taking $C$ sufficiently large and $c$ sufficiently small, which implies $\Tr \abs{A} \leq N/L^2$ for every $A \in \mathcal{A}_\delta^{(j)}$.

	\medskip
	\textbf{Step 2 (Union bound):} We continue by taking a union bound:
	We take $\eps > 0$ as in \Cref{prop:not_last_layer} and 
	let $\neps$ 
	be an $\eps$-net of $B_d(0,R)$ with 
	\begin{equation*}
		\ln(\abs{\neps}) \leq
		C_2 \cdot d \cdot \ln(C_\tau' R/\delta)
	\end{equation*}
	with an absolute constant $C_2 > 0$ as in \eqref{eq:neps_b_22}.
	Let $j \in \{0, \ldots, L-1\}$. 
	Then we have 
	\begin{equation*}
		\ln\left(\abs{\mathcal{A}_\delta^{(j)}}\right) \leq 4\ee \cdot \delta N \cdot \ln(1/\delta)
	\end{equation*}
	according to \Cref{lem:card_b}.
	Since by assumption $\delta N \geq C \cdot d \cdot\ln(C_\tau' R/\delta)$, we get 
	\[
	\ln(\abs{\neps}) + \ln\left(\abs{\mathcal{A}_\delta^{(j)}}\right) \leq C_3 \cdot N \cdot \delta\ln(1/\delta)
	\]
	with an absolute constant $C_3 > 0$.
	Hence, taking a union bound, 
	we get 
	\[
	\underset{A \in \mathcal{A}_\delta^{(j)}}{\underset{x^\ast \in \neps}{\sup}}
	\mnorm{W^{(L)}\jac \Phi^{(j) \to (L-1)}(x^\ast)A}_{2 \to 2 } \leq C_1 \cdot (\sqrt{\delta N} + t),
	\]
	with probability at least $1- \exp(C_3 \cdot \delta N \cdot \ln(1/\delta)-c_1 t^2)$
	for every $C_1 \leq t \leq \frac{\sqrt{N}}{L}$.
	We explicitly pick 
	\[
	t \defeq \sqrt{2C_3 c_1^{-1} \cdot \delta N \cdot \ln(1/\delta)}
	\]
	and hence get 
	\begin{equation}\label{eq:unif_b_22}
		\underset{A \in \mathcal{A}_\delta^{(j)}}{\underset{x^\ast \in \neps}{\sup}}
		\mnorm{W^{(L)}\jac \Phi^{(j) \to (L-1)}(x^\ast)A}_{2 \to 2 } \leq C_4 \cdot \sqrt{N \cdot \delta\ln(1/\delta)} 
	\end{equation}
	with probability at least 
	\[
	1- \exp(- N \cdot \delta \ln(1/\delta)) \geq 1 -\exp(-\delta N)
	\]
	with a constant $C_4>0$. 
	Further, note that $ t \leq \frac{\sqrt{N}}{L}$ holds by taking $C$ large and $c$ small enough, since we have
	\[
	\delta \ln(1/\delta) \leq c \cdot 1/L^2
	\quad 
	\text{and}
	\quad 
	N \geq C \cdot L^2d\ln(1/\delta)
	\]
	and $t \geq C_1$ is trivially satisfied. 
	
	In the case $j = 0$, we may leave out the union bound over $\mathcal{A}_\delta^{(j)}$ 
	(since $\abs{\mathcal{A}_\delta^{(0)}} = 1$) and get 
	\[
	\sup_{x^\ast \in \neps} \mnorm{\nablaa \Phi(x)}_2 =
	\underset{A \in \mathcal{A}_\delta^{(0)}}{\underset{x^\ast \in \neps}{\sup}}
	\mnorm{W^{(L)}\jac \Phi^{(L-1)}(x^\ast)A}_{2 \to 2 } \leq C_1 \cdot (\sqrt{d} + t)
	\]
	with probability at least $1- \exp(C_3 \cdot d \cdot \ln(RC_\tau'/\delta) - c_1t^2)$. 
	Taking 
	\[
	t \defeq \sqrt{2C_3 c_1^{-1} \cdot d\ln(RC_\tau'/\delta)},
	\]
	we obtain 
	\begin{equation}\label{eq:final_layer_12}
		\sup_{x^\ast \in \neps} \mnorm{\nablaa \Phi(x)}_2 \leq C_4 \cdot \sqrt{d \cdot \ln(RC_\tau'/\delta)}
	\end{equation}
	with probability at least $1- \exp(-d \ln(RC_\tau'/\delta))$, after possibly increasing the constant $C_4$.
	
	\medskip
	
	\textbf{Step 3 (Controlling the deviation):}
	
	It remains to control the deviation when moving from an arbitrary point $x \in B_d(0,R)$ to a net point $\pi(x) \in \neps$ with $\mnorm{x-\pi(x)}_2 \leq \eps$. 
	
	We let $\E_1$ be the event defined by the properties 
	\begin{enumerate}
		\item{
			\label{item:aa_1}
			$\displaystyle \sup_{x,y \in B_d(0,R), \mnorm{x-y}_2 \leq \eps} \Tr \abs{D^{(j)}(x) - D^{(j)}(y)} \leq \delta N \quad \text{for all } j \in \{0, \dots, L-1\}$,
		}
		\item{
			\label{item:bb_2}
			$\displaystyle \underset{A \in A_\delta^{(1)}}{\sup_{x \in B_d(0,R)}} \op{A W^{(j)} \jac \Phi^{(j-1)}(x)} \leq C \cdot \left(\sqrt{\delta \ln(1/\delta)} + \sqrt{\frac{Ld\ln(N/d)}{N}}\right) \quad \\ \text{for all }j \in \{0, \dots, L-1\}$.
		}
	\end{enumerate}
	By \Cref{prop:not_last_layer}, $\PP(\E_1) \geq 1 - 3L\exp(-c' \cdot \delta N) \geq 1 - \exp(-c'' \cdot \delta N)$ with a absolute constants $c',c'' > 0$, since $\delta N \geq C \cdot \ln(\ee L)$.
	Using \Cref{lem:decomp}, on $\E_1$, we get 
	\begin{align*}
		&\norel{\underset{x \in B_d(0,R)}{\sup}}
		\mnorm{\nablaa \Phi(\pi(x)) - \nablaa \Phi(x)}_2 \nonumber\\
		&\leq \sum_{j= 0}^{L-1} \left( \underset{x \in B_d(0,R)}{\sup} \op{W^{(L)}\jac \Phi^{(j+1) \to (L-1)}(\pi(x))(D^{(j)}(\pi(x)) - D^{(j)}(x))} \right) \nonumber\\
		& \hspace{1.2cm}\cdot 
		\left( {\underset{x \in B_d(0,R)}{\sup}} \op{(D^{(j)}(\pi(x)) - D^{(j)}(x))W^{(j)}\jac \Phi^{ (j-1)}(x)}\right) \\
		&\leq \sum_{j= 0}^{L-1} \left( \underset{x^\ast \in \neps, A \in \mathcal{A}_\delta^{(1)}}{\sup} \op{W^{(L)}\jac \Phi^{(j+1) \to (L-1)}(x^\ast)A} \right) \nonumber\\
		& \hspace{1.2cm}\cdot 
		\left( \underset{A \in \mathcal{A}_\delta^{(1)}}{{\underset{x \in B_d(0,R),}{\sup}}} \op{AW^{(j)}\jac \Phi^{(j-1)}(x)}\right) \\
		&\leq C_4 \cdot \left(\sqrt{\delta \ln(1/\delta)} + \sqrt{\frac{Ld\ln(N/d)}{N}}\right) \cdot \sum_{j= 0}^{L-1} \left( \underset{x^\ast \in \neps, A \in \mathcal{A}_\delta^{(1)}}{\sup} \op{W^{(L)}\jac \Phi^{(j+1) \to (L-1)}(x^\ast)A} \right).
	\end{align*}
	
	We let $\E_2$ be the event defined by 
	\[
	\underset{A \in \mathcal{A}_\delta^{(j)}}{\underset{x^\ast \in \neps}{\sup}}
	\mnorm{W^{(L)}\jac \Phi^{(j) \to (L-1)}(x^\ast)A}_{2 \to 2 } \leq C_3 \cdot \sqrt{N \cdot \delta\ln(1/\delta)} \quad \text{for all } j \in \{1, \dots , L\}.
	\]
	Using \eqref{eq:unif_b_22} and a union bound,
	\[
	\PP(\E_2) \geq 1- \exp(\ln(L) - \delta N) \geq 1 - \exp(-c_2 \cdot \delta N)
	\]
	with an absolute constant $c_2 > 0$, since $\delta N \geq C \cdot \ln(\ee L)$.
	On $\E_1 \cap \E_2$, 
	\begin{align}
		&\norel{\underset{x \in B_d(0,R)}{\sup}}
		\mnorm{\nablaa \Phi(\pi(x)) - \nablaa \Phi(x)}_2 \nonumber\\
		\label{eq:e1cape22}
		&\leq C_5 \cdot L \cdot \left(\sqrt{N} \cdot \delta\ln(1/\delta) + \sqrt{Ld\ln(N/d) \cdot \delta\ln(1/\delta)}\right).
	\end{align}
	Since by assumption $\delta \ln^{1/2}(1/\delta) \leq c \cdot \frac{\sqrt{d}}{L \cdot \sqrt{N}}$ and $\delta \leq c \cdot \frac{1}{L^3 \cdot \ln(N/d)}$, we get 
	\begin{align*}
		{\underset{x \in B_d(0,R)}{\sup}}
		\mnorm{\nablaa \Phi(\pi(x)) - 
			\nablaa \Phi(x)}_2  
		\leq \sqrt{d \cdot \ln(1/\delta)} \leq \sqrt{d \cdot \ln(RC_\tau' /\delta)}
	\end{align*}
	on $\E_1 \cap \E_2$. 
	
	\medskip
	\textbf{Step 4 (Concluding the proof):} 
	Let $\E_3$ be the event defined by 
	\[
	\sup_{x^\ast \in \neps} \mnorm{\nablaa \Phi(x)}_2 \leq C_3 \cdot \sqrt{d \cdot \ln(RC_\tau'/\delta)}.
	\]
	By \eqref{eq:final_layer_12}, 
	\[
	\PP(\E_3) \geq 1 - \exp(-d\ln(RC_\tau'/\delta)).
	\]
	On $\E_1 \cap \E_2 \cap \E_3$, we thus have 
	\[
	\sup_{x \in B_d(0,R)} \mnorm{\nablaa \Phi(x)}_2 \leq \sup_{x^\ast \in \neps} \mnorm{\nablaa \Phi(x)}_2 + {\underset{x \in B_d(0,R)}{\sup}}
	\mnorm{\nablaa \Phi(x) - 
		\nablaa\Phi(\pi(x))}_2 \leq (C_3 + 1) \cdot \sqrt{d \cdot \ln(RC_\tau'/\delta)}.
	\]
	To obtain the claim, note
	\begin{align*}
		\PP(\E_1 \cap \E_2 \cap \E_3) &\geq 1 - \exp(-c'' \cdot \delta N) - \exp(-c_2 \cdot \delta N)- \exp(-d\ln(RC_\tau'/\delta)) \\
		&\geq 1-3\exp(-d\ln(RC_\tau'/\delta)) \\
		&\geq 1 -\exp(-c_4 \cdot d\ln(RC_\tau'/\delta))
	\end{align*}
	with an absolute constant $c_4 > 0$.
	Here we used $\delta N \geq C \cdot d\ln(RC_\tau'/\delta)$ and $d \ln(RC_\tau'/\delta) \geq C$.
	This proves the claim. 
\end{proof}
The special case of $\delta \asymp \frac{d}{N} \cdot \ln(RC_\tau' N /d)$ yields the following corollary.
\begin{corollary}\label{thm:rball}
	There exist absolute constants $C,c > 0$ satisfying the following: 
	Let $\Phi:\RR^d \to \RR$ be a random ReLU network following \Cref{assum:3} with $L$ hidden layers of width $N$ satisfying $N \geq Cd$ and 
	with small ball constant $C_\tau > 0$.
	Set $C_\tau ' \defeq \sqrt{2/N} \cdot C_\tau$.
	Let further $R\geq (C_\tau')^{-1}$ and assume 
	\begin{gather*}
		d \cdot \ln(RC_\tau' N/d) \geq C,
		\quad N \geq C \cdot L^2 d \ln^2(RC_\tau' N /d)\ln(N/d), \\
		N \geq C \cdot dL^3\ln(RC_\tau' N/d)\ln(N/d).
	\end{gather*}
	Then the event 
	\[
	{\underset{x \in B_d(0,R)}{\sup}} \mnorm{\nablaa \Phi(x)}_{2 }
	\leq C \cdot \sqrt{d \cdot \ln(RC_\tau' N/d)}
	\]
	occurs with probability at least $1 -\exp(-c \cdot d\ln(RC_\tau' N/d))$.
\end{corollary}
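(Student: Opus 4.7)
The plan is to apply \Cref{thm:r_ball_delta} with the explicit choice
\[
\delta \defeq c_0 \cdot \frac{d}{N} \cdot \ln\!\left(\frac{R C_\tau' N}{d}\right)
\]
for a suitable absolute constant $c_0 > 0$, and then to convert the conclusion of \Cref{thm:r_ball_delta} into the form stated here. No new probabilistic input is needed; the argument is a translation of the parameter $\delta$ and a matching of constants. The work is essentially bookkeeping — the only mildly subtle point is the logarithmic equivalence described in the next paragraph.

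The first preparatory step is to establish that $\ln(RC_\tau'/\delta) \asymp \ln(RC_\tau' N/d)$. Writing $A \defeq RC_\tau' N/d$, one has $\ln(RC_\tau'/\delta) = \ln A - \ln\ln A - \ln c_0$. Since $R \geq (C_\tau')^{-1}$ and $N \geq Cd$ together force $A \geq N/d \geq C$ for an arbitrarily large absolute constant $C$, this is $\leq \ln A + |\ln c_0| \lesssim \ln A$ and, for $C$ large enough, also $\geq \tfrac12\ln A$. A similar calculation gives $\ln(1/\delta) \leq \ln(N/d)$, which is all that will be needed when verifying the size conditions below.

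The second step is to verify the seven hypotheses of \Cref{thm:r_ball_delta} one by one. The ambient conditions $N \geq Cd$ and $N \geq L^3 d\ln(N/d)$ are assumed directly, and the requirement $d\ln(RC_\tau'/\delta) \geq C$ follows from $d\ln(RC_\tau' N/d) \geq C$ via the equivalence just proven. The conditions $\delta N \geq Cd\ln(RC_\tau'/\delta)$ and $N \geq CL^2 d\ln(RC_\tau'/\delta)$ reduce, up to constants, to $c_0 \gtrsim C$ and $N \gtrsim L^2 d\ln(RC_\tau' N/d)$; the first is arranged by choosing $c_0$ large enough, the second is strictly weaker than the hypothesis $N \geq CL^2 d\ln^2(RC_\tau' N/d)\ln(N/d)$. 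The three remaining size conditions $\delta\ln(1/\delta) \leq c/L^2$, $\delta\ln^{1/2}(1/\delta) \leq c\sqrt{d}/(L\sqrt{N})$ (squared), and $\delta \leq c/(L^3\ln(N/d))$ translate, using $\ln(1/\delta) \leq \ln(N/d)$, into lower bounds on $N$ of the form
\[
N \gtrsim c_0 L^2 d\ln(RC_\tau' N/d)\ln(N/d), \quad N \gtrsim c_0^2 L^2 d\ln^2(RC_\tau' N/d)\ln(N/d), \quad N \gtrsim c_0 L^3 d\ln(RC_\tau' N/d)\ln(N/d),
\]
respectively. Once $c_0$ has been fixed (large enough to handle the $\delta N$ condition), each of these is absorbed into the two bounds $N \geq CL^2 d\ln^2(RC_\tau' N/d)\ln(N/d)$ and $N \geq CL^3 d\ln(RC_\tau' N/d)\ln(N/d)$ imposed in the corollary, provided the constant $C$ there is taken large enough relative to $c_0$.

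Invoking \Cref{thm:r_ball_delta} then yields
\[
\sup_{x \in B_d(0,R)} \mnorm{\nablaa \Phi(x)}_2 \leq C\sqrt{d\ln(RC_\tau'/\delta)} \lesssim \sqrt{d\ln(RC_\tau' N/d)}
\]
with probability at least $1-\exp(-c \cdot d\ln(RC_\tau'/\delta)) \geq 1-\exp(-c' \cdot d\ln(RC_\tau' N/d))$, where both inequalities use the two-sided bound on $\ln(RC_\tau'/\delta)$ from the first step. This is exactly the conclusion of the corollary, and there is no genuinely hard step — all substantive probabilistic work is already contained in \Cref{thm:r_ball_delta}.
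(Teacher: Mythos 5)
Your proposal is correct and matches the paper's own treatment exactly: the paper derives this corollary from \Cref{thm:r_ball_delta} precisely by specializing $\delta \asymp \frac{d}{N}\ln(RC_\tau' N/d)$, and your verification of the seven hypotheses (including the order of quantifiers in choosing $c_0$ first and then $C$, and the observation that the $\ln^2$ factor in the corollary's hypothesis is forced by the squared condition $\delta^2\ln(1/\delta) \lesssim d/(L^2N)$) is the intended bookkeeping. The only point worth noting is that $\delta \in (0,e^{-1})$ should be recorded explicitly, but it follows from the condition $\delta \leq c/(L^3\ln(N/d))$ that you already check.
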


Picking $R = 3^L \cdot \lambda \cdot \sqrt{N/2} \cdot \frac{N}{dL} \cdot \ln^{-1/2}(N/d)$ and using \Cref{thm:grad_diff,thm:rball}, we can now bound the global $\ell^2$-Lipschitz constant of a network $\Phi: \RR^d \to \RR$ following \Cref{assum:3}. 

\begin{proof}[Proof of \Cref{thm:main_upper_bias}]
	In the beginning of the proof, we set $\lambda' \defeq \sqrt{N/2} \cdot \lambda$ 
	and $C_\tau' \defeq \sqrt{2/N} \cdot C_\tau$.
	Let the constant $C_1 > 0$ be taken from \Cref{thm:grad_diff} and set 
	$R \defeq  3^L \cdot \lambda' \cdot \frac{N}{dL} \cdot \ln^{-1/2}(N/d)$ as in \Cref{thm:grad_diff}.
	Moreover, note that $\lambda' C_\tau ' = \lambda C_\tau$ by construction. 
	
	\medskip
	
	\textbf{Step 1: Control the gradient on the ball} 
	We check that the conditions of \Cref{thm:rball} are satisfied.
	To this end, let $C_2,c_2 > 0$ be the constants from \Cref{thm:rball}.
	Note that $R \geq \lambda' \geq (C_\tau')^{-1}$ by assumption. 
	Moreover, since $N \geq C \cdot d$, we have 
	\[
	d \cdot \ln(RC_\tau' N/d) \geq C.
	\]
	Further, note that 
	\begin{align}
		\ln(RC_\tau' N /d) &\leq \ln \left(\frac{ 3^L \cdot \lambda' \cdot N^2 \cdot C_\tau'}{d^2 L}\right)
		= \ln(3^L) + \ln \left(\frac{ \lambda' \cdot N^2 \cdot C_\tau'}{d^2 L}\right)
		\nonumber\\
		\label{eq:ln_bound}
		&\leq 2L + \ln \left(\left(\frac{\lambda' \cdot C_\tau' \cdot N}{d}\right)^2\right)
		\leq
		2 \cdot \left(L +  \ln\left(\frac{\lambda \cdot C_\tau \cdot N}{d}\right)\right).
	\end{align}
	This gives us
	\begin{align*}
		C_2 \cdot L^2 d \ln^2(RC_\tau' N /d)\ln(N/d)
		&\leq 4C_2 \cdot L^2d (L + \ln(\lambda C_\tau N/d))^2 \ln(N/d) \\
		\overset{(a+b)^2 \leq 2a^2 + 2b^2}&{\leq} 8C_2 \cdot L^4 d \ln(N/d) 
		+ 8C_2 \cdot L^2 d\ln^2(\lambda C_\tau N/d)\ln(N/d) \leq N
	\end{align*}
	by assumption, since $N \geq C \cdot L^4 d \ln(N/d)$ and $N \geq C \cdot L^2 d\ln^2(\lambda C_\tau N/d) \ln(N/d)$.
	Moreover, we get 
	\begin{align*}
		C_2 \cdot L^3 d \ln(N/d) \ln(RC_\tau' N/d) \leq 
		2C_2 \cdot L^4 d \ln(N/d) + 2C_2 \cdot L^3 d \ln(N/d) \ln(\lambda C_\tau N/d) \leq N
	\end{align*}
	by assumption, since $N \geq C \cdot L^4 d \ln(N/d)$ and $N \geq C \cdot L^3 d \ln(N/d) \ln(\lambda C_\tau N/d)$.
	We can thus apply \Cref{thm:rball} and get 
	\begin{align*}
		\underset{x \in B_d(0,R)}{\sup} \mnorm{\nablaa \Phi(x)}_{2}
		&\leq C_2 \cdot \sqrt{d \cdot \ln(RC_\tau' N/d)} \\
		\overset{\eqref{eq:ln_bound}}&{\leq} \sqrt{2} \cdot C_2 \cdot \left(\sqrt{d L} + \sqrt{d  \cdot \ln (\lambda C_\tau N /d)}\right)
	\end{align*}
	with probability at least $1- \exp(-c_2 \cdot d\ln(RC_\tau' N/d)) \geq 1-\exp(-c_2 \cdot d\ln(N/d))$.
	
	\medskip
	
	\textbf{Step 2: Control the difference to zero-bias network}
	We use \Cref{thm:grad_diff} to control the difference of the gradient of $\Phi$ to the gradient of its zero-bias counterpart $\widetilde{\Phi}$.
	To this end, we first condition on the bias vectors $b^{(0)}, \ldots, b^{(L-1)}$ and assume that 
	\[
	\underset{j \in \{1,\ldots,N\}}{\underset{\ell \in \{0,\ldots,L-1\}}{\sup}} \abs{b^{(\ell)}_j} \leq \lambda = 
	\frac{\lambda' \sqrt{2}}{\sqrt{N}}.
	\]
	Since the biases are now fixed, $R$ was chosen appropriately and the conditions of \Cref{thm:grad_diff} are per assumption satisfied, we get 
	\[
	\underset{\mnorm{x}_2 \geq R}{\sup}
	\mnorm{\nablaa \Phi(x) - \nablaa \widetilde{\Phi}(x)}_2 \leq C_3 \cdot \frac{dL^2 \cdot \ln(N/d) \cdot \ln(N/(dL))}{\sqrt{N}}
	\]
	with probability at least $1-\exp(-c_3 \cdot dL\ln(N/d))$ with absolute constants $C_3, c_3 > 0$.
	Note that by assumption we have 
	\[
	N \geq C \cdot dL^3 \cdot \ln^2(N/d) \ln^2(N/(dL)),
	\]
	which implies 
	\begin{align*}
		dL^2 \cdot \ln(N/d) \cdot \ln(N/(dL)) &= 
		\sqrt{d^2 L^4 \cdot \ln^2(N/d)\ln^2(N/(dL))}
		\leq \sqrt{dLN}
	\end{align*}
	and hence, 
	\[
	\underset{\mnorm{x}_2 \geq R}{\sup}
	\mnorm{\nablaa \Phi(x) - \nablaa \widetilde{\Phi}(x)}_2 
	\leq C_3 \cdot \sqrt{d L} 
	\]
	with probability at least $1-\exp(-c_3 \cdot dL\ln(N/d))$.
	Recall that we conditioned on $b^{(0)}, \ldots, b^{(L-1)}$ so far. 
	Reintroducing the randomness over the bias vectors, we obtain 
	\[
	\underset{\mnorm{x}_2 \geq R}{\sup}
	\mnorm{\nablaa \Phi(x) - \nablaa \widetilde{\Phi}(x)}_2 
	\leq C_3 \cdot \sqrt{d  L}
	\]
	with probability at least $1- \exp(-c_3 \cdot dL\ln(N/d)) - \exp(- d\ln(N/d))$.
	
	\medskip
	
	\textbf{Step 3: Control the gradient of the zero-bias network}
	We use \Cref{thm:main_upper} and get 
	\[
	\sup_{x \in \RR^d} \mnorm{\nablaa \tilde{\Phi}(x)}_2 \leq C_4 \cdot \sqrt{d \ln(N/d)}
	\leq C_4 \cdot \sqrt{d \cdot \ln(\lambda C_\tau N/d)}
	\]
	with probability at least $1- \exp(-c_4 \cdot d\ln(N/d))$ for absolute constants $C_4, c_4 > 0$.
	
	\medskip
	
	\textbf{Step 4: Conclusion} We combine the events defined in Steps 1,2 and 3 and obtain the existence of an absolute constant $C_6 > 0$ such that 
	\begin{align*}
		\sup_{x \in \RR^d} \mnorm{\nablaa \Phi (x)}_2 
		&\leq \sup_{x \in B_d(0,R)} \mnorm{\nablaa \Phi (x)}_2 
		+ \sup_{x \in \RR^d, \mnorm{x}_2 \geq R} \mnorm{\nablaa \Phi(x) - \nablaa \tilde{\Phi}(x)}_2 + \sup_{x \in \RR^d} \mnorm{\nablaa \tilde{\Phi}(x)}_2 \\
		&\leq C_6 \cdot \left(\sqrt{d L} + \sqrt{d \cdot \ln (\lambda C_\tau N /d)}\right)
	\end{align*}
	with probability at least 
	\begin{align*}
		&\norel 1- \exp(-c_2 \cdot d\ln(N/d)) - \exp(-c_3 \cdot dL\ln(N/d)) - \exp(- d\ln(N/d)) - \exp(-c_4 \cdot d\ln(N/d)) \\
		&\geq 1- 4\exp(-\tilde{c} \cdot d\ln(N/d))
	\end{align*}
	with $\tilde{c} \defeq \min\{c_2,c_3,c_4\}$. 
	Since we may assume 
	\[
	d\ln(N/d) \geq \frac{2\ln(4)}{\tilde{c}},
	\]
	we get the claim by picking $c \defeq \tilde{c}/2$.
\end{proof}

\subsection{Lower bound}\label{sec:lower_general_proofs}
In this section, we prove lower bounds for the Lipschitz constant of random ReLU networks following \Cref{assum:1} with general biases. 
Note that we do not need to impose any assumptions on the distribution of the biases.
We start with a lower bound for the Lipschitz constant with respect to the $\ell^p$-norm for $p \in [1,2)$.
\begin{proof}[Proof of \Cref{thm:main_lower_bias}]
	Let $\lambda' > 0$ be such that 
	\[
	\PP \left(\underset{j \in \{1,\ldots,N\}}{\underset{\ell \in \{0,\ldots,L-1\}}{\sup}} \abs{b_j^{(\ell)}} \leq \frac{\lambda' \sqrt{2}}{\sqrt{N}}\right)
	\geq 1-\exp(- d/L).
	\]
	By independence, we may condition on the bias vectors $b^{(0)}, \ldots, b^{(L-1)}$ and assume that 
	\[
	\underset{j \in \{1,\ldots,N\}}{\underset{\ell \in \{0,\ldots,L-1\}}{\sup}} \abs{b_j^{(\ell)}} \leq \frac{\lambda' \sqrt{2}}{\sqrt{N}}
	\]
	is satisfied. 
	Let $C_1, c_1 > 0$ be the constants that are given by 
	\Cref{thm:grad_diff} and set $R \defeq 3^L \cdot \lambda' \cdot \frac{N}{dL} \cdot \ln^{-1/2}(N/d)$.
	Let then 
	\[
	M_\Phi \defeq \left\{ x \in \RR^d: \ \Phi \text{ differentiable at } x \text{ with } \nablaa \Phi(x) = \act \Phi(x)\right\}.
	\]
	Let $\tilde{\Phi}$ be the zero-bias network associated to $\Phi$. 
	Using \Cref{thm:glob}, we get 
	\begin{align*}
		\lip_1(\Phi) &= \sup_{x \in M_\Phi} \mnorm{\nablaa \Phi(x)}_\infty
		\geq \sup_{x \in M_\Phi \cap R \SS^{d-1}} \mnorm{\nablaa \Phi(x)}_\infty \\
		&\geq \underset{x \in M_\Phi \cap R \SS^{d-1}}{\sup} \mnorm{\nablaa \tilde{\Phi}(x)}_\infty - \sup_{x \in \RR^d, \mnorm{x}_2 \geq R} \mnorm{\nablaa \Phi(x) - \nablaa \tilde{\Phi}(x)}_2 \\
		&= \sup_{x \in \SS^{d-1}, x \in R^{-1}M_\Phi} \mnorm{\nablaa \tilde{\Phi}(x)}_\infty - \sup_{x \in \RR^d, \mnorm{x}_2 \geq R} \mnorm{\nablaa \Phi(x) - \nablaa \tilde{\Phi}(x)}_2,
	\end{align*}
	where the last step follows due to scale invariance of the formal gradient of a zero-bias ReLU network. 
	Note that for fixed $x_0 \in \SS^{d-1}$ we have $Rx_0 \in M_\Phi$ with probability $1$ according to \cite[Theorem~E.1]{geuchen2024upper}.
	Hence, using \Cref{thm:b_lower}, we obtain 
	\[
	\sup_{x \in \SS^{d-1}, x \in R^{-1}M_\Phi} \mnorm{\nablaa \tilde{\Phi}(x)}_\infty \geq c_1 \cdot \sqrt{d/L}
	\]
	with probability at least $1- \exp(-c_1 \cdot d/L)$ with an absolute constant $c_1 > 0$. 
	We call the event defined by that property $\E_1$.
	Further, we may use \Cref{thm:grad_diff} and get 
	\[
	\sup_{x \in \RR^d, \mnorm{x}_2 \geq R} \mnorm{\nablaa \Phi(x) - \nablaa \tilde{\Phi}(x)}_2 \leq C_2 \cdot \frac{dL^2 \cdot \ln(N/d)\cdot \ln(N/(dL))}{\sqrt{N}}
	\]
	with probability at least $1- \exp(-c_2 \cdot d  L \ln(N/d))$
	with absolute constants $C_2, c_2 > 0$. 
	Since by assumption 
	\[
	N \geq C \cdot d \cdot L^5 \cdot \ln^2(N/d) \cdot \ln^2(N/(dL)),
	\]
	we may, by taking $C > 0$ large enough, assume that
	\[
	C_2 \cdot \frac{dL^2 \cdot \ln(N/d)\cdot \ln(N/(dL))}{\sqrt{N}} 
	\leq \frac{c_1}{2} \cdot \frac{\sqrt{d}}{\sqrt{L}}.
	\]
	We call the event defined by that property $\E_2$ and note that on $\E_1 \cap \E_2$ we have 
	\[
	\lip_1(\Phi) \geq \frac{c_1}{2} \cdot \sqrt{d/L}.
	\]
	Until now, we have conditioned on the bias vectors $b^{(0)}, \ldots, b^{(L-1)}$.
	Reintroducing the randomness over these vectors, we obtain 
	\[
	\PP(\E_1 \cap \E_2) \geq 1 - \exp(-c_1 \cdot d/L) - \exp(-c_2 \cdot dL\ln(N/d)).
	\]
	Letting $\tilde{c} \defeq \min\{c_1,c_2\}$, we get 
	\[
	\PP(\E_1 \cap \E_2) \geq 1 - 3\exp(-\tilde{c} \cdot d/L).
	\]
	In the end, the claim follows by letting $c \defeq \tilde{c}/2$ and assuming 
	\[
	d \geq \frac{2}{\tilde{c}}\cdot \ln(3) \cdot L. \qedhere
	\]
\end{proof}

Recall that a lower bound for $\lip_p(\Phi)$ in the case $p \in [2,\infty]$ was established in \Cref{corr:pw} by studying the gradient at a fixed input $x_0 \in \RR^d \setminus \{0\}$. 
However, \Cref{corr:pw} requires the biases to be drawn from a symmetric distribution.
Using \Cref{thm:grad_diff} we can now establish such a lower bound for networks with biases drawn from an arbitrary distribution. 

\begin{proof}[Proof of \Cref{thm:main_lower_bias_general}]
	Let $\lambda' > 0$ such that 
	\[
	\PP \left(\underset{j \in \{1,\ldots,N\}}{\underset{\ell \in \{0,\ldots,L-1\}}{\sup}} \abs{b_j^{(\ell)}} \leq \frac{\lambda' \sqrt{2}}{\sqrt{N}}\right)
	\geq 1-\exp(-  d).
	\]
	By independence, we may condition on the bias vectors $b^{(0)}, \ldots, b^{(L-1)}$ and assume that 
	\[
	\underset{j \in \{1,\ldots,N\}}{\underset{\ell \in \{0,\ldots,L-1\}}{\sup}} \abs{b_j^{(\ell)}} \leq \frac{\lambda' \sqrt{2}}{\sqrt{N}}
	\]
	is satisfied. 
	Let $C_1, c_1 > 0$ be the constants that are given by 
	\Cref{thm:grad_diff} and define the radius $R \defeq 3^L \cdot \lambda' \cdot \frac{N}{dL} \cdot \ln^{-1/2}(N/d)$.
	Let $x_0 \in \RR^d$ with $\mnorm{x_0}_2 \geq R$ be arbitrary.
	According to \Cref{thm:pw},
	\[
	\mnorm{\nablaa \tilde{\Phi}(x_0)}_{1} \geq c_1 \cdot d
	\]
	with probability at least $1- \exp(-c_1 \cdot d)$ for an absolute constant $c_1 > 0$.
	Here, $\tilde{\Phi}$ is the zero-bias network associated to $\Phi$. 
	We call the event defined by this property $\E_1$.
	Since $\mnorm{\cdot}_1 \leq d^{1 - 1/p'} \cdot \mnorm{\cdot}_{p'}$ for every $p' \in [1,\infty]$, we deduce that on $\E_1$,
	\[
	\mnorm{\nablaa \tilde{\Phi}(x_0)}_{p'} \geq c_1 \cdot d^{1/p'}
	\quad\text{for all } p' \in [1, \infty].
	\]
	We further let $\E_2$ be the event defined by the property 
	\[
	\sup_{x \in \RR^d, \mnorm{x}_2 \geq R} \mnorm{\nablaa \Phi(x) - \nablaa \tilde{\Phi}(x)}_2 \leq C_2 \cdot \frac{dL^2 \cdot \ln(N/d)\cdot \ln(N/(dL))}{\sqrt{N}}
	\]
	with an absolute constant $C_2 > 0$, which occurs with probability at least $1- \exp(-c_2 \cdot dL\ln(N/d))$ with an absolute constant $c_2 > 0$ according \Cref{thm:grad_diff}.
	
	For $p \in [2,\infty]$, let $p' \in [1,2]$ be chosen such that $1/p + 1/p' = 1$.
	Using \Cref{thm:glob}, on $\E_1 \cap \E_2$ we get for every $p \in [2, \infty]$,
	\begin{align*}
		\lip_p(\Phi) &\geq \mnorm{\nablaa \Phi(x_0)}_{p'} \\
		&\geq \mnorm{\nablaa \tilde{\Phi}(x_0)}_{p'} - \sup_{x \in \RR^d, \mnorm{x}_2 \geq R}\mnorm{\nablaa \Phi(x) - \nablaa \tilde{\Phi}(x)}_{p'} \\
		&\geq c_1 \cdot d^{1-1/p} - d^{(1/p')- (1/2)} \cdot \sup_{x \in \RR^d, \mnorm{x}_2 \geq R}\mnorm{\nablaa \Phi(x) - \nablaa \tilde{\Phi}(x)}_{2} \\
		&\geq c_1 \cdot d^{1-1/p} - C_2 \cdot d^{3/2 - 1/p} \cdot \frac{L^2 \cdot \ln(N/d) \cdot \ln(N/(dL))}{\sqrt{N}}.
	\end{align*}
	Since by assumption  
	\[
	N \geq C \cdot d \cdot L^4 \cdot \ln^2(N/d) \cdot \ln^2(N/(dL)),
	\]
	we find that 
	\[
	C_2 \cdot d^{3/2 - 1/p} \cdot \frac{L^2 \cdot \ln(N/d) \cdot \ln(N/(dL))}{\sqrt{N}} \leq \frac{c_1}{2} \cdot d^{1-1/p}.
	\]
	Lastly, note that 
	\[
	\PP(\E_1 \cap \E_2) \geq 1- \exp(-c_1 \cdot d) - \exp(-c_2 \cdot dL\ln(N/d))
	\geq 1-2\exp(-c_3 \cdot d)
	\]
	with $c_3 \defeq \min\{c_1,c_2\}$. 
	The claim follows by reintroducing the randomness over the bias vectors $b^{(0)}, \ldots, b^{(L-1)}$, since we may assume $\ln(3) \leq \frac{c_3}{2} \cdot d$ and by 
	letting $c \defeq c_3/2$.
\end{proof}

\section{Other auxiliary results} \label{sec:auxil}
In this appendix, we provide some additional auxiliary results that are used over the course of the present paper. 
First, we show that the small-ball property in \Cref{assum:3} is indeed equivalent 
to each bias having an absolutely continuous distribution with probability density function bounded by half the small-ball constant. 
\begin{theorem}\label{thm:abs_cont}
	Let $X$ be a real-valued random variable and $T > 0$. Then the following 
	are equivalent. 
	\begin{enumerate}
		\item $X$ has an absolutely continuous distribution with probability density function
		bounded by $T/2$ almost everywhere.
		\item For every $t \in \RR$ and $\eps > 0$,
		\[
		\PP(X \in (t-\eps, t +\eps)) \leq T \cdot \eps.
		\]
	\end{enumerate}
\end{theorem}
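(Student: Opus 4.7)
The plan is to prove the two directions separately, with the forward direction being essentially immediate from the definition of a density, and the reverse direction relying on outer regularity of Lebesgue measure together with Lebesgue's differentiation theorem.

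For the direction from (1) to (2): if $X$ has a density $f$ bounded by $T/2$ almost everywhere, then for any $t \in \RR$ and $\eps > 0$ one simply writes
\[
\PP(X \in (t-\eps,t+\eps)) = \int_{t-\eps}^{t+\eps} f(x)\, \dd x \leq \frac{T}{2} \cdot 2\eps = T \cdot \eps.
\]
This is a one-liner and requires no further work.

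For the direction from (2) to (1), let $\mu$ denote the distribution of $X$ and observe that (2) is equivalent to the statement $\mu(I) \leq (T/2) \cdot \metalambda^1(I)$ for every bounded open interval $I$, since any such $I$ can be written as $(t-\eps,t+\eps)$ with $t$ its midpoint and $\eps$ half its length. The first step is to upgrade this to arbitrary Borel sets: every open set $U \subseteq \RR$ is a countable disjoint union of open intervals, so $\mu(U) \leq (T/2) \cdot \metalambda^1(U)$; then by outer regularity of Lebesgue measure (and of the finite Borel measure $\mu$), one obtains $\mu(A) \leq (T/2) \cdot \metalambda^1(A)$ for every Borel set $A \subseteq \RR$. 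In particular, $\mu$ is absolutely continuous with respect to $\metalambda^1$, so by the Radon--Nikodym theorem there exists a nonnegative measurable function $f$ with $\mu = f\, \dd \metalambda^1$, i.e.\ $X$ has probability density $f$.

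It remains to show $f \leq T/2$ almost everywhere. Here I would invoke the Lebesgue differentiation theorem: for almost every $x \in \RR$,
\[
f(x) = \lim_{\eps \to 0^+} \frac{1}{2\eps}\int_{x-\eps}^{x+\eps} f(y)\, \dd y = \lim_{\eps \to 0^+} \frac{\mu((x-\eps,x+\eps))}{2\eps} \leq \lim_{\eps \to 0^+} \frac{T \cdot \eps}{2\eps} = \frac{T}{2},
\]
where the inequality uses (2). This yields $f \leq T/2$ almost everywhere, completing the proof.

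The only potential pitfall is the outer-regularity step that lifts the bound from open intervals to arbitrary Borel sets, but this is standard for Borel measures on $\RR$, so no genuine obstacle is anticipated. The Lebesgue differentiation theorem does the real work in identifying the pointwise bound on the density.
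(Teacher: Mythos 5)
Your proof is correct and follows essentially the same route as the paper: establish absolute continuity of the law of $X$, obtain a density via the Radon--Nikodym theorem, and then identify the pointwise bound $f \leq T/2$ almost everywhere via the Lebesgue differentiation theorem. The only (immaterial) difference is in how absolute continuity is obtained—you prove the stronger domination $\mu(A) \leq (T/2)\,\metalambda^1(A)$ for all Borel sets via outer regularity, whereas the paper covers a Lebesgue-null set directly by intervals of small total length; both are standard and valid.
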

\begin{proof}
	The direction "(1)$\Rightarrow$(2)" is trivial. 
	Hence, it remains to show the opposite direction.
	As a first step, let $A \subseteq \RR$ with $\metalambda(A) = 0$ and $\eps > 0$.
	According to \cite[Lemma~1.17]{folland1984real}, there exist $\left((a_j,b_j)\right)_{j =1}^\infty$ with 
	\[
	A \subseteq \bigcup_{j=1}^\infty (a_j,b_j) \quad \text{and} \quad \sum_{j=1}^\infty (b_j - a_j) \leq \frac{2\eps}{T}.
	\]
	This implies 
	\[
	\PP(X \in A) \leq \sum_{j=1}^\infty \PP(X \in (a_j,b_j)) \leq \frac{T}{2} \cdot \sum_{j=1}^\infty (b_j - a_j) \leq \eps.
	\]
	Hence, since $\eps>0$ was arbitrary, 
	\[
	\PP(X \in A) = 0.
	\]
	According to the Radon-Nikodym theorem \cite[Theorem~3.8]{folland1984real} there exists a measurable and non-negative probability density function $f: \RR \to \RR$
	of $X$. 
	According to the Lebesgue differentiation theorem \cite[Theorem~3.21]{folland1984real}, for almost every $x \in \RR$,
	\[
	f(x) = \lim_{\eps \to 0} \frac{1}{2\eps} \int_{(x-\eps, x+\eps)} f(y) \ \dd y.
	\]
	This implies the claim, since for every $x \in \RR$ and $\eps >0$,
	\[
	\frac{1}{2\eps} \int_{(x-\eps, x+\eps)} f(y) \ \dd y = \frac{1}{2\eps} \cdot \PP(X \in (t-\eps, t +\eps))\leq \frac{1}{2\eps} \cdot T \cdot \eps = T/2. \qedhere
	\]
\end{proof}
The next lemma provides a bound on the Euclidean distance of two normalized vectors in terms of the original vectors. 
\begin{lemma}\label{lem:diff_bound}
	Let $x,y \in \RR^d \setminus \{0\}$. Then 
	\begin{equation*}
		\mnorm{\frac{x}{\mnorm{x}_2} - \frac{y}{\mnorm{y}_2}}_2 \leq \frac{2}{\max\{\Vert x \Vert_2, \Vert y \Vert_2\}} \mnorm{x-y}_2.
	\end{equation*}
\end{lemma}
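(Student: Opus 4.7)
The plan is to reduce to the case where we have a fixed larger norm in the denominator and then apply the triangle inequality after a standard "add and subtract" step. By symmetry in $x$ and $y$, we may assume without loss of generality that $\|x\|_2 \geq \|y\|_2$, so that $\max\{\|x\|_2, \|y\|_2\} = \|x\|_2$. It then suffices to show
\[
\mnorm{\frac{x}{\mnorm{x}_2} - \frac{y}{\mnorm{y}_2}}_2 \leq \frac{2}{\|x\|_2}\|x-y\|_2.
\]

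The key algebraic step is to insert $\frac{y}{\|x\|_2}$ as an intermediate vector and split
\[
\frac{x}{\|x\|_2} - \frac{y}{\|y\|_2} = \frac{x-y}{\|x\|_2} + y\left(\frac{1}{\|x\|_2} - \frac{1}{\|y\|_2}\right).
\]
The triangle inequality then bounds the left-hand side by $\frac{\|x-y\|_2}{\|x\|_2} + \|y\|_2 \cdot \left\vert \frac{1}{\|x\|_2} - \frac{1}{\|y\|_2}\right\vert$. For the second term, I simplify
\[
\|y\|_2 \cdot \abs{\frac{1}{\|x\|_2} - \frac{1}{\|y\|_2}} = \frac{\abs{\|y\|_2 - \|x\|_2}}{\|x\|_2}
\]
and invoke the reverse triangle inequality $\abs{\|x\|_2 - \|y\|_2} \leq \|x-y\|_2$ to bound this by $\frac{\|x-y\|_2}{\|x\|_2}$ as well.

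Summing these two contributions gives exactly $\frac{2\|x-y\|_2}{\|x\|_2}$, which is the desired bound. There is no real obstacle here; the only thing to be careful about is that the WLOG reduction is valid (the statement is symmetric in $(x,y)$), and that one does not accidentally put $\|y\|_2$ in the denominator, since the reverse triangle inequality is only useful when paired with the larger of the two norms.
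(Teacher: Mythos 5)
Your proof is correct and matches the paper's argument essentially verbatim: the same insertion of $\frac{y}{\mnorm{x}_2}$ as an intermediate point, the same simplification of the second term to $\frac{\abs{\mnorm{y}_2 - \mnorm{x}_2}}{\mnorm{x}_2}$, and the same application of the reverse triangle inequality. No issues.
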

\begin{proof}
	Without loss of generality, we assume $\Vert x \Vert_2 \geq \Vert y \Vert_2$. We then get
	\begin{align*}
		\mnorm{\frac{x}{\mnorm{x}_2} - \frac{y}{\mnorm{y}_2}}_2 &\leq 
		\mnorm{\frac{x}{\mnorm{x}_2} - \frac{y}{\mnorm{x}_2}}_2 + \mnorm{\frac{y}{\mnorm{x}_2} - \frac{y}{\mnorm{y}_2}} \\
		&= \frac{1}{\mnorm{x}_2} \cdot \mnorm{x-y}_2 + \mnorm{y}_2 \cdot \abs{\frac{1}{\mnorm{x}_2}- \frac{1}{\mnorm{y}_2}} \\
		&=  \frac{1}{\mnorm{x}_2} \cdot \mnorm{x-y}_2 + \mnorm{y}_2 \cdot \frac{\abs{\mnorm{y}_2 - \mnorm{x}_2}}{\mnorm{x}_2\mnorm{y}_2} \\
		&\leq \frac{2}{\mnorm{x}_2} \cdot \mnorm{x-y}_2,
	\end{align*}
	as was to be shown.
\end{proof}
\renewcommand*{\proofname}{Proof}
For the sake of completeness, we state and prove the following elementary lemma, which allows to
pull constants outside of a logarithm. 
\begin{lemma}\label{lem:log_b}
	Let $x \geq \ee$ and $C\geq 1$. Then,
	\[
	\ln(C \cdot x) \leq C \cdot \ln(x).
	\]
\end{lemma}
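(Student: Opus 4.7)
The plan is to reduce to the elementary inequality $\ln(t) \leq t-1$, valid for all $t > 0$, combined with the hypothesis $x \geq e$ (which gives $\ln(x) \geq 1$).

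First I would split the left-hand side using the logarithm product rule:
\[
\ln(Cx) = \ln(C) + \ln(x).
\]
Next I would apply $\ln(C) \leq C - 1$ (which follows, for instance, from the concavity of $\ln$ or the observation that $f(t) = t - 1 - \ln(t)$ has a minimum of $0$ at $t=1$). Because $C \geq 1$, we have $C - 1 \geq 0$, and the assumption $x \geq e$ gives $\ln(x) \geq 1$, so
\[
C - 1 \leq (C-1)\ln(x).
\]
Chaining these estimates yields
\[
\ln(Cx) = \ln(C) + \ln(x) \leq (C-1) + \ln(x) \leq (C-1)\ln(x) + \ln(x) = C\ln(x),
\]
which is the claim.

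There is no real obstacle here; the only thing to be careful of is justifying that both factors $(C-1)$ and $\ln(x)$ are nonnegative before multiplying, which is exactly what the hypotheses $C \geq 1$ and $x \geq e$ deliver.
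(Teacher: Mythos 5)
Your proof is correct and follows exactly the same route as the paper: split $\ln(Cx)=\ln(C)+\ln(x)$, apply $\ln(C)\leq C-1$, and then use $\ln(x)\geq 1$ (from $x\geq \ee$) together with $C-1\geq 0$ to bound $C-1\leq (C-1)\ln(x)$. No differences worth noting.
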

\begin{proof}
	Using the well-known estimate $\ln(C) \leq C-1$ for $C > 0$, we compute
	\[
	\ln(C \cdot x)= \ln(C) + \ln(x) \leq C-1 + \ln(x) \overset{x \geq \ee}{\leq} (C-1)\ln(x) + \ln(x) = C \cdot \ln(x). \qedhere
	\]
\end{proof}
Next, we provide a useful integral bound which occurs often when bounding the Gaussian width of a set using Dudley's inequality. 
\begin{lemma}\label{lem:int_b}
	Let $C, \alpha > 0$ with $C / \alpha \geq \ee$. 
	Then,
	\[
	\int_0^\alpha \sqrt{\ln(C/\gamma)} \ \dd \gamma \leq 2 \cdot \alpha \cdot \sqrt{\ln(C/\alpha)}.
	\]
\end{lemma}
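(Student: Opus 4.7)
The plan is to reduce the integral to a dimensionless one by the substitution $\gamma = \alpha t$, split the logarithm additively, and then evaluate the residual integral via a classical Gamma-function computation.

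Concretely, I would first write
\[
\int_0^\alpha \sqrt{\ln(C/\gamma)} \, \dd \gamma
= \alpha \int_0^1 \sqrt{\ln(C/\alpha) + \ln(1/t)} \, \dd t,
\]
and then apply the elementary inequality $\sqrt{a+b} \leq \sqrt{a} + \sqrt{b}$ (valid for $a, b \geq 0$) to obtain the upper bound
\[
\alpha \sqrt{\ln(C/\alpha)} + \alpha \int_0^1 \sqrt{\ln(1/t)} \, \dd t.
\]
The substitution $u = \ln(1/t)$, so that $\dd t = -e^{-u} \dd u$, transforms the remaining integral into $\int_0^\infty \sqrt{u}\, e^{-u}\, \dd u = \Gamma(3/2) = \sqrt{\pi}/2$.

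Thus the bound becomes $\alpha \sqrt{\ln(C/\alpha)} + \frac{\sqrt{\pi}}{2}\alpha$. Since the hypothesis $C/\alpha \geq \ee$ yields $\sqrt{\ln(C/\alpha)} \geq 1$, and since $\sqrt{\pi}/2 < 1$, the constant term $\frac{\sqrt{\pi}}{2}\alpha$ is dominated by $\alpha \sqrt{\ln(C/\alpha)}$, which combines with the first term to give the desired $2\alpha\sqrt{\ln(C/\alpha)}$. There is no real obstacle here; the only minor subtlety is checking that the constant $2$ on the right-hand side is enough to absorb both $\alpha\sqrt{\ln(C/\alpha)}$ and the error term $\frac{\sqrt{\pi}}{2}\alpha$, which is precisely where the assumption $C/\alpha \geq \ee$ enters.
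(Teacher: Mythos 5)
Your proof is correct, and it takes a genuinely different route from the paper's. The paper substitutes $\sigma=\ln(C/\gamma)$ to turn the integral into $C\int_{\ln(C/\alpha)}^{\infty}\sqrt{\sigma}\,\ee^{-\sigma}\,\dd\sigma$, then integrates by parts and bounds the remaining tail $\int_{\ln(C/\alpha)}^{\infty}\sigma^{-1/2}\ee^{-\sigma}\,\dd\sigma$ by rewriting it as a Gaussian tail probability and invoking the standard Mills-ratio estimate; the hypothesis $C/\alpha\geq\ee$ is used there to simplify the tail bound and again to absorb the additive error. You instead rescale by $\gamma=\alpha t$, peel off the "constant" part of the logarithm via the subadditivity $\sqrt{a+b}\leq\sqrt{a}+\sqrt{b}$ (legitimate here since $\ln(C/\alpha)\geq 1$ and $\ln(1/t)\geq 0$ on $(0,1)$), and are left with the \emph{complete} Gamma integral $\int_0^\infty\sqrt{u}\,\ee^{-u}\,\dd u=\Gamma(3/2)=\sqrt{\pi}/2$, using $C/\alpha\geq\ee$ only at the very end to dominate the resulting constant $\sqrt{\pi}/2<1$ by $\sqrt{\ln(C/\alpha)}$. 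Your version is more elementary (no integration by parts, no Gaussian tail estimate) and arguably cleaner; the paper's version keeps the incomplete integral intact and yields a slightly sharper additive constant ($\alpha/2$ versus your $\alpha\sqrt{\pi}/2$), but both comfortably fit under the factor $2$. All steps in your argument check out.
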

\begin{proof}
	We first substitute $\sigma = \ln(C/\gamma)$ and get 
	\begin{align*}
		\int_0^\alpha \sqrt{\ln(C/\gamma)} \ \dd \gamma
		=-C \cdot\int_\infty^{\ln(C/\alpha)} \sqrt{\sigma} \cdot \exp(-\sigma) \ \dd \sigma = C \cdot \int_{\ln(C/\alpha)}^\infty \sqrt{\sigma} \cdot \exp(-\sigma) \ \dd \sigma.
	\end{align*}
	Using integration by parts, we further get 
	\begin{align*}
		\int_{\ln(C/\alpha)}^\infty \sqrt{\sigma} \cdot \exp(-\sigma) \ \dd \sigma &= \left[-\sqrt{\sigma} \cdot \exp(-\sigma)\right]_{\ln(C/\alpha)}^\infty 
		+ \frac{1}{2} \cdot \int_{\ln(C/\alpha)}^\infty \frac{\exp(-\sigma)}{\sqrt{\sigma}} \ \dd\sigma \\
		&= \frac{\alpha}{C} \cdot \ln^{1/2}(C/\alpha) + \frac{1}{2} \cdot \int_{\ln(C/\alpha)}^\infty \frac{\exp(-\sigma)}{\sqrt{\sigma}} \ \dd\sigma.
	\end{align*}
	We study the last integrand further. 
	Using the substitution $\sigma = u^2/2$, we obtain 
	\begin{align*}
		\int_{\ln(C/\alpha)}^\infty \frac{\exp(-\sigma)}{\sqrt{\sigma}} \ \dd\sigma &= 
		\sqrt{2} \cdot \int_{\sqrt{2\ln(C/\alpha)}}^\infty \exp(-u^2/2) \ \dd u \\
		&= \sqrt{2} \cdot \sqrt{2\pi} \cdot \underset{g \sim \mathcal{N}(0,1)}{\PP}(g \geq \sqrt{2\ln(C/\alpha)}) \\
		\overset{\text{\cite[Prop.~2.1.2]{vershynin_high-dimensional_2018}}}&{\leq} \frac{\sqrt{2}}{\sqrt{2\ln(C/\alpha)}}
		\cdot \frac{\alpha}{C} \overset{C/\alpha \geq \ee}{\leq} \frac{\alpha}{C}.
	\end{align*}
	This proves the claim. 
\end{proof}

Next, we provide an auxiliary result about the spectral norm of a Gaussian matrix which is multiplied from the left and the right with (deterministic) matrices. 
\begin{lemma}\label{lem:pw_4}
	There exist absolute constants $C, c > 0$ such that the following holds.
	For $k,\ell,m,n\in \NN$, let $A_1 \in \RR^{k \times \ell}, A_2 \in \RR^{m \times n}$ be fixed matrices and $W \in \RR^{\ell \times m}$ be a random matrix with 
	$W_{i,j} \iid \mathcal{N}(0,1)$.
	Then, for arbitrary $t \geq C$,
	\begin{equation*}
		\op{A_1WA_2} \leq  \op{A_1} \cdot \op{A_2}\cdot \left(\sqrt{\rang(A_1)} + \sqrt{\rang(A_2)} + t\right)
	\end{equation*}
	with probability at least $1-\exp(-c \cdot t^2)$. 
\end{lemma}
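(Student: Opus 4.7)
The plan is to reduce the claim to the classical operator norm bound for a standard Gaussian matrix via a compact SVD argument. Let $r_1 \defeq \rang(A_1)$ and $r_2 \defeq \rang(A_2)$, and write the compact singular value decompositions
\[
A_1 = U_1 \Sigma_1 V_1^T \quad \text{with} \quad U_1 \in \RR^{k \times r_1}, \Sigma_1 \in \RR^{r_1 \times r_1}, V_1 \in \RR^{\ell \times r_1},
\]
and analogously $A_2 = U_2 \Sigma_2 V_2^T$ with $U_2 \in \RR^{m \times r_2}$, $V_2 \in \RR^{n \times r_2}$, where $U_1, V_1, U_2, V_2$ all have orthonormal columns. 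Using submultiplicativity of the spectral norm together with $\op{U_1} = \op{V_2^T} = 1$ and $\op{\Sigma_i} = \op{A_i}$, I would first record that
\[
\op{A_1 W A_2} \leq \op{A_1} \cdot \op{V_1^T W U_2} \cdot \op{A_2}.
\]

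Next I would observe that $M \defeq V_1^T W U_2 \in \RR^{r_1 \times r_2}$ is itself a matrix with i.i.d.\ standard Gaussian entries. Indeed, since the columns of $V_1$ are orthonormal, each column of $V_1^T W$ is the image of the corresponding (standard Gaussian) column of $W$ under the isometric embedding induced by $V_1^T$, so $V_1^T W$ has i.i.d.\ $\mathcal{N}(0,1)$ entries in $\RR^{r_1 \times m}$. Applying the same argument to the rows of $V_1^T W$ together with the orthonormality of the columns of $U_2$ shows $M$ is i.i.d.\ standard Gaussian of size $r_1 \times r_2$.

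Finally, by the standard operator norm bound for Gaussian matrices (e.g., \cite[Theorem~7.3.1]{vershynin_high-dimensional_2018}), there exists an absolute constant $c_1 > 0$ such that for every $t \geq 0$,
\[
\op{M} \leq \sqrt{r_1} + \sqrt{r_2} + t
\]
with probability at least $1 - 2\exp(-c_1 t^2)$. Combining this with the previous display yields the claimed inequality with probability at least $1 - 2\exp(-c_1 t^2)$. The factor $2$ can be absorbed into the exponent: as soon as $t \geq \sqrt{2\ln 2 / c_1} \eqdef C$, one has $2\exp(-c_1 t^2) \leq \exp(-c t^2)$ with $c \defeq c_1 / 2$. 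Since all steps are elementary, no real obstacle arises; the only point of care is verifying the isotropy of $M$ via the orthonormality of the factors in the compact SVD.
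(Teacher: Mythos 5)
Your proof is correct and follows essentially the same route as the paper's: both reduce via the SVD and submultiplicativity to the spectral norm of an $r_1\times r_2$ standard Gaussian matrix (you do this directly through the compact SVD and an isotropy computation, the paper through rotation invariance and extracting a submatrix of $W$) and then invoke the standard Gaussian operator norm tail bound, absorbing the factor $2$ for $t\ge C$.
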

\begin{proof}
	Let $A_1 = U_1 \Sigma_1 V_1^T$ be the singular value decomposition of $A_1$, where $U_1 \in \RR^{k \times k}$ and $V_1 \in \RR^{\ell \times \ell}$ are orthogonal matrices and $\Sigma_1 \in \RR^{k \times \ell}$ contains the singular values of $A_1$.
	Moreover, let $A_2 = U_2 \Sigma_2 V_2^T$ be the singular decomposition of $A_2$ with orthogonal matrices $U_2 \in \RR^{m \times m}$,
	$V_2 \in \RR^{n \times n}$ and $\Sigma_2 \in \RR^{m \times n}$ containing the singular values of $A_2$. 
	We then get by the rotational invariance of the normal distribution and of the spectral norm that 
	\begin{equation*}
		\op{A_1WA_2} = \op{U_1\Sigma_1 V_1^TWU_2\Sigma_2 V_2^T} \d \op{\Sigma_1 W\Sigma_2}.
	\end{equation*}
	Let $\alpha \defeq \rang(A_1)$, $\beta \defeq \rang(A_2)$ and let $\sigma_1 \geq \dots\geq \sigma_\alpha > 0$ and $\sigma_1 ' \geq\dots\geq \sigma_\beta '>0$ be the singular values of $A_1$ and $A_2$, respectively.
	We set $\tilde{\Sigma_1} \defeq \diag(\sigma_1, \dots, \sigma_\alpha) \in \RR^{\alpha \times \alpha}$ and, moreover, $\tilde{\Sigma_2}
	\defeq \diag(\sigma_1',\dots,\sigma_\beta') \in \RR^{\beta \times \beta}$.
	We then get $\Sigma_1 = I_{k \times \alpha}\tilde{\Sigma_1}I_{\alpha \times \ell}$ and 
	$\Sigma_2 = I_{m \times \beta} \tilde{\Sigma_2} I_{\beta \times n}$.
	Hence, we see
	\begin{align*}
		\op{\Sigma_1 W\Sigma_2} &= \op{I_{k \times \alpha}\tilde{\Sigma}I_{\alpha \times \ell}WI_{m \times \beta} \tilde{\Sigma}' I_{\beta \times n}} \leq \op{I_{\alpha \times \ell}WI_{m \times \beta}} \cdot \sigma_1 \cdot \sigma_1'\\
		&= \op{I_{\alpha \times \ell}WI_{m \times \beta}} \cdot \op{A_1} \cdot \op{A_2} \d \op{W'} \cdot \op{A_1}\cdot\op{A_2}
	\end{align*}
	where $W' \in \RR^{\alpha \times \beta}$ has independent $\mathcal{N}(0, 1)$-entries. 
	Hence, from \cite[Corollary~7.3.3]{vershynin_high-dimensional_2018} we see 
	\begin{equation*}
		\op{W'} \leq \sqrt{\rang(A_1)} + \sqrt{\rang(A_2)} + t
	\end{equation*}
	with probability at least $1-2\exp(-\tilde{c} \cdot t^2) = 1 - \exp(\ln(2) - \tilde{c} \cdot t^2)$ with an absolute constant $\tilde{c} > 0$.
	By taking $C$ sufficiently large, we have 
	\[
	\ln(2) \leq \frac{\tilde{c}}{2}\cdot t^2 \quad \text{for } t \geq C,
	\]
	so that the claim follows letting $c \defeq \tilde{c}/2$.
\end{proof}

In the following lemma, we bound the cardinality of the set of $(N \times N)$-diagonal matrices with entries on the diagonal either $0$, $1$, or $-1$ and where the number of non-zero entries on the diagonal is bounded by $\delta \cdot N$.
\begin{lemma}\label{lem:card_b}
	For given $\delta \in (0,1)$ and $N \in \NN$, let 
	\[
	\mathcal{A} \defeq \left\{ A \in \diag\{0,1,-1\}^{N \times N}: \ \Tr \abs{A} \leq \delta \cdot N\right\}.
	\]
	Then,
	\[
	\#\mathcal{A} \leq \left(\frac{4\ee}{\delta}\right)^{  \delta \cdot N}.
	\]
\end{lemma}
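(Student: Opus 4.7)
The proof is essentially a direct counting argument combined with a standard generating-function trick, so I would not expect any serious obstacle. The outline is as follows.

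First, I would observe that any $A \in \mathcal{A}$ is uniquely determined by the pair $(S, \varepsilon)$, where $S \subseteq \{1,\dots,N\}$ is the set of indices $i$ with $A_{i,i} \neq 0$ and $\varepsilon \in \{-1,+1\}^S$ records the signs. The condition $\Tr|A| \leq \delta N$ is exactly $|S| \leq m$, where $m \defeq \lfloor \delta N \rfloor$. Hence
\[
\#\mathcal{A} \;=\; \sum_{k=0}^{m} \binom{N}{k} 2^k .
\]
The case $\delta N < 1$ gives $m=0$ and $\#\mathcal{A} = 1 \leq (4\ee/\delta)^{\delta N}$ (since $4\ee/\delta \geq 1$), so from now on I may assume $\delta N \geq 1$ and $m \geq 1$.

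Next, to bound the sum I would use the generating-function identity $(1+2x)^N = \sum_{k=0}^{N}\binom{N}{k}(2x)^k$, valid for any $x>0$. Dropping all terms with $k>m$ and dividing by $x^m$ gives
\[
\sum_{k=0}^{m} \binom{N}{k} 2^k \;\leq\; x^{-m}(1+2x)^N .
\]
Plugging in the (near-)optimal choice $x = m/(2N) \in (0,1]$, using $1+2x \leq \ee^{2x}$ so that $(1+2x)^N \leq \ee^{2xN} = \ee^m$, yields
\[
\sum_{k=0}^{m} \binom{N}{k} 2^k \;\leq\; \left(\frac{2N}{m}\right)^{m} \ee^{m} \;=\; \left(\frac{2\ee N}{m}\right)^{m}.
\]

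Finally, to replace $m$ by $\delta N$, I would note that the function $f(t) = (2\ee N / t)^{t}$ is non-decreasing on $(0, 2N]$: indeed, $\ln f(t) = t(\ln(2N/t)+1)$, whose derivative $\ln(2N/t)$ is non-negative for $t \leq 2N$. Since $1 \leq m \leq \delta N \leq N \leq 2N$, monotonicity gives
\[
\#\mathcal{A} \;\leq\; \left(\frac{2\ee N}{m}\right)^{m} \;\leq\; \left(\frac{2\ee N}{\delta N}\right)^{\delta N} \;=\; \left(\frac{2\ee}{\delta}\right)^{\delta N} \;\leq\; \left(\frac{4\ee}{\delta}\right)^{\delta N},
\]
which is the claim. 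The only care-point is the monotonicity check at the non-integer boundary $m \leq \delta N$, but the interval of monotonicity comfortably covers this since $\delta < 1 < 2$.
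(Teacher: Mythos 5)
Your proof is correct. The reduction to counting sparse sign patterns, i.e.\ $\#\mathcal{A}=\sum_{k=0}^{m}\binom{N}{k}2^k$ with $m=\lfloor\delta N\rfloor$, and the separate treatment of $\delta N<1$ match the paper exactly. Where you diverge is in how the partial sum is bounded: the paper crudely pulls out the sign factor as $2^{\lfloor\delta N\rfloor}$, invokes the standard estimate $\sum_{k\leq m}\binom{N}{k}\leq(\ee N/m)^m$, and then absorbs the floor via $\lfloor x\rfloor\geq x/2$ (which is where one of the two factors of $2$ in $4\ee$ comes from); you instead fold the weights $2^k$ directly into a generating-function/Chernoff bound $x^{-m}(1+2x)^N$ with $x=m/(2N)$, and dispose of the floor by checking that $t\mapsto(2\ee N/t)^t$ is nondecreasing on $(0,2N]$. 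Your route is self-contained (no citation needed), avoids the lossy $\lfloor x\rfloor\geq x/2$ step, and actually yields the sharper bound $(2\ee/\delta)^{\delta N}$ before you relax it to match the statement; the paper's version is shorter on the page because it outsources the binomial-sum estimate. All the care-points you flag (that $x\leq 1$ so that $x^k\geq x^m$ for $k\leq m$, and that $m\leq\delta N\leq 2N$ lies in the monotonicity range) check out.
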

\begin{proof}
	In the case $\delta N < 1$, the set $\mathcal{A}$ only consists of the zero matrix, so that the claim is satisfied since $4\ee /\delta \geq 1$. 
	Hence, we may assume $\delta N \geq 1$.
	
	Note that the cardinality of $\mathcal{A}$ can be upper bounded by the number of subsets of $\{1,\ldots,N\}$ with cardinality at most $\lfloor  \delta \cdot N \rfloor$, multiplied by $2^{\lfloor  \delta \cdot N \rfloor}$, since we can choose between $1$ and $-1$ for each diagonal entry that is nonzero. 
	Therefore, using \cite[Ex.~0.0.5]{vershynin_high-dimensional_2018}, we observe 
	\begin{align*}
		\#\mathcal{A} &\leq 2^{\lfloor  \delta \cdot N \rfloor} \cdot 
		\sum_{k=0}^{\lfloor \delta N \rfloor}\binom{N}{k}
		\leq 2^{  \lfloor\delta \cdot N \rfloor} \cdot \left(\frac{\ee N}{\lfloor  \delta \cdot N \rfloor}\right)^{\lfloor  \delta \cdot N \rfloor} \\
		\overset{\lfloor x \rfloor \geq x/2 \text{ for } x \geq 1}&{\leq}
		2^{  \delta \cdot N } \cdot \left(\frac{2\ee N}{  \delta \cdot N }\right)^{  \delta \cdot N }  
		= \left(\frac{4\ee}{\delta}\right)^{ \delta \cdot N}. \qedhere
	\end{align*}
\end{proof}
Next, we show a lower bound on the covering number of the Euclidean sphere $\SS^{n-1}$.
This statement is probably well-known, but since we could not locate a convenient reference, we include the short proof. 
\begin{lemma}\label{lem:sphere_cov}
	For any $\eps \in (0,1)$,
	\[
	\mathcal{N}(\SS^{n-1}, \eps ) \geq \frac{1}{6} \cdot \left(\frac{1}{2\eps}\right)^{n-1}.
	\]
\end{lemma}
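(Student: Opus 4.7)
The plan is to prove the claimed lower bound via a classical volume-comparison argument. Let $\mathcal{N} \subseteq \SS^{n-1}$ be an $\eps$-net realizing the covering number $\mathcal{N}(\SS^{n-1}, \eps)$. The first step is to upgrade the covering of the sphere into a covering of the thickened spherical shell
\[
S_\eps \defeq B_n(0,1) \setminus B_n(0, 1-\eps) = \{z \in \RR^n : 1-\eps \leq \mnorm{z}_2 \leq 1\}
\]
by slightly enlarged balls $B_n(y, 2\eps)$. Given any $z \in S_\eps$, I would pick $y \in \mathcal{N}$ with $\mnorm{z/\mnorm{z}_2 - y}_2 \leq \eps$ and use the triangle inequality together with $1-\mnorm{z}_2 \leq \eps$ to conclude $\mnorm{z - y}_2 \leq 2\eps$, which gives the inclusion $S_\eps \subseteq \bigcup_{y \in \mathcal{N}} B_n(y, 2\eps)$.

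Once that inclusion is established, the Lebesgue volume bound yields
\[
\bigl(1-(1-\eps)^n\bigr)\cdot\metalambda^n(B_n(0,1)) = \metalambda^n(S_\eps) \leq \abs{\mathcal{N}} \cdot (2\eps)^n \cdot \metalambda^n(B_n(0,1)),
\]
so that $\abs{\mathcal{N}} \geq (1-(1-\eps)^n)/(2\eps)^n$. To reach the desired form $\tfrac{1}{6}(1/(2\eps))^{n-1}$, I would then invoke the elementary inequality $1-(1-\eps)^n \geq \eps$, valid for all $n \geq 1$ and $\eps \in [0,1]$ since $(1-\eps)^n \leq 1-\eps$. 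Substituting and simplifying gives the bound $\abs{\mathcal{N}} \geq 1/(2^n \eps^{n-1})$, which is in fact a factor of three stronger than the claimed bound $\tfrac{1}{6}(2\eps)^{-(n-1)}$ (the comparison reducing to $6 \geq 2$).

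No serious obstacle is anticipated; the argument is purely geometric and rests only on standard properties of Lebesgue measure. The only mild point of care is realizing that one has to enlarge the covering balls from radius $\eps$ to radius $2\eps$ in order to cover points in the interior of the shell $S_\eps$, which costs a factor of $2^n$ in the volume estimate but is absorbed cleanly into the final constant $1/6$.
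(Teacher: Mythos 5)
Your proof is correct, and it even yields the slightly stronger constant $\tfrac12$ in place of $\tfrac16$: the shell inclusion $S_\eps \subseteq \bigcup_{y\in\mathcal{N}} \overline{B}_n(y,2\eps)$ follows exactly as you describe (for $z \in S_\eps$ one has $\mnorm{z - z/\mnorm{z}_2}_2 = 1-\mnorm{z}_2 \le \eps$, and $\eps<1$ guarantees $z\neq 0$), the volume comparison is valid since open versus closed balls differ by a null set, and $1-(1-\eps)^n \ge \eps$ holds because $(1-\eps)^n \le 1-\eps$. The paper takes a genuinely different route: it builds a $(2\eps)$-net of the unit ball as the image of $Z_1 \times Z_2$ under $(t,x)\mapsto tx$, where $Z_1$ is the given net of the sphere and $Z_2$ is an $\eps$-net of $[-1,1]$ of size at most $3/\eps$, and then invokes the standard lower bound $\mathcal{N}(\overline{B}_n(0,1),2\eps) \ge (1/(2\eps))^n$ to deduce $\abs{Z_1}\cdot(3/\eps) \ge (1/(2\eps))^n$, which is where the factor $\tfrac16$ comes from. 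Both arguments ultimately reduce the spherical covering problem to a solid one and compare volumes; yours does this directly on a shell of width $\eps$ and is more self-contained (it does not need a separate covering bound for the interval or for the ball), at the mild cost of having to verify the elementary inequality $1-(1-\eps)^n\ge\eps$, while the paper's product construction outsources all volume computations to a cited result.
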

\begin{proof}
	Let $Z_1$ be an $\eps$-net of $\SS^{n-1}$ and $Z_2$ be an $\eps$-net of $[-1,1]$ with $\abs{Z_2} \leq \frac{3}{\eps}$.
	$Z_2$ exists due to \cite[Corollary~4.2.13]{vershynin_high-dimensional_2018}.
	We define the surjective mapping
	\[
	\varphi: \quad [-1,1] \times \SS^{n-1} \to \B_n(0,1), \quad (t,x) \mapsto tx
	\]
	and set 
	\[
	Z_3 \defeq \{\varphi(t,x) : x \in Z_1, t \in Z_2\}.
	\]
	Let $y_0 \in \B_n(0,1)$ be arbitrary and pick $t_0 \in [-1,1], x_0 \in \SS^{n-1}$ with $\varphi(t_0,x_0) = y_0$. 
	Moreover, pick $\tilde{t} \in Z_2, \tilde{x} \in Z_1$ with $\abs{\tilde{t} - t_0}, \mnorm{\tilde{x} - x_0}_2 \leq \eps$. 
	Then,
	\[
	\mnorm{\varphi(\tilde{t}, \tilde{x}) - \varphi(t_0, x_0)}_2 = \mnorm{\tilde{t}\tilde{x} - t_0x_0}_2 \leq \mnorm{\tilde{t}\tilde{x} - \tilde{t} x_0}_2 + \mnorm{\tilde{t}x_0 - t_0x_0}_2
	\leq \mnorm{\tilde{x} - x_0}_2 + \abs{\tilde{t} - t_0} \leq 2\eps.
	\]
	Hence, $Z_3$ is a $(2\eps)$-net of $\B_n(0,1)$. 
	By \cite[Corollary~4.2.13]{vershynin_high-dimensional_2018}, we get $\abs{Z_3} \geq \left(\frac{1}{2\eps}\right)^n$.
	On the other hand, by the definition of $Z_3$,
	\[
	\abs{Z_3} \leq \abs{Z_1} \cdot \abs{Z_2} \leq \abs{Z_1} \cdot \frac{3}{\eps}.
	\]
	Altogether, we get 
	\[
	\abs{Z_1} \geq \frac{\eps}{3} \cdot \left(\frac{1}{2\eps}\right)^n = \frac{1}{6} \cdot \left(\frac{1}{2\eps}\right)^{n-1}.
	\]
	Since $Z_1$ was an arbitrary $\eps$-net of $\SS^{n-1}$, the claim is shown. 
\end{proof}
\Cref{lem:sphere_cov} immediately implies that 
\begin{equation}\label{eq:sphere_cov}
	\mathcal{N}\left(\frac{1}{\sqrt{2}}\SS^{n-1}, \eps \right) \geq \frac{1}{6} \cdot \left(\frac{1}{2\sqrt{2} \cdot \eps}\right)^{n-1}
\end{equation}
for every $\eps \in (0,1/\sqrt{2})$, which follows since if $\mathcal{N}$ is an $\eps$-net of $\frac{1}{\sqrt{2}}\SS^{n-1}$ then $\sqrt{2}\mathcal{N}$ is an $\sqrt{2}\eps$-net of $\SS^{n-1}$.
For the proof of \Cref{thm:main_lower}, we in fact need to lower bound the covering numbers of intersections of the unit sphere with Euclidean balls. 
Such a lower bound is provided by the following lemma. 
\renewcommand*{\proofname}{Proof}
\begin{lemma}\label{lem:intersec_packing}
	Let $e_1$ be the first standard basis vector in $\RR^n$ and let $\eps, \delta \in (0,1/\sqrt{2})$.
	Then there exists an $\eps$-packing $P$ of $\B_n\left(\frac{1}{\sqrt{2}} \cdot e_1, \delta\right) \cap \frac{1}{\sqrt{2}}\SS^{n-1}$ with
	\[
	\abs{P} \geq  \frac{\sqrt{2} \cdot \delta}{18} \cdot \left(\frac{\delta}{6\eps}\right)^{n-1}.
	\] 
\end{lemma}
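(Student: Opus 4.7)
The plan is to parametrize the spherical cap by a Euclidean ball in one dimension lower, transfer a packing from the ball to the cap by lifting, and lower bound the cardinality via the standard volume-based packing estimate. Concretely, I would define $\psi : B_{n-1}(0, 1/\sqrt{2}) \to \tfrac{1}{\sqrt{2}}\SS^{n-1}$ by
\[
\psi(q) \defeq \Bigl(\sqrt{\tfrac{1}{2} - \|q\|_2^2},\; q_1,\ldots,q_{n-1}\Bigr),
\]
which takes values on $\tfrac{1}{\sqrt{2}}\SS^{n-1}$ since $\|\psi(q)\|_2^2 = 1/2$. A direct computation gives $\|\psi(q) - \tfrac{1}{\sqrt{2}}e_1\|_2^2 = 1 - \sqrt{1 - 2\|q\|_2^2}$, which is $\leq \delta^2$ iff $\|q\|_2^2 \leq \delta^2(1 - \delta^2/2)$. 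Since $\delta < 1/\sqrt{2}$ forces $1 - \delta^2/2 > 1/2$, the restriction of $\psi$ to $B_{n-1}(0, \delta/\sqrt{2})$ lands inside the cap $\B_n(\tfrac{1}{\sqrt{2}}e_1, \delta) \cap \tfrac{1}{\sqrt{2}}\SS^{n-1}$.

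The map $\psi$ is clearly expansive:
\[
\|\psi(q_1) - \psi(q_2)\|_2^2 = \Bigl(\sqrt{1/2 - \|q_1\|_2^2} - \sqrt{1/2 - \|q_2\|_2^2}\Bigr)^2 + \|q_1 - q_2\|_2^2 \geq \|q_1 - q_2\|_2^2.
\]
Hence, if $Q \subseteq B_{n-1}(0,\delta/\sqrt{2})$ is an $\eps$-packing, then $P \defeq \psi(Q)$ is an $\eps$-packing of the cap of the same cardinality. To bound $|Q|$ from below, I use the standard volume estimate: every $\eps$-covering of $B_{n-1}(0,\delta/\sqrt{2})$ has cardinality at least $(\delta/(\sqrt{2}\eps))^{n-1}$, and since a maximal $\eps$-packing is also an $\eps$-covering, $\mathcal{P}(B_{n-1}(0,\delta/\sqrt{2}), \eps) \geq (\delta/(\sqrt{2}\eps))^{n-1}$. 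The trivial singleton packing always gives the additional bound $\mathcal{P}(\cdot,\eps) \geq 1$, so in total
\[
|P| \geq \max\bigl\{1,\;(\delta/(\sqrt{2}\eps))^{n-1}\bigr\}.
\]

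It remains to verify that this lower bound is at least $\tfrac{\sqrt{2}\delta}{18}(\delta/(6\eps))^{n-1}$. The ratio $(\delta/(\sqrt{2}\eps))^{n-1}/(\delta/(6\eps))^{n-1} = (3\sqrt{2})^{n-1}$ is at least $3\sqrt{2} > 4$ for $n \geq 2$ (and equals $1$ for $n = 1$), while the prefactor satisfies $\tfrac{\sqrt{2}\delta}{18} \leq 1/18$. A short case split on whether $\delta/(\sqrt{2}\eps) \geq 1$ (use the volume bound) or $\delta/(\sqrt{2}\eps) < 1$ (use the singleton bound, noting that the target is then $<1/18 < 1$) finishes the argument. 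I do not expect any serious obstacle here: the lemma is essentially a direct computation, and the only care required is in the volume-based packing bound and in handling the small-dimensional or large-$\eps$ regime where the packing reduces to a single point.
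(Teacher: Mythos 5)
Your proof is correct, but it takes a genuinely different route from the paper's. The paper argues globally and by pigeonhole: it covers $\frac{1}{\sqrt{2}}\SS^{n-1}$ by $M \leq (3/(\sqrt{2}\delta))^n$ caps of radius $\delta$, observes that the union of $\eps$-nets of the caps is an $\eps$-net of the whole sphere, invokes a lower bound on $\mathcal{N}(\frac{1}{\sqrt{2}}\SS^{n-1},\eps)$ (\Cref{lem:sphere_cov}) to force one cap to carry a net of size at least $\frac{\sqrt{2}\delta}{18}(\delta/(6\eps))^{n-1}$, converts the net into a packing via $\mathcal{N} \leq \mathcal{P}$, and finally uses rotation invariance to transport the result to the cap centered at $\frac{1}{\sqrt{2}}e_1$. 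You instead work locally: you parametrize the specific cap as a graph over $B_{n-1}(0,\delta/\sqrt{2})$ via an expansive map $\psi$, push an $\eps$-separated set forward (expansivity preserves separation and gives injectivity), and apply the volume lower bound $\mathcal{P} \geq \mathcal{N} \geq (\delta/(\sqrt{2}\eps))^{n-1}$ in the base ball; your computations $\mnorm{\psi(q)-\tfrac{1}{\sqrt{2}}e_1}_2^2 = 1-\sqrt{1-2\mnorm{q}_2^2}$ and the membership condition $\mnorm{q}_2^2 \leq \delta^2(1-\delta^2/2)$ are correct, and the final case split (volume bound when $\delta \geq \sqrt{2}\eps$, singleton otherwise) does dominate the stated target since $\frac{\sqrt{2}\delta}{18} < 1$ and $6 > \sqrt{2}$. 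Your argument is more elementary and self-contained (it needs neither the sphere covering lower bound nor rotation invariance) and in fact yields the strictly sharper bound $\max\{1,(\delta/(\sqrt{2}\eps))^{n-1}\}$ with no $\delta$-dependent prefactor loss; the paper's argument is less computation-heavy and avoids choosing an explicit chart, at the cost of the extra factor $\frac{\sqrt{2}\delta}{18}$ coming from the pigeonhole over $M$ caps.
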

\begin{proof}
	According to \cite[Corollary~4.2.13]{vershynin_high-dimensional_2018} we can pick points $\tilde{x}_1,\ldots,\tilde{x}_M \in \SS^{n-1}$ with 
	\[
	\SS^{n-1} \subseteq \bigcup_{i=1}^M (\B_n(\tilde{x}_i, \sqrt{2}\delta) \cap \SS^{n-1})
	\]
	and $M \leq \left(\frac{3}{\sqrt{2}\delta}\right)^n$.
	Letting $x_i \defeq \frac{\tilde{x_i}}{\sqrt{2}}$ we get 
	\[
	\frac{1}{\sqrt{2}}\SS^{n-1} \subseteq \bigcup_{i=1}^M \left(\B_n(x_i,\delta) \cap \frac{1}{\sqrt{2}}\SS^{n-1}\right).
	\]
	Let $Z_i$ be an $\eps$-net of $\B_n(x_i, \delta) \cap \frac{1}{\sqrt{2}}\SS^{n-1}$.
	We let $K_i \defeq \abs{Z_i}$ for every $i$.
	It follows immediately that $\bigcup_{i=1}^M Z_i$ is an $\eps$-net of $\frac{1}{\sqrt{2}}\SS^{n-1}$.
	According to \eqref{eq:sphere_cov} this implies 
	\[
	\underset{i}{\max} \ K_i \cdot \left(\frac{3}{\sqrt{2}\delta}\right)^n \geq \underset{i}{\max}\ K_i \cdot M \geq \abs{\bigcup_{i=1}^M Z_i} \geq \frac{1}{6} \cdot 
	\left(\frac{1}{2 \sqrt{2}\eps}\right)^{n-1}.
	\]
	Rearranging this yields 
	\[
	\underset{i}{\max}\ K_i \geq \frac{1}{6} \cdot \left(\frac{\sqrt{2}\delta}{3}\right)^n \cdot \left(\frac{1}{2\sqrt{2}\eps}\right)^{n-1} 
	= \frac{\sqrt{2} \cdot \delta}{18} \cdot \left(\frac{\delta}{6\eps}\right)^{n-1}.
	\]
	The claim then follows from the equivalence of covering and packing numbers \cite[Lemma~4.2.8]{vershynin_high-dimensional_2018} and due to rotation invariance. 
\end{proof}

\end{document}